\documentclass{article}

% if you need to pass options to natbib, use, e.g.:
%     \PassOptionsToPackage{numbers, compress}{natbib}
% before loading neurips_2020

% ready for submission
% \usepackage{neurips_2020}

% to compile a preprint version, e.g., for submission to arXiv, add add the
% [preprint] option:
\usepackage[preprint, nonatbib]{neurips_2020}

% to compile a camera-ready version, add the [final] option, e.g.:
%     \usepackage[final]{neurips_2020}

% to avoid loading the natbib package, add option nonatbib:
     %\usepackage[nonatbib]{neurips_2020}

\usepackage[utf8]{inputenc} % allow utf-8 input
\usepackage[T1]{fontenc}    % use 8-bit T1 fonts
\usepackage{hyperref}       % hyperlinks
\usepackage{url}            % simple URL typesetting
\usepackage{booktabs}       % professional-quality tables
\usepackage{amsfonts}       % blackboard math symbols
\usepackage{nicefrac}       % compact symbols for 1/2, etc.
\usepackage{microtype}      % microtypography
\usepackage{xspace}

\usepackage[utf8]{inputenc}
\usepackage{fullpage,graphicx, setspace, latexsym, cite,amsmath,amssymb,color}
\usepackage{caption}
\usepackage{subcaption}
\usepackage{amssymb} %maths
\usepackage{amsmath} %maths
\usepackage{amsthm}
\usepackage{quoting}
\usepackage{dsfont}
\usepackage{bbm}
\usepackage{longtable}
\usepackage{hyperref}
\usepackage{algorithm}
\usepackage{lineno}
\usepackage{enumitem}
\usepackage{wrapfig}

\allowdisplaybreaks

\usepackage[noend]{algpseudocode}
\algdef{SE}[DOWHILE]{Do}{DoWhile}{\algorithmicdo}[1]{\algorithmicwhile\ #1}%    
\hypersetup{
    colorlinks=true,
    linkcolor=blue,
    filecolor=magenta,      
    urlcolor=cyan,
}

\def \E{\mathbb E}
\def \P{\mathbb P}
\def \R{\mathbb R}
\def \N{\mathbb N}

\def \b1{\boldsymbol{1}}
\def \east{\texttt{EAST }}

\def \st2{\texttt{\texttt{$($ST$)^2$}}}
\def \T{{\cal T}}
\def \cE{{\cal E}}
\def \A{{\cal A}}
\def \B{{\cal B}}

\def \F{{\cal F}}

\def \S{{\cal S}}

\def \st2{\texttt{\texttt{$($ST$)^2$}}}
\newcommand{\fareast}{\texttt{FAREAST}\xspace}
\newcommand{\alleps}{\textsc{all-$\epsilon$}\xspace}
\newcommand{\topk}{\textsc{Top-$k$}\xspace}
\newcommand{\additive}{\blake{additive}\xspace}
\newcommand{\multiplicative}{\blue{multiplicative}\xspace}

\definecolor{burgundy}{rgb}{0.5, 0.0, 0.13}

\def \1{\mathbbm{1}}
\def \0{\textbf{0}}

\newcommand{\blake}[1]{{\color{red}{#1}}}

\newcommand{\blue}[1]{{\color{blue}{#1}}}
\definecolor{darkorchid}{rgb}{0.6, 0.2, 0.8}

 % check / chuck

\newtheorem{theorem}{Theorem}[section]

\newtheorem{lemma}[theorem]{Lemma}
\newtheorem{remark}{Remark}
\newtheorem{definition}{Definition}

\setcounter{tocdepth}{2}
\allowdisplaybreaks

\title{Finding All $\epsilon$-Good Arms in Stochastic Bandits}

% The \author macro works with any number of authors. There are two commands
% used to separate the names and addresses of multiple authors: \And and \AND.
%
% Using \And between authors leaves it to LaTeX to determine where to break the
% lines. Using \AND forces a line break at that point. So, if LaTeX puts 3 of 4
% authors names on the first line, and the last on the second line, try using
% \AND instead of \And before the third author name.

\author{%
  Blake Mason\\
  University of Wisconsin \\
  Madison, WI 53706 \\
  \texttt{bmason3@wisc.edu} \\
  % examples of more authors
   \And
  Lalit Jain \\
  University of Washington\\
  Seattle, WA 98115\\
  \texttt{lalitj@uw.edu} \\
  \And
  Ardhendu Tripathy \\
  University of Wisconsin\\
  Madison, WI 53706 \\
  \texttt{astripathy@wisc.edu} \\
  \And
  Robert Nowak \\
  University of Wisconsin\\
  Madison, WI 53706 \\
  \texttt{rdnowak@wisc.edu} \\  % \And
}

\begin{document}

\maketitle

\begin{abstract}
The pure-exploration problem in stochastic multi-armed bandits aims to find one or more arms with the largest (or near largest) means.  Examples include finding an $\epsilon$-good arm, best-arm identification, top-$k$ arm identification, and finding all arms with means above a specified threshold.  However, the problem of finding \emph{all} $\epsilon$-good arms has been overlooked in past work, although arguably this may be the most natural objective in many applications.  For example, a virologist may conduct preliminary laboratory experiments on a large candidate set of treatments and move all $\epsilon$-good treatments into more expensive clinical trials. Since the ultimate clinical efficacy is uncertain, it is important to identify all $\epsilon$-good candidates.  Mathematically, the all-$\epsilon$-good arm identification problem presents significant new challenges and surprises that do not arise in the pure-exploration objectives studied in the past. We introduce two algorithms to overcome these and demonstrate their great empirical performance on a large-scale crowd-sourced dataset of $2.2$M ratings collected by the New Yorker Caption Contest as well as a dataset testing hundreds of possible cancer drugs.
\end{abstract}

\section{Introduction}

We propose a new multi-armed bandit problem where the objective is to return \emph{all} arms that are $\epsilon$-good relative to the best-arm. 
Concretely, if the arms have means $\mu_1, \cdots, \mu_n$, with $\mu_1 = \max_{1\leq i\leq n} \mu_i$, then the goal is to return the set $\{i: \mu_{i}\geq \mu_1 - \epsilon\}$ in the \additive case, and $\{i: \mu_{i}\geq (1-\epsilon)\mu_1\}$ in the \multiplicative case. The \alleps problem is a novel setting in the bandits literature, adjacent to two other methods for finding many good arms: \topk where the goal is to return the arms with the $k$ highest means, and threshold bandits where the goal is to identify all arms above a fixed threshold. 
Building on a metaphor given by \cite{locatelli2016optimal}, if \topk is a ``contest'' and thresholding bandits is an ``exam'', \alleps organically decides which arms are ``above the bar" relative to the highest score.  We argue that the \alleps problem formulation is more appropriate in many applications, and we show that it presents some unique challenges that make its solution distinct from \topk and threshold bandits.

%our \alleps problem can be thought of as ``determining a cutoff,'' where we wish to admit all candidates having scores within a specified range of the highest. The instanc

%In \cite{locatelli2016optimal}, \topk is described as a ``contest'' and thresholding is an ``exam.''

%It is natural, however, to determine the cutoff based on what is attainable. In this sense, \alleps combines both the \textit{contest} nature of \topk with the \textit{exam} nature of thresholding. As we will show, admitting arms relative to the best makes the
%\alleps problem inherently more challenging than either \topk or thresholding.  In particular, no known algorithm or a simple variation can hit the natural instance dependent lower bound.

%As described above, existing bandit formulations such as top-$k$ or thresholding bandits could be limiting in certain applications, and our formulation complements those approaches. 
\textbf{A Natural and Robust Objective.}
%In many applications, one  often wishes to select a good subset out of several alternatives. 
%If testing the goodness of a given alternative is a costly procedure, then minimizing the number of times it is done is important. 
A motivating example is drug discovery, where pharmacologists want to identify a set of highly-potent drug candidates from potentially millions of compounds using various \textit{in vitro} and \textit{in silico} assays, and only the selected undergo more expansive testing\cite{christmann2016unprecedently}. 
Since performing the assays can be costly, one would like to use an adaptive, sequential experiment design that requires fewer experiments than a fixed experiment design. 
In sequential experiment design, 
%which is useful in reducing the number of experiments, 
it is important to fix the objective at the beginning as that choice affects the experimentation process. %
Both the objectives of finding the top-$k$ performing drugs, or all drugs above a threshold can %fail catastrophically. 
result in failure. 
In \topk, choosing $k$ too small may miss potent compounds, and choosing $k$ too large may yield many ineffective compounds and require an excessively large number of experiments.
%In the first case, even if $k$ is potentially large and does not miss any effective compounds, there is no guarantee on the resulting quality of the compounds obtained-- the top few could be highly potent, but the rest ineffective. 
Setting a threshold suffers from the same issues - with the additional concern that if it is set too high, potentially no drug discoveries are made. In contrast, the \alleps objective of finding all arms whose potency is withing 20\% of the best avoids these concerns by giving a robust and natural guarantee:  \emph{no} significantly suboptimal arms will be returned and it will make discoveries. 

We emphasize that unlike top-$k$ or thresholding which require some prior knowledge about the distribution of arms to guarantee a good set of returned arms, choosing the arms relative to the best is a natural, distribution-free metric for finding good arms. 
%As a concrete example to ground ideas, 
As an example, we consider the New Yorker Cartoon Caption Contest \href{https://www.newyorker.com/cartoons/contest}{(NYCCC)}. Each week, contestants submit thousands of supposedly funny captions for a cartoon (see Appendix~\ref{sec:additional_exp_results}),
which are rated from 1 (unfunny) to 3 (funny) through a crowdsourcing process. The New Yorker editors select final winners from a set with the highest average crowd-ratings (typically over $1$ million ratings per contest). 
\begin{wrapfigure}[12]{r}{0.35\textwidth}
\centering
    \includegraphics[width=.95\linewidth]{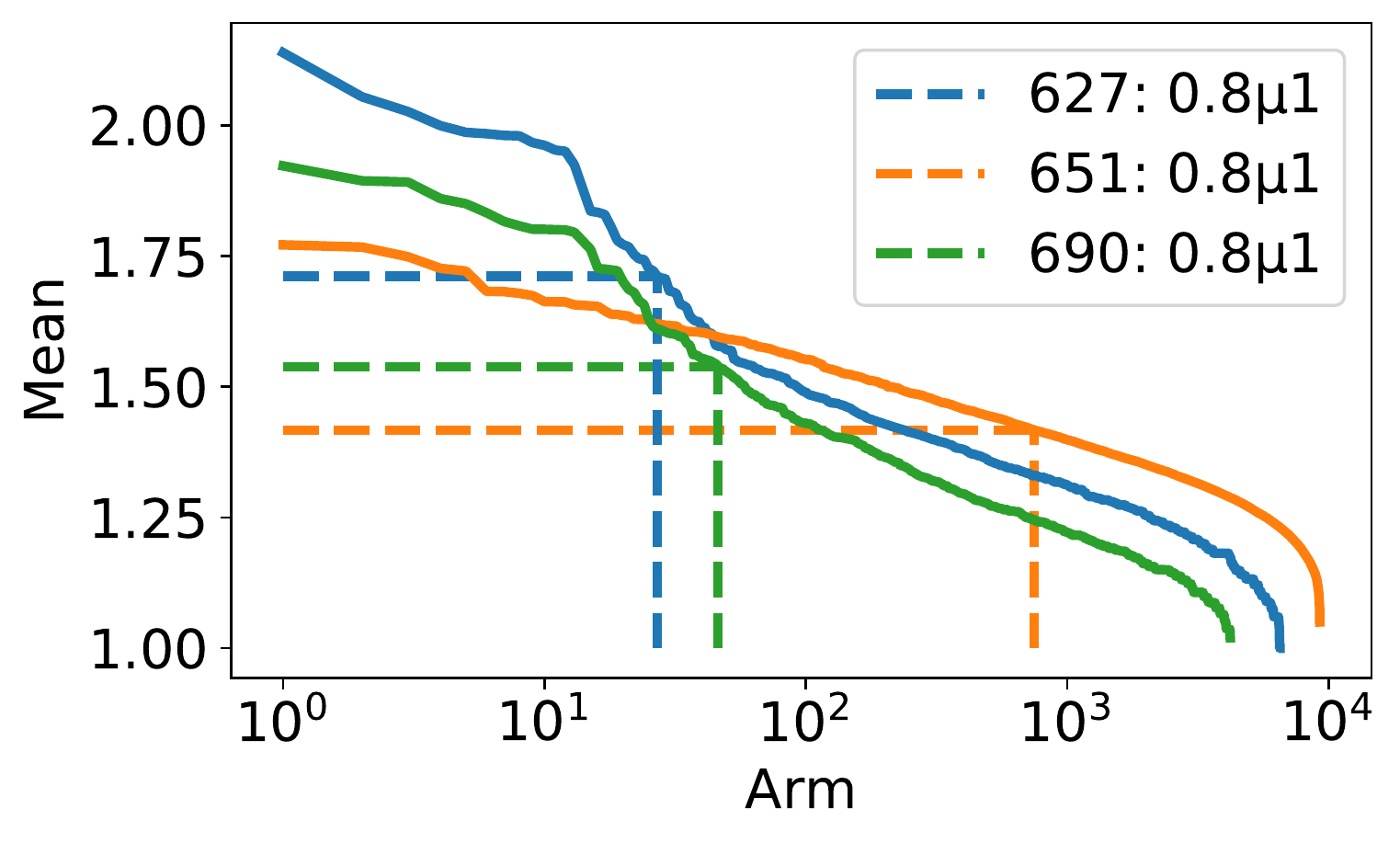}
  \caption{Mean ratings from contests 627, 651, 690}
  \label{fig:tny_dif_means}
\end{wrapfigure}
%on a scale of $1$ (unfunny) to $3$ (very funny) by a large-scale crowdsourcing process (see \cite{tanczos2017kl, jamieson2015next} for a discussion of bandit approaches to the contest). 
The number of truly funny captions varies from week to week, and this makes setting a choice of $k$ or fixed threshold difficult. 
In Figure~\ref{fig:tny_dif_means}, we plot the distribution of ratings from $3$ different contests. Horizontal lines depict a reasonable threshold of $0.8 \mu_1$ in each and vertical lines show the number of arms that exceed this threshold. 
Both of these quantities vary over weeks and these differences can be stark. In contest 627, only $k=27$ arms are within $20\%$ of $\mu_1$, but $k=748$ are in contest $651$. Additionally, a fixed threshold of $\tau=1.5$, admits captions within $30\%$ of the best in contest $627$, but only those within $15\%$ of the best in contest $651$.
% consecutive contests in March $2016$ saw only $8$ captions rated within $20\%$ of the funniest in one week and $187$ in the next. In the same contests, the first saw $>3\%$ of captions be rated 1.5 or better, but the second saw $<0.1\%$.
% In Figure~\ref{fig:tny_dif_means}, we show the distribution of the scores among the arms for three contests. 
These examples show that it would be imprudent, and indeed, incorrect to choose a value of $k$ or a threshold based on past contests-- the far more principled decision is to optimize for the objective of finding the captions that are within a percentage of the best every week.

Though the \alleps objective is natural and easy to state, it has not been studied in the literature. As we will show, admitting arms relative to the best makes the
\alleps problem inherently more challenging than either \topk or thresholding. 
In particular, it is not easily possible to adapt \topk or thresholding algorithms to achieve the instance dependent lower bound for \alleps. In this work, we provide a careful investigation of the \alleps problem including theoretical and empirical guarantees.

\subsection{Problem Statement and Notation}
Fix $\epsilon > 0$ and a failure probability $\delta > 0$. Let $\nu := \{\rho_1, \cdots, \rho_n\}$ be an instance of $n$ distributions (or arms) with $1$-sub-Gaussian distributions having \emph{unknown} means $\mu_1\geq  \cdots \geq \mu_n$. We now formally define our notions of \additive and \multiplicative $\epsilon$-good arms. 
\begin{definition}[\additive $\epsilon$-good] For a given $\epsilon > 0$, arm~$i$ is additive $\epsilon$-good if $\mu_i \geq \mu_1 - \epsilon$.
%^Fix an $\epsilon > 0$. An arm is \additive $\epsilon$-good if $\mu_i \geq \mu_1-\epsilon$. 
\end{definition}
\begin{definition}[\multiplicative $\epsilon$-good]
For a given $\epsilon > 0$, arm~$i$ is multiplicative $\epsilon$-good if $\mu_i \geq (1 - \epsilon)\mu_1$.
%Fix $\epsilon > 0$. An arm is \multiplicative $\epsilon$-good if $\mu_i \geq (1-\epsilon)\mu_1$.
\end{definition}
Additionally, we define the sets
\begin{equation}\label{eq:good-set}
G_\epsilon(\nu) := \{i: \mu_i \geq \mu_1 - \epsilon\} \text{ and } M_\epsilon(\nu) := \{i: \mu_i \geq (1-\epsilon)\mu_1\}
\end{equation}
to be the sets of additive and multiplicative $\epsilon$-good arms respectively.  
Where clear, we take $G_\epsilon = G_\epsilon(\nu)$ and  $M_\epsilon = M_\epsilon(\nu)$. 
% and $M_{(\epsilon +\gamma)}= M_{(\epsilon +\gamma)}(\nu)$.
% Without loss of generality, we denote $k=|G_{\epsilon}|$ (resp. $k=|M_{\epsilon}|$).
Consider an algorithm that at each time $s$ selects an arm $I_s \in [n]$ based on the history
$\F_{s-1} = \sigma (I_1, X_{1}, \cdots, I_{s-1}, X_{s-1})$, and observes a reward $X_{s} \overset{\text{iid}}{\sim} \rho_{I_s}$. 
The objective of the algorithm is to return $G_\epsilon$ or $M_\epsilon$ using as few total samples as possible.
%Let $\F_t = \sigma(I_1, X_{I_1, 1}, \cdots, I_t, X_{I_t, t})$ be the sigma algebra generated with respect to past observations and arm selections up to time $t$. % $X_{I_1, 1}, \cdots, X_{I_t, t}$.
% Given a fixed slack variable $\gamma \geq 0$ and failure probability $\delta$, t
%For a slack $\gamma \geq 0$, the objective of the algorithm is to return a set $G$ such that $G_\epsilon(\nu) \subset G \subset G_{\epsilon + \gamma}(\nu)$ or $M_\epsilon(\nu) \subset G \subset M_{\epsilon + \gamma}(\nu)$ in as few total samples as possible. %with probability at least $1-\delta$ with respect to $\F_T$. 
%$G$ is any set that contains all $\epsilon$-good arms and none that are worse than $(\epsilon+ \gamma)$-good according to the additive or multiplicative notion. 

\begin{definition}
(\alleps problem). 
An algorithm for the \alleps problem is $\delta$-PAC if 
(a) the algorithm has a finite stopping time $\tau$ with respect to $\F_t$, (b) at time $\tau$ it recommends a set $\widehat{G}$ such that with probability at least $1-\delta$, $\widehat{G} = G_\epsilon$ in the \additive case, or $\widehat{G} = M_\epsilon$ in the \multiplicative case. 
\end{definition}

% \begin{definition}(\alleps Problem).
% Fix $\gamma\geq 0, \delta> 0 $. We say that an algorithm for the \textbf{\alleps problem} is said to be $(\gamma, \delta)$-PAC if 1) the algorithm has a finite stopping time $\tau$ with respect to $\F_t$, 2) at time $\tau$ it recommends a set $\widehat{G}$ such that with probability at least $1-\delta$, $G_\epsilon \subset \widehat{G} \subset G_{\epsilon+\gamma}$ in the \additive case or $M_\epsilon \subset \widehat{G} \subset M_{\epsilon+\gamma}$ in the \multiplicative case.
% \end{definition}
% \marginpar{Maybe we can put this in a box? or a problem statement environment?}

\textbf{Notation:} %Without loss of generality, we assume that $\mu_1 \geq \mu_2 \geq \cdots \geq \mu_n$ unknown to the player. 
For any arm $i \in [n]$, let $\widehat{\mu}_i(t)$ denote the empirical mean after $t$ pulls. 
 For all $i \in [n]$, define the suboptimality gap $\Delta_i := \mu_1 - \mu_i$. Without loss of generality, we denote $k=|G_{\epsilon}|$ (resp. $k=|M_{\epsilon}|$).
% In the additive case, we will refer to the quantity $\mu_1 - \epsilon$ as the \emph{threshold} since means above it are $\epsilon$-good and means below it are not. 
%Similarly, in the multiplicative case, $(1-\epsilon)\mu_1$ denotes the threshold. % \marginpar{Not sure we want to use this terminology.} 
% Note that for any arm $i\in G_{\epsilon}$, the distance to $\mu_1 - \epsilon$ is given by $\mu_i - (\mu_1-\epsilon)$.
Throughout, we will keep track of the quantity $\alpha_\epsilon := \min_{i \in G_\epsilon}\mu_i - (\mu_1 - \epsilon)$ which is the distance from the smallest additive $\epsilon$-good arm, denoted $\mu_k$, to the threshold $\mu_1 - \epsilon$.
% \marginpar{maybe right as $\mu_i - (\mu_1 - \epsilon)$} 
Additionally, if $G_\epsilon^c$ is non empty, we consider $\beta_\epsilon = \min_{i \in G_\epsilon^c}(\mu_1 - \epsilon) - \mu_i$, the distance of the largest arm that is not additive $\epsilon$-good, denoted $\mu_{k+1}$, to the threshold. 
Equivalently, in the case of returning multiplicative $\epsilon$ arms, we define 
$\tilde{\alpha}_\epsilon := \min_{i \in M_\epsilon}\mu_i - (1-\epsilon)\mu_1$, $\tilde{\beta}_\epsilon := \min_{i \in M_\epsilon^c}(1-\epsilon)\mu_1 - \mu_i$, $\mu_{k}$, and $\mu_{k+1}$ to be the smallest differences of arms in $M_\epsilon$ and $M_\epsilon^c$ to $(1-\epsilon)\mu_1$ respectively. 
For our sample complexity results, we also consider a relaxed version of the \alleps problem, where for a user-given slack $\gamma \geq 0$, we allow our algorithm to return $\widehat{G}$ that satisfies $G_\epsilon \subset \widehat{G} \subset G_{\epsilon+\gamma}$ in the \additive case, or $M_\epsilon \subset \widehat{G} \subset M_{\epsilon+\gamma}$ in the \multiplicative case. As we will see, this prevents large or potentially unbounded sample complexities when arms are very close or on $\mu_1 - \epsilon$. 
\begin{wrapfigure}{r}{0.35\textwidth}
    \vspace{-3em}
  \begin{center}
    \includegraphics[width=0.95\linewidth]{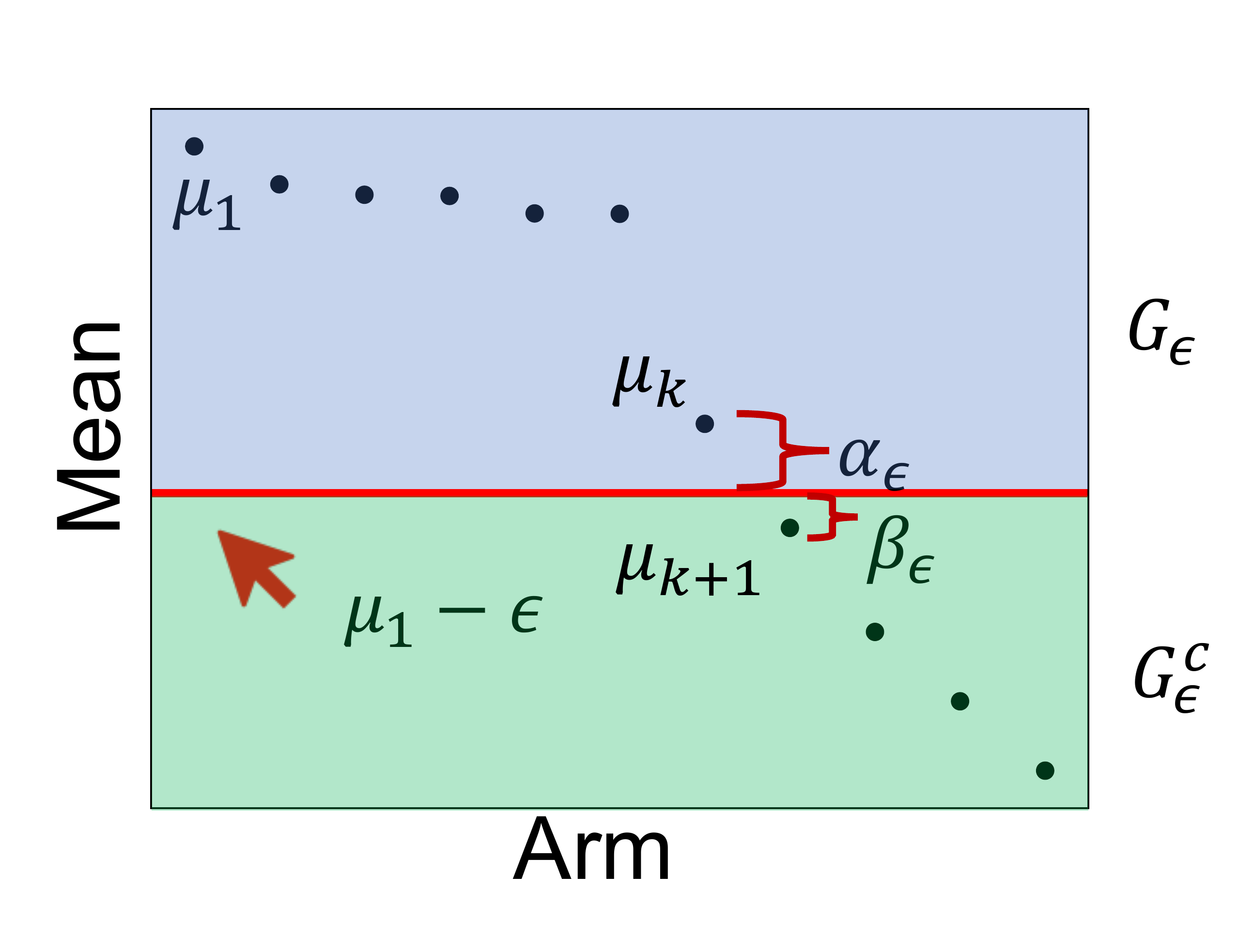}
  \end{center}
  \caption{An example instance}
  \vspace{-3em}
  \label{fig:example_instance}
\end{wrapfigure}
\subsection{Contributions and Summary of Main Results}

%In this paper we provide a new formulation for the problem of finding many good arms and a precise study. 
In this paper we propose the new problem of finding \emph{all} $\epsilon$-good arms and give a precise characterization of its complexity. 
Our contribution is threefold:
\begin{itemize}
    \item Information-theoretic lower bounds for the \alleps problem.
    \item A novel algorithm, \st2, that is nearly optimal, is easy to implement,     and has excellent empirical performance on real-world data.
    %for the problem of all $\epsilon$-good arms based on the principle of \textit{optimism} in multi-armed bandits.
    \item An instance optimal algorithm, \fareast. %However \st2 performs better in typical  applications.
    %\item Empirical validation of our results on real-life data.
\end{itemize}
We now summarize our results in the \additive setting (the \multiplicative setting is analogous).

% \lalit{What this section really needs to do is explain why our problem is different from top-k and thresholds. }

\noindent \textbf{Lower Bound and  Algorithms.} As a preview of our results, we highlight the impact of three key quantities that affect the sample complexity: the user provided $\epsilon$ and the instance dependent quantities $\alpha_{\epsilon}$ and $\beta_{\epsilon}$, (see Figure~\ref{fig:example_instance}). In this case, Theorem \ref{thm:inst_lower} implies that any $\delta$-PAC algorithm requires an expected number of samples exceeding
\begin{equation}\label{eq:intro_lower_bound}
    \sum_{i=1}^n\max\left\{\frac{1}{(\mu_1-\epsilon-\mu_i)^2}, \frac{1}{(\mu_1+\alpha_\epsilon-\mu_i)^2}\right\}\log\left(\frac{1}{\delta}\right).
\end{equation}

We provide two algorithms, \st2 and \fareast for the \alleps problem. Our starting point, \st2 is a novel combination of UCB\cite{auer2002finite} and LUCB\cite{kalyanakrishnan2012pac} and is easier to implement and has good empirical performance. \st2 is nearly optimal, however in some instances does not achieve the lower bound. To overcome this gap, we provide an instance optimal algorithm \fareast which achieves the lower bound, however suffers from larger constants and is not always better in practical applications.

To highlight the difficulty of developing optimal algorithms for the \alleps problem, we quickly discuss a naive elimination approach that uniformly samples all arms and eliminates arms once they are known to be above or below $\mu_1-\epsilon$ and not the best arm. Intuitively, such an algorithm would keep pulling arms until $\mu_1-\epsilon$ is estimated to an accuracy of $O(\min(\alpha_\epsilon, \beta_{\epsilon}))$ %(in the case where $\alpha_{\epsilon} >\beta_{\epsilon}$) 
to resolve the arms around the threshold (see Figure~\ref{fig:example_instance}). An elimination algorithm pays a high cost of exploration - potentially over pulling arms close to $\mu_1$ compared to the lower bound until a time when $\mu_1-\epsilon$ is estimated sufficiently well. Our algorithm \fareast provides a novel approach to overcome the issues with this approach.
However, as we will show in Section~\ref{sec:fareast_body}, in certain instances a dependence on $\sum_{i=1}^n(\mu_1 + \beta_\epsilon - \mu_i)^{-2}$ is present in \textit{moderate confidence}, i.e. it is not multiplied by $\log(1/\delta)$, unlike the lower bound in equation~\eqref{eq:intro_lower_bound} and becomes negligible compared to other terms as $\delta\rightarrow 0$.

\noindent\textbf{Empirical results.} We demonstrate the empirical success of \st2 on a real world dataset of $9250$ captions from the NYCCC. 
% \url{https://www.newyorker.com/cartoons/contest}. Each week, contestants submit thousands of captions. The caption submissions are rated by a large-scale bandit-optimized crowdsourcing process (see  \cite{tanczos2017kl} for a discussion of bandit approaches to the contest).  The New Yorker editors select final winners from among those with the highest average crowd-ratings. 
% As the number of truly funny captions varies from week to week and represents a small fraction of all the submissions, it is desirable to find all captions that are nearly as good as the best.
In Fig.~\ref{fig:tny_body}, we compare \st2 to other methods that have been used to run this contest.  We show that \st2 is better able to detect which arms have means within $10\%$ of the best.  The plot demonstrates the sub-optimality of using existing sampling scheme such as UCB or LUCB with an incorrect $k$ for the \alleps problem, providing an additional empirical validation for the study of this paper.

\subsection{Connections to prior Bandit art}
% \section{Related Works}
% \begin{enumerate}
%     \item Simple PAC lit + Julian work
%     \item Best arm identification
%     \item Top-$k$ identification
%     \item Top $m$ of $k$ and subset selection
%     \item Thresholding bandits
%     \item single epsilon good identification - get across the point that this problem is very different though than strategies in this setting, and in particualr more challenging algorithmically and statisitcally. 
%     \item All $\epsilon$ - Kauffman has a lower bound and it is not tight (see simple-pac paper)
% \end{enumerate}
% \textbf{KEY IDEA:} Get across the point that this is a weird, and difficult hole in the literature. It is like best-arm, except that it is easier in some instances. It requires tricks from best arm to learn the threshold without needing to run such a method to completion. It is like top-$k$ and the bound resembles it, except that $k$ is not known. It is like thresholding except the threshold is not known. 

% \textbf{One try at this:}

Our problem is related to several prior pure-exploration settings in the multi-armed bandit literature, including \topk bandits, and threshold bandits.

%In the multi-armed bandit literature prior to our work, there have been two major thrusts at the question of returning many good arms, threshold bandits and top-K. We now discuss the connections between these problems and ours.
\textsc{\textbf{Top-K.}} In the \topk problem, the goal is to identify the set $\{\mu_1, \cdots, \mu_k\}$ with probability greater than $1-\delta$ \cite{kalyanakrishnan2012pac, bubeck2013multiple, kaufmann2016complexity, gabillon2012best, ren2018exploring, simchowitz2017simulator}. The \alleps problem reduces to the setting of the \topk problem with $k=|G_{\epsilon}|$ when $|G_{\epsilon}|$ is known. In particular, lower bounds for the \topk problem apply to our setting. A lower bound (with precise logarithmic factors) given in \cite{simchowitz2017simulator} is $\sum_{i=1}^k (\mu_i - \mu_{k+1})^{-2}\log((n-k)/\delta)+\sum_{i=k+1}^n (\mu_i - \mu_{k})^{-2}\log(k/\delta)$. In general, this is smaller than our lower bound in Theorem~\ref{thm:inst_lower} since $\mu_{k}\geq \mu_1-\epsilon\geq \mu_{k+1}$. A particular case of this problem is best-arm identification when $k=1$. 

Approximate versions of the \textsc{top-k} problem have also been considered where the goal is to return a set of arms $\mathcal{S}$ with $|\S| = k$ and such that with probability greater than $1-\delta$, each $i\in \cal{S}$ satisfies $\mu_i \geq \mu_k - \epsilon$ \cite{kalyanakrishnan2012pac, karnin2013almost}. In the case where $k=1$, this is also known as the problem of identifying an (single) $\epsilon$-good arm \cite{mannor2004sample,  even2002pac, kalyanakrishnan2012pac, even2006action, kalyanakrishnan2010efficient, katz2019true, degenne2019pure, gabillon2012best, kaufmann2013information, karnin2013almost, simchowitz2017simulator} which has received a large amount of interest.  If $|G_\epsilon| = k$, \cite{kaufmann2016complexity}, demonstrate a lower bound of $O((k\epsilon^{-2} + \sum_{i=k+1}^n(\mu_1 - \mu_i)^{-2})\log(1/\delta))$ samples in expectation to find such an arm and \cite{karnin2013almost} provide an algorithm that matches this to doubly logarithmic factors, though methods such as \cite{kalyanakrishnan2012pac, simchowitz2017simulator, chaudhuri2017pac, chaudhuri2019pac} achieve better empirical performance. 
A particular instance of interest is when it is known that one arm is at mean $\epsilon$, and the rest are at mean zero. In this setting, \cite{mannor2004sample} show a lower bound on the sample complexity of $O(n/\epsilon^2 +1/\epsilon^2\log(1/\delta))$ highlighting that the dependence on $n$ only occurs in \textit{moderate confidence}, i.e. for a fixed value of $\delta$. They also provide a matching upper bound that motivates our procedure in \fareast. 
Finally \cite{katz2019true} considers the \textit{unverifiable} regime where there are potentially many $\epsilon$-good arms. In such cases, sample-efficient algorithms exist that return an $\epsilon$-good arm with high probability, but \emph{verifying} it is $\epsilon$-good requires far more samples.  Extending these ideas to the setting of \textsc{all-$\epsilon$} is a goal of future work. 

\textbf{Threshold Bandits.} In the threshold bandit problem, we are given a threshold $\tau$ and the goal is to identify the set of arms whose means are greater than the threshold \cite{locatelli2016optimal, kano2019good}. If the value of $\mu_1$ were known, then \alleps problem would reduce to a threshold bandit with $\tau = \mu_1 - \epsilon$. A naive sequential sampling scheme that stops sampling an arm when its upper or lower confidence bound clears the threshold has sample complexity $O(\sum_{i=1}^n (\mu_i - \tau)^{-2}\log(n/\delta))$. Up to factors of $\log(n)$, this can be shown to be a lower bound for threshold bandits as well, and as a result is bounded above by the result Theorem \ref{thm:inst_lower}. 
Hence, \alleps is intrinsically more difficult than threshold bandits. 
A naive approach to the \textsc{all-$\epsilon$} problem is to first identify the index and mean of the best arm using a best-arm identification algorithm and then utilize it to build an estimate of the threshold $\mu_1-\epsilon$. In general, this two-step procedure is sub-optimal if there are many arms close to the best-arm in which case identifying the best-arm is both unnecessary and expends unnecessary samples.  
In the fixed confidence setting, threshold bandits is closely related to that of multiple hypothesis testing, and  
recent work  \cite{jamieson2018bandit} achieves tight upper and lower bounds for this problem including tighter logarithmic factors similar to those for \topk. %provides an algorithm that outputs a set of recommendations has provided optimal anytime FDR (false discovery rate) guarantees, along with a sample complexity necessary for achieving guarantees on FWER (family wise error rate), TPR(true positive rate), and FWPD (family wise probability of detection). 
If $\mu_1$ is known, then the additive \alleps problem reduces to the FWER (family-wise error rate) and FWPD (family-wise probability of detection) setting in \cite{jamieson2018bandit}. 
Finally, in the fixed budget setting, \cite{locatelli2016optimal} proposes an optimal anytime method \textsc{APT} whose sampling strategy we use as a comparison in Section~\ref{sec:experiments}.

\section{Lower Bound}\label{sec:lower_bound}
\begin{theorem}(\additive and \multiplicative lower bounds)\label{thm:inst_lower}
Fix $\delta, \epsilon > 0$. Consider $n$ arms, such that the $i^\text{th}$ is distributed according to ${\cal N}(\mu_i, 1)$. Any $\delta$-PAC algorithm for the \additive setting satisfies
\begin{align*} \E[\tau] \geq 2\sum_{i=1}^n \max \left\{\frac{1}{\left(\mu_1 - \epsilon - \mu_i \right)^{2}}, \frac{1}{(\mu_1 + \alpha_\epsilon - \mu_i)^2} \right\} \log \left(\frac{1}{2.4\delta}\right)
\end{align*}
and if $\mu_1 > 0$, any $\delta$-PAC algorithm for the \multiplicative algorithm satisfies, 
\begin{align*} \E[\tau] \geq 2\sum_{i=1}^n \max \left\{\frac{1}{\left((1-\epsilon)\mu_1 - \mu_i \right)^{2}}, \frac{1}{ (\mu_1 +\frac{\Tilde{\alpha}_\epsilon}{1-\epsilon} - \mu_i)^2}\right\} \log \left(\frac{1}{2.4\delta}\right).
\end{align*}
\end{theorem}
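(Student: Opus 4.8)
The plan is to derive the lower bound via the standard change-of-measure (transportation) argument for pure-exploration bandits, following the framework of Kaufmann, Cappé, and Garivier. The core inequality is: for any $\delta$-PAC algorithm and any alternative instance $\nu'$ whose $\epsilon$-good set differs from that of $\nu$, we have $\sum_{i=1}^n \E_\nu[N_i(\tau)]\, \mathrm{KL}(\rho_i, \rho_i') \geq \mathrm{kl}(\delta, 1-\delta) \geq \log\frac{1}{2.4\delta}$, where $N_i(\tau)$ is the number of pulls of arm $i$ and $\mathrm{kl}$ is the binary relative entropy. For Gaussians with unit variance, $\mathrm{KL}(\rho_i,\rho_i') = \tfrac12(\mu_i - \mu_i')^2$, so each well-chosen alternative yields a lower bound of the form $\E_\nu[N_i(\tau)] \geq 2(\mu_i - \mu_i')^{-2}\log\frac{1}{2.4\delta}$ for the single coordinate we perturb, provided the perturbation of that one arm alone suffices to change the answer set.

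**The key steps, in order.** First, fix a target arm $i$ and construct two families of confusing instances. (i) To force the first term $(\mu_1 - \epsilon - \mu_i)^{-2}$: if $i \in G_\epsilon$ (so $\mu_i \geq \mu_1 - \epsilon$), push $\mu_i$ down just below $\mu_1 - \epsilon$ so arm $i$ leaves the good set; if $i \notin G_\epsilon$, push $\mu_i$ up just above $\mu_1 - \epsilon$ so it enters. Either way the answer set changes, and the required move has size $|\mu_i - (\mu_1-\epsilon)|$, giving the $(\mu_1-\epsilon-\mu_i)^{-2}$ term after taking a limit to make the move arbitrarily close to that size. Care is needed when $i=1$: moving $\mu_1$ also moves the threshold, but since $\epsilon>0$ the best arm is always in $G_\epsilon$ and a downward perturbation still eventually ejects some arm; alternatively one perturbs a non-best arm's relationship to arm $1$. (ii) To force the second term $(\mu_1 + \alpha_\epsilon - \mu_i)^{-2}$: raise arm $i$ above $\mu_1$ to make it the new best arm. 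This shifts the threshold from $\mu_1 - \epsilon$ to (roughly) $\mu_i - \epsilon$; for the good set to genuinely change, arm $i$ must rise enough that the old marginal good arm $\mu_k$ (at distance $\alpha_\epsilon$ above the old threshold) falls below the new threshold, i.e. $\mu_i$ must exceed $\mu_1 + \alpha_\epsilon$. Hence the move has size $\mu_1 + \alpha_\epsilon - \mu_i$, giving the second term. Second, combine: since both alternatives are valid confusing instances, $\E_\nu[N_i(\tau)]$ is at least the max of the two single-arm bounds. Third, sum over $i \in [n]$ and use $\E[\tau] = \sum_i \E_\nu[N_i(\tau)]$. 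Fourth, the multiplicative case is handled identically, with the threshold $(1-\epsilon)\mu_1$ replacing $\mu_1 - \epsilon$; when arm $i$ becomes the new best at value $\mu_i > \mu_1$, the threshold moves to $(1-\epsilon)\mu_i$, and requiring $\mu_k < (1-\epsilon)\mu_i$ translates (using $\mu_k = (1-\epsilon)\mu_1 + \tilde\alpha_\epsilon$) into $\mu_i > \mu_1 + \tilde\alpha_\epsilon/(1-\epsilon)$, explaining the stated form.

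**The main obstacle** I anticipate is making the second family of alternatives rigorous: raising one arm past $\mu_1$ changes the \emph{reference} defining the good set, so one must verify that the resulting instance has a strictly different good set (not merely a relabeled one) and that no \emph{smaller} perturbation of arm $i$ alone achieves this — otherwise the bound is not tight. This requires a careful case analysis distinguishing whether $i$ was originally in $G_\epsilon$ or not, and bookkeeping of which arms straddle the new versus old threshold; the quantities $\alpha_\epsilon$ (distance from $\mu_k$ to the threshold) and implicitly $\mu_2$ are exactly what control when the cheapest such perturbation exists. A secondary technical point is the standard one of taking perturbations to be strict (so the altered instance is genuinely in the ``wrong answer'' regime) and then passing to the infimum/limit, which is why the bound is stated with the clean denominators rather than with an extra slack; this also interacts with the relaxed $\gamma$-version of the problem mentioned earlier, though for the lower bound we only need the exact ($\gamma=0$) confusing instances. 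Finally, one should double-check the constant: the $\log\frac{1}{2.4\delta}$ comes from the bound $\mathrm{kl}(\delta,1-\delta)\geq \log\frac{1}{2.4\delta}$ valid for $\delta \leq 0.15$ or so, which is the conventional constant in this literature, and the factor $2$ out front is the Gaussian variance factor $1/(2\cdot\frac12)$.
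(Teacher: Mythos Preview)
Your proposal is correct and follows essentially the same route as the paper: both construct the identical two families of single-arm perturbations (move $\mu_i$ across the threshold for the first term; raise $\mu_i$ to $\mu_k+\epsilon+\eta=\mu_1+\alpha_\epsilon+\eta$ so arm $k$ drops out, for the second term), invoke Lemma~1 of Kaufmann--Capp\'e--Garivier for Gaussians, take $\eta\to 0$, and sum. Your anticipated obstacle about $i=1$ and the moving threshold is a real subtlety that the paper glosses over, but it is harmless since for $i=1$ the second term $1/\alpha_\epsilon^2$ always dominates $1/\epsilon^2$, and the paper separately handles $i=k$ by noting the first term dominates there.
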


The bounds are different but share a common interpretation. Consider the \additive case. First, every arm must be sampled %in proportion 
inversely proportional
to its squared distance to $\mu_1 - \epsilon$. 
%This is akin 
In a manner similar to thresholding \cite{locatelli2016optimal}, %since 
even if $\mu_1 - \epsilon$ was known, 
%this would be necessary to confirm that the mean of an arm in $G_\epsilon$ exceeds this quantity, and the mean of any arm in $G_\epsilon^c$ is less than this quantity. 
these number of samples are necessary to decide if an arm's mean is above or below that quantity.
This leads to the first term in the $\max\{\cdot, \cdot\}$. 
%Second, 
The second term in the $\max\{\cdot,\cdot\}$ states that every arm must be sampled %in proportion 
inversely proportional
to its squared distance to $\mu_1 + \alpha_\epsilon$. 
Recall that $\alpha_\epsilon = 
%\min_{i\in G_\epsilon}\mu_i - (\mu_1 - \epsilon) = 
\mu_k - (\mu_1 - \epsilon)$ 
is the margin by which arm~$k$ is good. 
Hence, to verify that $k \in G_\epsilon$, it is also necessary to confirm that all means are below $\mu_1 + \alpha_\epsilon$, as $\mu_1 + \alpha_\epsilon - \epsilon \geq \mu_k$ which would imply that $k$ is bad.
%as this would imply a threshold that exceeds $\mu_k$. 
This represents the necessity of estimating the threshold, and leads to the second term. For arms in $G_\epsilon^c$, comparing against $\mu_1 - \epsilon$ is always more difficult, but for arms in $G_\epsilon$, either constraint may be more challenging to ensure. Lastly, we note that it is possible to prove bounds with tighter logarithmic terms. For an instance where $O(n^\phi)$ arms have mean $2\epsilon$ for $\phi \in (0, 1)$, 
and the remaining have mean $0$, Theorem 1 of \cite{malloy2014sequential} suggests that $O(n/\epsilon^2\log(n/\delta))$ samples are necessary, exceeding the above bounds by a factor of $\log(n)$. 
\section{An Optimism Algorithm for \alleps}\label{sec:st2}
%x\subsection{The Algorithm}
We propose algorithm \ref{alg:st2} called \st2, (\textbf{S}ample the \textbf{T}hreshold, \textbf{S}plit the \textbf{T}hreshold) to return a set containing all $\epsilon$-good arms and none worse than $(\epsilon+\gamma)$-good with probability $1-\delta$. Intuitively, $\st2$ runs \texttt{UCB} and \texttt{LUCB1} in parallel. 
At all times, \st2 pulls three arms. We pull the arm with the highest upper confidence bound, similar to the UCB algorithm, \cite{auer2002finite}, to refine an estimate of the threshold using the highest empirical mean (\textbf{S}ample the \textbf{T}hreshold). 
%Then treating the threshold as a fixed quantity, 
Using the empirical estimate of the threshold,
we pull an arm above it and an arm below it whose confidence bounds cross it, similar to \texttt{LUCB1}, \cite{kalyanakrishnan2012pac} (\textbf{S}plit the \textbf{T}hreshold). 
Using these bounds, \st2 forms upper and lower bounds on the true threshold, i.e.\ $\mu_1 - \epsilon$ (resp. $(1-\epsilon)\mu_1$) and terminates when it can declare that all arms are either in $G_{\epsilon+\gamma}$ or $G_\epsilon^c$.
% determine that the mean of each arm is either above or below the true threshold up to a tolerance $\gamma$. \blake{Not sure I know what this sentence means?}
% We pull the arm with the highest upper confidence bound, similar to the UCB algorithm, \cite{auer2002finite}, to refine an estimate of the threshold $\mu_1 - \epsilon$ (resp. $(1-\epsilon)\mu_1$) using the highest empirical mean. Then treating the threshold as a fixed quantity, we pull an arm above and below it whose confidence bounds cross it, akin to \texttt{LUCB1}, \cite{kalyanakrishnan2012pac}. The algorithm terminates when it can declare that the mean of each arm is at least $\mu_1 - \epsilon - \gamma$ (resp. $(1-\epsilon-\gamma)\mu_1$) or below $\mu_1 - \epsilon$ (resp. $(1-\epsilon)\mu_1$) with the overlap allowing for slack $\gamma$. 
To do so, \st2 maintains anytime confidence widths, $C_{\delta/n}(t)$ such that for an empirical mean $\widehat{\mu}_i(t)$ of $t$ samples, we have $\P(\bigcup_{t=1}^\infty |\widehat{\mu}_i(t) - \mu_i| > C_{\delta/n}(t)) \leq \delta/n$. For this work, we take $C_\delta(t) = \sqrt{\frac{c_\phi\log(\log_2(2t)/\delta)}{t}}$ for a constant $c_\phi$. It suffices to take $c_\phi = 4$, though tighter bounds are known and should be used in practice, e.g. \cite{jamieson2014lil, kaufmann2016complexity, howard2018uniform}.
% Pieces specific to the additive case are shown in \blake{red}, and pieces specific to the multiplicative case are shown in \blue{blue}.

\begin{algorithm}[h]
\small
\caption{\st2: \textbf{S}ample the \textbf{T}hreshold, \textbf{S}plit the \textbf{T}hreshold}\label{alg:st2}
\begin{algorithmic}[1]
\Require{$\epsilon, \delta> 0$, $\gamma \geq 0$, instance $\nu$}
\State{Pull each arm once, initialize $T_i \leftarrow 1$, update $\widehat{\mu}_i$ for each $i \in \{1, 2, \ldots, n\}$}
\State{Empirically good arms: \blake{$\widehat{G} = \{i: \widehat{\mu}_i \geq \max_j \widehat{\mu}_j - \epsilon\}$}, \blue{$\widehat{G} = \{i: \widehat{\mu}_i \geq (1-\epsilon)\max_j \widehat{\mu}_j\}$}}
\State{\blake{$U_t = \max_j \widehat{\mu}_j(T_j) + C_{\delta/n}(T_j) - \epsilon - \gamma$} and \blake{$L_t = \max_j \widehat{\mu}_j(T_j) - C_{\delta/n}(T_j) - \epsilon$}}
\State{\blue{$U_t = (1 - \epsilon - \gamma) \left(\max_j \widehat{\mu}_j(t) + C_{\delta/n}(T_j)\right)$} and \blue{$L_t = (1 - \epsilon) \left(\max_j \widehat{\mu}_j(t) - C_{\delta/n}(T_j)\right)$}}
\State{Known arms: $K = \{i: \widehat{\mu}_i(T_i) + C_{\delta/n}(T_i) < L_t \text{ or } \widehat{\mu}_i(T_i) - C_{\delta/n}(T_i) > U_t\}$}
\While{$K \neq [n]$ \label{alg:st2_stopping}}
\State{Pull arm $i_1(t) = \arg\min_{i \in \widehat{G}\backslash K}  \widehat{\mu}_i(T_i) - C_{\delta/n}(T_i)$, update $T_{i_1}, \widehat{\mu}_{i_1}$}
\State{Pull arm $i_2(t) = \arg\max_{i \in \widehat{G}^c \backslash K} \widehat{\mu}_i(T_i) + C_{\delta/n}(T_i)$, update $T_{i_2}, \widehat{\mu}_{i_2}$}
\State{Pull arm $i^\ast(t) = \arg\max_{i} \widehat{\mu}_i(T_i) + C_{\delta/n}(T_i)$, update $T_{i^\ast}, \widehat{\mu}_{i^\ast}$}
\State{Update bounds $L_t, U_t$, sets $\widehat{G}$, $K$}

%\State{Update bounds and empirical means}
\EndWhile
\Return The set of good arms $\{i: \widehat{\mu}_i(T_i) - C_{\delta/n}(T_i) > U_t\}$
\end{algorithmic}
\end{algorithm}

\subsection{Theoretical guarantees}
Next we present a pair of theorems on the sample complexity of \st2. %with high probability in both cases. 
% For the additive case, recall that $\alpha_\epsilon = \min_{i\in G_\epsilon}\mu_i - (\mu_1 - \epsilon)$, the distance from the smallest $\epsilon$-good arm to the threshold, and $\beta_\epsilon = \min_{i\in G_\epsilon}\mu_1 - \epsilon - \mu_i$, the distance of the largest arm that is not $\epsilon$-good to the threshold. Similarly, in the multiplicative case, $\tilde{\alpha}_\epsilon = \min_{i\in G_\epsilon}\mu_i - (1-\epsilon)\mu_1$ and $\tilde{\beta}_\epsilon = \min_{i\in G_\epsilon}(1-\epsilon)\mu_1 - \mu_i$. 
For clarity, we omit doubly logarithmic terms and defer such statements to Appendix \ref{sec:optimism_appendix}. Below we denote $a \wedge b := \min\{a, b\}$. 

\begin{theorem}[\blake{Additive Case}]\label{thm:st2_complexity}
Fix $\epsilon> 0$, $0< \delta \leq 1/2$, $\gamma \leq 16$ and an instance $\nu$ such that $\max(\Delta_i, |\epsilon - \Delta_i|) \leq 8$ for all $i$.
% In the case that $G_\epsilon = [n]$, let $\alpha_\epsilon = \min(\alpha_\epsilon, \beta_\epsilon)$. \marginpar{Do you really want to redefine $\alpha_{\epsilon}$ this way?}
With probability at least $1-\delta$, 
there is a  constant $c_1$ such that
\st2 returns a set $\widehat{G}$ such that $G_\epsilon \subset \widehat{G} \subset G_{(\epsilon+\gamma)}$ in at most 
the following number of samples.
\begin{align}\label{eq:st2_complex}
c_1\log\left(\frac{n}{\delta}\right)\sum_{i=1}^n \max \left\{\frac{1}{(\mu_1 - \epsilon - \mu_i)^2}, \frac{1}{(\mu_1 + \alpha_\epsilon - \mu_i)^2}, \frac{1}{(\mu_1 + \beta_\epsilon - \mu_i)^2} \right\} \wedge \frac{1}{\gamma^2}
\end{align}
\end{theorem}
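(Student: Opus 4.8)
The plan is to condition on the good event $\mathcal{E}$ that all anytime confidence bounds hold simultaneously, i.e. $|\widehat\mu_i(t) - \mu_i| \le C_{\delta/n}(t)$ for every arm $i$ and every $t$; by the union bound and the choice of $C_{\delta/n}$ this has probability at least $1-\delta$. On $\mathcal{E}$, the algorithm never misclassifies an arm, so the returned set always satisfies $G_\epsilon \subset \widehat G \subset G_{\epsilon+\gamma}$ and correctness is immediate; all the work is in bounding the stopping time. First I would record the standard consequence of $\mathcal{E}$: whenever an arm $i$ with current sample count $T_i$ is pulled, its confidence width $C_{\delta/n}(T_i)$ is still ``large'' relative to the gap that would let it be declared known, because if the width were already small enough the arm would have been moved into $K$ and not pulled. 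Inverting $C_{\delta/n}(T_i) = \sqrt{c_\phi \log(\log_2(2T_i)/(\delta/n))/T_i} \gtrsim \text{gap}$ gives, up to doubly-logarithmic factors, $T_i \lesssim \log(n/\delta)/\text{gap}^2$ at the moment just before the final pull of arm $i$ in each of its three roles.

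The main step is to identify, for each arm $i$, the correct ``gap'' appearing in \eqref{eq:st2_complex}, and this is where the three sampling rules must be analyzed separately. For the $i^\ast(t)$ pulls (the UCB component), a near-optimal arm is pulled only while its UCB exceeds the UCB of arm $1$ (or, more carefully, while it is the empirical argmax of $\widehat\mu_j + C$); the usual UCB regret argument shows arm $i$ is pulled in this role at most $O(\log(n/\delta)/\max(\Delta_i, \text{something})^2)$ times — and crucially this role is what drives the estimate of $\max_j \widehat\mu_j$, hence of $L_t$ and $U_t$, toward $\mu_1$. For the $i_1(t)$ pulls (lowest LCB among empirically-good, not-yet-known arms) and the $i_2(t)$ pulls (highest UCB among empirically-bad, not-yet-known arms), I would argue that on $\mathcal{E}$ the running thresholds satisfy $L_t \ge \mu_1 - \epsilon - O(C_{\delta/n}(T_{i^\ast}))$ and $U_t \le \mu_1 - \epsilon + \gamma + O(C_{\delta/n}(T_{i^\ast}))$; feeding this into the $K$-membership test, an arm $i$ stays unknown (and hence eligible for an $i_1$ or $i_2$ pull) only while $C_{\delta/n}(T_i)$ is at least a constant times the distance from $\mu_i$ to the nearer of $\mu_1 - \epsilon$ and $\mu_1-\epsilon+\gamma$, modulated by how well the threshold is currently known. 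Combining, the number of $i_1$/$i_2$ pulls of arm $i$ is governed by $\min\{|\mu_1 - \epsilon - \mu_i|, |\mu_1 - \epsilon + \gamma - \mu_i|, \text{(threshold-estimation error)}\}$, and the threshold-estimation error in turn is no better than $C_{\delta/n}$ evaluated at the number of times the bottleneck arm near the boundary has been pulled — which is exactly where $\alpha_\epsilon$ and $\beta_\epsilon$ enter: the boundary arms $k$ and $k+1$ are pulled $\Theta(\log(n/\delta)/\alpha_\epsilon^2)$ and $\Theta(\log(n/\delta)/\beta_\epsilon^2)$ times respectively, and until those counts are reached the $U_t,L_t$ bounds have slack $\Theta(\alpha_\epsilon)$ or $\Theta(\beta_\epsilon)$, forcing every other arm $i$ to also be resolvable only once $C_{\delta/n}(T_i) \lesssim |\mu_1 + \alpha_\epsilon - \mu_i|$ or $\lesssim |\mu_1+\beta_\epsilon-\mu_i|$. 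The $\wedge\, 1/\gamma^2$ cap comes from the $\epsilon+\gamma$ slack: once an arm has been pulled $\Omega(\log(n/\delta)/\gamma^2)$ times, the gap between the $U_t$ and $L_t$ tests closes and the arm is declared known regardless of its true position, so no arm is ever pulled more than $O(\log(n/\delta)/\gamma^2)$ times in any role.

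I would then assemble the bound: total samples $= \sum_i (\text{pulls as } i^\ast) + (\text{pulls as } i_1) + (\text{pulls as }i_2)$, bound each summand by $c_1 \log(n/\delta) \max\{(\mu_1-\epsilon-\mu_i)^{-2}, (\mu_1+\alpha_\epsilon-\mu_i)^{-2}, (\mu_1+\beta_\epsilon-\mu_i)^{-2}\} \wedge \gamma^{-2}$ using the case analysis above, and absorb doubly-logarithmic factors into $c_1$ as promised (deferring the precise version to the appendix). A small technical point is handling the arms that are empirically good/bad at some times and not others, and ensuring the ``empirical argmax'' in the $i^\ast$ rule and in the definitions of $U_t,L_t$ can be replaced on $\mathcal{E}$ by quantities centered at $\mu_1$; this is routine once $\mathcal{E}$ is fixed but must be done carefully because $\widehat G$ and $U_t,L_t$ are coupled.

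The hard part will be the intertwined dependence between the threshold estimate and the classification of individual arms: the $\alpha_\epsilon$ and $\beta_\epsilon$ terms are not present for any single arm in isolation but arise because \emph{every} arm's resolution is gated by how well the algorithm currently knows $\mu_1-\epsilon$, which is itself gated by the slowest-to-resolve boundary arm. Making this ``gating'' rigorous — showing that the slack in $L_t, U_t$ at time $t$ is controlled by $C_{\delta/n}$ evaluated at the pull counts of the boundary arms, and that those counts grow on a predictable schedule because the $i_1,i_2$ rules keep selecting the boundary arms until they are known — is the crux, and is exactly the structural reason \st2 needs a distinct argument from ordinary LUCB or UCB analyses.
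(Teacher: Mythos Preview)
Your high-level plan matches the paper: condition on the good event $\mathcal{E}$, get correctness from $L_t \le \mu_1-\epsilon$ and $U_t \ge \mu_1-\epsilon-\gamma$, and control $U_t,L_t$ through $C_{\delta/n}(T_{i^\ast})$. But the mechanism you give for how $\alpha_\epsilon$ and $\beta_\epsilon$ enter is backwards, and this is where your argument would break. You write that the threshold-estimation error is ``no better than $C_{\delta/n}$ evaluated at the number of times the bottleneck arm near the boundary has been pulled,'' and that the $i_1,i_2$ rules keep selecting the boundary arms, which in turn sharpens $U_t,L_t$. The boundary arms do not feed back into $U_t,L_t$ at all; the threshold error is $2C_{\delta/n}(T_{i^\ast})$, controlled \emph{only} by the $i^\ast$ rule. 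The quantity $\min(\alpha_\epsilon,\beta_\epsilon)$ enters as the \emph{target precision} the threshold must reach before arms $k$ and $k+1$ can be placed in $K$ --- it is a stopping condition, not a gating mechanism on $U_t,L_t$. Relatedly, your claim that each arm $i$ is ``resolvable only once $C_{\delta/n}(T_i)\lesssim |\mu_1+\alpha_\epsilon-\mu_i|$'' has no operational meaning: $(\Delta_i+\alpha_\epsilon)^{-2}$ is not a per-arm gap but an artifact of the inequality $\min\{h(\Delta_i),h(\alpha_\epsilon)\}\le h((\Delta_i+\alpha_\epsilon)/2)$.

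The paper's fix is to introduce $\omega:=\max\{\gamma,\min(\alpha_\epsilon,\beta_\epsilon)\}$ and decompose the \emph{rounds} (not per-arm-per-role pulls) into: (i) rounds with $i^\ast\notin G_\omega$; (ii) rounds with $i^\ast\in G_\omega$ but $C_{\delta/n}(T_{i^\ast})>\omega/16$; (iii) ``crossing'' rounds where the threshold is already $\omega$-accurate but $i_1\in G^c_{\epsilon+\gamma/2}$ or $i_2\in G_{\epsilon+\gamma/2}$; and (iv) remaining rounds until both stopping events $E_1,E_2$ fire. Phases (i)--(ii) together contribute $\sum_i\min\{h(\Delta_i),h(\omega)\}$, and the $\min$-to-average lemma converts this to $\sum_i h((\Delta_i+\alpha_\epsilon)/\text{const})\vee h((\Delta_i+\beta_\epsilon)/\text{const})$ --- so the $\alpha_\epsilon,\beta_\epsilon$ terms come entirely from the $i^\ast$ exploration, while phases (iii)--(iv) contribute only $\sum_i h(|\Delta_i-\epsilon|)$. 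Your per-role decomposition also has a structural problem: once $C_{\delta/n}(T_{i^\ast})\le\omega/16$, an arm in $G_\omega$ can continue to be selected as $i^\ast$ for every remaining round, so bounding ``pulls as $i^\ast$'' for that arm in isolation would force you to absorb the entire phase-(iii)--(iv) budget anyway. Counting rounds and multiplying by 3 avoids this.
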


Given a positive slack $\gamma$, we are allowed to return an arm that is $(\epsilon + \gamma)$-good. Thus a confidence width less than $\Omega(\gamma)$ on any arm is not needed, resulting in the $1 / \gamma^2$ term in the Theorem~\ref{thm:st2_complexity}. In particular this prevents unbounded sample complexities if there is an arm at the threshold $\mu_1 - \epsilon$. For $\gamma = 0$, the first two terms inside the $\max$ are also present in the lower bound (Theorem~\ref{thm:inst_lower}). When $\alpha_\epsilon$ is within a constant factor of $\beta_\epsilon$, the second and third term in the $\max$ have the same order, and the upper bound matches the lower bound up to a $\log(n)$ factor. 

If $\beta_\epsilon \ll \alpha_{\epsilon}$, \eqref{eq:st2_complex} has a different scaling than the lower bound. 
%This can occur in a problem instance family where adding new arms reduces $\beta_\epsilon$ but does not affect $\alpha_\epsilon$. For example, each new arm added can be a bad arm that is closest to the threshold. 
In such restrictive settings the upper bound above can be significantly larger than the lower bound.
% by a factor of $O(n)$. 
In the next section, we provide an algorithm that overcomes these issues and is optimal over all parameter regimes. The \multiplicative case has different terms but follows the same intuition.

\begin{theorem}[\blue{Multiplicative Case}]\label{thm:mult_st2_complexity}
Fix $\epsilon \in (0,1/2]$, $\gamma \in [0,\min(16/\mu_1, 1/2)]$ and  $0 < \delta \leq 1/2$ and an instance $\nu$ such that $\mu_1 \geq 0$ and $\max(\Delta_i, |\epsilon\mu_1 - \Delta_i|) \leq 2$ for all $i$. 
% In the case that $M_\epsilon = [n]$, let $\tilde{\alpha}_\epsilon = \min(\tilde{\alpha}_\epsilon, \tilde{\beta}_\epsilon)$. 
With probability at least $1-\delta$, 
for a  constant $c_1$
\st2 returns a set $G$ such that $M_\epsilon \subset G \subset M_{(\epsilon+\gamma)}$ %in at most
with sample complexity:
\begin{align*}
c_1\log\left(\frac{n}{\delta}\right)\sum_{i=1}^n  \max\left\{\frac{1}{\left((1-\epsilon)\mu_1 - \mu_i \right)^{2}}, \frac{1}{ (\mu_1 +\frac{\Tilde{\alpha}_\epsilon}{1-\epsilon} - \mu_i)^2}, \frac{1}{ (\mu_1 +\frac{\Tilde{\beta}_\epsilon}{1-\epsilon} - \mu_i)^2} \right\} \wedge \frac{1}{\gamma^2\mu_1^2}.
\end{align*}
%samples for a sufficiently large constant $c_1$. 
\end{theorem}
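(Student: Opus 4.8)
The plan is to reduce the multiplicative analysis to the additive one by treating the multiplicative threshold $(1-\epsilon)\mu_1$ as an (a priori unknown) additive threshold and carefully tracking how the multiplicative rescaling distorts the confidence widths. First I would establish the ``good event'' $\mathcal{E}$ on which $|\widehat{\mu}_i(t) - \mu_i| \le C_{\delta/n}(T_i(t))$ simultaneously for all arms $i$ and all times $t$; by a union bound over the $n$ arms together with the anytime guarantee $\P(\bigcup_t |\widehat\mu_i(t)-\mu_i| > C_{\delta/n}(t)) \le \delta/n$, this event holds with probability at least $1-\delta$. On $\mathcal{E}$, correctness (i.e.\ $M_\epsilon \subset \widehat{G} \subset M_{\epsilon+\gamma}$ at the stopping time) follows the same way as in the additive case: the constructed bounds satisfy $L_t \le (1-\epsilon)\mu_1$ and $U_t \ge (1-\epsilon-\gamma)\mu_1$, so any arm placed in $K$ via $\widehat\mu_i - C_{\delta/n} > U_t$ really has $\mu_i \ge (1-\epsilon-\gamma)\mu_1$, and any arm with $\mu_i \ge (1-\epsilon)\mu_1$ can never be excluded. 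I would verify that the hypotheses $\epsilon \le 1/2$, $\mu_1 \ge 0$ guarantee $L_t, U_t \ge 0$ eventually, so the multiplicative factors $(1-\epsilon)$, $(1-\epsilon-\gamma)$ don't flip signs.

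The core of the argument is the sample-complexity bound. For each arm $i$ I would argue that the number of pulls $T_i(\tau)$ is controlled by the confidence width needed to place $i$ into the ``known'' set $K$. An arm enters $K$ once its confidence interval $[\widehat\mu_i - C_{\delta/n}(T_i), \widehat\mu_i + C_{\delta/n}(T_i)]$ lies entirely above $U_t$ or entirely below $L_t$. On $\mathcal{E}$, since $L_t$ converges to $(1-\epsilon)\mu_1$ from below and $U_t$ converges to $(1-\epsilon-\gamma)\mu_1$ from above, and $L_t$ itself is built from the best empirical arm which (running UCB-style sampling) is pulled enough that $\max_j(\widehat\mu_j(T_j) - C_{\delta/n}(T_j))$ closes in on $\mu_1$, the ``effective gap'' that arm $i$ must resolve is within a constant factor of $\min\{|(1-\epsilon)\mu_1 - \mu_i|, \text{(distance reflecting $\tilde\alpha_\epsilon$)}, \text{(distance reflecting $\tilde\beta_\epsilon$)}\}$. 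Inverting $C_{\delta/n}(t) = \sqrt{c_\phi \log(\log_2(2t)/\delta)/t} \asymp \Delta$ gives $t \asymp \Delta^{-2}\log(n/\delta)$. The key bookkeeping step is that $L_t = (1-\epsilon)(\max_j \widehat\mu_j - C_{\delta/n}(T_j))$, so an additive error of size $c$ in estimating $\mu_1$ produces an error of size $(1-\epsilon)c$ in $L_t$; pushing this through one sees the threshold must be estimated to accuracy $\tilde\alpha_\epsilon/(1-\epsilon)$ on the $\mu_1$-side, which is exactly why the surrogate terms appear as $\mu_1 + \frac{\tilde\alpha_\epsilon}{1-\epsilon} - \mu_i$ and $\mu_1 + \frac{\tilde\beta_\epsilon}{1-\epsilon} - \mu_i$ rather than $\mu_1 + \tilde\alpha_\epsilon - \mu_i$. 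Finally, the slack $\gamma$ means we never need a confidence width below $\Omega(\gamma \mu_1)$ on any arm (since $U_t$ and $L_t$ are separated by $\gamma\mu_1$ in the limit), producing the $\frac{1}{\gamma^2\mu_1^2}$ cap; summing the per-arm bounds over $i \in [n]$ yields the stated expression.

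The main obstacle I expect is making the ``effective gap is within a constant of the $\min$'' claim rigorous for arms near the best. Because $L_t$ depends on the running estimate of $\mu_1$ via the empirically-best arm — and that arm is only pulled by the UCB component $i^\ast(t)$ — one must show the UCB sampling allocates enough pulls to the top arm(s) before stopping, uniformly over instances. The subtlety is circular: how well $\mu_1$ is estimated controls when arms get absorbed into $K$, which controls whether the $i_1, i_2$ pulls continue, which (via the $\widehat G$ membership) controls whether further $i^\ast$ pulls target the right arm. I would resolve this by a potential/amortization argument: bound the total pulls by $\sum_i (\text{pulls to resolve } i \text{ against a } \Theta(\min\{\dots\}) \text{ gap})$, using that on $\mathcal{E}$ the quantities $U_t, L_t$ are monotone-ish and trapped in shrinking intervals around the true thresholds, and that whenever an arm is pulled as $i_1$ or $i_2$ it is still ``confusable'' with the threshold so its current width exceeds its target width — the standard LUCB-style accounting — while the $i^\ast$ pulls are charged to the best arm's own term $\frac{1}{\alpha_\epsilon^2}$-type quantity via the UCB regret bound. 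The multiplicative rescaling only enters as bounded constants $(1-\epsilon)^{\pm 1} \in [1, 2]$ throughout, so no new analytic difficulty arises beyond the additive case once the gap identification is in place; it is mainly a matter of carefully propagating the $(1-\epsilon)$ factors through the definitions of $U_t$, $L_t$, $\tilde\alpha_\epsilon$, $\tilde\beta_\epsilon$.
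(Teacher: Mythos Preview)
Your proposal is broadly correct and aligns with the paper's proof: the good event, the correctness argument via $L_t \le (1-\epsilon)\mu_1$ and $U_t \ge (1-\epsilon-\gamma)\mu_1$, the $(1-\epsilon)$ rescaling that produces the $\tilde\alpha_\epsilon/(1-\epsilon)$ and $\tilde\beta_\epsilon/(1-\epsilon)$ terms, and the $\gamma\mu_1$ cap are all exactly as in the paper.

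Where your sketch and the paper diverge is in how the circularity you flag is actually resolved. The paper does \emph{not} invoke a potential/amortization argument or any ``monotone-ish'' behavior of $U_t,L_t$ (they are not monotone). Instead it introduces the scalar $\omega := \max\{\gamma\mu_1,\min(\tilde\alpha_\epsilon,\tilde\beta_\epsilon)\}$ and partitions the rounds $\{t:\neg\text{STOP}\}$ into five explicit pieces according to (i) whether $i^\ast(t)\in M_{\omega/\mu_1}$, (ii) whether $C_{\delta/n}(T_{i^\ast})\le \omega/(16(1-\epsilon))$, and (iii) whether $i_1(t),i_2(t)$ lie on the ``correct'' side of $M_{\epsilon+\gamma/2}$. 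Each piece is bounded by a direct confidence-width calculation: once $i^\ast$ has confidence width below $\omega/(16(1-\epsilon))$ one can explicitly sandwich $L_t$ and $U_t$ within $O(\omega)$ of $(1-\epsilon)\mu_1$, and this is precisely what breaks the circular dependence between the threshold estimate and the arm classification. The five bounds are then summed, and an elementary inequality $\min\{h(a,\delta),h(b,\delta)\}\le h((a+b)/2,\delta)$ collapses the $\min\{\Delta_i,\min(\tilde\alpha_\epsilon,\tilde\beta_\epsilon)/(1-\epsilon)\}$ terms into the $\mu_1+\tilde\alpha_\epsilon/(1-\epsilon)-\mu_i$ and $\mu_1+\tilde\beta_\epsilon/(1-\epsilon)-\mu_i$ forms in the statement. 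Your high-level narrative is right, but the paper's explicit case decomposition is what carries the argument through, with no need for monotonicity or a separate potential function.
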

\section{Surprising Complexity of Finding All $\epsilon$-Good arms}\label{sec:fareast_body}

When $\alpha_\epsilon$ and $\beta_\epsilon$ are not of the same order, \st2 is not optimal. In this section we present an algorithm that is optimal for all parameter regimes. We focus on the \additive case here, and defer the \multiplicative case to Appendix~\ref{sec:fareast}. 
We first state an improved sample complexity lower bound for a family of problem instances that makes explicit the \emph{moderate confidence} terms.
\begin{theorem}\label{thm:lower_mod_conf}
Fix $\delta \leq 1/16$, $n \geq 2/\delta$, and $\epsilon > 0$. 
Let $\nu$ be an instance of $n$ arms such that the $i^\text{th}$ is distributed as ${\cal N}(
\mu_i, 1)$, 
$|G_{2\beta_\epsilon}| = 1$, and $\beta_\epsilon < \epsilon/2$. % Let $\nu$ be an instance such that $\mu_1 = 2\beta$, $\mu_2 = \cdots = \mu_{n-1} = 0$, and $\mu_n = \beta - \epsilon$. 
Select a permutation $\pi:[n] \rightarrow [n]$ uniformly from the set of $n!$ permutations, and consider the permuted instance $\pi(\nu)$. Any algorithm that returns $G_\epsilon(\pi(\nu))$ on $\pi(\nu)$ correctly with probability at least $1-\delta$ requires at least 
the following number of samples in expectation over randomness in $\nu$ and $\pi$ for a universal constant $c_2$.
\begin{align}
\label{eq:lower_mod_conf}
\left[c_2\sum_{i=1}^n \max \left\lbrace \frac{1}{\left(\mu_1 - \epsilon - \mu_i \right)^{2}}, \frac{1}{\left(\mu_1 + \alpha_\epsilon - \mu_i \right)^{2}} \right\rbrace \log \left(\frac{1}{2.4\delta}\right)\right] + c_2\sum_{i=1}^n\frac{1}{(\mu_1 + \beta_\epsilon - \mu_i)^2}
\end{align}
% samples in expectation over the randomness in $\nu$ and $\pi$ for a universal constant $c_2$.
\end{theorem}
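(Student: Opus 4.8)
The plan is to combine the standard change-of-measure lower bound (which already yields the bracketed term, since this is just Theorem~\ref{thm:inst_lower} specialized to these instances) with a separate combinatorial-search argument that produces the additive $c_2\sum_i (\mu_1+\beta_\epsilon-\mu_i)^{-2}$ term. The first part is free: the bracketed quantity is exactly the bound of Theorem~\ref{thm:inst_lower} up to constants, so any $\delta$-PAC algorithm must pay it on every instance in the family, in particular on the random permuted instance $\pi(\nu)$. The real content is showing that the \emph{moderate-confidence} term is \emph{also} necessary, and that it adds to — rather than is dominated by — the high-confidence term; the hypotheses $|G_{2\beta_\epsilon}|=1$ and $\beta_\epsilon<\epsilon/2$ are what force this separation, since they guarantee a unique arm (arm~$1$) sits alone near the top while all the action that determines membership near the threshold happens a distance $\approx\beta_\epsilon$ away, and $2\beta_\epsilon<\epsilon$ keeps these two scales from collapsing.

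For the moderate-confidence term I would argue as follows. Because the instance is randomly permuted, the algorithm has no prior information about which index is arm~$1$. I would construct an auxiliary family of instances: starting from $\nu$, consider perturbing a single non-best arm $j$ so that its mean is raised just across the threshold (or the best arm's identity is swapped), producing an instance $\nu_j$ that differs from $\nu$ only at coordinate $j$ and has a \emph{different} correct answer $G_\epsilon$. The KL-divergence between $\nu$ and $\nu_j$ restricted to arm $j$ is $\Theta((\mu_1+\beta_\epsilon-\mu_j)^{-2})$ per pull, by the Gaussian KL formula, since the perturbation of arm $j$'s mean needed to change the answer is of order its distance to the relevant threshold. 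A correct algorithm must, with probability $\geq 1-\delta$, distinguish $\nu$ from $\nu_j$, so by the transportation / Bretagnolle–Huber inequality it must pull arm $j$ at least $\Omega((\mu_1+\beta_\epsilon-\mu_j)^{-2}\log(1/\delta))$ times on one of the two — but to get the \emph{union} over all $j$ without an extra $\log(1/\delta)$ factor and with only a $\log$-free sum, I would instead use the permutation: averaging over $\pi$, an algorithm that samples some arm too few times will, on a constant fraction of permutations, have that under-sampled arm be precisely the one whose perturbation matters, and will then err with constant probability. This is the Simulator / symmetrization style argument of \cite{simchowitz2017simulator, mannor2004sample}; with $n\geq 2/\delta$ the averaging over $n!$ permutations dominates the $\delta$ failure budget, so the bound holds in expectation over $\nu$ and $\pi$ and the $\log(1/\delta)$ drops out, leaving exactly $c_2\sum_i(\mu_1+\beta_\epsilon-\mu_i)^{-2}$.

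Concretely the steps are: (i) invoke Theorem~\ref{thm:inst_lower} on the family to get the bracketed term; (ii) set up the single-coordinate alternative instances $\nu_j$ and compute their per-pull KL to $\nu$, verifying the $(\mu_1+\beta_\epsilon-\mu_j)^{-2}$ scaling using $\beta_\epsilon<\epsilon/2$ and $|G_{2\beta_\epsilon}|=1$; (iii) run the permutation-symmetrization argument to convert ``under-sampling arm $j$'' into ``constant failure probability on a constant fraction of permutations,'' using $n\geq 2/\delta$ to absorb the failure probability and remove the $\log(1/\delta)$; (iv) add the two lower bounds — legitimate because the high-confidence argument's hard instances and the moderate-confidence argument's hard instances stress disjoint aspects, so a standard ``the algorithm must pay for both simultaneously'' composition (as in \cite{mannor2004sample}) applies, possibly at the cost of halving $c_2$. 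The main obstacle I expect is step (iii)/(iv): making the symmetrization rigorous so that the $n$-dependent term is genuinely additive and $\log(1/\delta)$-free rather than merely a lower-order artifact, and cleanly arguing that it does not get swallowed by the high-confidence term — this is where the conditions $|G_{2\beta_\epsilon}|=1$, $\beta_\epsilon<\epsilon/2$, and $n\geq 2/\delta$ all have to be used in concert, and where a naive union bound would lose exactly the factor the theorem is trying to save.
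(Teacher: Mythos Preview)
Your high-level split is right: the bracketed term comes straight from Theorem~\ref{thm:inst_lower}, and the additive moderate-confidence term requires a Simulator-style permutation argument. But your step~(iii) is where the real work is, and the paper's route is substantially different from (and more intricate than) what you sketch. You propose to run the Simulator directly on the \alleps problem with single-coordinate perturbations $\nu_j$. This is problematic because the output of \alleps is a \emph{set}, not an index: swapping arm~$1$ with arm~$j$ (the natural Simulator move) does not change the answer when $j\in G_\epsilon$, and raising $\mu_j$ past $\mu_1+\beta_\epsilon$ may or may not alter $G_\epsilon$ depending on which other arms sit near the threshold. So the ``under-sampled arm becomes the one whose perturbation matters'' heuristic does not cleanly transfer.

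The paper instead goes through a three-step reduction chain. First, it applies the Simulator to the clean \emph{isolated-arm identification} problem (one arm at $+\beta$, the rest at $\leq -\beta$), where the output is a single index and the swap argument works verbatim, yielding $\Omega(\sum_{k\neq i^\ast}\Delta_{i^\ast,k}^{-2})$. Second --- and this is the novel step you are missing --- it shows that a composite hypothesis test ($H_0$: all means $<-\beta$ vs.\ $H_1$: exactly one mean $=+\beta$) is at least as hard when $H_1$ holds, via a \emph{binary search} reduction: if the hypothesis test were too cheap, one could recursively bisect the arm set and locate the isolated arm in $\log_2 n$ rounds with geometrically shrinking cost, contradicting the Step~1 bound. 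Third, it reduces \alleps to this hypothesis test by a \emph{sentinel-arm} construction: pick a uniformly random index $\hat{i}$, replace its distribution by $\mathcal{N}(c-\epsilon,1)$, shift the remaining arms by $c$, and run the \alleps algorithm; whether the sentinel lands in the returned set distinguishes $H_0$ from $H_1$. The condition $n\geq 2/\delta$ enters exactly here --- to guarantee $\hat{i}\neq\pi(1)$ with probability $\geq 1-\delta/2$ --- not, as you suggest, to let the permutation averaging ``dominate the $\delta$ failure budget.'' The hypotheses $|G_{2\beta_\epsilon}|=1$ and $\beta_\epsilon<\epsilon/2$ ensure the shifted instance has the right isolated structure and that the sentinel flips sides between $H_0$ and $H_1$. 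Finally the paper bounds $\sum_{i\geq 2}\Delta_i^{-2}\geq \sum_i(\mu_1+\beta_\epsilon-\mu_i)^{-2}$ and averages with Theorem~\ref{thm:inst_lower} to get both terms. Your proposal lacks both the hypothesis-test intermediary and the binary-search reduction, which are the technical heart of the argument.
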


Theorem~\ref{thm:lower_mod_conf} states that an additional $O(\sum_{i=1}^n (\mu_1 + \beta_\epsilon - \mu_i)^{-2})$ samples are necessary for instances where no arm is within $2\beta_\epsilon$ of $\mu_1$ compared to the lower bound Theorem~\ref{thm:inst_lower}. 
Somewhat surprisingly, these samples are \emph{necessary in moderate confidence}, independent of $\delta$ and negligible as $\delta \rightarrow 0$. For non-asymptotic values of $\delta$, such as 
the common choice of $\delta = .05$ in scientific applications, this term is present and can even dominate the sample complexity when $\beta_\epsilon \ll \alpha_\epsilon$. As an extreme example, if $\mu_1 = \beta > 0$, $\mu_2\cdots, \mu_{n-1} = -\beta, \mu_n = -\epsilon$, the first term in \ref{eq:lower_mod_conf}  scales like $((n-1)/\epsilon^2 + 1/\beta^2)\log(1/\delta)$ but the second term scales like $n/\beta^2$, which is $O(n)$ larger than the first term for small $\beta$ and fixed $\delta$. Furthermore, we point out that Theorem~\ref{thm:lower_mod_conf} highlights that \st2 is optimal on isolated instances up to a $\log$ factor! 
The algorithm we present next, \fareast, improves \st2's dependence on $\delta$ and matches the lower bound in Theorem~\ref{thm:lower_mod_conf} for certain instances. 
% at the cost of increased constant factors. In particular, for small values of $\delta$ or large numbers of arms, it can achieve lower sample complexity. 
% matches the lower bound in Theorem~\ref{thm:lower_mod_conf} for that problem instance. The sample complexity of \fareast (Theorem~\ref{thm:fareast_complex}) has the same two terms as in the lower bound, and it makes explicit the terms that become negligible in comparison to other terms as $\delta \rightarrow 0$.
Though moderate confidence terms can dominate the sample complexity in practice, few works have focused on understanding their effect. To prove this theorem we apply the Simulator technique from \cite{simchowitz2017simulator}, other works that prove strong lower bounds in moderate confidence include \cite{chen2017nearly}. We extend the simulator technique via a novel reduction to composite hypothesis testing to prove this bound.

\subsection{\fareast}
%In algorithm~\ref{alg:fareast_body}, we present an algorithm that overcomes this issue. 
%Building on this, we propose a new algorithm, \fareast, that matches the preceding lower bound for fixed $\delta$ and the instance dependent lower bound in Theorem~\ref{thm:inst_lower}, asymptotically as $\delta \rightarrow 0$. This highlights the moderate confidence terms that arise in \alleps.
We focus on the \additive case with $\gamma = 0$ in Algorithm~\ref{alg:fareast_body}, \fareast, and defer the more general case (\multiplicative and $\gamma > 0$) to
%that includes the \multiplicative case with $\gamma \geq 0$ to 
Algorithm~\ref{alg:fareast} in 
the supplementary. 
\fareast matches the instance dependent lower bound in Theorem~\ref{thm:inst_lower} as $\delta \rightarrow 0$.
At a high level, \fareast (\textbf{F}ast \textbf{A}rm \textbf{R}emoval \textbf{E}limination \textbf{A}lgorithm for a \textbf{S}ampled \textbf{T}hreshold) proceeds in rounds $r$ and maintains sets $\widehat{G}_r$ and $\widehat{B}_r$ of arms thus far declared to be good or bad. It sorts unknown arms into either set through use of a good filter to detect arms in $G_\epsilon$ and a bad filter to detect arms in $G_\epsilon^c$. 

\noindent\textbf{Good Filter:} The good filter is a simple elimination scheme. 
It maintains an upper bound $U_t$ and lower bound $L_t$ on $\mu_1 - \epsilon$. 
If an arm's upper bound drops below $L_t$ (line 20), the good filter eliminates that arm, otherwise, if an arm's lower bound rises above $U_t$ (19), the good filter adds the arm to $\widehat{G}_r$, but only eliminates this arm if its upper bound falls below the highest lower bound. 
This ensures that $\mu_1$ is never eliminated and $U_t$ and $L_t$ are always valid bounds
\footnote{This scheme works as an independent algorithm, %and achieves the sample complexity as \st2, though worse empirical performance. 
we analyze it in Appendix~\ref{sec:appendix_east}.}. 
As the sampling is split across rounds, the good filter always samples the least sampled arm, breaking ties arbitrarily.
% , and elimination only occurs when all arms have received equal number of samples. 
The number of samples given to the good filter in each round is such that both filters receive identically many samples. 
This prevents the good filter from over-sampling bad arms and vice versa. 
In our proof we show that in
an unknown round,
%some round, unknown to the algorithm, 
$\widehat{G}_r = G_\epsilon$, ie all good arms have been found,
having used fewer than
%and this takes no more than
$O\left(\sum_{i=1}^n \max\left\{(\mu_1 {-} \epsilon {-} \mu_i)^{-2}, (\mu_1 {+} \alpha_\epsilon {-} \mu_i)^{-2} \right\}\log(n/\delta)\right)$ samples, matching the lower bound. 

\fareast cannot yet terminate, however, as it must also verify that any remaining arms are in $G_\epsilon^c$.

\noindent\textbf{Bad Filter:} The bad filter removes arms that are not $\epsilon$-good. 
To show an arm $i$ is in $G_\epsilon^c$, it suffices to find any $j$ such that $\mu_j - \mu_i > \epsilon$. To motivate the idea of lines 9-12, consider the following procedure in the  special case where $\beta_i = \mu_1 - \epsilon - \mu_i$ is known. In each round we first run \texttt{Median-Elimination}, \cite{even2002pac}, with failure probability $1/16$, to find an arm $\hat{i}$ that is $\beta_i/2$-good in $O(n/\beta_i^2)$ samples\footnote{%The use of 
\texttt{Median-Elimination} is
used
for ease of analysis. One can use \texttt{LUCB}\cite{kalyanakrishnan2012pac} or another method instead.}. 
We then pull both $i$ and $\widehat{i}$ roughly $O(1/\beta_i^2\log(1/\delta))$ times and can check whether $\mu_{\hat{i}} - \mu_i > \epsilon$ with probability greater than $1-\delta$. 
% and a total sample complexity of $O(n/\beta_i^2 + 1/\beta_i^2\log(1/\delta))$. 
%In particular, given $\widehat{i}$, verifying that with high probability $\mu_{\widehat{i}} - \mu_i > \epsilon$, can be done in $O(n)$ fewer samples than finding a $\beta_i/2$-good arm with high probability. 
This procedure relies on \texttt{Median-Elimination} succeeding, which happens with probability $15/16$. 
In the case that it fails and we declare $\mu_{\widehat{i}} - \mu_i < \epsilon$, we merely repeat this process until it succeeds-- on average $O(1)$ times. 
This gives an expected sample complexity of $O(n/\beta_i^2 + 1/\beta_i^2\log(1/\delta))$ for any $i \in G_\epsilon^c$.  
Of course, $\beta_i$ is unknown to the algorithm.
Instead, in each round $r$, the bad filter guesses that $\beta_i \geq 2^{-r}$ for all unknown arms $i \notin \widehat{G}_r \cup \widehat{B}_r$ and performs the above procedure. 
%For $k > \log_2(1/\beta_i)$, the filter succeeds with probability roughly $1-\kappa$ in each round.
%As the good filter adds arms to $G_k$, eventually the only remaining arms are in $G_\epsilon^c$. 
%The bad filter then needs $O\left(\sum_{i\in G_\epsilon^c} n(\epsilon - \Delta_i)^{-2} + (\epsilon - \Delta_i)^{-2}\log(n/\delta)\right)$ samples to add the remaining arms to $B_k$ leading to termination. Combining this with the complexity of finding all good arms leads to the complexity bound given in Theorem~\ref{thm:fareast_complex} and this is asymptotically optimal. 
The following theorem demonstrates that this algorithm matches our lower bounds asymptotically as $\delta\rightarrow 0$. 
\begin{theorem}\label{thm:fareast_complex}
Fix $0 <\epsilon$, $0<\delta < 1/8$, and an instance $\nu$ of $n$ arms  such that $\max(\Delta_i, |\epsilon - \Delta_i|) \leq 8$ for all $i$. There exists an event $E$ such that $\P(E) \geq 1-\delta$ and on $E$, 
\fareast terminates and returns $G_\epsilon$. 
% in no more than a constant times \eqref{eq:st2_complex}. 
% \fareast terminates and returns a set $\widehat{G}$ such that $G_\epsilon \subset \widehat{G} \subset G_{\epsilon + \gamma}$ in no more than a constant times \eqref{eq:st2_complex}. 
Letting T denote the number of samples taken, for a constant $c_3$
%Furthermore,  \texttt{FAREAST} returns $G_\epsilon$ in at most
 \begin{align*}%\label{eq:fareast_complex}
  \E[\1_E T] \leq \left[c_3\sum_{i=1 }^n \max \left\{\frac{1}{(\mu_1 - \epsilon - \mu_i)^2}, \frac{1}{(\mu_1 + \alpha_\epsilon - \mu_i)^2} \right\}\log\left(\frac{n}{\delta}\right)\right] 
 +c_3\sum_{i \in G_{\epsilon}^c} \frac{c''n}{(\mu_1 - \epsilon - \mu_i)^2}.  
 %+ \frac{16c\log\left(\frac{2n}{\delta}\right)}{(\epsilon - \Delta_i)^2}
 \end{align*}
% for a sufficiently large constant $c_4$. Furthermore, where $T$ denotes the number of samples. 
Additionally for $\gamma \leq 16$ \fareast terminates on $E$ and returns a set $\widehat{G}$ such that $G_\epsilon \subset \widehat{G} \subset G_{\epsilon + \gamma}$ in a number of samples no more than a constant times \eqref{eq:st2_complex}, the complexity of \st2.
 \end{theorem}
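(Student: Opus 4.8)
The plan is to analyze the two filters of \fareast separately, bound the sample complexity of each, and then combine. The whole argument is conditioned on a single good event $E$ on which all anytime confidence intervals $C_{\delta/n}(\cdot)$ hold simultaneously for all $n$ arms at all times, which costs probability $\delta$ by a union bound over arms. On $E$, the correctness claim (that \fareast returns $G_\epsilon$, or $\widehat G$ with $G_\epsilon \subset \widehat G \subset G_{\epsilon+\gamma}$ when $\gamma>0$) follows from the design of the good filter: $\mu_1$ is never eliminated so $U_t, L_t$ remain valid bounds on $\mu_1-\epsilon$ at all times, hence no arm of $G_\epsilon$ is ever put in $\widehat B_r$ and no arm of $G_{\epsilon+\gamma}^c$ is ever put in $\widehat G_r$; termination happens exactly when every arm is sorted. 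So the heart of the theorem is the sample-complexity bound $\E[\1_E T]$.

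First I would bound the good-filter samples. The good filter is precisely the elimination scheme analyzed (as a stand-alone algorithm) in Appendix~\ref{sec:appendix_east}, and I would invoke that analysis: on $E$, once $U_t-L_t$ has shrunk below a constant times the relevant gap, every arm $i$ is resolved, so arm $i$ receives $O(\max\{(\mu_1-\epsilon-\mu_i)^{-2},(\mu_1+\alpha_\epsilon-\mu_i)^{-2}\}\log(n/\delta))$ pulls from the good filter — the $\alpha_\epsilon$ term entering because $L_t$ cannot exceed $\mu_1-\epsilon$ by more than the width of arm $1$'s interval, which forces arm $1$ (the arm defining $U_t$) to be pulled $\Omega(\alpha_\epsilon^{-2}\log)$ times, and all other arms are pulled at most a constant factor more often than arm~$1$ by the least-sampled-arm tie-breaking. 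Summing over $i$ gives the first bracketed term. Next, the bad filter: for an arm $i\in G_\epsilon^c$ write $\beta_i=\mu_1-\epsilon-\mu_i>0$. I would show that once the round index $r$ satisfies $2^{-r}\le\beta_i$ — i.e., after $O(\log(1/\beta_i))$ rounds — the per-round subroutine (\texttt{Median-Elimination} at confidence $1/16$ to find a $2^{-r-1}$-good arm $\hat i$ in $O(n\,2^{2r})$ samples, then $O(2^{2r}\log(1/\delta))$ pulls each of $i$ and $\hat i$) succeeds in declaring $i$ bad with probability $\ge 15/16$; the geometric number of repeats contributes only a constant factor in expectation, and since samples roughly double across rounds the dominant round is $r\approx\log_2(1/\beta_i)$, giving expected $O(n\beta_i^{-2} + \beta_i^{-2}\log(1/\delta))$ samples charged to arm $i$. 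Summing over $i\in G_\epsilon^c$ and absorbing the $\log(1/\delta)$ piece into the first bracket (since $\beta_i^{-2}\le\max\{\cdots\}$) yields the $c_3\sum_{i\in G_\epsilon^c} c''n/(\mu_1-\epsilon-\mu_i)^2$ moderate-confidence term.

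A subtlety I would be careful about is the round-synchronization rule: the good filter is throttled so that in each round it receives as many samples as the bad filter. This means the good-filter bound must be re-derived round-by-round rather than run to completion freely, but since the bad filter's per-round budget only grows, the good filter is never starved for more than a constant factor of the samples it would need on its own, and the target round (in which $\widehat G_r = G_\epsilon$) is reached within the claimed budget; I would formalize this by a potential/doubling argument on the round budgets. For the $\gamma>0$ extension I would note that no arm needs a confidence width below $\Omega(\gamma)$: the good filter's elimination tests and the bad filter's round index can both be capped at scale $\gamma$, replacing every $(\cdot)^{-2}$ by $(\cdot)^{-2}\wedge\gamma^{-2}$ and every $\log(1/\beta_i)$ by $\log(1/\gamma)$, which reproduces \eqref{eq:st2_complex} up to constants.

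The main obstacle I anticipate is the interaction between the two filters under the shared-budget rule: showing that the round in which the good filter has correctly identified all of $G_\epsilon$ occurs before the bad filter has spent more than $O(\sum_{i\in G_\epsilon^c}(n\beta_i^{-2}+\beta_i^{-2}\log(1/\delta)))$ samples requires comparing the two filters' per-round sample requirements carefully, because a single arm in $G_\epsilon^c$ with tiny $\beta_i$ can force many rounds, and we must ensure the good filter's (cheap) work is not inflated by being forced to match that many rounds — it is not, because the good filter's total requirement is capped independently of the round count, but making this rigorous is the delicate accounting step. The \texttt{Median-Elimination} failure-and-retry analysis is the other place where care is needed, but that is essentially the argument of \cite{mannor2004sample} adapted to unknown $\beta_i$ via the doubling over $r$, so I expect it to be routine once set up.
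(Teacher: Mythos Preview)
Your overall architecture matches the paper's: analyze the good filter as the stand-alone elimination scheme of Appendix~\ref{sec:appendix_east}, analyze the bad filter arm-by-arm via a geometric (doubling) argument over rounds, and reconcile the two through the shared per-round budget. The paper formalizes your ``potential/doubling argument'' exactly as you anticipate: it introduces the random round $K_{\text{Good}}=\min\{k:G_k=G_\epsilon\}$, shows that the round-$k$ budget is at most $9$ times the round-$(k{-}1)$ budget, and splits the total sample count into (i) rounds up to $K_{\text{Good}}$, bounded by a constant times $\sum_{s\le T_{\max}}|\A(s)|$ which the elimination analysis controls, and (ii) rounds after $K_{\text{Good}}$, where only arms in $G_\epsilon^c$ remain unknown and the per-arm bad-filter bound applies. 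So the ``delicate accounting step'' you flag is handled precisely by this decomposition, not by any deeper idea.

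There is, however, one genuine gap in your correctness argument. You define $E$ as the event that the anytime bounds $C_{\delta/n}(\cdot)$ hold for all arms at all times, and then assert that on $E$ ``no arm of $G_\epsilon$ is ever put in $\widehat B_r$'' because of the good-filter design. But the \emph{bad} filter also adds arms to $\widehat B_r$ (line 12), and it does so using \emph{fresh} batches of $\tau_r$ samples that are not governed by the good filter's anytime intervals. Nothing in your event $E$ prevents the bad filter from drawing an unlucky batch and declaring $\hat\mu_{i_r}-\hat\mu_i\ge\epsilon+2^{-r+1}$ for some $i\in G_\epsilon$. The paper fixes this by enlarging the good event to $\cE_1\cap\cE_2$, where $\cE_2=\bigcap_{i\in G_\epsilon}\bigcap_{k}\{|(\hat\mu_{i_k}(\tau_k)-\hat\mu_i(\tau_k))-(\mu_{i_k}-\mu_i)|\le 2^{-k}\}$ is a Hoeffding event on the bad filter's batches, union-bounded over rounds via $\delta_k=\delta/2k^2$; this is why the good filter runs at level $\delta/2n$ rather than $\delta/n$. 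Without $\cE_2$ your correctness claim fails, and with it the rest of the argument goes through essentially as you outlined. A smaller imprecision: your explanation of the $\alpha_\epsilon$ term (``all other arms are pulled at most a constant factor more often than arm~$1$'') is not how the elimination bound works; in the scheme all active arms are pulled \emph{equally}, and the $\alpha_\epsilon$ dependence enters because a good arm $i$ stays active until $\min(T_i',T_{\max})$, with $T_{\max}\le h(\alpha_\epsilon/4,\delta/2n)$.
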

 
\centerline{
\fbox{\parbox[h]{\linewidth}{\label{alg:fareast_body}
% \internallinenumbers
% \resetlinenumber[1]
\footnotesize
\textbf{Algorithm~\ref{alg:fareast_body}: \additive \texttt{FAREAST} with $\gamma=0$}\\
\begin{internallinenumbers}[1]
\textbf{Input}: $\epsilon$, $\delta$, instance $\nu$\\
Let $\widehat{G}_0 = \emptyset$ be the set of arms declared as good and $\widehat{B}_0=\emptyset$ the set of arms declared as bad. \\
Let $\A = [n]$ be the active set, $N_i=0$ track the total number of samples of arm $i$ by the Good Filter.\\
Let $t = 0$ denote the total number of times that line $16$ is true in the Good Filter.\\
% Let $C_{\delta/2n}(t)$ be an anytime $\delta/2n$-correct confidence width on $t$ samples.\\
% Let $H_{\text{ME}}(n, \epsilon, \kappa) = \lceil c'\frac{n}{\epsilon^2}\log(1/\kappa)\rceil$ be the complexity of \texttt{MedianElimination}.\\
\textbf{for $r=1,2,\cdots$}
\begin{quoting}[vskip=0pt, leftmargin=10pt, rightmargin=0pt]
    Let $\delta_r = \delta/2r^2$, $\tau_r = \left\lceil2^{2r+3}\log \left(\frac{8n}{\delta_r}\right) \right\rceil$, Initialize $\widehat{G}_r = \widehat{G}_{r-1}$ and $\widehat{B}_r = \widehat{B}_{r-1}$ \\
    \textbf{// Bad Filter: find bad arms in \blake{$G_\epsilon^c$}}\\
    Let $i_r = \texttt{MedianElimination}(\nu, 2^{-r}, 1/16)$, sample $i_r$ $\tau_r$ times and compute $\widehat{\mu}_{i_r}$\\
    \textbf{for $i \notin \widehat{G}_{r-1} \cup \widehat{B}_{r-1}$:}
    \begin{quoting}[vskip=0pt, leftmargin=10pt, rightmargin=0pt]
        Sample $\mu_i$ $\tau_r$ times and compute $\widehat{\mu}_{i}$\\
        \textbf{If} \blake{$\widehat{\mu}_{i_r} - \widehat{\mu}_{i}\geq \epsilon + 2^{-r+1}$}\textbf{:} Add $i$ to $\widehat{B}_r$ { \hspace{4.5cm}// \fontfamily{pcr}\selectfont Bad arm detected}
    \end{quoting}
    \textbf{// Good Filter: find good arms in  \blake{$G_\epsilon$}}\\
    \textbf{for $s=1, \cdots, H_{\text{ME}}(n, 2^{-r}, 1/16) + (|(\widehat{G}_{r-1}\cup \widehat{B}_{r-1})^c| + 1)\tau_r$:}
    \begin{quoting}[vskip=0pt, leftmargin=10pt, rightmargin=0pt]
        Pull arm $ I_s \in \arg\min_{j\in \A}\{N_j\}$ and set $N_{I_s} \leftarrow N_{I_s} + 1$.\\% updating $S_{I_s}$ and $N_{I_s}$\\
    \textbf{if}$\ \min_{j\in \A}\{N_j\} = \max_{j\in \A}\{N_j\}$\textbf{:}
    \begin{quoting}[vskip=0pt, leftmargin=10pt, rightmargin=0pt]
        % For $i\in \A$ denote $\widehat{\mu}_i(t)$ the average of the first $t$ samples of arm $i$.\\ 
        Update  $t = t+1$. Let \blake{$U_t = \max_{j\in \A} \widehat{\mu}_i(t) + C_{\delta/2n}(t) - \epsilon$} and \blake{$L_t = \max_{j\in \A}\widehat{\mu}_i(t) - C_{\delta/2n}(t) - \epsilon$}\\
        \textbf{for$\ i \in \A$:}
            \begin{quoting}[vskip=0pt, leftmargin=10pt, rightmargin=0pt]
            %\widehat{\mu}_i(t) - C_{\delta/2n}(t) \geq U^\ast
                \textbf{if $\widehat{\mu}_i(t) - C_{\delta/2n}(t) \geq U_t$:} Add $i$ to $\widehat{G}_r$ { \hspace{3.8cm}// \fontfamily{pcr}\selectfont Good arm detected}\\
                \textbf{if $\widehat{\mu}_i(t) + C_{\delta/2n}(t) \leq L_t$:} Remove $i$ from $\A$ and add $i$ to $\widehat{B}_r${ \hspace{1.1cm}// \fontfamily{pcr}\selectfont Bad arms removed}\\
                \textbf{if $i\in \widehat{G}_r \textbf{ and }\widehat{\mu}_i(t) + C_{\delta/2n}(t) \leq \max_{j\in \A} \widehat{\mu}(t)-C_{\delta/2n}(t)$:}  \hspace{1cm}// {\fontfamily{pcr}\selectfont Good arms removed} 
                \begin{quoting}[vskip=0pt, leftmargin=10pt, rightmargin=0pt]
                    Remove $i$ from $\A$
                \end{quoting}
                \textbf{if} $\A \subset \widehat{G}_r$ or $\widehat{G}_r \cup \widehat{B}_r = [n]$\textbf{:}  \textbf{Return the set $\widehat{G}_r$}
            \end{quoting}
        \end{quoting}
    \end{quoting}
\end{quoting}
\end{internallinenumbers}
}}
}
\resetlinenumber[278]
\section{Empirical Performance}\label{sec:experiments}

We begin by comparing \st2 and \texttt{FAREAST} on simulated data.
%\footnote{See Appendix~\ref{sec:additional_exp_results} for additional experimental results.}  %highlighting regimes when each one has better performance. 
%As discussed in the preceding section, 
\fareast is asymptotically optimal, but suffers worse constant factors compared to \st2. 
%By contrast, 
\st2 %enjoys improved constant factors but 
is optimal \emph{except} when  
$\beta_\epsilon \ll \alpha_\epsilon$.
%$\alpha_\epsilon \leq \beta_\epsilon$ but suboptimal when $\beta_\epsilon < \alpha_\epsilon$, when the closest arm to the threshold is not $\epsilon$-good. In particular, this can lead to \st2 oversampling the arms in $G_\epsilon$ (or $M_\epsilon$). 
We compare \st2 and \fareast on two instances in the \additive case, 
%with $\gamma=0$, 
shown in Figure~\ref{fig:sim_experiment}. All arms are Gaussian with $\sigma = 1$.
% Additionally, we take $\gamma=0$. 
In the first example on the left, $\delta=0.1$, $\alpha_\epsilon = \beta_\epsilon = 0.05$. Both \st2 and \fareast are optimal in this setting; we show the scaling of their sample complexity as the number of arms increases while keeping 
%. We vary the total number of arms but keep
the threshold, $\alpha_\epsilon$, and $\beta_\epsilon$ constant.  
%As the number of arms increases, so too does the sample complexity of both methods with \st2 needing fewer than \fareast in all cases. 
In the second example, $\alpha_\epsilon = \epsilon = 0.99$, and $\beta = 0.01$. When $1/\beta_\epsilon^2 \gg n/\epsilon^2$, Theorem~\ref{thm:inst_lower} suggests that $O(1/\beta_\epsilon^2\log(1/\delta))$ samples are necessary, independent of $n$. Indeed, in Figure~\ref{fig:sim_experiment}, for $\delta = 0.01$, the average complexity of \fareast is constant, but \st2 scales linearly with $n$ as Theorem~\ref{thm:st2_complexity} suggests.  Finally, a naive uniform sampling strategy performed very poorly - additional experiments including the uniform sampling method and with $\gamma > 0$ are in the Appendix~\ref{sec:additional_exp_results}.
%This is not just a contrived setting or excessive number of samples - in practical scenarios such as the New Yorker Contest, 1-2 million ratings a week is reasonable.
\begin{figure}
\centering
\begin{subfigure}{0.48\textwidth}
  \centering
  \begin{subfigure}{.45\textwidth}
  \centering
  \includegraphics[width=\linewidth]{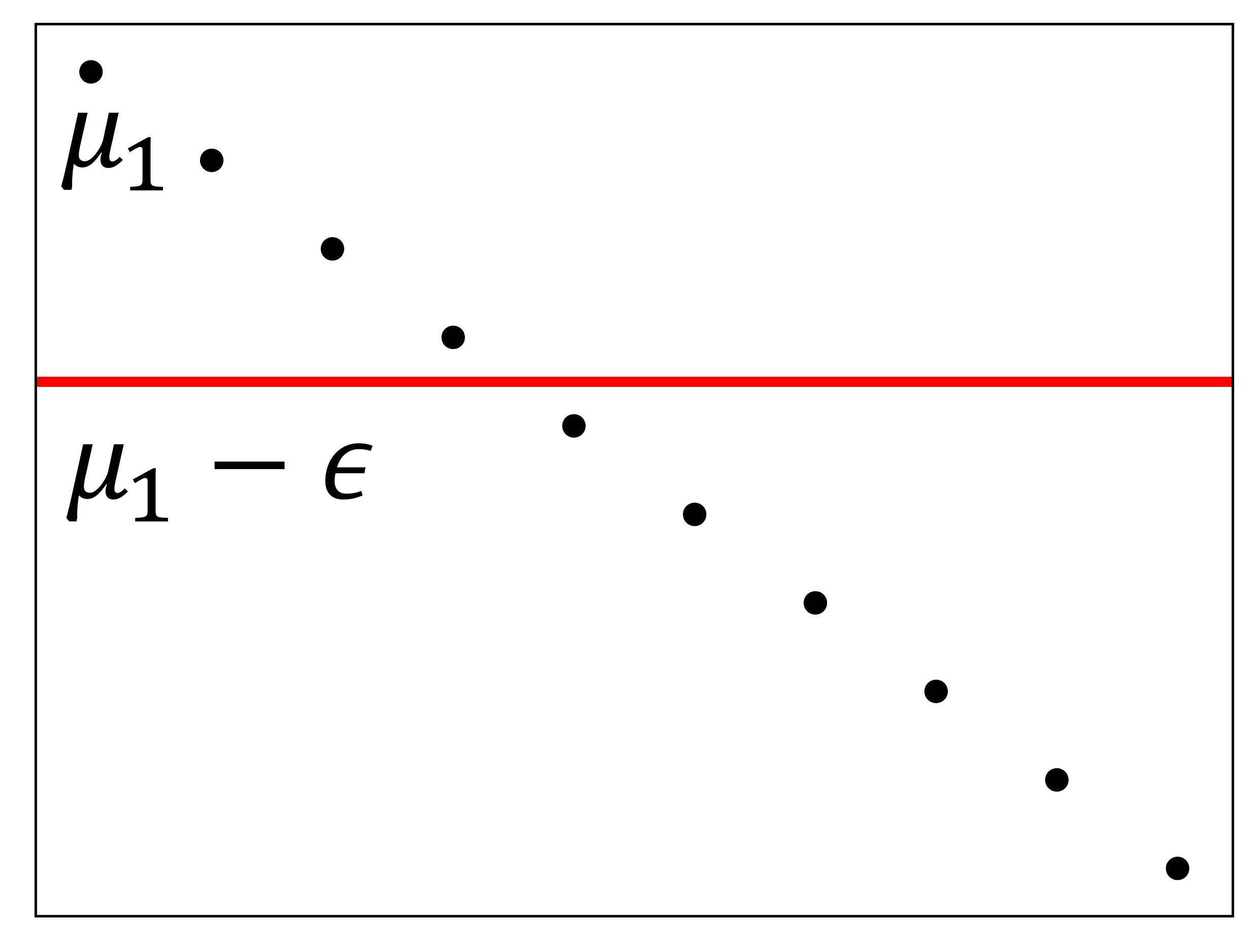}
%   \caption{Easy setting}
%   \vspace{-1em}
\end{subfigure}%
\begin{subfigure}{.53\textwidth}
  \centering
  \includegraphics[width=\linewidth]{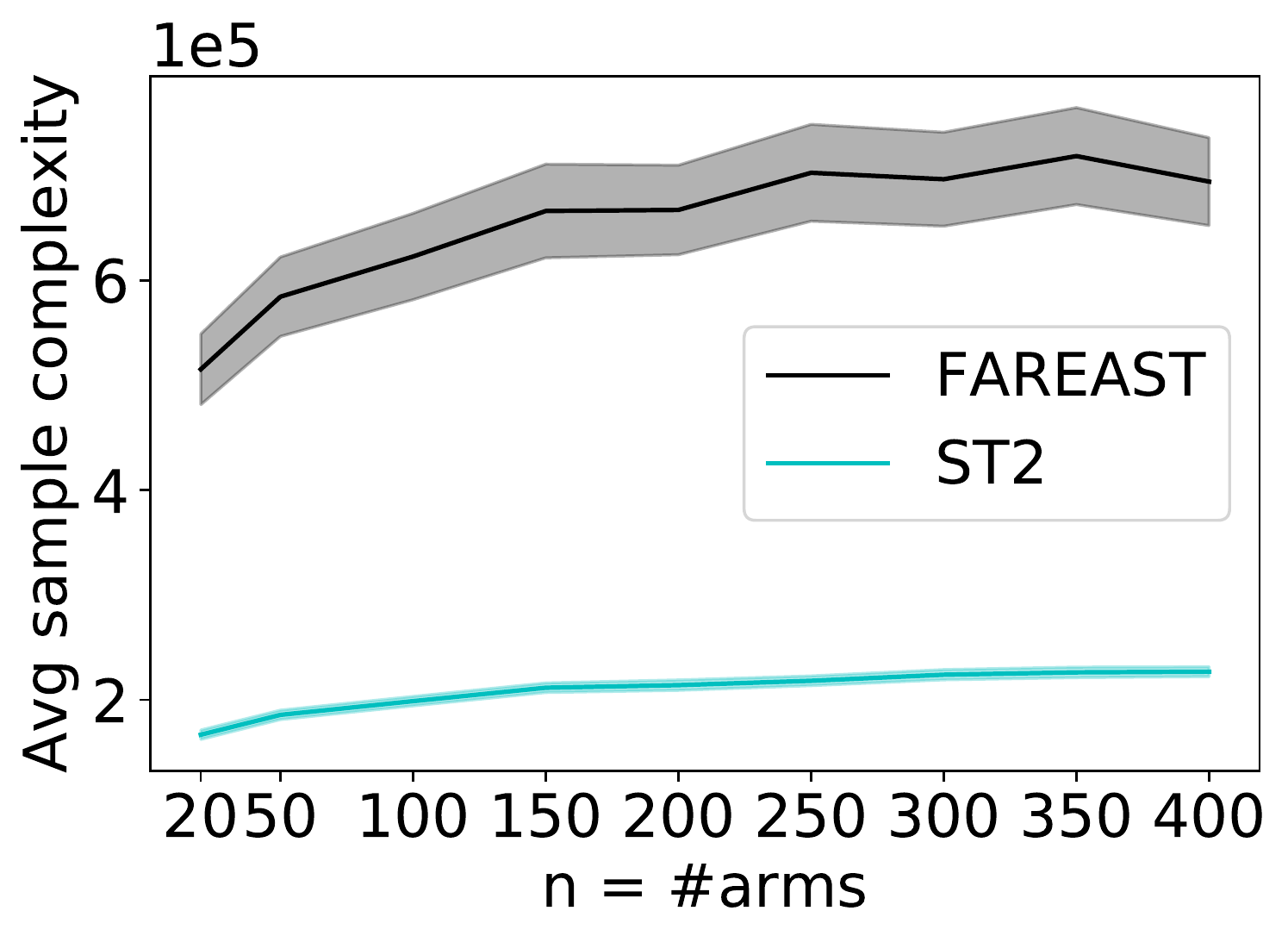}
%   \caption{Easy setting}
%   \vspace{-1em}
\end{subfigure}%
\caption{An typical setting of diverse means (black dots).}
\label{fig:sim_easy}
% \vspace{-1em}
\end{subfigure}
\begin{subfigure}{0.48\textwidth}
  \centering
  \begin{subfigure}{.45\textwidth}
  \centering
  \includegraphics[width=\linewidth]{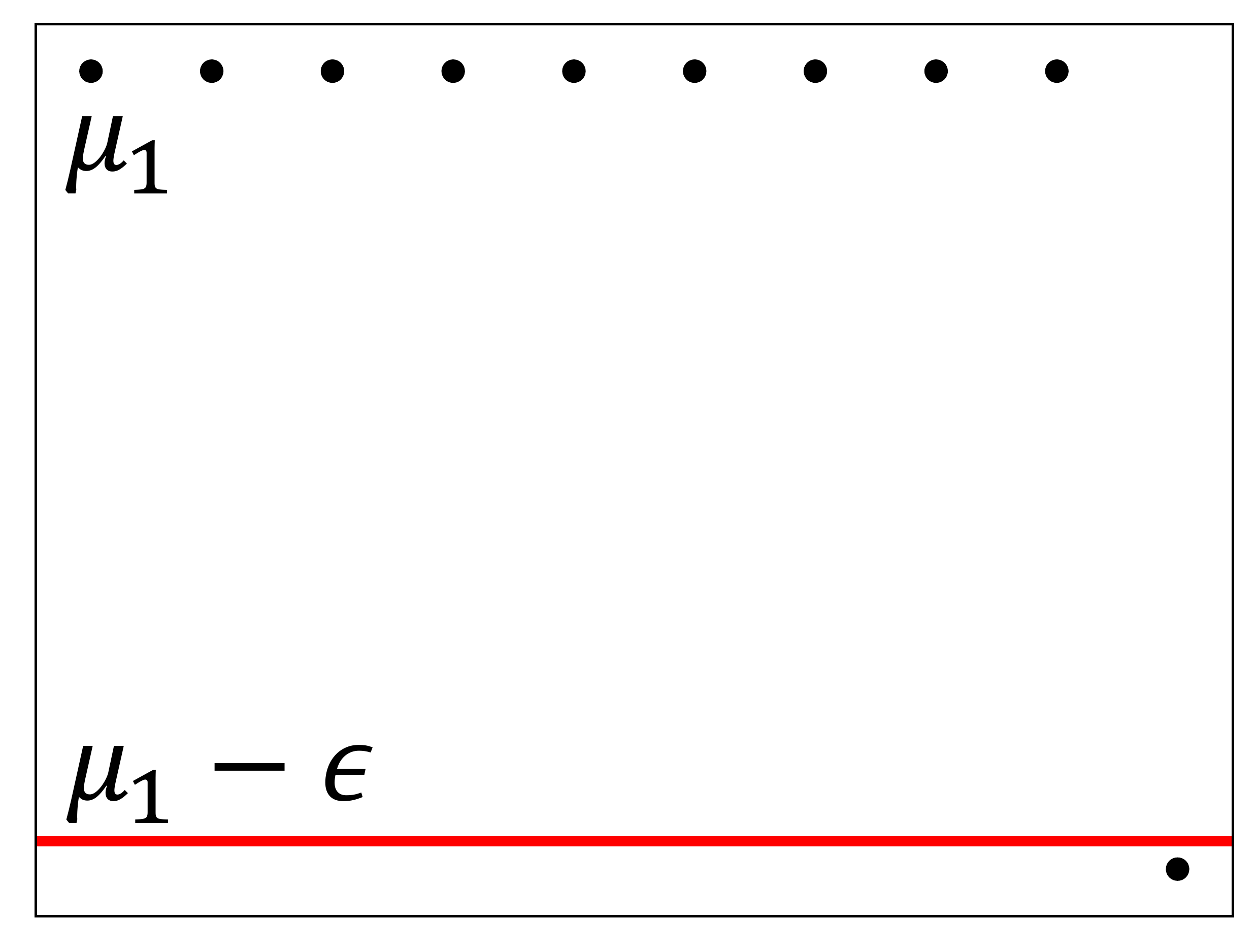}
%   \vspace{-1em}
%   \caption{Hard Setting}
\end{subfigure}
\begin{subfigure}{.53\textwidth}
  \centering
  \includegraphics[width=\linewidth]{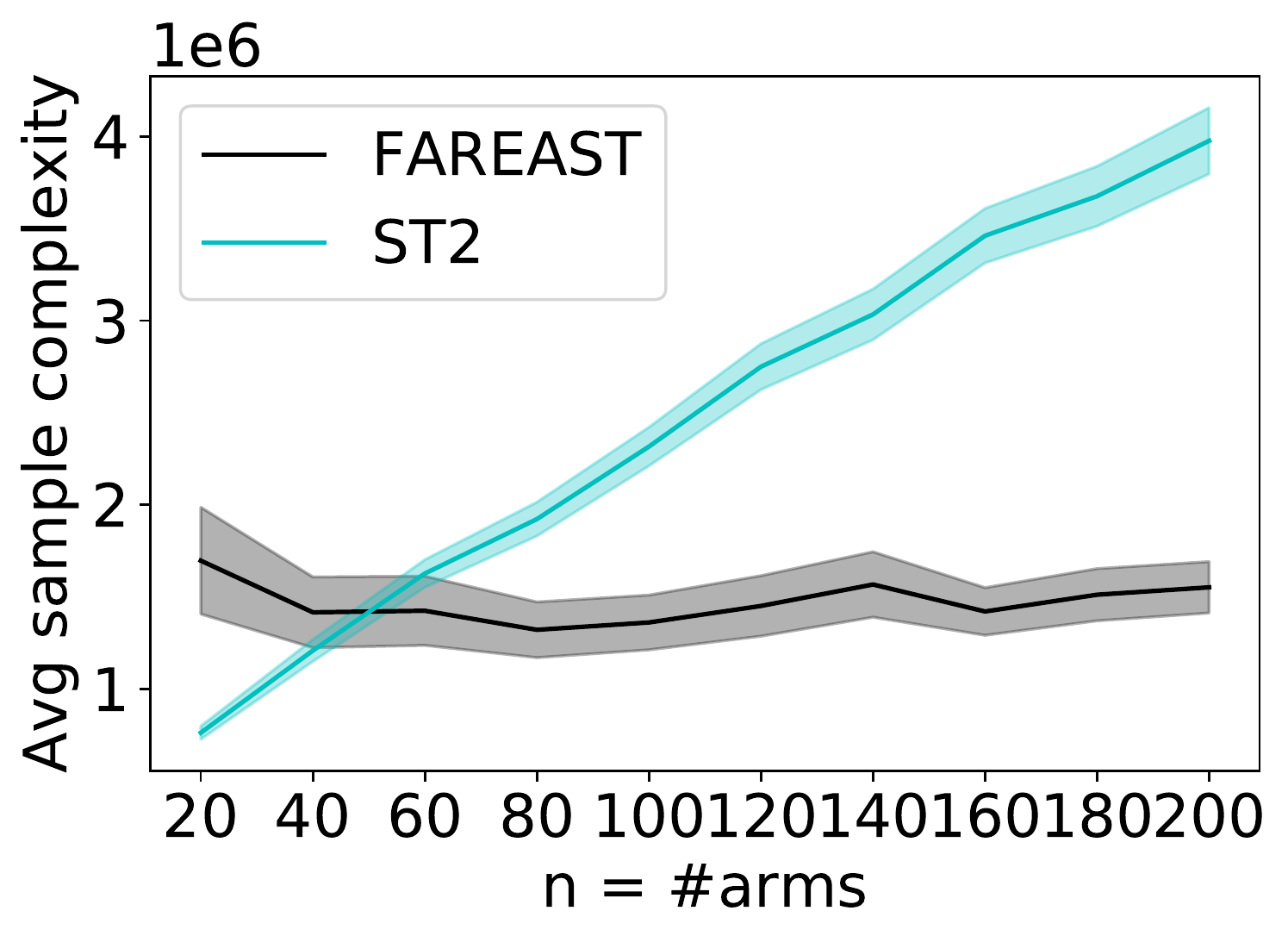}
%   \vspace{-1em}
%   \caption{Hard Setting}
\end{subfigure}
\caption{A more challenging setting}
\label{fig:sim_hard}
\end{subfigure}
% \vspace{-1em}
\caption{Comparison of \st2 and \fareast averaged over $250$ trials plotted with 3 standard errors.}
\label{fig:sim_experiment}
\vspace{-0.5em}
\end{figure}

\subsection{Finding all $\epsilon$-good arms in real world data -- \emph{fast}}
\begin{figure}
\centering
\begin{subfigure}{.45\textwidth}
  \centering
%   \captionsetup{width=1.5\linewidth}
    %\vspace{-1em}
  \includegraphics[width=\linewidth]{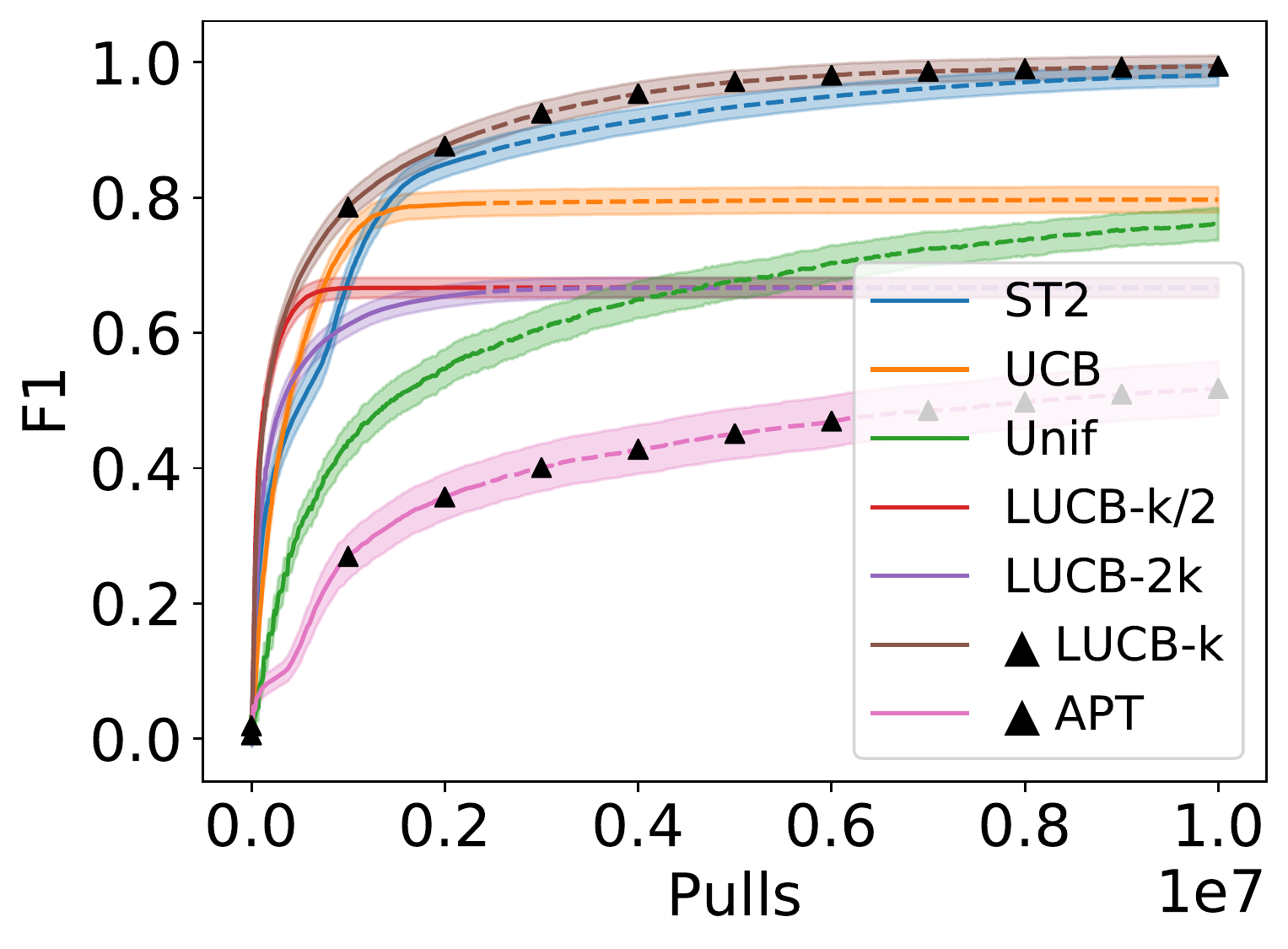}
  \vspace{-1em}
  \caption{NYCCC with $\epsilon=0.1$}
  \label{fig:tny_body}
\end{subfigure}
\begin{subfigure}{.45\textwidth}
  \centering
  \vspace{-0.5em}
  \includegraphics[width=\linewidth]{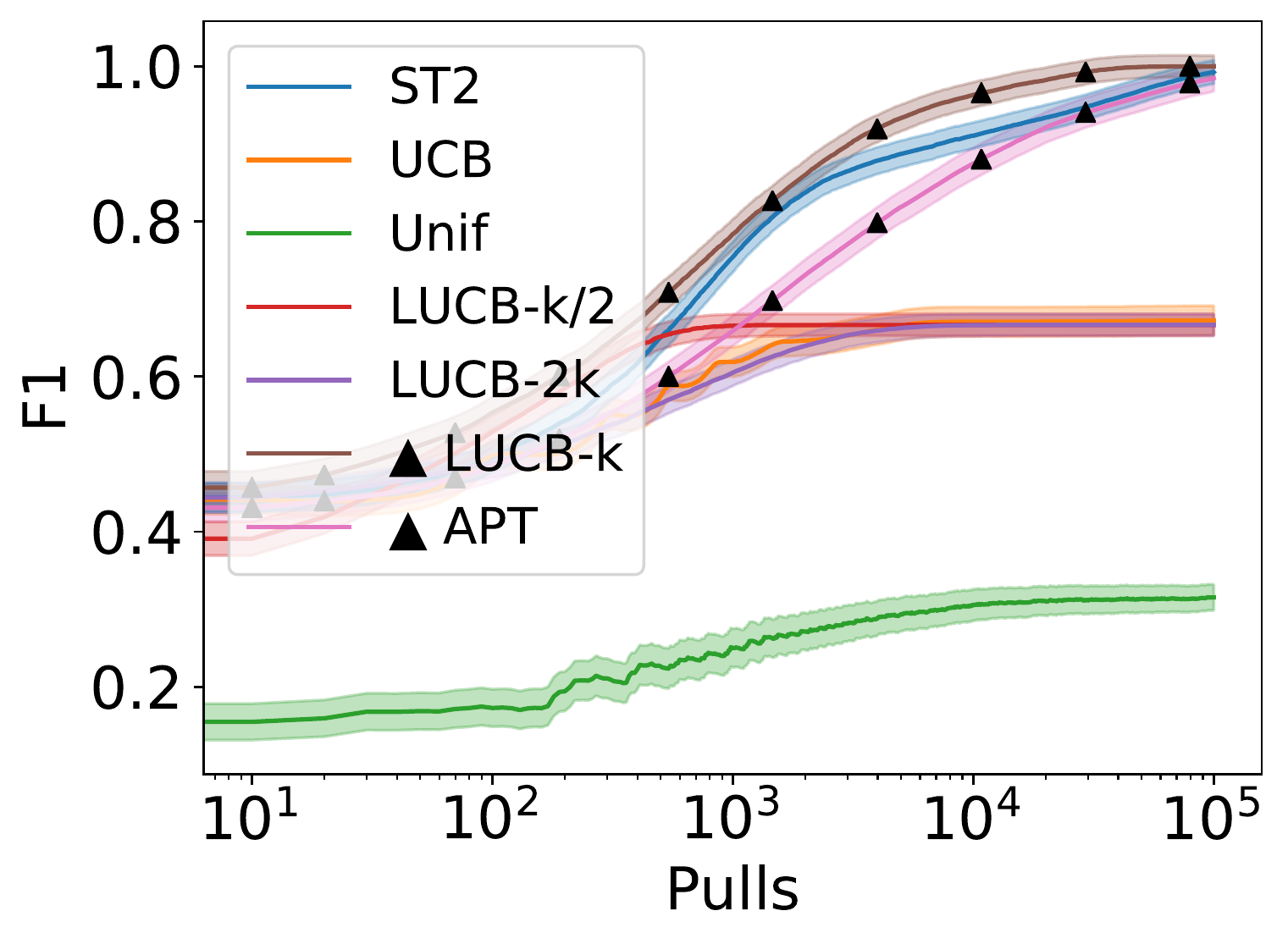}
%   \vspace{-1em}
  \caption{Cancer drug discovery}
  \label{fig:gsk_body}
\end{subfigure}
\caption{F1 scores averaged over $600$ trials with $95\%$ confidence widths for each dataset.}
\label{fig:real_experiment}
\vspace{-1.5em}
\end{figure}
% \textbf{Application 1: The New Yorker Caption Contest.}
%We choose to plot the F1 score metric as that balances the twin objectives inherent in finding many good-quality arms as quickly as possible. 
As discussed in the introduction, in many applications such as the New Yorker Cartoon Caption Contest (NYCCC), the \alleps objective returns a set of good arms which can then be screened further to choose a favorite.  
We considered Contest 651, which had $9250$ captions whose means we estimated from a total of $2.2$ million ratings. 
We set $\epsilon = 0.1$ and focus on the multiplicative setting, i.e., the objective of recovering all captions within 10\% of the funniest one.  
In this experiment, we contrast \st2 with several other methods including two \emph{oracle} methods (marked with $\blacktriangle$): \texttt{LUCB1} \cite{kalyanakrishnan2012pac} with $k$ set to the number of $\epsilon$-good arms (here it was $46$), and a threshold-bandit, \textsc{APT} \cite{locatelli2016optimal} given the value of $0.9\mu_1$. 
We focus on a common practical requirement, each algorithm's ability to balance precision and recall as it samples. 
With every new sample, each method recommends an empirical set of $\epsilon$-good arms based on the empirical means, and we consider the F1 score of this set\footnote{F1 is the harmonic mean of precision (fraction of captions returned that are actually good) and recall (fraction of all good captions that are actually returned).}. As can be seen in Figure~\ref{fig:tny_body}, \st2 outperforms all baselines including the oracle \texttt{APT}, and almost matches the performance of the \topk oracle!  To illustrate the importance of knowing the correct value of $k$, we also plot \texttt{LUCB1} given $k = 46/2 = 23$ and
$k = 46 \times 2= 92$, settings where the experimenter under or over estimates the number of $\epsilon$ good arms by as little as a factor of $2$. Both cases result in a poor performance. We have also included UCB, currently being used for the contest\cite{tanczos2017kl}; the plot shows that UCB is not able to estimate the $\epsilon$-good set. \texttt{APT}'s poor performance is a consequence of allowing many false positives (within the time horizon that is typical for the NYCCC). In the Supplementary we show plots of additional plots of precision vs recall as well as more values of $\epsilon$.

Additionally, motivated by drug discovery, we performed an experiment on a dataset \cite{drewry2017progress} of $189$~inhibitors whose activities were tested against ACVRL1, a kinase associated with cancer \cite{bocci2019activin}. In this experiment, we use the multiplicative case of \alleps with $\epsilon=0.8$ and $\delta=0.001$, to promote high precision.
In this experiment as well, \st2 performs best (Figure~\ref{fig:gsk_body}), with only the oracle methods are competitive with it. We plot on a log-scale to emphasize the early regime.

\clearpage
\section{Broader Impacts}
The application of machine learning (ML) in domains such as advertising, biology, or medicine brings the possibility of utilizing large computational power and large datasets to solve new problems. It is tempting to use powerful, if not fully understood, ML tools to maximize scientific discovery. However, at times the gap between a tool's theoretical guarantees and its practical performance can lead to sub-optimal behavior. This is especially true in \textit{adaptive data collection} where misspecifying the model or desired output (e.g., ``return the top $k$ performing compounds'' vs. ``return all compounds with  a potency about a given threshold'') may bias data collection and hinder post-hoc consideration of different objectives.  In this paper we highlight several such instances in real-life data collection using multi-armed bandits where such a phenomenon occurs.  We believe that the objective studied in this work, that of returning all arms whose mean is quantifiably near-best, more naturally aligns with practical objectives as diverse as finding funny captions to performing medical tests. 
We point out that methods from adaptive data collection and multi-armed bandits can also be used on content-recommendation platforms such as social media or news aggregator sites. In these scenarios, time and again, we have seen that recommendation systems can be greedy, attempting purely to maximize clickthrough with a long term effect of a less informed public. Adjacent to one of the main themes of this paper, we recommend that practitioners not just focus on the objective of recommendation for immediate profit maximization but rather keep track of a more holistic set of metrics. We are excited to see our work used in practical applications and believe it can have a major impact on driving the process of scientific discovery. 

\textbf{Acknowledgments}

The work presented in this paper was partially supported by ARO grant W911NF-15-1-0479. Additionally, this work was partially supported by the MADLab AF Center of Excellence FA9550-18-1-0166.

\bibliographystyle{unsrt}
\bibliography{eps_refs}

\begin{thebibliography}{10}

\bibitem{locatelli2016optimal}
Andrea Locatelli, Maurilio Gutzeit, and Alexandra Carpentier.
\newblock An optimal algorithm for the thresholding bandit problem.
\newblock In {\em Proceedings of the 33rd International Conference on
  International Conference on Machine Learning-Volume 48}, pages 1690--1698.
  JMLR. org, 2016.

\bibitem{christmann2016unprecedently}
Serge Christmann-Franck, Gerard~JP van Westen, George Papadatos, Fanny
  Beltran~Escudie, Alexander Roberts, John~P Overington, and Daniel Domine.
\newblock Unprecedently large-scale kinase inhibitor set enabling the accurate
  prediction of compound--kinase activities: A way toward selective promiscuity
  by design?
\newblock {\em Journal of chemical information and modeling}, 56(9):1654--1675,
  2016.

\bibitem{auer2002finite}
Peter Auer, Nicolo Cesa-Bianchi, and Paul Fischer.
\newblock Finite-time analysis of the multiarmed bandit problem.
\newblock {\em Machine learning}, 47(2-3):235--256, 2002.

\bibitem{kalyanakrishnan2012pac}
Shivaram Kalyanakrishnan, Ambuj Tewari, Peter Auer, and Peter Stone.
\newblock Pac subset selection in stochastic multi-armed bandits.
\newblock In {\em ICML}, volume~12, pages 655--662, 2012.

\bibitem{bubeck2013multiple}
S{\'e}bastian Bubeck, Tengyao Wang, and Nitin Viswanathan.
\newblock Multiple identifications in multi-armed bandits.
\newblock In {\em International Conference on Machine Learning}, pages
  258--265, 2013.

\bibitem{kaufmann2016complexity}
Emilie Kaufmann, Olivier Capp{\'e}, and Aur{\'e}lien Garivier.
\newblock On the complexity of best-arm identification in multi-armed bandit
  models.
\newblock {\em The Journal of Machine Learning Research}, 17(1):1--42, 2016.

\bibitem{gabillon2012best}
Victor Gabillon, Mohammad Ghavamzadeh, and Alessandro Lazaric.
\newblock Best arm identification: A unified approach to fixed budget and fixed
  confidence.
\newblock In {\em Advances in Neural Information Processing Systems}, pages
  3212--3220, 2012.

\bibitem{ren2018exploring}
Wenbo Ren, Jia Liu, and Ness~B Shroff.
\newblock Exploring $ k $ out of top $\rho$ fraction of arms in stochastic
  bandits.
\newblock In {\em The 22nd International Conference on Artificial Intelligence
  and Statistics}, pages 2820--2828, 2019.

\bibitem{simchowitz2017simulator}
Max Simchowitz, Kevin Jamieson, and Benjamin Recht.
\newblock The simulator: Understanding adaptive sampling in the
  moderate-confidence regime.
\newblock In {\em Conference on Learning Theory}, pages 1794--1834, 2017.

\bibitem{karnin2013almost}
Zohar Karnin, Tomer Koren, and Oren Somekh.
\newblock Almost optimal exploration in multi-armed bandits.
\newblock In {\em International Conference on Machine Learning}, pages
  1238--1246, 2013.

\bibitem{mannor2004sample}
Shie Mannor and John~N Tsitsiklis.
\newblock The sample complexity of exploration in the multi-armed bandit
  problem.
\newblock {\em Journal of Machine Learning Research}, 5(Jun):623--648, 2004.

\bibitem{even2002pac}
Eyal Even-Dar, Shie Mannor, and Yishay Mansour.
\newblock Pac bounds for multi-armed bandit and markov decision processes.
\newblock In {\em International Conference on Computational Learning Theory},
  pages 255--270. Springer, 2002.

\bibitem{even2006action}
Eyal Even-Dar, Shie Mannor, and Yishay Mansour.
\newblock Action elimination and stopping conditions for the multi-armed bandit
  and reinforcement learning problems.
\newblock {\em Journal of machine learning research}, 7(Jun):1079--1105, 2006.

\bibitem{kalyanakrishnan2010efficient}
Shivaram Kalyanakrishnan and Peter Stone.
\newblock Efficient selection of multiple bandit arms: Theory and practice.
\newblock In {\em ICML}, volume~10, pages 511--518, 2010.

\bibitem{katz2019true}
Julian Katz-Samuels and Kevin Jamieson.
\newblock The true sample complexity of identifying good arms.
\newblock {\em arXiv preprint arXiv:1906.06594}, 2019.

\bibitem{degenne2019pure}
R{\'e}my Degenne and Wouter~M Koolen.
\newblock Pure exploration with multiple correct answers.
\newblock In {\em Advances in Neural Information Processing Systems}, pages
  14564--14573, 2019.

\bibitem{kaufmann2013information}
Emilie Kaufmann and Shivaram Kalyanakrishnan.
\newblock Information complexity in bandit subset selection.
\newblock In {\em Conference on Learning Theory}, pages 228--251, 2013.

\bibitem{chaudhuri2017pac}
Arghya~Roy Chaudhuri and Shivaram Kalyanakrishnan.
\newblock Pac identification of a bandit arm relative to a reward quantile.
\newblock In {\em Thirty-First AAAI Conference on Artificial Intelligence},
  2017.

\bibitem{chaudhuri2019pac}
Arghya~Roy Chaudhuri and Shivaram Kalyanakrishnan.
\newblock Pac identification of many good arms in stochastic multi-armed
  bandits.
\newblock In {\em International Conference on Machine Learning}, pages
  991--1000, 2019.

\bibitem{kano2019good}
Hideaki Kano, Junya Honda, Kentaro Sakamaki, Kentaro Matsuura, Atsuyoshi
  Nakamura, and Masashi Sugiyama.
\newblock Good arm identification via bandit feedback.
\newblock {\em Machine Learning}, 108(5):721--745, 2019.

\bibitem{jamieson2018bandit}
Kevin Jamieson and Lalit Jain.
\newblock A bandit approach to multiple testing with false discovery control.
\newblock In {\em Proceedings of the 32nd International Conference on Neural
  Information Processing Systems}, NIPS’18, page 3664–3674, Red Hook, NY,
  USA, 2018. Curran Associates Inc.

\bibitem{malloy2014sequential}
Matthew~L Malloy and Robert~D Nowak.
\newblock Sequential testing for sparse recovery.
\newblock {\em IEEE Transactions on Information Theory}, 60(12):7862--7873,
  2014.

\bibitem{jamieson2014lil}
Kevin Jamieson, Matthew Malloy, Robert Nowak, and S{\'e}bastien Bubeck.
\newblock lil'ucb: An optimal exploration algorithm for multi-armed bandits.
\newblock In {\em Conference on Learning Theory}, pages 423--439, 2014.

\bibitem{howard2018uniform}
Steven~R Howard, Aaditya Ramdas, Jon McAuliffe, and Jasjeet Sekhon.
\newblock Uniform, nonparametric, non-asymptotic confidence sequences.
\newblock {\em arXiv preprint arXiv:1810.08240}, 2018.

\bibitem{chen2017nearly}
Lijie Chen, Jian Li, and Mingda Qiao.
\newblock Nearly instance optimal sample complexity bounds for top-k arm
  selection.
\newblock In {\em Artificial Intelligence and Statistics}, pages 101--110,
  2017.

\bibitem{tanczos2017kl}
Ervin Tanczos, Robert Nowak, and Bob Mankoff.
\newblock A kl-lucb algorithm for large-scale crowdsourcing.
\newblock In {\em Advances in Neural Information Processing Systems}, pages
  5894--5903, 2017.

\bibitem{drewry2017progress}
David~H Drewry, Carrow~I Wells, David~M Andrews, Richard Angell, Hassan Al-Ali,
  Alison~D Axtman, Stephen~J Capuzzi, Jonathan~M Elkins, Peter Ettmayer,
  Mathias Frederiksen, et~al.
\newblock Progress towards a public chemogenomic set for protein kinases and a
  call for contributions.
\newblock {\em PloS one}, 12(8), 2017.

\bibitem{bocci2019activin}
Matteo Bocci, Jonas Sj{\"o}lund, Ewa Kurzejamska, David Lindgren, Michael
  Bartoschek, Mattias H{\"o}glund, Kristian Pietras, et~al.
\newblock Activin receptor-like kinase 1 is associated with immune cell
  infiltration and regulates clec14a transcription in cancer.
\newblock {\em Angiogenesis}, 22(1):117--131, 2019.

\bibitem{dranchak2013profile}
Patricia Dranchak, Ryan MacArthur, Rajarshi Guha, William~J Zuercher, David~H
  Drewry, Douglas~S Auld, and James Inglese.
\newblock Profile of the gsk published protein kinase inhibitor set across
  atp-dependent and-independent luciferases: implications for reporter-gene
  assays.
\newblock {\em PloS one}, 8(3), 2013.

\end{thebibliography}

\clearpage
\tableofcontents
\newpage
\appendix
\section{Additional Experimental Results}\label{sec:additional_exp_results}

 \textbf{Practical change made to \fareast for simulations:} 
We make one change to \fareast that we recommend for practitioners wishing to use \fareast that improve its empirical performance. 
In particular, \texttt{Median-Elimination} may instead be replaced by another method, such as \texttt{LUCB1}, \cite{kalyanakrishnan2012pac}, to find $\epsilon$-good arms. \texttt{LUCB1}, for instance, has better constant factors and enjoys improved empirical performance versus \texttt{Median-Elimination}. The use of \texttt{Median-Elimination} in this algorithm serves to ease both notation and analysis since it's sample complexity is deterministic.
To modify the algorithm, simply track the number of samples given to the bad filter in total, which can be a random variable, and give the good filter the same number in that round. The proof then follows identically, with only the moderate confidence term changing in the result. 

\textbf{Additional Simulations Results}
As mentioned in the Experiments, Section~\ref{sec:experiments}, we omitted curves comparing against uniform sampling as they make the plots hard to read with uniform performing much more poorly. For completeness, we include them in Figure~\ref{fig:sim_experiments_with_unif}. Clearly, uniform sampling performs much more poorly than either active method, as expected. 

\begin{figure}
\centering
\begin{subfigure}{0.45\textwidth}
  \includegraphics[width=\linewidth]{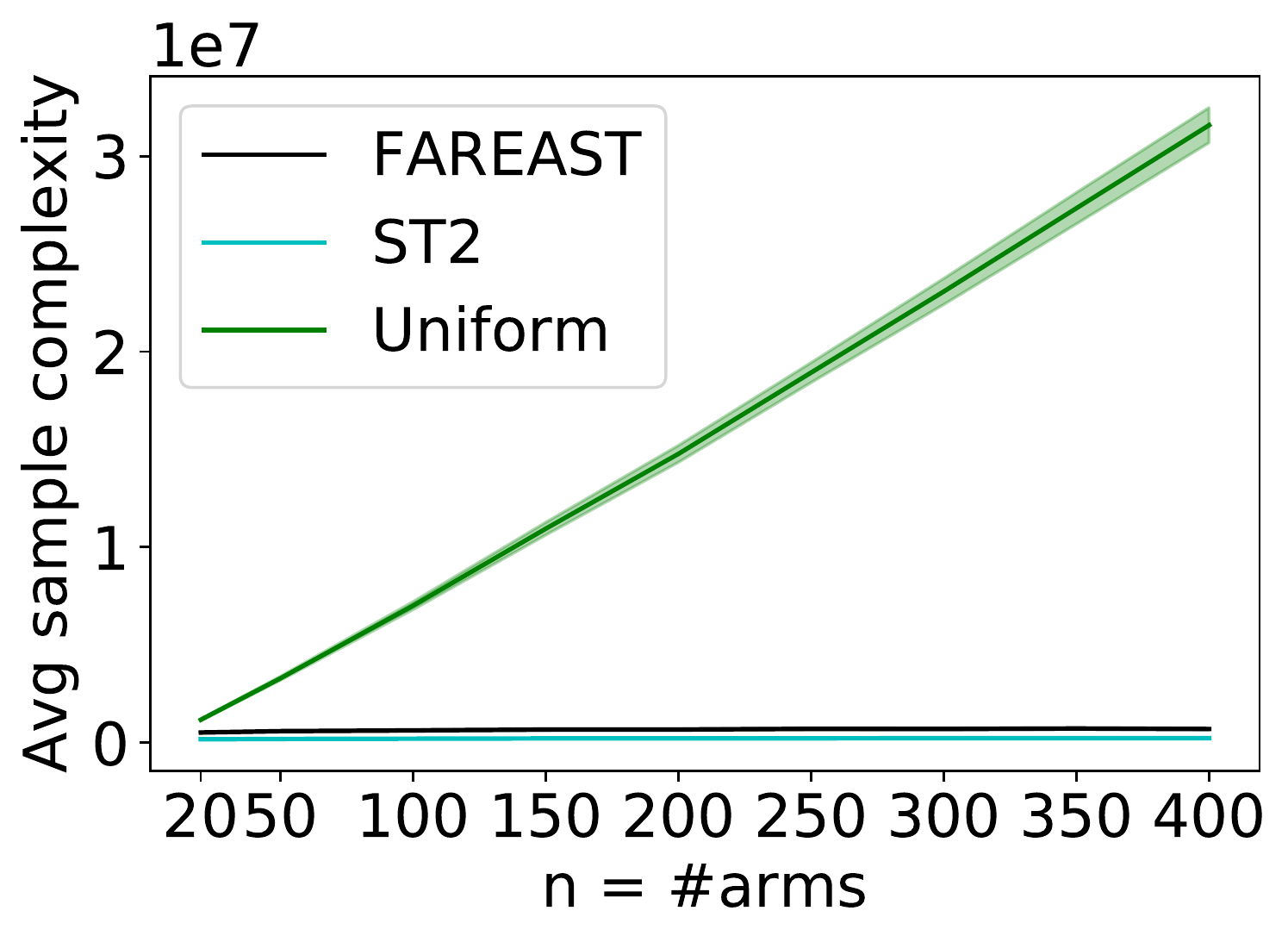}
\caption{Plot in Figure~\ref{fig:sim_easy} with uniform sampling included.}
\end{subfigure}
\begin{subfigure}{0.45\textwidth}
  \includegraphics[width=\linewidth]{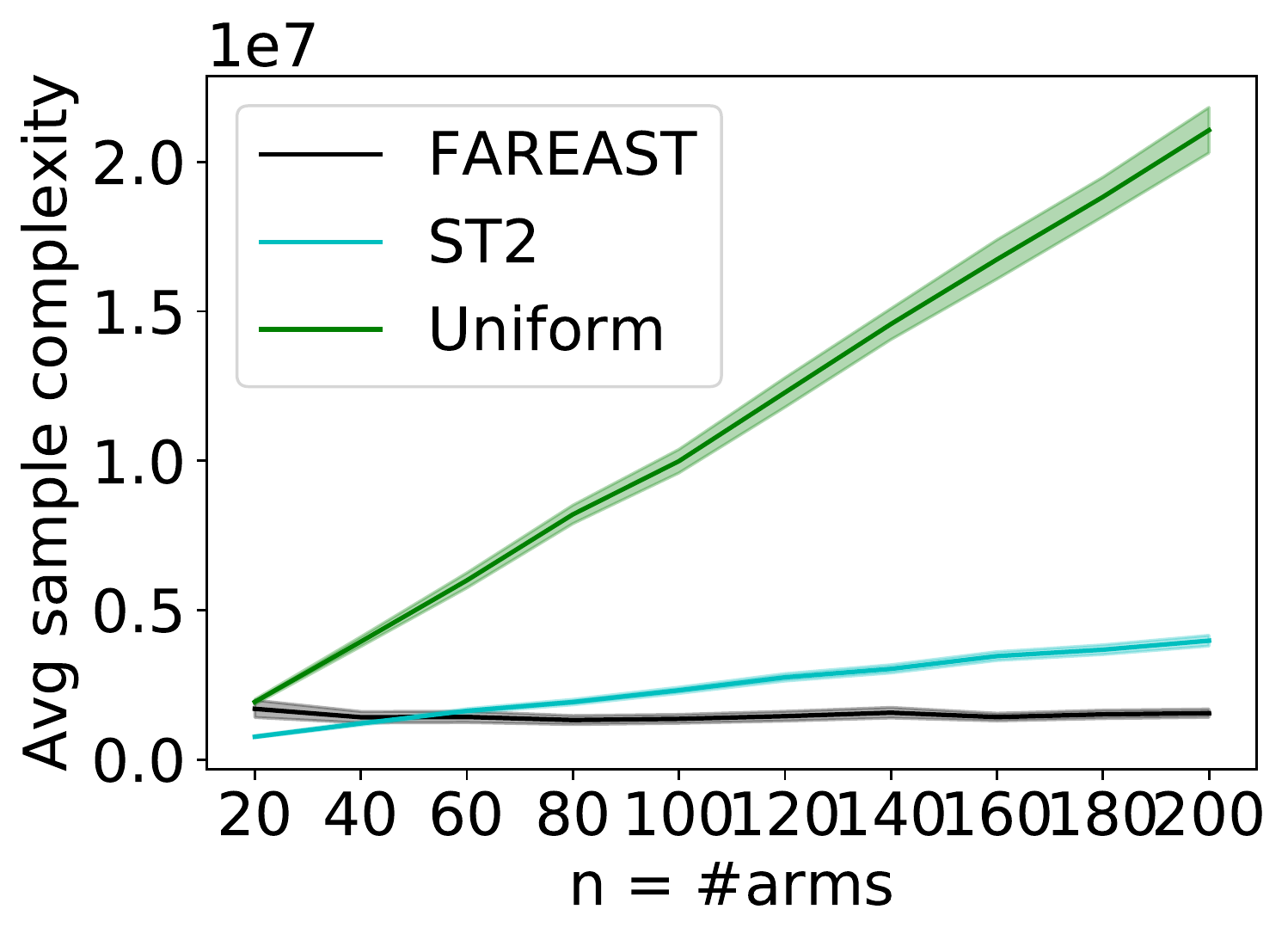}
\caption{Plot in Figure~\ref{fig:sim_hard} with uniform sampling included.}
\end{subfigure}
\caption{Simulation results with uniform sampling included.}
\label{fig:sim_experiments_with_unif}
\end{figure}

Additionally, we include experiments with $\gamma > 0$ here. For small $\gamma$, the only valid solution is $G_\epsilon$ (resp. $M_\epsilon$) itself. However, for larger $\gamma$, there are many valid solutions. Indeed, any $G$ such that $G_\epsilon \subset G \subset G_{\epsilon+\gamma}$ is valid. To analyze the effect of $\gamma$ on both \st2 and \fareast, we consider the same type of instances studied in Figure~\ref{fig:sim_hard}. Here, $n-1$ arms have means equal to $\mu_1$, and a single arm is in $G_\epsilon^c$. Again, we take $\epsilon = 0.99$ and $\beta_\epsilon = 0.01$, and additionally, set $n=150$ arms. Recall that in this setting, \fareast outperforms \st2, as shown in Figure~\ref{fig:sim_hard}. As we increase $\gamma$, the problem becomes easier. We increase $\gamma$ on an exponential scale, beginning with $\gamma \approx \epsilon/100$ and ending with $\gamma \approx \epsilon/2$. Indeed, for smaller values of $\gamma$, \fareast is superior as it finds the exact solution fastest. For larger $\gamma$, \st2 is able to terminated more quickly. In Figure~\ref{fig:sliding_gamma} we plot these results. 

\begin{figure}
\centering
  \includegraphics[width=0.48\linewidth]{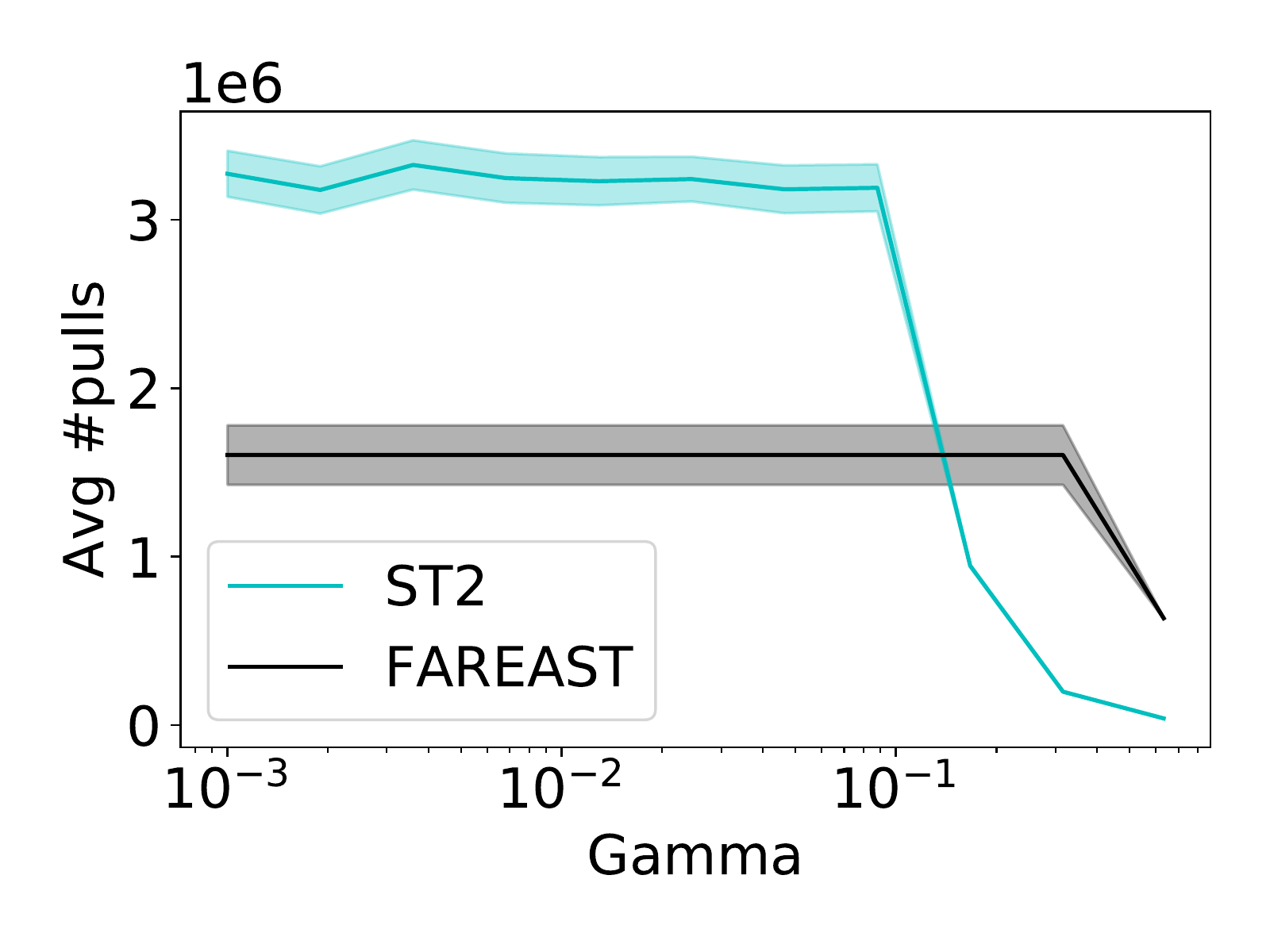}
\caption{\st2 and \fareast with different values of $\gamma$}
\label{fig:sliding_gamma}
\end{figure}

\textbf{Metrics we consider for real data experiments:}
For all methods, we track their precision, recall and F1 score with respect to the true set of $\epsilon$-good arms. To compute these metrics, at each time, the algorithm outputs a set that it guesses are the $\epsilon$-good arms based on the data it has gathered thus far. For \texttt{UCB}, \texttt{Uniform}, and \st2, this is based directly on empirical means, i.e., $\widehat{G} = \{i: \widehat{\mu}_i \geq \max_j\widehat{\mu}_j - \epsilon\}$ or $\widehat{G} = \{i: \widehat{\mu}_i \geq \max_j(1 - \epsilon)\widehat{\mu}_j\}$ in the multiplicative case. Oracle methods may use their additional information to return the set. In particular, \texttt{APT} returns all arms whose empirical means exceed $(1-\epsilon)\mu_1$ (using knowledge of $\mu_1$) and \texttt{LUCB1} returns the $k$ largest empirical means (using knowledge that $|M_\epsilon| = k$. Let $TP$ (true positives) denote the number of arms that an algorithm declares as $\epsilon$-good that truly are. 
Let $FN$ (false negatives denote) the number of arms that an algorithm declares as \emph{not} $\epsilon$-good when in fact they are. Recall, $r \in [0,1]$, is computed as $r = \frac{TP}{TP + FN}$. Intuitively, recall is the total number of $\epsilon$-good arms that the algorithm detects.
Precision, $p \in [0,1]$, by contrast is the the fraction of the arms that an algorithm predicts as $\epsilon$-good that truly are. It is computed as $p = \max(TP / |\widehat{G}|, 1)$ where the $\max()$ is necessary to avoid the trivial case that $\widehat{G} = \emptyset$. Finally, the F1 is the harmonic mean of precision and recall: 
$F1 = \frac{2pr}{p+r}$. It balances how precise an algorithm is with how many discoveries it makes. In many cases, F1 may a more relevant metric than the others, as it avoids trivial edge cases. For instance, an algorithm that always declare every arm as $\epsilon$-good independent of the data, achieves perfect recall because it has $0$ false negatives. Similarly, an algorithm that never declares any arms as $\epsilon$-good, again independent of data, achieves perfect precision. Both methods, despite seemingly good performance with respect to their individual metrics, are undesirable in practice. In particular, both would achieve low F1 scores. 

\textbf{The New Yorker Caption Contest:}
In this section we provide additional experimental results adjoining those in Section~\ref{sec:experiments}. The data can be downloaded at \href{https://github.com/nextml/caption-contest-data}{https://github.com/nextml/caption-contest-data}. We chose contest 651 for our experiments, but hundreds of others are available. Captions are rated on a scale of $1$ to $3$ (``unfunny'', ``somewhat funny'', or ``funny''). 
It is desirable to find all captions that are nearly as good as the best. However, setting a fixed number of captions or fraction of captions to accept is undesirable as the number of truly funny captions varies from week to week and represents a small fraction of the submissions. 
For instance, in the contest that ran the week of $3/14/16$, only 8 captions were rated within $20\%$ of the funniest caption. % \marginpar{Maybe this vignette should go in place of the current beginning of the intro?}
In the following week, by contrast, 187 captions were. Similarly, a choosing a fixed threshold of what it means for a caption to be funny is unrealistic. In the same two contests, first week saw $3\%$ of captions be rated at least $1.5$ out of $3$ whereas the second saw $<0.1\%$. For this reason, finding all $\epsilon$-good arms is more natural. We consider finding all \multiplicative $\epsilon$-good arms with $\epsilon = 0.1, 0.15, 0.2$. 
To keep the comparison fair, all methods use the same confidence widths from \cite{howard2018uniform}. 
In Figure~\ref{fig:tny_means} we plot the average rating of each caption in sorted order with horizontal lines corresponding to $(1-0.2)\mu_1$, $(1-0.15)\mu_1$, and $(1-0.1)\mu_1$. The arms with means above this line are $0.2$, $0.15$, and $0.1$ $\epsilon$-good. 
The oracle methods tend to achieve high recall, but low precision, and this is especially true for the threshold oracle, \texttt{APT}. 
In Figures~\ref{fig:tny_sup_eps_20}, \ref{fig:tny_sup_eps_15}, \ref{fig:tny_sup_eps_1} we plot F1, Precision, and Recall curves for all methods tested on $\epsilon = 0.2, 0.15, 0.1$ respectively. As before, all curves are averaged over $600$ independent repetitions and plotted with $95\%$ confidence intervals. 
It is evident from these curves, that \st2 performs especially well with regard to precision, though it achieves lower recall than some other baselines. 

\begin{figure}
\centering
  \includegraphics[width=0.8\linewidth]{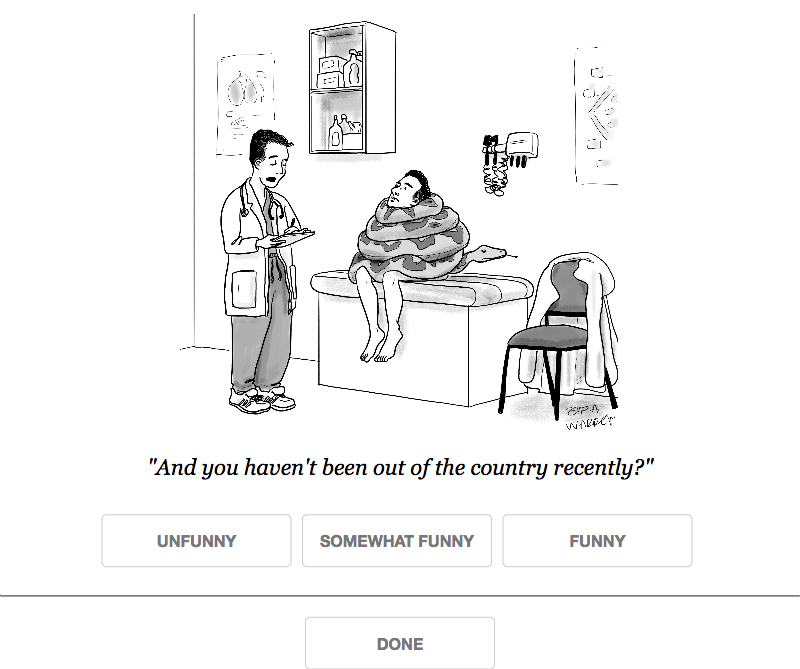}
\caption{The user interface for the caption contest with the caption for contest 651. ``Unfunny'' = 1, ``Somewhat funny'' = 2, ``Funny'' = 3}
\label{fig:tny_contest_image}
\end{figure}

\begin{figure}
\centering
\begin{subfigure}{0.45\textwidth}
  \includegraphics[width=\linewidth]{tny_dif_weeks.pdf}
\caption{Sorted means different contests, 627, 651, 690}
\label{fig:tny_means_diff_weeks_sup}
\end{subfigure}
\begin{subfigure}{0.45\textwidth}
  \includegraphics[width=\linewidth]{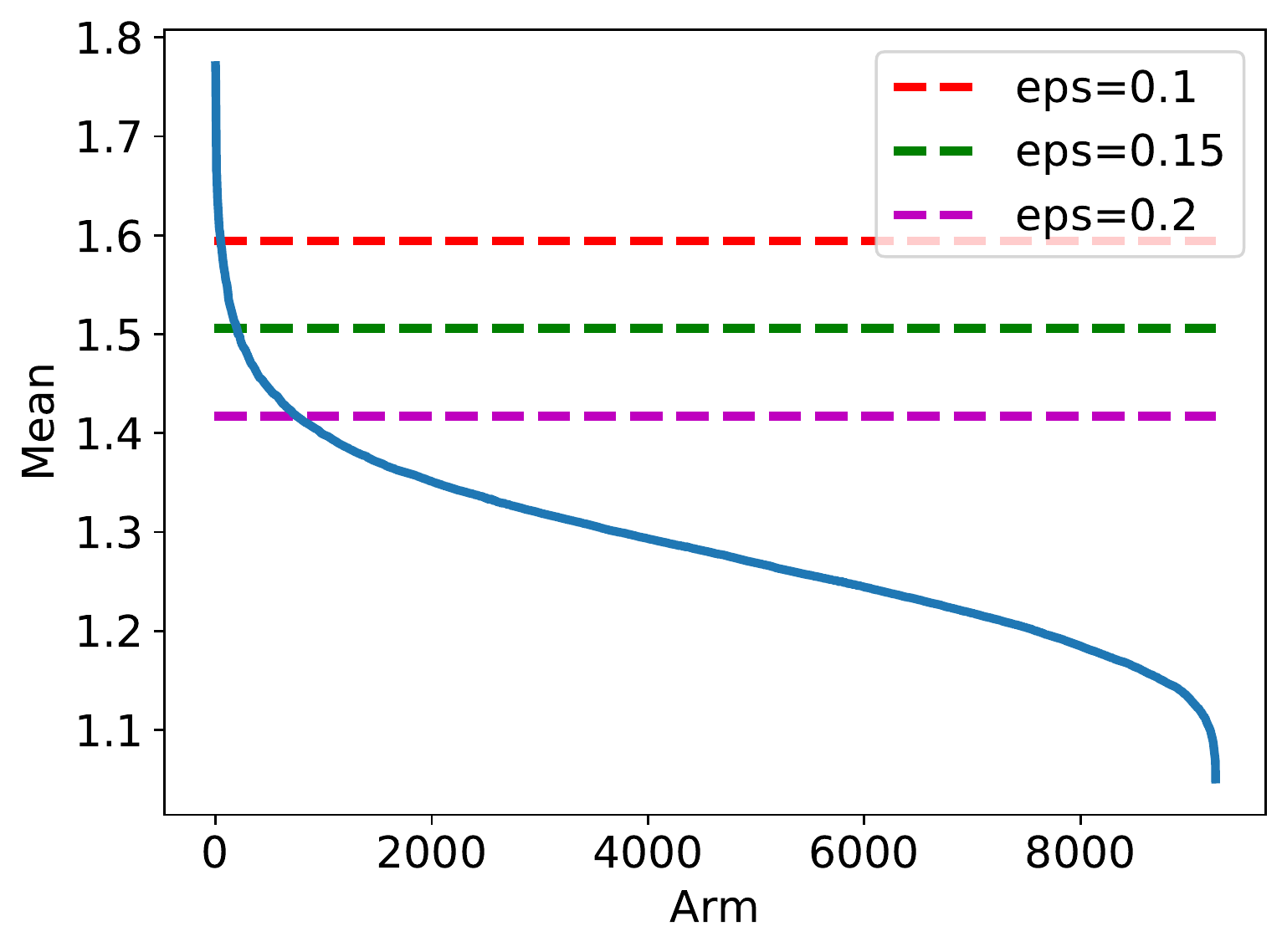}
\caption{Sorted means for caption contest 651}
\label{fig:tny_means}
\end{subfigure}
\end{figure}

\begin{figure}
\centering
\begin{subfigure}{.32\textwidth}
  \centering
  \includegraphics[width=\linewidth]{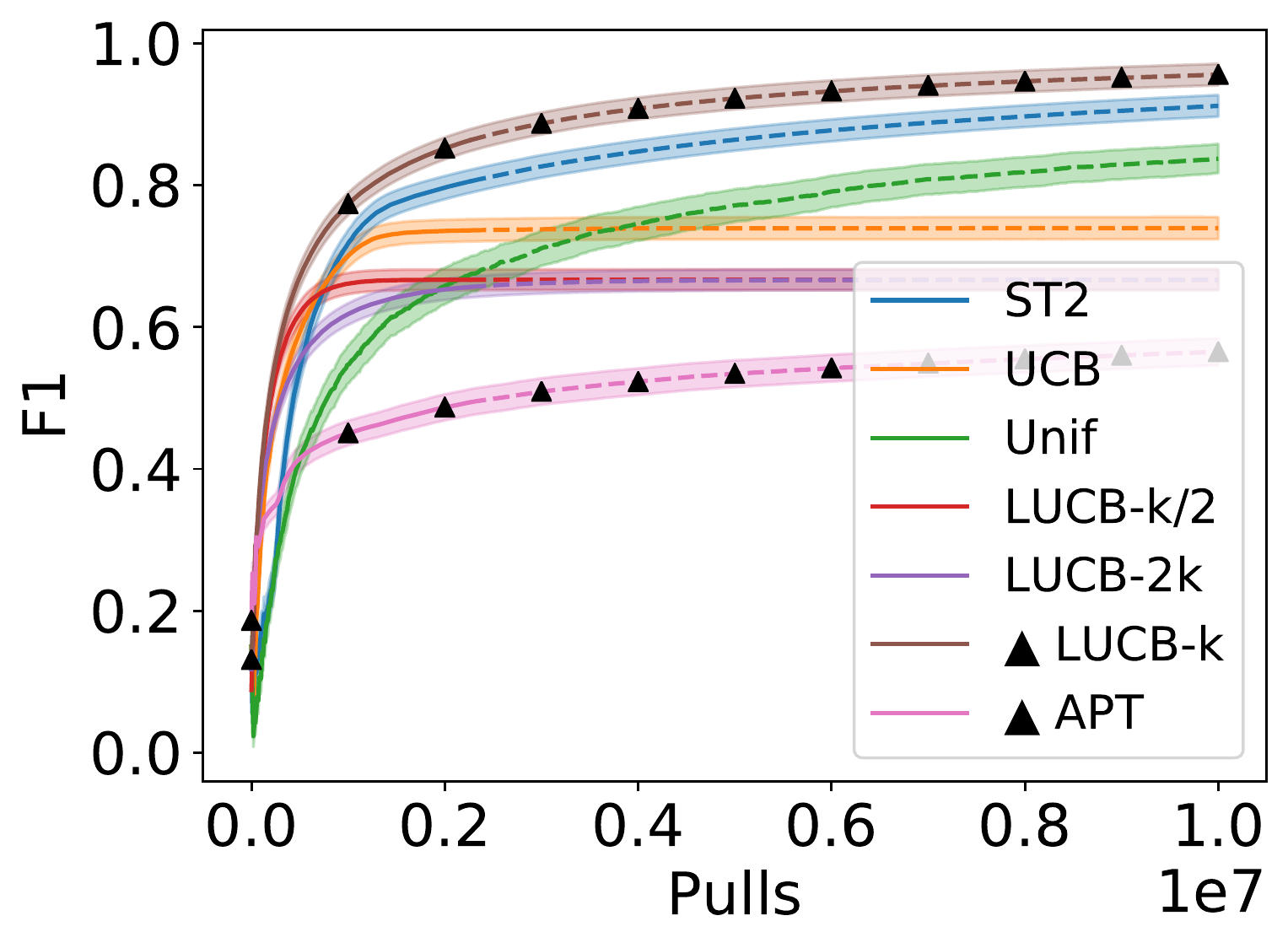}
  \vspace{-1em}
  \caption{F1}
\end{subfigure}
\begin{subfigure}{.32\textwidth}
  \centering
  \includegraphics[width=\linewidth]{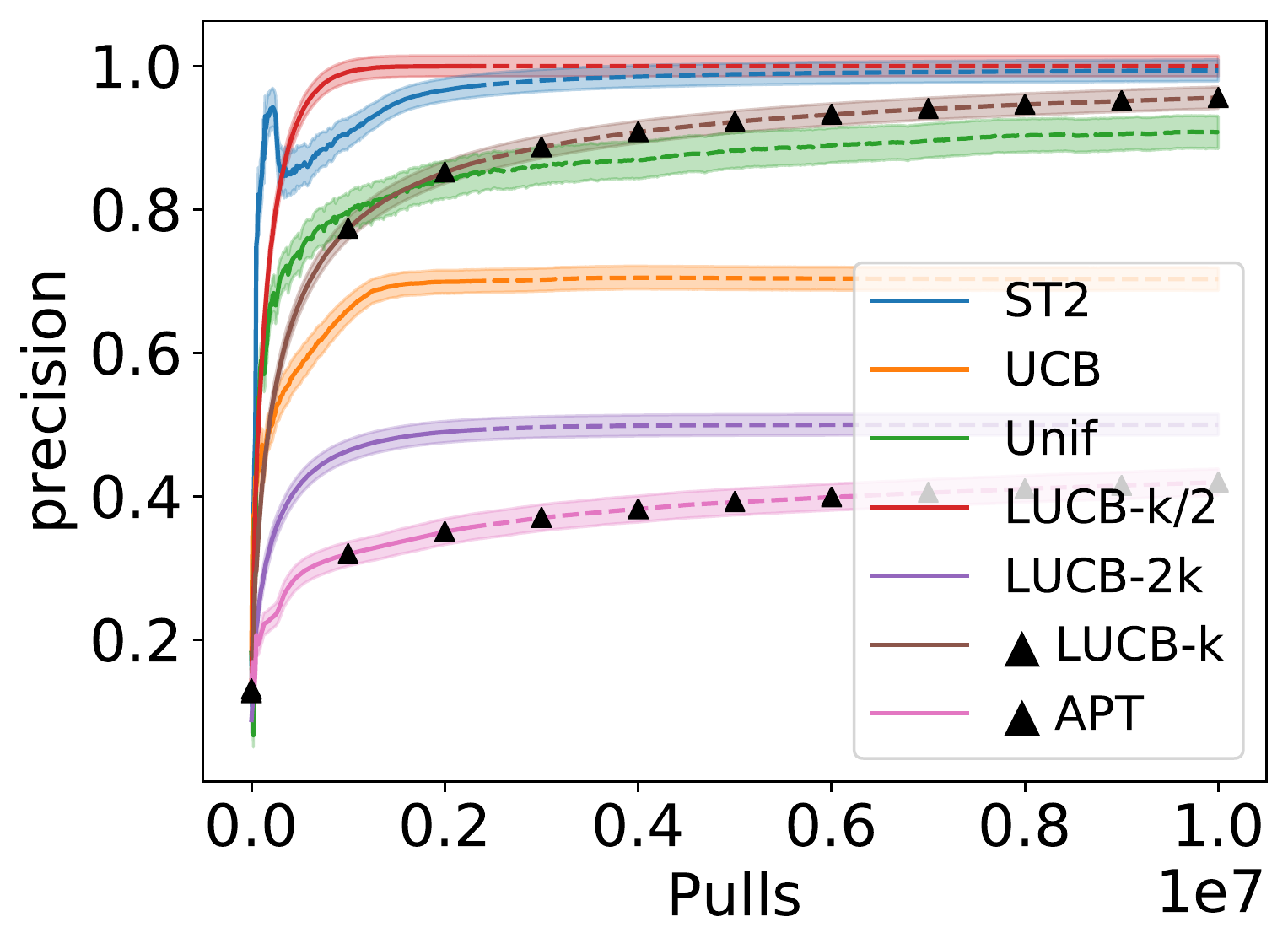}
%   \vspace{-1em}
  \caption{Precision}
\end{subfigure}
\begin{subfigure}{.32\textwidth}
  \centering
  \includegraphics[width=\linewidth]{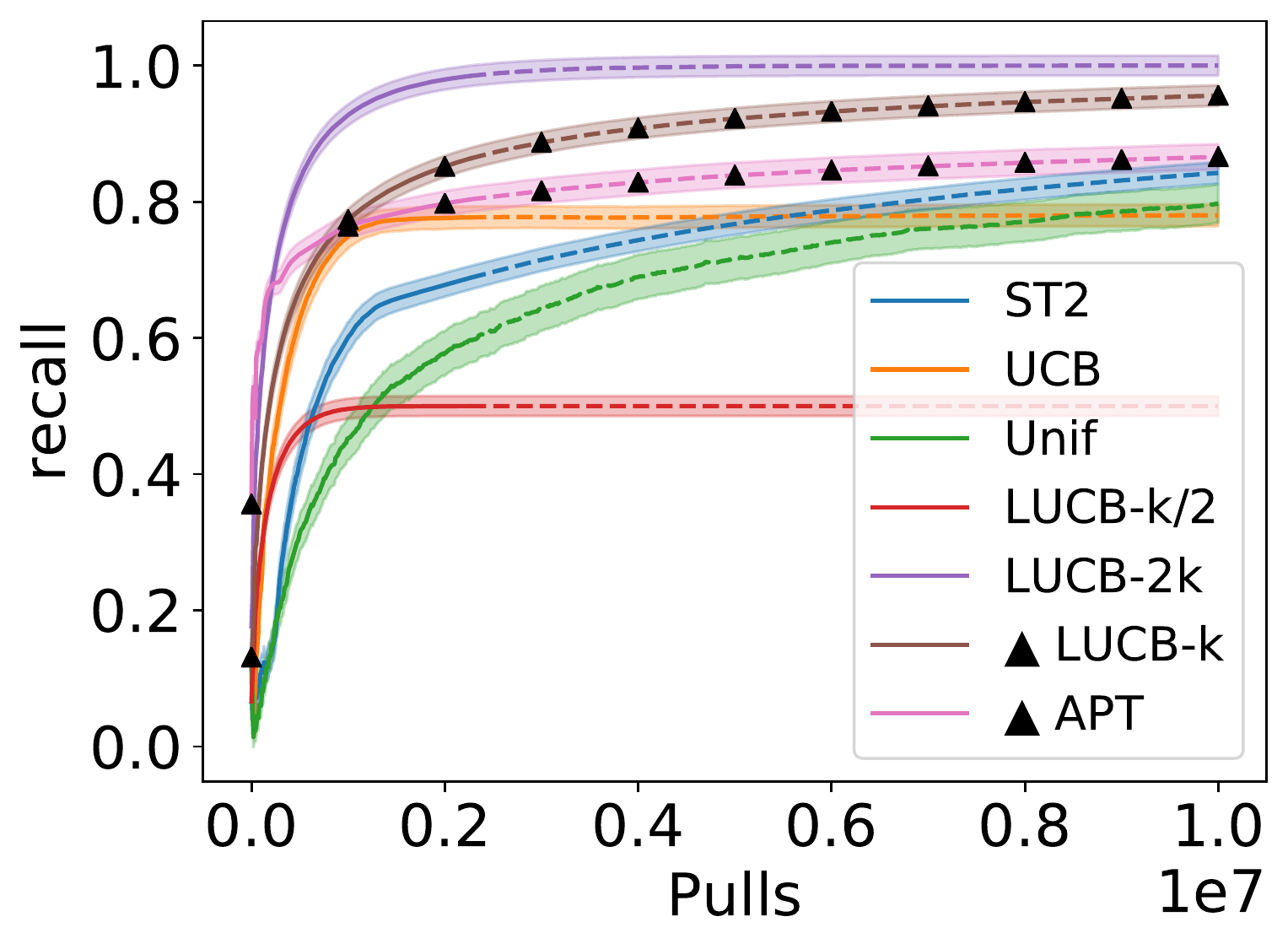}
%   \vspace{-1cm}
  \caption{Recall}
\end{subfigure}
% \vspace{-1em}
\caption{F1, Precision, and Recall scores on the New Yorker Caption Contest with $\epsilon = 0.2$}
\label{fig:tny_sup_eps_20}
\end{figure}

\begin{figure}
\centering
\begin{subfigure}{.32\textwidth}
  \centering
  \includegraphics[width=\linewidth]{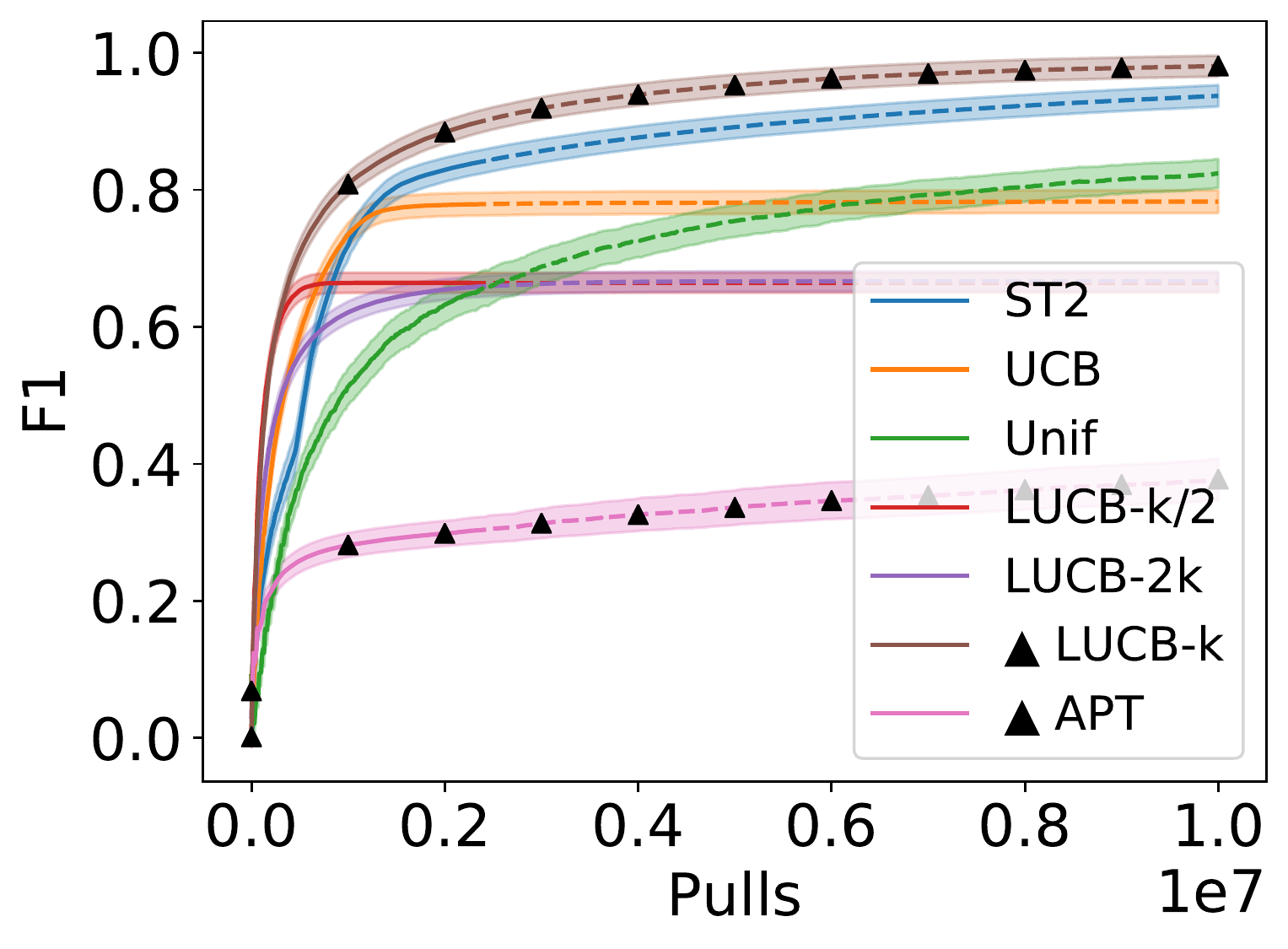}
  \vspace{-1em}
  \caption{F1}
\end{subfigure}
\begin{subfigure}{.32\textwidth}
  \centering
  \includegraphics[width=\linewidth]{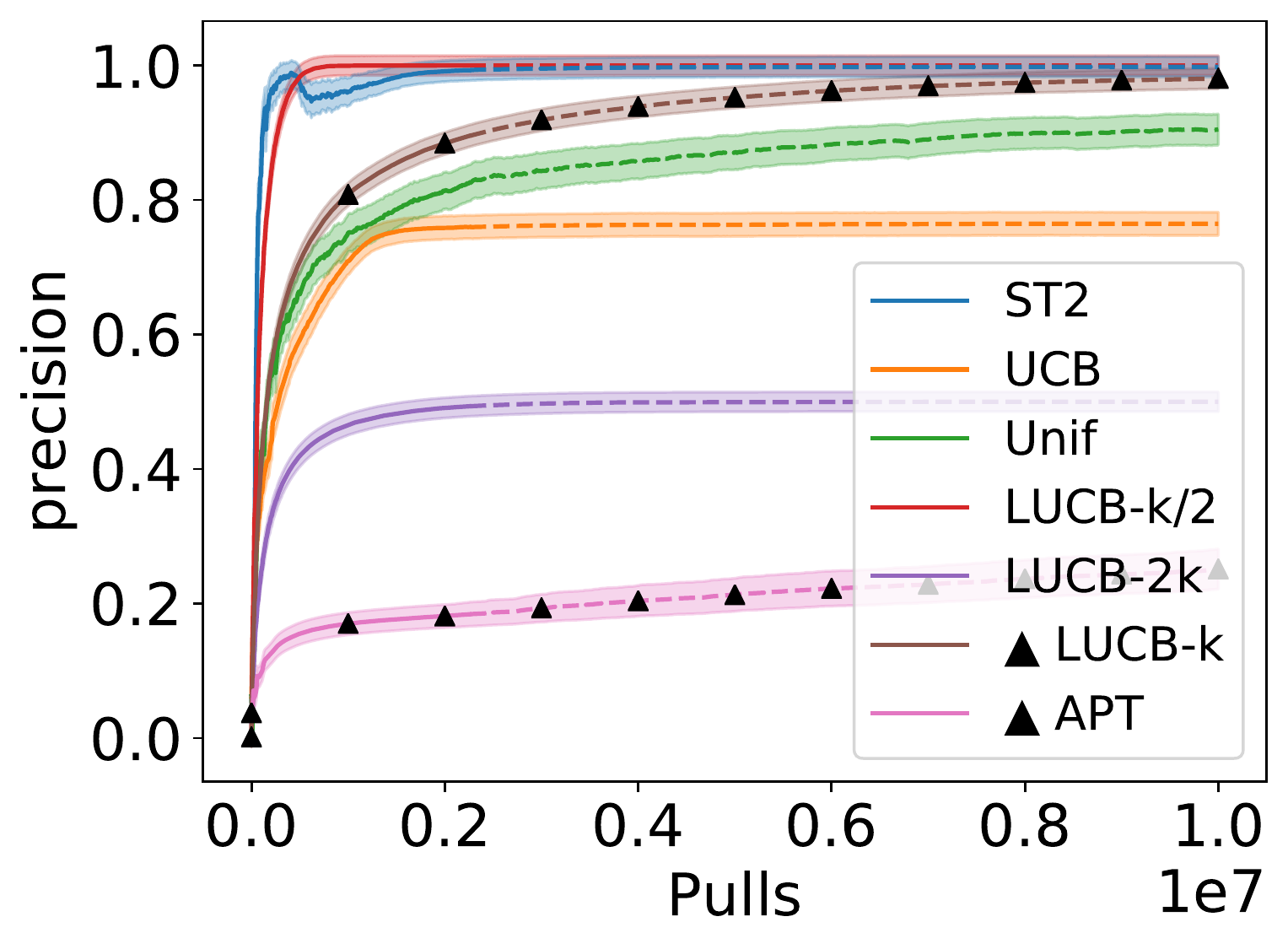}
%   \vspace{-1em}
  \caption{Precision}
\end{subfigure}
\begin{subfigure}{.32\textwidth}
  \centering
  \includegraphics[width=\linewidth]{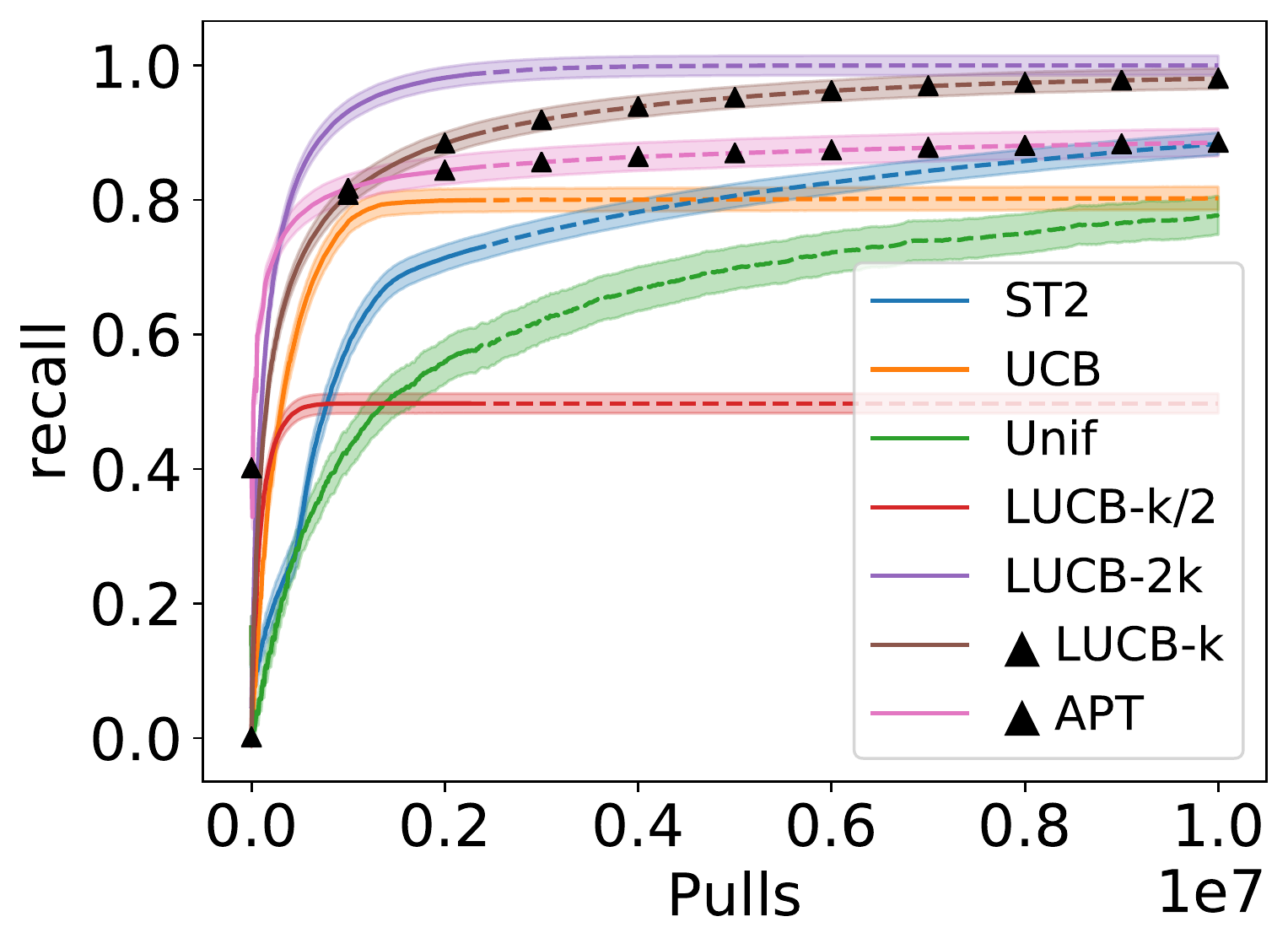}
%   \vspace{-1cm}
  \caption{Recall}
\end{subfigure}
% \vspace{-1em}
\caption{F1, Precision, and Recall scores on the New Yorker Caption Contest with $\epsilon = 0.15$}
\label{fig:tny_sup_eps_15}
\end{figure}

\begin{figure}
\centering
\begin{subfigure}{.32\textwidth}
  \centering
  \includegraphics[width=\linewidth]{mult_oracle_epspt1_delpt1_648rep.pdf}
  \vspace{-1em}
  \caption{F1}
\end{subfigure}
\begin{subfigure}{.32\textwidth}
  \centering
  \includegraphics[width=\linewidth]{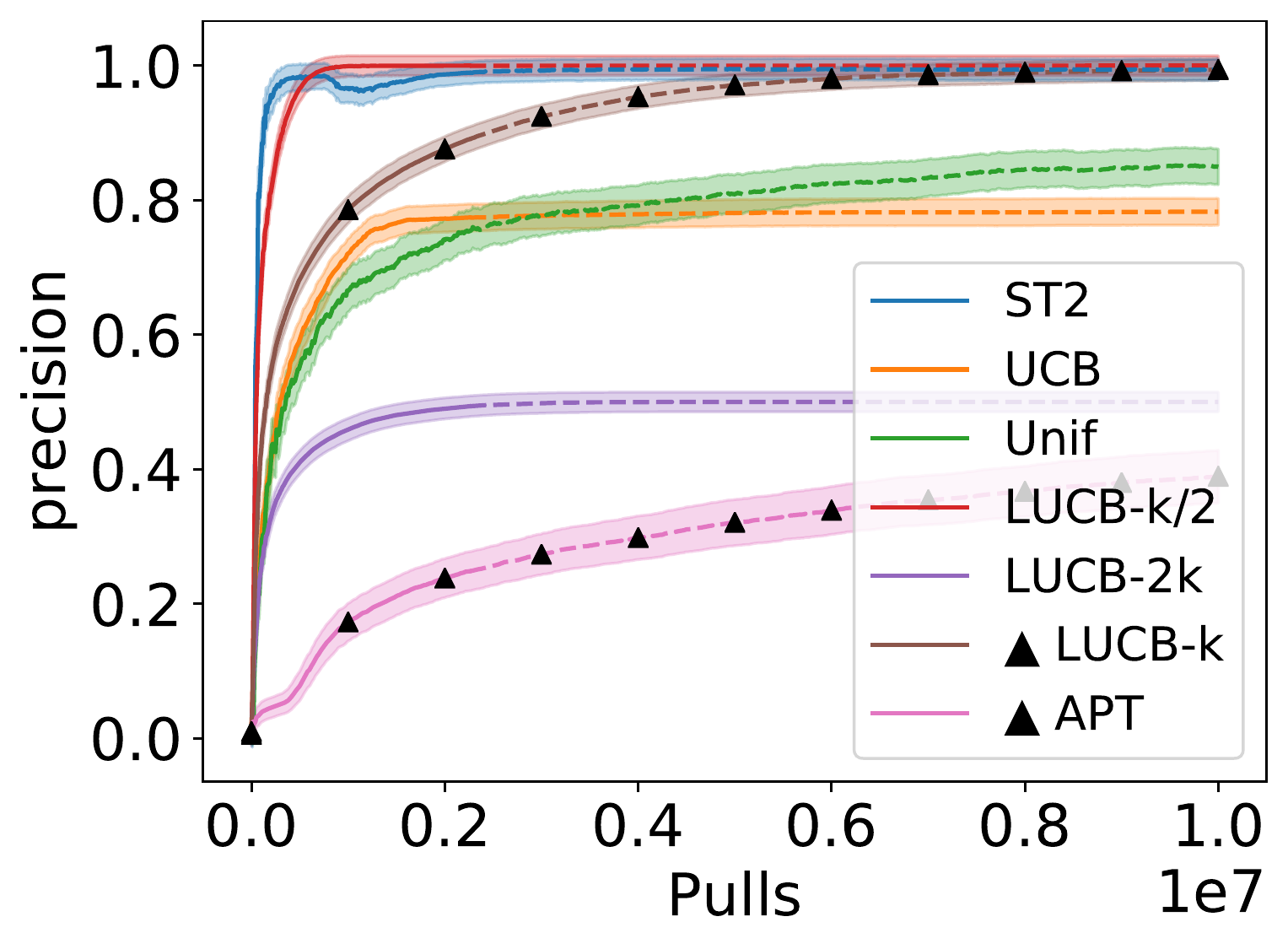}
%   \vspace{-1em}
  \caption{Precision}
\end{subfigure}
\begin{subfigure}{.32\textwidth}
  \centering
  \includegraphics[width=\linewidth]{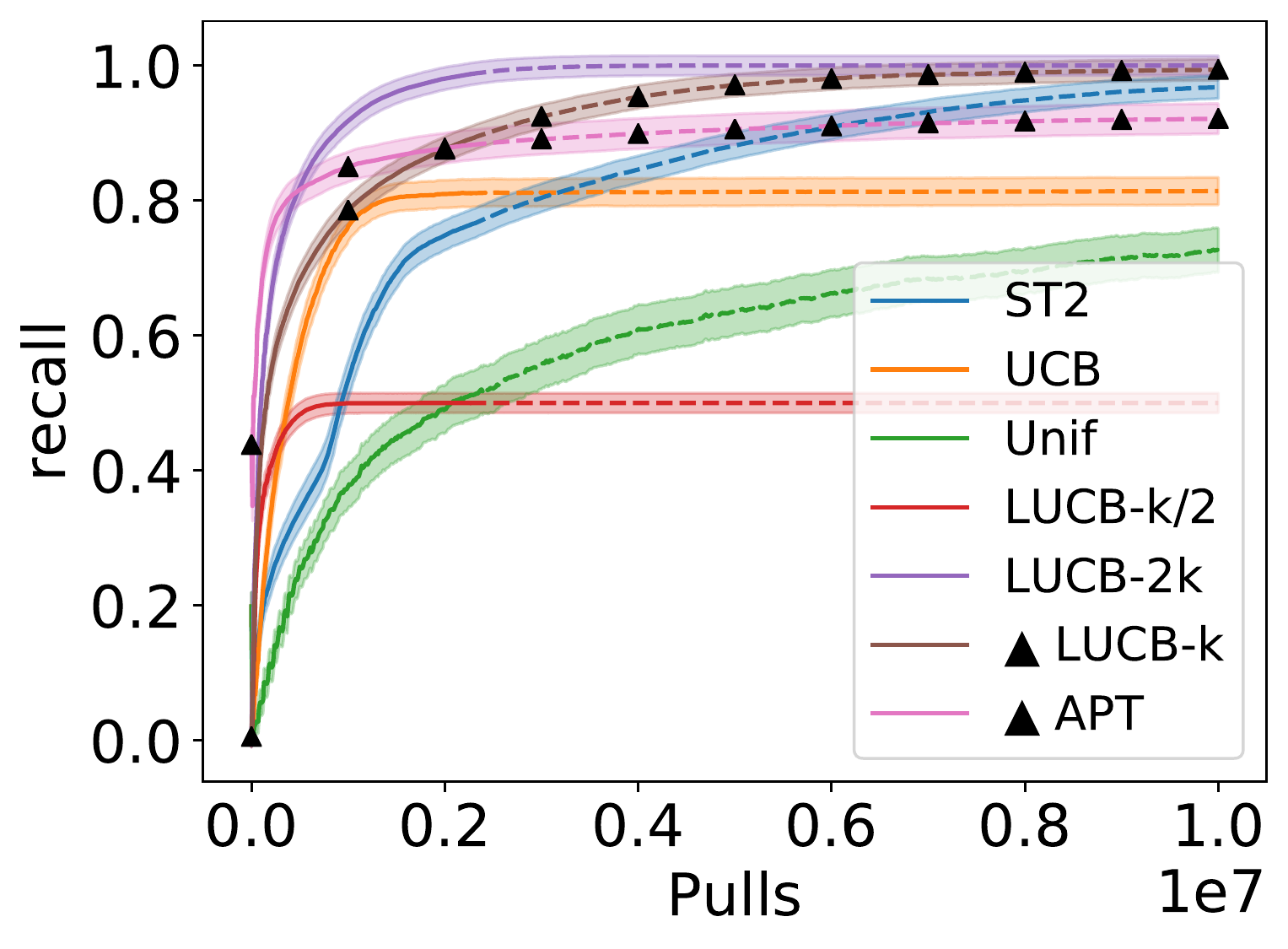}
%   \vspace{-1cm}
  \caption{Recall}
\end{subfigure}
% \vspace{-1em}
\caption{F1, Precision, and Recall scores on the New Yorker Caption Contest with $\epsilon = 0.1$}
\label{fig:tny_sup_eps_1}
\end{figure}

\textbf{Protein Kinase Inhibitors for Cancer Drug Discovery}

Additionally, we consider a second, medically focused experiment. In 2013, researchers at GlaxoSmithKline published a dataset of protein kinase inhibitors different kinases (PKIS1), primarily from humans \cite{dranchak2013profile}. Kinases are a family of enzymes present in many cells and researchers are interested in developing targeted kinase inhibitors to as a new way to treat cancer \cite{christmann2016unprecedently}. The dataset contains numerous measures of how strongly each inhibitor reacts with each kinase. A second, larger dataset (PKIS2) was expanded on by \cite{drewry2017progress}\footnote{The dataset can be downloaded at the following link: \href{https://doi.org/10.1371/journal.pone.0181585.s004}{https://doi.org/10.1371/journal.pone.0181585.s004}.}. For the purpose of our experiment, we selected a single Kinase in the dataset, ACVRL1, which researchers have linked to numerous types of cancer, most prominently bladder and prostate cancers \cite{bocci2019activin}. 
PKIS2 contains $641$ different compounds that were tested as being potential kinase inhibitors, though not every compound was tested against every kinase. In particular, $189$ were tested against ACVRL1. For each compound, there is an associated average ``percent inhibition'' that is reported. All numbers are between $0$ and $1$ and averaged across multiple trials in a single assay. We subtract each number from $1$ to compute the percent control, representing how effective any method is relative to a control, an important metric for estimating how effective that compound is against the target, ACRVL1. A meta-analysis, done by \cite{christmann2016unprecedently}, reported that these values have log-normal distributions with variance less than $1$. Therefore, we compute the log of each percent control and may sample from a normal distribution with that mean and variance $1$. As before, we plot F1, precision, and recall for all methods. To simulate being in a medical research regime where a higher level of precision is often desired, we take $\delta = 0.001$.
We test each method on returning all \multiplicative $\epsilon$-good arms with $\epsilon = 0.8$ and plot the results in Figure~\ref{fig:gsk_experiment_sup}. Note that these curves are plotted on a log-scale to emphasize the early regime of this experiment. It is likewise true here that the oracle baselines perform better on recall than they do on precision. \st2 again performs well with respect to precision, and is more competitive with respect to recall in this experiment. Finally, \st2 is competitive versus oracle methods on F1 score and greatly outperforms UCB and uniform sampling. 

\begin{figure}
\centering
\begin{subfigure}{.45\textwidth}
  \centering
  \includegraphics[width=\linewidth]{mult_oracle_epspt8_delpt001_1000rep.pdf}
%   \vspace{-1em}
  \caption{F1 score}
\end{subfigure}
\begin{subfigure}{.45\textwidth}
  \centering
  \includegraphics[width=\linewidth]{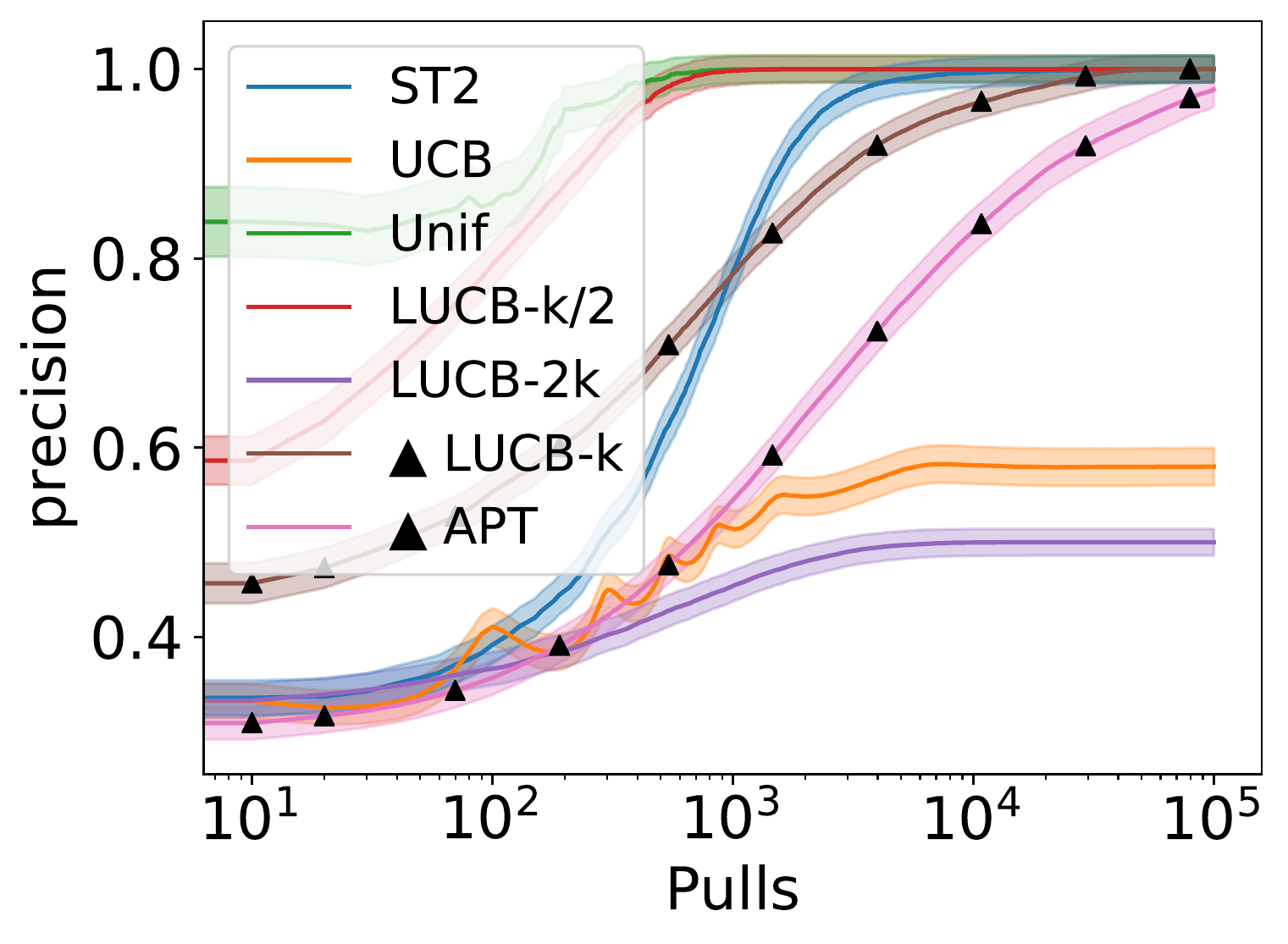}
%   \vspace{-1em}
  \caption{Precision}
\end{subfigure}
\begin{subfigure}{.45\textwidth}
  \centering
  \includegraphics[width=\linewidth]{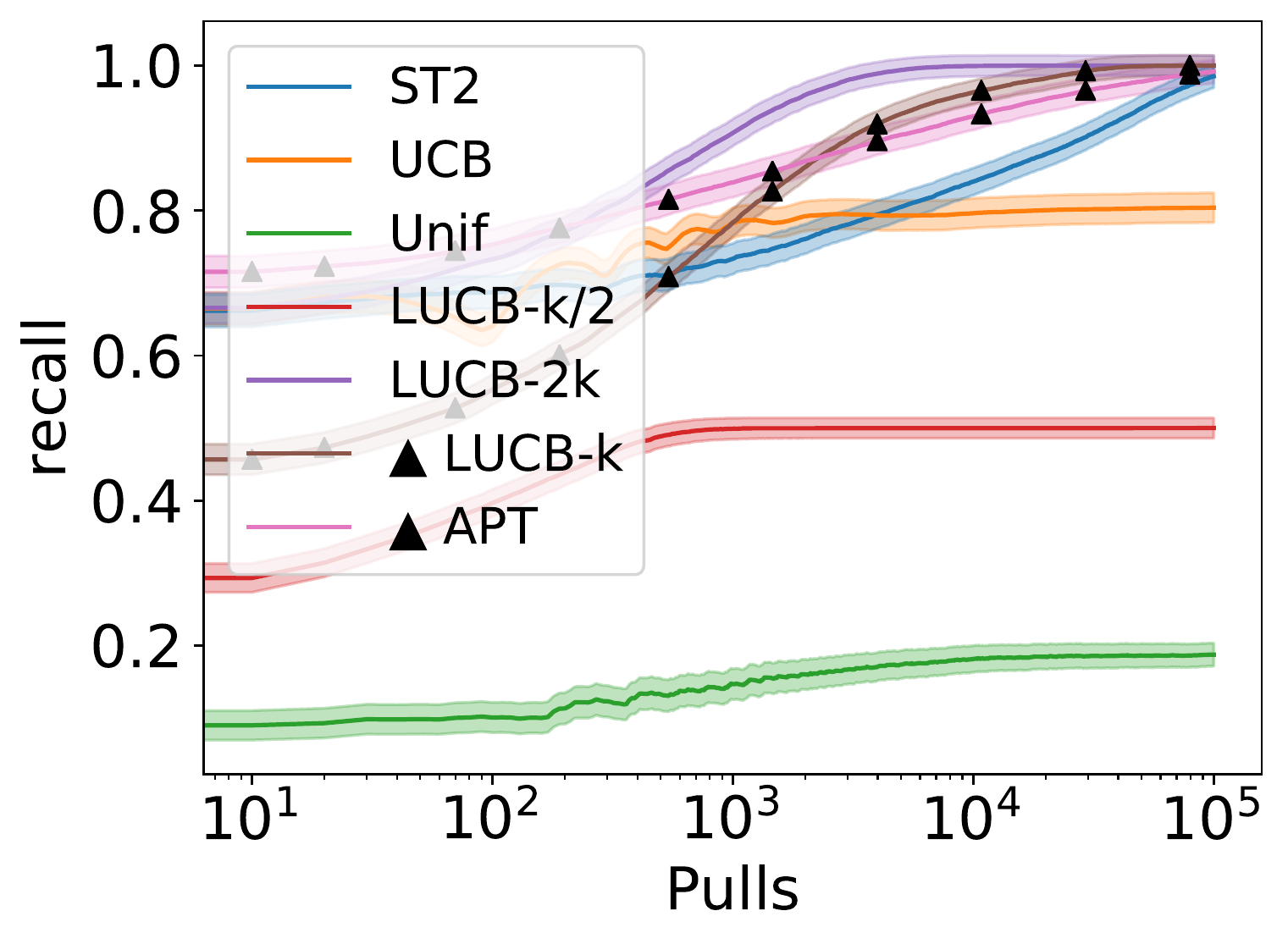}
%   \vspace{-1em}
  \caption{Recall}
\end{subfigure}
% \vspace{-1em}
\caption{Precision and Recall curves for the PKIS2 cancer drug discovery experiment with $\epsilon = 0.8$}
\label{fig:gsk_experiment_sup}
\end{figure}
\clearpage
\section{\st2, An optimism based algorithm for all-$\epsilon$}\label{sec:optimism_appendix}

\begin{algorithm}[h]
\caption{The \st2 Algorithm}
\begin{algorithmic}[1]
\Require{Instance $\nu$, $\epsilon> 0$, $\delta \in (0, 1/2]$, $\gamma \geq 0$ (\blue{$\epsilon \in (0, 1/2]$, and $\gamma \in [0, \min(16/\mu_1, 1/2)]$})}
\State{Pull each arm once, initialize $T_i \leftarrow 1$, update $\hat{\mu}_i$ for each $i \in \{1, 2, \ldots, n\}$}
\State{Empirically good arms: \blake{$\widehat{G} = \{i: \hat{\mu}_i \geq \max_j \hat{\mu}_j - \epsilon\}$} or \blue{$\widehat{G} = \{i: \hat{\mu}_i \geq (1-\epsilon)\max_j \hat{\mu}_j\}$}}
\State{\blake{$U_t = \max_j \hat{\mu}_j(T_j) + C_{\delta/n}(T_j) - \epsilon - \gamma$} or \blue{$U_t = (1 - \epsilon - \gamma) \left(\max_j \hat{\mu}_j(t) + C_{\delta/n}(T_j)\right)$}}
\State{\blake{$L_t = \max_j \hat{\mu}_j(T_j) - C_{\delta/n}(T_j) - \epsilon$} or \blue{$L_t = (1 - \epsilon) \left(\max_j \hat{\mu}_j(t) - C_{\delta/n}(T_j)\right)$}}
\State{Known arms: $K = \{i: \hat{\mu}_i(T_i) + C_{\delta/n}(T_i) < L_t \text{ or } \hat{\mu}_i(T_i) - C_{\delta/n}(T_i) > U_t\}$}
\While{$K \neq [n] $}
\State{Pull arm $i_1(t) = \arg\min_{i \in \widehat{G}\backslash K}  \hat{\mu}_i(T_i) - C_{\delta/n}(T_i)$, update $T_{i_1}, \hat{\mu}_{i_1}$}
\State{Pull arm $i_2(t) = \arg\max_{i \in \widehat{G}_{\epsilon}^c\backslash K} \hat{\mu}_i(T_i) + C_{\delta/n}(T_i)$, update $T_{i_2}, \hat{\mu}_{i_2}$}
\State{Pull arm $i^\ast(t) = \arg\max_{i} \hat{\mu}_i(T_i) + C_{\delta/n}(T_i)$, update $T_{i^\ast}, \hat{\mu}_{i^\ast}$}
\State{Update bounds $L_t, U_t$, sets $\widehat{G}$, $K$}

%\State{Update bounds and empirical means}
\EndWhile
\Return The set of good arms $\{i: \hat{\mu}_i(T_i) - C_{\delta/n}(T_i) > U_t\}$
\end{algorithmic}
\end{algorithm}

\subsection{Optimism with additive $\gamma$}

\begin{theorem}
Fix $\epsilon \geq 0$, $0< \delta \leq 1/2$, $\gamma \in [0, 16]$ and an instance $\nu$ such that $\max(\Delta_i, |\epsilon - \Delta_i|) \leq 8$ for all $i$. In the case that $G_{\epsilon} = [n]$, let $\alpha_\epsilon = \min(\alpha_\epsilon, \beta_\epsilon)$. 
With probability at least $1-\delta$, $\st2$ correctly returns a set $G$ such that $G_{\epsilon} \subset G \subset G_{\epsilon+\gamma}$ in at most
\begin{align*}
    12\sum_{i=1}^n & \min\left\{\max\left\{\frac{1024}{(\mu_1 - \epsilon - \mu_i)^2}\log\left(\frac{2n}{\delta}\log_2\left(\frac{3072n}{\delta(\mu_1 - \epsilon - \mu_i)^2}\right) \right),\right.\right.\\
&\hspace{3cm} \frac{4096}{(\mu_1 + \alpha_\epsilon - \mu_i)^2}\log\left(\frac{2n}{\delta}\log_2\left(\frac{12288n}{\delta(\mu_1 + \alpha_\epsilon - \mu_i)^2}\right) \right), \\
& \hspace{3cm}\left. \frac{4096}{(\mu_1 + \beta_\epsilon -\mu_i)^2}\log\left(\frac{2n}{\delta}\log_2\left(\frac{12288n}{\delta(\mu_1 + \beta_\epsilon - \mu_i)^2}\right) \right) \right\}, \\
& \hspace{2cm}\left.\frac{1}{\gamma^2}\log\left(\frac{2n}{\delta}\log_2\left(\frac{3072n}{\delta\gamma^2}\right) \right)\right\}
\end{align*}
samples. 
\end{theorem}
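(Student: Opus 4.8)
The plan is to track the algorithm run under the good event $\cE$ on which all the anytime confidence bounds hold simultaneously: $\P\big(\bigcup_{i,t}\{|\widehat\mu_i(t)-\mu_i|>C_{\delta/n}(T_i)\}\big)\le\delta$ by the definition of $C_{\delta/n}$ and a union bound over the $n$ arms. On $\cE$ correctness is essentially immediate: since $\widehat\mu_1(T_1)$ is within $C_{\delta/n}(T_1)$ of $\mu_1$ and $U_t,L_t$ are built from $\max_j(\widehat\mu_j\pm C_{\delta/n})$, one has $L_t\le\mu_1-\epsilon$ and $U_t\ge\mu_1-\epsilon-\gamma$ at every step; an arm added to $\widehat G$ has $\widehat\mu_i-C_{\delta/n}>U_t\ge\mu_1-\epsilon-\gamma$ so $\mu_i>\mu_1-\epsilon-\gamma$, i.e.\ $i\in G_{\epsilon+\gamma}$, while an arm put in $K$ via the lower branch has $\mu_i<\mu_1-\epsilon\le\mu_1-\epsilon$, i.e.\ $i\in G_\epsilon^c$; and any $i\in G_\epsilon$ eventually has $\widehat\mu_i-C_{\delta/n}>\mu_1-\epsilon-\gamma\ge U_t$ once the widths are small enough. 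So $G_\epsilon\subset\widehat G\subset G_{\epsilon+\gamma}$. The special-case convention $\alpha_\epsilon:=\min(\alpha_\epsilon,\beta_\epsilon)$ when $G_\epsilon=[n]$ just handles the degenerate situation where there is no arm below the threshold, so $\beta_\epsilon$ is undefined; the third term in the $\max$ is then vacuous and the $\alpha_\epsilon$ redefinition keeps the bound finite.

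The sample-complexity core is a per-arm stopping-time argument: bound, for each arm $i$, the largest number of pulls $T_i$ it can receive before it is placed in $K$. An arm $i$ is pulled only if it is the UCB leader ($i^\ast$), or it is the least-lower-confidence arm in $\widehat G\setminus K$ ($i_1$), or the greatest-upper-confidence arm in $\widehat G^c\setminus K$ ($i_2$). For the $i^\ast$ role, standard UCB analysis shows arm $i\ne 1$ is the leader only while $C_{\delta/n}(T_i)\gtrsim \Delta_i=\mu_1-\mu_i$, giving $T_i\lesssim \Delta_i^{-2}\log(\cdot)$ pulls in that role; and $\Delta_i^{-2}=(\mu_1-\mu_i)^{-2}$ is dominated by the three terms in the $\max$ (note $\mu_1-\mu_i\ge\mu_1+\alpha_\epsilon-\mu_i$ is false, but $\mu_1-\mu_i$ sits between $\mu_1-\epsilon-\mu_i$ and $\mu_1+\alpha_\epsilon-\mu_i$ up to constants when $\Delta_i$ is comparable to $\epsilon,\alpha_\epsilon,\beta_\epsilon$, which is exactly the role of the hypothesis $\max(\Delta_i,|\epsilon-\Delta_i|)\le 8$). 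For the $i_1$/$i_2$ roles one argues: once $U_t$ and $L_t$ have converged to within a constant factor of the true threshold $\mu_1-\epsilon$ — which happens after arm $1$ (or whichever arm attains $\mu_1$) has been pulled $\gtrsim \min(\alpha_\epsilon,\beta_\epsilon)^{-2}\log(\cdot)$ times, itself controlled by the UCB-leader analysis applied to all arms near $\mu_1$ — an arm $i$ chosen as $i_1$ or $i_2$ but not yet in $K$ must have its confidence interval straddle $U_t$ or $L_t$, hence $C_{\delta/n}(T_i)\gtrsim |\mu_i-(\mu_1-\epsilon)|$ or $\gtrsim$ the relevant gap to $U_t$; combined with the $\gamma$-slack (a width below $\Omega(\gamma)$ is never needed since $U_t-L_t\le\gamma+o(1)$) this yields $T_i\lesssim \max\{(\mu_1-\epsilon-\mu_i)^{-2},(\mu_1+\alpha_\epsilon-\mu_i)^{-2},(\mu_1+\beta_\epsilon-\mu_i)^{-2}\}\wedge\gamma^{-2}$, up to the stated constants and the doubly-logarithmic factor coming from inverting $C_{\delta/n}(t)=\sqrt{c_\phi\log(\log_2(2t)/\delta)/t}$. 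Summing over $i$ and over the three roles (a factor absorbed into the constant $12$) gives the claimed bound.

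I expect the main obstacle to be the $i_1$/$i_2$ accounting before the threshold estimate has stabilized: a priori the algorithm could waste pulls on $i_1$ or $i_2$ while $U_t,L_t$ are still loose, and one must show this wasted effort is itself charged to the $\max$-terms. The key insight is that $i^\ast$ is pulled every round, so the pulls controlling the convergence of $\max_j(\widehat\mu_j\pm C_{\delta/n})$ to $\mu_1-\epsilon$ accumulate at least as fast as the pulls to $i_1,i_2$; hence any arm that is still being split by a stale threshold has, by that same point, received only $O(\text{term})$ pulls because the threshold gap it is being compared against is within a constant of the true gap. Making this ``coupling'' between the UCB progress and the LUCB progress precise — in particular handling the interplay of $\alpha_\epsilon$ (how the threshold can be over-estimated) and $\beta_\epsilon$ (how it can be under-estimated), which is why both appear in the $\max$ — is the technical heart of the argument; everything else is the routine inversion of the anytime bound and bookkeeping of constants.
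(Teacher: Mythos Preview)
Your overall architecture is right and tracks the paper's proof closely: define the good event $\cE$, show correctness on $\cE$ via $L_t\le\mu_1-\epsilon\le\mu_1-\epsilon-\gamma\le U_t$ (minor sign slip there, but the intent is clear), then decompose the rounds according to the roles $i^\ast,i_1,i_2$ and use the fact that the $i^\ast$-pull in every round drives $U_t,L_t$ toward the true threshold. The paper does exactly this, organizing the count as $|\{t:\neg\text{STOP}\}|$ split by conditions on $i^\ast,i_1,i_2$ at each round rather than by arm, but that is a bookkeeping choice.

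There is, however, a concrete gap in your $i^\ast$-role accounting. The claim that $\Delta_i^{-2}$ is dominated by the three terms in the $\max$ is false in general: take $\Delta_i$ tiny (an arm nearly tied with arm~1) while $\alpha_\epsilon,\beta_\epsilon$ are moderate---then $\Delta_i^{-2}\gg(\Delta_i+\alpha_\epsilon)^{-2}$ and $\gg(\epsilon-\Delta_i)^{-2}$. The hypothesis $\max(\Delta_i,|\epsilon-\Delta_i|)\le 8$ does \emph{not} rescue this; its only role in the paper is the mundane one of keeping the arguments of $h(\cdot,\cdot)$ below $2$ so that the inversion lemma for $C_\delta(t)$ applies. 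The correct mechanism is the one you allude to later but do not apply here: you do not need to count \emph{all} rounds in which $i$ is the UCB leader, only those before the threshold has stabilized to precision $\omega:=\max\{\gamma,\min(\alpha_\epsilon,\beta_\epsilon)\}$. The paper makes this explicit by splitting on whether $C_{\delta/n}(T_{i^\ast}(t))>\omega/16$, so the per-arm $i^\ast$-contribution is $\min\{h(\Delta_i,\cdot),\,h(\omega,\cdot)\}$, not $h(\Delta_i,\cdot)$ alone. The passage from this $\min$ to the $\max$-of-three form is then the elementary observation $\min\{h(a,\cdot),h(b,\cdot)\}\le h((a+b)/2,\cdot)$ (monotonicity of $h$ in its first argument), applied with $a=\Delta_i$ and $b\in\{\alpha_\epsilon,\beta_\epsilon\}$. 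Once you cap the $i^\ast$ count at $h(\omega,\cdot)$, your ``coupling'' paragraph becomes the actual argument rather than a heuristic, and the rest of the analysis (crossing events for $i_1\in G_{\epsilon+\gamma/2}^c$ or $i_2\in G_{\epsilon+\gamma/2}$, then the final stopping events $E_1,E_2$) goes through as you sketch.
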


\begin{proof}
Throughout the proof, recall that $\Delta_i = \mu_1 - \mu_i$ for all $i$, $\alpha_\epsilon = \min_{i \in G_{\epsilon}}\mu_i - (\mu_1 - \epsilon)$, and $\beta_\epsilon = \min_{i \in G_{\epsilon}^c}(\mu_1 - \epsilon) - \mu_i$. Additionally, at any time $t$, we will take $T_j(t)$ to denote the number of samples of arm $j$ up to time $t$.

Define the event
\begin{align*}
    \cE = \left\{\bigcap_{i \in [n]}\bigcap_{t \in \N} |\hat{\mu}_i(t) - \mu_i| \leq C_{\delta/n}(t)\right\}.
\end{align*}
Using standard anytime confidence bound results, and recalling that that $C_\delta(t) := \sqrt{\frac{4\log(\log_2(2t)/\delta)}{t}}$, we have
\begin{align*}
    \P(\cE^c) & = \P\left( \bigcup_{i \in [n]}\bigcup_{t \in \N} |\hat{\mu}_i - \mu_i| > C_{\delta/n}(t)\right) \\
    & \leq \sum_{i=1}^n\P\left( \bigcup_{t \in \N} |\hat{\mu}_i - \mu_i| > C_{\delta/n}(t)\right) 
    \leq \sum_{i=1}^n\frac{\delta}{n} 
    = \delta
\end{align*}
Hence, $\P\left(\cE\right) \geq 1 - \delta$. Throughout, we will make use of a function $h(x, \delta)$ such that if $t \geq h(x, \delta)$, then $C_\delta(t) \leq |x|$. We bound $h(\cdot, \cdot)$ in Lemma~\ref{lem:inv_conf_bounds}. $h(\cdot, \cdot)$ is assumed to decrease monotonically in both arguments and is symmetric in its first argument. 

\subsubsection{Step 0: Correctness} 
We begin by showing that on $\cE$, if $\st2$ terminates, it returns a set $G$ such that $G_{\epsilon} \subset G \subset G_{\epsilon+\gamma}$. Since $\P\left(\cE\right) \geq 1 - \delta$, this implies that $\st2$ is correct with high probability. 

\textbf{Claim 0:} On Event $\cE$, at all times $t$, $U_t \geq \mu_1 - \epsilon - \gamma$.

\textbf{Proof.} 
\begin{align*}
    U_t = \max_j \hat{\mu}_j(T_j(t)) + C_{\delta/n}(T_j(t)) - \epsilon - \gamma &\geq \hat{\mu}_1(T_1(t)) + C_{\delta/n}(T_1(t)) - \epsilon - \gamma  \\
    & \stackrel{\cE}{\geq} \mu_1 - \epsilon - \gamma
\end{align*}
\qed

\textbf{Claim 1:} On Event $\cE$, at all times $t$, $L_t  \leq \mu_1 - \epsilon$.

\textbf{Proof.} 
\begin{align*}
    L_t = \max_j \hat{\mu}_j(T_j(t)) - C_{\delta/n}(T_j(t)) - \epsilon \stackrel{\cE}{\leq} \max_j\mu_j - \epsilon = \mu_1 - \epsilon
\end{align*}
\qed

\textbf{Claim 2:} On event $\cE$, if there is a time $t$ such that $\hat{\mu}_i(T_i(t)) - C_{\delta/n}(T_i(t)) > U_t$, then $i \in G_{\epsilon+\gamma}$. 

\textbf{Proof.} 
Assume for some $t$, $\hat{\mu}_i(T_i(t)) - C_{\delta/n}(T_i(t)) > U_t$. Then
\begin{align*}
    \mu_i \stackrel{\cE}{\geq} \hat{\mu}_i(T_i(t)) - C_{\delta/n}(T_i(t)) \geq U_t \stackrel{\text{Claim 0}}{\geq} \mu_1 - \epsilon - \gamma
\end{align*}
which implies $i \in G_{\epsilon+\gamma}$
\qed 

\textbf{Claim 3:}
On event $\cE$, if there is a time $t$ such that $\hat{\mu}_i(T_i(t)) + C_{\delta/n}(T_i(t)) < L_t$, then $i \in G_{\epsilon}^c$. 

\textbf{Proof.} Assume that is a $t$ for which $\hat{\mu}_i(T_i(t)) + C_{\delta/n}(T_i(t)) < L_t$. Then
\begin{align*}
    \mu_i \stackrel{\cE}{\leq} \hat{\mu}_i(T_i(t)) + C_{\delta/n}(T_i(t)) \leq L_t \stackrel{\text{Claim 1}}{\leq} \mu_1 - \epsilon
\end{align*}
which implies $i \in G_{\epsilon}^c$.
\qed

$\st2$ terminates at any time $t$ such that simultaneously for all arms $i$, either $\hat{\mu}_i(T_i(t)) + C_{\delta/n}(T_i(t)) > U_t$ or $\hat{\mu}_i(T_i(t)) - C_{\delta/n}(T_i(t)) < L_t$. On $\cE$, by Claim $3$, $G_{\epsilon} \subset \{i: \hat{\mu}_i(T_i(t)) + C_{\delta/n}(T_i(t)) > U_t\}$. On $\cE$, by Claim $2$, $\{i: \hat{\mu}_i(T_i(t)) + C_{\delta/n}(T_i(t)) > U_t\} \subset G_{\epsilon+\gamma}$. Hence, on the event $\cE$. $\st2$ returns a set $G$ such that $G_{\epsilon} \subset G \subset G_{\epsilon+\gamma}$.

\subsubsection{Step 1: Complexity of estimating the threshold, $\mu_1 - \epsilon$}

Let $\text{STOP}$ denote the termination event that for all arms $i$, either $\hat{\mu}_i(T_i(t)) + C_{\delta/n}(T_i(t)) > U_t$ or $\hat{\mu}_i(T_i(t)) - C_{\delta/n}(T_i(t)) < L_t$. 
Let $\omega$ denote the quantity
$$\omega := \max\{\gamma, \min(\alpha_\epsilon, \beta_\epsilon)\}.$$
Let $T$ denote the random variable of the total number of rounds before $\st2$ terminates. At most $3$ samples are drawn in any round. Hence, the total sample complexity is bounded by $3T$. We may write $T$ as 
\begin{align*}
    T := |\{t: \neg \text{STOP} \}| = |\{t: \neg \text{STOP} \text{ and } i^\ast \notin G_{\omega}\}| + |\{t: \neg \text{STOP} \text{ and } i^\ast \in G_{\omega}\}|
\end{align*}
Next, we bound the first event in this decomposition.

\textbf{Claim 0:} On $\cE$, 

$|\{t: \neg \text{STOP} \text{ and } i^\ast \notin G_{\omega}\}| \leq \sum_{i \in G_{\omega}^c} \min\left\{h\left(\frac{\gamma}{2}, \frac{\delta}{n} \right), \min\left[ h\left(\frac{\Delta_i}{2}, \frac{\delta}{n} \right), h\left(\frac{\min(\alpha_\epsilon, \beta_\epsilon)}{2}, \frac{\delta}{n} \right) \right]\right\}$. 

\textbf{Proof.}
If for each $i \in G_{\omega}^c$, $\mu_i + 2C_{\delta/n}(T_i(t)) < \mu_1$ is true, which is ensured when $T_i(t) > h\left(\Delta_i/2, \frac{\delta}{n} \right)$ for all $i  \in G_\omega^c$, then% implies that 
\begin{align*}
    \hat{\mu}_i(T_i(t)) + C_{\delta/n}(T_i(t)) \stackrel{\cE}{\leq } \mu_i + 2C_{\delta/n}(T_i(t)) < \mu_1 \stackrel{\cE}{\leq} \hat{\mu}_1(T_1(t)) + C_{\delta/n}(T_1(t))
\end{align*}
which implies that $i \neq i^\ast$. Additionally, since $i \in G_{\omega}^c$ by assumption, we have that 
$\mu_1 - \omega - \mu_i \geq 0$, which reduces to $\Delta_i \geq \omega$. Since $\omega = \max(\gamma, \min(\alpha_\epsilon, \beta_\epsilon))$, it is likewise true that 
$$h\left(\frac{\Delta_i}{2}, \frac{\delta}{n} \right) = \min\left[h\left(\frac{\gamma}{2}, \frac{\delta}{n} \right), \min\left\{ h\left(\frac{\Delta_i}{2}, \frac{\delta}{n} \right), h\left(\frac{\min(\alpha_\epsilon, \beta_\epsilon)}{2}, \frac{\delta}{n} \right) \right]\right\}.$$ 
Summing over all $i \in G_{\omega}^c$ achieves the result. 
\qed

We may decompose the set $\{t: \neg \text{STOP} \text{ and } i^\ast \in G_{\omega}\}$ as 
% \begin{align*}
% &\left\{t: \neg \text{STOP} \text{ and } i^\ast \in G_{\omega} \text{ and } \exists i \in G_{\omega} :  C_{\delta/n}(T_i(t)) > \frac{\omega}{8}\right\} \\
%  &\hspace{1cm} \cup \left\{t: \neg \text{STOP} \text{ and } i^\ast \in G_{\omega} \text{ and } C_{\delta/n}(T_{i^\ast}(t)) \leq \frac{\omega}{16}\right\}
% \end{align*}
\begin{align*}
&\left\{t: \neg \text{STOP} \text{ and } i^\ast \in G_{\omega} \text{ and } C_{\delta/n}(T_{i^\ast}(t)) > \frac{\omega}{16}\right\} \\
 &\hspace{1cm} \cup \left\{t: \neg \text{STOP} \text{ and } i^\ast \in G_{\omega} \text{ and } C_{\delta/n}(T_{i^\ast}(t)) \leq \frac{\omega}{16}\right\}
\end{align*}
% {\color{cyan}Should the first set say $C(T_i) \geq \omega/8$ \textbf{for some} $i \in G_\omega$? This needs to be a condition on $i^\ast$}

\textbf{Claim 1:} $\left|\left\{t: \neg \text{STOP} \text{ and } i^\ast \in G_{\omega} \text{ and } C_{\delta/n}(T_{i^\ast}(t)) > \frac{\omega}{16} \ %\forall i \in G_{\omega}
\right\}\right| \leq \sum_{i \in G_{\omega}} \min\left\{h\left(\frac{\gamma}{16}, \frac{\delta}{n} \right), \min\left[h\left(\frac{\Delta_i}{8}, \frac{\delta}{n} \right), h\left(\frac{\min(\alpha_\epsilon, \beta_\epsilon)}{16}, \frac{\delta}{n} \right)\right] \right\}$
% \chck{is claim true for $\exists i$?}

\textbf{Proof}. $C_{\delta/n}(T_i(t)) \leq \frac{\omega}{16}$ is true when $T_i(t) \geq h\left(\frac{\omega}{16}, \frac{\delta}{n}\right)$. Since $i^\ast \in G_{\omega}$, $\mu_i - (\mu_1 - \omega) \geq 0$, which implies $\Delta_i \leq \omega$.
By definition, $\omega = \min(\gamma, \min(\alpha_\epsilon, \beta_\epsilon))$. Hence, by monotonicity of $h(\cdot, \cdot)$,
\begin{align*}
   h\left(\frac{\omega}{16}, \frac{\delta}{n} \right) & = \min\left[h\left(\frac{\Delta_i}{16}, \frac{\delta}{n} \right), h\left(\frac{\omega}{16}, \frac{\delta}{n} \right) \right] \\ 
   & = \min\left\{h\left(\frac{\gamma}{16}, \frac{\delta}{n} \right), \min\left[h\left(\frac{\Delta_i}{16}, \frac{\delta}{n} \right), h\left(\frac{\min(\alpha_\epsilon, \beta_\epsilon)}{16}, \frac{\delta}{n} \right)\right] \right\}
\end{align*}
 Summing over all $i \in G_{\omega}$ achieves the desired result. \qed
 
% \textbf{previous proof:}
% \textbf{Claim 1:} $\left|\left\{t: \neg \text{STOP} \text{ and } i^\ast \in G_{\omega} \text{ and } \exists i \in G_{\omega}: C_{\delta/n}(T_{i}(t)) > \frac{\omega}{8} \ %\forall i \in G_{\omega}
% \right\}\right| \leq \sum_{i \in G_{\omega}} \min\left\{h\left(\frac{\gamma}{8}, \frac{\delta}{n} \right), \min\left[h\left(\frac{\Delta_i}{8}, \frac{\delta}{n} \right), h\left(\frac{\min(\alpha_\epsilon, \beta_\epsilon)}{8}, \frac{\delta}{n} \right)\right] \right\}$
% % \chck{is claim true for $\exists i$?}

% \textbf{Proof}. $C_{\delta/n}(T_i(t)) \leq \frac{\omega}{8}$ is true when $T_i(t) \geq h\left(\frac{\omega}{8}, \frac{\delta}{n}\right)$. For, $i \in G_{\omega}$, $\mu_i - (\mu_1 - \omega) \geq 0$, which implies $\Delta_i \leq \omega$.
% By definition, $\omega = \min(\gamma, \min(\alpha_\epsilon, \beta_\epsilon))$. Hence, by monotonicity of $h(\cdot, \cdot)$,
% \begin{align*}
%   h\left(\frac{\omega}{8}, \frac{\delta}{n} \right) & = \min\left[h\left(\frac{\Delta_i}{16}, \frac{\delta}{n} \right), h\left(\frac{\omega}{8}, \frac{\delta}{n} \right) \right] \\ 
%   & = \min\left\{h\left(\frac{\gamma}{8}, \frac{\delta}{n} \right), \min\left[h\left(\frac{\Delta_i}{8}, \frac{\delta}{n} \right), h\left(\frac{\min(\alpha_\epsilon, \beta_\epsilon)}{8}, \frac{\delta}{n} \right)\right] \right\}
% \end{align*}
%  Summing over all $i \in G_{\omega}$ achieves the desired result. \qed

\subsubsection{Step 2: Controlling ``crossing'' events}
Recall that we sample $i_1(t) \in \widehat{G}$ and $i_2(t) \in \widehat{G}^c$. In this section, we control the number of times that $i_1(t) \in G_{\epsilon+ \frac{\gamma}{2}}^c$ and $i_2(t) \in G_{\epsilon+ \frac{\gamma}{2}}$. 

To do so, we first decompose the set $\left\{t: \neg \text{STOP} \text{ and } i^\ast \in G_{\omega} \text{ and } C_{\delta/n}(T_{i^\ast}(t)) \leq \frac{\omega}{16}\right\}$ as
\begin{align*}
    & \left\{t: \neg \text{STOP} \text{ and } i^\ast \in G_{\omega} \text{ and } C_{\delta/n}(T_{i^\ast}(t)) \leq \frac{\omega}{16}\text{ and } i_1(t) \in G_{\epsilon+ \frac{\gamma}{2}}^c\right\} \\
    & \hspace{1cm} \cup \left\{t: \neg \text{STOP} \text{ and } i^\ast \in G_{\omega} \text{ and } C_{\delta/n}(T_{i^\ast}(t)) \leq \frac{\omega}{16} \text{ and } i_1(t) \in G_{\epsilon+ \frac{\gamma}{2}}\right\}
\end{align*}

\textbf{Claim 0: } $\left|\left\{t: \neg \text{STOP} \text{ and } i^\ast \in G_{\omega} \text{ and } C_{\delta/n}(T_{i^\ast}(t)) \leq \frac{\omega}{16}\text{ and } i_1(t) \in G_{\epsilon+ \frac{\gamma}{2}}^c\right\} \right| \leq \sum_{i \in G_{\epsilon+ \frac{\gamma}{2}}^c} \min \left[h\left(\frac{\Delta_i - \epsilon}{8}, \frac{\delta}{n} \right), h\left(\frac{\gamma}{8}, \frac{\delta}{n} \right) \right]$. 

\textbf{Proof.} 
Recall that $\widehat{G}$ is the set of all arms whose empirical means exceed $\max_i \hat{\mu}_i(T_i(t)) - \epsilon$, and $i_1(t) \in \widehat{G}$ by definition. 
Note that
$\max_i \hat{\mu}_i(T_i(t)) - \epsilon > \max_{i}\hat{\mu}_i(T_i(t)) - C_{\delta/n}(T_i(t)) - \epsilon = L_t$. Hence, if an arm's upper bound is below $L_t$, then the arm cannot be in $\widehat{G}$ and thus not be
$i_1(t)$.
By the above event, $C_{\delta/n}(T_{i^\ast}(t)) \leq \frac{\omega}{16}$. Hence,
\begin{align*}
    \mu_i^\ast + \frac{\omega}{8} \geq \mu_i^\ast + 2C_{\delta/n}(T_{i^\ast}(t)) \stackrel{\cE}{\geq} \hat{\mu}_i^\ast(T_{i^\ast}(t)) + C_{\delta/n}(T_{i^\ast}(t)) 
    \geq \hat{\mu}_1(T_{1}(t)) + C_{\delta/n}(T_{1}(t)) 
    \stackrel{\cE}{\geq} \mu_1.
\end{align*}
Therefore, $\mu_{i^\ast} \geq \mu_1 - \frac{\omega}{8}$ or equivalently, 
$i^\ast \in G_{\omega/8}$. Using this, 
\begin{align*}
L_t = \max_{i}\hat{\mu}_i(T_i(t)) - C_{\delta/n}(T_i(t)) - \epsilon
& \geq \hat{\mu}_{i^\ast}(T_{i^\ast}(t)) - C_{\delta/n}(T_{i^\ast}(t)) - \epsilon \\
& \stackrel{\cE}{\geq} \mu_{i^\ast} - 2C_{\delta/n}(T_{i^\ast}(t)) - \epsilon\\
& \stackrel{\cE}{\geq} \mu_{i^\ast} - \frac{\omega}{8} - \epsilon\\
& \geq \mu_1 - \frac{\omega}{4} - \epsilon
\end{align*}
Next, we bound the number of times an arm $i \in G_{\epsilon+ \frac{\gamma}{2}}^c$ is sampled before its upper bound is below $\mu_1 - \frac{\omega}{4} - \epsilon$. Note that $C_{\delta/n}(T_i(t)) < \frac{1}{2}\left(\mu_1 - \frac{\omega}{4} - \epsilon - \mu_i\right)$, true when $T_i(t) > h\left(\frac{1}{2}\left(\mu_1 - \frac{\omega}{4} - \epsilon - \mu_i\right), \frac{\delta}{n} \right)$ implies that 
\begin{align*}
    \hat{\mu}_i(T_i(t)) + C_{\delta/n}(T_i(t))  \stackrel{\cE}{\leq} \mu_i + 2C_{\delta/n}(T_i(t)) 
     < \mu_1 - \frac{\omega}{4} - \epsilon
     \leq L_t.
\end{align*}
Finally, we turn our attention to the difference $\mu_1 - \frac{\omega}{4} - \epsilon - \mu_i$. Recall that $\omega = \max(\gamma, \min(\alpha_\epsilon, \beta_\epsilon))$. 
\begin{align*}
    \mu_1 - \frac{\omega}{4} - \epsilon - \mu_i & = (\mu_1 - \epsilon) - \mu_i - \frac{1}{4}\omega \\
    & = (\mu_1 - \epsilon) - \mu_i - \frac{1}{4}\max(\gamma, \min(\alpha_\epsilon, \beta_\epsilon)).
\end{align*}

By definition, $\beta_\epsilon = \min_{i \in G_{\epsilon}^c} (\mu_1 - \epsilon) - \mu_i$. Hence, $\min(\alpha_\epsilon, \beta_\epsilon) \leq (\mu_1 - \epsilon) - \mu_i$ for all $i \in G_{\epsilon+ \frac{\gamma}{2}}^c$. Similarly, since $i \in G_{\epsilon+ \frac{\gamma}{2}}^c$ by assumption, $(\mu_1 -\epsilon - \frac{\gamma}{2})  - \mu_i \geq 0$, which rearranges to
$\frac{\gamma}{2} \leq (\mu_1 - \epsilon) - \mu_i$.
Therefore, 
\begin{align*}
    (\mu_1 - \epsilon) - \mu_i - \frac{1}{4}\max(\gamma, \min(\alpha_\epsilon, \beta_\epsilon)) \geq \frac{1}{2}\left((\mu_1 - \epsilon) - \mu_i\right) = \frac{\Delta_i - \epsilon}{2}.
\end{align*}
Hence, by monotonicity of $h(\cdot, \cdot)$, 
$$h\left(\frac{1}{2}\left(\mu_1 - \frac{\omega}{4} - \epsilon - \mu_i\right), \frac{\delta}{n} \right) \leq h\left(\frac{\Delta_i - \epsilon}{4}, \frac{\delta}{n} \right).$$
Lastly, as above, since $i \in G_{\epsilon+ \frac{\gamma}{2}}^c$, we have that $\Delta_i - \epsilon = (\mu_1 - \epsilon) - \mu_i \geq \frac{1}{2}\gamma$. Hence,
$$h\left(\frac{\Delta_i - \epsilon}{4}, \frac{\delta}{n} \right) \leq \min \left[h\left(\frac{\Delta_i - \epsilon}{8}, \frac{\delta}{n} \right), h\left(\frac{\gamma}{8}, \frac{\delta}{n} \right) \right]. $$
Putting this together, if $T_i(t) \geq \min \left[h\left(\frac{\Delta_i - \epsilon}{8}, \frac{\delta}{n} \right), h\left(\frac{\gamma}{8}, \frac{\delta}{n} \right) \right]$, then $i \neq i_1(t)$ for all $i \in G_{\epsilon+ \frac{\gamma}{2}}^c$. Summing over all such $i$ bounds the size of set stated in the claim. 
\qed 

We decompose the remaining event $$\left\{t: \neg \text{STOP} \text{ and } i^\ast \in G_{\omega} \text{ and }C_{\delta/n}(T_{i^\ast}(t)) \leq \frac{\omega}{16} \text{ and } i_1(t) \in G_{\epsilon+ \frac{\gamma}{2}}\right\}$$
as 
\begin{align*}
    \left\{t: \neg \text{STOP} \text{ and } i^\ast \in G_{\omega} \text{ and } C_{\delta/n}(T_{i^\ast}(t)) \leq \frac{\omega}{16} \text{ and } i_1(t) \in G_{\epsilon+ \frac{\gamma}{2}} \text{ and } i_2(t) \in G_{\epsilon+ \frac{\gamma}{2}}\right\} \\
    \cup \left\{t: \neg \text{STOP} \text{ and } i^\ast \in G_{\omega} \text{ and } C_{\delta/n}(T_{i^\ast}(t)) \leq \frac{\omega}{16} \text{ and } i_1(t) \in G_{\epsilon+ \frac{\gamma}{2}} \text{ and } i_2(t) \in G_{\epsilon+ \frac{\gamma}{2}}^c\right\}. 
\end{align*}
We proceed by bounding the size of the first set. 

\textbf{Claim 1:} 
\begin{align*}
    \left|\left\{t: \neg \text{STOP} \text{ and } i^\ast \in G_{\omega} \text{ and } C_{\delta/n}(T_{i^\ast}(t)) \leq \frac{\omega}{16} \text{ and } i_1(t) \in G_{\epsilon+ \frac{\gamma}{2}} \text{ and } i_2(t) \in G_{\epsilon+ \frac{\gamma}{2}}\right\}\right| \\
    \leq \sum_{i \in G_{\epsilon+ \frac{\gamma}{2}}} \min \left[h\left(\frac{\epsilon  \Delta_i}{8}, \frac{\delta}{n} \right), h\left(\frac{\gamma}{8}, \frac{\delta}{n} \right) \right]
\end{align*}

\textbf{Proof.}
Recall that $K = \{i: \hat{\mu}(T_i(t)) + C_{\delta/n}(T_i(t)) < L_t \text{ or } \hat{\mu}(T_i(t)) - C_{\delta/n}(T_i(t)) > L_t\}$ and $i_2$ is sampled from the set $\widehat{G}^c \backslash K$, ie all arms in $\widehat{G}^c$ who have not been declared as above $U_t$ or below $L_t$. Hence, if an arm's lower bound exceeds $U_t = \max_{i}\hat{\mu}(T_i(t)) + C_{\delta/n}(T_i(t)) -\epsilon - \gamma$, it must be in $K$ an thus cannot be $i_2$. 
Recall that $i^\ast(t) = \arg\max \hat{\mu}_i(T_i(t)) + C_{\delta/n}(T_i(t))$. By the above event, $i^\ast(t) \in G_{\omega}$ and $C_{\delta/n}(T_{i^\ast}(t)) \leq \frac{\omega}{16}$. Hence,
\begin{align*}
U_t = \max_{i}\hat{\mu}_i(T_i(t)) + C_{\delta/n}(T_i(t)) - \epsilon - \gamma
& = \hat{\mu}_{i^\ast(t)}(T_{i^\ast(t)}(t)) + C_{\delta/n}(T_{i^\ast(t)}(t)) - \epsilon - \gamma
 \\
& \stackrel{\cE}{\leq} \mu_{i^\ast(t)} + 2C_{\delta/n}(T_{i^\ast(t)}(t)) - \epsilon - \gamma
 \\
& \leq \mu_{i^\ast(t)} + \frac{\omega}{8} - \epsilon - \gamma
 \\
 & \leq \mu_{1} + \frac{\omega}{8} - \epsilon - \gamma
\end{align*}
Next, we bound the number of times an arm $i \in G_{\epsilon + \frac{\gamma}{2}}$ is sampled before its lower bound is above $\mu_{1} + \frac{\omega}{8} - \epsilon - \gamma$. Note that $C_{\delta/n}(T_i(t)) <  \frac{1}{2}\left(\mu_i - (\mu_{1} + \frac{\omega}{8} - \epsilon - \gamma)\right)$, true when $T_i(t) > h\left(\frac{1}{2}\left(\mu_i - (\mu_{1} + \frac{\omega}{8} - \epsilon - \gamma)\right), \frac{\delta}{n} \right)$ implies that 
\begin{align*}
    \hat{\mu}_i(T_i(t)) - C_{\delta/n}(T_i(t))  \stackrel{\cE}{\geq} \mu_i - 2C_{\delta/n}(T_i(t)) 
     > \mu_{1} + \frac{\omega}{8} - \epsilon - \gamma.
\end{align*}
Finally, we turn our attention to the difference $\mu_i - (\mu_{1} + \frac{\omega}{8} - \epsilon - \gamma)$. Recall that $\omega = \max(\gamma, \min(\alpha_\epsilon, \beta_\epsilon))$. 
\begin{align*}
    \mu_i - \left(\mu_{1} + \frac{\omega}{8} - \epsilon - \gamma\right) & = \mu_i - (\mu_1 - \epsilon)  + \gamma - \frac{1}{8}\omega 
    % & = \mu_i - (\mu_1 - \epsilon) + \gamma - \frac{1}{8}\max(\gamma, \min(\alpha_\epsilon, \beta_\epsilon)).
\end{align*}

\textbf{Case 1a, $\omega = \min(\alpha_\epsilon, \beta_\epsilon)$ and $i \in G_\epsilon$:}.

By definition, $\alpha_\epsilon = \min_{i \in G_{\epsilon}} \mu_i - (\mu_1 - \epsilon)$ . Hence, $\min(\alpha_\epsilon, \beta_\epsilon) \leq \mu_i - (\mu_1 - \epsilon)$ for all $i \in G_{\epsilon}$. Therefore, 
\begin{align*}
    \mu_i - (\mu_1 - \epsilon)  + \gamma - \frac{1}{8}\omega & = \mu_i - (\mu_1 - \epsilon)  + \gamma - \frac{1}{8}\min(\alpha_\epsilon, \beta_\epsilon) \\
    & \geq \max\left(\mu_i - (\mu_1 - \epsilon)  - \frac{1}{8}\min(\alpha_\epsilon, \beta_\epsilon), \gamma \right) \\
    & \geq \max\left(\frac{7}{8}(\mu_i - (\mu_1 - \epsilon)), \gamma \right)
\end{align*}

\textbf{Case 1b, $\omega = \min(\alpha_\epsilon, \beta_\epsilon)$ and $i \in G_{\epsilon}^c \cap G_{\epsilon+ \frac{\gamma}{2}}$ }

Since $\omega = \max(\gamma, \min(\alpha_\epsilon, \beta_\epsilon))$, if $\omega = \min(\alpha_\epsilon, \beta_\epsilon)$, then $\frac{1}{2}\gamma < \min(\alpha_\epsilon, \beta_\epsilon)$. Since $\min(\alpha_\epsilon, \beta_\epsilon) = \min |\mu_i - (\mu_1 - \epsilon)|$, the set $G_{\epsilon}^c \cap G_{\epsilon+ \frac{\gamma}{2}}$ is empty and there is nothing to prove. 

\textbf{Case 2a, $\omega = \gamma$ and $i \in G_\epsilon $:}

\begin{align*}
    \mu_i - (\mu_1 - \epsilon)  + \gamma - \frac{1}{8}\omega & = \mu_i - (\mu_1 - \epsilon)  + \frac{7}{8}\gamma 
    \geq \max\left(\mu_i - (\mu_1 - \epsilon), \frac{7}{8}\gamma \right)
\end{align*}

\textbf{Case 2b, $\omega = \gamma$ and $i \in G_\epsilon^c \cap G_{\epsilon + \frac{\gamma}{2}} $:}

For $i \in G_\epsilon^c \cap G_{\epsilon + \frac{\gamma}{2}}$, we have that $\mu_i - (\mu_1 - \epsilon - \gamma/2) \geq 0$. Hence $\mu_i - (\mu_1 - \epsilon) \geq \frac{-\gamma}{2}$. Therefore,
\begin{align*}
    \mu_i - (\mu_1 - \epsilon)  + \gamma - \frac{1}{8}\omega & \geq \frac{3}{8}\gamma
    = \max\left(\frac{3}{8}((\mu_1 - \epsilon) - \mu_i), \frac{3}{8}\gamma \right).
\end{align*}

Applying the above cases and using monotonicity of $h(\cdot, \cdot)$, we see that for $i \in G_{\epsilon + \frac{\gamma}{2}}$,
$$h\left(\frac{1}{2}\left(\mu_i - \left(\mu_{1} + \frac{\omega}{8} - \epsilon\right)\right), \frac{\delta}{n} \right) \leq \min\left[ h\left(\frac{\epsilon - \Delta_i}{8}, \frac{\delta}{n} \right), h\left(\frac{\gamma}{8}, \frac{\delta}{n} \right)\right].$$
Hence, if any $i \in G_{\epsilon + \frac{\gamma}{2}}$ has received this many samples, then its lower bound exceeds $U_t$ and thus the arm must be in $\widehat{G}$. 
% Applying this argument to all possible $i_2 \in G_{\epsilon + \frac{\gamma}{2}}$ proves the claim. 
Putting this together, if $T_i(t) \geq \min \left[h\left(\frac{\epsilon - \Delta_i}{8}, \frac{\delta}{n} \right), h\left(\frac{\gamma}{8}, \frac{\delta}{n} \right) \right]$, then $i \neq i_2(t)$ for all $i \in G_{\epsilon + \frac{\gamma}{2}}$. Summing over all such $i$ bounds the size of set stated in the claim. 
\qed

\subsubsection{Step 3: Controlling the complexity until stopping occurs}
In this step, we turn our attention to the final event to control: 
\begin{align}
    \S := \left\{t: \neg \text{STOP} \text{ and } i^\ast \in G_{\omega} \text{ and } C_{\delta/n}(T_{i^\ast}(t)) \leq \frac{\omega}{16} \text{ and } i_1(t) \in G_{\epsilon+ \frac{\gamma}{2}} \text{ and } i_2(t) \in G_{\epsilon + \frac{\gamma}{2}}^c\right\}.\nonumber
\end{align}
For brevity, we will refer to this set as $\S$ for this step. 
The objective will be to bound the time before each arms lower bound either clears $U_t$ or its upper bound clears $L_t$ which implies the stopping condition. To do so, we introduce, two events:
\begin{equation}
E_1(t) := \{ \hat{\mu}_{i_1(t)}(T_{i_1(t)}(t)) - C_{\delta/n}(T_{i_1(t)}(t)) > U_t\}
\end{equation}
and 
% \chck{if an arm's upper bound is below $L_t$, then it cant be $i_1(t)$. the OR is not needed?}
\begin{equation}
E_2(t) := \{\hat{\mu}_{i_2(t)}(T_{i_2(t)}(t)) + C_{\delta/n}(T_{i_2(t)}(t)) < L_t\}.    
\end{equation}
% \chck{simlarly, the first clause in OR not needed?}
If $E_1(t)$ is true, then $\hat{\mu}_i(T_i) - C_{\delta/n}(T_i(t)) > L_t$ for all $i \in \widehat{G}$. If $E_2(t)$ is true, then $\hat{\mu}_i(T_i) + C_{\delta/n}(T_i(t)) < U_t$ for all $i \in \widehat{G}^c$. Hence, by line~\ref{alg:st2_stopping} of $\st2$, if both $E_1(t)$ and $E_2(t)$ are true, then $\st2$ terminates. 

\textbf{Claim 0:} $|\S \cap \{t: \neg E_1(t)\}| \leq \sum_{i\in G_{\epsilon+ \frac{\gamma}{2}}}\min \left[h\left(\frac{\epsilon - \Delta_i}{8}, \frac{\delta}{n} \right), h\left(\frac{\gamma}{8}, \frac{\delta}{n} \right) \right]$. 

\textbf{Proof.}
Recall that by the set $\S$, we have that $i_1(t)\in G_{\epsilon+ \frac{\gamma}{2}}$. 
Furthermore, by the set $\S$, we have that $i^\ast(t) \in G_{\omega}$ and $C_{\delta/n}(T_{i^\ast}(t)) \leq \omega/16$. Hence,
\begin{align*}
U_t & = \max_{i}\hat{\mu}_i(T_i(t)) + C_{\delta/n}(T_i(t)) -\epsilon - \gamma \\
& = \hat{\mu}_{i^\ast(t)}(T_{i^\ast(t)}(t)) + C_{\delta/n}(T_{i^\ast(t)}(t))-\epsilon - \gamma
 \\
& \stackrel{\cE}{\leq} \mu_{i^\ast(t)} + 2C_{\delta/n}(T_{i^\ast(t)}(t))-\epsilon - \gamma
 \\
& \leq \mu_{i^\ast(t)} + \frac{\omega}{8}-\epsilon - \gamma
 \\
 & \leq \mu_{1} + \frac{\omega}{8} -\epsilon - \gamma
\end{align*}
If $C_{\delta/n}(T_i)\leq \frac{1}{2}\left(\mu_i - \left(\mu_{1} + \frac{\omega}{8} -\epsilon - \gamma\right) \right)$ which is true when $T_i \geq h\left(\frac{1}{2}\left(\mu_i - \left(\mu_{1} + \frac{\omega}{8} -\epsilon - \gamma\right)\right), \frac{\delta}{n} \right)$, then 
\begin{align*}
    \hat{\mu}_i(T_i) - C_{\delta/n}(T_i) & \geq \mu_i - 2C_{\delta/n}(T_i) 
    \geq \mu_{1} + \frac{\omega}{8} -\epsilon - \gamma
    \geq U_t.
\end{align*}
The remainder of the proof of this claim focuses on controlling the difference: $
\mu_i - \left(\mu_{1} + \frac{\omega}{8} -\epsilon - \gamma\right)$ in the case that $\omega = \min(\alpha_\epsilon, \beta_\epsilon)$ and $\omega = \gamma$. 
Recall that $\omega = \max(\gamma, \min(\alpha_\epsilon, \beta_\epsilon))$. Hence, if any possible $i \in G_{\epsilon+ \frac{\gamma}{2}}$ has received sufficiently many samples, since $i_1(t) \in G_{\epsilon+ \frac{\gamma}{2}}$, this implies $E_1(t)$.

\textbf{Case 1a, $\omega = \min(\alpha_\epsilon, \beta_\epsilon)$ and $i \in G_{\epsilon}$ }

We focus on the difference $\mu_i - \left(\mu_{1} + \frac{\omega}{8} -\epsilon - \gamma\right)$. 
\begin{align*}
    \mu_i - \left(\mu_{1} + \frac{\omega}{8} -\epsilon - \gamma\right) & = \mu_i - \left(\mu_{1} + \frac{\min(\alpha_\epsilon, \beta_\epsilon)}{8} -\epsilon - \gamma\right) \\
    & = \mu_i - (\mu_1 - \epsilon) + \gamma - \frac{1}{8}\min(\alpha_\epsilon, \beta_\epsilon)\\
    & \stackrel{(\gamma\geq 0)}{\geq} \frac{1}{2}(\mu_i - (\mu_1 - \epsilon)) = \frac{\epsilon - \Delta_i}{2}
\end{align*}
where the final step follows since $\min(\alpha_\epsilon, \beta_\epsilon) \leq \alpha_\epsilon \leq \mu_i - (\mu_1 - \epsilon)$ by definition for all $i\in G_{\epsilon}$. Then by monotonicity of $h(\cdot, \cdot)$, 
$$h\left(\frac{1}{2}\left(\mu_i - \left(\mu_{1} + \frac{\omega}{8} -\epsilon - \gamma\right) \right), \frac{\delta}{n} \right) \leq h\left(\frac{\epsilon - \Delta_i}{4}, \frac{\delta}{n} \right).$$
Lastly, in this setting, $\gamma \leq \min(\alpha_\epsilon, \beta_\epsilon) \leq \epsilon - \Delta_i$ since $\omega = \min(\alpha_\epsilon, \beta_\epsilon)$. Hence, it is trivially true that 
$$h\left(\frac{\epsilon - \Delta_i}{4}, \frac{\delta}{n} \right) = \min \left[h\left(\frac{\epsilon - \Delta_i}{4}, \frac{\delta}{n} \right), h\left(\frac{\gamma}{4}, \frac{\delta}{n} \right) \right]$$

\textbf{Case 1b, $\omega = \min(\alpha_\epsilon, \beta_\epsilon)$ and $i \in G_{\epsilon}^c \cap G_{\epsilon+ \frac{\gamma}{2}}$ }

Since $\omega = \max(\gamma, \min(\alpha_\epsilon, \beta_\epsilon))$, if $\omega = \min(\alpha_\epsilon, \beta_\epsilon)$, then $\frac{1}{2}\gamma < \min(\alpha_\epsilon, \beta_\epsilon)$. Since $\min(\alpha_\epsilon, \beta_\epsilon) = \min |\mu_i - (\mu_1 - \epsilon)|$, the set $G_{\epsilon}^c \cap G_{\epsilon+ \frac{\gamma}{2}}$ is empty and there is nothing to prove. 

\textbf{Case 2a, $\omega = \gamma$ and $i \in G_{\epsilon}$}

Again, we bound the difference $\mu_i - \left(\mu_{1} + \frac{\omega}{4} -\epsilon - \gamma\right)$. 
\begin{align*}
    \mu_i - \left(\mu_{1} + \frac{\omega}{8} -\epsilon - \gamma\right) & = \mu_i - (\mu_1 - \epsilon) + \frac{7}{8}\gamma  
\end{align*}
Since $i \in G_{\epsilon}$, $\mu_i - (\mu_1 - \epsilon) \geq 0$.
Hence,
\begin{align*}
    \mu_i - (\mu_1 - \epsilon) + \frac{7}{8}\gamma  
    & \geq \max\left(\mu_i - (\mu_1 - \epsilon),  \frac{7}{8}\gamma \right) \\
    & \geq \frac{1}{2}\max\left(\epsilon - \Delta_i, \gamma \right)
\end{align*}
Therefore, we have that 
$$h\left(\frac{1}{2}\left(\mu_i - \left(\mu_{1} + \frac{\omega}{8} -\epsilon - \gamma\right) \right), \frac{\delta}{n} \right) \leq h\left(\frac{\epsilon - \Delta_i}{4}, \frac{\delta}{n} \right) $$
and
$$ h\left(\frac{1}{2}\left(\mu_i - \left(\mu_{1} + \frac{\omega}{8} -\epsilon - \gamma\right)  \right), \frac{\delta}{n} \right) \leq h\left(\frac{\gamma}{4}, \frac{\delta}{n} \right).$$
Hence, 
$$h\left(\frac{1}{2}\left(\mu_i - \left(\mu_{1} + \frac{\omega}{4} -\epsilon - \gamma\right) \right), \frac{\delta}{n} \right) \leq \min \left[h\left(\frac{\epsilon - \Delta_i}{4}, \frac{\delta}{n} \right), h\left(\frac{\gamma}{4}, \frac{\delta}{n} \right) \right].$$

\textbf{Case 2b, $\omega = \gamma$ and $i \in G_{\epsilon}^c \cap G_{\epsilon+ \frac{\gamma}{2}}$}

As before, 
\begin{align*}
    \mu_i - \left(\mu_{1} + \frac{\omega}{8} -\epsilon - \gamma\right) & = \mu_i - (\mu_1 - \epsilon) + \frac{7}{8}\gamma
\end{align*}
Since $i \in G_{\epsilon}^c \cap G_{\epsilon+ \frac{\gamma}{2}}$, we have that
$\mu_i - (\mu_1 -\epsilon - \frac{\gamma}{2}) \geq 0$. Rearranging implies that
$\mu_i - (\mu_1 - \epsilon) \geq \frac{-1}{2}\gamma$. Hence, 
\begin{align*}
   \mu_i - (\mu_1 - \epsilon) + \frac{7}{8}\gamma
    \geq \frac{3}{8}\gamma.
\end{align*}
Hence, 
\begin{align*}
    h\left(\frac{1}{2}\left(\mu_i - \left(\mu_{1} + \frac{\omega}{8} -\epsilon - \gamma\right) \right), \frac{\delta}{n} \right) & \leq h\left(\frac{\gamma}{8}, \frac{\delta}{n} \right).
\end{align*}
Additionally, as above, if $i \in G_{\epsilon}^c \cap G_{\epsilon+ \frac{\gamma}{2}}$, we have that
$\mu_i - (\mu_1 - \epsilon -\frac{\gamma}{2}) \geq 0$ which implies that
$(\mu_1 - \epsilon) - \mu_i \leq \gamma$. Hence 
\begin{align*}
    h\left(\frac{\gamma}{8}, \frac{\delta}{n} \right) = \min\left[h\left(\frac{\Delta_i - \epsilon}{8}, \frac{\delta}{n} \right), h\left(\frac{\gamma}{8}, \frac{\delta}{n} \right) \right].
\end{align*}
Therefore, if $T_i$ exceeds the above,
then $E_1(t)$ is true for an $i_1 \in G_{\epsilon}^c \cap G_{\epsilon+ \frac{\gamma}{2}}$. 
Combining all cases, and noting that $h(x, \delta) \geq h(x/2, \delta) \ \forall x$, we see that for $i_1 \in G_{\epsilon+ \frac{\gamma}{2}}$, if 
$$T_{i_1(t)}(t) >  \min\left[h\left(\frac{\epsilon - \Delta_i}{8}, \frac{\delta}{n} \right), h\left(\frac{\gamma}{8}, \frac{\delta}{n} \right) \right],$$
Then $E_1(t)$ is true. Summing over all possible $i_1 \in G_{\epsilon+ \frac{\gamma}{2}}$ proves the claim.
\qed

\textbf{Claim 1:} $|\S \cap \{t: E_1(t)
\} \cap \{t: \neg E_2(t)\}| \leq \sum_{i\in G_{\epsilon + \frac{\gamma}{2}}^c}\min \left[h\left(\frac{\epsilon - \Delta_i}{8}, \frac{\delta}{n} \right), h\left(\frac{\gamma}{8}, \frac{\delta}{n} \right) \right]$.

\textbf{Proof.} 
By the events in set $\S$, $C_{\delta/n}(T_{i^\ast}(t)) \leq \frac{\omega}{16}$. Hence,
\begin{align*}
    \mu_i^\ast + \frac{\omega}{8} \geq \mu_i^\ast + 2C_{\delta/n}(T_{i^\ast}(t)) \stackrel{\cE}{\geq} \hat{\mu}_i^\ast(T_{i^\ast}(t)) + C_{\delta/n}(T_{i^\ast}(t)) 
    \geq \hat{\mu}_1(T_{1}(t)) + C_{\delta/n}(T_{1}(t)) 
    \stackrel{\cE}{\geq} \mu_1.
\end{align*}
Therefore, $\mu_{i^\ast} \geq \mu_1 - \frac{\omega}{8}$ or equivalently, 
$i^\ast \in G_{\omega/8}$. Using this, 
\begin{align*}
L_t = \max_{i}\hat{\mu}_i(T_i(t)) - C_{\delta/n}(T_i(t)) - \epsilon
& \geq \hat{\mu}_{i^\ast}(T_{i^\ast}(t)) - C_{\delta/n}(T_{i^\ast}(t)) - \epsilon \\
& \stackrel{\cE}{\geq} \mu_{i^\ast} - 2C_{\delta/n}(T_{i^\ast}(t)) - \epsilon\\
& \stackrel{\cE}{\geq} \mu_{i^\ast} - \frac{\omega}{8} - \epsilon\\
& \geq \mu_1 - \frac{\omega}{4} - \epsilon
\end{align*}

For $i \in G_{\epsilon + \frac{\gamma}{2}}^c$,
if $C_{\delta/n}(T_i)\leq \frac{1}{2}\left(\left(\mu_1 - \frac{\omega}{4} - \epsilon\right) -\mu_i \right)$, true when $T_i \geq h\left(\frac{1}{2}\left(\left(\mu_1 - \frac{\omega}{4} - \epsilon\right) -\mu_i \right), \frac{\delta}{n} \right)$, then 
\begin{align*}
    \hat{\mu}_i(T_i) + C_{\delta/n}(T_i) & \leq \mu_i + 2C_{\delta/n}(T_i) 
    \leq \mu_1 - \frac{\omega}{4} - \epsilon 
    \leq L_t.
\end{align*}
As before, we seek a lower bound for the difference $\left(\mu_1 - \frac{\omega}{4} - \epsilon\right) -\mu_i$.

\textbf{Case 1: $\omega = \min(\alpha_\epsilon, \beta_\epsilon)$}
\begin{align*}
    \left(\mu_1 - \frac{\omega}{4} - \epsilon\right) -\mu_i 
    & = (\mu_1 - \epsilon) - \mu_i- \frac{1}{4}\min(\alpha_\epsilon, \beta_\epsilon) \\
    & \geq \frac{1}{2}\left( (\mu_1 - \epsilon) - \mu_i \right)
\end{align*}
since $(\mu_1 - \epsilon) - \mu_i \geq  \min(\alpha_\epsilon, \beta_\epsilon)$. 
Therefore, we have that 
$$h\left(\frac{1}{2}\left(\left(\mu_1 - \frac{\omega}{4} - \epsilon\right) -\mu_i \right), \frac{\delta}{n} \right) \leq h\left( \frac{\Delta_i - \epsilon}{4}, \frac{\delta}{n} \right).$$
Lastly, in this setting, $\gamma \leq \min(\alpha_\epsilon, \beta_\epsilon) \leq \epsilon - \Delta_i$ since $\omega = \min(\alpha_\epsilon, \beta_\epsilon)$. Hence, it is trivially true that 
$$h\left(\frac{\Delta_i - \epsilon}{4}, \frac{\delta}{n} \right) = \min \left[h\left(\frac{\Delta_i - \epsilon}{4}, \frac{\delta}{n} \right), h\left(\frac{\gamma}{4}, \frac{\delta}{n} \right) \right].$$

\textbf{Case 2: $\omega = \gamma$}

Assume that $\gamma > \min(\alpha_\epsilon, \beta_\epsilon)$, as equality is covered by the previous case. Hence, 
\begin{align*}
    \left(\mu_1 - \frac{\omega}{4} - \epsilon\right) -\mu_i 
    & = (\mu_1 - \epsilon) - \mu_i- \frac{1}{4}\gamma
\end{align*}
Recall that we seek to control 
$i_2 \in G_{\epsilon+ \frac{\gamma}{2}}^c$. For any $i \in G_{\epsilon+ \frac{\gamma}{2}}^c$, we have that $\mu_1 - \epsilon - \frac{\gamma}{2} - \mu_i \geq 0$. Rearranging, we see that $(\mu_1 - \epsilon) - \mu_i \geq \frac{1}{2}\gamma$ which implies that
\begin{align*}
    (\mu_1 - \epsilon) - \mu_i- \frac{1}{4}\gamma &\geq  \frac{1}{2}((\mu_1 - \epsilon) - \mu_i).
\end{align*}
Therefore, we have that 
$$h\left(\frac{1}{2}\left(\left(\mu_1 - \frac{\omega}{4} - \epsilon\right) -\mu_i  \right), \frac{\delta}{n} \right) \leq h\left( \frac{\Delta_i - \epsilon}{4}, \frac{\delta}{n} \right)$$
is this setting as well. Similarly, since $\Delta_i - \epsilon \geq \frac{1}{2}\gamma$, we likewise have that 
$$h\left( \frac{\Delta_i - \epsilon}{4}, \frac{\delta}{n} \right) \leq \min\left[h\left( \frac{\Delta_i - \epsilon}{8}, \frac{\delta}{n} \right), h\left( \frac{\gamma}{8}, \frac{\delta}{n} \right) \right]. $$
Hence, if $T_i$ exceeds the right-hand side of the preceding inequality, then for any $i \in G_{\epsilon+ \frac{\gamma}{2}}^c$, its upper bound is below $L_t$. Hence for $i_2(t) \in G_{\epsilon+ \frac{\gamma}{2}}^c$, this implies event $E_2(t)$. Summing over all possible values of $i_2(t) \in G_{\epsilon+ \frac{\gamma}{2}}^c$ proves the claim. 
\qed 

\textbf{Claim 2:} The cardinality of $\S$ is bounded as $|\S| \leq \sum_{i=1}^n\min\left[h\left( \frac{\Delta_i - \epsilon}{8}, \frac{\delta}{n} \right), h\left( \frac{\gamma}{8}, \frac{\delta}{n} \right) \right]$. 

\textbf{Proof.} 
First, $\S$ may be decomposed as 
$$|\S| =  |\S \cap \{t: \neg E_1(t)\}| + |\S \cap \{t: E_1(t)\} \cap \{t: \neg E_2(t)\}| + |\S \cap \{t: E_1(t)\} \cap \{t: E_2(t)\}|$$
Note that $|\S \cap \{t: E_1(t)\} \cap \{t: E_2(t)\}| = 0$ because we have assumed in set $\S$ that $\st2$ has not stopped, and $\{t: E_1(t)\} \cap \{t: E_2(t)\}$ implies termination. By Claim $0$, $|\S \cap \{t: \neg E_1(t)\}| \leq \sum_{i\in G_{\epsilon+ \frac{\gamma}{2}}}\min \left[h\left(\frac{\epsilon - \Delta_i}{4}, \frac{\delta}{n} \right), h\left(\frac{\gamma}{4}, \frac{\delta}{n} \right) \right]$. By Claim $1$, $|\S \cap \{t: E_1(t)\} \cap \{t: \neg E_2(t)\}| \leq \sum_{i\in G_{\epsilon+ \frac{\gamma}{2}}^c}\min \left[h\left(\frac{\epsilon - \Delta_i}{8}, \frac{\delta}{n} \right), h\left(\frac{\gamma}{8}, \frac{\delta}{n} \right) \right]$. Recalling that $h$ is assumed to be symmetric in its first argument  proves the claim. 
\qed

\subsubsection{Step 4: Putting it all together}

Recall that the total number of rounds $T$ that $\st2$ runs for is given by $T = |\{t: \neg \text{STOP}\}|$. To bound this quantity, we have decomposed the set $\{t: \neg \text{STOP}\}$ into many subsets. Below, we show this decomposition. 
\begin{align*}
     & \{t: \neg \text{STOP}\} = \\
     & \{t: \neg \text{STOP} \text{ and } i^\ast \notin G_{\omega}\} \\
     & \cup 
     \left\{t: \neg \text{STOP} \text{ and } i^\ast \in G_{\omega} \text{ and }  C_{\delta/n}(T_{i^\ast}(t)) > \frac{\omega}{16}\right\} \\
 &\cup \left\{t: \neg \text{STOP} \text{ and } i^\ast \in G_{\omega} \text{ and } C_{\delta/n}(T_{i^\ast}(t)) \leq \frac{\omega}{16} \text{ and } i_1(t) \in G_{\epsilon+ \frac{\gamma}{2}}^c\right\} \\
    & \cup \left\{t: \neg \text{STOP} \text{ and } i^\ast \in G_{\omega} \text{ and } C_{\delta/n}(T_{i^\ast}(t)) \leq \frac{\omega}{16} \text{ and } i_1(t) \in G_{\epsilon+ \frac{\gamma}{2}} \text{ and } i_2(t) \in G_{\epsilon+ \frac{\gamma}{2}}\right\} \\
    & \cup \left\{t: \neg \text{STOP} \text{ and } i^\ast \in G_{\omega} \text{ and } C_{\delta/n}(T_{i^\ast}(t)) \leq \frac{\omega}{16} \text{ and } i_1(t) \in G_{\epsilon+ \frac{\gamma}{2}} \text{ and } i_2(t) \in G_{\epsilon+ \frac{\gamma}{2}}^c\right\}. 
\end{align*}
Hence, by a union bound and plugging in the results of the above steps,
\begin{align*}
     & \left|\{t: \neg \text{STOP}\}\right| \leq \\
     & \left|\{t: \neg \text{STOP} \text{ and } i^\ast \notin G_{\omega}\}\right| \\
     & +
     \left|\left\{t: \neg \text{STOP} \text{ and } i^\ast \in G_{\omega} \text{ and } \exists i \in G_{\omega} :  C_{\delta/n}(T_{i^\ast}(t)) > \frac{\omega}{16}\right\}\right| \\
 & +  \left|\left\{t: \neg \text{STOP} \text{ and } i^\ast \in G_{\omega} \text{ and } C_{\delta/n}(T_{i^\ast}(t)) \leq \frac{\omega}{16}\text{ and } i_1(t) \in G_{\epsilon+ \frac{\gamma}{2}}^c\right\}\right| \\
    & + \left|\left\{t: \neg \text{STOP} \text{ and } i^\ast \in G_{\omega} \text{ and } C_{\delta/n}(T_{i^\ast}(t)) \leq \frac{\omega}{16} \text{ and } i_1(t) \in G_{\epsilon+ \frac{\gamma}{2}} \text{ and } i_2(t) \in G_{\epsilon+ \frac{\gamma}{2}}\right\}\right| \\
    & + \left|\left\{t: \neg \text{STOP} \text{ and } i^\ast \in G_{\omega} \text{ and } C_{\delta/n}(T_{i^\ast}(t)) \leq \frac{\omega}{16} \text{ and } i_1(t) \in G_{\epsilon+ \frac{\gamma}{2}} \text{ and } i_2(t) \in G_{\epsilon+ \frac{\gamma}{2}}^c\right\}\right| \\
    & \leq \sum_{i \in G_{\omega}^c} \min\left\{h\left(\frac{\gamma}{2}, \frac{\delta}{n} \right), \min\left[ h\left(\frac{\Delta_i}{2}, \frac{\delta}{n} \right), h\left(\frac{\min(\alpha_\epsilon, \beta_\epsilon)}{2}, \frac{\delta}{n} \right) \right]\right\} \\
    &\hspace{1cm}  + \sum_{i \in G_{\omega}} \min\left\{h\left(\frac{\gamma}{16}, \frac{\delta}{n} \right), \min\left[h\left(\frac{\Delta_i}{16}, \frac{\delta}{n} \right), h\left(\frac{\min(\alpha_\epsilon, \beta_\epsilon)}{16}, \frac{\delta}{n} \right)\right] \right\} \\
    &\hspace{1cm}  + \sum_{i \in G_{\epsilon+ \frac{\gamma}{2}}^c} \min \left[h\left(\frac{\Delta_i - \epsilon}{8}, \frac{\delta}{n} \right), h\left(\frac{\gamma}{8}, \frac{\delta}{n} \right) \right] \\
    & \hspace{1cm} + \sum_{i \in G_{\epsilon+ \frac{\gamma}{2}}} \min \left[h\left(\frac{\epsilon - \Delta_i}{8}, \frac{\delta}{n} \right), h\left(\frac{\gamma}{8}, \frac{\delta}{n} \right) \right] \\
    &\hspace{1cm}  + \sum_{i=1}^n\min\left[h\left( \frac{\Delta_i - \epsilon}{8}, \frac{\delta}{n} \right), h\left( \frac{\gamma}{8}, \frac{\delta}{n} \right) \right] \\
    & \stackrel{(\epsilon\leq 1/2)}{\leq} \sum_{i=1}^n \min\left\{h\left(\frac{\gamma}{16}, \frac{\delta}{n} \right), \min\left[h\left(\frac{\Delta_i}{16}, \frac{\delta}{n} \right), h\left(\frac{\min(\alpha_\epsilon, \beta_\epsilon)}{16}, \frac{\delta}{n} \right)\right] \right\} \\
    &\hspace{1cm}  + 2\sum_{i=1}^n\min\left[h\left( \frac{\Delta_i - \epsilon}{8}, \frac{\delta}{n} \right), h\left( \frac{\gamma}{8}, \frac{\delta}{n} \right) \right] \\
    & \leq 4\sum_{i=1}^n \min\left\{ \max\left\{h\left( \frac{\Delta_i - \epsilon}{16}, \frac{\delta}{n} \right), \min\left[ h\left(\frac{\Delta_i}{16}, \frac{\delta}{n} \right), h\left(\frac{\min(\alpha_\epsilon, \beta_\epsilon)}{16}, \frac{\delta}{n} \right)\right]\right\},\right.\\ 
    &\hspace{3cm}\left. h\left(\frac{\gamma}{16},  \frac{\delta}{n} \right)\right\} 
\end{align*}
Next, by Lemma~\ref{lem:bound_min_of_h}, we may bound the minimum of $h(\cdot, \cdot)$ functions.
\begin{align*}
    & 4\sum_{i=1}^n \min\left\{ \max\left\{h\left( \frac{\Delta_i - \epsilon}{16}, \frac{\delta}{n} \right), \min\left[ h\left(\frac{\Delta_i}{16}, \frac{\delta}{n} \right), h\left(\frac{\min(\alpha_\epsilon, \beta_\epsilon)}{16}, \frac{\delta}{n} \right)\right]\right\},\right.\\ 
    &\hspace{3cm}\left. h\left(\frac{\gamma}{16},  \frac{\delta}{n} \right)\right\} \\
    & = 4\sum_{i=1}^n \min\left\{ \max\left\{h\left( \frac{\Delta_i - \epsilon}{16}, \frac{\delta}{n} \right),\right.\right.\\ 
    &\hspace{3cm}\left.\left. \min\left[ h\left(\frac{\Delta_i}{16}, \frac{\delta}{n} \right), \max\left[h\left(\frac{\alpha_\epsilon}{16}, \frac{\delta}{n} \right), h\left(\frac{ \beta_\epsilon}{16}, \frac{\delta}{n} \right) \right]\right]\right\},\right.\\ 
    &\hspace{3cm}\left. h\left(\frac{\gamma}{16},  \frac{\delta}{n} \right)\right\} \\
    & \leq 4\sum_{i=1}^n \min\left\{ \max\left\{h\left( \frac{\Delta_i - \epsilon}{16}, \frac{\delta}{n} \right),\right.\right.\\ 
    &\hspace{3cm}\left.\left. \max\left[ h\left(\frac{\Delta_i + \alpha_\epsilon}{32}, \frac{\delta}{n} \right),h\left(\frac{\Delta_i + \beta_\epsilon}{32}, \frac{\delta}{n} \right)\right]\right\},\right.\\ 
    &\hspace{3cm}\left. h\left(\frac{\gamma}{16},  \frac{\delta}{n} \right)\right\} \\
    & = 4\sum_{i=1}^n \min\left\{ \max\left\{h\left( \frac{\Delta_i - \epsilon}{16}, \frac{\delta}{n} \right), h\left(\frac{\Delta_i + \alpha_\epsilon}{32}, \frac{\delta}{n} \right),h\left(\frac{\Delta_i + \beta_\epsilon}{32}, \frac{\delta}{n} \right)\right\},\right.\\ 
    &\hspace{3cm}\left. h\left(\frac{\gamma}{16},  \frac{\delta}{n} \right)\right\} 
\end{align*}

Finally, we use Lemma~\ref{lem:inv_conf_bounds} to bound the function $h(\cdot, \cdot)$.
Since $\delta\leq 1/2$, $\delta/n \leq 2e^{-e/2}$. Further, $|\epsilon - \Delta_i| \leq 8$ for all $i$ and $\epsilon \leq 1/2$ implies that $\frac{1}{8}|\epsilon - \Delta_i| \leq 2$ and $\frac{1}{8}\min(\alpha_\epsilon, \beta_\epsilon) \leq 2$. $\Delta_i \leq 16$ for all $i$, gives $0.125\Delta_i \leq 2$. Lastly, $\gamma \leq 16$ implies that $\frac{\gamma}{8} \leq 2$.
Therefore,
\begin{align*}
    & 4\sum_{i=1}^n \min\left\{ \max\left\{h\left( \frac{\Delta_i - \epsilon}{16}, \frac{\delta}{n} \right), h\left(\frac{\Delta_i + \alpha_\epsilon}{32}, \frac{\delta}{n} \right),h\left(\frac{\Delta_i + \beta_\epsilon}{32}, \frac{\delta}{n} \right)\right\},\right.\\ 
    &\hspace{3cm}\left. h\left(\frac{\gamma}{16},  \frac{\delta}{n} \right)\right\} \\
& \leq 4\sum_{i=1}^n \min\left\{\max\left\{\frac{1024}{(\epsilon - \Delta_i)^2}\log\left(\frac{2n}{\delta}\log_2\left(\frac{3072n}{\delta(\epsilon - \Delta_i)^2}\right) \right),\right.\right.\\
&\hspace{3cm} \frac{4096}{(\Delta_i + \alpha_\epsilon)^2}\log\left(\frac{2n}{\delta}\log_2\left(\frac{12288n}{\delta(\Delta_i + \alpha_\epsilon)^2}\right) \right), \\
& \hspace{3cm}\left. \frac{4096}{(\Delta_i + \beta_\epsilon)^2}\log\left(\frac{2n}{\delta}\log_2\left(\frac{12288n}{\delta(\Delta_i + \beta_\epsilon)^2}\right) \right) \right\}, \\
& \hspace{2cm}\left.\frac{1}{\gamma^2}\log\left(\frac{2n}{\delta}\log_2\left(\frac{3072n}{\delta\gamma^2}\right) \right)\right\} \\
& = 4\sum_{i=1}^n \min\left\{\max\left\{\frac{1024}{(\mu_1 - \epsilon - \mu_i)^2}\log\left(\frac{2n}{\delta}\log_2\left(\frac{3072n}{\delta(\mu_1 - \epsilon - \mu_i)^2}\right) \right),\right.\right.\\
&\hspace{3cm} \frac{4096}{(\mu_1 + \alpha_\epsilon - \mu_i)^2}\log\left(\frac{2n}{\delta}\log_2\left(\frac{12288n}{\delta(\mu_1 + \alpha_\epsilon - \mu_i)^2}\right) \right), \\
& \hspace{3cm}\left. \frac{4096}{(\mu_1 + \beta_\epsilon -\mu_i)^2}\log\left(\frac{2n}{\delta}\log_2\left(\frac{12288n}{\delta(\mu_1 + \beta_\epsilon - \mu_i)^2}\right) \right) \right\}, \\
& \hspace{2cm}\left.\frac{1}{\gamma^2}\log\left(\frac{2n}{\delta}\log_2\left(\frac{3072n}{\delta\gamma^2}\right) \right)\right\}.
\end{align*}
The above bounds the number of rounds $T$. Therefore, the total number of samples is at most $3T$. 
\end{proof}

%&\hspace{1cm} \left.\left.\min \left[\frac{1}{\Delta_i^2}\log\left(\frac{2n}{\delta}\log_2\left(\frac{3072n}{\delta\Delta_i^2}\right) \right), \frac{1}{\min(\alpha_\epsilon, \beta_\epsilon)^2}\log\left(\frac{2n}{\delta}\log_2\left(\frac{768^2n}{\delta\min(\alpha_\epsilon, \beta_\epsilon)^2}\right) \right)\right] \right\},\right.\\

\subsection{Optimism with multiplicative $\gamma$}

\begin{theorem}
Fix $\epsilon \in (0, 1/2]$, $0< \delta \leq 1/2$, $\gamma \in [0, \min(16/\mu_1, 1/2)]$ and an instance $\nu$ such that $\max(\Delta_i, |\epsilon\mu_1 - \Delta_i|) \leq 8$ for all $i$. In the case that $M_\epsilon = [n]$, let $\Tilde{\alpha}_\epsilon = \min(\Tilde{\alpha}_\epsilon, \Tilde{\beta}_\epsilon)$. 
With probability at least $1-\delta$, $\st2$ correctly returns a set $G$ such that $M_\epsilon \subset G \subset M_{\epsilon+\gamma}$ in at most
\begin{align*}
& 12 \sum_{i=1}^n \min\left\{\max\left\{\frac{1024}{((1 - \epsilon)\mu_1 - \mu_i)^2}\log\left(\frac{2n}{\delta}\log_2\left(\frac{3072n}{\delta((1 - \epsilon)\mu_1 - \mu_i)^2}\right) \right),\right.\right.\\
&\hspace{3cm} \frac{4096}{(\mu_1 + \frac{\Tilde{\alpha}_\epsilon}{1-\epsilon} - \mu_i)^2}\log\left(\frac{2n}{\delta}\log_2\left(\frac{12288n}{\delta(\mu_1 + \frac{\Tilde{\alpha}_\epsilon}{1-\epsilon})^2}\right) \right), \\
& \hspace{3cm}\left. \frac{4096}{(\mu_1 + \frac{\Tilde{\beta}_\epsilon}{1-\epsilon} -\mu_i)^2}\log\left(\frac{2n}{\delta}\log_2\left(\frac{12288n}{\delta(\mu_1 + \frac{\Tilde{\beta}_\epsilon}{1-\epsilon} - \mu_i)^2}\right) \right) \right\}, \\
& \hspace{2cm}\left.\frac{1024}{\gamma^2\mu_1^2}\log\left(\frac{2n}{\delta}\log_2\left(\frac{3072n}{\delta\gamma^2\mu_1^2}\right) \right)\right\}
%     12288 & \sum_{i=1}^n \min\left\{\max\left\{\frac{1}{(\epsilon\mu_1 - \Delta_i)^2}\log\left(\frac{2n}{\delta}\log_2\left(\frac{768n}{\delta(\epsilon\mu_1 - \Delta_i)^2}\right) \right),\right.\right.\\
% &\hspace{1cm} \left.\left.\min \left[\frac{1}{\Delta_i^2}\log\left(\frac{2n}{\delta}\log_2\left(\frac{768n}{\delta\Delta_i^2}\right) \right), \frac{(1-\epsilon)^2}{\min(\tilde{\alpha}_\epsilon, \tilde{\beta}_\epsilon)^2}\log\left(\frac{2n}{\delta}\log_2\left(\frac{768(1-\epsilon)^2n}{\delta\min(\tilde{\alpha}_\epsilon, \tilde{\beta}_\epsilon)^2}\right) \right)\right] \right\},\right.\\
% & \hspace{2cm}\left.\frac{1}{\gamma^2\mu_1^2}\log\left(\frac{2n}{\delta}\log_2\left(\frac{768n}{\delta\gamma^2\mu_1^2}\right) \right)\right\}
\end{align*}
samples. 
\end{theorem}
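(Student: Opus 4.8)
The plan is to follow the proof of the preceding additive-$\gamma$ theorem essentially line for line, making the single structural substitution of the threshold $\mu_1-\epsilon$ by $(1-\epsilon)\mu_1$ and tracking how estimation error in $\mu_1$ now propagates scaled by the factors $1-\epsilon$ and $1-\epsilon-\gamma$. First I would fix the good event $\cE=\{\bigcap_{i\in[n]}\bigcap_{t\in\N}\,|\hat\mu_i(t)-\mu_i|\le C_{\delta/n}(t)\}$, which by a union bound has probability at least $1-\delta$, and argue on it. For correctness I would note that on $\cE$ the running maximum satisfies $\max_j\hat\mu_j(T_j)+C_{\delta/n}(T_j)\ge\mu_1$ and $\max_j\hat\mu_j(T_j)-C_{\delta/n}(T_j)\le\mu_1$; since $\mu_1>0$ and, using $\epsilon\le 1/2$ and $\gamma\le 1/2$, both $1-\epsilon$ and $1-\epsilon-\gamma$ are nonnegative, multiplying through gives $U_t\ge(1-\epsilon-\gamma)\mu_1$ and $L_t\le(1-\epsilon)\mu_1$ at every round --- the exact analogues of Claims 0--1 of Step 0. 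It then follows as before that any arm whose lower confidence bound ever exceeds $U_t$ lies in $M_{\epsilon+\gamma}$ and any arm whose upper confidence bound ever drops below $L_t$ lies in $M_\epsilon^c$, so on $\cE$ the algorithm returns some $G$ with $M_\epsilon\subseteq G\subseteq M_{\epsilon+\gamma}$.

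For the sample-complexity bound I would set $\tilde\omega:=\max\{\gamma\mu_1,\ \min(\tilde\alpha_\epsilon,\tilde\beta_\epsilon)\}$ as the multiplicative counterpart of $\omega$ --- the gap between the two thresholds $(1-\epsilon)\mu_1$ and $(1-\epsilon-\gamma)\mu_1$ is $\gamma\mu_1$, which plays the role the additive slack $\gamma$ played before. Then I would reproduce the four-step decomposition of $\{t:\neg\text{STOP}\}$: (i) rounds where $i^\ast(t)$ is not $\tfrac{\tilde\omega}{1-\epsilon}$-close to $\mu_1$; (ii) rounds where it is close but $C_{\delta/n}(T_{i^\ast}(t))$ still exceeds a constant multiple of $\tfrac{\tilde\omega}{1-\epsilon}$; (iii) ``crossing'' rounds where $i_1(t)$ or $i_2(t)$ lies on the wrong side of the inflated thresholds defining $M_{\epsilon+\gamma/2}$; and (iv) the residual set, handled through the events $E_1(t),E_2(t)$ exactly as in Step 3. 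The one new piece of bookkeeping is that when $\hat\mu_{i^\ast}$ is accurate to $C$, the slack in $U_t=(1-\epsilon-\gamma)(\hat\mu_{i^\ast}+C)$ and $L_t=(1-\epsilon)(\hat\mu_{i^\ast}-C)$ around $(1-\epsilon-\gamma)\mu_1$ and $(1-\epsilon)\mu_1$ is at most $2(1-\epsilon)C\le 2C$; pushing this through the case analyses of Steps 2--3 shows that an arm $i$ stops being pulled once $C_{\delta/n}(T_i(t))$ drops below a constant times $\min\{(1-\epsilon)\mu_1-\mu_i,\ (1-\epsilon)(\mu_1-\mu_i)+\tilde\alpha_\epsilon,\ (1-\epsilon)(\mu_1-\mu_i)+\tilde\beta_\epsilon\}$, clipped at $\gamma\mu_1$ --- equivalently, once $T_i(t)$ exceeds a constant times $h(\cdot,\delta/n)$ evaluated at these quantities divided by $1-\epsilon\ge 1/2$. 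Summing over $i$, merging the three $h$-terms with Lemma~\ref{lem:bound_min_of_h}, and evaluating with Lemma~\ref{lem:inv_conf_bounds} --- whose hypothesis that every argument passed to $h$ is at most $2$ is secured by $\max(\Delta_i,|\epsilon\mu_1-\Delta_i|)\le 8$, $\epsilon\le 1/2$, $\gamma\le 16/\mu_1$, $\gamma\le 1/2$ --- yields the displayed bound, and multiplying by $3$ for the three pulls per round finishes.

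I expect the main obstacle to be keeping the multiplicative constants uniformly bounded through the case analysis: one must check that $1-\epsilon$ and $1-\epsilon-\gamma$ are bounded away from $0$ (by $1/2$, and by a constant after using $\gamma\le 1/2$) so that dividing gaps by them costs only a fixed factor, and --- more importantly --- recognize that for a near-optimal arm the distance to the \emph{estimated} threshold is $\Theta(\tilde\omega/(1-\epsilon))$ rather than $\Theta(\tilde\omega)$, since an error $\eta$ in $\hat\mu_1$ moves $L_t$ by $(1-\epsilon)\eta$; this extra factor $1/(1-\epsilon)$ is precisely what turns the additive $(\mu_1+\alpha_\epsilon-\mu_i)^{-2}$ and $(\mu_1+\beta_\epsilon-\mu_i)^{-2}$ terms into the $(\mu_1+\tfrac{\tilde\alpha_\epsilon}{1-\epsilon}-\mu_i)^{-2}$ and $(\mu_1+\tfrac{\tilde\beta_\epsilon}{1-\epsilon}-\mu_i)^{-2}$ terms. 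The only other thing to dispatch is the benign degenerate regime where $1-\epsilon-\gamma$ is close to $0$: there $M_{\epsilon+\gamma}$ may be all of $[n]$, $U_t$ is near $0$, the $1/(\gamma\mu_1)^2$ clip is merely a constant, and termination is immediate, so this case can be handled separately in a line or two.
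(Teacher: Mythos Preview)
Your plan is correct and mirrors the paper's proof essentially step by step: the same good event $\cE$, the same $\omega=\max\{\gamma\mu_1,\min(\tilde\alpha_\epsilon,\tilde\beta_\epsilon)\}$, the same four-part decomposition of $\{t:\neg\text{STOP}\}$ (the paper uses the near-optimal set $M_{\omega/\mu_1}$ and the confidence threshold $\omega/(16(1-\epsilon))$, which are constant-factor equivalents of your choices), the same $E_1,E_2$ argument in Step~3, and the same final application of Lemmas~\ref{lem:bound_min_of_h} and~\ref{lem:inv_conf_bounds}. Your worry about $1-\epsilon-\gamma$ being bounded away from $0$ is misplaced --- with $\epsilon,\gamma\le 1/2$ it can in fact equal $0$, but the paper never divides by it, only uses $(1-\epsilon-\gamma)/(1-\epsilon)\le 1$ --- so the degenerate-case handling you propose is harmless but unnecessary.
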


\begin{proof}
Throughout the proof, recall that $\Delta_i = \mu_1 - \mu_i$ for all $i$, $\Tilde{\alpha}_\epsilon = \min_{i \in M_\epsilon}\mu_i - (1-\epsilon)\mu_1$, and $\Tilde{\beta}_\epsilon = \min_{i \in M_\epsilon^c}(1-\epsilon)\mu_1 - \mu_i$. Additionally, at any time $t$, we will take $T_j(t)$ to denote the number of samples of arm $j$ up to time $t$.

Define the event
\begin{align*}
    \cE = \left\{\bigcap_{i \in [n]}\bigcap_{t \in \N} |\hat{\mu}_i(t) - \mu_i| \leq C_{\delta/n}(t)\right\}.
\end{align*}
Using standard anytime confidence bound results, and recalling that that $C_\delta(t) := \sqrt{\frac{4\log(\log_2(2t)/\delta)}{t}}$, we have
\begin{align*}
    \P(\cE^c) & = \P\left( \bigcup_{i \in [n]}\bigcup_{t \in \N} |\hat{\mu}_i - \mu_i| > C_{\delta/n}(t)\right) \\
    & \leq \sum_{i=1}^n\P\left( \bigcup_{t \in \N} |\hat{\mu}_i - \mu_i| > C_{\delta/n}(t)\right) 
    \leq \sum_{i=1}^n\frac{\delta}{n} 
    = \delta
\end{align*}
Hence, $\P\left(\cE\right) \geq 1 - \delta$. Throughout, we will make use of a function $h(x, \delta)$ such that if $t \geq h(x, \delta)$, then $C_\delta(t) \leq |x|$. We bound $h(\cdot, \cdot)$ in Lemma~\ref{lem:inv_conf_bounds}. $h(\cdot, \cdot)$ is assumed to decrease monotonically in both arguments and is symmetric in its first argument. 

\subsubsection{Step 0: Correctness} 
We begin by showing that on $\cE$, if $\st2$ terminates, it returns a set $G$ such that $M_\epsilon \subset G \subset M_{(\epsilon+\gamma)}$. Since $\P\left(\cE\right) \geq 1 - \delta$, this implies that $\st2$ is correct with high probability. 

\textbf{Claim 0:} On Event $\cE$, at all times $t$, $U_t \geq (1 - \epsilon - \gamma)\mu_1$.

\textbf{Proof.} 
\begin{align*}
    U_t = (1-\epsilon - \gamma)(\max_j \hat{\mu}_j(T_j(t)) + C_{\delta/n}(T_j(t))) &\geq (1-\epsilon - \gamma)(\hat{\mu}_1(T_1(t)) + C_{\delta/n}(T_1(t)))  \\
    & \stackrel{\cE}{\geq} (1-\epsilon - \gamma)\mu_1
\end{align*}
\qed

\textbf{Claim 1:} On Event $\cE$, at all times $t$, $L_t  \leq (1-\epsilon)\mu_1$.

\textbf{Proof.} 
\begin{align*}
    L_t = (1-\epsilon)\left(\max_j \hat{\mu}_j(T_j(t)) - C_{\delta/n}(T_j(t))\right)  \stackrel{\cE}{\leq} (1-\epsilon)\max_j\mu_j = (1-\epsilon)\mu_1
\end{align*}
\qed

\textbf{Claim 2:} On event $\cE$, if there is a time $t$ such that $\hat{\mu}_i(T_i(t)) - C_{\delta/n}(T_i(t)) > U_t$, then $i \in M_{\epsilon + \gamma}$. 

\textbf{Proof.} 
Assume for some $t$, $\hat{\mu}_i(T_i(t)) - C_{\delta/n}(T_i(t)) > U_t$. Then
\begin{align*}
    \mu_i \stackrel{\cE}{\geq} \hat{\mu}_i(T_i(t)) - C_{\delta/n}(T_i(t)) \geq U_t \stackrel{\text{Claim 0}}{\geq} (1-\epsilon-\gamma)\mu_1
\end{align*}
which implies $i \in M_{\epsilon+\gamma}$
% Next, assume for contradiction that there exists a time $t$ and an arm $i \in M_{\epsilon+\gamma}^c$ such that $\hat{\mu}_i(T_i(t)) - C_{\delta/n}(T_i(t)) > U_t$. Then
% \begin{align*}
%   \hat{\mu}_i(T_i(t)) - C_{\delta/n}(T_i(t)) > U_t  \stackrel{\text{Claim 0}}{\geq} (1-\epsilon-\gamma)\mu_1 > \mu_i
% \end{align*}
% Therefore, $\hat{\mu}_i(T_i(t)) - C_{\delta/n}(T_i(t)) > \mu_i$ which implies $\cE^c$. 
\qed 

\textbf{Claim 3:}
On event $\cE$, if there is a time $t$ such that $\hat{\mu}_i(T_i(t)) + C_{\delta/n}(T_i(t)) < L_t$, then $i \in M_{\epsilon}^c$. 

\textbf{Proof.} Assume that is a $t$ for which $\hat{\mu}_i(T_i(t)) + C_{\delta/n}(T_i(t)) < L_t$. Then
\begin{align*}
    \mu_i \stackrel{\cE}{\leq} \hat{\mu}_i(T_i(t)) + C_{\delta/n}(T_i(t)) \leq L_t \stackrel{\text{Claim 1}}{\leq} (1-\epsilon)\mu_1
\end{align*}
which implies $i \in M_\epsilon^c$.
% Next, assume for contradiction that there exists a time $t$ and an arm $i \in M_{\epsilon}$ such that $\hat{\mu}_i(T_i(t)) + C_{\delta/n}(T_i(t)) < L_t$. Then
% \begin{align*}
%   \hat{\mu}_i(T_i(t)) + C_{\delta/n}(T_i(t)) < L_t  \stackrel{\text{Claim 1}}{\leq} (1-\epsilon)\mu_1 < \mu_i
% \end{align*}
% Therefore, $\hat{\mu}_i(T_i(t)) + C_{\delta/n}(T_i(t)) < \mu_i$ which implies $\cE^c$. 
\qed

$\st2$ terminates at any time $t$ such that simultaneously for all arms $i$, either $\hat{\mu}_i(T_i(t)) + C_{\delta/n}(T_i(t)) > U_t$ or $\hat{\mu}_i(T_i(t)) - C_{\delta/n}(T_i(t)) < L_t$. On $\cE$, by Claim $3$, $M_\epsilon \subset \{i: \hat{\mu}_i(T_i(t)) + C_{\delta/n}(T_i(t)) > U_t\}$. On $\cE$, by Claim $2$, $\{i: \hat{\mu}_i(T_i(t)) + C_{\delta/n}(T_i(t)) > U_t\} \subset M_{\epsilon+ \gamma}$. Hence, on the event $\cE$. $\st2$ returns a set $G$ such that $M_\epsilon \subset G \subset M_{\epsilon+ \gamma}$.

\subsubsection{Step 1: Complexity of estimating the threshold, $(1-\epsilon)\mu_1$}

Let $\text{STOP}$ denote the termination event that for all arms $i$, either $\hat{\mu}_i(T_i(t)) + C_{\delta/n}(T_i(t)) > U_t$ or $\hat{\mu}_i(T_i(t)) - C_{\delta/n}(T_i(t)) < L_t$. 
Let $\omega$ denote the quantity
$$\omega := \max\{\gamma\mu_1, \min(\Tilde{\alpha}_\epsilon, \Tilde{\beta}_\epsilon)\}.$$
Let $T$ denote the random variable of the total number of rounds before $\st2$ terminates. At most $3$ samples are drawn in any round. Hence, the total sample complexity is bounded by $3T$. We may write $T$ as 
\begin{align*}
    T \equiv |\{t: \neg \text{STOP} \}| = |\{t: \neg \text{STOP} \text{ and } i^\ast \notin M_{\omega/\mu_1}\}| + |\{t: \neg \text{STOP} \text{ and } i^\ast \in M_{\omega/\mu_1}\}|
\end{align*}
Next, we bound the first event in this decomposition. 

\textbf{Claim 0:} On $\cE$, $|\{t: \neg \text{STOP} \text{ and } i^\ast \notin M_{\omega/\mu_1}\}| \leq \sum_{i \in M_{\omega/\mu_1}^c} \min\left\{h\left(\frac{\gamma\mu_1}{2}, \frac{\delta}{n} \right), \min\left[ h\left(\frac{\Delta_i}{2}, \frac{\delta}{n} \right), h\left(\frac{\min(\Tilde{\alpha}_\epsilon, \Tilde{\beta}_\epsilon)}{2}, \frac{\delta}{n} \right) \right]\right\}$. 

\textbf{Proof.}
For each $i \in M_{\omega/\mu_1}^c$, $\mu_i + 2C_{\delta/n}(T_i(t)) < \mu_1$, true when $T_i(t) > h\left(\Delta_i/2, \frac{\delta}{n} \right)$ implies that 
\begin{align*}
    \hat{\mu}_i(T_i(t)) + C_{\delta/n}(T_i(t)) \stackrel{\cE}{\leq } \mu_i + 2C_{\delta/n}(T_i(t)) < \mu_1 \stackrel{\cE}{\leq} \hat{\mu}_1(T_1(t)) + C_{\delta/n}(T_1(t))
\end{align*}
which implies that $i \neq i^\ast$. Additionally, since $i \in M_{\omega/\mu_1}^c$ by assumption, we have that 
$(1-\omega/\mu_1)\mu_1 - \mu_i \geq 0$, which reduces to $\Delta_i \geq \omega$. Since $\omega = \max(\gamma\mu_1, \min(\Tilde{\alpha}_\epsilon, \Tilde{\beta}_\epsilon))$, it is likewise true that 
$$h\left(\frac{\Delta_i}{2}, \frac{\delta}{n} \right) = \min\left[h\left(\frac{\gamma\mu_1}{2}, \frac{\delta}{n} \right), \min\left\{ h\left(\frac{\Delta_i}{2}, \frac{\delta}{n} \right), h\left(\frac{\min(\Tilde{\alpha}_\epsilon, \Tilde{\beta}_\epsilon)}{2}, \frac{\delta}{n} \right) \right]\right\}.$$ 
Summing over all $i \in M_{\omega/\mu_1}^c$ achieves the result. 
\qed

We may decompose the event $\{t: \neg \text{STOP} \text{ and } i^\ast \in M_{\omega/\mu_1}\}$ as 
\begin{align*}
&\left\{t: \neg \text{STOP} \text{ and } i^\ast \in M_{\omega/\mu_1} \text{ and } \exists i \in M_{\omega/\mu_1} : C_{\delta/n}(T_{i^\ast}(t)) > \frac{\omega}{16(1-\epsilon)}\right\} \\
 &\hspace{1cm} \cup \left\{t: \neg \text{STOP} \text{ and } i^\ast \in M_{\omega/\mu_1} \text{ and } C_{\delta/n}(T_{i^\ast}(t)) \leq \frac{\omega}{16(1-\epsilon)}\right\}
\end{align*}

\textbf{Claim 1:} $\left|\left\{t: \neg \text{STOP} \text{ and } i^\ast \in M_{\omega/\mu_1} \text{ and } C_{\delta/n}(T_{i^\ast}(t)) \geq \frac{\omega}{16(1-\epsilon)}\right\}\right| \leq \sum_{i \in M_{\omega/\mu_1}} \min\left\{h\left(\frac{\gamma\mu_1}{16}, \frac{\delta}{n} \right), \min\left[h\left(\frac{\Delta_i}{16}, \frac{\delta}{n} \right), h\left(\frac{\min(\Tilde{\alpha}_\epsilon, \Tilde{\beta}_\epsilon)}{16(1-\epsilon)}, \frac{\delta}{n} \right)\right] \right\}$
% \chck{similar to additive, do we need claim for $\exists i$?}

\textbf{Proof}. $C_{\delta/n}(T_i(t)) \leq \frac{\omega}{16(1-\epsilon)}$ is true when $T_i(t) \geq h\left(\frac{\omega}{16(1-\epsilon)}, \frac{\delta}{n}\right)$. Since $i^\ast \in M_{\omega/\mu_1}$, $\mu_i - (1-\omega/\mu_1)\mu_1 \geq 0$, which implies $\Delta_i \leq \omega$.
By definition, $\omega = \min(\gamma\mu_1, \min(\Tilde{\alpha}_\epsilon, \Tilde{\beta}_\epsilon))$. Hence, by monotonicity of $h(\cdot, \cdot)$,
\begin{align*}
   h\left(\frac{\omega}{16(1-\epsilon)}, \frac{\delta}{n} \right) & = \min\left[h\left(\frac{\Delta_i}{16(1-\epsilon)}, \frac{\delta}{n} \right), h\left(\frac{\omega}{16(1-\epsilon)}, \frac{\delta}{n} \right) \right] \\ 
   & = \min\left\{h\left(\frac{\gamma\mu_1}{16(1-\epsilon)}, \frac{\delta}{n} \right), \min\left[h\left(\frac{\Delta_i}{16(1-\epsilon)}, \frac{\delta}{n} \right), h\left(\frac{\min(\Tilde{\alpha}_\epsilon, \Tilde{\beta}_\epsilon)}{16(1-\epsilon)}, \frac{\delta}{n} \right)\right] \right\} \\
   & \leq \min\left\{h\left(\frac{\gamma\mu_1}{16}, \frac{\delta}{n} \right), \min\left[h\left(\frac{\Delta_i}{16}, \frac{\delta}{n} \right), h\left(\frac{\min(\Tilde{\alpha}_\epsilon, \Tilde{\beta}_\epsilon)}{16(1-\epsilon)}, \frac{\delta}{n} \right)\right] \right\}. 
\end{align*}
 Summing over all $i \in M_{\omega/\mu_1}$ achieves the desired result. \qed

\subsubsection{Step 2: Controlling ``crossing'' events}
Recall that we sample $i_1(t) \in \widehat{G}$ and $i_2(t) \in \widehat{G}^c$. In this section, we control the number of times that $i_1(t) \in M_{\epsilon+ \frac{\gamma}{2}}^c$ and $i_2(t) \in M_{\epsilon + \frac{\gamma}{2}}$. 

To do so, we first decompose the set $\left\{t: \neg \text{STOP} \text{ and } i^\ast \in M_{\omega/\mu_1} \text{ and } C_{\delta/n}(T_{i^\ast}(t)) \leq \frac{\omega}{16(1-\epsilon)}\right\}$ as
\begin{align*}
    & \left\{t: \neg \text{STOP} \text{ and } i^\ast \in M_{\omega/\mu_1} \text{ and } C_{\delta/n}(T_{i^\ast}(t)) \leq \frac{\omega}{16(1-\epsilon)}\text{ and } i_1(t) \in M_{\epsilon+ \frac{\gamma}{2}}^c\right\} \\
    & \hspace{1cm} \cup \left\{t: \neg \text{STOP} \text{ and } i^\ast \in M_{\omega/\mu_1} \text{ and } C_{\delta/n}(T_{i^\ast}(t)) \leq \frac{\omega}{16(1-\epsilon)} \text{ and } i_1(t) \in M_{\epsilon+ \frac{\gamma}{2}}\right\}
\end{align*}

\textbf{Claim 0: } $\left|\left\{t: \neg \text{STOP} \text{ and } i^\ast \in M_{\omega/\mu_1} \text{ and } C_{\delta/n}(T_{i^\ast}(t)) \leq \frac{\omega}{16(1-\epsilon)} \text{ and } i_1(t) \in M_{\epsilon+ \frac{\gamma}{2}}^c\right\} \right| \leq \sum_{i \in M_{\epsilon+ \frac{\gamma}{2}}^c} \min \left[h\left(\frac{\Delta_i - \epsilon\mu_1}{16}, \frac{\delta}{n} \right), h\left(\frac{\gamma\mu_1}{16}, \frac{\delta}{n} \right) \right]$. 

\textbf{Proof.} 
Recall that $\widehat{G}$ is the set of all arms whose empirical means exceed $(1-\epsilon)\max_i \hat{\mu}_i(T_i(t))$, and $i_1(t) \in \widehat{G}$ by definition. 
Note that
$(1-\epsilon)\max_i \hat{\mu}_i(T_i(t)) > (1-\epsilon)\left(\max_{i}\hat{\mu}_i(T_i(t)) - C_{\delta/n}(T_i(t))\right) = L_t$. Hence, if an arm's upper bound is below $L_t$, then the arm cannot be in $\widehat{G}$ and thus not be
$i_1(t)$.
By the above event, $C_{\delta/n}(T_{i^\ast}(t)) \leq \frac{\omega}{16(1-\epsilon)}$. Therefore, 
\begin{align*}
    \mu_{i^\ast} + \frac{\omega}{8(1-\epsilon)} 
    \geq \mu_{i^\ast} + 2C_{\delta/n}(T_{i^\ast}(t))
    \stackrel{\cE}{\geq} \hat{\mu}_{i^\ast}(T_{i^\ast}(t)) + C_{\delta/n}(T_{i^\ast}(t)) 
    & \geq \hat{\mu}_{1}(T_{1}(t)) + C_{\delta/n}(T_{1}(t)) \\
    & \stackrel{\cE}{\geq}\mu_1.
\end{align*}
Hence, $\mu_{i^\ast} \geq \mu_1 - \frac{\omega}{8(1-\epsilon)}$. Rearranging this, we see that $\mu_{i^\ast} - \left(1 -\frac{\omega}{8\mu_1(1-\epsilon)} \right) \mu_1 \geq 0 $ which implies that $i^\ast \in M_{\frac{\omega}{8\mu_1(1-\epsilon)}}$. Hence,
\begin{align*}
L_t = (1-\epsilon)\left(\max_{i}\hat{\mu}_i(T_i(t)) - C_{\delta/n}(T_i(t))\right) 
&  (1-\epsilon)\left(\hat{\mu}_{i^\ast}(T_{i^\ast}(t)) - C_{\delta/n}(T_{i^\ast}(t))\right) \\
& \stackrel{\cE}{\geq} (1-\epsilon)\left(\mu_{i^\ast} - 2C_{\delta/n}(T_{i^\ast}(t))\right) \\
& \geq (1-\epsilon)\left(\mu_{i^\ast} - \frac{\omega}{8(1-\epsilon)}\right) \\
& \geq (1-\epsilon)\left(\mu_1 - \frac{\omega}{4(1-\epsilon)}\right)
\end{align*}
Next, we bound the number of times an arm $i \in M_{\epsilon+ \frac{\gamma}{2}}^c$ is sampled before its upper bound is below $(1-\epsilon)\left(\mu_1 - \frac{\omega}{4(1-\epsilon)}\right)$. Note that $C_{\delta/n}(T_i(t)) < \frac{1}{2}\left((1-\epsilon)\left(\mu_1 - \frac{\omega}{4(1-\epsilon)}\right) - \mu_i\right)$, true when $T_i(t) > h\left(\frac{1}{2}\left((1-\epsilon)\left(\mu_1 - \frac{\omega}{4(1-\epsilon)}\right) - \mu_i\right), \frac{\delta}{n} \right)$ implies that 
\begin{align*}
    \hat{\mu}_i(T_i(t)) + C_{\delta/n}(T_i(t))  \stackrel{\cE}{\leq} \mu_i + 2C_{\delta/n}(T_i(t)) 
     < (1-\epsilon)\left(\mu_1 - \frac{\omega}{4(1-\epsilon)}\right) 
     \leq L_t.
\end{align*}
Finally, we turn our attention to the difference $(1-\epsilon)\left(\mu_1 - \frac{\omega}{4(1-\epsilon)}\right) - \mu_i$. Recall that $\omega = \max(\gamma\mu_1, \min(\Tilde{\alpha}_\epsilon, \Tilde{\beta}_\epsilon))$. 
\begin{align*}
    (1-\epsilon)\left(\mu_1 - \frac{\omega}{4(1-\epsilon)}\right) - \mu_i & = (1-\epsilon)\mu_1 - \mu_i - \frac{1}{4}\omega \\
    & = (1-\epsilon)\mu_1 - \mu_i - \frac{1}{4}\max(\gamma\mu_1, \min(\Tilde{\alpha}_\epsilon, \Tilde{\beta}_\epsilon)).
\end{align*}

By definition, $\Tilde{\beta}_\epsilon = \min_{i \in M_\epsilon^c} (1-\epsilon)\mu_1 - \mu_i$. Hence, $\min(\Tilde{\alpha}_\epsilon, \Tilde{\beta}_\epsilon) \leq (1-\epsilon)\mu_1 - \mu_i$ for all $i \in M_{\epsilon+ \frac{\gamma}{2}}^c$. Similarly, since $i \in M_{\epsilon+ \frac{\gamma}{2}}^c$ by assumption, $(1-\epsilon - \frac{\gamma}{2})\mu_1 - \mu_i \geq 0$, which rearranges to
$\frac{\gamma\mu_1}{2} \leq (1-\epsilon)\mu_1 - \mu_i$.
Therefore, 
\begin{align*}
    (1-\epsilon)\mu_1 - \mu_i - \frac{1}{4}\max(\gamma\mu_1, \min(\Tilde{\alpha}_\epsilon, \Tilde{\beta}_\epsilon)) \geq \frac{1}{2}\left((1-\epsilon)\mu_1 - \mu_i\right) = \frac{\Delta_i - \epsilon\mu_1}{2}.
\end{align*}
Hence, by monotonicity of $h(\cdot, \cdot)$, 
$$h\left(\frac{1}{2}\left((1-\epsilon)\left(\mu_1 - \frac{\omega}{4(1-\epsilon)}\right) - \mu_i\right), \frac{\delta}{n} \right) \leq h\left(\frac{\Delta_i - \epsilon\mu_1}{4}, \frac{\delta}{n} \right).$$
Lastly, as above, since $i \in M_{\epsilon + \frac{\gamma}{2}}^c$, we have that $\Delta_i - \epsilon\mu_1 = (1-\epsilon)\mu_1 - \mu_i \geq \frac{1}{2}\gamma\mu_1$. Hence,
$$h\left(\frac{\Delta_i - \epsilon\mu_1}{4}, \frac{\delta}{n} \right) \leq \min \left[h\left(\frac{\Delta_i - \epsilon\mu_1}{8}, \frac{\delta}{n} \right), h\left(\frac{\gamma\mu_1}{8}, \frac{\delta}{n} \right) \right]. $$
Putting this together, if $T_i(t) \geq \min \left[h\left(\frac{\Delta_i - \epsilon\mu_1}{8}, \frac{\delta}{n} \right), h\left(\frac{\gamma\mu_1}{8}, \frac{\delta}{n} \right) \right]$, then $i \neq i_1(t)$ for all $i \in M_{\epsilon+ \frac{\gamma}{2}}^c$. Summing over all such $i$ bounds the size of set stated in the claim. 
\qed 

We decompose the remaining event $$\left\{t: \neg \text{STOP} \text{ and } i^\ast \in M_{\omega/\mu_1} \text{ and } C_{\delta/n}(T_{i^\ast}(t)) \leq \frac{\omega}{16(1-\epsilon)}  \text{ and } i_1(t) \in M_{\epsilon+ \frac{\gamma}{2}}\right\}$$
as 
\begin{align*}
    \left\{t: \neg \text{STOP} \text{ and } i^\ast \in M_{\omega/\mu_1} \text{ and } C_{\delta/n}(T_{i^\ast}(t)) \leq \frac{\omega}{16(1-\epsilon)}  \text{ and } i_1(t) \in M_{\epsilon+ \frac{\gamma}{2}}\right. \\ \left.\text{ and } i_2(t) \in M_{\epsilon + \frac{\gamma}{2}}\right\} \\
    \cup \left\{t: \neg \text{STOP} \text{ and } i^\ast \in M_{\omega/\mu_1} \text{ and } C_{\delta/n}(T_{i^\ast}(t)) \leq \frac{\omega}{16(1-\epsilon)} \text{ and } i_1(t) \in M_{\epsilon+ \frac{\gamma}{2}}\right. \\ \left.\text{ and } i_2(t) \in M_{\epsilon + \frac{\gamma}{2}}^c\right\}. 
\end{align*}
We proceed by bounding the cardinality of the first set. 

\textbf{Claim 1:} 
\begin{align*}
    \left|\left\{t: \neg \text{STOP} \text{ and } i^\ast \in M_{\omega/\mu_1} \text{ and } C_{\delta/n}(T_{i^\ast}(t)) \leq \frac{\omega}{16(1-\epsilon)} \text{ and } i_1(t) \in M_{\epsilon+ \frac{\gamma}{2}}\right.\right.  \\
    \left.\left.\text{ and } i_2(t) \in M_{\epsilon + \frac{\gamma}{2}}\right\}\right| \\
    \leq \sum_{i \in M_{\epsilon + \frac{\gamma}{2}}} \min \left[h\left(\frac{\epsilon\mu_1 - \Delta_i}{8}, \frac{\delta}{n} \right), h\left(\frac{\gamma\mu_1}{8}, \frac{\delta}{n} \right) \right]
\end{align*}

\textbf{Proof.}
Recall that $K = \{i: \hat{\mu}_i(T_i(t)) + C_{\delta/n}(T_i(t)) < L_t \text{ or } \hat{\mu}_i(T_i(t)) - C_{\delta/n}(T_i(t)) > U_t\}$ is the set of known arms and $i_2$ is sampled from $\widehat{G}^c\backslash K$. Hence, if an arm's lower bound exceeds $U_t$, it must be in $K$ and therefore cannot be $i_2$. 
Recall that $i^\ast(t) = \arg\max \hat{\mu}_i(T_i(t)) + C_{\delta/n}(T_i(t))$. By the above event, $i^\ast(t) \in M_{\omega/\mu_1}$ and $C_{\delta/n}(T_{i^\ast}(t)) \leq \frac{\omega}{16(1-\epsilon)}$. Hence,
\begin{align*}
U_t & = (1-\epsilon - \gamma)\left(\max_{i}\hat{\mu}_i(T_i(t)) + C_{\delta/n}(T_i(t))\right) \\
& = (1-\epsilon - \gamma)\left(\hat{\mu}_{i^\ast(t)}(T_{i^\ast(t)}(t)) + C_{\delta/n}(T_{i^\ast(t)}(t))\right)
 \\
& \stackrel{\cE}{\leq} (1-\epsilon - \gamma)\left(\mu_{i^\ast(t)} + 2C_{\delta/n}(T_{i^\ast(t)}(t))\right) 
 \\
& \leq (1-\epsilon-\gamma)\left(\mu_{i^\ast(t)} + \frac{\omega}{8(1-\epsilon)}\right) 
 \\
 & \leq (1-\epsilon-\gamma)\left(\mu_{1} + \frac{\omega}{8(1-\epsilon)}\right) 
\end{align*}
Next, we bound the number of times an arm $i \in M_{\epsilon + \frac{\gamma}{2}}$ is sampled before its lower bound is above $(1-\epsilon - \gamma)\left(\mu_1 + \frac{\omega}{8(1-\epsilon)}\right)$. Note that $C_{\delta/n}(T_i(t)) <  \frac{1}{2}\left(\mu_i - (1-\epsilon-\gamma)\left(\mu_1 + \frac{\omega}{8(1-\epsilon)}\right)\right)$, true when $T_i(t) > h\left(\frac{1}{2}\left(\mu_i - (1-\epsilon-\gamma)\left(\mu_1 + \frac{\omega}{8(1-\epsilon)}\right)\right), \frac{\delta}{n} \right)$ implies that 
\begin{align*}
    \hat{\mu}_i(T_i(t)) - C_{\delta/n}(T_i(t))  \stackrel{\cE}{\geq} \mu_i - 2C_{\delta/n}(T_i(t)) 
     > (1-\epsilon-\gamma)\left(\mu_1 + \frac{\omega}{8(1-\epsilon)}\right) \geq U_t.
\end{align*}
Finally, we turn our attention to the difference $\mu_i - (1-\epsilon)\left(\mu_1 + \frac{\omega}{8}\right)$. Recall that $\omega = \max(\gamma\mu_1, \min(\Tilde{\alpha}_\epsilon, \Tilde{\beta}_\epsilon))$. Additionally, recall $\epsilon+\gamma\leq 1$.
\begin{align*}
    \mu_i - (1-\epsilon-\gamma)\left(\mu_1 + \frac{\omega}{8(1-\epsilon)}\right) & = \mu_i - (1-\epsilon)\mu_1 + \gamma\mu_1 - \frac{1}{8}\left(\frac{1-\epsilon-\gamma}{1-\epsilon}\right)\omega \\
    & \geq \mu_i - (1-\epsilon)\mu_1 + \gamma\mu_1 - \frac{1}{8}\omega
    % & = \mu_i - (1-\epsilon)\mu_1  - \frac{1}{8}\max(\gamma\mu_1, \min(\Tilde{\alpha}_\epsilon, \Tilde{\beta}_\epsilon)).
\end{align*}

\textbf{Case 1a, $\omega = \min(\Tilde{\alpha}_\epsilon, \Tilde{\beta}_\epsilon)$ and $i \in M_{\epsilon}$:}

By definition, $\Tilde{\alpha}_\epsilon = \min_{i \in M_\epsilon} \mu_i - (1-\epsilon)\mu_1$. Hence, $\min(\Tilde{\alpha}_\epsilon, \Tilde{\beta}_\epsilon) \leq \mu_i - (1-\epsilon)\mu_1$ for all $i \in M_{\epsilon}$. Therefore, 
\begin{align*}
    \mu_i - (1-\epsilon)\mu_1 + \gamma\mu_1 - \frac{1}{8}\omega
    & = \mu_i - (1-\epsilon)\mu_1 + \gamma\mu_1 - \frac{1}{8}\min(\Tilde{\alpha}_\epsilon, \Tilde{\beta}_\epsilon)\\
    & \geq \max\left(\mu_i - (1-\epsilon)\mu_1  - \frac{1}{8}\min(\Tilde{\alpha}_\epsilon, \Tilde{\beta}_\epsilon), \gamma\mu_1\right) \\
    & \geq \max\left(\frac{7}{8}(\mu_i - (1-\epsilon)\mu_1), \gamma\mu_1\right)
\end{align*}

\textbf{Case 1b, $\omega = \min(\Tilde{\alpha}_\epsilon, \Tilde{\beta}_\epsilon)$ and $i \in M_\epsilon^c \cap M_{\epsilon+\frac{\gamma}{2}}$ }

Since $\omega = \max(\gamma\mu_1, \min(\Tilde{\alpha}_\epsilon, \Tilde{\beta}_\epsilon))$, if $\omega = \min(\Tilde{\alpha}_\epsilon, \Tilde{\beta}_\epsilon)$, then $\frac{1}{2}\gamma\mu_1 < \min(\Tilde{\alpha}_\epsilon, \Tilde{\beta}_\epsilon)$. Since $\min(\Tilde{\alpha}_\epsilon, \Tilde{\beta}_\epsilon) = \min |\mu_i - (1-\epsilon)\mu_1|$, the set $M_\epsilon^c \cap M_{\epsilon+\frac{\gamma}{2}}$ is empty and there is nothing to prove. 

\textbf{Case 2a, $\omega = \gamma\mu_1$ and $i \in M_\epsilon$}
\begin{align*}
    \mu_i - (1-\epsilon)\mu_1 + \gamma\mu_1 - \frac{1}{8}\omega 
    & = \mu_i - (1-\epsilon)\mu_1 + \frac{7}{8}\gamma\mu_1 \geq \max\left(\mu_i - (1-\epsilon)\mu_1, \frac{7}{8}\gamma\mu_1 \right).
\end{align*}

\textbf{Case 2b, $\omega = \gamma\mu_1$ and $i \in M_\epsilon^c \cap M_{\epsilon + \frac{\gamma}{2}}$}

For $i \in M_\epsilon^c \cap M_{\epsilon + \frac{\gamma}{2}}$, $\mu_i - (1 -\epsilon -\frac{\gamma}{2})\mu_1 \geq 0$. Hence, $\mu_i - (1-\epsilon)\mu_1 \geq \frac{-\gamma\mu_1}{2}$. Therefore, 
\begin{align*}
    \mu_i - (1-\epsilon)\mu_1 + \gamma\mu_1 - \frac{1}{8}\omega 
    & = \mu_i - (1-\epsilon)\mu_1 + \frac{7}{8}\gamma\mu_1 \geq \frac{3}{8}\gamma\mu_1 \geq \max\left(\frac{1}{4}\gamma\mu_1, \frac{(1-\epsilon)\mu_1 - \mu_i}{4} \right).
\end{align*}

Combining all cases, by monotonicity of $h(\cdot, \cdot)$ and symmetry in its first argument, we see that 
$$h\left(\frac{1}{2}\left(\mu_i - (1-\epsilon-\gamma)\left(\mu_1 + \frac{\omega}{8(1-\epsilon)}\right)\right), \frac{\delta}{n} \right) \leq \min\left[h\left( \frac{\gamma\mu_1}{8}, \frac{\delta}{n} \right), h\left( \frac{\epsilon\mu_1 - \Delta_i}{8}, \frac{\delta}{n} \right) \right]. $$
Putting this together, if $T_i(t) \geq \min \left[h\left(\frac{\epsilon\mu_1 - \Delta_i}{8}, \frac{\delta}{n} \right), h\left(\frac{\gamma\mu_1}{8}, \frac{\delta}{n} \right) \right]$, then $i \neq i_2(t)$ for all $i \in M_{\epsilon + \frac{\gamma}{2}}$. Summing over all such $i$ bounds the size of set stated in the claim. 
\qed

\subsubsection{Step 3: Controlling the complexity until stopping occurs}
In this step, we turn our attention to the final event to control: 
\begin{align}
    \S := \left\{t: \neg \text{STOP} \text{ and } i^\ast \in M_{\omega/\mu_1} \text{ and } C_{\delta/n}(T_{i^\ast}(t)) \leq \frac{\omega}{16(1-\epsilon)}   \right. \\
    \left.\text{ and } i_1(t) \in M_{\epsilon+ \frac{\gamma}{2}} \text{ and } i_2(t) \in M_{\epsilon + \frac{\gamma}{2}}^c\right\}.\nonumber
\end{align}
For brevity, we will refer to this set as $\S$ for this step. 
The objective will be to bound the time before each arms lower bound either clears $U_t$ or its upper bound clears $L_t$ which implies the stopping condition. To do so, we introduce, two events:
\begin{equation}
E_1(t) := \{ \hat{\mu}_{i_1(t)}(T_{i_1(t)}(t)) - C_{\delta/n}(T_{i_1(t)}(t)) > U_t\}
\end{equation}
and 
% \chck{similar to additive, one of the clauses in each probably not needed}
\begin{equation}
E_2(t) := \{\hat{\mu}_{i_2(t)}(T_{i_2(t)}(t)) + C_{\delta/n}(T_{i_2(t)}(t)) < L_t\}.    
\end{equation}
If $E_1(t)$ is true, then $\hat{\mu}_i(T_i) - C_{\delta/n}(T_i(t)) > L_t$ for all $i \in \widehat{G}$. If $E_2(t)$ is true, then $\hat{\mu}_i(T_i) + C_{\delta/n}(T_i(t)) < U_t$ for all $i \in \widehat{G}^c$. Hence, by line~\ref{alg:st2_stopping} of $\st2$, if both $E_1(t)$ and $E_2(t)$ are true, then $\st2$ terminates. 

\textbf{Claim 0:} $|\S \cap \{t: \neg E_1(t)\}| \leq \sum_{i\in M_{\epsilon+ \frac{\gamma}{2}}}\min \left[h\left(\frac{\epsilon\mu_1 - \Delta_i}{4}, \frac{\delta}{n} \right), h\left(\frac{\gamma\mu_1}{4}, \frac{\delta}{n} \right) \right]$. 

\textbf{Proof.}
Recall that by the set $\S$, we have that $i_1(t)\in M_{\epsilon + \frac{\gamma}{2}}$. 
% Additionally, as in the previous step, we have that $C_{\delta/n}(T_{i^\ast}(t)) \leq \frac{\omega}{16(1-\epsilon)} $, and the best arm is in $M_{\omega/\mu_1}$ by definition. Hence,
% \begin{align*}
% L_t = (1-\epsilon)\left(\max_{i}\hat{\mu}_i(T_i(t)) - C_{\delta/n}(T_i(t))\right) 
% & \geq (1-\epsilon)\left(\hat{\mu}_1(T_1(t)) - C_{\delta/n}(T_1(t))\right) \\
% & \stackrel{\cE}{\geq} (1-\epsilon)\left(\mu_1 - 2C_{\delta/n}(T_1(t))\right) \\
% & \geq (1-\epsilon)\left(\mu_1 - \frac{\omega}{4(1-\epsilon)}\right).
% \end{align*}
Furthermore, by the set $\S$, we have that $i^\ast(t) \in M_{\omega/\mu_1}$ and $C_{\delta/n}(T_{i^\ast}(t)) \leq \omega/16(1-\epsilon)$. Hence,
\begin{align*}
U_t & = (1-\epsilon - \gamma)\left(\max_{i}\hat{\mu}_i(T_i(t)) + C_{\delta/n}(T_i(t))\right) \\
& = (1-\epsilon - \gamma)\left(\hat{\mu}_{i^\ast(t)}(T_{i^\ast(t)}(t)) + C_{\delta/n}(T_{i^\ast(t)}(t))\right) 
 \\
& \stackrel{\cE}{\leq} (1-\epsilon - \gamma)\left(\mu_{i^\ast(t)} + 2C_{\delta/n}(T_{i^\ast(t)}(t))\right) 
 \\
& \leq (1-\epsilon -\gamma)\left(\mu_{i^\ast(t)} + \frac{\omega}{8(1-\epsilon)}\right) 
 \\
 & \leq (1-\epsilon-\gamma)\left(\mu_{1} + \frac{\omega}{8(1-\epsilon)}\right) 
\end{align*}
If $C_{\delta/n}(T_i)\leq \frac{1}{2}\left(\mu_i - (1-\epsilon-\gamma)\left(\mu_{1} + \frac{\omega}{8(1-\epsilon)}\right)  \right)$, true when $T_i \geq h\left(\frac{1}{2}\left(\mu_i - (1-\epsilon-\gamma)\left(\mu_{1} + \frac{\omega}{8(1-\epsilon)}\right)  \right), \frac{\delta}{n} \right)$, then 
\begin{align*}
    \hat{\mu}_i(T_i) - C_{\delta/n}(T_i) & \geq \mu_i - 2C_{\delta/n}(T_i) 
    \geq (1-\epsilon-\gamma)\left(\mu_{1} + \frac{\omega}{8(1-\epsilon)}\right) 
    \geq U_t.
\end{align*}
The remainder of the proof of this claim focuses on controlling the difference: $
\mu_i - (1-\epsilon-\gamma)\left(\mu_{1} + \frac{\omega}{8(1-\epsilon)}\right)$ in the case that $\omega = \min(\Tilde{\alpha}_\epsilon, \Tilde{\beta}_\epsilon)$ and $\omega = \gamma\mu_1$. 
Recall that $\omega = \max(\gamma\mu_1, \min(\Tilde{\alpha}_\epsilon, \Tilde{\beta}_\epsilon))$. Hence, if any possible $i \in M_{\epsilon + \frac{\gamma}{2}}$ has received sufficiently many samples, since $i_1(t) \in M_{\epsilon + \frac{\gamma}{2}}$, this implies $E_1(t)$.

\textbf{Case 1a, $\omega = \min(\Tilde{\alpha}_\epsilon, \Tilde{\beta}_\epsilon)$ and $i \in M_\epsilon$ }

We focus on the difference $\mu_i - (1-\epsilon-\gamma)\left(\mu_{1} + \frac{\omega}{8(1-\epsilon)}\right)$. Recall that $\epsilon + \gamma \leq 1$. 
\begin{align*}
    \mu_i - (1-\epsilon-\gamma)\left(\mu_{1} + \frac{\omega}{8(1-\epsilon)}\right) & = \mu_i - (1-\epsilon-\gamma)\left(\mu_{1} + \frac{\min(\Tilde{\alpha}_\epsilon, \Tilde{\beta}_\epsilon)}{8(1-\epsilon)}\right) \\
    & = \mu_i - (1-\epsilon)\mu_1 + \gamma\mu_1 - \frac{1}{8}\left(\frac{1-\epsilon-\gamma}{1-\epsilon} \right)\min(\Tilde{\alpha}_\epsilon, \Tilde{\beta}_\epsilon)\\
    & \stackrel{\gamma \geq 0 \text{ and } \epsilon+\gamma \leq 1}{\geq} \mu_i - (1-\epsilon)\mu_1 - \frac{1}{8}\min(\Tilde{\alpha}_\epsilon, \Tilde{\beta}_\epsilon)\\
    & \geq \frac{1}{2}(\mu_i - (1-\epsilon)\mu_1) = \frac{\epsilon\mu_1 - \Delta_i}{2}
\end{align*}
where the final step follows since $\min(\Tilde{\alpha}_\epsilon, \Tilde{\beta}_\epsilon) \leq \Tilde{\alpha}_\epsilon \leq \mu_i - (1-\epsilon)\mu_1$ by definition for all $i\in M_\epsilon$. Then by monotonicity of $h(\cdot, \cdot)$, 
$$h\left(\frac{1}{2}\left(\mu_i - (1-\epsilon-\gamma)\left(\mu_{1} + \frac{\omega}{8(1-\epsilon)}\right)  \right), \frac{\delta}{n} \right) \leq h\left(\frac{\epsilon\mu_1 - \Delta_i}{4}, \frac{\delta}{n} \right).$$
Lastly, in this setting, $\gamma\mu_1 \leq \min(\Tilde{\alpha}_\epsilon, \Tilde{\beta}_\epsilon) \leq \epsilon\mu_1 - \Delta_i$ since $\omega = \min(\Tilde{\alpha}_\epsilon, \Tilde{\beta}_\epsilon)$. Hence, it is trivially true that 
$$h\left(\frac{\epsilon\mu_1 - \Delta_i}{4}, \frac{\delta}{n} \right) = \min \left[h\left(\frac{\epsilon\mu_1 - \Delta_i}{4}, \frac{\delta}{n} \right), h\left(\frac{\gamma\mu_1}{4}, \frac{\delta}{n} \right) \right]$$

\textbf{Case 1b, $\omega = \min(\Tilde{\alpha}_\epsilon, \Tilde{\beta}_\epsilon)$ and $i \in M_\epsilon^c \cap M_{\epsilon+\frac{\gamma}{2}}$ }

Since $\omega = \max(\gamma\mu_1, \min(\Tilde{\alpha}_\epsilon, \Tilde{\beta}_\epsilon))$, if $\omega = \min(\Tilde{\alpha}_\epsilon, \Tilde{\beta}_\epsilon)$, then $\frac{1}{2}\gamma\mu_1 < \min(\Tilde{\alpha}_\epsilon, \Tilde{\beta}_\epsilon)$. Since $\min(\Tilde{\alpha}_\epsilon, \Tilde{\beta}_\epsilon) = \min |\mu_i - (1-\epsilon)\mu_1|$, the set $M_\epsilon^c \cap M_{\epsilon+\frac{\gamma}{2}}$ is empty and there is nothing to prove. 

\textbf{Case 2a, $\omega = \gamma\mu_1$ and $i \in M_\epsilon$}

Next, we bound the difference $\mu_i - (1-\epsilon-\gamma)\left(\mu_{1} + \frac{\omega}{4(1-\epsilon)}\right)$. 
\begin{align*}
    \mu_i - (1-\epsilon-\gamma)\left(\mu_{1} + \frac{\omega}{8(1-\epsilon)}\right) & = \mu_i - (1-\epsilon)\mu_{1} + \gamma\mu_1 - \frac{1}{8}\left(\frac{1-\epsilon-\gamma}{1-\epsilon} \right)\gamma\mu_1  \\
    & \geq \mu_i - (1-\epsilon)\mu_{1} + \gamma\mu_1 \left(1 - \frac{1}{8}\left(\frac{1-\epsilon-\gamma}{1-\epsilon} \right)\right)
\end{align*}
Since $i \in M_\epsilon$, $\mu_i - (1-\epsilon)\mu_1 \geq 0$.
Using this and the fact that $\epsilon, \gamma \geq 0$ and $\epsilon + \gamma\leq 1$,
\begin{align*}
    \mu_i - (1-\epsilon)\mu_{1} + \gamma\mu_1 \left(1 - \frac{1}{8}\left(\frac{1-\epsilon-\gamma}{1-\epsilon} \right)\right) 
    & \geq \mu_i - (1-\epsilon)\mu_{1} + \frac{7}{8}\gamma\mu_1 \\
    & \geq \max\left(\mu_i - (1-\epsilon)\mu_{1},  \frac{7}{8}\gamma\mu_1 \right) \\
    & \geq \frac{1}{2}\max\left(\epsilon\mu_1 - \Delta_i, \gamma\mu_1 \right)
\end{align*}
Therefore, we have that 
$$h\left(\frac{1}{2}\left(\mu_i - (1-\epsilon-\gamma)\left(\mu_{1} + \frac{\omega}{8(1-\epsilon)}\right)  \right), \frac{\delta}{n} \right) \leq h\left(\frac{\epsilon\mu_1 - \Delta_i}{4}, \frac{\delta}{n} \right) $$
and
$$ h\left(\frac{1}{2}\left(\mu_i - (1-\epsilon-\gamma)\left(\mu_{1} + \frac{\omega}{8(1-\epsilon)}\right)  \right), \frac{\delta}{n} \right) \leq h\left(\frac{\gamma\mu_1}{4}, \frac{\delta}{n} \right).$$
Hence, 
$$h\left(\frac{1}{2}\left(\mu_i - (1-\epsilon-\gamma)\left(\mu_{1} + \frac{\omega}{8(1-\epsilon)}\right)  \right), \frac{\delta}{n} \right) \leq \min \left[h\left(\frac{\epsilon\mu_1 - \Delta_i}{4}, \frac{\delta}{n} \right), h\left(\frac{\gamma\mu_1}{4}, \frac{\delta}{n} \right) \right].$$

\textbf{Case 2b, $\omega = \gamma\mu_1$ and $i \in M_\epsilon^c \cap M_{\epsilon+\frac{\gamma}{2}}$}

As before, 
\begin{align*}
    \mu_i - (1-\epsilon-\gamma)\left(\mu_{1} + \frac{\omega}{8(1-\epsilon)}\right) = \mu_i - (1-\epsilon)\mu_{1} + \gamma\mu_1 - \frac{1}{8}\left(\frac{1-\epsilon-\gamma}{1-\epsilon} \right)\gamma\mu_1
\end{align*}
Since $i \in M_\epsilon^c \cap M_{\epsilon+\frac{\gamma}{2}}$, we have that
$\mu_i - (1-\epsilon-\frac{\gamma}{2})\mu_1 \geq 0$. Rearranging implies that
$\mu_i - (1-\epsilon)\mu_1 \geq \frac{-1}{2}\gamma\mu_1$. Hence, 
\begin{align*}
    \mu_i - (1-\epsilon)\mu_{1} + \gamma\mu_1 - \frac{1}{8}\left(\frac{1-\epsilon-\gamma}{1-\epsilon} \right)\gamma\mu_1 
    & \geq \frac{1}{2}\gamma\mu_1 - \frac{1}{8}\left(\frac{1-\epsilon-\gamma}{1-\epsilon} \right)\gamma\mu_1 \geq \frac{3}{8}\gamma\mu_1.
\end{align*}
Hence, 
\begin{align*}
    h\left(\frac{1}{2}\left(\mu_i - (1-\epsilon-\gamma)\left(\mu_{1} + \frac{\omega}{8(1-\epsilon)}\right)  \right), \frac{\delta}{n} \right) & \leq h\left(\frac{3\gamma\mu_1}{8}, \frac{\delta}{n} \right).
\end{align*}
Additionally, as above, if $i \in M_\epsilon^c \cap M_{\epsilon+\frac{\gamma}{2}}$, we have that
$\mu_i - (1-\epsilon-\frac{\gamma}{2})\mu_1 \geq 0$ which implies that
$(1-\epsilon)\mu_1 - \mu_i \leq \frac{1}{2}\gamma\mu_1$. Hence 
\begin{align*}
    h\left(\frac{3\gamma\mu_1}{8}, \frac{\delta}{n} \right) \leq \min\left[h\left(\frac{\Delta_i - \epsilon\mu_1}{4}, \frac{\delta}{n} \right), h\left(\frac{\gamma\mu_1}{4}, \frac{\delta}{n} \right) \right].
\end{align*}
Therefore, if $T_i$ exceeds the above,
then $E_1(t)$ is true for an $i_1 \in M_\epsilon^c \cap M_{\epsilon+\frac{\gamma}{2}}$. 
Combining all cases, we see that for $i_1 \in M_{\epsilon+\frac{\gamma}{2}}$, if 
$$T_{i_1(t)}(t) >  \min\left[h\left(\frac{\epsilon\mu_1 - \Delta_i}{4}, \frac{\delta}{n} \right), h\left(\frac{\gamma\mu_1}{4}, \frac{\delta}{n} \right) \right],$$
Then $E_1(t)$ is true. Summing over all possible $i_1 \in M_{\epsilon+\frac{\gamma}{2}}$ proves the claim.
\qed

\textbf{Claim 1:} $|\S \cap \{t: E_1(t) %\neg E_2(t)
\} \cap \{t: \neg E_2(t)\}| \leq \sum_{i\in M_{\epsilon + \frac{\gamma}{2}}^c}\min \left[h\left(\frac{\epsilon\mu_1 - \Delta_i}{8}, \frac{\delta}{n} \right), h\left(\frac{\gamma\mu_1}{8}, \frac{\delta}{n} \right) \right]$.

\textbf{Proof.} 
By the events in set $\S$, $C_{\delta/n}(T_{i^\ast}(t)) \leq \frac{\omega}{16(1-\epsilon)}$. Therefore, 
\begin{align*}
    \mu_{i^\ast} + \frac{\omega}{8(1-\epsilon)} 
    \geq \mu_{i^\ast} + 2C_{\delta/n}(T_{i^\ast}(t))
    \stackrel{\cE}{\geq} \hat{\mu}_{i^\ast}(T_{i^\ast}(t)) + C_{\delta/n}(T_{i^\ast}(t)) 
    & \geq \hat{\mu}_{1}(T_{1}(t)) + C_{\delta/n}(T_{1}(t)) \\
    & \stackrel{\cE}{\geq}\mu_1.
\end{align*}
Hence, $\mu_{i^\ast} \geq \mu_1 - \frac{\omega}{8(1-\epsilon)}$. Rearranging this, we see that $\mu_{i^\ast} - \left(1 -\frac{\omega}{8\mu_1(1-\epsilon)} \right) \mu_1 \geq 0 $ which implies that $i^\ast \in M_{\frac{\omega}{8\mu_1(1-\epsilon)}}$. Hence,
\begin{align*}
L_t = (1-\epsilon)\left(\max_{i}\hat{\mu}_i(T_i(t)) - C_{\delta/n}(T_i(t))\right) 
&  (1-\epsilon)\left(\hat{\mu}_{i^\ast}(T_{i^\ast}(t)) - C_{\delta/n}(T_{i^\ast}(t))\right) \\
& \stackrel{\cE}{\geq} (1-\epsilon)\left(\mu_{i^\ast} - 2C_{\delta/n}(T_{i^\ast}(t))\right) \\
& \geq (1-\epsilon)\left(\mu_{i^\ast} - \frac{\omega}{8(1-\epsilon)}\right) \\
& \geq (1-\epsilon)\left(\mu_1 - \frac{\omega}{4(1-\epsilon)}\right)
\end{align*}

% By the events in set $\S$, we have that $C_{\delta/n}(T_{i^\ast}(t)) \leq \frac{\omega}{16(1-\epsilon)} $ which includes the best arm. Hence, 
% \begin{align*}
% L_t = (1-\epsilon)\left(\max_{i}\hat{\mu}_i(T_i(t)) - C_{\delta/n}(T_i(t))\right) 
% & \geq (1-\epsilon)\left(\hat{\mu}_1(T_1(t)) - C_{\delta/n}(T_1(t))\right) \\
% & \stackrel{\cE}{\geq} (1-\epsilon)\left(\mu_1 - 2C_{\delta/n}(T_1(t))\right) \\
% & \geq (1-\epsilon)\left(\mu_1 - \frac{\omega}{4(1-\epsilon)}\right).
% \end{align*}
% If $C_{\delta/n}(T_i)\leq \frac{1}{2}\left((1-\epsilon)\left(\mu_{1} - \frac{\omega}{4(1-\epsilon)}\right) -\mu_i \right)$, true when $T_i \geq h\left(\frac{1}{2}\left((1-\epsilon)\left(\mu_{1} - \frac{\omega}{4(1-\epsilon)}\right) -\mu_i \right), \frac{\delta}{n} \right)$, then 
% \begin{align*}
%     \hat{\mu}_i(T_i) + C_{\delta/n}(T_i) & \leq \mu_i + 2C_{\delta/n}(T_i) 
%     \leq (1-\epsilon)\left(\mu_{1} - \frac{\omega}{4(1-\epsilon)}\right) 
%     \leq L_t.
% \end{align*}
As before, we seek a lower bound for the difference $(1-\epsilon)\left(\mu_{1} - \frac{\omega}{4(1-\epsilon)}\right) -\mu_i$.

\textbf{Case 1: $\omega = \min(\Tilde{\alpha}_\epsilon, \Tilde{\beta}_\epsilon)$}
\begin{align*}
    (1-\epsilon)\left(\mu_{1} - \frac{\omega}{4(1-\epsilon)}\right) -\mu_i 
    & = (1-\epsilon)\mu_{1} - \mu_i- \frac{1}{4}\min(\Tilde{\alpha}_\epsilon, \Tilde{\beta}_\epsilon) \\
    & \geq \frac{1}{2}\left( (1-\epsilon)\mu_{1} - \mu_i \right)
\end{align*}
since $(1-\epsilon)\mu_{1} - \mu_i \geq  \min(\Tilde{\alpha}_\epsilon, \Tilde{\beta}_\epsilon)$. 
Therefore, we have that 
$$h\left(\frac{1}{2}\left((1-\epsilon)\left(\mu_{1} - \frac{\omega}{4(1-\epsilon)}\right) -\mu_i \right), \frac{\delta}{n} \right) \leq h\left( \frac{\Delta_i - \epsilon\mu_1}{4}, \frac{\delta}{n} \right).$$
Lastly, in this setting, $\gamma\mu_1 \leq \min(\Tilde{\alpha}_\epsilon, \Tilde{\beta}_\epsilon) \leq \epsilon\mu_1 - \Delta_i$ since $\omega = \min(\Tilde{\alpha}_\epsilon, \Tilde{\beta}_\epsilon)$. Hence, it is trivially true that 
$$h\left(\frac{\Delta_i - \epsilon\mu_1}{4}, \frac{\delta}{n} \right) = \min \left[h\left(\frac{\Delta_i - \epsilon\mu_1}{4}, \frac{\delta}{n} \right), h\left(\frac{\gamma\mu_1}{4}, \frac{\delta}{n} \right) \right].$$

\textbf{Case 2: $\omega = \gamma\mu_1$}

Assume that $\gamma\mu_1 > \min(\Tilde{\alpha}_\epsilon, \Tilde{\beta}_\epsilon)$, as equality is covered by the previous case. Hence, 
\begin{align*}
    (1-\epsilon)\left(\mu_{1} - \frac{\omega}{4(1-\epsilon)}\right) -\mu_i 
    & = (1-\epsilon)\mu_{1} - \mu_i- \frac{1}{4}\gamma\mu_1
\end{align*}
Recall that we seek to control 
$i_2 \in M_{\epsilon + \frac{\gamma}{2}}^c$. For any $i \in M_{\epsilon + \frac{\gamma}{2}}^c$, we have that $(1 - \epsilon - \frac{\gamma}{2})\mu_1 - \mu_i \geq 0$. Rearranging, we see that $(1-\epsilon)\mu_1 - \mu_i \geq \frac{1}{2}\gamma\mu_1$ which implies that
\begin{align*}
    (1-\epsilon)\mu_{1} - \mu_i- \frac{1}{4}\gamma\mu_1 &\geq  \frac{1}{2}((1-\epsilon)\mu_1 - \mu_i).
\end{align*}
Therefore, we have that 
$$h\left(\frac{1}{2}\left((1-\epsilon)\left(\mu_{1} - \frac{\omega}{4(1-\epsilon)}\right) -\mu_i \right), \frac{\delta}{n} \right) \leq h\left( \frac{\Delta_i - \epsilon\mu_1}{4}, \frac{\delta}{n} \right)$$
is this setting as well. Similarly, since $\Delta_i - \epsilon\mu_1 \geq \frac{1}{2}\gamma\mu_1$, we likewise have that 
$$h\left( \frac{\Delta_i - \epsilon\mu_1}{4}, \frac{\delta}{n} \right) \leq \min\left[h\left( \frac{\Delta_i - \epsilon\mu_1}{8}, \frac{\delta}{n} \right), h\left( \frac{\gamma\mu_1}{8}, \frac{\delta}{n} \right) \right]. $$
Hence, if $T_i$ exceeds the right-hand side of the preceding inequality, then for any $i \in M_{\epsilon + \frac{\gamma}{2}}^c$, its upper bound is below $L_t$. Hence, for $i_2(t) \in M_{\epsilon + \frac{\gamma}{2}}^c$, this implies event $E_2(t)$. Summing over all possible values of $i_2(t) \in M_{\epsilon + \frac{\gamma}{2}}^c$ proves the claim. 
\qed 

\textbf{Claim 2:} The cardinality of $\S$ is bounded as $|\S| \leq \sum_{i=1}^n\min\left[h\left( \frac{\Delta_i - \epsilon\mu_1}{8}, \frac{\delta}{n} \right), h\left( \frac{\gamma\mu_1}{8}, \frac{\delta}{n} \right) \right]$. 

\textbf{Proof.} 
First, $\S$ may be decomposed as 
$$|\S| =  |\S \cap \{t: \neg E_1(t)\}| + |\S \cap \{t: E_1(t)\} \cap \{t: \neg E_2(t)\}| + |\S \cap \{t: E_1(t)\} \cap \{t: E_2(t)\}|$$
Note that $|\S \cap \{t: E_1(t)\} \cap \{t: E_2(t)\}| = 0$ because we have assumed in set $\S$ that $\st2$ has not stopped, and $\{t: E_1(t)\} \cap \{t: E_2(t)\}$ implies termination. By Claim $0$, $|\S \cap \{t: \neg E_1(t)\}| \leq \sum_{i\in M_{\epsilon + \frac{\gamma}{2}}}\min \left[h\left(\frac{\epsilon\mu_1 - \Delta_i}{4}, \frac{\delta}{n} \right), h\left(\frac{\gamma\mu_1}{4}, \frac{\delta}{n} \right) \right]$. By Claim $1$, $|\S \cap \{t: E_1(t)\} \cap \{t: \neg E_2(t)\}| \leq \sum_{i\in M_{\epsilon + \frac{\gamma}{2}}^c}\min \left[h\left(\frac{\epsilon\mu_1 - \Delta_i}{8}, \frac{\delta}{n} \right), h\left(\frac{\gamma\mu_1}{8}, \frac{\delta}{n} \right) \right]$. Recalling that $h$ is assumed to be symmetric in its first argument and summing the two terms proves the claim. 
\qed

\subsubsection{Step 4: Putting it all together}

Recall that the total number of rounds $T$ that $\st2$ runs for is given by $T = |\{t: \neg \text{STOP}\}|$. To bound this quantity, we have decomposed the set $\{t: \neg \text{STOP}\}$ into many subsets. Below, we show this decomposition. 
\begin{align*}
     & \{t: \neg \text{STOP}\} = \\
     & \{t: \neg \text{STOP} \text{ and } i^\ast \notin M_{\omega/\mu_1}\} \\
     & \cup 
     \left\{t: \neg \text{STOP} \text{ and } i^\ast \in M_{\omega/\mu_1} \text{ and } C_{\delta/n}(T_{i^\ast}(t)) > \frac{\omega}{16(1-\epsilon)}\right\} \\
 &\cup \left\{t: \neg \text{STOP} \text{ and } i^\ast \in M_{\omega/\mu_1} \text{ and } C_{\delta/n}(T_{i^\ast}(t)) \leq \frac{\omega}{16(1-\epsilon)} \text{ and } i_1(t) \in M_{\epsilon + \frac{\gamma}{2}}^c\right\} \\
    & \cup \left\{t: \neg \text{STOP} \text{ and } i^\ast \in M_{\omega/\mu_1} \text{ and } C_{\delta/n}(T_{i^\ast}(t)) \leq \frac{\omega}{16(1-\epsilon)}  \text{ and } i_1(t) \in M_{\epsilon + \frac{\gamma}{2}}\right. \\ & \hspace{2cm} \left.\text{ and } i_2(t) \in M_{\epsilon + \frac{\gamma}{2}}\right\} \\
    & \cup \left\{t: \neg \text{STOP} \text{ and } i^\ast \in M_{\omega/\mu_1} \text{ and } C_{\delta/n}(T_{i^\ast}(t)) \leq \frac{\omega}{16(1-\epsilon)}  \text{ and } i_1(t) \in M_{\epsilon + \frac{\gamma}{2}}\right. \\ & \hspace{2cm}\left.\text{ and } i_2(t) \in M_{\epsilon + \frac{\gamma}{2}}^c\right\}. 
\end{align*}
Hence, by a union bound and plugging in the results of the above steps,
\begin{align*}
     & \left|\{t: \neg \text{STOP}\}\right| \leq \\
     & \left|\{t: \neg \text{STOP} \text{ and } i^\ast \notin M_{\omega/\mu_1}\}\right| \\
     & +
     \left|\left\{t: \neg \text{STOP} \text{ and } i^\ast \in M_{\omega/\mu_1} \text{ and } \exists i \in M_{\omega/\mu_1} : C_{\delta/n}(T_i(t)) > \frac{\omega}{8(1-\epsilon)}\right\}\right| \\
 & +  \left|\left\{t: \neg \text{STOP} \text{ and } i^\ast \in M_{\omega/\mu_1} \text{ and } C_{\delta/n}(T_{i^\ast}(t)) \leq \frac{\omega}{16(1-\epsilon)} \text{ and } i_1(t) \in M_{\epsilon + \frac{\gamma}{2}}^c\right\}\right| \\
    & + \left|\left\{t: \neg \text{STOP} \text{ and } i^\ast \in M_{\omega/\mu_1} \text{ and } C_{\delta/n}(T_{i^\ast}(t)) \leq \frac{\omega}{16(1-\epsilon)}  \text{ and } i_1(t) \in M_{\epsilon + \frac{\gamma}{2}}\right.\right. \\ & \hspace{2cm} \left.\left.\text{ and } i_2(t) \in M_{\epsilon + \frac{\gamma}{2}}\right\}\right| \\
    & + \left|\left\{t: \neg \text{STOP} \text{ and } i^\ast \in M_{\omega/\mu_1} \text{ and } C_{\delta/n}(T_{i^\ast}(t)) \leq \frac{\omega}{16(1-\epsilon)}  \text{ and } i_1(t) \in M_{\epsilon + \frac{\gamma}{2}}\right.\right. \\ & \hspace{2cm}\left.\left.\text{ and } i_2(t) \in M_{\epsilon + \frac{\gamma}{2}}^c\right\}\right| \\
    & \leq \sum_{i \in M_{\omega/\mu_1}^c} \min\left\{h\left(\frac{\gamma\mu_1}{2}, \frac{\delta}{n} \right), \min\left[ h\left(\frac{\Delta_i}{2}, \frac{\delta}{n} \right), h\left(\frac{\min(\Tilde{\alpha}_\epsilon, \Tilde{\beta}_\epsilon)}{2}, \frac{\delta}{n} \right) \right]\right\} \\
    &\hspace{1cm}  + \sum_{i \in M_{\omega/\mu_1}} \min\left\{h\left(\frac{\gamma\mu_1}{16}, \frac{\delta}{n} \right), \min\left[h\left(\frac{\Delta_i}{16}, \frac{\delta}{n} \right), h\left(\frac{\min(\Tilde{\alpha}_\epsilon, \Tilde{\beta}_\epsilon)}{16(1-\epsilon)}, \frac{\delta}{n} \right)\right] \right\} \\
    &\hspace{1cm}  + \sum_{i \in M_{\epsilon + \frac{\gamma}{2}}^c} \min \left[h\left(\frac{\Delta_i - \epsilon\mu_1}{8}, \frac{\delta}{n} \right), h\left(\frac{\gamma\mu_1}{8}, \frac{\delta}{n} \right) \right] \\
    & \hspace{1cm} + \sum_{i \in M_{\epsilon + \frac{\gamma}{2}}} \min \left[h\left(\frac{\epsilon\mu_1 - \Delta_i}{8}, \frac{\delta}{n} \right), h\left(\frac{\gamma\mu_1}{8}, \frac{\delta}{n} \right) \right] \\
    &\hspace{1cm}  + \sum_{i=1}^n\min\left[h\left( \frac{\Delta_i - \epsilon\mu_1}{8}, \frac{\delta}{n} \right), h\left( \frac{\gamma\mu_1}{8}, \frac{\delta}{n} \right) \right] \\
    & \stackrel{(\epsilon\leq 1/2)}{\leq} \sum_{i=1}^n \min\left\{h\left(\frac{\gamma\mu_1}{16}, \frac{\delta}{n} \right), \min\left[h\left(\frac{\Delta_i}{16}, \frac{\delta}{n} \right), h\left(\frac{\min(\Tilde{\alpha}_\epsilon, \Tilde{\beta}_\epsilon)}{16(1-\epsilon)}, \frac{\delta}{n} \right)\right] \right\} \\
    &\hspace{1cm}  + 2\sum_{i=1}^n\min\left[h\left( \frac{\Delta_i - \epsilon\mu_1}{8}, \frac{\delta}{n} \right), h\left( \frac{\gamma\mu_1}{8}, \frac{\delta}{n} \right) \right] \\
    & \leq 4\sum_{i=1}^n \min\left\{ \max\left\{h\left( \frac{\Delta_i - \epsilon\mu_1}{16}, \frac{\delta}{n} \right), \min\left[ h\left(\frac{\Delta_i}{16}, \frac{\delta}{n} \right), h\left(\frac{\min(\Tilde{\alpha}_\epsilon, \Tilde{\beta}_\epsilon))}{16(1-\epsilon)}, \frac{\delta}{n} \right)\right]\right\},\right.\\ 
    &\hspace{3cm}\left. h\left(\frac{\gamma\mu_1}{16},  \frac{\delta}{n} \right)\right\} 
\end{align*}
Next, by Lemma~\ref{lem:bound_min_of_h}, we may bound the minimum of $h(\cdot, \cdot)$ functions.
\begin{align*}
    & 4\sum_{i=1}^n \min\left\{ \max\left\{h\left( \frac{\Delta_i - \epsilon\mu_1}{16}, \frac{\delta}{n} \right), \min\left[ h\left(\frac{\Delta_i}{16}, \frac{\delta}{n} \right), h\left(\frac{\min(\Tilde{\alpha}_\epsilon, \Tilde{\beta}_\epsilon)}{16(1-\epsilon)}, \frac{\delta}{n} \right)\right]\right\},\right.\\ 
    &\hspace{3cm}\left. h\left(\frac{\gamma\mu_1}{16},  \frac{\delta}{n} \right)\right\} \\
    & = 4\sum_{i=1}^n \min\left\{ \max\left\{h\left( \frac{\Delta_i - \epsilon\mu_i}{16}, \frac{\delta}{n} \right),\right.\right.\\ 
    &\hspace{3cm}\left.\left. \min\left[ h\left(\frac{\Delta_i}{16}, \frac{\delta}{n} \right), \max\left[h\left(\frac{\Tilde{\alpha}_\epsilon}{16(1-\epsilon)}, \frac{\delta}{n} \right), h\left(\frac{ \Tilde{\beta}_\epsilon}{16(1-\epsilon)}, \frac{\delta}{n} \right) \right]\right]\right\},\right.\\ 
    &\hspace{3cm}\left. h\left(\frac{\gamma\mu_i}{16},  \frac{\delta}{n} \right)\right\} \\
    & \leq 4\sum_{i=1}^n \min\left\{ \max\left\{h\left( \frac{\Delta_i - \epsilon\mu_i}{16}, \frac{\delta}{n} \right),\right.\right.\\ 
    &\hspace{3cm}\left.\left. \max\left[ h\left(\frac{\Delta_i + \frac{\Tilde{\alpha}_\epsilon}{1-\epsilon}}{32}, \frac{\delta}{n} \right),h\left(\frac{\Delta_i + \frac{\Tilde{\beta}_\epsilon}{1-\epsilon}}{32}, \frac{\delta}{n} \right)\right]\right\},\right.\\ 
    &\hspace{3cm}\left. h\left(\frac{\gamma\mu_i}{16},  \frac{\delta}{n} \right)\right\} \\
    & = 4\sum_{i=1}^n \min\left\{ \max\left\{h\left( \frac{\Delta_i - \epsilon\mu_i}{16}, \frac{\delta}{n} \right), h\left(\frac{\Delta_i + \frac{\Tilde{\alpha}_\epsilon}{1-\epsilon}}{32}, \frac{\delta}{n} \right),h\left(\frac{\Delta_i + \frac{\Tilde{\beta}_\epsilon}{1-\epsilon}}{32}, \frac{\delta}{n} \right)\right\},\right.\\ 
    &\hspace{3cm}\left. h\left(\frac{\gamma\mu_i}{16},  \frac{\delta}{n} \right)\right\} 
\end{align*}

Finally, we use Lemma~\ref{lem:inv_conf_bounds} to bound the function $h(\cdot, \cdot)$.
Since $\delta\leq 1/2$, $\delta/n \leq 2e^{-e/2}$. Further, $|\epsilon\mu_1 - \Delta_i| \leq 8$ for all $i$ and $\epsilon \leq 1/2$ implies that $\frac{1}{8(1-\epsilon)}|\epsilon\mu_1 - \Delta_i| \leq 2$ and $\frac{1}{8(1-\epsilon)}\min(\tilde{\alpha}_\epsilon, \tilde{\beta}_\epsilon) \leq 2$. $\Delta_i \leq 16$ for all $i$, gives $0.125\Delta_i \leq 2$. Lastly, $\gamma \leq 16/\mu_1$ implies that $\frac{\gamma\mu_1}{8} \leq 2$.
Therefore,
\begin{align*}
    & 4\sum_{i=1}^n \min\left\{ \max\left\{h\left( \frac{\Delta_i - \epsilon\mu_i}{16}, \frac{\delta}{n} \right), h\left(\frac{\Delta_i + \frac{\Tilde{\alpha}_\epsilon}{1-\epsilon}}{32}, \frac{\delta}{n} \right),h\left(\frac{\Delta_i + \frac{\Tilde{\beta}_\epsilon}{1-\epsilon}}{32}, \frac{\delta}{n} \right)\right\},\right.\\ 
    &\hspace{3cm}\left. h\left(\frac{\gamma\mu_i}{16},  \frac{\delta}{n} \right)\right\} \\
& \leq 4\sum_{i=1}^n \min\left\{\max\left\{\frac{1024}{(\epsilon\mu_1 - \Delta_i)^2}\log\left(\frac{2n}{\delta}\log_2\left(\frac{3072n}{\delta(\epsilon\mu_1 - \Delta_i)^2}\right) \right),\right.\right.\\
&\hspace{3cm} \frac{4096}{(\Delta_i + \frac{\Tilde{\alpha}_\epsilon}{1-\epsilon})^2}\log\left(\frac{2n}{\delta}\log_2\left(\frac{12288n}{\delta(\Delta_i + \frac{\Tilde{\alpha}_\epsilon}{1-\epsilon})^2}\right) \right), \\
& \hspace{3cm}\left. \frac{4096}{(\Delta_i + \frac{\Tilde{\beta}_\epsilon}{1-\epsilon})^2}\log\left(\frac{2n}{\delta}\log_2\left(\frac{12288n}{\delta(\Delta_i + \frac{\Tilde{\beta}_\epsilon}{1-\epsilon})^2}\right) \right) \right\} \\
& \hspace{2cm}\left.\frac{1024}{\gamma^2\mu_1^2}\log\left(\frac{2n}{\delta}\log_2\left(\frac{3072n}{\delta\gamma^2\mu_1^2}\right) \right)\right\} \\
& = 4\sum_{i=1}^n \min\left\{\max\left\{\frac{1024}{((1 - \epsilon)\mu_1 - \mu_i)^2}\log\left(\frac{2n}{\delta}\log_2\left(\frac{3072n}{\delta((1 - \epsilon)\mu_1 - \mu_i)^2}\right) \right),\right.\right.\\
&\hspace{3cm} \frac{4096}{(\mu_1 + \frac{\Tilde{\alpha}_\epsilon}{1-\epsilon} - \mu_i)^2}\log\left(\frac{2n}{\delta}\log_2\left(\frac{12288n}{\delta(\mu_1 + \frac{\Tilde{\alpha}_\epsilon}{1-\epsilon})^2}\right) \right), \\
& \hspace{3cm}\left. \frac{4096}{(\mu_1 + \frac{\Tilde{\beta}_\epsilon}{1-\epsilon} -\mu_i)^2}\log\left(\frac{2n}{\delta}\log_2\left(\frac{12288n}{\delta(\mu_1 + \frac{\Tilde{\beta}_\epsilon}{1-\epsilon} - \mu_i)^2}\right) \right) \right\}, \\
& \hspace{2cm}\left.\frac{1024}{\gamma^2\mu_1^2}\log\left(\frac{2n}{\delta}\log_2\left(\frac{3072n}{\delta\gamma^2\mu_1^2}\right) \right)\right\}.
\end{align*}

% \begin{align*}
%     & 4\sum_{i=1}^n \min\left\{ \max\left\{h\left( \frac{\Delta_i - \epsilon\mu_1}{16}, \frac{\delta}{n} \right), \min\left[ h\left(\frac{\Delta_i}{16}, \frac{\delta}{n} \right), h\left(\frac{\min(\Tilde{\alpha}_\epsilon, \Tilde{\beta}_\epsilon))}{16(1-\epsilon)}, \frac{\delta}{n} \right)\right]\right\},\right.\\ 
%     &\hspace{3cm}\left. h\left(\frac{\gamma\mu_1}{16},  \frac{\delta}{n} \right)\right\} \\
% & \leq 4096\sum_{i=1}^n \min\left\{\max\left\{\frac{1}{(\epsilon\mu_1 - \Delta_i)^2}\log\left(\frac{2n}{\delta}\log_2\left(\frac{768n}{\delta(\epsilon\mu_1 - \Delta_i)^2}\right) \right),\right.\right.\\
% &\hspace{1cm} \left.\left.\min \left[\frac{1}{\Delta_i^2}\log\left(\frac{2n}{\delta}\log_2\left(\frac{768n}{\delta\Delta_i^2}\right) \right), \frac{(1-\epsilon)^2}{\min(\tilde{\alpha}_\epsilon, \tilde{\beta}_\epsilon)^2}\log\left(\frac{2n}{\delta}\log_2\left(\frac{768(1-\epsilon)^2n}{\delta\min(\tilde{\alpha}_\epsilon, \tilde{\beta}_\epsilon)^2}\right) \right)\right] \right\},\right.\\
% & \hspace{2cm}\left.\frac{1}{\gamma^2\mu_1^2}\log\left(\frac{2n}{\delta}\log_2\left(\frac{768n}{\delta\gamma^2\mu_1^2}\right) \right)\right\}.
% \end{align*}
The above bounds the number of rounds $T$. Therefore, the total number of samples is at most $3T$. 
\end{proof}

\section{Proof of instance dependent lower bounds, Theorem~\ref{thm:inst_lower}}
First we restate and prove the lower bound.

\begin{theorem}(\additive and \multiplicative lower bound)
Fix $\delta, \epsilon > 0$. Consider $n$ arms, such that the $i^\text{th}$ is distributed according to ${\cal N}(\mu_i, 1)$. Any $\delta$-PAC algorithm for the \additive setting satisfies
$$ \E[\tau] \geq 2\sum_{i=1}^n \max \left\{\frac{1}{\left(\mu_1 - \epsilon - \mu_i \right)^{2}}, \frac{1}{(\mu_1 + \alpha_\epsilon - \mu_i)^2} \right\} \log \left(\frac{1}{2.4\delta}\right)$$
and if $\mu_1 > 0$ any $\delta$-PAC algorithm for the \multiplicative algorithm satisfies, 
$$ E[\tau] \geq 2\sum_{i=1}^n \max \left\{\frac{1}{\left((1-\epsilon)\mu_1 - \mu_i \right)^{2}}, \frac{1}{ (\mu_1 +\frac{\Tilde{\alpha}_\epsilon}{1-\epsilon} - \mu_i)^2}\right\} \log \left(\frac{1}{2.4\delta}\right)$$
\end{theorem}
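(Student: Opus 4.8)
The bound is an instance of the standard change-of-measure lower bound (the transportation lemma of \cite{kaufmann2016complexity}), applied one arm at a time. Since any $\delta$-PAC algorithm has an a.s.\ finite stopping time $\tau$, for any alternative instance $\nu'$ differing from $\nu$ only in the law of a single arm $i$ and whose correct answer differs from that of $\nu$, the transportation inequality applied to the event ``the algorithm outputs the correct set for $\nu$'' collapses to the single term
\[
\E_\nu[T_i(\tau)]\,\mathrm{KL}(\nu_i,\nu_i')\;\geq\;\mathrm{kl}(1-\delta,\delta)\;\geq\;\log\!\Big(\tfrac{1}{2.4\delta}\Big),
\]
where $T_i(\tau)$ counts the pulls of arm $i$ and $\mathrm{kl}$ is the binary relative entropy. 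For unit-variance Gaussians $\mathrm{KL}({\cal N}(a,1),{\cal N}(b,1))=(a-b)^2/2$, so a single-coordinate perturbation of arm $i$ by an amount $\eta$ forces $\E_\nu[T_i(\tau)]\geq \tfrac{2}{\eta^2}\log(1/(2.4\delta))$. Since $\E_\nu[\tau]=\sum_{i=1}^n\E_\nu[T_i(\tau)]$, it suffices, for each $i$, to produce two such alternatives realizing the two gaps inside the $\max$, and then sum.

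\textbf{Additive case.} Fix arm $i$. To realize the first gap we flip arm $i$ across the threshold $\mu_1-\epsilon$: if $i\neq1$ and $i\in G_\epsilon$ decrease $\mu_i$ to $\mu_1-\epsilon-\zeta$; if $i\neq1$ and $i\in G_\epsilon^c$ increase $\mu_i$ to $\mu_1-\epsilon+\zeta$ (with $\zeta>0$ small; as $\epsilon>0$, arm $1$ stays the strict maximizer). In either case $G_\epsilon$ gains or loses arm $i$, and letting $\zeta\downarrow0$ the perturbation tends to $|\mu_1-\epsilon-\mu_i|$, giving $\E_\nu[T_i(\tau)]\geq \tfrac{2}{(\mu_1-\epsilon-\mu_i)^2}\log(1/(2.4\delta))$. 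To realize the second gap we push arm $i$'s mean \emph{up}, to $\mu_1+\alpha_\epsilon+\zeta$: then arm $i$ is the new strict maximizer, the threshold becomes $\mu_1+\alpha_\epsilon+\zeta-\epsilon=\mu_k+\zeta$, and the smallest previously good arm $k$ (mean $\mu_k$) drops strictly below it, so $G_\epsilon$ changes as long as $i\neq k$; letting $\zeta\downarrow0$ the perturbation tends to $\mu_1+\alpha_\epsilon-\mu_i$, giving the second term. This second construction in fact applies to every $i\notin\{1,k\}$ and also to $i=1$ when $k\ge2$ (inflating $\mu_1$ past $\mu_1+\alpha_\epsilon$ again ejects arm $k$), and since $\alpha_\epsilon\leq\epsilon$ the first term is then automatically dominated; symmetrically, for $i=k$ with $k\ge2$ the first-term construction already yields $1/\alpha_\epsilon^2\geq1/\epsilon^2$. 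Summing the per-arm lower bounds and taking, coordinatewise, the larger of the two yields the additive statement; the fully degenerate instance $|G_\epsilon|=1$ with an isolated best arm is handled via the auxiliary convention $\alpha_\epsilon:=\min(\alpha_\epsilon,\beta_\epsilon)$ used elsewhere in the paper, flipping arm $1$ against $\mu_{k+1}$ instead.

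\textbf{Multiplicative case.} Everything carries over with the threshold $(1-\epsilon)\mu_1$ in place of $\mu_1-\epsilon$. For the first term, flip arm $i$ across $(1-\epsilon)\mu_1$ (downward if $i\in M_\epsilon$, upward if $i\in M_\epsilon^c$); the hypothesis $\mu_1>0$ ensures $(1-\epsilon)\mu_1<\mu_1$, so a small upward move keeps arm $1$ the maximizer, and the resulting perturbation tends to $|(1-\epsilon)\mu_1-\mu_i|$. For the second term, raise $\mu_i$ to $\mu_1+\tfrac{\tilde\alpha_\epsilon}{1-\epsilon}+\zeta$; then arm $i$ is the new maximizer with threshold $(1-\epsilon)\big(\mu_1+\tfrac{\tilde\alpha_\epsilon}{1-\epsilon}+\zeta\big)=\mu_k+(1-\epsilon)\zeta>\mu_k$, so arm $k$ leaves $M_\epsilon$ and the answer changes whenever $i\neq k$, the perturbation tending to $\mu_1+\tfrac{\tilde\alpha_\epsilon}{1-\epsilon}-\mu_i$. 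The indices $i=1$ and $i=k$ are dispatched exactly as before, using $\tilde\alpha_\epsilon\leq\epsilon\mu_1\leq\epsilon\big(\mu_1+\tfrac{\tilde\alpha_\epsilon}{1-\epsilon}\big)$ to see the first term is dominated for $i=1$ and vice versa for $i=k$. Summing and taking coordinatewise maxima gives the multiplicative bound.

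\textbf{Main obstacle.} There is essentially no analytic difficulty once the transportation inequality and the Gaussian KL formula are in hand; the real work is the case bookkeeping -- for \emph{every} arm $i$ one must exhibit a genuine single-coordinate perturbation that changes the correct output set while leaving a unique maximizer, and check that the two perturbations available for $i$ realize precisely the two gaps inside the $\max$. The awkward indices are $i=1$ and $i=k$, where one of the two moves may fail to change the answer but the other gap then dominates, together with the degenerate instances where $|G_\epsilon|$ (resp.\ $|M_\epsilon|$) equals $1$, which require the auxiliary convention on $\alpha_\epsilon$ (resp.\ $\tilde\alpha_\epsilon$).
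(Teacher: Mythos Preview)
Your proposal is correct and follows essentially the same route as the paper: apply the Kaufmann--Cappé--Garivier transportation inequality one coordinate at a time, using for each arm $i$ the two single-coordinate alternatives (i)~flip $\mu_i$ across the threshold $\mu_1-\epsilon$ (resp.\ $(1-\epsilon)\mu_1$), and (ii)~raise $\mu_i$ to $\mu_k+\epsilon=\mu_1+\alpha_\epsilon$ (resp.\ $\mu_k/(1-\epsilon)=\mu_1+\tilde\alpha_\epsilon/(1-\epsilon)$) so that arm $k$ is ejected from the good set; then take $\eta\downarrow0$ and sum. Your edge-case bookkeeping for $i=1$ and $i=k$ is in fact slightly more explicit than the paper's, which treats only $i=k$ by hand; one small slip is that the auxiliary convention $\alpha_\epsilon:=\min(\alpha_\epsilon,\beta_\epsilon)$ you invoke for the degenerate case is stated in the paper for $G_\epsilon=[n]$ rather than $|G_\epsilon|=1$, but this does not affect the main argument.
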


% \begin{theorem}(\additive lower bound)
% Fix $\delta, \epsilon > 0$. Fix an instance $\nu$ of $n$ variance $1$ Gaussian distributions $\rho_i$. Define
% $$ H_1 = 2\sum_{i=1}^n \max \left(\frac{1}{\left(\epsilon - \Delta_i \right)^{2}}, \frac{1}{(\mu_k + \epsilon - \mu_i)^2} \right) \log \left(\frac{1}{2.4\delta}\right)$$
% and 
% $$H_2 = \sum_{i=1}^n \max \left(\frac{1}{\left(\epsilon - \Delta_i \right)^{2}}, \min \left\{\frac{1}{\Delta_i^2}, \frac{1}{\alpha_\epsilon^2}\right\} \right) \log \left(\frac{1}{2.4\delta}\right)$$
% Any algorithm that returns $G_\epsilon$ with probability at least $1-\delta$,
% requires at least $H_1$ samples in expectation. Furthermore, $\frac{1}{2} H_2 \leq H_1 \leq 2H_2$. 
% \end{theorem}

\begin{proof}[Proof of Theorem~\ref{thm:inst_lower} in the \additive case]

Recall that $\nu$ denotes the given instance, and without loss of generality we have assumed that $\mu_1 \geq \mu_2 \geq \cdots \geq \mu_n$. Then $G_{\epsilon}(\nu) = \{1, \cdots, k\}$. Consider the event $E$ that an algorithm returns $\{1, \cdots, k\}$. For any $\delta$-PAC algorithm, $E$ occurs with probability at least $1-\delta$. 
For each arm $i \in [n]$ we consider two alternative instances 
$$\nu_i' = \{\mu_1, \cdots, \mu_i', \cdots, \mu_n\}$$
and $$\nu_i'' = \{\mu_1, \cdots, \mu_i'', \cdots, \mu_n\}$$ 
such that only the mean of arm $i$ differs compared to $\nu$ but
$G_{\epsilon}(\nu) \neq G_{\epsilon}(\nu_i')$ and $G_{\epsilon}(\nu) \neq G_{\epsilon}(\nu_i'')$. Therefore, on these alternate instances, $E$ occurs with probability at most $\delta$. 

For $\nu_i'$, if $i \leq k$, let $\mu_i' = \mu_1 - \epsilon - \eta$. Then $i \in G_\epsilon(\nu)$ but $i \notin G_\epsilon(\nu_i')$. If $k < n$ and $i \geq k + 1$, let $\mu_i' = \mu_1 - \epsilon + \eta$. Then $i \notin G_\epsilon(\nu)$ but $i \in G_\epsilon(\nu_i')$.

More subtly, for $\nu_i''$, for any $i \in [n] \backslash \{k\}$, let $\mu_i'' = \mu_k + \epsilon + \eta$. In particular, arm $i$ is now the best arm. Under this definition, $\mu_i'' - \epsilon > \mu_k$. Therefore, $k \notin G_\epsilon(\nu_i'')$ but $k \in G_\epsilon(\nu)$. 

% This implies that $\mu_i'' \in G_{\epsilon}(\nu_i'')$. However, under this definition $\mu_i'' - \epsilon > \mu_k \implies k \notin G_{\epsilon}(\nu_i'')$ so $G_{\epsilon}(\nu) \neq G_{\epsilon}(\nu_i'')$. 

The above holds for all $\eta > 0$. Let $N_i$ denote the random variable of the number of samples of arm $i$ and $\E_\nu$ denote expectation with respect to instance $\nu$. 
Using the fact that we have assumed the distributions are Gaussian, considering $\nu_i'$, by Lemma 1 of \cite{kaufmann2016complexity}, taking $\eta \rightarrow 0$ we have that for any $\delta$-PAC algorithm,
$$\E_\nu [N_i] \geq \frac{2\log (1/2.4\delta)}{(\mu_i -  (\mu_1 - \epsilon))^2}.$$
Furthermore, considering $\nu_i''$, and again taking $\eta \rightarrow 0$, we have by the same lemma that for $i \neq k$
$$\E_\nu [N_i] \geq \frac{2\log (1/2.4\delta)}{\left(\mu_k + \epsilon - \mu_i\right)^2} = \frac{2\log (1/2.4\delta)}{\left(\mu_1 + \alpha_\epsilon - \mu_i\right)^2},$$
where the later equality holds since $\mu_k + \epsilon = \mu_1 + \alpha_\epsilon$ by definition of $\alpha_\epsilon$. 
For $i = k$, note that $\frac{1}{(\mu_k - (\mu_1 - \epsilon))} = \frac{1}{\alpha_\epsilon^2} \geq \frac{1}{\epsilon^2} = \frac{1}{(\mu_k  -\mu_k - \epsilon)^2}$ since $\alpha_\epsilon = \min_{i \in G_\epsilon}\mu_i - (\mu_1 - \epsilon) = \min_{i \in G_\epsilon}\epsilon - \Delta_i$. Putting these pieces together, we see that for any $i$,
$$\E_\nu[N_i] \geq \max\left(\frac{1}{\left(\mu_i - (\mu_1 - \epsilon)\right)^2}, \frac{1}{\left(\mu_k + \epsilon - \mu_i\right)^2} \right)2\log (1/2.4\delta).$$
Summing over all $i$ establishes a lower bound in the \additive case. 
% To form the additional term $H_2$, it is sufficient to note that for $x, y \geq 0$, $\max(x^2, y^2) = \max(x,y)^2 \leq (x+y)^2 \leq (2\max(x,y))^2 = 4\max(x^2, y^2)$. Therefore, $\frac{1}{4}\min\left(\frac{1}{x^2}, \frac{1}{y^2} \right) \leq \frac{1}{(x+y)^2} \leq \min\left(\frac{1}{x^2}, \frac{1}{y^2} \right)$. Next, by definition of $\alpha_\epsilon$, $\mu_k + \epsilon = \mu_1 + \alpha_\epsilon$. Hence, $\frac{1}{(\mu_k + \epsilon  \mu_i)^2} = \frac{1}{( (\mu_1 - \mu_i) + \alpha_\epsilon)^2} = \frac{1}{( \Delta_i + \alpha_\epsilon)^2}$. Applying the above inequalities to $\frac{1}{(\Delta_i + \alpha_\epsilon)^2}$ produces the term $H_2$ and verifies that $\frac{1}{2}H_2 \leq H_1 \leq 2H_1$. 
\end{proof}

% Next, we restate Theorem~\ref{thm:mult_inst_lower} lower-bounding the expected number of samples for any algorithm that returns all \multiplicative $\epsilon$-good arms. 

% \begin{theorem}(\multiplicative lower bound)
% Fix $\delta > 0$ and $\epsilon \in (0, 1/2]$. Fix an instance $\nu$ of $n$ Gaussian distributions $\rho_i$, such that largest mean is positive, $\max \mu_i > 0$. Define
% $$ H_1 = 2\sum_{i=1}^n \max \left(\frac{1}{\left(\epsilon\mu_1 -  \Delta_i \right)^{2}}, \frac{1}{(\frac{\mu_k}{(1-\epsilon} - \mu_i)^2}\right) \log \left(\frac{1}{2.4\delta}\right)$$
% and 
% $$ H_2 = \sum_{i=1}^n \max \left(\frac{1}{\left(\epsilon\mu_1 -  \Delta_i \right)^{2}}, \min \left\{\frac{1}{\Delta_i^2}, \frac{(1-\epsilon)^2}{\tilde{\alpha}_\epsilon^2}\right\} \right) \log \left(\frac{1}{2.4\delta}\right)$$
% Any algorithm that returns $M_\epsilon$ with probability at least $1-\delta$,
% requires at least $H_1$ samples in expectation. Furthermore, $\frac{1}{2}H_2 \leq H_1 \leq 2H_2$.
% \end{theorem}
\begin{proof}[Proof of Theorem~\ref{thm:inst_lower} in the  \multiplicative case]
Recall that $\nu$ denotes the given instance, and without loss of generality we have assumed that $\mu_1 \geq \mu_2 \geq \cdots \geq \mu_n$. Let $M_{\epsilon}(\nu) = \{1, \cdots, k\}$. Consider the event $E$ that an algorithm returns $\{1, \cdots, k\}$. For any $\delta$-PAC algorithm, $E$ occurs with probability at least $1-\delta$. 
For each arm $i \in [n]$ we consider two alternative instances 
$$\nu_i' = \{\mu_1, \cdots, \mu_i', \cdots, \mu_n\}$$
and $$\nu_i'' = \{\mu_1, \cdots, \mu_i'', \cdots, \mu_n\}$$ 
such that only the mean of arm $i$ differs compared to $\nu$ but
$M_{\epsilon}(\nu) \neq M_{\epsilon}(\nu_i')$ and $M_{\epsilon}(\nu) \neq M_{\epsilon}(\nu_i'')$. Therefore, $E$ occurs with probability at most $\delta$ on these alternate instances. 

For $\nu_i'$, if $i \leq k$, let $\mu_i' = (1-\epsilon-\eta)\mu_1$. Then $i \in M_\epsilon(\nu)$ but $i \notin M_\epsilon(\nu_i')$. If $k < n$ and $i \geq k + 1$, let $\mu_i' = (1-\epsilon+\eta)\mu_1$. Then $i \notin M_\epsilon(\nu)$ but $i \in M_\epsilon(\nu_i')$. 

More subtly, for $\nu_i''$, for any $i \in [n] \backslash \{k\}$, let $\mu_i'' = \frac{\mu_k}{1 - \epsilon - \eta}$. In particular, arm $i$ is now the best arm. Under this definition, $\mu_i'' - \epsilon > \mu_k$. Therefore, $k \notin M_\epsilon(\nu_i'')$ but $k \in M_\epsilon(\nu)$. 
% Additionally, for arm $i = k$, let $\mu_k'' = \frac{\mu_{k-1}}{1-\epsilon-\eta}$. Then in this instance, we likewise have that $k \notin M_\epsilon(\nu_i'')$ but $k \in M_\epsilon(\nu)$.

% This implies that $\mu_i'' \in G_{\epsilon}(\nu_i'')$. However, under this definition $\mu_i'' - \epsilon > \mu_k \implies k \notin G_{\epsilon}(\nu_i'')$ so $G_{\epsilon}(\nu) \neq G_{\epsilon}(\nu_i'')$. 

The above holds for all $\eta > 0$. Let $N_i$ denote the random variable of the number of samples of arm $i$ and $\E_\nu$ denote expectation with respect to instance $\nu$. 
Using the fact that we have assumed the distributions are Gaussian, considering $\nu_i'$, by Lemma 1 of \cite{kaufmann2016complexity}, taking $\eta \rightarrow 0$, we have that for any $\delta$-PAC algorithm,
$$\E_\nu [N_i] \geq \frac{2\log (1/2.4\delta)}{(\mu_i -  (1 - \epsilon)\mu_1)^2} = \frac{2\log (1/2.4\delta)}{(\epsilon\mu_1 -  \Delta_i)^2}.$$
Additionally, by the same Lemma, considering $\nu_i''$ and again taking $\eta \rightarrow 0$ we have that for $i \neq k$
$$\E_\nu [N_i] \geq \frac{2\log (1/2.4\delta)}{\left(\mu_i - \frac{\mu_k}{1-\epsilon}\right)^2} = \frac{2\log (1/2.4\delta)}{\left(\mu_1 +  \frac{\tilde{\alpha}_\epsilon}{1-\epsilon} - \mu_i\right)^2}, $$
where the later equality holds since $\frac{\mu_k}{1-\epsilon} = \mu_1 + \frac{\tilde{\alpha}_\epsilon}{1-\epsilon}$ by definition of $\tilde{\alpha}_\epsilon$. 
% For $i = k$, we have that 
% $$\E_\nu [N_k] \geq \frac{2\log (1/2.4\delta)}{\left|\mu_i - \frac{\mu_{k-1}}{1-\epsilon}\right|^2}. $$
Next recall that $\tilde{\alpha}_\epsilon := \min_{i \in M_\epsilon} \mu_i - (1-\epsilon)\mu_1 = \mu_k - (1-\epsilon)\mu_1$, we have that $\mu_k = \tilde{\alpha}_\epsilon + (1-\epsilon)\mu_1$. Hence, $\frac{\mu_k}{1 - \epsilon} = \mu_1 + \frac{\tilde{\alpha}_\epsilon}{1-\epsilon}$.
Then, for $i = k$
\begin{align*}
    & \frac{1}{\left(\frac{\mu_k}{1-\epsilon} - \mu_k\right)^2} \leq  \frac{1}{(\mu_k -  (1 - \epsilon)\mu_1)^2} = \frac{1}{\Tilde{\alpha}_\epsilon^2}\\
    & \iff \Tilde{\alpha}_\epsilon \leq  \frac{\mu_k}{1-\epsilon} - \mu_k =  \frac{\tilde{\alpha}_\epsilon}{1-\epsilon} + \mu_1 - \mu_k =  \frac{\tilde{\alpha}_\epsilon}{1-\epsilon} + \Delta_k \\
    & \stackrel{(\Delta_k \geq 0)}{\Longleftarrow} \tilde{\alpha}_\epsilon \leq \frac{\tilde{\alpha}_\epsilon}{1-\epsilon}
\end{align*}
which is always true since $\epsilon > 0$. Therefore, $$\frac{1}{(\mu_k -  (1 - \epsilon)\mu_1)^2} = \max \left(\frac{1}{(\mu_k -  (1 - \epsilon)\mu_1)^2}, \frac{1}{\left(\frac{\mu_k}{1-\epsilon} - \mu_k\right)^2} \right).$$
Hence, for all arms $i$, 
$$\E_\nu[N_i] \geq 2\max \left(\frac{1}{(\mu_i -  (1 - \epsilon)\mu_1)^2}, \frac{1}{\left(\mu_1 +  \frac{\tilde{\alpha}_\epsilon}{1-\epsilon} - \mu_i\right)^2} \right) \log (1/2.4\delta).$$
Summing over all $i$ gives a lower bound for this problem in the \multiplicative case. 
% To form the additional term $H_2$, it is sufficient to note that for $x, y \geq 0$, $\max(x^2, y^2) = \max(x,y)^2 \leq (x+y)^2 \leq (2\max(x,y))^2 = 4\max(x^2, y^2)$. Therefore, $\frac{1}{4}\min\left(\frac{1}{x^2}, \frac{1}{y^2} \right) \leq \frac{1}{(x+y)^2} \leq \min\left(\frac{1}{x^2}, \frac{1}{y^2} \right)$. Then by definition of $\tilde{\alpha}_\epsilon$, we see that $\frac{\mu_k}{1-\epsilon} = \frac{\tilde{\alpha}_\epsilon}{1-\epsilon} + \Delta_i$. Hence, $\frac{1}{\left(\frac{\mu_k}{1-\epsilon} - \mu_k\right)^2} = \frac{1}{\left(\frac{\tilde{\alpha}_\epsilon}{1-\epsilon} + \Delta_i\right)^2}$. Applying the above inequalities to $\frac{1}{\left(\frac{\tilde{\alpha}_\epsilon}{1-\epsilon} + \Delta_i\right)^2}$ produces the term $H_2$ and verifies that $\frac{1}{2}H_2 \leq H_1 \leq 2H_1$. 
\end{proof}

\section{Theorem~\ref{thm:lower_mod_conf}: Lower bounds in the moderate confidence regime}\label{sec:mod_conf_lb}
In this section, we prove a tighter lower bound that includes \emph{moderate confidence} terms independent of the value of $\delta$ similar to those that appear in the upper bound on the sample complexity of \texttt{FAREAST}, Theorem~\ref{thm:fareast_complex}. 

\textbf{Outline.} 
To give a tight lower bound in the isolated setting, we break our argument into pieces performing a series of reductions that link the all-$\epsilon$ problem to a hypothesis test, and then the hypothesis test to the problem of identifying the best-arm.

\textbf{Step 1. Finding an isolated arm.} We first consider the following problem. Imagine that you are given an \textit{isolated} instance, depicted in Figure~\ref{fig:example_isolated_subfig} where there are $n$ distributions, with one of them at mean $\beta$ and the rest with mean $-\beta$. Theorem~\ref{thm:1spike_complexity}, captures the sample complexity of any algorithm that can return $i^{\ast}$ with probability greater than $1-\delta$.

\textbf{Step 2. Deciding if an instance is isolated.} We then consider a composite hypothesis test on $n$ distributions where the null hypothesis, $H_0$, is that the mean of each distribution is less that $-\beta$ and the alternate hypothesis, $H_1$, is that there exists \emph{single} distribution $i^{\ast}$ with mean $\beta$ and the remainder have mean less than $-\beta$ (i.e. the instance is isolated). In Figure~\ref{fig:example_isolated}, we show a picture of an instance where the null is true and where the alternate is true. In Theorem~\ref{thm:beta_hypo_test} we lower bound the complexity of performing this test. To link this to Step 1, we show that if you can solve this composite hypothesis test then you can find $i^{\ast}$, hence the lower bound of step 1 is a lower bound for the hypothesis test.

\textbf{Step 3: Reducing \alleps to Step 2} Finally in step 3 we link this to the all-$\epsilon$ problem. Using the above, we lower bound the complexity of \alleps in Theorem~\ref{thm:lower_mod_conf} when $|G_{2\beta_{\epsilon}}| = 1$. The key insight of our proof is that any algorithm that can solve the \alleps problem can be used to solve the hypothesis test in Step 2. 

\begin{figure}
\centering
\begin{subfigure}{0.45\textwidth}
  \includegraphics[width=\linewidth]{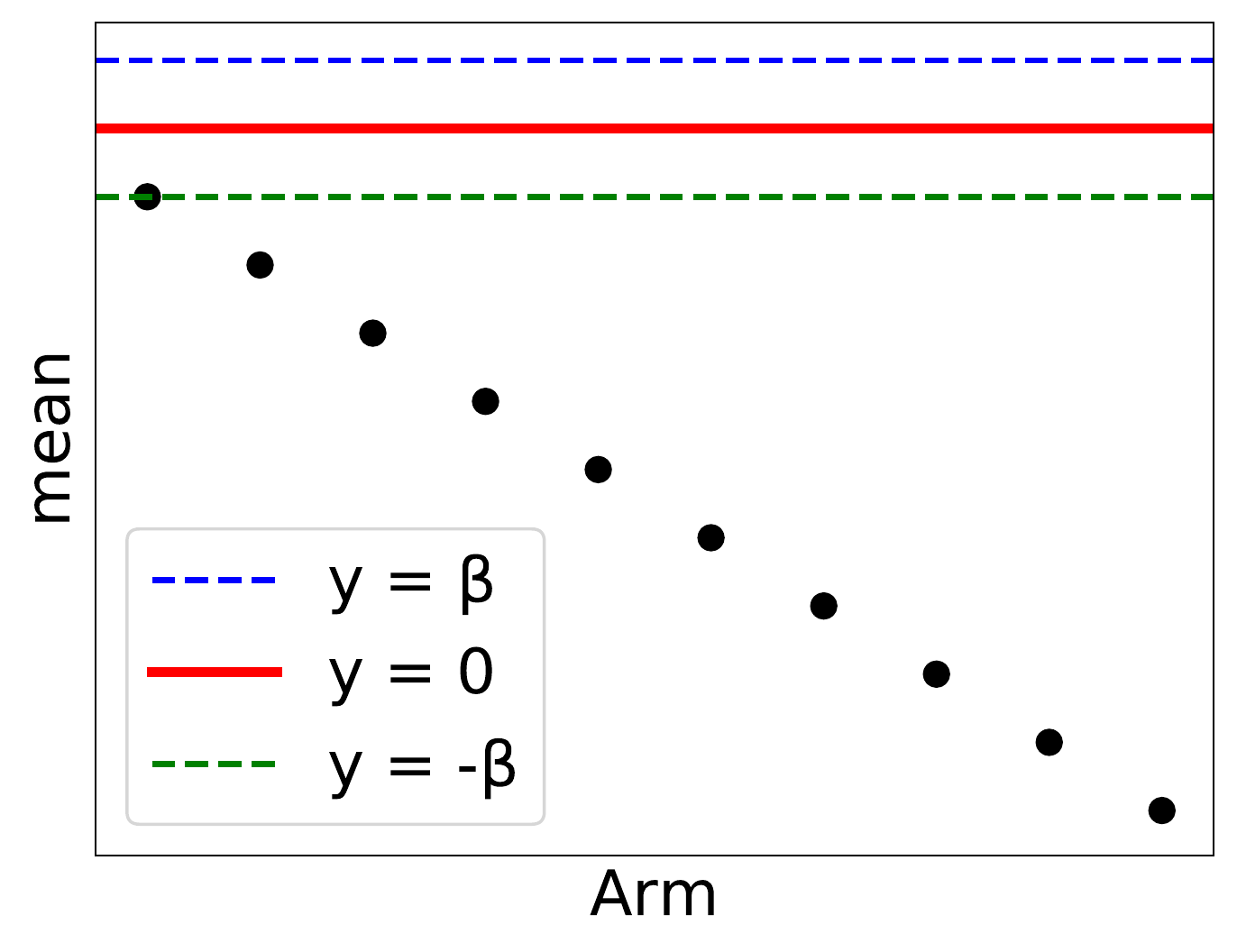}
\caption{A non-isolated instance ($H_0$ is true)}
\end{subfigure}
\begin{subfigure}{0.45\textwidth}
  \includegraphics[width=\linewidth]{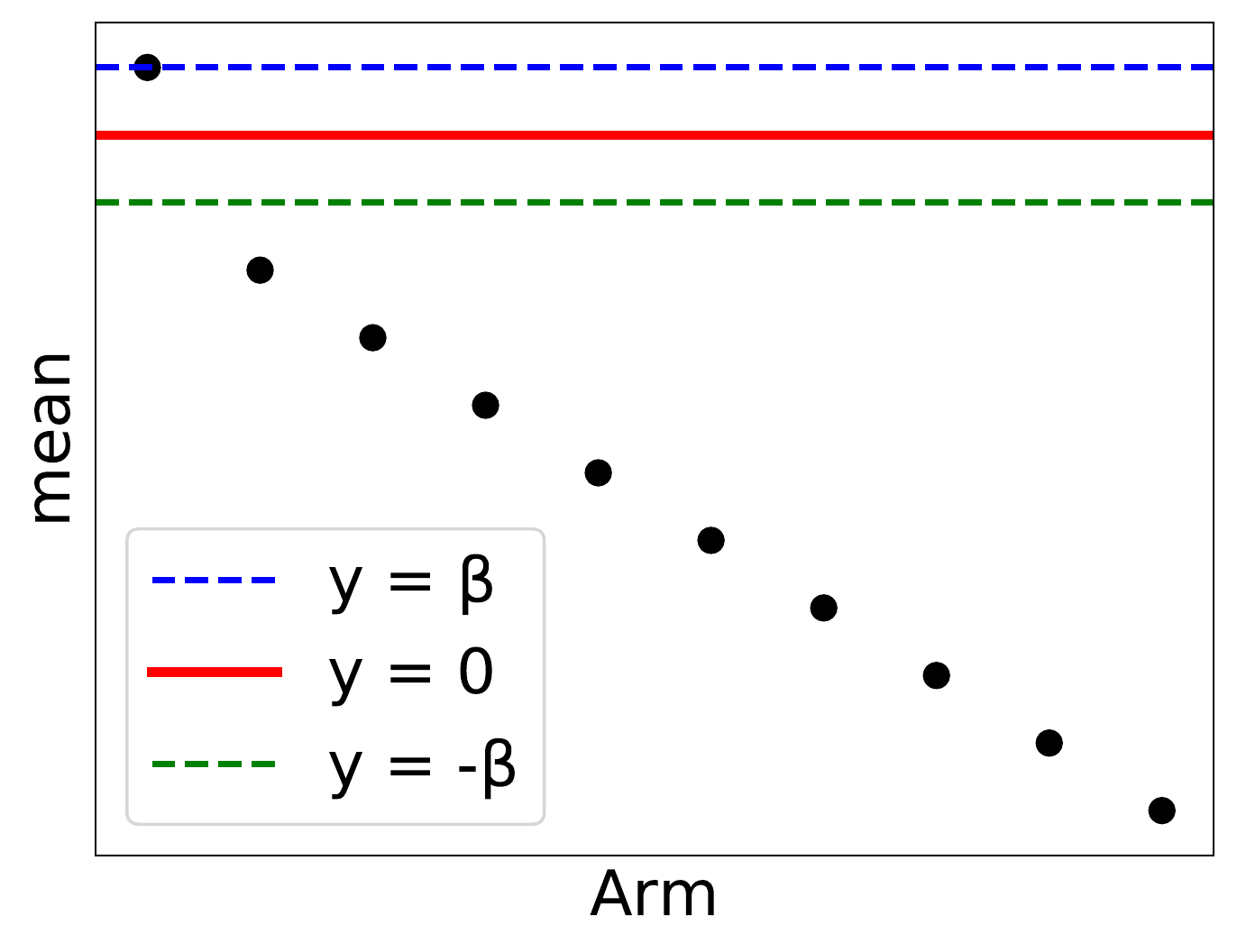}
\caption{An isolated instance ($H_0$ is true)}
\label{fig:example_isolated_subfig}
\end{subfigure}
\caption{Example of an isolated and non-isolated instance}
\label{fig:example_isolated}
\end{figure}

% To do so, we begin be proving a lower bound on the following problem, referred to as \emph{positive-mean-finding}. 
\subsection{Step 1: Finding an Isolated Arm}\label{sec:1spike_detect}
Fix $n \in \mathbb{N}$,  $0 < \beta$,  and $\delta > 0$.
We refer to a $\beta$-\textit{isolated instance} $\nu = \{\rho_{1}, \cdots, \rho_n\}$, as a collection of $n$, Gaussian distributions with variance one satisfying two properties. Firstly, there exists a single arm $i^\ast \in [n]$ with $\rho_{i^\ast} = {\cal N}(\beta,1)$. We refer to this as the \textit{isolated arm}. Secondly,  for $i\neq i^{\ast}$, $\rho_i = {\cal N}(\mu_i,1) \ \forall \ i \in [n]\backslash\{i^\ast\}$ have means $\mu_i \leq -\beta $. We introduce the additional notation $\Delta_{i,j} = \mu_i - \mu_j$.  
% . The index $i^\ast$ is assumed to be unknown and chosen uniformly at random. 
%\marginpar{I don't know what this means} 

%The remaining $\rho_i = {\cal N}(\mu_i,1) \ \forall \ i \in [n]\backslash\{i^\ast\}$ have means $\mu_i \in (2\beta - \epsilon, -\beta) $. Refer to this set of $n$ distributions collectively as an instance $\nu$.

%\textbf{The positive mean problem:} Assume access to an instance $\nu$ as described above where $i^\ast$ is sampled uniformly from $[n]$ and unknown.
%By sampling individual distributions, a player wishes to identify $i^\ast$ probability greater than $1-\delta$. \chck{Maybe $\widehat{I}$?}

\begin{lemma}\label{lem:1spike_kauf}
Fix $n$, $0 < \beta$ and consider a set $\nu$ of $n$ Gaussian random variables 
such that for a uniformly random chosen $i^\ast \in [n]$, $\rho_{i^\ast} = {\cal N}(\beta, 1)$ and $\rho_{i} = {\cal N}(\mu_i, 1)$ for $\mu_i \leq -\beta$ for all $i \neq i^\ast$. 
Any algorithm that correctly returns $i^\ast$ with probability at least $1-\delta$, pulls arm $i^\ast$ at least
$$ \frac{1}{2\beta^2}\log(1/2.4\delta)$$
times in expectation.
\end{lemma}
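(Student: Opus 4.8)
The plan is to reduce the problem to a two-point hypothesis test on the isolated arm and invoke the standard change-of-measure (transportation) lemma, exactly Lemma 1 of \cite{kaufmann2016complexity}, which is the same tool used in the proof of Theorem~\ref{thm:inst_lower}. Fix the instance $\nu$ with isolated arm $i^\ast$ (so $\rho_{i^\ast}={\cal N}(\beta,1)$ and $\mu_i\le-\beta$ for $i\neq i^\ast$). Consider the alternative instance $\nu'$ in which only the mean of arm $i^\ast$ is changed, from $\beta$ to $-\beta-\eta$ for a small $\eta>0$, leaving all other arms untouched. Under $\nu'$, arm $i^\ast$ is no longer the unique arm with mean $\beta$ (indeed it now has the smallest mean), so in fact $\nu'$ has no isolated arm at index $i^\ast$; any algorithm that returns $i^\ast$ with probability $\ge 1-\delta$ under $\nu$ must therefore return $i^\ast$ with probability $\le\delta$ under $\nu'$ — here I would note that correctness on $\nu'$ is guaranteed because $\nu'$ either is a valid isolated instance with a different isolated arm, or at minimum the algorithm is only required to be correct on isolated instances and we can instead pick $\nu'$ to be a legitimate isolated instance with isolated arm $j\neq i^\ast$ by simultaneously lowering $\mu_{i^\ast}$; the cleanest route is to move $\mu_{i^\ast}$ below $-\beta$ while keeping some other arm $j$ at mean $\beta$, but since in the stated lemma only $i^\ast$ is at $\beta$, the simplest is to take $\nu'$ with $\mu_{i^\ast}'=-\beta-\eta$ and note the returned index must change.

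Next I would apply the transportation inequality: for any $\delta$-PAC-type guarantee distinguishing $\nu$ from $\nu'$, writing $N_{i^\ast}$ for the number of pulls of arm $i^\ast$,
\[
\E_\nu[N_{i^\ast}]\,\mathrm{KL}\big({\cal N}(\beta,1)\,\|\,{\cal N}(-\beta-\eta,1)\big)\;\ge\;\mathrm{kl}(1-\delta,\delta),
\]
where $\mathrm{kl}(x,y)$ is the binary KL divergence; this is precisely the content of Lemma 1 of \cite{kaufmann2016complexity}. Since only arm $i^\ast$ changes between $\nu$ and $\nu'$, only $N_{i^\ast}$ enters the left-hand side. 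Using $\mathrm{KL}({\cal N}(\beta,1)\|{\cal N}(-\beta-\eta,1))=\tfrac12(2\beta+\eta)^2$ and the standard bound $\mathrm{kl}(1-\delta,\delta)\ge\log(1/(2.4\delta))$, I get
\[
\E_\nu[N_{i^\ast}]\;\ge\;\frac{2\log(1/(2.4\delta))}{(2\beta+\eta)^2}.
\]
Letting $\eta\to 0$ yields $\E_\nu[N_{i^\ast}]\ge \frac{2\log(1/(2.4\delta))}{(2\beta)^2}=\frac{1}{2\beta^2}\log(1/(2.4\delta))$, which is exactly the claimed bound.

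Finally, since the statement quantifies over a \emph{uniformly random} $i^\ast$, I would observe that the bound above holds for every fixed choice of the isolated arm and every fixed configuration of the other means $\mu_i\le-\beta$, hence it holds in expectation over the uniform draw of $i^\ast$ as well (the bound does not depend on which index is isolated). This completes the argument.

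The only real subtlety — and the step I would be most careful about — is the legitimacy of the alternative instance $\nu'$: I must ensure that $\nu'$ is an instance on which the algorithm's correctness guarantee still applies (either a valid $\beta$-isolated instance with a \emph{different} isolated arm, or an instance where ``return $i^\ast$'' is simply wrong with probability $\ge 1-\delta$), so that the probabilities of the event $\{\text{output}=i^\ast\}$ under $\nu$ and $\nu'$ are separated by $1-\delta$ versus $\delta$. Everything else is a direct instantiation of the Gaussian change-of-measure computation already used for Theorem~\ref{thm:inst_lower}.
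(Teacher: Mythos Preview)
Your proposal is essentially the same approach as the paper's: both apply Lemma~1 of \cite{kaufmann2016complexity} to an alternative in which $\mu_{i^\ast}$ is moved to (just below) $-\beta$, compute the Gaussian KL $(2\beta)^2/2=2\beta^2$, and let $\eta\to 0$. The paper's two-sentence proof phrases this as an ``oracle setting'' argument (the index $i^\ast$ is revealed and the task reduces to confirming $\mu_{i^\ast}>-\beta$), which is exactly the same change-of-measure you carry out and is one way to informally finesse the alternative-instance subtlety you correctly flag.
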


\begin{proof}
Consider the oracle setting where the value of $i^\ast$ is known and the algorithm only seeks to confirm that $\mu_{i^\ast} > -\beta$. Lemma $1$ of \cite{kaufmann2016complexity} implies that any $\delta$-PAC algorithm requires at least $ \frac{1}{2\beta^2}\log(1/2.4\delta)$
samples in expectation. 
\end{proof}

The above bound controls the number of samples that any algorithm must gather from $i^\ast$, and is independent of $n$. The proof considered an oracle setting where the value of $i^\ast$ is known, and one only wishes to confirm that $\mu_{i^\ast} > -\beta$ with probability at least $1-\delta$. To lower bound the number of samples drawn from $[n]\backslash \{i^\ast\}$, we need significantly more powerful tools. In particular, to rule out trivial algorithms that always output a fixed index, we consider a permutation model, as in \cite{mannor2004sample, simchowitz2017simulator, katz2019true, chen2017nearly}. Informally, we consider an additional expectation in the lower bound over a random permutation $\pi$ of the arms where $\pi$ is sampled uniformly from the set of all permutations. In particular, we with use a \emph{Simulator} argument, as in \cite{simchowitz2017simulator, katz2019true}. In what follows, we will let $\pi: [n] \rightarrow [n]$ denote a permutation selected uniformly at random from the set of $n!$ permutations. For instance $\nu$, let $\pi(\nu)$ denote the permuted instance such that the $i^\text{th}$ distribution is mapped to $\pi(i)$, by a slight overloading of the definition of $\pi(\cdot)$. In what follows, we proceed similarly to the proof of Theorem 1 in \cite{katz2019true}. 

\begin{theorem}\label{thm:simulator}
Fix $n$, $0 < \beta$, and $\delta < 1/16$ and consider a set $\nu$ of $n$ Gaussian random variables with variance $1$ such that for $i^\ast \in [n]$, $\rho_{i^\ast} = {\cal N}(\beta, 1)$ and $\rho_{i} = {\cal N}(\mu_i, 1)$ for $\mu_i \leq -\beta$ for all $i \neq i^\ast$. Let $\pi$ be a uniformly chosen permutation of $[n]$ and $\pi(\nu)$ be the permutation applied to instance $\nu$. Let $T$ be the random variable denoting the total number of samples at termination by an algorithm. Any $\delta$-PAC algorithm to detect $\pi(i^\ast)$ on $\pi(\nu)$ requires
$$\E_{\pi}\E_{\pi(\nu)}\left[T\right] \geq \frac{1}{16} \sum_{k \neq i^\ast}\frac{1}{\Delta_{i^\ast, k}^2}$$
samples in expectation from arms in $[n]\backslash \{i^\ast\}$.
\end{theorem}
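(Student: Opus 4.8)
The plan is to run the Simulator argument of \cite{simchowitz2017simulator}, following the adaptation used in the proof of Theorem~1 of \cite{katz2019true}; Lemma~\ref{lem:1spike_kauf} already controls the pulls of the isolated arm itself, so here we only need a lower bound on the pulls of the arms in $[n]\backslash\{i^\ast\}$. Write $\Delta_{i^\ast,k} = \mu_{i^\ast}-\mu_k = \beta - \mu_k \geq 2\beta$ for $k\neq i^\ast$. By the exchangeability induced by the uniformly random permutation $\pi$, it suffices to fix a non-isolated arm $k$ (in the unpermuted labelling) and show that any $\delta$-PAC algorithm pulls $k$ at least $\tfrac{1}{16}\Delta_{i^\ast,k}^{-2}$ times in expectation on $\pi(\nu)$; summing over $k\neq i^\ast$ then yields the claim.

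For the fixed arm $k$, introduce the alternative instance $\nu^{(k)}$ obtained from $\nu$ by swapping the means of arms $i^\ast$ and $k$: in $\nu^{(k)}$ arm $k$ is the unique isolated arm (mean $\beta$) and arm $i^\ast$ has mean $\mu_k\le-\beta$, so $\nu^{(k)}$ is again $\beta$-isolated and the two instances differ only in the two coordinates corresponding to the images of $i^\ast$ and $k$, each change having magnitude $\Delta_{i^\ast,k}$. A $\delta$-PAC algorithm must output $\pi(i^\ast)$ with probability $\ge 1-\delta$ on $\pi(\nu)$ and $\pi(k)$ with probability $\ge1-\delta$ on $\pi(\nu^{(k)})$. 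I would then set a threshold $\tau_k := \lfloor c\,\Delta_{i^\ast,k}^{-2}\rfloor$ for a small absolute constant $c$ and construct a Simulator $\mathcal S$ that couples the executions of the algorithm on $\pi(\nu)$ and on $\pi(\nu^{(k)})$: for arm $k$ and for arm $i^\ast$, the first $\tau_k$ responses are drawn from a common reference law chosen so that the likelihood ratio between the ``spike at $i^\ast$'' and ``spike at $k$'' hypotheses stays bounded, and $\mathcal S$ only ``commits'' to one hypothesis once arm $k$ has been pulled $\tau_k$ times. Since the first $\tau_k$ pulls of $k$ (and the concomitant pulls of $i^\ast$) carry a bounded amount of information, on the event $\{N_k<\tau_k\}$ at termination the algorithm's entire transcript, and in particular its output, has the same law under $\pi(\nu)$ and under $\pi(\nu^{(k)})$.

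Combining the coupling with the random permutation, the events ``$i^\ast$ is the spike'' and ``$k$ is the spike'' are a priori equally likely, so on $\{N_k<\tau_k\}$ the algorithm is correct on at most one of the two worlds; the $\delta$-PAC guarantee on both then forces $\P(N_k\ge\tau_k)\ge \tfrac12-\delta$, whence $\E[N_k]\ge\tau_k\,(\tfrac12-\delta)$. Choosing $c$ (hence $\tau_k$) as large as the likelihood-ratio truncation permits while using $\delta<\tfrac1{16}$, this is at least $\tfrac1{16}\Delta_{i^\ast,k}^{-2}$; summing over the $n-1$ non-isolated arms gives $\E_\pi\E_{\pi(\nu)}[T]\ge\tfrac1{16}\sum_{k\neq i^\ast}\Delta_{i^\ast,k}^{-2}$.

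The delicate part — and the reason the plain change-of-measure bound of \cite{kaufmann2016complexity} does not suffice here — is the Simulator construction: showing that the $N_{i^\ast}$ pulls of the true spike cannot ``pay for'' the distinguishing work on behalf of arm $k$, so that arm $k$ must individually absorb $\Omega(\Delta_{i^\ast,k}^{-2})$ pulls, simultaneously for all $k$. This is exactly what the permutation symmetry together with the per-arm truncated-likelihood construction buys, and the bookkeeping needed to push the constant down to $\tfrac1{16}$ under the hypothesis $\delta<\tfrac1{16}$ is where most of the care goes.
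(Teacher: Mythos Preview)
Your overall approach matches the paper's: swap arms $i^\ast$ and $k$ to form $\nu^{(k)}$, build a Simulator truncated at a threshold $\tau$, reduce to symmetric algorithms via Lemma~1 of \cite{simchowitz2017simulator}, and finish with the Simulator inequality (Lemma~2 of \cite{simchowitz2017simulator}) plus Markov. However, one step as you have written it does not go through, and the fix is precisely where the argument earns its keep.

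The claim that ``on $\{N_k<\tau_k\}$ the transcript has the same law under $\pi(\nu)$ and $\pi(\nu^{(k)})$'' is false: arm $i^\ast$ also differs between the two instances, and nothing in $\{N_k<\tau_k\}$ constrains $N_{i^\ast}$. An algorithm that pulls the spike heavily can distinguish the two worlds without touching arm $k$ at all. The simulator must therefore truncate \emph{both} arms --- the truthful event is $\{N_{i^\ast}\le\tau\}\cap\{N_k\le\tau\}$ --- and even on that event the two simulator laws are not identical but only close, with KL bounded by $\tau\Delta_{i^\ast,k}^2$ (each of the two modified arms contributes $\tau\Delta_{i^\ast,k}^2/2$). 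Lemma~2 of \cite{simchowitz2017simulator} then yields
\[
\P_{\nu}(N_k>\tau)\;+\;\P_{\nu^{(k)}}(N_{i^\ast}>\tau)\;\ge\;(1-2\delta)\;-\;\sqrt{\tau\Delta_{i^\ast,k}^2/2}\,.
\]
This is where symmetry does real work (not merely ``the two spikes are a priori equally likely''): since $\nu^{(k)}$ is itself a permutation of $\nu$, a symmetric algorithm satisfies $\P_{\nu^{(k)}}(N_{i^\ast}>\tau)=\P_{\nu}(N_k>\tau)$, so the left side equals $2\,\P_{\nu}(N_k>\tau)$. Taking $\tau=1/(2\Delta_{i^\ast,k}^2)$ gives $\P_{\nu}(N_k>\tau)\ge\tfrac12(\tfrac12-2\delta)$ and hence $\E[N_k]\ge\tfrac{1}{4\Delta_{i^\ast,k}^2}(\tfrac12-2\delta)>\tfrac{1}{16\Delta_{i^\ast,k}^2}$ for $\delta<1/16$. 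Your asserted bound $\P(N_k\ge\tau_k)\ge\tfrac12-\delta$ is too optimistic: it drops both the $Q(\mathrm{KL})$ leakage term and the factor of $2$ coming from the joint truncation, and the constant $\tfrac1{16}$ only emerges after this reduction.
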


%\begin{remark}
%Note that this serves as a lower bound on the complexity of positive mean finding. 
%\end{remark}

\begin{proof}
Fix a permutation $\pi$. Let $\pi(\nu)$ be the permutation applied to $\nu$ and $\pi(i)$ be the index of $i$ under the permuted instance, $\pi(\nu)$. Let $\A$ be any algorithm that detects and returns $\pi(i^\ast)$ on $\pi(\nu)$ with probability at least $1-\delta$. We will take $\P_\A$ and $\E_\A$ to denote probability and expectation with respect any internal randomness in $\A$. 
Throughout, we will take $\rho_i = {\cal N}(\mu_i, 1)$ to denote the $i^\text{th}$ distribution of $\nu$. $\mu_{i^\ast} > 0$ and $\mu_{i} < 0$ for all $i \neq i^\ast$. Additionally, let $\Delta_{ij} = \mu_i - \mu_j$
% and note that $\mu_i$ is either $0$ if $i \neq i^\ast$ or $\beta$ if $i = i^\ast$.

% Chen argument: not correct beacuse need to average over permutations
% Next, for $i \neq i^\ast$, again consider the event $E$. By assumption, this happens with probability at least $1-\delta$. Construct the alternate instance $\nu'$ where $\rho_i = \N(\beta, 1)$. On $\nu'$, $i$ and $i^\ast$ share the same distribution. Therefore, due to the randomness in the choice of $i^\ast$, the probability of $E$ is $1/2$ since $i$ and $i^\ast$ are indistinguishable. Again by applying the same Lemma, we have that
% $$2\E_\nu[N_i]\beta^2 \geq d(1-\delta, 1/2) \geq (1 -\delta) \log(2) \geq \log(2)/2$$
% where $d(\cdot, \cdot)$ is the binary entropy function and the final inequality follows from $\delta \leq 1/2$. Repeating this argument for each $i \neq i^\ast$ and combining with the previous completes the proof. 
% \end{proof}

Fix $k \neq i^\ast$. To bound the necessary number of samples for arm $k$, we turn to the Simulator \cite{simchowitz2017simulator}. We begin by defining an alternate instance $\nu_k' = \{\rho'_1, \cdots, \rho'_n\}$ as 
$$
\rho'_{j}=
\begin{cases}
\rho_j, & j \neq i^\ast\\
\rho_k, & j = i^\ast\\
\rho_{i^\ast}, & j = k\\
\end{cases}
$$
\noindent Note that $\nu_k'$ is identical to $\nu$ except that the distributions of $i^\ast$ and $k$ are swapped. 

Let $E$ be the event that $\A$ returns $\pi(i^\ast)$. 
We may bound the total variation distance between the joint distribution on $\A \times \pi(\nu)$ and $\A \times \pi(\nu_k')$ as 

\begin{eqnarray*}
TV(\P_{\A \times \pi(\nu)}, \P_{\A \times \pi(\nu_k')}) & = & \sup_{A} \left| \P_{\A \times \pi(\nu)}(A) - \P_{\A \times \pi(\nu_k')}(A)\right| \\
&\geq & \left| \P_{\A \times \pi(\nu)}(E) - \P_{\A \times \pi(\nu_k')}(E)\right| \\
&\geq & 1 - 2\delta.
\end{eqnarray*}
%and the supremum is taken over events $A$ in the filtration generated by $\A$ playing on the instance $\pi(\nu)$.

Let $\Omega_t$ denote the multiset of the transcript of samples up to time $t$.
$$\Omega_t = \{i_s \in [n] \text{ for } 1 \leq s \leq t \} $$
and define the events 
$$W_j(\Omega_t) := \left\{\sum_{i_t \in \Omega_t}\1(i_t=j) \leq \tau \right\}$$
for a $\tau$ to be defined later. With the definitions of $W_j(\Omega_t)$, we define a simulator $\text{Sim}(\nu, \Omega_t)$ with respect to $\nu$. Let $\text{Sim}(\nu, \Omega_t)_i$ denote the distribution of arm $i$ on $\text{Sim}(\nu, \Omega_t)$. 

$$
\text{Sim}(\nu, \Omega_t)_{j} = 
\begin{cases}
\rho_j, & \text{ if } j \notin \{i^\ast, k\} \\
\rho_j, & \text{ if } j \in \{i^\ast, k\} \text{ and } W_{i^\ast}(\Omega_t) \cap W_{k}(\Omega_t)\\
\rho_{i^\ast}, & \text{ if } j \in \{i^\ast, k\} \text{ and } (W_{i^\ast}(\Omega_t) \cap W_{k}(\Omega_t))^c\\
\end{cases}
$$ 
% \chck{Why $i^{\ast}$ and not $\pi(i^{\ast})$}
Furthermore, we define $\text{Sim}(\nu_k', \Omega_t)$ with respect to $\nu_k'$ as 

$$
\text{Sim}(\nu_k', \Omega_t)_{j} = 
\begin{cases}
\rho_j', & \text{ if } j \notin \{i^\ast, k\} \\
\rho_j', & \text{ if } j \in \{i^\ast, k\} \text{ and } W_{i^\ast}(\Omega_t) \cap W_{k}(\Omega_t)\\
\rho_{i^\ast}, & \text{ if } j \in \{i^\ast, k\} \text{ and } (W_{i^\ast}(\Omega_t) \cap W_{k}(\Omega_t))^c\\
\end{cases}
$$
For ease of notation, let $\text{Sim}(\pi(\nu), \Omega_t)$ be the same simulator defined on $\pi(\nu)$ and with respect to events $W_{\pi(i^\ast)}(\Omega_t)$ and $W_{\pi(k)}(\Omega_t)$. Note that in the simulator of $\nu_k'$, if $(W_{i^\ast}(\Omega_t) \cap W_{k}(\Omega_t))^c$ is true, then $i^\ast$ and $k$ draw samples according to instance $\nu$ not $\nu_k'$. 

\begin{definition}(Truthfulness of an event $W$, \cite{katz2019true})
For an algorithm $\A$, we say that an event $W$ is truthful on a simulator $\text{Sim}(\eta)$ with respect to an instance $\eta$ if for all events $E$ in the filtration ${\cal F}_T$ generated by playing algorithm $\A$ on instance $\eta$
$$\P_\eta(E\cap W) = \P_{\text{Sim}(\eta)}(E \cap W)$$
\end{definition}

By our definition of both simulators, if $(W_{i^\ast}(\Omega_t) \cap W_{k}(\Omega_t))^c$ is true, then $\text{Sim}(\nu, \Omega_t)_{i} = \text{Sim}(\nu_k', \Omega_t)_{i} \ \forall i \in [n]$. Contrarily, if $W_{i^\ast}(\Omega_t) \cap W_{k}(\Omega_t)$ is true, then $\text{Sim}(\nu, \Omega_t) = \nu$ and $\text{Sim}(\nu_k', \Omega_t) = \nu_k'$. Similarly, on $W_{\pi(i^\ast)}(\Omega_t) \cap W_{\pi(k)}(\Omega_t)$, $\text{Sim}(\pi(\nu), \Omega_t) = \pi(\nu)$ and $\text{Sim}(\pi(\nu_k'), \Omega_t) = \pi(\nu_k')$. Therefore, by the proof of Theorem~1 of \cite{katz2019true}, $W_{\pi(k)}(\Omega_t)$ is truthful on $\text{Sim}(\pi(\nu), \Omega_t)$ and $ W_{\pi(i^\ast)}(\Omega_t)$ is truthful on $\text{Sim}(\pi(\nu_k'), \Omega_t)$. 
% \marginpar{make sure we understand this} 

Let $i_t$ be the arm queried at time $t \in \N$ by $\A$. Following the proof of Theorem 1 of \cite{katz2019true}, we may bound the KL-Divergence between $\text{Sim}(\pi(\nu), \Omega_t)$ and $\text{Sim}(\pi(\nu_k'), \Omega_t)$ as 
% \chck{This is a tricky computation in 12, please reference somehow.}
\begin{eqnarray*}
\max_{i_1, \cdots, i_T} \sum_{t=1}^T & & KL\left(\text{Sim}(\pi(\nu), \{i_s\}_{s=1}^t), \text{Sim}(\pi(\nu_k'), \{i_s\}_{s=1}^t\right) \\
& \leq & \tau KL(\pi(\nu)_{\pi(i^\ast)}, \pi(\nu_k')_{\pi(i^\ast)}) + \tau KL(\pi(\nu)_{\pi(k)}, \pi(\nu_k')_{\pi(k)}) \\
& = & \tau \frac{\Delta_{i^\ast, k}^2}{2} + \tau \frac{\Delta_{i^\ast, k}^2}{2}\\
& = & \tau \Delta_{i^\ast, k}^2.
\end{eqnarray*}

For any instance $\eta$, an algorithm $\A$ is defined to be symmetric if
$$\P_{\A, \eta}((i_1, \cdots, i_T) = (I_1, \cdots, I_T)) = \P_{\A, \pi(\eta)}((\pi(i_1), \cdots, \pi(i_T)) = (\pi(I_1), \cdots, \pi(I_T))).$$
Semantically, this implies that the proportion of times $\A$ pulls any arm $i$ on the non-permuted instance $\eta$ is the same as the proportion of times it pulls $\pi(i)$ on the permuted instance, $\pi(\eta)$. 

In particular, the expected complexity of a symmetric algorithm is independent of the permutation $\pi$. By Lemma 1 of \cite{simchowitz2017simulator}, if any algorithm $\B$ (not necessarily symmetric) achieves an expected stopping time $\tau$ where the expectation is taken over all the randomness in the permutation and in the instance, then there is a symmetric algorithm that achieves the same expected stopping time. Hence, we may assume that $\A$ is symmetric and capture the same set of possible stopping times. If $\A$ is not symmetric, we may form an algorithm $\A'$ by permuting the input, passing it to $\A$, getting the output of $\A$ on the permuted input, and then undoing the permutation before return an answer. 

% \chck{More precision maybe?} By Lemma 1 of \cite{simchowitz2017simulator}, averaging over all permutations is equivalent to an algorithm first uniformly permuting the given instance and then undoing this before returning its answer. 

% Namely, the probability that a given sequence of arms is sampled on $\eta$ is the same as the permutation applied to that sequence on $\pi(\eta)$. We assume that $\A$ is symmetric \chck{not defined!}. Otherwise, we may use $\A$ to construct a symmetric algorithm by first permuting an instance before passing it to $\A$ and then permuting the output back. 

Since $W_{\pi(k)}(\Omega_t)$ and $W_{\pi(i^\ast)}(\Omega_t)$ are truthful on $\text{Sim}(\pi(\nu), \Omega_t)$ and $\text{Sim}(\pi(\nu_k'), \Omega_t)$ respectively, by Lemma~2 of \cite{simchowitz2017simulator}, we have that 
\begin{align*}
    \P_{\A, \pi(\nu)} &(W_{\pi(k)}(\Omega_t)) + \P_{\A, \pi(\nu_k')}(W_{\pi(i^\ast)}(\Omega_t)) \\ & \geq TV(\P_{\A, \pi(\nu)}, \P_{\A , \pi(\nu_k')}) - Q\left(KL(\text{Sim}(\pi(\nu), \Omega_t), \text{Sim}(\pi(\nu_k'), \Omega_t))\right)
\end{align*}
for $Q(x) = \min\{1 - 1/2e^{-x}, \sqrt{x/2}\}$. Since $\A$ is symmetric, for any permutation $\pi$, we have that 
% \chck{Explain how symmetry is used a bit more.}
$$\P_{\A, \pi(\nu)}(W_{\pi(k)}(\Omega_t)) + \P_{\A, \pi(\nu_k')}(W_{\pi(i^\ast)}(\Omega_t)) = \P_{\A, \nu}(W_{k}(\Omega_t)) + \P_{\A, \nu_k'}(W_{i^\ast}(\Omega_t)) = 2\P_{\A, \nu}(W_{k}(\Omega_t)).$$
The first equality holds since event $W_i$ depend only on the number of times that arm $i$ is pulled. Since $\A$ is symmetric, the probability that $\A$ pulls arm $i$ at most $\tau$ times on instance $\nu$ is equal to the probability that $\A$ pulls $\pi(i)$ at most $\tau$ times on instance $\pi(\nu)$. The second equality is true using symmetry as well since instances $\nu$ and $\nu_k'$ are themselves equal up to a permutation.

Combining the above with the previous bounds on the total variation and KL divergence, we have that 

\begin{eqnarray*}
\P_{\A \times\nu}(N_k > \tau) = \P_{\A \times\nu}(W_k(\Omega_t)) \geq
\frac{1}{2}\left(1 - 2\delta  - \sqrt{\frac{\tau \Delta_{i^\ast, k}^2}{2}}\right)
\end{eqnarray*}
Plugging in $\tau  = 1/(2 \Delta_{i^\ast, k}^2)$, we see that $\P_{\A \times\nu}(N_k > 1/(2 \Delta_{i^\ast, k}^2)) \geq 1/2(1/2 - 2\delta)$. Since $k$ was arbitrary, we may repeat this argument for each $k$ in $[n] \backslash \{i^\ast\}$. Combining this with Markov's inequality, we see that 

\begin{eqnarray*}
\E_{\A \times \nu} \left[\sum_{k \neq i^\ast}N_k \right] & \geq & \frac{1}{4}(1/2 - 2\delta) \sum_{k \neq i^\ast}\frac{1}{\Delta_{i^\ast, k}^2} \\
& > & \frac{1}{16} \sum_{k \neq i^\ast}\frac{1}{\Delta_{i^\ast, k}^2}
\end{eqnarray*}
where the final inequality follows from $\delta < 1/16$. The above holds for any $\delta$-PAC algorithm $\A$. 
\end{proof}

We now state our strong lower bound on the expected number of samples for any algorithm that can find an isolated arm.

\begin{theorem}\label{thm:1spike_complexity}
Fix $n$, $0 < \beta$, and $\delta < 1/16$ and consider a set $\nu$ of $n$ Gaussian random variables with variance $1$ such that for a uniformly random chosen $i^\ast \in [n]$, $\rho_{i^\ast} = \N(\beta, 1)$ and $\rho_{i} = {\cal N}(\mu_i, 1)$ for $\mu_i \leq -\beta$ for all $i \neq i^\ast$. Let $\pi$ be a uniformly chosen permutation of $[n]$ and $\pi(\nu)$ be the permutation applied to instance $\nu$.
Any $\delta$-PAC algorithm to detect $\pi(i^\ast)$ on $\pi(\nu)$ requires
$$\frac{1}{16} \sum_{k \neq i^\ast}\frac{1}{\Delta_{i^\ast, k}^2} + \frac{1}{2\beta^2}\log(1/2.4\delta)$$
samples in expectation, where the expectation is taken both over the randomness in the permutation, the randomness in $\pi(\nu)$, and any internal randomness to the algorithm. 
\end{theorem}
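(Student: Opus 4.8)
The plan is to decompose the total sample count at termination into the number of pulls of the isolated arm and the number of pulls of all the non-isolated arms, bound the first summand by Lemma~\ref{lem:1spike_kauf}, bound the second by Theorem~\ref{thm:simulator}, and add the two inequalities.

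Concretely, fix a $\delta$-PAC algorithm $\A$ for detecting $\pi(i^\ast)$ on $\pi(\nu)$, and let $N_j$ denote the (random) number of pulls of arm $j$ when $\A$ halts, so that $T = N_{\pi(i^\ast)} + \sum_{k \neq i^\ast} N_{\pi(k)}$. Taking expectations over the internal randomness of $\A$, the draws from $\pi(\nu)$, and the uniform permutation $\pi$, linearity gives $\E[T] = \E[N_{\pi(i^\ast)}] + \E\big[\sum_{k\neq i^\ast} N_{\pi(k)}\big]$ (and if $\E[T]=\infty$ the bound is trivial, so we may take these quantities finite). For the first term, I would observe that for every fixed $\pi$ the instance $\pi(\nu)$ is again a $\beta$-isolated instance whose isolated arm is $\pi(i^\ast)$, and the oracle change-of-measure argument proving Lemma~\ref{lem:1spike_kauf} — Lemma~1 of \cite{kaufmann2016complexity} applied to the task of confirming the isolated arm's mean exceeds $-\beta$ — does not depend on the labelling; hence $\E_{\pi(\nu)}[N_{\pi(i^\ast)}] \ge \tfrac{1}{2\beta^2}\log(1/2.4\delta)$, and averaging over $\pi$ preserves this. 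For the second term, Theorem~\ref{thm:simulator} gives directly, since $\delta < 1/16$, that $\E_\pi\E_{\pi(\nu)}\big[\sum_{k\neq i^\ast} N_{\pi(k)}\big] \ge \tfrac{1}{16}\sum_{k\neq i^\ast}\Delta_{i^\ast,k}^{-2}$. Summing the two bounds yields the claimed inequality.

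The step I expect to require the most care — though only a mild obstacle — is checking that the two lower bounds are genuinely compatible: they must hold under one common probabilistic model and must count disjoint quantities. The first point is handled exactly as in the proof of Theorem~\ref{thm:simulator}: by the symmetrization of Lemma~1 of \cite{simchowitz2017simulator} we may assume $\A$ is symmetric, so $\E[N_{\pi(i^\ast)}]$ is permutation-independent and coincides with the quantity Lemma~\ref{lem:1spike_kauf} bounds. The second point is immediate, since $N_{\pi(i^\ast)}$ counts only pulls of the isolated arm while $\sum_{k\neq i^\ast} N_{\pi(k)}$ counts only pulls of the others, so their sum is exactly $T$ with no double counting; consequently adding the bound of Lemma~\ref{lem:1spike_kauf} (which bounds $\E[N_{\pi(i^\ast)}]$ from below) to the bound of Theorem~\ref{thm:simulator} (which bounds $\E[\sum_{k\neq i^\ast} N_{\pi(k)}]$ from below) is legitimate and produces the stated sample-complexity lower bound.
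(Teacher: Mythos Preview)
Your proposal is correct and follows essentially the same approach as the paper: decompose the total pulls into those of the isolated arm and those of the remaining arms, invoke Lemma~\ref{lem:1spike_kauf} for the former and Theorem~\ref{thm:simulator} for the latter, and sum. Your write-up is in fact more careful than the paper's (which simply states ``joining these two results gives the stated result''), as you explicitly justify compatibility via symmetrization and disjointness of the counted quantities.
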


\begin{proof}
By Lemma~\ref{lem:1spike_kauf}, arm $i^\ast$ must be sampled $\frac{1}{2\beta^2}\log(1/2.4\delta)$ times. By Theorem~\ref{thm:simulator}, arms in $[n]\backslash \{i^\ast\}$ must collectively be sampled $\frac{1}{16} \sum_{k \neq i^\ast}\frac{1}{\Delta_{i^\ast, k}^2}$ times. Joining these two results 
gives the stated result. 
\end{proof}

\subsection{Step 2. Deciding if an instance is isolated}\label{sec:hypo_test}
Next, we consider a composite hypothesis test that is related to the question of finding an isolated arm. As we will show, this test has the interesting property that the alternate hypothesis may be declared in significantly fewer samples than the null. 
% In particular, this observation allows us to significantly sharpen the result of Theorem 3.1 of \cite{castro2014adaptive} in the case that the alternate hypothesis is true. 

\begin{definition}[$\beta$-Isolated Hypothesis Test] Fix $0 < \epsilon$ and $0 < \beta$. 
Consider an instance $\nu = \{\rho_1, \cdots, \rho_n\}$ where $\rho_i = {\cal N}(\mu_i, 1)$. By sampling individual distributions $\rho_i$, one wishes to perform the following composite hypothesis test:

\textbf{Null Hypothesis $H_0$:} $\mu_i < -\beta$ for all $i \in [n]$. 

\textbf{Alternate Composite Hypothesis $H_1$:} $\exists i^\ast: \mu_{i^\ast} = \beta > 0$ and $\mu_i \leq  -\beta$ for all $i \neq i^\ast$. 
\end{definition}
For any instance $\nu$, we say ``$H_1$ is true on $\nu$'' if $\exists i^\ast: \mu_{i^\ast} = \beta > 0$ and otherwise we say ``$H_0$ is true on $\nu$.''
Next, we bound the sample complexity of any algorithm to perform the $\beta$-isolated hypothesis test with probability at least $1-\delta$ in the case that $H_0$ is true. 

Figure~\ref{fig:example_instance} shows an two example instance. One where $H_0$ is true and one where $H_1$ is true. 
% \blake{comments out that sentence. not necessary} 
%\chck{what does this do: Recall that $\mu_i < 0$ for $1\leq i\leq n$ on this instance. }

\begin{lemma}\label{lem:H0_low}
Fix $n$, $\beta$, and $\delta$ and consider a set $\nu$ of $n$ standard normal random variables where $H_0$ is true. Any algorithm to correctly declare $H_0$ in the $\beta$-isolated hypothesis test problem with probability at least $1-\delta$ requires
$$\sum_{i=1}^n\frac{2}{(\beta - \mu_i)^2}\log\left(\frac{1}{2.4\delta}\right) $$
samples in expectation.
\end{lemma}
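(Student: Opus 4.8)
The plan is to run a standard change-of-measure argument, instantiated once per arm, exactly as in the proof of Theorem~\ref{thm:inst_lower}. Fix the instance $\nu = \{\rho_1, \dots, \rho_n\}$ with $\rho_i = \mathcal{N}(\mu_i, 1)$ and $\mu_i < -\beta$ for all $i$, so that $H_0$ holds on $\nu$. For each $i \in [n]$ I would define an alternative instance $\nu_i$ that agrees with $\nu$ except that the mean of arm $i$ is raised to exactly $\beta$. Since every other arm $j \neq i$ still has $\mu_j \le -\beta$ and $\beta > 0$, the instance $\nu_i$ satisfies $H_1$ with $i^\ast = i$. Hence any algorithm that is correct with probability at least $1-\delta$ must declare $H_0$ on $\nu$ with probability at least $1-\delta$, and must \emph{not} declare $H_0$ on $\nu_i$ with probability at least $1-\delta$.

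Next I would apply the transportation inequality (Lemma~1 of \cite{kaufmann2016complexity}) with $E$ the event that the algorithm declares $H_0$. Because only arm $i$ differs between $\nu$ and $\nu_i$, the sum over arms collapses and this yields
$$ \E_\nu[N_i]\, \mathrm{KL}\big(\rho_i,\, \mathcal{N}(\beta,1)\big) \ \ge\ \mathrm{kl}\big(\P_\nu(E),\, \P_{\nu_i}(E)\big) \ \ge\ \mathrm{kl}(1-\delta,\, \delta), $$
where $N_i$ is the number of pulls of arm $i$, $\mathrm{kl}(\cdot,\cdot)$ is the binary relative entropy, and the last inequality uses monotonicity of $\mathrm{kl}$ together with $\P_\nu(E) \ge 1-\delta \ge \delta \ge \P_{\nu_i}(E)$. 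For unit-variance Gaussians $\mathrm{KL}(\mathcal{N}(\mu_i,1), \mathcal{N}(\beta,1)) = (\beta - \mu_i)^2/2$, and the elementary bound $\mathrm{kl}(1-\delta,\delta) \ge \log(1/(2.4\delta))$ used repeatedly elsewhere in the paper gives $\E_\nu[N_i] \ge \tfrac{2}{(\beta - \mu_i)^2}\log(1/(2.4\delta))$.

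Finally I would sum over $i \in [n]$: the total number of samples at termination is $\tau = \sum_{i=1}^n N_i$, so linearity of expectation gives $\E_\nu[\tau] = \sum_{i=1}^n \E_\nu[N_i] \ge \sum_{i=1}^n \tfrac{2}{(\beta-\mu_i)^2}\log(1/(2.4\delta))$, which is the claimed bound. There is no genuinely hard step; the only points needing care are (i) verifying that each $\nu_i$ is a legitimate alternative on which the correct answer flips — immediate from $H_0$ holding on $\nu$ and $\beta > 0$ — and (ii) the routine bookkeeping around the binary-KL lower bound. Conceptually, the lemma says that certifying the null $H_0$ forces, for \emph{every} arm, enough samples to separate its mean $\mu_i$ from the threshold $\beta$, and this cost asymmetry (with $H_1$ potentially far cheaper) is precisely what the moderate-confidence argument of Step~2 will exploit.
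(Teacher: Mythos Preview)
Your proposal is correct and follows exactly the paper's approach: for each arm $i$ you build the alternative $\nu_i$ by lifting that arm's mean to $\beta$ (so $H_1$ holds), apply Lemma~1 of \cite{kaufmann2016complexity} with the Gaussian KL $(\beta-\mu_i)^2/2$ and the bound $\mathrm{kl}(1-\delta,\delta)\ge\log(1/2.4\delta)$, and sum. The paper's proof is precisely this argument, stated more tersely.
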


% \begin{remark}
% This matches the worst case result of Theorem~3.1 in \cite{castro2014adaptive}. However, as we will show in the following theorem, when $H_1$ is true, this can be significantly improved. 
% \end{remark}

\begin{proof}
Notice that for each $i \in [n]$, we may construct an alternate instance $\nu_i$ by changing the distribution of $\rho_i$ to be ${\cal N}(\beta, 1)$ and leaving others unchanged. On $\nu_i$, $H_1$ is instead true. To distinguish between $\nu$ and $\nu_i$, necessary to declare $H_0$ versus $H_1$, by Lemma 1 of \cite{kaufmann2016complexity}, any $\delta$-PAC algorithm requires $\E_\nu[N_i] \geq \frac{2}{(\beta - \mu_i)^2}\log(1/2.4\delta)$ where $\E_\nu$ denotes expectation with respect to the instance $\nu$ and $N_i$ denotes the number of samples of arm $i$. Repeating this argument for each $i \in [n]$ gives the desired result. 
\end{proof}

To lower bound the expected sample complexity of any algorithm to perform the $\beta$-isolated hypothesis test in the setting where $H_1$ is true, we consider a reduction to the problem studied in Step 1, Section~\ref{sec:1spike_detect}. 
% In particular, we show that $O(n\beta^{-2} + \beta^{-2}\log(1/\delta))$ samples are necessary to perform this test. Hence, for small $\delta$, this rate is $O(n)$ faster than the the lower bound in Lemma~\ref{lem:H0_low}. 
For the reduction to an algorithm that can find an isolated arm, we show that if there is an algorithm to declare $H_1$ in fewer than $O\left(\sum_{i=1}^n\frac{1}{\Delta_{i^\ast, k}^2}\right)$ samples, then one can design an algorithm akin to binary search that returns $i^\ast$ in fewer than $O\left(\sum_{i=1}^n\frac{1}{\Delta_{i^\ast, k}^2}\right)$ samples, contradicting Lemma~\ref{thm:simulator}. 

\begin{lemma}\label{lem:H1_low}
Fix $n$, $\beta$, and $\delta < 1/16$. Let $\pi$ be a random permutation. Consider an instance $\nu$ where $H_1$ is true. In this setting, any algorithm to correctly declare $H_1$ in the $\beta$-Isolated Hypothesis Testing problem on $\pi(\nu)$ with probability at least $1-\delta$ requires
$\frac{1}{32}\sum_{j\neq i^\ast}\Delta_{i^\ast, k}^{-2}$
samples in expectation.
\end{lemma}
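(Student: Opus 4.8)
The plan is to argue by contradiction through a reduction to Step~1: an algorithm that declares $H_1$ on $\pi(\nu)$ too cheaply can be turned into an algorithm that \emph{locates} the isolated arm $\pi(i^\ast)$ too cheaply, contradicting Theorem~\ref{thm:simulator}. Concretely, I would assume $\mathcal{A}$ is a $\delta$-PAC algorithm for the $\beta$-isolated hypothesis test whose expected number of samples on $\pi(\nu)$ (in expectation over the permutation and its own randomness) is strictly less than $\tfrac{1}{32}\sum_{k\neq i^\ast}\Delta_{i^\ast,k}^{-2}$, and use $\mathcal{A}$ as a black box to build an algorithm $\mathcal{B}$ that returns $\pi(i^\ast)$ with probability larger than $1-\tfrac{1}{16}$ using in expectation strictly fewer than $\tfrac{1}{16}\sum_{k\neq i^\ast}\Delta_{i^\ast,k}^{-2}$ samples, which is impossible by Theorem~\ref{thm:simulator} (applied with failure probability below $1/16$, noting its bound only concerns pulls of arms in $[n]\setminus\{i^\ast\}$, hence is a fortiori a bound on the total).

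The algorithm $\mathcal{B}$ runs a binary search over a shrinking candidate set $S$, starting from $S=[n]$ and maintaining the invariant $\pi(i^\ast)\in S$. In each round it splits $S$ into two nearly equal halves $A,B$, forms the instance $\nu_A$ consisting of the arms of $A$ together with dummy arms of mean strictly below $-\beta$ padding the index set back up to size $n$, and runs $\mathcal{A}$ on $\nu_A$ with a per-round sampling budget $C$: if $\mathcal{A}$ declares $H_1$ within budget, $\mathcal{B}$ recurses into $A$; otherwise (it declares $H_0$, or the budget is exhausted) it recurses into $B$. After $\lceil\log_2 n\rceil$ rounds $|S|=1$ and $\mathcal{B}$ outputs the surviving index. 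Correctness rests on the asymmetry of the test: if $\pi(i^\ast)\in A$ then $H_1$ holds on $\nu_A$, so $\mathcal{A}$ declares $H_1$ with probability at least $1-\delta$ and, by Markov's inequality, within budget with probability at least $1-\mathbb{E}[\text{cost of }\mathcal{A}\text{ on }\nu_A]/C$; whereas if $\pi(i^\ast)\in B$ then $H_0$ holds on $\nu_A$, so $\mathcal{A}$ declares $H_1$ with probability at most $\delta$ no matter how long it is run, and truncation only lowers this. A union bound over the $\lceil\log_2 n\rceil$ rounds controls $\mathcal{B}$'s failure probability (each round's test boosted, if necessary, by a doubly logarithmic number of independent repetitions with majority vote, which affects only lower-order terms).

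The point of the accounting is that no spurious $\log n$ factor appears: the relevant sub-instance halves every round, so the per-round budgets form a geometric series dominated by its first term. Averaging over the permutation, a fixed non-isolated arm lies in the same half as $i^\ast$ for only $O(1)$ rounds in expectation, so $\mathbb{E}_\pi\big[\sum_{\ell}(\text{cost of }\mathcal{A}\text{ on the path sub-instance in round }\ell)\big]$ is a constant times $\sum_{k\neq i^\ast}\Delta_{i^\ast,k}^{-2}$; with the budgets $C$ chosen proportional to the sample cost of $\mathcal{A}$ on a sub-instance of the corresponding size, the total expected cost of $\mathcal{B}$ is at most a constant multiple of the assumed expected cost of $\mathcal{A}$ on $\pi(\nu)$, and it is exactly this constant (together with the overhead of running the budgeted tests) that separates the $\tfrac{1}{16}$ of Theorem~\ref{thm:simulator} from the $\tfrac{1}{32}$ claimed here.

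The step I expect to be the main obstacle is making this cost transfer rigorous: the sub-instances $\nu_A$ arising along the search path are not literally $\pi(\nu)$, so the hypothesized small running time of $\mathcal{A}$ on $\pi(\nu)$ does not carry over to them for free. I would handle this by a without-loss-of-generality transformation of $\mathcal{A}$: rather than running $\mathcal{A}$ on $\nu_A$ directly, run it on $\pi(\nu)$ while feeding every arm outside the current candidate set with i.i.d.\ draws from an $\mathcal{N}(-\beta-\eta,1)$ law. Since the invariant guarantees those arms are genuinely non-isolated (mean $\le -\beta$), the instance $\mathcal{A}$ perceives still satisfies the promise and has $H_1$ true precisely when $\pi(i^\ast)$ lies in the candidate half, so $\mathcal{A}$ remains correct by $\delta$-PAC-ness, and the number of \emph{real} samples $\mathcal{A}$ draws from the candidate set never exceeds its total sample count on this ``pushed-down'' instance. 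Establishing that this total count is, in expectation over the permutation, controlled by the cost of $\mathcal{A}$ on $\pi(\nu)$ --- via monotonicity of the sample cost under pushing non-isolated arms farther from $\beta$ and under restricting to a uniformly random sub-population --- is the delicate quantitative heart of the argument and is what fixes the precise constant $\tfrac{1}{32}$.
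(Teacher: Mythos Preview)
Your reduction via binary search to the isolated-arm identification lower bound (Theorem~\ref{thm:simulator}) is exactly the paper's strategy, and your cost accounting---halving sub-instances, geometric series summing to a constant, no $\log n$ overhead---matches as well. The difference is in how each round of the search is executed.

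Instead of running $\mathcal{A}$ on one half under a sample budget and invoking Markov's inequality, the paper runs $\mathcal{A}$ \emph{in parallel on both halves} $\S_1,\S_2$ (each at failure probability $\delta/(2\log_2 n)$), alternating samples, and stops both copies as soon as either terminates. Whichever stops first determines the recursion: if it declared $H_1$ recurse into that half, otherwise into the other. On the correctness event the half $\S^\ast$ containing $i^\ast$ has $H_1$ true, so its copy uses at most $\tfrac{1}{32}\sum_{j\in \S^\ast\setminus\{i^\ast\}}\Delta_{i^\ast,j}^{-2}$ samples in expectation; since both copies receive equal samples and we stop at the earlier termination, the round cost is at most twice that. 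Summed over rounds with uniformly random splits this gives exactly $\tfrac{1}{16}\sum_{j\neq i^\ast}\Delta_{i^\ast,j}^{-2}$, contradicting Theorem~\ref{thm:simulator}. This parallel-execution device removes any need for budgets, Markov truncation, or majority-vote boosting, and delivers the factor $2$ cleanly.

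More importantly, it dissolves the obstacle you flag at the end. The paper's contradiction hypothesis is that $\mathcal{A}$ achieves the $\tfrac{1}{32}$ bound on \emph{every} $H_1$ instance (its $\delta$-PAC correctness already being over all instances), so the bound applies directly to each sub-instance $\nu(\S^\ast(r))$ with its own gap sum---there is no need to relate the sub-instance cost back to the cost on $\pi(\nu)$. Your proposed fix, monotonicity of sample cost under pushing non-isolated means farther below $-\beta$, is not justified: nothing in the $\delta$-PAC guarantee enforces this, and a pathological tester could spend \emph{more} samples when gaps are larger. That step is a genuine gap in your plan, and the same issue prevents you from setting the per-round budgets $C$ (which depend on the unknown sub-instance cost). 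The parallel trick is precisely what lets the paper avoid this entirely.
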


\begin{proof}
Fix $\delta > 0$ and let $i^\ast$ denote the single distribution such that $\rho_{i^\ast} = {\cal N}(\beta,1)$ where $\beta > 0$. 
In particular, only $i^\ast$ has a positive mean.  
Assume for contradiction that there is an algorithm $\A(\pi(\nu), \delta, \beta)$ that correctly declares $H_1$ on $\pi(\nu)$ in at most $\frac{1}{32}\sum_{k \neq i^\ast}\Delta_{i^\ast, k}^{-2}$ samples in expectation with probability at least $1-\delta$ on any instance $\pi(\nu)$ of $n$ distributions if $H_1$ is true. 
Otherwise, if $H_0$ is true, assume that $\A$ correctly declares $H_0$ in an arbitrary number of samples in expectation, $N_{H_0}(\nu)$
lower bounded by Lemma~\ref{lem:H0_low}. 
As in the proof of Theorem~\ref{thm:simulator}, if any algorithm $\B$ (not necessarily symmetric) achieves an expected stopping time $\tau$ where the expectation is taken over all the randomness in the permutation and in the instance, by Lemma 1 of \cite{simchowitz2017simulator}, there is a symmetric algorithm that achieves the same expected stopping time. Hence, we may assume that $\A$ is symmetric and capture the same set of possible stopping times. For the remainder of this proof, we assume $\A$ is symmetric. Therefore, its expected complexity is independent of the permutation $\pi$. 
% \chck{we may assume that $\A$ is symmetric in that its expected sample complexity is independent of $\pi$ and therefore drop the dependence on $\pi$ for the remainder of the proof. - this seems wishy washy}
Without loss of generality, assume that $n = 2^k$ for some $k \in \N$. 
Otherwise, we may hallucinate $(2^{\lceil \log_2(n)\rceil} - n)$ normal distributions, ${\cal N}(-\beta, 1)$, and form an instance $\nu'$ comprised of these additional distribution and those in $\nu$. 
If so, anytime $\A$ requests a sample from a distribution in $\nu' \backslash \nu$, draw a sample from ${\cal N}(-\beta,1)$ and pass it to $\A$, only tracking the number of samples drawn from $\nu$. 

% \textbf{Step $0$: A method to declare $H_0$}
% First, we develop a method to declare $H_0$ for $\beta$-single spike hypothesis testing. Consider the following uniform sample procedure.

% \begin{algorithm}[H]
% \caption{\texttt{Uniform}: An algorithm for $\beta$-single spike hypothesis testing}
% \begin{algorithmic}[1]
% \Require{$\delta > 0$, $\beta > 0$, instance $\nu$}
% \State{Let $\hat{\mu}_i$ denote the empirical mean of the $i^\text{th}$ distribution}
% \For{$t = 1, \cdots, \lceil 4\beta^{-2}\log(2n/\delta)\rceil$}
% \For{$i = 1, \cdots, n$}
%     \State{Pull arm $i$ and update $\hat{\mu}_i$}
% \EndFor
% \EndFor
% \If{$\exists i: \hat{\mu}_i > \beta/2$}
% \State{Declare $H_1$}
% \Else
% \State{Declare $H_0$}
% \EndIf
% \end{algorithmic}
% \end{algorithm}

% By Hoeffding's inequality and a union bound over the $n$ distributions in $\nu$,
% $$\P \left(\bigcup_{i=1}^n |\mu_i - \hat{\mu}_i| > \beta/2\right) \leq \delta.
% $$
% Therefore, with probability at least $1-\delta$, $\exists i: \hat{\mu}_i > \beta/2$ if and only if $H_1$ is true. Hence, the above method correctly performs the $\beta$-single spike hypothesis test with probability at least $1-\delta$. Summing over the $n$ distributions, this method draws exactly $\lceil 4n\beta^{-2}\log(2n/\delta)\rceil$ samples. Note that this is optimal up to a $\log(n)$ factor by Lemma~\ref{lem:H0_low}. 
% Next, we use the above routine and the assumed algorithm $\A$ to develop a method to perform $1$-Spike detection on $\nu$. 
\textbf{Step a)}. In what follows, we use $\A$ to develop a method for isolated-arm identification. To do so, we show that one may use $\A$ to perform binary search for the distribution $i^\ast$ such that $\rho_{i^\ast} = {\cal N}(\beta, 1)$ and this leads to a contradiction of Theorem~\ref{thm:simulator}. For ease of exposition, for a set $\S \subset [n]$, let $\nu(\S) := \{i\in \S: \rho_i\}$, the subset of instance $\nu$ of distributions whose indices are in $\S$. 

If $H_1$ is true on $\nu(S)$, by assumption, with probability at least $1 - \delta$,
$\A$ correctly declares $H_1$ on $\nu(\S)$ in at most $\frac{1}{32}\sum_{i\in \S\backslash \{i^\ast\}}\Delta_{i^\ast, k}^{-2}$ samples in expectation. 
Similarly, if $H_0$ is true on $\nu(\S)$, the sample complexity is $N_{H_0}(\nu(\S))$ in expectation.

% \chck{What if $i^{\ast}$ is not in $S$?}
\begin{algorithm}[H]
\caption{Binary search for Isolated Arm Identification \label{alg:1spike_alg}}
\begin{algorithmic}[1]
\Require{$\delta > 0$, $\beta > 0$, instance $\nu$ such that $H_1$ is true, algorithm $\A$}
\State{Let $\text{Low} = 1$ and $\text{High} = n$}
\For{$i = 1, \cdots, \log_2(n)$}
\State{1) Choose sets $\S_1$, $\S_2$ uniformly at random such that $\S_1 \cup \S_2 = \S$, $\S_1 \cap \S_2 = \emptyset$, and $\P(i\in \S_1) = \P(i\in \S_2)$ for all $i \in \S$}
\State{2) In parallel, run $\A_{1} = \A(\nu(\S_1), \beta, \delta/2\log_2(n))$ and $\A_{2} =\A(\nu(\S_2), \beta, \delta/2\log_2(n))$}
\State{3) If either terminates, terminate the other}
\If{$\A_{1}$ declares $H_1$ or $\A_2$ declares $H_0$}
    \State{$\S = \S_1$}
\Else 
    \State{$\S = \S_2$ \vspace{0.5em}}
\EndIf 
\EndFor
\Return{$i^\ast \in \S$ (note: $|\S| = 1$ at this point)}
\end{algorithmic}
\end{algorithm}

In step 1, we choose $2$ random subsets of $\S$, $\S_1$ and $\S_2$ that partition $\S$ such that each arm is assigned with equal probability to either $\S_1$ or $\S_2$ independently.

In step 2) if the loop, we separately run
$\A$ in parallel on $\nu(\S_1)$ and $\nu(\S_2)$, each with failure probability $\delta/2\log(n)$. We alternate between passing a sample to $\A_{1}$ and to $\A_{2}$. 

In Step 3), we terminate $\A_{1}$ if $\A_{2}$ terminates and vice versa. 
If, for instance, $\A_1$ terminates and declares $H_0$, we may infer $H_1$ on $\S_2$. Alternately, if $\A_1$ declares $H_1$ on $S_1$, we may infer $H_0$ on $S_2$ as there is a single positive mean, $\mu_{i^\ast}$. This process continues until $|\S_1| = |\S_2| = 1$, when there is a single distribution remaining in each. At this point, if $\A_1$ declares $H_1$, then the single arm $i \in \S_1$ is the positive mean $i^\ast$. Otherwise, the single arm $j \in \S_2$ is. 

First, we show that this algorithm is correct with probability at least $1-\delta$. The algorithm errs if and only if in any round $i$, either $\A_1$ or $\A_2$ errs, each with occurs with probability at most $\delta/2\log_2(n)$. Union bounding over the $\log_2(n)$ rounds, we see that the algorithm errs with probability at most $\delta$. For the remainder of the proof, we will assume that in no round does either $\A_1$ or $\A_2$ incorrectly declare $H_0$ or $H_1$ if the reverse is true for the given instances $\nu(\S_1)$ and $\nu(S_2)$.

Now we introduce some notation for the remainder of this proof. 
As the set $\S$, $\S_1$, and $\S_2$ change in each round, let $\S(r)$, $\S_1(r)$, and $\S_2(r)$ denote their values in round $r$ for $r = 1, \cdots, \log_2(n)$. Define $\A_1(r)$ and $\A_2(r)$ similarly. 
We stop $\A_1(r)$ if $\A_2(r)$ terminates and vice versa. 
% If, for instance, $\A_1(r)$ terminates and declares $H_0$, we may infer $H_1$ on $\S_2(r)$. Alternately, if $\A_1(r)$ declares $H_1$ on $S_1(r)$, we may infer $H_0$ on $S_2(r)$ as there is a single positive mean, $\mu_{i^\ast}$. 

Let $T_{r}$ denote the random variable of the total number of samples of drawn in round $r$. Let $T_{r,1}$ be the number of samples drawn by $\A_1(r)$, and $T_{r,2}$ be the number of samples drawn by $\A_2(r)$.

Next, define $\S^\ast(r)$ be the set in $\{\S_1(r), \S_2(r)\}$ that contains $i^\ast$, i.e. let $\S^\ast(r)$
denote $\S_1(r)$ if $i^\ast \in \S_1(r)$ and $S_2(r)$ otherwise for all $r$. Similarly, let $\A^\ast(r)$ denote $\A_1(r)$ if $i^\ast \in \S_1(r)$ and $\A_2(r)$ otherwise. Define $T_{r, \A^\ast}$ to be the random number of samples given to $\A^\ast(r)$. Hence, $T_{r, \A^\ast} = T_{r,1}$ or $T_{r, \A^\ast} = T_{r,2}$. 

By Step 2, $\A_1(r)$ and $\A_2(r)$ are run in parallel. Hence, $T_{r,1} = T_{r,2}$ deterministically. Furthermore, $T_r = T_{r,1} + T_{r,2}$ deterministically. Therefore, 
$$T_{r, \A^\ast} = \frac{T_{r,1} + T_{r,2}}{2} =\frac{T_r}{2}.$$

Therefore, the expected number of samples in round $r$, taken over the randomness in the set $\S^\ast(r)$, the randomness in the instance $\nu(\S^\ast(r))$, and any randomness in $\A^\ast(r)$ is

\begin{align*}
    \E_{\S^\ast(1), \cdots, \S^\ast(r), \nu(\S^\ast(r)) }[T_r] & = 2\E_{\S^\ast(1), \cdots, \S^\ast(r), \nu(\S^\ast(r)) }\left[T_{r, \A^\ast} \right]\\
    & =2\E_{\S^\ast(1), \cdots, \S^\ast(r) }\left[ \E_{\nu(\S^\ast(r))}\left[T_{r, \A^\ast} |S^\ast(r) \right]\right]\\
    & = 2\E_{\S^\ast(1), \cdots, \S^\ast(r)}\left[\min\left(\frac{1}{32}\sum_{j\in \S^\ast(r)\backslash \{i^\ast\}}\frac{1}{\Delta_{i^\ast, j}^{2}}, \  N_{H_0}(\nu(\S^\ast(r)^c))\right)\right]\\
    & \leq 2\E_{\S^\ast(1), \cdots, \S^\ast(r)}\left[\frac{1}{32}\sum_{j\in \S^\ast(r)\backslash \{i^\ast\}}\frac{1}{\Delta_{i^\ast, j}^{2}} \right]\\
    % & = 2\E_{\S^\ast(1), \cdots, \S^\ast(r)}\left[\frac{1}{32}\sum_{j=1}^n\1[j \in \S^\ast(r)]\frac{1}{\Delta_{i^\ast, j}^{2}} \right]\\
    % & = 2\E_{\S^\ast(1), \cdots, \S^\ast(r)}\left[\E_{\nu(\S^\ast(r))}\left[\sum_{j=1}^n\1[j \in \S^\ast(r)]\frac{1}{\Delta_{i^\ast, j}^{2}} \bigg| \S^\ast(r)\right]\right] \\
    % & \leq 2\E_{\S^\ast(1), \cdots, \S^\ast(r)}\left[\frac{1}{32}\sum_{j\neq i^\ast} \1[j \in \S^\ast(r)]\frac{1}{\Delta_{i^\ast, j}^2} \right] \\
    & = 2\E_{\S^\ast(1), \cdots, \S^\ast(r-1)}\left[\E_{\S^\ast(r)}\left[\frac{1}{32}\sum_{j\neq i^\ast} \1[j \in \S^\ast(r)]\frac{1}{\Delta_{i^\ast, j}^2} \bigg|\S^\ast(r-1) \right]\right]\\
    & = 2\E_{\S^\ast(1), \cdots, \S^\ast(r-1)}\left[\frac{1}{32}\cdot\left(\frac{1}{2}\right)\sum_{j\neq i^\ast} \1[j \in \S^\ast(r-1)]\frac{1}{\Delta_{i^\ast, j}^2} \right] \\
    & \vdots \\
    & = 2\E_{\S^\ast(1)}\left[\frac{1}{32}\cdot\left(\frac{1}{2}\right)^{r - 1}\sum_{j\neq i^\ast} \1[j \in \S^\ast(1)]\frac{1}{\Delta_{i^\ast, j}^2} \right]\\
    & = \frac{1}{16}\cdot\left(\frac{1}{2}\right)^{r}\sum_{j\neq i^\ast} \frac{1}{\Delta_{i^\ast, j}^2}.
\end{align*}
Therefore, we may bound the expected total number of samples for the above binary search algorithm to return $i^\ast$ as
\begin{align*}
    \E\left[\sum_{r=1}^{\log_2(n)}T_r \right] 
    & = \sum_{r=1}^{\log_2(n)}\E\left[T_r \right] \leq \frac{1}{16}\sum_{j\neq i^\ast} \frac{1}{\Delta_{i^\ast, j}^2}\sum_{r=1}^{\log_2(n)}\left(\frac{1}{2}\right)^{r} 
    \leq \frac{1}{16}\sum_{j\neq i^\ast} \frac{1}{\Delta_{i^\ast, j}^2}.
\end{align*}
However, this contradicts Theorem~\ref{thm:simulator} for $\delta < 1/16$. Hence no such algorithm $\A$ exists and any algorithm to declare $H_1$ on instance $\nu$ requires at least $\frac{1}{32}\sum_{j\neq i^\ast} \frac{1}{\Delta_{i^\ast, j}^2}$ samples in expectation.
\end{proof}

\begin{theorem}\label{thm:beta_hypo_test}
Fix $n$, $\beta$, and $\delta < 1/16$ and consider an instance $\nu$. If $H_0$ is true on $\nu$, any algorithm requires at least 
$$\sum_{j=1}^n\frac{2}{(\beta - \mu_j)^2}\log\left(\frac{1}{2.4\delta}\right) $$
samples in expectation to perform the $\beta$-isolated Hypothesis Test. If $H_1$ is true on $\nu$, any algorithm requires at least
$$\frac{1}{4\beta^2}\log\left(\frac{1}{2.4\delta}\right) + \frac{1}{64}\sum_{j\neq i^\ast} \frac{1}{\Delta_{i^\ast, j}^2}$$
samples in expectation to perform the $\beta$-isolated Hypothesis Test.
\end{theorem}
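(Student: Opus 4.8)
The plan is to derive Theorem~\ref{thm:beta_hypo_test} by combining the two change-of-measure lemmas that precede it. The $H_0$ claim is essentially immediate: any correct tester must, in particular, declare $H_0$ with probability at least $1-\delta$ on every instance on which $H_0$ holds, so Lemma~\ref{lem:H0_low} applies verbatim and yields the stated bound $\sum_{j=1}^n 2(\beta-\mu_j)^{-2}\log(1/2.4\delta)$ in expectation. The content is therefore in the $H_1$ case, where I will establish two separate lower bounds on the expected total number of samples $\E_\nu[T]$ and then merge them.

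The first bound controls pulls of the isolated arm. Fix an $H_1$-instance $\nu$ with isolated arm $i^\ast$, $\mu_{i^\ast}=\beta$, and form the alternative $\nu'$ by lowering only $\mu_{i^\ast}$ to $-\beta-\eta$ for small $\eta>0$; then $H_0$ holds on $\nu'$, and a correct tester must declare $H_1$ on $\nu$ yet $H_0$ on $\nu'$, each with probability at least $1-\delta$. Since $\nu,\nu'$ differ only in arm $i^\ast$, the transportation inequality (Lemma~1 of \cite{kaufmann2016complexity}, exactly as in the proof of Lemma~\ref{lem:1spike_kauf}) gives $\E_\nu[N_{i^\ast}]\cdot\mathrm{KL}({\cal N}(\beta,1),{\cal N}(-\beta-\eta,1))\ge\log(1/2.4\delta)$; letting $\eta\to 0$ and using $\mathrm{KL}({\cal N}(\beta,1),{\cal N}(-\beta,1))=2\beta^2$ gives $\E_\nu[N_{i^\ast}]\ge\frac{1}{2\beta^2}\log(1/2.4\delta)$, hence $\E_\nu[T]\ge\frac{1}{2\beta^2}\log(1/2.4\delta)$. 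The second bound comes directly from Lemma~\ref{lem:H1_low}: a correct tester in particular declares $H_1$ with probability at least $1-\delta$ on every $H_1$-instance (and, after the symmetrization step in that lemma's proof, on all permutations), so $\E_\nu[T]\ge\frac{1}{32}\sum_{j\neq i^\ast}\Delta_{i^\ast,j}^{-2}$.

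Finally, since $\E_\nu[T]$ is at least each of these two quantities it is at least their maximum, and $\max(a,b)\ge\tfrac12(a+b)$ gives
$$\E_\nu[T]\;\ge\;\tfrac12\!\left(\frac{1}{2\beta^2}\log\frac{1}{2.4\delta}+\frac{1}{32}\sum_{j\neq i^\ast}\frac{1}{\Delta_{i^\ast,j}^2}\right)\;=\;\frac{1}{4\beta^2}\log\frac{1}{2.4\delta}+\frac{1}{64}\sum_{j\neq i^\ast}\frac{1}{\Delta_{i^\ast,j}^2},$$
which is the claimed bound.

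I do not expect a real obstacle here: all the difficulty is inherited from Lemma~\ref{lem:H1_low}, whose proof is the binary-search reduction to the Simulator bound of Theorem~\ref{thm:simulator}. The only subtlety is that Lemma~\ref{lem:H1_low} is phrased as a bound on \emph{total} samples (its proof contradicts a total-sample count via Theorem~\ref{thm:simulator}), so rather than trying to add the $\beta$-term to it as though the two bounds counted disjoint pulls — which would force re-examining whether that count already includes pulls of $i^\ast$ — it is cleanest to treat the two inequalities as independent lower bounds on $\E_\nu[T]$ and pass to the maximum, absorbing the resulting factor of two into the constants $\tfrac14$ and $\tfrac1{64}$. (If one instead verifies that Lemma~\ref{lem:H1_low}'s bound effectively constrains only pulls of $[n]\setminus\{i^\ast\}$, one may add the two directly and conclude the stronger $\frac{1}{2\beta^2}\log\frac{1}{2.4\delta}+\frac{1}{32}\sum_{j\neq i^\ast}\Delta_{i^\ast,j}^{-2}$, which also implies the stated inequality.)
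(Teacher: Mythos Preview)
Your proposal is correct and follows essentially the same approach as the paper: invoke Lemma~\ref{lem:H0_low} for the $H_0$ case, and for $H_1$ obtain the $\frac{1}{2\beta^2}\log(1/2.4\delta)$ term via a single-arm change of measure to an $H_0$ instance, then combine with Lemma~\ref{lem:H1_low} through $\max(a,b)\ge\tfrac12(a+b)$. The paper does exactly this, with the only cosmetic difference that it shifts $\mu_{i^\ast}$ to $-\beta$ rather than $-\beta-\eta$ and taking a limit.
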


% \begin{remark}
% This matches the worst case result of Theorem 3.1 of \cite{castro2014adaptive}. However, this highlights that if $H_1$ is true, that result can be sharpened by a factor of $O(n)$ for small $\delta$. 
% \end{remark}

\begin{proof}
If $H_0$ is true for $\nu$, the result follows immediately from Lemma~\ref{lem:H0_low}.
Otherwise, assume $H_1$ is true for $\nu$ and let $i^\ast$ be the single distribution such that $\rho_{i^\ast} = {\cal N}(\beta, 1)$. Similar to the proof of Lemma~\ref{lem:1spike_kauf}, one may consider an alternate instance $\nu'$ where $\rho_{i^\ast} = {\cal N}(-\beta, 1)$ and all other distributions are unchanged. Therefore, on $\nu'$, $H_0$ is true and any algorithm that is correct with probability at least $1-\delta$ must be able to distinguish between these two instances.
By Lemma 1 of \cite{kaufmann2016complexity}, any algorithm that is correct with probability at least $1-\delta$ must therefore sample $i^\ast$ $\frac{1}{2\beta^2}\log\left(\frac{1}{2.4\delta}\right)$ times in expectation. Combining this with the result of Lemma~\ref{lem:H1_low}, any algorithm that is correct with probability at least $1-\delta$ must collect at least 
$$\max\left\{\frac{1}{32}\sum_{j \neq i^\ast} \frac{1}{\Delta_{i^\ast, j}^2},  \frac{1}{2\beta^2}\log\left(\frac{1}{2.4\delta}\right)\right\} \geq \frac{1}{4\beta^2}\log\left(\frac{1}{2.4\delta}\right) + \frac{1}{64}\sum_{j \neq i^\ast} \frac{1}{\Delta_{i^\ast, j}^2}$$
samples in expectation. 
\end{proof}

\subsection{Step 3: Reducing \alleps to isolated instance detection}
In this section, we prove that for any instance $\nu$ for \alleps such that $|G_{\beta_\epsilon}(\nu)| = 1$ requires at least $O\left(\sum_{2=1}^n\frac{1}{\Delta_i^2}\right)$ samples in expectation.  To do so, we prove a reduction from finding all $\epsilon$-good arms to the $\beta$-Isolated Hypothesis Testing. In particular, we show that if one has a generic method to find all $\epsilon$-good arms (with slack $\gamma=0$), then one may use this to develop a method to perform the $\beta$-Isolated Hypothesis Test. Therefore, lower bounds on the this test apply to the problem of finding all $\epsilon$-good arms as well.

\begin{lemma}\label{lem:beta_reduction}
Fix $\delta \leq 1/16$, $n \geq 2/\delta$, $\epsilon > 0$, $\beta \in (0, \epsilon/2)$. 
Let $\nu$ be an instance of $n$ arms such that the $i^\text{th}$ is distributed as ${\cal N}(
\mu_i, 1)$, 
$|G_{2\beta_\epsilon}| = 1$, and there exists an arm in $G_\epsilon^c$ such that $\mu_1 - \epsilon - \mu_i = \beta$. Select a permutation $\pi:[n] \rightarrow [n]$ uniformly from the set of $n!$ permutations, and consider the permuted instance $\pi(\nu)$. Any algorithm that returns $G_\epsilon(\pi(\nu))$ on $\pi(\nu)$ with correctly probability at least $1-\delta$ requires at least 
$$ \frac{1}{64}\sum_{i=2}^n\frac{1}{\Delta_{i}^2} + \frac{1}{4\beta_\epsilon^2}\log\left(\frac{1}{2.4\delta}\right)$$
samples in expectation, where the expectation is taken jointly over the randomness in $\nu$ and $\pi$.
\end{lemma}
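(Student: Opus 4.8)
The plan is to reduce the problem of finding $G_\epsilon(\pi(\nu))$ to the $\beta_\epsilon$-isolated hypothesis test and then invoke the $H_1$ bound of Theorem~\ref{thm:beta_hypo_test}. The reduction is through a deterministic shift. Because $|G_{2\beta_\epsilon}(\nu)|=1$, arm $1$ is the unique arm of $\nu$ whose mean lies within $2\beta_\epsilon$ of $\mu_1$, while every other arm has $\mu_j<\mu_1-2\beta_\epsilon$; moreover, by hypothesis some bad arm sits exactly at $\mu_1-\epsilon-\beta_\epsilon$. Subtract $\mu_1-\beta_\epsilon$ from every mean: arm $1$ moves to $+\beta_\epsilon$, the distinguished bad arm moves to $-\epsilon$, and every remaining arm moves to a mean $<-\beta_\epsilon$. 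Hence $\nu$ shifted down is precisely an instance on which the alternate hypothesis $H_1$ of the $\beta_\epsilon$-isolated test holds, with the isolated arm $i^\ast$ playing the role of arm $1$ and with $\Delta_{i^\ast,j}=\mu_1-\mu_j=\Delta_j$ for every $j\ne 1$; also, the uniform permutation $\pi$ in the statement is exactly the permutation model used in Theorems~\ref{thm:simulator} and~\ref{thm:beta_hypo_test}.

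Concretely, suppose $\A$ is an algorithm that, on every (permuted) instance obtained this way, returns the correct $\epsilon$-good set with probability at least $1-\delta$. Build a tester $\B$ for the $\beta_\epsilon$-isolated test as follows: each time $\A$ asks for a sample of some arm, draw a sample of that arm from the (permuted) test instance, add $\mu_1-\beta_\epsilon$ to it, and pass it to $\A$; when $\A$ halts with a set $\widehat G$, use $\widehat G$ --- together with a bounded number of extra pulls of the arms in $\widehat G$ to locate the top mean --- to decide between $H_0$ and $H_1$. On the shifted-up test instance $\A$'s guarantee applies, so $\B$ is correct with probability at least $1-\delta$, and when $H_1$ holds the expected number of samples used by $\B$ is at most a constant times the expected number used by $\A$ on $\pi(\nu)$, the extra verification pulls being absorbed into the $\beta_\epsilon^{-2}\log(1/\delta)$ term that $\A$ already pays for the boundary arm. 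Applying Theorem~\ref{thm:beta_hypo_test} in the $H_1$ case to $\B$, and translating $i^\ast\leftrightarrow 1$ and $\Delta_{i^\ast,j}\leftrightarrow\Delta_j$, yields exactly $\tfrac{1}{64}\sum_{i=2}^n\Delta_i^{-2}+\tfrac{1}{4\beta_\epsilon^2}\log(1/2.4\delta)$, up to the constants tracked through Lemmas~\ref{lem:H0_low} and~\ref{lem:H1_low}.

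The main obstacle is making $\B$ a genuine $\delta$-PAC tester rather than one tuned to a single instance: one must check that deciding $H_0$ versus $H_1$ really is possible from $\widehat G$ plus a cheap verification step (this uses that the two regimes place the best mean at $\mu_1$ versus below $\mu_1-2\beta_\epsilon$, separated by the margin $2\beta_\epsilon$), that the single-arm-swap alternates needed inside Theorem~\ref{thm:simulator} are themselves permutations of $\nu$ so $\A$'s guarantee covers them, and that moving the best arm or the boundary bad arm genuinely changes $G_\epsilon$ so an \alleps solver is forced to resolve these cases. The hypotheses $\beta_\epsilon<\epsilon/2$ (so $G_{2\beta_\epsilon}\subseteq G_\epsilon$ and the shifted bad arm lands safely below $-\beta_\epsilon$) and $n\ge 2/\delta$ (keeping us inside the regime where Theorem~\ref{thm:beta_hypo_test} was proved) enter precisely here; the remainder is routine bookkeeping of the shift and of the constants.
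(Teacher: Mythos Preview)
Your reduction is in the right spirit---shift $\nu$ so that the permuted problem becomes an $H_1$ instance of the $\beta_\epsilon$-isolated test and then invoke Theorem~\ref{thm:beta_hypo_test}---but it omits the device that makes the paper's reduction work. The paper does not merely shift: it also \emph{plants} a fresh Gaussian of mean $c-\epsilon$ at a uniformly random index $\hat i$, replacing whatever arm was there. The single bit ``is $\hat i\in\widehat G$?'' then distinguishes the two hypotheses at zero additional sampling cost, because under $H_0$ every shifted mean lies below $c-\beta$ so the planted arm is $\epsilon$-good, whereas under $H_1$ the best mean is $c+\beta$ so the planted arm is not.

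Your alternative---pull the arms of $\widehat G$ until you can tell whether the top mean sits near $\mu_1$ or below $\mu_1-2\beta_\epsilon$---is not cheap enough to be ``absorbed.'' Since $|G_{2\beta_\epsilon}|=1$, under $H_1$ exactly one arm in $\widehat G$ has mean $\mu_1$ and every other arm in $\widehat G$ has mean below $\mu_1-2\beta_\epsilon$; detecting that lone arm therefore costs $\Theta\bigl(|\widehat G|\,\beta_\epsilon^{-2}\bigr)$ samples. Nothing in the hypotheses bounds $|G_\epsilon(\nu)|$, and when it is large this verification overhead can exceed the target $\tfrac{1}{64}\sum_{i\ge 2}\Delta_i^{-2}+\tfrac{1}{4\beta_\epsilon^2}\log(1/2.4\delta)$, so subtracting it from the Theorem~\ref{thm:beta_hypo_test} bound leaves nothing for $\A$. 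A related symptom is your explanation of the hypothesis $n\ge 2/\delta$: Theorem~\ref{thm:beta_hypo_test} imposes no such condition. In the paper the condition is used exactly so that the random planted index satisfies $\P(\hat i=\pi(1))\le 1/n\le\delta/2$, i.e., the planted slot avoids the isolated arm with high probability---a step that has no counterpart in your construction.
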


% \begin{lemma}\label{lem:beta_reduction}
% Fix $n > 200$, $\epsilon > 0$, $0 < \beta < \epsilon/3 $, and $\delta < 1/32$.
% Let $\nu$ be an instance such that $\mu_1 = 2\beta$, $\mu_2 = \cdots = \mu_{n-1} = 0$, and $\mu_n = \beta - \epsilon$. Select a permutation $\pi:[n] \rightarrow [n]$ uniformly from the set of $n!$ permutations, and consider the permuted instance $\pi(\nu)$. Any algorithm that returns $G_\epsilon(\pi(\nu))$ on $\pi(\nu)$ with correctly probability at least $1-\delta$ requires at least 
% $c_7\frac{n}{\beta^2}$
% samples in expectation for a universal constant $c_7$ where the expectation is taken jointly over the randomness in $\nu$ and $\pi$.
% \end{lemma}

\begin{proof}
% Fix $\epsilon$ and an instance $\nu$ such that $|G_{\beta_\epsilon}| = 1$. 
Fix $0< \delta < 1/16$, $n > 2/\delta$, $\epsilon > 0$, $0 < \beta < \epsilon/2$, and an arbitrary constant $c \in \R$. 
Consider a given instance $\nu = \{\rho_1, \cdots, \rho_n\}$ such that $\mu_{1} \in \{-\beta, \beta\}$, and $\mu_2, \cdots, \mu_n < -\beta$. We wish to perform the $\beta$-isolated hypothesis test on $\pi(\nu)$. 
Assume for contradiction that there exists a generic algorithm $\A(\nu', \epsilon, \delta)$ such that if given a generic instance $\nu'$ where $|G_{2\beta_\epsilon}(\nu')| = 1$, it returns $ G_\epsilon(\nu')$ with probability at least $1-\delta$ in at most $\frac{1}{64}\sum_{i =2}^n\frac{1}{\Delta_i^2}$ samples where $\mu_1'$ is the largest mean in $\nu'$. Consider the following procedure that uses $\A$ to perform the hypothesis test:

% By Lemma 1 of \cite{simchowitz2017simulator}, averaging over all permutations is equivalent to first permuting the instance $\nu$ and then passing it to $\A$ and undoing the permutation when returning the answer. We therefore assume that $\A$ is \emph{symmetric} in that
% its expected sample complexity of $\A$ is invariant to the permutation $\pi$. Otherwise, we may use $\A$ to form a symmetric algorithm.
% %by permuting its inputs, running $\A$, and then undoing the permutation on the output. 

% % \textbf{Step 0 - reducing all-$\epsilon$ to $2\beta$-single spike hypothesis testing:}
% To prove this Lemma, we will consider a reduction to the $\beta_\epsilon/2$ positive mean hypothesis test which is equivalent to the general $\beta$ positive mean hypothesis test in the case that $\beta = \beta_\epsilon/2$. To do so, consider a second instance $\nu' = \{\rho_1',\cdots, \rho_n'\}$ with means $\mu_1', \cdots, \mu_n'$ that we form as follows. For arms $2, \cdots, n$, $\mu_i' = \mu_i - (\mu_1 - \beta_\epsilon/2)$. Note that since $|G_{\beta_\epsilon}| = 1$, $\mu_i \leq -\beta/2$ for all $i \geq 2$. For $\mu_1'$ flip a fair coin and if heads, $\mu_1' = \beta_\epsilon/2$. Otherwise, $\mu_1'= -\beta_\epsilon/2$. Suppose one wishes to perform the $\beta_\epsilon/2$ positive mean hypothesis test on $\pi(\nu')$. 

\begin{algorithm}[H]
\caption{Using All-$\epsilon$ for $\beta$-isolated hypothesis test \label{alg:all_eps_hypotest}}
\begin{algorithmic}[1]
\Require{$\delta > 0$, $\epsilon > 0$, $0 < \beta$, instance $\pi(\nu)$, constant $c$, and algorithm $\A$}
% \State{\textbf{Step 1:} Apply permutation $\pi$ to $\nu'$}
\State{\textbf{Step 1:} Choose an index $\hat{i} \in [n]$ uniformly}
\State{\textbf{Step 2:} Let $\nu'$ be the instance $$ \nu' = \begin{cases}
\rho_{\pi(i)} + c& \text{ if } i \neq \hat{i}\\
{\cal N}(c - \epsilon, 1) & \text{ if } i = \hat{i}
\end{cases}$$}
% \State{\textbf{Step 4:} 
% Replace $ \rho_{\hat{i}} $ with $\rho_{\hat{i}} - \epsilon + \beta$ in instance $\pi(\nu')$}
\State{\textbf{Step 3:} $G = \A(\nu', \epsilon, \delta/2)$}
\If{$\hat{i} \in G$}:
\State{Declare $H_0$ and terminate}
\Else
\State{Declare $H_1$ and terminate}
\EndIf
\end{algorithmic}
\end{algorithm}

Note that as $n \geq 2/\delta$, $\P(\hat{i} = \pi(1)) \leq \delta/2$.
The method replaces $\rho_{\hat{i}}$ with ${\cal N}(c - \epsilon, 1)$. All other means $\mu_i$ are shifted up by $c$. The test then runs $\A$ on this new instance $\nu'$ with failure probability $\delta/2$. If $H_0$ is true on $\pi(\nu)$, all distributions have means less than $-\beta$, and $\hat{i}$ therefore is $\epsilon$-good on instance $\nu'$. If $H_1$ is true on $\pi(\nu)$, then $\rho_{\pi(1)} = {\cal N}(\beta, 1)$ and $\hat{i}$ is not $\epsilon$-good on instance $\nu'$. 
This method correctly performs the test if $\hat{i} \neq \pi(1)$ and $\A$ does not fail, the joint event of which occurs with probability at most $2\delta$. Therefore, this test is correct with probability at least $1-\delta$. 

Let $\T_{\A}(\nu')$ denote the random variable of the number of samples drawn by $\A$ on instance $\nu'$ and let $T$ denote the random variable of the total number of samples drawn by this procedure before it terminates and declares $H_0$ or $H_1$ on $\nu'$. Therefore, 
$\E_{\pi, \nu}[T] = \E_{\pi,\nu}[\T_{\A}(\nu')]$. 

By Lemma 1 of \cite{simchowitz2017simulator}, averaging over all permutations is equivalent to first permuting the instance $\nu$ and then passing it to $\A$ and undoing the permutation when returning the answer. We therefore assume that $\A$ is \emph{symmetric} in that
its expected sample complexity of $\A$ is invariant to the permutation $\pi$. Otherwise, we may use $\A$ to form a symmetric algorithm. 
Therefore, $\E_{\pi, \nu}[T] = \E_{\pi,\nu}[\T_{\A}(\nu')] = \E_{\nu}[\T_{\A}(\nu')]$.
By Theorem~\ref{thm:beta_hypo_test}, if $H_1$ is true, 
$$\E_{\pi, \nu}[T] \geq \frac{1}{64}\sum_{i=2}^n\frac{1}{\Delta_{i}^2} + \frac{1}{4\beta^2}\log\left(\frac{1}{2.4\delta}\right).$$
Hence, 
$$\E_{\nu}[\T_{\A}(\nu')] \geq \frac{1}{64}\sum_{i=2}^n\frac{1}{\Delta_{i}^2} + \frac{1}{4\beta^2}\log\left(\frac{1}{2.4\delta}\right).$$
Lastly, as the constant $c$ was chosen arbitrarily, and $\beta$ is an number in $(0, \epsilon/2)$
this argument applies to any \alleps instance $\nu'$ such that $\beta_\epsilon \in (0, \epsilon/2)$ and $|G_{2\beta_\epsilon}| = 1$ for n appropriate choice of $c$. 
\end{proof}

With the above proof, we restate the following moderate confidence lower bound on the sample complexity of returning all $\epsilon$-good stated in Section~\ref{sec:fareast_body}. In particular, this bound highlights \emph{moderate confidence} terms that are independent of $\delta$. Moderate confidence terms have been studied in works such as \cite{simchowitz2017simulator, chen2017nearly}. Despite being independent of $\delta$, these terms can have important effects in real world scenarios. The following bound demonstrates that there are instances for which moderate confidence terms are necessary for finding all $\epsilon$-good arms. Moderate confidence terms likewise appear in the upper bound of the complexity of \texttt{FAREAST}, Theorem~\ref{thm:fareast_complex}.

\begin{theorem}
Fix $\delta \leq 1/16$, $n \geq 2/\delta$, and $\epsilon > 0$. 
Let $\nu$ be an instance of $n$ arms such that the $i^\text{th}$ is distributed as ${\cal N}(
\mu_i, 1)$, 
$|G_{2\beta_\epsilon}| = 1$, and $\beta_\epsilon < \epsilon/2$. % Let $\nu$ be an instance such that $\mu_1 = 2\beta$, $\mu_2 = \cdots = \mu_{n-1} = 0$, and $\mu_n = \beta - \epsilon$. 
Select a permutation $\pi:[n] \rightarrow [n]$ uniformly from the set of $n!$ permutations, and consider the permuted instance $\pi(\nu)$. Any algorithm that returns $G_\epsilon(\pi(\nu))$ on $\pi(\nu)$ with correctly probability at least $1-\delta$ requires at least 
$$c_2\sum_{i=1}^n \max \left(\frac{1}{\left(\mu_1 - \epsilon - \mu_i \right)^{2}}, \frac{1}{\left(\mu_1 + \alpha_\epsilon - \mu_i \right)^{2}} \right) \log \left(\frac{1}{2.4\delta}\right) + c_2\sum_{i=1}^n\frac{1}{(\mu_1 + \beta_\epsilon - \mu_i)^2}$$ 
samples in expectation over the randomness in $\nu$ and $\pi$ for a universal constant $c_2$.
\end{theorem}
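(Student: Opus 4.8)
The statement is obtained by stacking two lower bounds that are already available in the excerpt: the fixed-instance bound of Theorem~\ref{thm:inst_lower}, which supplies the $\log(1/\delta)$-weighted terms, and the moderate-confidence reduction of Lemma~\ref{lem:beta_reduction}, which supplies the $\delta$-independent term $\sum_i (\mu_1+\beta_\epsilon-\mu_i)^{-2}$. Since both are valid lower bounds on $\E_{\pi,\nu}[T]$ for the very instance family in the theorem, averaging them via $\max(a,b)\ge (a+b)/2$ yields the claimed additive form with a universal constant $c_2$.

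\textbf{Step 1: the $\log(1/\delta)$ term.} I would fix an arbitrary realization of the permutation $\pi$. The instance $\pi(\nu)$ has the same multiset of means as $\nu$, so any algorithm that is correct with probability $\ge 1-\delta$ on $\pi(\nu)$ is in particular $\delta$-PAC for that single instance; Theorem~\ref{thm:inst_lower} then gives $\E_{\pi(\nu)}[T]\ge 2\sum_{i=1}^n \max\{(\mu_1-\epsilon-\mu_i)^{-2},(\mu_1+\alpha_\epsilon-\mu_i)^{-2}\}\log(1/2.4\delta)=:A$. Because $A$ depends only on the multiset of means, it is permutation-invariant, so taking expectation over $\pi$ preserves $\E_{\pi,\nu}[T]\ge A$.

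\textbf{Step 2: the moderate-confidence term.} By hypothesis $\beta_\epsilon<\epsilon/2$ and $|G_{2\beta_\epsilon}|=1$, and by the definition $\beta_\epsilon=\min_{i\in G_\epsilon^c}(\mu_1-\epsilon-\mu_i)$ there is an arm of $G_\epsilon^c$ achieving $\mu_1-\epsilon-\mu_i=\beta_\epsilon$; hence the instance satisfies the hypotheses of Lemma~\ref{lem:beta_reduction} with $\beta=\beta_\epsilon$, which gives $\E_{\pi,\nu}[T]\ge \tfrac{1}{64}\sum_{i=2}^n \Delta_i^{-2}+\tfrac{1}{4\beta_\epsilon^2}\log(1/2.4\delta)$. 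I would then convert this into the theorem's form: since $|G_{2\beta_\epsilon}|=1$ every arm $i\ge 2$ has $\Delta_i\ge 2\beta_\epsilon$, so $\mu_1+\beta_\epsilon-\mu_i=\Delta_i+\beta_\epsilon\ge \Delta_i$ and therefore $\Delta_i^{-2}\ge (\mu_1+\beta_\epsilon-\mu_i)^{-2}$; moreover $\log(1/2.4\delta)\ge 1$ for $\delta\le 1/16$ and $(\mu_1+\beta_\epsilon-\mu_1)^{-2}=\beta_\epsilon^{-2}$, so the two summands together dominate $\tfrac{1}{64}\sum_{i=1}^n (\mu_1+\beta_\epsilon-\mu_i)^{-2}=:B$, i.e. $\E_{\pi,\nu}[T]\ge B$.

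\textbf{Step 3: combine, and the main obstacle.} From $\E_{\pi,\nu}[T]\ge A$ and $\E_{\pi,\nu}[T]\ge B$ we get $\E_{\pi,\nu}[T]\ge \tfrac12(A+B)$, which is exactly the stated bound after absorbing the constants $2$ and $1/64$ into $c_2$ (e.g.\ $c_2=1/128$ works). The genuinely hard content — the Simulator argument of Theorem~\ref{thm:simulator} and the binary-search reduction in Lemma~\ref{lem:H1_low}/Lemma~\ref{lem:beta_reduction} — has already been carried out, so what remains here is assembly and bookkeeping. The one point I expect to need care is reconciling how the expectation over $\pi$ enters the two inputs: Theorem~\ref{thm:inst_lower} is stated per instance whereas Lemma~\ref{lem:beta_reduction} already averages over $\pi$; the reconciliation is immediate because the Theorem~\ref{thm:inst_lower} bound is permutation-invariant, but it is worth stating explicitly. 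The only other thing to verify is that the instance class of Lemma~\ref{lem:beta_reduction} coincides with that of the theorem, which it does with the choice $\beta=\beta_\epsilon$ and the elementary observation that an arm realizing $\beta_\epsilon$ in $G_\epsilon^c$ always exists.
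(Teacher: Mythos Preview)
Your proposal is correct and follows essentially the same approach as the paper's own proof: both invoke Theorem~\ref{thm:inst_lower} for the $\log(1/\delta)$ sum, invoke Lemma~\ref{lem:beta_reduction} for the moderate-confidence sum, rewrite $\sum_{i\ge 2}\Delta_i^{-2}$ as $\sum_i(\mu_1+\beta_\epsilon-\mu_i)^{-2}$ via $\Delta_i\le \Delta_i+\beta_\epsilon$ (absorbing the $i=1$ term using $\tfrac{1}{4\beta_\epsilon^2}\log(1/2.4\delta)\ge \tfrac{1}{64\beta_\epsilon^2}$), and then combine the two lower bounds through $\max(A,B)\ge (A+B)/2$. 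Your write-up is in fact a bit more explicit than the paper's on the permutation-invariance of the Theorem~\ref{thm:inst_lower} bound and on checking that the hypotheses of Lemma~\ref{lem:beta_reduction} are met with $\beta=\beta_\epsilon$, which is helpful.
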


% \begin{remark}
% For the same instance, Theorem~\ref{thm:inst_lower} would imply a lower bound of 
% $$O\left(\left( \frac{n-1}{\epsilon^2} +\frac{1}{\beta^2}\right)\log \left(\frac{1}{2.4\delta}\right)\right)$$ 
% \end{remark}

\begin{proof}
We may equivalently consider the same instance with all means shifted down by $\epsilon - 2\beta$ since a method for that instance could be used to return all $\epsilon$ good arms in the stated instance. 
By Lemma~\ref{lem:beta_reduction}, $c_2\frac{n}{\beta^2}$ samples are necessary in expectation. 
By Theorem~\ref{thm:inst_lower}, 
$$2\sum_{i=1}^n \max \left(\frac{1}{\left(\mu_1 - \epsilon - \mu_i \right)^{2}}, \frac{1}{\left(\mu_1 + \alpha_\epsilon - \mu_i \right)^{2}} \right) \log \left(\frac{1}{2.4\delta}\right)$$
samples are necessary in expectation. 
By Lemma~\ref{lem:beta_reduction}, 
\begin{align*}
\frac{1}{64}\sum_{i=2}^n\frac{1}{\Delta_{i}^2} + \frac{1}{4\beta_\epsilon^2}\log\left(\frac{1}{2.4\delta}\right) 
& \geq \frac{1}{64}\sum_{i=2}^n\frac{1}{(\mu_1 + \beta_\epsilon - \mu_i)^2} + \frac{1}{4\beta_\epsilon^2}\log\left(\frac{1}{2.4\delta}\right) \\
& \geq \frac{1}{64}\sum_{i=1}^n\frac{1}{(\mu_1 + \beta_\epsilon - \mu_i)^2}
\end{align*}
samples are necessary in expectation taken over the randomness in the permutation and in the instance. In particular, the maximum and therefore the average is a valid bound. Therefore, any algorithm requires
\begin{align*}
    \sum_{i=1}^n \max \left(\frac{1}{\left(\mu_1 - \epsilon - \mu_i \right)^{2}}, \frac{1}{\left(\mu_1 + \alpha_\epsilon - \mu_i \right)^{2}} \right) \log \left(\frac{1}{2.4\delta}\right) + \frac{1}{128}\sum_{i=1}^n\frac{1}{(\mu_1 + \beta_\epsilon - \mu_i)^2}
\end{align*}
samples in expectation. 
\end{proof}
\section{An optimal method for finding all additive and multiplicative $\epsilon$-good arms}\label{sec:fareast}

\subsection{The \texttt{FAREAST} Algorithm}
Below, we present an algorithm called \fareast (\texttt{\textbf{F}ast} \textbf{A}rm \textbf{R}emoval \textbf{E}limination \textbf{A}lgorithm for a \textbf{S}ampled \textbf{T}hreshold) that achieves the lower bound when $\gamma = 0$. Similar to \st2, it relies on anytime-correct confidence widths, $C_\delta(t) := \sqrt{\frac{4\log(\log_2(2t)/\delta)}{t}}$. The algorithm proceeds in rounds, and creates a filter for \emph{good} arms and a filter for \emph{bad} arms. The good filter detects arms in $G_\epsilon$ of $M_\epsilon$ and adds them to a set $G_k$. Similarly, the bad filter detects arms in $G_\epsilon^c$ or $M_\epsilon^c$ and adds them to a set $B_k$. At any given time, we may represent the set of arms that have \emph{not} been declared as either in $G_\epsilon / M_\epsilon$ or $G_\epsilon^c / M_\epsilon^c$ as $(G_k \cup B_k)^c$. In either the additive or multiplicative case, the algorithm terminates when it can certify that $G_\epsilon \subset G_k$ and $G_k \cap G_{\epsilon+\gamma}^c = \emptyset$ or $M_\epsilon \subset G_k$ and $G_k \cap M_{\epsilon+\gamma}^c = \emptyset$, respectively-- i.e., when $G_k$ contains all additive or multiplicative $\epsilon$-good arms and none worse than $(\epsilon+\gamma)$-good. 

In each round, the bad filter uses \texttt{MedianElimination} \cite{even2002pac} which given an instance $\nu$, a value of $\epsilon$, and a failure probability $\kappa$, returns an $\epsilon$-good arm with probability at least $1-\kappa$. In the $k^\text{th}$ round, for an arm $i$ in $(G_k \cup B_k)^c$, the bad filter uses \texttt{MedianElimination} to find a $2^{-k}$ good arm $i_k$ with failure probability $\kappa = O(1)$ and then samples both arms $i$ and $i_k$ $\tilde{O}(2^{2k}\log(1/\delta))$ times. Let $\hat{\mu}_i$ and $\hat{\mu}_{i_k}$ denote the empirical means. For instance, in the additive case, if $\hat{\mu}_{i_k} - \hat{\mu}_{i} \geq \epsilon + 2^{-k+1}$, we may declare that $i \in G_\epsilon^c$, and the bad filter adds $i$ to the set $B_k$. This allows the bad filter to commit to a single arm and sample it sufficiently to remove arms in $G_\epsilon^c$. 
% For example, in the instance in Figure~\ref{fig:inst2}, if \texttt{MedianElimination} succeeds, only $1$ arm with mean equal to $\mu_1$ is sampled $\tilde{O}(2^{-2k})$ to remove an arm in $G_\epsilon^c$ times as opposed to the full set of these arms by \texttt{EAST}. Additionally, for sufficiently large $k$, in expectation, the \texttt{MedianElimination} routine finds the best arm in Figure~\ref{fig:inst3} in a constant number of rounds and it is sampled to removed arm $k+1$, the core challenge leading to the suboptimality of \texttt{EAST} when $\gamma = 0$. 

The good filter is a simple elimination scheme. 
It maintains an upper bound $U_t$ and lower bound $L_t$ on $\mu_1 - \epsilon$. 
If an arm's upper bound drops below $L_t$ (line 20), the good filter eliminates that arm, otherwise, if an arm's lower bound rises above $U_t$ (19), the good filter adds the arm to $G_k$, but only eliminates this arm if its upper bound falls below the highest lower bound. 
This ensures that $\mu_1$ is never eliminated and $U_t$ and $L_t$ are always valid bounds
This scheme works as an independent algorithm and achieves the sample complexity as \st2, though worse empirical performance. We analyze this method in Appendix~\ref{sec:appendix_east}. Indeed, this gives an additional high probability guarantee on the number of samples drawn by \fareast in both the \additive and \multiplicative regimes.
As the sampling is split across rounds, the good filter always samples the least sampled arm, breaking ties arbitrarily.
% , and elimination only occurs when all arms have received equal number of samples. 
The number of samples given to the good filter in each round is such that both filters receive identically many samples. Note that this is a random quantity since the number of arms in $(G_k \cup B_k)^c$ in round $k$ is random. Despite this, we prove a lower bound on the number of samples drawn per round which ensures the Good Filter always receives a positive number of samples in each round. 
Note that by design elimination only occurs when all arms in the active set have received equal numbers of samples. This is crucial as it prevents the good filter from over-sampling bad arms and vice versa. 
In our proof, we show that in some round, unknown to the algorithm, $G_k = G_\epsilon$, ie all good arms have been found, and this takes no more than $O\left(\sum_{i=1}^n \max\left\{(\mu_1 - \epsilon - \mu_i)^{-2}, (\mu_1 + \alpha_\epsilon - \mu_i)^{-2} \right\}\log(n/\delta)\right)$ samples, matching the lower bound. 

The algorithm stops on either of three conditions. First, if $G_k \cup B_k = [n]$, every arm has been declared as either in $G_\epsilon$ or $G_\epsilon^c$ (or $M_\epsilon$ or $M_\epsilon^c$). Second, if $\A \subset G_k$, the Good Filter has found every arm in $G_\epsilon$ and \texttt{FAREAST} can terminate. This is the same stopping condition as \texttt{EAST} itself. In either case, \texttt{FAREAST} returns the set $G_k = G_\epsilon$ exactly. The third condition allows for $\gamma$ slack. The good filter maintains upper and lower bounds $U_t$ and $L_t$ on the threshold in both the additive an multiplicative cases. In the additive case, if $U_t - L_t < \gamma/2$, then all arms in $G_{\epsilon+\gamma}^c$ have been added to $B_k$, and \texttt{FAREAST} may return $G_k \cup \A$. The condition for the multiplicative case is similar, though slightly more complicated. Throughout, we will use \blake{red text} to denote pieces specific to the additive case and \blue{blue text} to denote pieces specific to the multiplicative case. 

\centerline{
\fbox{\parbox[b]{\linewidth}{\label{alg:fareast}
% \internallinenumbers
% \resetlinenumber[1]
\footnotesize
\textbf{\texttt{FAREAST}}\\
\begin{internallinenumbers}[1]
\textbf{Input}: $\epsilon$, $\delta$, Instance $\nu$, slack $\gamma \geq 0$. \blue{If multiplicative, $\epsilon \in (0,1/2]$} \\
Let $G_0 = \emptyset$ be the set of arms declared as good and $B_0=\emptyset$ the set of arms declared as bad. \\
Let $\A = [n]$ be the active set, $N_i=0$ track the total number of samples of arm $i$ by the Good Filter.\\
Let $t = 0$ denote the total number of times that line $19$ is true in the Good Filter.\\
Let $C_{\delta/2n}(t)$ be an anytime $\delta/2n$-correct confidence width on $t$ samples.\\
Let $H_{\text{ME}}(n, \epsilon, \kappa) = \lceil c'\frac{n}{\epsilon^2}\log(1/\kappa)\rceil$ be the complexity of \texttt{MedianElimination}.\\
\textbf{for $k=1,2,\cdots$}
\begin{quoting}[vskip=0pt, leftmargin=10pt]
    Let $\delta_k = \delta/2k^2$, $\tau_k = \left\lceil2^{2k+3}\log \left(\frac{8n}{\delta_k}\right) \right\rceil$, Initialize $G_k = G_{k-1}$ and $B_k = B_{k-1}$ \\
    % Let $\tau_k = \left\lceil2^{2k+3}\log \left(\frac{4}{\delta_k}\right) \right\rceil$ \\
    % Let $n_k = |\A|$\\
    \textbf{// Bad Filter: find bad arms in \blake{$G_\epsilon^c$} or \blue{$M_\epsilon^c$}}\\
    Let $i_k = \texttt{MedianElimination}(\nu, 2^{-k}, 1/16)$, sample $i_k$ $\tau_k$ times, and compute $\hat{\mu}_{i_k}$\\
    \textbf{for $i \notin G_{k-1} \cup B_{k-1}$:}
    \begin{quoting}[vskip=0pt, leftmargin=10pt]
        Sample $\mu_i$ $\tau_k$ times and compute $\hat{\mu}_{i}$\\
        \textbf{If} \blake{$\hat{\mu}_{i_k} - \hat{\mu}_{i}\geq \epsilon + 2^{-k+1}$} or \blue{$(1-\epsilon)\hat{\mu}_{i_k} - \hat{\mu}_i > 2^{-(k+1)}(2-\epsilon)$}\textbf{:}
        \begin{quoting}[vskip=0pt, leftmargin=10pt]
            Add $i$ to $B_k$
        \end{quoting}
    \end{quoting}
    \textbf{// Good Filter: find good arms in  \blake{$G_\epsilon$} or \blue{$M_\epsilon$}}\\
    \textbf{for $s=1, \cdots, H_{\text{ME}}(n, 2^{-k}, 1/16) + \tau_k\cdot (|(G_{k-1}\cup B_{k-1})^c| + 1)$:}
    \begin{quoting}[vskip=0pt, leftmargin=10pt]
        Pull arm $ I_s \in \arg\min_{j\in \A}\{N_j\}$ and set $N_{I_s} \leftarrow N_{I_s} + 1$.\\% updating $S_{I_s}$ and $N_{I_s}$\\
    \textbf{if}$\ \min_{j\in \A}\{N_j\} = \max_{j\in \A}\{N_j\}$\textbf{:}
    \begin{quoting}[vskip=0pt, leftmargin=10pt]
        $t = t+1$\\
        For $i\in \A$ denote $\hat{\mu}_i(t)$ the average of the first $t$ samples of arm $i$.\\ 
        Let \blake{$U_t = \max_{j\in \A} \hat{\mu}_i(t) + C_{\delta/2n}(t) - \epsilon$} or \blue{$U_t = (1-\epsilon)\left(\max_{j\in \A} \hat{\mu}_i(t) + C_{\delta/2n}(t)\right)$}\\
        Let \blake{$L_t = \max_{j\in \A}\hat{\mu}_i(t) - C_{\delta/2n}(t) - \epsilon$} or \blue{$L_t = (1-\epsilon)\left(\max_{j\in \A} \hat{\mu}_i(t) - C_{\delta/2n}(t)\right)$}\\
        \textbf{for$\ i \in \A$:}
            \begin{quoting}[vskip=0pt, leftmargin=10pt]
            %\hat{\mu}_i(t) - C_{\delta/2n}(t) \geq U^\ast
                \textbf{if $\hat{\mu}_i(t) - C_{\delta/2n}(t) \geq U_t$:}
                \begin{quoting}[vskip=0pt, leftmargin=10pt]
                    Add $i$ to $G_k$
                \end{quoting}
                \textbf{if $\hat{\mu}_i(t) + C_{\delta/2n}(t) \leq L_t$:} // Bad arms are removed from $\A$
                \begin{quoting}[vskip=0pt, leftmargin=10pt]
                    Remove $i$ from $\A$
                \end{quoting}
                \textbf{if $i\in G_k \textbf{ and }\hat{\mu}_i(t) + C_{\delta/2n}(t) \leq \max_{j\in \A} \hat{\mu}(t)-C_{\delta/2n}(t)$:}  // Good arms removed
                \begin{quoting}[vskip=0pt, leftmargin=10pt]
                    Remove $i$ from $\A$
                \end{quoting}
                \textbf{If} $\A \subset G_k$ or $G_k \cup B_k = [n]$\textbf{:}
                \begin{quoting}[vskip=0pt, leftmargin=10pt]
                    \textbf{Output:} the set $G_k$ // Stopping condition for returning $G_\epsilon$ exactly.
                \end{quoting}
                % \textbf{If} \blake{$U_t - L_t < \frac{1}{2}\gamma$} or \blue{$U_t - L_t < \left(\frac{1-\epsilon}{2-\epsilon} \right)\gamma \left(\max_{j\in \A} \hat{\mu}_i(t) - C_{\delta/2n}(t) \right)$}\textbf{:}
                \textbf{If} \blake{$U_t - L_t < \frac{1}{2}\gamma$} or \blue{$U_t - L_t < \frac{\gamma }{2-\epsilon}L_t $}\textbf{:}
                \begin{quoting}[vskip=0pt, leftmargin=10pt]
                    \textbf{Output:} the set $\A \cup G_k$ // Stopping condition for $\gamma > 0$.
                \end{quoting}
            \end{quoting}
        \end{quoting}
    \end{quoting}
    % \textbf{If} $G_k \cup B_k = [n]$\textbf{:}
    % \begin{quoting}[vskip=0pt, leftmargin=10pt]
    %    \textbf{Output:} the set $G_k$
    % \end{quoting}
\end{quoting}
\end{internallinenumbers}
}}
}
\resetlinenumber[1361]

\begin{remark}
Note that the active set $\A$ defined in line $4$ of \texttt{FAREAST} is only used and updated internally by the Good Filter. In particular, it is not necessarily true that $(G_{k} \cup B_k)^c = \A$. Furthermore, a bad arm $i \in G_\epsilon^c$ maybe removed from $\A$ even though it is not in $B_k$ and vice versa as the Good Filter only seeks to detect good arms in $G_\epsilon$ and the Bad Filter only seeks to detect arms in $G_\epsilon^c$. The same is true in the multiplicative case. 
\end{remark}

\begin{remark}
It is possible that when the loop in line $17$ finishes in any given round, some arms in $\A$ have received more samples than others. Because $I_s \in \arg\min_{j\in\A}\{N_j\}$ in line $18$, this difference is no more than $1$, and the arms with fewer samples are the first to be sampled in the next round. The condition on line $19$ ensures that all arms have equal numbers of samples \emph{by the Good Filter} (e.g., the $N_i$'s) when the Good Filter identifies good arms or eliminates arms from $\A$. 
\end{remark}

% \textbf{Important practical considerations:}
% Finally, we recommend several practical changes to \texttt{FAREAST} for improved empirical performance in real world scenarios. In particular, \texttt{MedianElimination} may instead be replaced by another method, such as \texttt{LUCB}, \cite{kalyanakrishnan2012pac}, to find $\epsilon$-good arms. \texttt{LUCB}, for instance, has better constant factors and enjoys improved empirical performance versus \texttt{MedianElimination}. The use of \texttt{MedianElimination} in this algorithm serves to ease both notation and analysis since it's sample complexity is deterministic. Additionally, as opposed to using independent copies of \texttt{MedianElimination} or (another subroutine as desired) for each arm in $(G_k \cup B_k)^c$ with the same inputs, one could call \texttt{MedianElimination} once per round to find a single $i_k$ and use this to eliminate arms in $G_\epsilon^c$ or $M_\epsilon^c$. Independence eases the analysis, but is not necessary for correctness of the algorithm. 
% Furthermore, as opposed to searching over the full instance $\nu$ of $n$ arms to find a $2^{-k}$ good arm in the $k^\text{th}$ round, one can instead search over the set $G_k$, and use good arms only to remove bad arms. Finally, when the Good Filter removes bad arms from $\A$ in line $27$, it is reasonable to add them to $B_k$. This does not affect the correctness, since such declarations are correct via the union bound taken over all confidence intervals holding in the Good Filter. 

Now, we restate Theorem~\ref{thm:fareast_complex} for reference. 
\begin{theorem}
Fix $0 <\epsilon$, $0<\delta < 1/8$, slack $\gamma \in [0, 8]$ and an instance $\nu$ of $n$ arms such that $\max(\Delta_i , |\epsilon - \Delta_i|)\leq 8$ for all $i$. 
There exists an event $E$ such that $\P(E) \geq 1-\delta$, and on $E$, \texttt{FAREAST} terminates and returns $G$ such that $G_\epsilon \subset G \subset G_{\epsilon + \gamma}$ in at most
\begin{align*}
c_4&\sum_{i=1}^n
 \min\left\{\max\left\{\frac{1}{(\mu_1 - \epsilon - \mu_i)^2}\log\left(\frac{n}{\delta}\log_2\left(\frac{n}{\delta(\mu_1 - \epsilon - \mu_i)^2}\right) \right),\right.\right.\\
&\hspace{3cm} \frac{1}{(\mu_1 + \alpha_\epsilon - \mu_i)^2}\log\left(\frac{n}{\delta}\log_2\left(\frac{n}{\delta(\mu_1 + \alpha_\epsilon - \mu_i)^2}\right) \right), \\
& \hspace{3cm}\left. \frac{1}{(\mu_1 + \beta_\epsilon -\mu_i)^2}\log\left(\frac{n}{\delta}\log_2\left(\frac{n}{\delta(\mu_1 + \beta_\epsilon - \mu_i)^2}\right) \right) \right\}, \\
& \hspace{2cm}\left.\frac{1}{\gamma^2}\log\left(\frac{n}{\delta}\log_2\left(\frac{n}{\delta\gamma^2}\right) \right)\right\}
% \min\left\{\frac{1}{\gamma^2}\log\left(\frac{n}{\delta}\log_2\left(\frac{n}{\delta\gamma^2}\right) \right), \max\left\{\frac{1}{(\epsilon - \Delta_i)^2}\log\left(\frac{n}{\delta}\log_2\left(\frac{n}{\delta(\epsilon - \Delta_i)^2}\right) \right),\right.\right. \\
%     & \left.\left. \min \left[\frac{1}{\Delta_i^2}\log\left(\frac{n}{\delta}\log_2\left(\frac{n}{\delta\Delta_i^2}\right) \right), 
%     \frac{1}{\min(\alpha_\epsilon, \beta_\epsilon)^2}\log\left(\frac{n}{\delta}\log_2\left(\frac{n}{\delta\min(\alpha_\epsilon, \beta_\epsilon)^2}\right) \right)\right] \right\}\right\}
\end{align*}
samples for a constant $c_4$. Furthermore
 \begin{align*}
  \E[\1_E T] & \leq 
  c_3\sum_{i \in G_\epsilon} \max\left\{\frac{1}{(\mu_1 - \epsilon - \mu_i)^2}\log\left(\frac{n}{\delta}\log_2\left(\frac{n}{\delta(\mu_1 - \epsilon - \mu_i)^2}\right) \right), \right. \\
 & \hspace{2cm}\left.\frac{1}{(\mu_1 + \alpha_\epsilon - \mu_i)^2}\log\left(\frac{n}{\delta}\log_2\left(\frac{n}{\delta(\mu_1 + \alpha_\epsilon - \mu_i)^2}\right) \right) \right\} \\ 
    &\hspace{1cm} + c_3\sum_{i\in G_\epsilon^c}  \frac{n}{(\mu_1 - \epsilon - \mu_i)^2} + \frac{1}{(\mu_1 - \epsilon - \mu_i)^2}\log\left(\frac{n}{\delta}\log_2\left(\frac{n}{\delta(\mu_1 - \epsilon - \mu_i)^2}\right) \right)
%   c_3&\sum_{i=1}^n \max\left\{\frac{1}{(\epsilon - \Delta_i)^2}\log\left(\frac{n}{\delta}\log_2\left(\frac{n}{\delta(\epsilon - \Delta_i)^2}\right) \right), \right. \\
%  & \hspace{1cm}\left.\min \left[\frac{1}{\Delta_i^2}\log\left(\frac{n}{\delta}\log_2\left(\frac{n}{\delta\Delta_i^2}\right) \right), \frac{1}{\alpha_\epsilon^2}\log\left(\frac{n}{\delta}\log_2\left(\frac{n}{\delta\alpha_\epsilon^2}\right) \right)\right] \right\} \\ 
%     & + c_3\sum_{i\in G_\epsilon^c}  \frac{n}{(\Delta_i - \epsilon)^2}
 \end{align*}
for a sufficiently large constant $c_3$ where $T$ denotes the number of samples. 
 \end{theorem}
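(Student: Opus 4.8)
**The plan is to analyze \texttt{FAREAST} by controlling the Good Filter and the Bad Filter separately and then stitching the two analyses together, all conditioned on a single good event $E$.** First I would define the good event $E$ to be the intersection of (i) the anytime-confidence event $\cE = \bigcap_{i,t}\{|\hat\mu_i(t)-\mu_i|\le C_{\delta/2n}(t)\}$ that the Good Filter relies on, which holds with probability at least $1-\delta/2$ by a union bound over $n$ arms as in the \st2 proof, and (ii) the event that in every round $k$ the $\tau_k$-sample empirical means used by the Bad Filter on the arms $i\notin G_{k-1}\cup B_{k-1}$ and on $i_k$ are all within $2^{-k-1}$ of their true means; since $\tau_k = \lceil 2^{2k+3}\log(8n/\delta_k)\rceil$ with $\delta_k=\delta/2k^2$ and there are at most $n+1$ such arms, a Hoeffding/sub-Gaussian bound plus a union bound over $k$ gives failure probability at most $\sum_k (n+1)\cdot \delta_k/(8n) \le \delta/2$. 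Hence $\P(E)\ge 1-\delta$. Note $E$ does \emph{not} require \texttt{MedianElimination} to succeed — that only happens with probability $15/16$ each round, and the analysis handles its failures probabilistically inside the expectation bound.

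\textbf{Correctness on $E$.} I would first argue the Bad Filter never misclassifies: on $E$, if it adds $i$ to $B_k$ because $\hat\mu_{i_k}-\hat\mu_i\ge \epsilon+2^{-k+1}$, then $\mu_{i_k}-\mu_i \ge \epsilon + 2^{-k+1} - 2\cdot 2^{-k-1} = \epsilon$, so there is an arm exceeding $\mu_i$ by at least $\epsilon$, witnessing $i\in G_\epsilon^c$. For the Good Filter, the claims from the \st2 correctness step carry over verbatim (with $\delta/2n$ in place of $\delta/n$): $U_t\ge \mu_1-\epsilon$, $L_t\le \mu_1-\epsilon$, any arm whose lower bound clears $U_t$ lies in $G_\epsilon$ (when $\gamma=0$) or $G_{\epsilon+\gamma}$, and $\mu_1$ is never removed from $\A$ because an arm $i$ is only deleted from $\A$ after being placed in $G_k$ and having its upper bound fall below the max lower bound — which for $\mu_1$ is impossible on $\cE$. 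So whenever \texttt{FAREAST} stops via $\A\subset G_k$ or $G_k\cup B_k=[n]$ it returns exactly $G_\epsilon$, and via the $U_t-L_t<\gamma/2$ condition it returns a set sandwiched between $G_\epsilon$ and $G_{\epsilon+\gamma}$.

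\textbf{Sample complexity.} The high-probability bound (first displayed inequality, matching \eqref{eq:st2_complex}) I would get by noting that the Good Filter, run as a standalone elimination scheme on $\A$, terminates after the number of Good-Filter samples reaches the \st2-type bound (this is the content of the appendix on \texttt{EAST}); since each round gives the Good Filter exactly as many samples as the Bad Filter, and the Bad Filter sample count per round is a deterministic multiple of $\tau_k\cdot(|(G_{k-1}\cup B_{k-1})^c|+1) + H_{\mathrm{ME}}$, the total is at most a constant times the Good Filter's requirement, giving the claimed $\min\{\max\{\cdots\},1/\gamma^2\}$ expression. For the \emph{expectation} bound $\E[\1_E T]$ with $\gamma=0$, I would split $T$ into contributions from (a) the Good Filter, bounded deterministically on $E$ by $O(\sum_{i\in G_\epsilon}\max\{(\mu_1-\epsilon-\mu_i)^{-2},(\mu_1+\alpha_\epsilon-\mu_i)^{-2}\}\log(n/\delta))$ — this is exactly the argument sketched in the body that in some (random) round $\widehat G_r = G_\epsilon$ — plus the small additive $\sum_{i\in G_\epsilon^c}(\mu_1-\epsilon-\mu_i)^{-2}\log(n/\delta)$ overshoot for bad arms the Good Filter also samples; and (b) the Bad Filter: for each $i\in G_\epsilon^c$ with $\beta_i := \mu_1-\epsilon-\mu_i$, let $r_i = \lceil \log_2(1/\beta_i)\rceil$ so that $2^{-r}\le \beta_i/2$ roughly; conditioned on the \texttt{MedianElimination} of round $r$ succeeding (probability $15/16$, independent across rounds), arm $i$ gets removed from the Bad Filter's active list by round $r_i+O(1)$, having cost $O(\tau_{r_i}) = O(\beta_i^{-2}\log(n/\delta))$ from its own samples and sharing the per-round $H_{\mathrm{ME}} + \tau_k$ cost of $i_k$ with the other surviving arms; geometric-series summation over the $O(1)$ expected number of repeated rounds yields $\E[\text{Bad Filter cost for }i] = O(\beta_i^{-2}\log(n/\delta) + n/\beta_i^2)$, where the $n/\beta_i^2$ is the amortized \texttt{MedianElimination} $H_{\mathrm{ME}}(n,2^{-r_i},1/16)$ term. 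Summing over $i\in G_\epsilon^c$ reproduces the $c_3\sum_{i\in G_\epsilon^c}(c'' n + \log(n/\delta)\log_2(\cdots))/(\mu_1-\epsilon-\mu_i)^2$ term.

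\textbf{The main obstacle} will be handling the randomness of the round structure cleanly: the number of samples the Good Filter receives in round $k$ depends on $|(G_{k-1}\cup B_{k-1})^c|$, which is itself random through the Bad Filter's \texttt{MedianElimination} coin flips, so one must show (i) the Good Filter is never \emph{starved} — i.e., the per-round budget $H_{\mathrm{ME}}(n,2^{-k},1/16)+\tau_k(|(G_{k-1}\cup B_{k-1})^c|+1)$ is always at least $\tau_k$, which it is since $|(\cdots)^c|\ge 0$ — and (ii) that the coupling between "the round in which the Good Filter would have finished on its own" and "the round in which the Bad Filter clears the last bad arm" does not blow up the bound. The trick, already used in the \st2 analysis and which I would reuse, is that the Good Filter's stopping time is a deterministic function of the sample counts it has accumulated, so I can just bound $T$ by $3\times$ (the larger of the two filters' requirements) on $E$ and take expectations; the only genuinely probabilistic piece left is the geometric number of \texttt{MedianElimination} retries per bad arm, which is a clean independent-geometric computation. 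I would also need the mild regularity $\max(\Delta_i,|\epsilon-\Delta_i|)\le 8$ to invoke Lemma~\ref{lem:inv_conf_bounds} and convert the abstract $h(\cdot,\cdot)$ inverse-confidence function into the explicit $\log\log$ expressions in the statement, exactly as in the \st2 proof.
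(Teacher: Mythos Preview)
Your plan matches the paper's proof closely: the same good event (anytime confidence $\cE_1$ plus per-round Hoeffding for the Bad Filter), the same correctness argument, the same split of $T$ into a deterministic-on-$E$ Good-Filter piece and an expected Bad-Filter piece driven by geometrically many \texttt{MedianElimination} retries, and the same high-probability bound via the standalone \texttt{EAST} analysis. Two small deviations are worth flagging. First, the paper's event $\cE_2$ only asks for Bad-Filter concentration on arms $i\in G_\epsilon$ (that is all correctness needs); concentration for $i\in G_\epsilon^c$ is deliberately left \emph{outside} the conditioning so that Step~9 can compute $\E[\1[i\notin B_{k-1}]\mid\cE_1]$ with both the Hoeffding failure and the ME failure contributing to the $(1/8)^{k-k_i^\ast}$ decay. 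Your broader event works too, but the bookkeeping differs.

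Second, and more substantively, your proposed fix for the coupling issue --- ``bound $T$ by $3\times$ the larger of the two filters' requirements'' --- is borrowed from \st2 and does not apply to \fareast's round structure. The actual mechanism is the paper's Step~5: the round-$k$ budget $H_{\text{ME}}(n,2^{-k},1/16)+\tau_k(|(G_{k-1}\cup B_{k-1})^c|+1)$ is at most $9$ times the round-$(k-1)$ budget, because $H_{\text{ME}}$ quadruples, $\tau_k/\tau_{k-1}\le 9$, and $|(G_{k-1}\cup B_{k-1})^c|$ is non-increasing. This geometric-growth lemma is precisely what lets $\sum_{k\le K_{\text{Good}}}2(\text{budget}_k)$ be bounded by $18\sum_{t\le T_{\max}}|\A(t)|$, which then feeds into Step~4's per-arm accounting. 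Without it, the overshoot into round $K_{\text{Good}}$ that you correctly identify as the main obstacle is not controlled.
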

 
Next, we present a theorem bounding the sample complexity of \texttt{FAREAST} for returning multiplicative $\epsilon$-good arms. Recall that $\tilde{\alpha}_\epsilon := \min_{i \in M_\epsilon}\mu_i - (1-\epsilon)\mu_1$ and $\tilde{\beta}_\epsilon := \min_{i \in M_\epsilon^c}(1-\epsilon)\mu_1 - \mu_i$, the distance for the smallest good arm and best arm that is not good to the threshold $(1-\epsilon)\mu_1$. 

\begin{theorem}\label{thm:mult_fareast_complex}
Fix $\epsilon \in (0,1/2]$, $\gamma \in [0,\min(1, 6/\mu_1))$, $0<\delta < 1/8$ and an instance $\nu$ of $n$ arms such that $\max(\Delta_i, |\epsilon\mu_1 - \Delta_i|) \leq 6$. 
Assume that the highest mean is non-negative, i.e., $\mu_1 \geq 0$. 
There exists an event $E$ such that $\P(E) \geq 1-\delta$, and on $E$, \texttt{FAREAST} terminates and returns $G$ such that $M_\epsilon \subset G \subset M_{\epsilon + \gamma}$ in at most
\begin{align*}
c_5& \sum_{i=1}^n \min\left\{\max\left\{\frac{1}{((1 - \epsilon)\mu_1 - \mu_i)^2}\log\left(\frac{n}{\delta}\log_2\left(\frac{n}{\delta((1 - \epsilon)\mu_1 - \mu_i)^2}\right) \right),\right.\right.\\
&\hspace{3cm} \frac{}{(\mu_1 + \frac{\Tilde{\alpha}_\epsilon}{1-\epsilon} - \mu_i)^2}\log\left(\frac{n}{\delta}\log_2\left(\frac{n}{\delta(\mu_1 + \frac{\Tilde{\alpha}_\epsilon}{1-\epsilon})^2}\right) \right), \\
& \hspace{3cm}\left. \frac{1}{(\mu_1 + \frac{\Tilde{\beta}_\epsilon}{1-\epsilon} -\mu_i)^2}\log\left(\frac{n}{\delta}\log_2\left(\frac{n}{\delta(\mu_1 + \frac{\Tilde{\beta}_\epsilon}{1-\epsilon} - \mu_i)^2}\right) \right) \right\}, \\
& \hspace{2cm}\left.\frac{(1-\epsilon+\gamma)^2}{\gamma^2\mu_1^2}\log\left(\frac{n}{\delta}\log_2\left(\frac{(1-\epsilon+\gamma)^2n}{\delta\gamma^2\mu_1^2}\right) \right)\right\}
% c_5& \sum_{i=1}^n \min\left\{\max\left\{\frac{1}{(\epsilon\mu_1 - \Delta_i)^2}\log\left(\frac{n}{\delta}\log_2\left(\frac{n}{\delta(\epsilon\mu_1 - \Delta_i)^2}\right) \right),\right.\right.\\
% &\hspace{1cm} \left.\left.\min \left[\frac{1}{\Delta_i^2}\log\left(\frac{n}{\delta}\log_2\left(\frac{n}{\delta\Delta_i^2}\right) \right), \frac{(1-\epsilon)^2}{\min(\tilde{\alpha}_\epsilon, \tilde{\beta}_\epsilon)^2}\log\left(\frac{n}{\delta}\log_2\left(\frac{(1-\epsilon)^2n}{\delta\min(\tilde{\alpha}_\epsilon, \tilde{\beta}_\epsilon)^2}\right) \right)\right] \right\},\right.\\
% & \hspace{1cm}\left.\frac{(1-\epsilon + \gamma)^2}{\gamma^2\mu_1^2}\log\left(\frac{n}{\delta}\log_2\left(\frac{(1-\epsilon + \gamma)^2n}{\delta\gamma^2\mu_1^2}\right) \right)\right\}
\end{align*}
samples for a sufficiently large constant $c_5$. Furthermore
 \begin{align*}
  \E[\1_E T] \leq &c_6\sum_{i=1}^n \max\left\{\frac{1}{((1-\epsilon)\mu_1 - \mu_i)^2}\log\left(\frac{n}{\delta}\log_2\left(\frac{n}{\delta((1-\epsilon)\mu_1 - \mu_i)^2}\right) \right),\right.\\
    &\hspace{2.5cm}\left.\frac{1}{\left(\mu_1 + \frac{\Tilde{\alpha}_\epsilon}{1-\epsilon} - \mu_i\right)^2}\log\left(\frac{n}{\delta}\log_2\left(\frac{n}{\delta\left(\mu_1 + \frac{\Tilde{\alpha}_\epsilon}{1-\epsilon} - \mu_i\right)^2}\right) \right)\right\} \\ 
    & \hspace{1cm}+ c_6\sum_{i\in M_\epsilon^c}  \frac{n}{((1-\epsilon)\mu_1 - \mu_i)^2}
  %c_6&\sum_{i=1}^n \max\left\{\frac{1}{(\epsilon\mu_1 - \Delta_i)^2}\log\left(\frac{n}{\delta}\log_2\left(\frac{n}{\delta(\epsilon\mu_1 - \Delta_i)^2}\right) \right),\right.\\
    % &\hspace{2.5cm}\left.\min \left[\frac{1}{\Delta_i^2}\log\left(\frac{4n}{\delta}\log_2\left(\frac{n}{\delta\Delta_i^2}\right) \right),\right.\right.\\  & \hspace{3.5cm}\left.\left. \frac{(1-\epsilon)^2}{\min(\tilde{\alpha}_\epsilon, \tilde{\beta}_\epsilon)^2}\log\left(\frac{n}{\delta}\log_2\left(\frac{(1-\epsilon)^2n}{\delta\min(\tilde{\alpha}_\epsilon, \tilde{\beta}_\epsilon)^2}\right) \right)\right] \right\} \\ 
    % & \hspace{1cm}+ c_6\sum_{i\in M_\epsilon^c}  \frac{n}{(\Delta_i - \epsilon\mu_1)^2}
 \end{align*}
 for a sufficiently large constant $c_6$, where $T$ denotes the number of samples. 
\end{theorem}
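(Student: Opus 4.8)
\textbf{Proof strategy for Theorem~\ref{thm:mult_fareast_complex}.}
The plan is to mirror the analysis of the \additive case (Theorem~\ref{thm:fareast_complex}) and the \multiplicative version of \st2, combining the correctness argument for the Good Filter with the round-based sample-counting argument for both filters. First I would set up the good event $E$ on which all empirical means concentrate: define
\begin{align*}
\cE = \left\{\bigcap_{i\in[n]}\bigcap_{t\in\N} |\hat\mu_i(t)-\mu_i|\leq C_{\delta/2n}(t)\right\}
\end{align*}
which has probability at least $1-\delta/2$, and a second event capturing that every invocation of \texttt{MedianElimination} in the relevant rounds succeeds; since the $k$-th call fails with probability at most $1/16$ and we only need a bounded number of ``active'' rounds to succeed for each bad arm, a union bound with the $\delta_k=\delta/2k^2$ schedule gives total failure probability at most $\delta/2$, so $\P(E)\geq 1-\delta$. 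On $E$, correctness follows exactly as in the \st2 multiplicative proof: $U_t\geq (1-\epsilon-\gamma)\mu_1$ and $L_t\leq (1-\epsilon)\mu_1$ always hold because $\mu_1$ is never eliminated from $\A$ (the Good Filter only removes an arm declared good when its upper bound falls below the max lower bound, which cannot happen to the true best arm on $\cE$), so any arm added to $G_k$ is in $M_{\epsilon+\gamma}$ and any arm the Bad Filter rejects using the test $(1-\epsilon)\hat\mu_{i_k}-\hat\mu_i>2^{-(k+1)}(2-\epsilon)$ is in $M_\epsilon^c$ (here one uses that on $\cE$ and a successful \texttt{MedianElimination}, $\mu_{i_k}\geq \mu_1-2^{-k}$, so the test certifies $(1-\epsilon)\mu_1-\mu_i>0$ up to the $2^{-k}$ resolution).

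Next I would bound the sample complexity on $E$. The key structural fact is that in each round $k$ the Good Filter and Bad Filter receive the \emph{same} number of samples by construction (line 17's loop length is exactly $H_{\text{ME}}(n,2^{-k},1/16)+\tau_k(|(G_{k-1}\cup B_{k-1})^c|+1)$, matching what the Bad Filter spends), so the total is at most a constant times the number of Bad-Filter samples, which is at most a constant times the number of Good-Filter samples. For the Good Filter, I would reuse the elimination analysis of \texttt{EAST}/\st2 in the multiplicative regime: an arm $i$ is eliminated or declared good once $C_{\delta/2n}(T_i)$ is smaller than a constant times the relevant gap, which (via the $h(\cdot,\cdot)$ inversion lemma, Lemma~\ref{lem:inv_conf_bounds}) costs $O\big(g_i^{-2}\log(\tfrac{n}{\delta}\log_2(\tfrac{n}{\delta g_i^2}))\big)$ samples where $g_i=\max\{((1-\epsilon)\mu_1-\mu_i)^{-2},(\mu_1+\tilde\alpha_\epsilon/(1-\epsilon)-\mu_i)^{-2},(\mu_1+\tilde\beta_\epsilon/(1-\epsilon)-\mu_i)^{-2}\}^{-1/2}$, truncated at the $\gamma$-scale term $\frac{(1-\epsilon+\gamma)^2}{\gamma^2\mu_1^2}$ coming from the $U_t-L_t<\frac{\gamma}{2-\epsilon}L_t$ stopping rule (the multiplicative analog of the $1/\gamma^2$ cap). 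Because rounds double $\tau_k$ and the per-round length is geometric, the total Good-Filter budget is within a constant factor of the budget needed in the single round where $G_k=M_\epsilon$, giving the first displayed high-probability bound after a union/geometric-sum bookkeeping step.

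For the expectation bound $\E[\1_E T]$, I would handle the two filters separately. The Good Filter's contribution is deterministic on $E$ and bounded by the $\max$-term summed over $i\in M_\epsilon$ (the bad arms are eliminated by the Good Filter essentially for free once the threshold is pinned down, but to be safe one keeps the $((1-\epsilon)\mu_1-\mu_i)^{-2}\log(\cdot)$ term). For the Bad Filter, each arm $i\in M_\epsilon^c$ has a true margin $\tilde\beta_i=(1-\epsilon)\mu_1-\mu_i>0$; letting $k_i=\lceil\log_2(1/\tilde\beta_i)\rceil$ be the first round in which the guess $2^{-k}\leq\tilde\beta_i/2$ is accurate, the analysis of the ``\texttt{MedianElimination}+compare'' subroutine (as in the body's motivation, following \cite{mannor2004sample,even2002pac}) shows that conditioned on \texttt{MedianElimination} succeeding — probability $15/16$ — arm $i$ is removed in round $k_i$; since failures only restart the process an expected $O(1)$ times, the expected number of Bad-Filter samples charged to round $k_i$ is $O(\tau_{k_i}\cdot n)=O\big(n\,\tilde\beta_i^{-2}\log(n/\delta)\big)$, and since the filters are balanced the Good Filter also absorbs this $O(n\tilde\beta_i^{-2})$ in \emph{moderate confidence}, yielding the $c_6\sum_{i\in M_\epsilon^c} n/((1-\epsilon)\mu_1-\mu_i)^2$ term. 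Summing the geometric tail of rounds beyond $k_i$ only changes constants. The main obstacle I expect is the careful accounting in the expectation bound: one must show the restart-until-success loop has expected $O(1)$ iterations \emph{uniformly over the random set of still-active arms}, control the dependence between which arms remain active and the \texttt{MedianElimination} outcomes across rounds, and verify that the $\delta_k$ schedule makes the total probability-of-failure contribution to $\E[\1_{E^c}T]$ (which is excluded) and to the conditional bound both controlled — this is the step where the \multiplicative scaling (the $(2-\epsilon)$ and $\frac{1}{1-\epsilon}$ factors) must be tracked precisely to land the stated gaps rather than looser ones.
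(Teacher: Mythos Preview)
Your overall architecture is right (mirror the additive proof, split the sample count at the round $K_{\text{Good}}$ when $G_k=M_\epsilon$, and use the \texttt{EAST}/Good-Filter analysis for the high-probability bound), but there is a genuine gap in how you define the good event $E$, and it propagates to both correctness and the probability bound.

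You take the second event to be ``every invocation of \texttt{MedianElimination} in the relevant rounds succeeds'' and claim a union bound with $\delta_k=\delta/2k^2$ controls it. This cannot work: \texttt{MedianElimination} is called with \emph{constant} failure probability $1/16$, independent of $\delta$, so no union bound over rounds drives the total failure below $\delta/2$. The paper's second event $\cE_2$ is instead the Hoeffding concentration of the Bad Filter's comparison statistic,
\[
\bigcap_{i\in M_\epsilon}\bigcap_{k\in\N}\Big\{\big|((1-\epsilon)\hat\mu_{i_k}(\tau_k)-\hat\mu_i(\tau_k))-((1-\epsilon)\mu_{i_k}-\mu_i)\big|\le 2^{-(k+1)}(2-\epsilon)\Big\},
\]
and it is \emph{this} event whose failure probability is bounded by $\sum_k\delta/(4nk^2)$ via the $\tau_k$ sample size. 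Crucially, $\cE_2$ is what guarantees $B_k\subset M_\epsilon^c$: if the test $(1-\epsilon)\hat\mu_{i_k}-\hat\mu_i>2^{-(k+1)}(2-\epsilon)$ fires, then on $\cE_2$ one gets $(1-\epsilon)\mu_{i_k}-\mu_i>0$ and hence $(1-\epsilon)\mu_1-\mu_i>0$, \emph{regardless of whether \texttt{MedianElimination} returned a good arm}. Your argument that correctness of the Bad Filter requires $\mu_{i_k}\ge\mu_1-2^{-k}$ is therefore misplaced.

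The role of \texttt{MedianElimination} success is confined entirely to the expectation bound: the paper shows that for $i\in M_\epsilon^c$ and $k$ past the critical round $k_i=\lceil\log_2((2-\epsilon)/(\Delta_i-\epsilon\mu_1))\rceil$, the event $\{i\notin B_k\}$ has probability at most $(1/8)^{k-k_i}$ conditionally on $\cE_1$, combining the $15/16$ success probability with the Hoeffding tail. This geometric decay is what makes $\sum_k\E[\1[i\notin B_{k-1}]\mid\cE_1](2\tau_k+H_{\text{ME}}(n,2^{-k},1/16))$ finite. Also, a small accounting slip: the per-arm charge in that sum is $2\tau_k+H_{\text{ME}}$, not $\tau_k\cdot n$; the $n$ in the moderate-confidence term $n/((1-\epsilon)\mu_1-\mu_i)^2$ comes from $H_{\text{ME}}(n,2^{-k},1/16)\asymp n\,2^{2k}$, and it carries no $\log(n/\delta)$ factor, matching the statement. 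Finally, note that the Good Filter's $U_t$ in \fareast\ has no $\gamma$ (unlike \st2), so the correct invariant is $U_t\ge(1-\epsilon)\mu_1$, and the $\gamma$-slack enters only through the separate stopping rule $U_t-L_t<\frac{\gamma}{2-\epsilon}L_t$.
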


\subsection{Key ideas of the proof}
The proof revolves around a central idea: there is an event in unknown round $K_{\text{Good}}$ in which the final arm from $G_\epsilon$ or $M_\epsilon$ is added to $G_k$. We may split the total number of samples drawn as the number taken through round $K_{\text{Good}}$ and the number taken from $K_{\text{Good}} + 1$ until termination if the algorithm does not terminate in round $K_{\text{Good}}$. Note that the Good filter and Bad filter are given the same number of samples in each round. The proof of \texttt{FAREAST} in the multiplicative regime is similar and deferred to Appendix \ref{sec:mult_fareast_proof}. 

We begin by bounding the number of samples given to the Good filter when this event occurs that $G_k = G_\epsilon$. Next, since this happens at a random time within round $K_{\text{Good}}$, we bound the total number of additional samples in this round. Collectively, this gives us control over the number of samples drawn through round $K_{\text{Good}}$. 

Next, we bound the number of samples from $K_{\text{Good}} + 1$ until termination. To do so, we analyze the expected number of samples drawn by the Bad filter before all arms in $G_\epsilon^c$ have been added to $B_k$.
The total number of samples from $K_{\text{Good}} + 1$ until termination is no worse than twice this value. The proof is split into $12$ steps and logically are organized as follows:  

\begin{enumerate}
    \item Step 0: We show that $G_k \subset G_\epsilon$ and $B_k \subset G_\epsilon^c$. In particular, this is implies that $G_k \cup B_k = [n] \implies G_k= G_\epsilon$ so \texttt{FAREAST} terminates correctly. 
    \item Step 1: We split the total number of samples drawn by \texttt{FAREAST} into two sums that we will control individually. 
    \item Steps 2-4: We control the number of samples given to the Good filter before $G_k = G_\epsilon$. 
    \item Steps 5-6: Using the result of steps 2-4, we bound the total number of samples through round $K_{\text{Good}}$
    \item Steps 7-8: We use the result of step 6 to bound the total \emph{expected} number of samples drawn by \texttt{FAREAST}, simplifying slightly in the process. 
    \item Step 9: We bound the number of samples that the Bad filter draws in adding a single bad arm to $B_k$. 
    \item Step 10: Repeating the argument in step 9, for every $i \in G_\epsilon^c$, we bound the total number of samples from round $K_\text{Good} + 1$ until termination. We finish by combining the bound on the number of samples drawn through $K_\text{Good}$ with the bound from $K_\text{Good} + 1$ until termination. This controls the expected sample complexity of \texttt{FAREAST}. 
    \item Step 11: We provide a high probability bound on the sample complexity of \texttt{FAREAST}.  
\end{enumerate}

\subsection{Proof of Theorem~\ref{thm:fareast_complex}, \fareast in the \additive regime}\label{sec:fareast_proof}
\begin{proof}
\textbf{Notation for the proof: }Throughout, recall $\Delta_i = \mu_1 - \mu_i$. Recall that $t$ counts the number of times the conditional in line $19$ is true. By Line $19$ of \texttt{FAREAST}, all arms in $\A$ have received $t$ samples when the loop in line $23$ is executed for the $t^\text{th}$ time. Within any round $k$, let $\A(t)$ and $G_k(t)$ denote the sets $\A$ and $G_k$ at this time since both sets can change in lines $27$ and $29$ and $25$ respectively. Let $t_k$ denote the maximum value of $t$ in round $k$.
By Lines $18$ and $19$ of \texttt{FAREAST}, the total number of samples given to the good filter when the conditional in line $19$ is true for the $t^\text{th}$ time is $\sum_{s=1}^t|\A(s)|$.

For $i \in G_\epsilon$, let $T_i$ denote the random variable of the number of times arm $i$ is sampled by the good filter before it is added to $G_k$ in Line $25$. 
%Let $T_{\max} = \max_{i \in G_\epsilon}T_i$. 
For $i \in G_\epsilon^c$, let $T_i$ denote the random variable of the number of times arm $i$ is sampled by the good filter before it is removed from $\A$ in Line $27$. For any arm $i$, let $T_i'$ denote the random variable of the of the number of times $i$ is sampled by the good filter before $\hat{\mu}_i(t) + C_{\delta/2n}(t) \leq \max_{j\in \A} \hat{\mu}_j(t) - C_{\delta/2n}(t)$. Lastly, let $T_\gamma$ denote the random variable of the number of times any arm is sampled by the good filter before $U_t - L_t < \gamma/2$. 
\newline

%Let $\mu_i{s}$ denote the empirical mean of the first $s$ samples from 
%$\hat{\mu}(s)$ denote the empirical mean taken after $s$ samples of a mean $\mu$ random variable. 

Define the event
\begin{align*}
    \cE_1 = \left\{\bigcap_{i \in [n]}\bigcap_{t \in \N} |\hat{\mu}_i(t) - \mu_i| \leq C_{\delta/2n}(t)\right\}.
\end{align*}
Using standard anytime confidence bound results, and recalling that that $C_\delta(t) := \sqrt{\frac{4\log(\log_2(2t)/\delta)}{t}}$, we have
\begin{align*}
    \P(\cE_1^c) & = \P\left( \bigcup_{i \in [n]}\bigcup_{t \in \N} |\hat{\mu}_i - \mu_i| > C_{\delta/2n}(t)\right) \\
    & \leq \sum_{i=1}^n\P\left( \bigcup_{t \in \N} |\hat{\mu}_i - \mu_i| > C_{\delta/2n}(t)\right) 
    \leq \sum_{i=1}^n\frac{\delta}{2n} 
    = \frac{\delta}{2}
\end{align*}
Next, recall that $\hat{\mu}_i(t)$ denotes the empirical average of $t$ samples of $\rho_i$. Consider the event, 
\begin{align*}
    \cE_2 = \bigcap_{i\in G_{\epsilon}}\bigcap_{k\in \N} \left|\left(\hat{\mu}_{i_k}\left(\tau_k\right) - \hat{\mu}_i\left(\tau_k\right)\right) - (\mu_{i_k} - \mu_i)\right| \leq 2^{-k}
\end{align*}

By Hoeffding's inequality,
$$\P\left(\left|\left(\hat{\mu}_{j}\left(\tau_k\right) - \hat{\mu}_i\left(\tau_k\right)\right) - (\mu_{j} - \mu_i)\right| > 2^{-k} \big| i_k = j\right) \leq \frac{\delta}{4nk^2}.$$
Then
\begin{align*}
    \P \big(\left|\left(\hat{\mu}_{j}\left(\tau_k\right) - \hat{\mu}_i\left(\tau_k\right)\right) - (\mu_{j} - \mu_i)\right| & > 2^{-k}\big) \\
    & =  \sum_{j=1}^n\P\left(\left|\left(\hat{\mu}_{j}\left(\tau_k\right) - \hat{\mu}_i\left(\tau_k\right)\right) - (\mu_{j} - \mu_i)\right| > 2^{-k} \big| i_k = j\right) \P(i_k = j) \\
    & \leq \frac{\delta}{4nk^2} \sum_{j=1}^n \P(i_k = j) \\
    & = \frac{\delta}{4nk^2}
\end{align*}

Therefore, union bounding over the rounds $k \in \N$, $\P(\cE_2^c) \leq \sum_{i\in G_{\epsilon}}\sum_{k=1}^{\infty} \frac{\delta}{4nk^2}\leq \frac{\delta}{2}$. Hence, $\P\left(\cE_1 \cap \cE_2 \right) \geq 1 - \delta$.

\subsubsection{Step 0: Correctness.} On $\cE_1\cap \cE_2$, first we prove that if there exists a random round $k$ at which $G_k \cup B_k = [n]$ then $G_k = G_\epsilon$. Additionally, we prove that on $\cE_1\cap \cE_2$, if $\A \subset G_k$, then $G_k = G_\epsilon$.
Therefore, for either stopping condition for \texttt{FAREAST} in line $31$, on the event $\cE_1 \cap \cE_2$, \texttt{FAREAST} correctly returns the set $G_\epsilon$. 

% \blue{We show that $G_k \subset G_\epsilon$ and $B_k \subset G_\epsilon^c$. Thus, if the algorithm terminates (when $G_k \cup B_k = [n]$, it returns $G_\epsilon$}

\textbf{Claim 0:} On $\cE_1 \cap \cE_2$, for all $k \in \N$, $G_k \subset G_\epsilon$.

\noindent\textbf{Proof.} 
% \blue{Part 1: $G_k \subset G_\epsilon$. Part a) the best arm is never removed. Part b) using this, $G_k \subset G_\epsilon$}
Firstly we show $1 \in \A$ for all $t\in \N$, namely the best arm is never removed from $\A$. Note for any $i$
$$\hat\mu_1 + C_{\delta/2n}(t) \geq \mu_1\geq \mu_i\geq \hat\mu_i(t) - C_{\delta/2n}(t) > \hat\mu_i(t) - C_{\delta/2n}(t)-\epsilon.$$
In particular this shows, $\hat\mu_1 + C_{\delta/2n}(t) >  \max_{i\in \A} \hat\mu_i(t) - C_{\delta/2n}(t)-\epsilon = L_t$ and $\hat\mu_1 + C_{\delta/2n}(t) \geq \max_{i\in \A} \hat\mu_i(t) - C_{\delta/2n}(t)$ showing that $1$ will never exit $\A$ in line 28. 

% For all $j$, on event $\cE_1$, 
% $\hat{\mu}_1(t) + C_{\delta/2n}(t)\geq \mu_1 \geq \mu_j\geq \hat{\mu}_j(t) - C_{\delta/2n}(t)$. Therefore, $\hat{\mu}_1(t) + C_{\delta/2n}(t) \geq \max_j\hat{\mu}_j(t) - C_{\delta/2n}(t).$ 
% Which implies that 
% $$\hat{\mu}_1(t) - C_{\delta/2n}(t) \geq \max_i \hat{\mu}_i(t) + C_{\delta/2n}(t) - \epsilon = U_t^\ast $$
% and 
% $$\hat{\mu}_1(t) + C_{\delta/2n}(t) \leq \max_j\hat{\mu}_j(t) - C_{\delta/2n}(t)$$

% which is equal to 
% $$\left(\hat{\mu}_1(t) - C_{\delta/2n}(t) \geq U_t^\ast \text{ and } \hat{\mu}_1(t) + C_{\delta/2n}(t) \leq L_t + \epsilon\right)$$
% is false  for all $t$. 

% Additionally, on the event $\cE_1$, for all $t \in \N$, $\mu_1 - \epsilon \in [L_t(t), U_t]$ since 
% $$U_t = \max_i \hat{\mu}_i(t) + C_{\delta/2n}(t) - \epsilon \geq \hat{\mu}_1(t) + C_{\delta/2n}(t) -\epsilon \geq \mu_1 - \epsilon$$
% \noindent and for any $i$,
% $$\mu_1 - \epsilon \geq \mu_i - \epsilon \geq  \hat{\mu}_i(t) - C_{\delta/2n}(t) - \epsilon $$
% Hence  $ \mu_1-\epsilon \geq \max_i \hat{\mu}_i(t) - C_{\delta/2n}(t) - \epsilon = L_t$. This implies that the boolean $\left(\hat{\mu}_1(t) + C_{\delta/2n}(t) < L_t^\ast\right)$ is also false for all $t$. 
% Jointly this implies that $1$ is never removed from $\A$. 
% \newline 
%%%%%%%%%%%%%%%%%%%%%%%%%%
Secondly, we show that at all times $t$, $\mu_1 - \epsilon\in [L_t, U_t]$. By the above, since $\mu_1$ never leaves $\A$,  
$$U_t = \max_{i\in \A} \hat{\mu}_i(t) + C_{\delta/2n}(t) - \epsilon \geq \hat{\mu}_1(t) + C_{\delta/2n}(t) -\epsilon \geq \mu_1 - \epsilon $$
and for any $i$,
$$\mu_1 - \epsilon \geq \mu_i - \epsilon \geq  \hat{\mu}_i(t) - C_{\delta/2n}(t) - \epsilon $$
Hence  $ \mu_1-\epsilon \geq \max_i \hat{\mu}_i(t) - C_{\delta/2n}(t) - \epsilon = L_t$. %This implies that the boolean $\left(\hat{\mu}_1(t) + C_{\delta/2n}(t) < L_t\right)$ is also false for all $t$. 

Next, we show that $G_k\subset G_\epsilon$ for all $k\geq 1, t\geq 1 $. Suppose not. Then $\exists, k, t \in N$ and $\exists i \in G_\epsilon^c \cap G_k(t)$ such that, 
\begin{equation*}
\mu_i \geq \hat{\mu}_i(t) - C_{\delta/2n}(t)\geq U_t \geq \mu_1 - \epsilon > \mu_i,
\end{equation*}
with the last inequality following from the previous assertion, giving a contradiction. 
\qed 

% \begin{align*}
%     \hat{\mu}_1(t) + C_{\delta/2n}(t) 
%      \stackrel{\cE_1}{\geq} \mu_1
%      \stackrel{\cE_1}{\geq} \max_j \ell_j(t).
% \end{align*}
% which shows that 
% First, note that on $\cE_1$, $G_k \subset G_\epsilon \ \forall \ k$. 

% Indeed we have that $1 \in \A$ for all $t\in \N$ 

% \noindent\textbf{Claim 1.} 

% \noindent 
% Otherwise $\exists j \neq 1: \ell_j > \mu_1 \geq \mu_j \implies \cE_1^c$. 

% \noindent\textbf{Claim 2.} $\cE_1 \implies \frac{S_i}{t} + C_{\delta/2n}(t) < L_t(t) \implies i \in G_\epsilon^c$

% Suppose $\exists i \in G_\epsilon$ and $t \in \N$ such that $u_i(t) = \frac{S_i}{t} + C_{\delta/2n}(t) < L_t(t)$, then
% \begin{equation*}
% u_i(t) < L_t \stackrel{\cE_1}{\implies} u_i(t) < \mu_1 - \epsilon \implies  u_i(t) < \mu_i \stackrel{i \in G_\epsilon}{\implies} \cE_1^c.
% \end{equation*}

% \noindent \textbf{Claim 3:} $\A \subset G_k \overset{\cE_1}{\implies} G_k = G_\epsilon$

% \noindent By Claim $1$, $G_k\subset G_{\epsilon}$ for all $k\geq 1$. Suppose  $\exists i \in G_\epsilon \backslash G_k$ and $\A \subset G_k$. Then 
% $$i\in G_\epsilon \backslash \A \implies (i \notin \A \text{ and } i \notin G_k) \implies \exists t: \frac{S_i}{t} + C_{\delta/2n}(t) < L_t(t) \stackrel{\cE_1, \text{Claim }2}{\implies} i \in G_\epsilon^c$$
% \noindent which is a contradiction. 
% \newline 

\textbf{Claim 1:} On $\cE_1 \cap \cE_2$, for all $k \in \N$, $B_k \subset G_\epsilon^c$.

\textbf{Proof. }
Next, we show $B_k\subset G_{\epsilon}^c$. Suppose not. Either a good arm was added to the bad set by the bad filter or by the good filter. First, consider the case, that the bad filter added an arm in $G_\epsilon$ to $B_k$ for some $k$. 
By definition, $B_0 = \emptyset$ and $B_{k-1} \subset B_k$ for all $k$. Then there must exist $k \in \N$ and an $i \in G_\epsilon$ such that $i \in B_k$ and $i \notin B_{k-1}$. Following line $14$ of the algorithm, this occurs if and only if 
$$\hat{\mu}_{i_k} - \hat{\mu}_{i}\geq \epsilon + 2^{-k+1}.$$
On the event $\cE_2$, the above implies $$\mu_{i_k} - \mu_{i} + 2^{-k}\geq \epsilon + 2^{-k+1},$$ and simplifying, we see that $\epsilon + 2^{-k} \leq \mu_{i_k} - \mu_{i} \leq \mu_1 - \mu_{i}$ which contradicts the assertion that $i \in G_\epsilon$.

Next, consider the case that the good filter incorrectly adds a good arm $i\in G_\epsilon$ to $B_k$ in some round $k$. Then there must be a $t \in \N$ such that. 
$$\mu_i \stackrel{\cE_1}{\leq }\hat{\mu}_i + C_{\delta/2n}(t) < L_t \stackrel{\cE_1}{\leq } \mu_1 - \epsilon $$
which contradicts $i \in G_\epsilon$. Hence, in both cases $B_k \subset G_\epsilon^c$ for all $k$.
\qed
% \begin{align*}
% i \in B_k \backslash B_{k-1} & \iff \hat{\mu}_{i_k} - \hat{\mu}_{i}\geq \epsilon + 2^{-k+1} \\
% & \stackrel{\cE_{3}}{\implies} \mu_{i_k} - \mu_{i} + 2^{-k}\geq \epsilon + 2^{-k+1} \\
% & \iff \mu_{i_k} - \mu_{i} \geq \epsilon + 2^{-k}\\
% & \implies \mu_1 - \mu_{i} \geq \epsilon + 2^{-k}\\
% & \implies i \notin G_\epsilon
% \end{align*}
% which is a contradiction. 
Combining the above claims, we see that $ \cE_1 \cap \cE_2$ implies $(G_k \cup B_k = [n])\text{ and } G_k\cap B_k=\emptyset \implies G_k = G_\epsilon$. Since $\P(\cE_1 \cap \cE_2) \geq 1-\delta$, if \texttt{FAREAST} terminates,  with probability at least $1-\delta$, it correctly returns the set $G_\epsilon$. 

% \textbf{Claim 2:} On $\cE_1 \cap \cE_2$, if there is a $t$ such that $\A(t) \subset G_k(t)$, then $G_k(t) = G_\epsilon$.

% \textbf{Proof.} By Claim $1$, $G_k\subset G_{\epsilon}$ for all $k\geq 1$. Suppose  $\exists i \in G_\epsilon \backslash G_k(t)$ and $\A(t) \subset G_k(t)$. By line $27$, of \texttt{FAREAST}, 
% there exists a $t' \leq t$ such that $\hat{\mu}_i(t') + C_{\delta/2n}(t') < L_{t'}$. Therefore, on $\cE_1$,
% $\mu_i \leq \hat{\mu}_i(t') + C_{\delta/2n}(t') < L_{t'} \stackrel{\cE_1}{<} \mu_1 - \epsilon$ which contradicts $i \in G_\epsilon$. 
% \qed 

\textbf{Claim 2:} Next, we show that on $\cE_1$, $G_\epsilon \subset \A(t) \cup G(t)$ for all $t \in \N$. 

In particular this implies that if $\A \subset G$, then $G_\epsilon \subset G$. Combining this with the previous claim gives $G \subset G_\epsilon \subset G$, hence $G = G_\epsilon$. On this condition, \texttt{FAREAST} terminates by line $33$ and returns the set $\A \cup G = G$. Note that by definition, $G_\epsilon \subset G_{(\epsilon + \gamma)}$ for all $\gamma \geq 0$. 
Therefore \texttt{FAREAST} terminates correctly on this condition. 

\textbf{Proof. }
Suppose for contradiction that there exists $i \in G_\epsilon$ such that $i \notin \A(t) \cup G(t)$. This occurs only if $i$ is eliminated in line $28$. Hence, there exists a $t' \leq t$ such that $\hat{\mu_i}(t') + C_{\delta/n}(t') < L_{t'}$. Therefore, on the event $\cE_1$, 
$$ \mu_1 - \epsilon \stackrel{\cE_1}{\geq} L_{t'} = \max_{j \in \A}\hat{\mu}_j(t') - C_{\delta/n}(t') - \epsilon > \hat{\mu_i}(t') + C_{\delta/n}(t') \stackrel{\cE_1}{\geq} \mu_i$$
which contradicts $i \in G_\epsilon$. 
\qed

\textbf{Claim 3:} Finally, we show that  on $\cE_1$, if $U_t - L_t \leq \gamma/2$, then $\A \cup G \subset G_{(\epsilon+\gamma)}$. 

Combining with Claim $3$ that $G_\epsilon \subset \A \cup G$, if \texttt{FAREAST} terminates on this condition by line $33$, it does so correctly and returns all arms in $G_\epsilon$. 

\textbf{Proof.} Assume $U_t - L_t \leq \gamma/2$. Since all arms in $\A(t)$ have received exactly $t$ samples, this implies that
$$(\max_{i \in A(t)} \hat{\mu}_i(t) + C_{\delta/n}(t) - \epsilon) -  (\max_{i \in A(t)} \hat{\mu}_i(t) - C_{\delta/n}(t) - \epsilon) = 2C_{\delta/n}(t) \leq \gamma/2.$$
Suppose for contradiction that there exists $i \in G_{(\epsilon+\gamma)}^c$ such that $i \in \A \cup G$. Since $G_\epsilon \cap G_{(\epsilon+\gamma)}^c = \emptyset$ and we have previously shown than $G(t) \subset G_\epsilon$ for all $t$, we have that $i \in A \backslash G$. Therefore, by the condition in line 27, $\hat{\mu}_i(t) + C_{\delta/n}(t)  \geq L_t$. Hence, 
$\mu_i + 2C_{\delta/n}(t) \stackrel{\cE_1}{\geq} \hat{\mu}_i(t) + C_{\delta/n}(t)  \geq L_t.$ 
% otherwise arm $i$ would have been eliminated and $i \notin \A(t)$. 
By assumption, we have that $U_t - \gamma/2 \leq L_t$, and the event $\cE_1$ implies that $U_t \geq \mu_1 - \epsilon$. Therefore, 
$\mu_i + 2C_{\delta/n}(t) \geq U_t - \gamma/2 \geq \mu_1 - \epsilon - \gamma/2.$
% Additionally, $i\in G_{(\epsilon+\gamma)}^c$ implies that $\mu_1 - (\epsilon+\gamma) - \mu_i > 0$.
Combining this with the inequality $2C_{\delta/n} \leq \gamma/2$, we have that 
$$\gamma \geq  2C_{\delta/n}(t) + \gamma/2 \geq \mu_1 - \epsilon - \mu_i \stackrel{i \in G_{(\epsilon+\gamma)}^c}{>} \gamma$$
which is a contradiction. 
\qed

\subsubsection{Step 1: An expression for the total number of samples drawn and introducing several helper random variables}

Next, we write an expression for the total number of samples drawn by \texttt{FAREAST}. In particular, we introduce two sums that we will spend the remainder of the proof controlling. Additionally, we show that the conditional in line $19$ in the good filter is true at least once in each round. Based on this, we more precisely define the random variables $T_i$ and $T_i'$ introduces in the notation section in subsection \ref{sec:fareast_proof}. Additionally, we introduce the time $T_\gamma$ at which $U_t - L_t < \frac{1}{2}\gamma$.

Recall that the largest value of $t$ in round $k$ is denoted $t_k$.
Let $E_{k}^\gamma$ be the event that $U_t - L_t \geq \gamma/2$ for all $t$ in round $k$:
$$E_{k}^\gamma := \{U_t - L_t \geq \gamma/2: t \in (t_{k-1}, t_k]\}.$$
Note that if $E_{k-1}^\gamma$ is false, then \texttt{FAREAST} terminates in round $k-1$ by line $33$. 
We may write the total number of samples drawn by the algorithm as 
\begin{align*}
    T = \sum_{k=1}^{\infty}&2\1\left[\A \not\subset G_{k-1} \text{ and } G_{k-1}\cup B_{k-1} \neq [n] \text{ and } E_{k-1}^\gamma\right]
    \\&\hspace{1cm}\left(H_{\text{ME}}(n, 2^{-k}, 1/16) + \tau_k + \tau_k |(G_{k-1} \cup B_{k-1})^c|\right)
\end{align*}

Deterministically, $\1\left[\A \not\subset G_{k-1} \text{ and } G_{k-1}\cup B_{k-1} \neq [n] \text{ and } E_{k-1}^\gamma\right] \leq \1\left[G_{k-1}\cup B_{k-1} \neq [n]\right]$

% Deterministically $\1[G_{k-1} \cup B_{k-1} \neq [n]] |(G_{k-1} \cup B_{k-1})^c| =\1[(G_{k-1} \cup B_{k-1})^c \neq \emptyset] |(G_{k-1} \cup B_{k-1})^c| = |(G_{k-1} \cup B_{k-1})^c|$ hence
% \[\1[\A \not\subset G_{k-1} \text{ and } G_k\cup B_k \neq [n] \text{ and } E_{k-1}^\gamma]|(G_{k-1} \cup B_{k-1})^c| =  \1[\A \not\subset G_{k-1} \text{ and } E_{k-1}^\gamma]|(G_{k-1} \cup B_{k-1})^c|\]

Applying this, 
\begin{align}
    T & \leq \sum_{k=1}^\infty 2\1\left[G_{k-1}\cup B_{k-1} \neq [n]\right] \left(H_{\text{ME}}(n, 2^{-k}, 1/16) + \tau_k + \tau_k |(G_{k-1} \cup B_{k-1})^c|\right) \nonumber \\
    & = \sum_{k=1}^\infty 2\1\left[G_{k-1} \neq G_\epsilon\right]\1\left[G_{k-1}\cup B_{k-1} \neq [n]\right]  \left(H_{\text{ME}}(n, 2^{-k}, 1/16) + \tau_k + \tau_k |(G_{k-1} \cup B_{k-1})^c|\right) \label{eq:first_sum}\\
    & + \sum_{k=1}^\infty 2\1\left[G_{k-1} = G_\epsilon\right]\1\left[G_{k-1}\cup B_{k-1} \neq [n]\right]  \left(H_{\text{ME}}(n, 2^{-k}, 1/16) + \tau_k + \tau_k |(G_{k-1} \cup B_{k-1})^c|\right) \label{eq:second_sum}
\end{align}

In round $k$, line $18$ of the Good Filter, whereby an arm is sampled, is evaluated
$$\left(H_{\text{ME}}(n, 2^{-k}, 1/16) + \tau_k + \tau_k |(G_{k-1} \cup B_{k-1})^c|\right) \geq \left(H_{\text{ME}}(n, 2^{-k}, 1/16) + 2\tau_k\right) \geq n$$ 
times since $H_\text{ME}(n, 2^{-k}, 1/16)) \geq n$ for all $k$ and $|(G_{k-1} \cup B_{k-1})^c| \geq 1$ unless $G_{k-1} \cup B_{k-1} = [n]$ which implies termination in round $k-1$. Each time line $18$ is called, $N_{I_s}\leftarrow N_{I_s} + 1$. Since $|\arg\min_{j\in \A}\{N_j\}|\leq |\A| \leq n$, line $18$ is called at most $n$ times before $\min_{j\in\A}\{N_j\} = \max_{j\in\A}\{N_j\}$. When this occurs, the conditional in line $19$ is true and $t \leftarrow t +1$. 

If $\min_{i\in\A(t)}\{N_i\} = \max_{i\in\A(t)}\{N_i\}$, then $N_i = t$ for any $i \in \A(t)$.  
By Step $0$, only arms in $G_\epsilon$ are added to $G_k$. Therefore, $T_i$ is defined as
\begin{equation}
T_i = \min\left\{t : \!\begin{aligned}
&i\in G_k(t+1) &\text{ if } i \in G_\epsilon\\
&i\notin \A(t+1) &\text{ if } i \in G_\epsilon^c
\end{aligned}\right\}
\overset{\cE_1}{=}\min\left\{t : \!\begin{aligned}
&\hat{\mu}_i - C_{\delta/2n}(t) \geq U_t &\text{ if } i \in G_\epsilon\\[1ex]
&\hat{\mu}_i + C_{\delta/2n}(t) \leq L_t &\text{ if } i \in G_\epsilon^c
\end{aligned}\right\}
\end{equation}
Define $T_i = \infty$ if this never occurs. Note that this may happen if \texttt{FAREAST} terminates due to the conditition in line $32$ that $U_t - L_t < \gamma/2$. 
Similarly, recall $T_i'$ denotes the random variable of the of the number of times $i$ is sampled before $\hat{\mu}_i(t) + C_{\delta/2n}(t) \leq \max_{j\in \A} \hat{\mu}_j(t) - C_{\delta/2n}(t)$. Hence, 
\begin{equation}
    T_i' =  \min \left\{t: \hat{\mu}_i(t) + C_{\delta/2n}(t) \leq \max_{j \in \A(t)}\hat{\mu}_j(t) - C_{\delta/2n}(t)\right\}
\end{equation}
Define $T_i' = \infty$ if this never occurs. Note that this may happen if \texttt{FAREAST} terminates due to the conditition in line $32$ that $U_t - L_t < \gamma/2$. 
Finally, we define the time $T_\gamma$ such that $U_t - L_t <\frac{1}{2}\gamma$.
\begin{equation}
    T_\gamma =  \min \left\{t: U_t - L_t < \frac{1}{2}\gamma\right\}
\end{equation}
By design, no arm is sampled more that $T_\gamma$ times by the good filter, controlling the cases that $T_i$ or $T_i'$ are infinite. 

\subsubsection{Step 2: Bounding $T_i$ and $T_i'$ for $i \in G_\epsilon$}

\noindent \textbf{Step 2a:} For $i\in G_{\epsilon}$, we have that $T_i \leq h(0.25(\epsilon-\Delta_i), \delta/2n)$. 

\noindent\textbf{Proof.}
Note that, $4C_{\delta/2n}(t) \leq \mu_i - (\mu_1 - \epsilon)$, true when $t > h\left(0.25(\epsilon - \Delta_i), \frac{\delta}{2n} \right)$, implies that for all $j$,
\begin{align*}
\hat{\mu}_i(t) - C_{\delta/2n}(t) &\overset{\cE_1}{\geq} \mu_i - 2 C_{\delta/2n}(t) \\
&\geq \mu_1 + 2C_{\delta/2n}(t) - \epsilon \\
&\geq \mu_j + 2C_{\delta/2n}(t) - \epsilon \\
&\overset{\cE_1}{\geq} \hat{\mu}_j(t) + C_{\delta/2n}(t) - \epsilon 
\end{align*}
so in particular, $\hat{\mu}_i(t) - C_{\delta/2n}(t)\geq \max_{j \in \A}\hat{\mu}_j(t) + C_{\delta/2n}(t) - \epsilon = U_t$.
\qed

Additionally, we define a time $T_{\max}$ when all good arms have entered $G_k$.

\noindent\textbf{Step 2b:} Defining   $T_{\max} := \min \{t: G_k(t) = G_{\epsilon}\} = \max_{i\in G_\epsilon} T_i$, we also have that  $T_{\max} \leq h(0.25\alpha_\epsilon, \delta/2n)$ (in other words, if $t > h(0.25\alpha_\epsilon, \delta/2n)$ (i.e. line 23 has been run $t$ times), then we have that $G_k(t) = G_\epsilon$). 

\noindent\textbf{Proof.} Recall that $\alpha_\epsilon  = \min_{i\in G_\epsilon} \mu_i - \mu_1 + \epsilon = \min_{i\in G_\epsilon} \epsilon - \Delta_i$. 
By Step $1a$, $T_i \leq h\left(0.25(\epsilon - \Delta_i), \frac{\delta}{2n} \right)$. Furthermore, $h(\cdot, \cdot)$ is monotonic in its first argument, such that if $0 < x' < x$, then $h(x', \delta) > h(x, \delta)$ for any $\delta > 0$.
Therefore $T_{\max} = \max_{i\in G_\epsilon} T_i \leq \max_{i\in G_\epsilon} h\left(0.25(\epsilon - \Delta_i), \frac{\delta}{2n} \right) = h\left(0.25\alpha_\epsilon, \frac{\delta}{2n} \right)$. \qed 
\newline 

\noindent \textbf{Step 2c:} For $i\in G_{\epsilon}$, we have that $T'_i \leq h(0.25\Delta_i, \delta/2n)$.

\noindent\textbf{Proof.}
Note that $4C_{\delta/2n}(t) \leq \mu_1 - \mu_i$, true when $t > h\left(0.25\Delta_i, \frac{\delta}{2n} \right)$, implies that
\begin{align*}
\hat{\mu}_i(t) + C_{\delta/2n}(t) 
&\overset{\cE_1}{\leq} \mu_i + 2C_{\delta/2n}(t) \\
& \leq \mu_1 - 2 C_{\delta/2n}(t) \\
&\overset{\cE_1}{\leq} \hat{\mu}_1(t) - C_{\delta/2n}(t).
\end{align*}
As shown in Step $0$, $1 \in \A(t)$ for all $t\in \N$, and in particular $\hat{\mu}_1(t) \leq \max_{i \in \A(t)} \hat{\mu}_i(t)$.
Hence, $\hat{\mu}_i(t) + C_{\delta/2n}(t) \leq \max_{j \in \A(t)} \hat{\mu}_j(t) - C_{\delta/2n}(t)$.
\qed

\subsubsection{Step 3: Bounding $T_i$ for $i \in G_\epsilon^c$}

Next, we bound $T_i$ for $i \in G_\epsilon^c$.  $i \in G_\epsilon^c$ is eliminated from $\A$ if it has received at least $T_i$ samples. 

\textbf{Claim:} $T_i \leq h\left(0.25(\epsilon - \Delta_i), \frac{\delta}{2n} \right)$ for $i\in G_{\epsilon}^c$

\noindent \textbf{Proof.} 
Note that, $4C_{\delta/2n}(t) \leq \mu_1 - \epsilon - \mu_i$, true when $t > h\left(0.25(\epsilon - \Delta_i), \frac{\delta}{2n} \right)$, implies that
\begin{align*}
\hat{\mu}_i(t) + C_{\delta/2n}(t) &\overset{\cE_1}{\leq} \mu_i + 2 C_{\delta/2n}(t) \\
&\leq \mu_1 - 2C_{\delta/2n}(t) - \epsilon \\
&\overset{\cE_1}{\leq} \hat{\mu}_1(t) - C_{\delta/2n}(t) - \epsilon 
\end{align*}
As shown in Step $0$, $1 \in \A(t)$ for all $t\in \N$, and in particular $\hat{\mu}_1(t) \leq \max_{i \in \A(t)} \hat{\mu}_i(t)$.
Therefore $\hat{\mu}_i(t) + C_{\delta/2n}(t)\leq \max_{j \in \A}\hat{\mu}_j(t) - C_{\delta/2n}(t) - \epsilon = L_t$.
\qed
\newline 

\subsubsection{Step 4: bounding the total number of samples given to the good filter at time $t = T_{\max}$}

Note that for a time $t = T$, the total number of samples given to the good filter is $\sum_{s=1}^T|\A(s)|$. Therefore, the total number of samples up to time $T_{\max}$ is $\sum_{t=1}^{T_{\max}}|\A(t)|$. 

Let $S_i = \min\{t: i\not\in A(t+1)\}$. 
% By Line $24$ of \texttt{FAREAST}, if $t \geq \max(T_i, T_i')$, then $i \notin \A(t+1)$. For $i$ in $G_\epsilon^c$, conditioned on $\cE_1$, by Step $1$, Claim $1$, $T_i \geq T_i'$. Step $2$ bounds this maximum. 
Hence,
\begin{align*}
    \sum_{t=1}^{T_{\max}}|\A(t)| =
    \sum_{t=1}^{T_{\max}} \sum_{i=1}^n \1[i \in \A(t)] 
    = \sum_{i=1}^n\sum_{t=1}^{T_{\max}} \1[i \in \A(t)]
    = \sum_{i=1}^n \min\left\{T_{\max},  S_i\right\}
    %= \sum_{i=1}^n \min\left\{T_{\max},  \max(T_i, T_i')\right\}
\end{align*}
%For $i\in G_{\epsilon}$, they only leave $A(t)$ by the conditions in line 24. hence $S_i \leq $
For arms $i\in  G_{\epsilon}^c$, $S_i = T_i$ by definition. For $i\in G_{\epsilon}$, $S_i = \max(T_i, T_i')$ by line $28$ of the algorithm. Then

\begin{align*}
\sum_{i=1}^n\min\left\{T_{\max},  S_i\right\}
    & = \sum_{i\in G_\epsilon}\min\left\{\T_{\max},  \max(T_i, T_i')\right\} + \sum_{i\in G_\epsilon^c}\min\left\{T_{\max},  T_i\right\}\\
    & \leq \sum_{i\in G_\epsilon}\min\left\{T_{\max},  \max(T_i, T_i')\right\} 
    + |G_\epsilon^c\cap G_{\epsilon+\alpha_\epsilon}|T_{\max}
    + \sum_{i\in G_{\epsilon+\alpha_\epsilon}^c}T_i\\
    %& \stackrel{(a)}{=} \sum_{i\in G_\epsilon}\max\left\{\min(T_{\max}, T_i),  \min(T_{\max}, T_i')\right\} + |G_\epsilon^c\cap G_{\epsilon+\alpha_\epsilon}|T_{\max}
    %+ \sum_{i\in G_{\epsilon+\alpha_\epsilon}^c}T_i\\
    & = \sum_{i\in G_\epsilon}\max\left\{T_i,  \min(T_i', T_{\max})\right\} 
    + |G_\epsilon^c\cap G_{\epsilon+\alpha_\epsilon}|T_{\max}
    + \sum_{i\in G_{\epsilon+\alpha_\epsilon}^c}T_i\\
    & \stackrel{(a)}{\leq} \sum_{i \in G_\epsilon} \max\left\{h\left(0.25(\epsilon - \Delta_i), \frac{\delta}{2n} \right), \min \left[h\left(0.25\Delta_i, \frac{\delta}{2n} \right), h\left(0.25\alpha_\epsilon, \frac{\delta}{2n} \right)\right] \right\} \\ 
    & \hspace{1cm} +  \sum_{i\in G_{\epsilon+\alpha_\epsilon}^c} h\left(0.25(\epsilon - \Delta_i), \frac{\delta}{2n} \right) +  |G_\epsilon^c\cap G_{\epsilon+\alpha_\epsilon}|h\left(0.25\alpha_\epsilon, \frac{\delta}{2n} \right).
\end{align*}
%Equality $(a)$ follows from $\min[\max(a,b), c] = \max[\min(a,b), \min(b,c)]$. 
Equality $(a)$ follows from $T_{\max} \leq h\left(0.25\alpha_\epsilon, \frac{\delta}{2n} \right)$ by Step $1b$, $T_i \leq h\left(0.25(\epsilon - \Delta_i), \frac{\delta}{2n} \right)$ in Steps 2a and 3, and $T_i' \leq h\left(0.25\Delta_i, \frac{\delta}{2n} \right)$ in Step 2c.

\subsubsection{Step 5: Bounding the number of samples in round $k$ versus $k-1$}

Now we show that the total number of samples taken in round $k$ is no more than $9$ times the number taken in the previous round.  

\textbf{Claim:} For $k > 1$
\begin{align*}
    \left(H_{\text{ME}}(n, 2^{-k}, 1/16) + \tau_k + \tau_k |(G_{k-1} \cup B_{k-1})^c|\right) \\
    \leq 9 \left(H_{\text{ME}}(n, 2^{-k+1}, 1/16) + \tau_{k-1} + \tau_{k-1} |(G_{k-2} \cup B_{k-2})^c|\right)
\end{align*}

\noindent\textbf{Proof.} In round $k$, 
$\left(H_{\text{ME}}(n, 2^{-k}, 1/16) + \tau_k + \tau_k |(G_{k-1} \cup B_{k-1})^c|\right)$
samples are drawn. Since $G_{k-1} \subset G_k$ and $B_{k-1} \subset B_k \ \forall k$ deterministically, we see that 
$|(G_{k-1} \cup B_{k-1})^c| \geq |(G_{k} \cup B_{k})^c| \ \forall k.$
By definition,\\ $H_{\text{ME}}(n, 2^{-k-1}, 1/16) = 4H_{\text{ME}}(n, 2^{-k}, 1/16).$

Next, recall $\tau_k = \left\lceil2^{2k+3}\log \left(\frac{8}{\delta_k}\right) \right\rceil$. We bound $\tau_k/\tau_{k-1}$ as
\begin{align*}
    \frac{\tau_{k}}{\tau_{k-1}} & = \frac{\left\lceil2^{2k+3}\log \left(\frac{8}{\delta_{k}}\right) \right\rceil}{\left\lceil2^{2k+1}\log \left(\frac{8}{\delta_{k-1}}\right) \right\rceil} 
     = \frac{\left\lceil2^{2k+3}\log \left(\frac{16nk^2}{\delta}\right) \right\rceil}{\left\lceil2^{2k+1}\log \left(\frac{16n(k-1)^2}{\delta}\right) \right\rceil} \\
     &\leq \frac{2^{2k+3}\log \left(\frac{16nk^2}{\delta}\right) + 1}{2^{2k+1}\log \left(\frac{16n(k-1)^2}{\delta}\right)}
     \leq \frac{4\log \left(\frac{16nk^2}{\delta}\right)}{\log \left(\frac{16n(k-1)^2}{\delta}\right)} + 1\\
    & \leq 4\frac{\log \left(\frac{16n}{\delta}\right) + 2\log(k)}{\log \left(\frac{16n}{\delta}\right) + 2\log(k-1)} + 1 = (\ast)
\end{align*}
If $k$ = 2, $(\ast) \leq 1 + 4*\log(32)/\log(8) \leq 9$. Otherwise, 
\begin{align*}
    (\ast) & = \frac{4(\log \left(\frac{16n}{\delta}\right) + 2\log(k))}{\log \left(\frac{16n}{\delta}\right) + 2\log(k-1)} + 1\\
    & \leq \frac{4\log(k)}{\log(k-1)} + 1 \\
    &\leq 4\cdot 2 + 1 = 9
\end{align*}
Putting these pieces together, 
\begin{align*}
    & \left(H_{\text{ME}}(n, 2^{-k}, 1/16) + \tau_k + \tau_k |(G_{k-1} \cup B_{k-1})^c|\right) \\
    & \leq \left(4H_{\text{ME}}(n, 2^{-k+1}, 1/16) + 9\tau_{k-1} + 9\tau_{k-1} |(G_{k-2} \cup B_{k-2})^c|\right) \\
    & \leq 9 \left(H_{\text{ME}}(n, 2^{-k+1}, 1/16) + \tau_{k-1} + \tau_{k-1} |(G_{k-2} \cup B_{k-2})^c|\right)
\end{align*}
\qed
\newline

\subsubsection{Step 6: Bounding Equation \eqref{eq:first_sum}}

Here, we introduce the round $K_\text{Good}$, when $G_{K_\text{Good}} = G_\epsilon$ at some point within the round. Using the result of the previous step, we may bound the total number of samples taken though this round, controlling Equation \eqref{eq:first_sum}.

With the result of Step 5, we prove the following inequality. 

\textbf{Claim:}
\begin{align}
& \sum_{k=1}^\infty 2\1\left[G_{k-1} \neq G_\epsilon\right]\1\left[G_{k-1}\cup B_{k-1} \neq [n]\right] \left(H_{\text{ME}}(n, 2^{-k}, 1/16) + \tau_k + \tau_k |(G_{k-1} \cup B_{k-1})^c|\right)\\
& \leq c\sum_{i \in G_\epsilon} \max\left\{h\left(0.25(\epsilon - \Delta_i), \frac{\delta}{2n} \right), \min \left[h\left(0.25\Delta_i, \frac{\delta}{2n} \right), h\left(0.25\alpha_\epsilon, \frac{\delta}{2n} \right)\right] \right\} \nonumber \\ 
& \hspace{1cm} + c|G_\epsilon^c\cap G_{\epsilon+\alpha_\epsilon}|h\left(0.25\alpha_\epsilon, \frac{\delta}{2n} \right) +  c\sum_{i\in G_{\epsilon+\alpha_\epsilon}^c} h\left(0.25(\epsilon - \Delta_i), \frac{\delta}{2n} \right) \nonumber
\end{align}
for a constant $c$. 

\noindent\textbf{Proof.} 
Recall $t_k = \max\{t: t\in k\}$ denotes the maximum value of $t$ in round $k$ and $T_{\max} = \max_{\in G_\epsilon}T_i$ denotes the minimum $t$ such that $G_k(t) = G_\epsilon$. Define the random round \[K_{\text{Good}} :=  \min\{k: G_k = G_\epsilon\}= \min\{k: t_k \geq T_{\max}\}\]

\noindent By definition of $K_\text{Good}$,
\begin{align*}
    & \sum_{k=1}^\infty 2\1[G_{k-1} \neq G_\epsilon]\1\left[G_{k-1}\cup B_{k-1} \neq [n]\right] \left(H_{\text{ME}}(n, 2^{-k}, 1/16) + \tau_k + \tau_k |(G_{k-1} \cup B_{k-1})^c|\right) \\
    & \hspace{1cm} = \sum_{k=1}^{K_\text{Good}} 2\1\left[G_{k-1}\cup B_{k-1} \neq [n]\right] \left(H_{\text{ME}}(n, 2^{-k}, 1/16) + \tau_k + \tau_k |(G_{k-1} \cup B_{k-1})^c|\right).
\end{align*}
Next, applying Step $5$, if $K_{\text{Good}} > 1$,
\begin{align*}
    & \sum_{k=1}^{K_\text{Good}} 2\1\left[G_{k-1}\cup B_{k-1} \neq [n]\right] \left(H_{\text{ME}}(n, 2^{-k}, 1/16) + \tau_k + \tau_k |(G_{k-1} \cup B_{k-1})^c|\right) \\
    & \hspace{1cm}\leq 18\sum_{k=1}^{K_\text{Good} - 1} \1\left[G_{k-1}\cup B_{k-1} \neq [n]\right] \left(H_{\text{ME}}(n, 2^{-k}, 1/16) + \tau_k + \tau_k |(G_{k-1} \cup B_{k-1})^c|\right).
\end{align*}
Observe that by lines $17$ and $20$ of \texttt{FAREAST}, for any round $r$ and for any $t > t_{r-1}$,
\begin{align*}
    \sum_{k=1}^{r-1} \1\left[G_{k-1}\cup B_{k-1} \neq [n]\right] \left(H_{\text{ME}}(n, 2^{-k}, 1/16) + \tau_k + \tau_k |(G_{k-1} \cup B_{k-1})^c|\right) 
    \leq \sum_{s=1}^t |\A(s)|.
\end{align*}
By definition, for the round $K_\text{Good} - 1$, we see that $t_{(K_{\text{Good}} - 1)} < T_{\max}$. Applying the above inequality with the inequality proven in Step $4$,
\begin{align*}
    & 18\sum_{k=1}^{K_\text{Good} - 1} \1\left[G_{k-1}\cup B_{k-1} \neq [n]\right] \left(H_{\text{ME}}(n, 2^{-k}, 1/16) + \tau_k + \tau_k |(G_{k-1} \cup B_{k-1})^c|\right)
    \leq 18\sum_{s=1}^{T_{\max}}|\A(s)|\\
 & \hspace{1cm} \leq 18\sum_{i \in G_\epsilon} \max\left\{h\left(0.25(\epsilon - \Delta_i), \frac{\delta}{2n} \right), \min \left[h\left(0.25\Delta_i, \frac{\delta}{2n} \right), h\left(0.25\alpha_\epsilon, \frac{\delta}{2n} \right)\right] \right\} \\ 
    & \hspace{2cm} +  18\sum_{i\in G_{\epsilon+\alpha_\epsilon}^c} h\left(0.25(\epsilon - \Delta_i), \frac{\delta}{2n} \right) +  18|G_\epsilon^c\cap G_{\epsilon+\alpha_\epsilon}|h\left(0.25\alpha_\epsilon, \frac{\delta}{2n} \right).
\end{align*}

Otherwise, if $K_{\text{Good}} = 1$, exactly $4c'n\log(16) + 32n\log(16n/\delta)$ samples are given to the good filter in round $1$. One may use Lemma~\ref{lem:inv_conf_bounds} to invert $h(\cdot, \cdot)$ and show that the summation on the right had side of the above inequality is within a constant of this and the claim holds in this case as well for a different constant, potentially larger than $18$. 
\qed

\subsubsection{Step 7: Bounding Equation \eqref{eq:second_sum}}

Next, we bound $\sum_{k=1}^\infty 2\1\left[G_{k-1} = G_\epsilon\right]\1\left[G_{k-1}\cup B_{k-1} \neq [n]\right] \left(H_{\text{ME}}(n, 2^{-k}, 1/16) + \tau_k + \tau_k |(G_{k-1} \cup B_{k-1})^c|\right)$.

\begin{align}
    & \sum_{k=1}^\infty 2\1\left[G_{k-1} = G_\epsilon\right]\1\left[G_{k-1}\cup B_{k-1} \neq [n]\right]\left(H_{\text{ME}}(n, 2^{-k}, 1/16) + \tau_k + \tau_k |(G_{k-1} \cup B_{k-1})^c|\right) \nonumber\\
    & = \sum_{k=1}^\infty 2\1\left[G_{k-1} = G_\epsilon\right]\1\left[G_{k-1}\cup B_{k-1} \neq [n]\right]\left(H_{\text{ME}}(n, 2^{-k}, 1/16) + \tau_k + \tau_k |(G_\epsilon \cup B_{k-1})^c|\right) \nonumber\\
    & = \sum_{k=1}^\infty 2\1\left[G_{k-1} = G_\epsilon\right]\1\left[G_{k-1}\cup B_{k-1} \neq [n]\right]\left(H_{\text{ME}}(n, 2^{-k}, 1/16) + \tau_k + \tau_k |G_\epsilon^c \backslash B_{k-1}|\right) \nonumber\\
    & = \sum_{k=K_\text{Good}+1}^\infty 2\1\left[G_{k-1}\cup B_{k-1} \neq [n]\right]\left(H_{\text{ME}}(n, 2^{-k}, 1/16) + \tau_k + \tau_k |G_\epsilon^c \backslash B_{k-1}|\right) \nonumber\\
    & \stackrel{\cE_1, \cE_2}{=} \sum_{k=K_\text{Good}+1}^\infty 2\1\left[B_{k-1} \neq G_\epsilon^c\right]\left(H_{\text{ME}}(n, 2^{-k}, 1/16) + \tau_k + \tau_k |G_\epsilon^c \backslash B_{k-1}|\right) \nonumber\\
    & = \sum_{k=K_\text{Good}+1}^\infty 2\1\left[B_{k-1} \neq G_\epsilon^c\right]\left(H_{\text{ME}}(n, 2^{-k}, 1/16) + \tau_k\right) + \sum_{k=K_\text{Good}+1}^\infty 2\1\left[B_{k-1} \neq G_\epsilon^c\right]\left(\tau_k |G_\epsilon^c \backslash B_{k-1}|\right) \nonumber\\
    & = \sum_{k=K_\text{Good}+1}^\infty 2\1\left[B_{k-1} \neq G_\epsilon^c\right]\left(H_{\text{ME}}(n, 2^{-k}, 1/16) + \tau_k\right) + \sum_{k=K_\text{Good}+1}^\infty 2\tau_k |G_\epsilon^c \backslash B_{k-1}| \nonumber\\
    & = \sum_{k=K_\text{Good}+1}^\infty 2\1\left[B_{k-1} \neq G_\epsilon^c\right]\left(H_{\text{ME}}(n, 2^{-k}, 1/16) + \tau_k\right) + \sum_{k=K_\text{Good}+1}^\infty\sum_{i\in G_\epsilon^c} 2\tau_k \1[i \notin B_{k-1}] \nonumber\\
    & \leq \sum_{k=K_\text{Good}+1}^\infty 2|G_\epsilon^c \backslash B_{k-1} |\left(H_{\text{ME}}(n, 2^{-k}, 1/16) + \tau_k\right) + \sum_{k=K_\text{Good}+1}^\infty\sum_{i\in G_\epsilon^c} 2\tau_k \1[i \notin B_{k-1}] \nonumber\\
    & = \sum_{k=K_\text{Good}+1}^\infty\sum_{i \in G_\epsilon^c} 2 \1[i \notin B_{k-1}]\left(H_{\text{ME}}(n, 2^{-k}, 1/16) + \tau_k\right) + \sum_{k=K_\text{Good}+1}^\infty\sum_{i\in G_\epsilon^c} 2\tau_k \1[i \notin B_{k-1}] \nonumber\\
    & = \sum_{k=K_\text{Good}+1}^\infty \sum_{i\in G_\epsilon^c} 2\1[i \notin B_{k-1}] \left(2\tau_k + H_{\text{ME}}(n, 2^{-k}, 1/16)\right) \nonumber\\
    & =  \sum_{i\in G_\epsilon^c}\sum_{k=K_\text{Good}+1}^\infty 2\1[i \notin B_{k-1}] \left(2\tau_k + H_{\text{ME}}(n, 2^{-k}, 1/16)\right) \nonumber \\
    & \leq \sum_{i\in G_\epsilon^c} \sum_{k=1}^\infty 2\1[i \notin B_{k-1}] \left(2\tau_k + H_{\text{ME}}(n, 2^{-k}, 1/16)\right) \label{eq:third_sum}
\end{align}

\subsubsection{Step 8: Bounding the expected total number of samples drawn by \texttt{FAREAST}}

Now we take expectations over the number of samples drawn. These expectations are conditional on the high probability event $\cE_1 \cap \cE_2$. The bound in step 5 holds deterministically conditioned on this event.

Note $\tau_k$ and $H_{\text{ME}}(n, 2^{-k}, 1/16)$ are deterministic constants for any $k$. Let all expectations are be jointly over the random instance $\nu$ and the randomness in \texttt{FAREAST}. 
%% Consider the filtration $\mathcal{F}_t = \sigma(a_1, z_1, \cdots, a_t, z_t)$ where $a_t$ denotes the outcome of the arm sampled at time $t$ and $z_t$ denotes the state of the algorithm at time $t'$. 
\begin{align*}
    \E[T  | \1[\cE_1 & \cap \cE_2] = 1]  \leq\\ &\hspace{-1cm} \sum_{k=1}^{\infty}2\E\left[\1[G_k\cup B_k \neq [n]]\big| \1[\cE_1 \cap \cE_2] = 1\right] \left(\tau_k + H_{\text{ME}}(n, 2^{-k}, 1/16) + \tau_k\big|(G_{k-1} \cup B_{k-1})^c\big|\right)\\
    & = \sum_{k=1}^\infty 2\E\left[\1\left[G_{k-1} \neq G_\epsilon\right]\1[G_{k-1} \cup B_{k-1} \neq [n]] \big|\1[\cE_1 \cap \cE_2] = 1\right] \\
    &\hspace{3cm}  \left(\tau_k + H_{\text{ME}}(n, 2^{-k}, 1/16) + \tau_k\big|(G_{k-1} \cup B_{k-1})^c\big|\right) \\
    &\hspace{1cm} + \sum_{k=1}^\infty 2\E\left[\1\left[G_{k-1} = G_\epsilon\right] \1[G_{k-1} \cup B_{k-1} \neq [n]]\big|\1[\cE_1 \cap \cE_2] = 1\right] \\
    &\hspace{3cm} \left(\tau_k + H_{\text{ME}}(n, 2^{-k}, 1/16) + \tau_k\big|(G_{k-1} \cup B_{k-1})^c\big|\right)\\
    & \stackrel{\text{Step }6}{\leq} c\sum_{i \in G_\epsilon} \max\left\{h\left(0.25(\epsilon - \Delta_i), \frac{\delta}{2n} \right), \min \left[h\left(0.25\Delta_i, \frac{\delta}{2n} \right), h\left(0.25\alpha_\epsilon, \frac{\delta}{2n} \right)\right] \right\} \\ 
    & \hspace{1cm} +  c\sum_{i\in G_{\epsilon+\alpha_\epsilon}^c} h\left(0.25(\epsilon - \Delta_i), \frac{\delta}{2n} \right) +  c|G_\epsilon^c\cap G_{\epsilon+\alpha_\epsilon}|h\left(0.25\alpha_\epsilon, \frac{\delta}{2n} \right)\\
    &\hspace{1cm} + \sum_{k=1}^\infty 2\E\left[\1\left[G_{k-1} = G_\epsilon\right] \1[G_{k-1} \cup B_{k-1} \neq [n]]\big|\1[\cE_1 \cap \cE_2] = 1\right] \\
    &\hspace{3cm} \left(\tau_k + H_{\text{ME}}(n, 2^{-k}, 1/16) + \tau_k\big|(G_{k-1} \cup B_{k-1})^c\big|\right)\\
    & \stackrel{\text{Step }7}{\leq} c\sum_{i \in G_\epsilon} \max\left\{h\left(0.25(\epsilon - \Delta_i), \frac{\delta}{2n} \right), \min \left[h\left(0.25\Delta_i, \frac{\delta}{2n} \right), h\left(0.25\alpha_\epsilon, \frac{\delta}{2n} \right)\right] \right\} \\ 
    & \hspace{1cm} +  c\sum_{i\in G_{\epsilon+\alpha_\epsilon}^c} h\left(0.25(\epsilon - \Delta_i), \frac{\delta}{2n} \right) +  c|G_\epsilon^c\cap G_{\epsilon+\alpha_\epsilon}|h\left(0.25\alpha_\epsilon, \frac{\delta}{2n} \right)\\
    &\hspace{1cm} + \sum_{i\in G_\epsilon^c} \sum_{k=1}^\infty 2\E_\nu\left[\1[i \notin B_{k-1}]\big|\1[\cE_1 \cap \cE_2] = 1\right] \left(2\tau_k + H_{\text{ME}}(n, 2^{-k}, 1/16)\right) \\
    & \stackrel{(a)}{=} c\sum_{i \in G_\epsilon} \max\left\{h\left(0.25(\epsilon - \Delta_i), \frac{\delta}{2n} \right), \min \left[h\left(0.25\Delta_i, \frac{\delta}{2n} \right), h\left(0.25\alpha_\epsilon, \frac{\delta}{2n} \right)\right] \right\} \\ 
    & \hspace{1cm} +  c\sum_{i\in G_{\epsilon+\alpha_\epsilon}^c} h\left(0.25(\epsilon - \Delta_i), \frac{\delta}{2n} \right) +  c|G_\epsilon^c\cap G_{\epsilon+\alpha_\epsilon}|h\left(0.25\alpha_\epsilon, \frac{\delta}{2n} \right)\\
    &\hspace{1cm} + \sum_{i\in G_\epsilon^c} \sum_{k=1}^\infty 2\E_\nu\left[\1[i \notin B_{k-1}]\big| \cE_1\right] \left(2\tau_k + H_{\text{ME}}(n, 2^{-k}, 1/16)\right) 
\end{align*}
where $(a)$ follows from $\E_\nu\left[\1[i \notin B_{k-1}]\big|\cE_1 \cap \cE_2\right] = \E_\nu\left[\1[i \notin B_{k-1}] \big| \cE_1\right]$ for $i \in G_\epsilon^c$, since the event $\{i \in B_{k-1}\}$ is independent of $\cE_2$ for all $i \in G_\epsilon^c$. This can be observed since $\cE_2$ deals only with independent samples taken of arms in $G_\epsilon$.
% and $\cE_1$ governs samples taken by the filter for arms in $G_\epsilon$$. 
% \\

% \blue{The independence piece used in equality $(a)$ of the above is important. In particular, this implies that we may bound the expected number of samples drawn by the bad filter for bad arms independent of the behavior of the good filter and good arms given to the bad filter. } \\

% \noindent\textbf{Thinking about expectations} Note that $\cE_1$ and $\cE_2$ are independent. 
% $\E[\cdot] = \E[\cdot|\cE_1]$\\

\subsubsection{Step 9: Bounding $\sum_{k=1}^\infty \E_\nu\left[\1[i \notin B_{k-1}]\big| \cE_1\right] \left(2\tau_k + H_{\text{ME}}(n, 2^{-k}, 1/16)\right)$ for $i \in G_\epsilon^c$} 

Next, we bound the expectation remaining from step 8. In particular, this is the number of samples drawn by the bad filter to add arm $i \in G_\epsilon^c$ to $B_k$.

First, we bound the probability that for a given $i \in G_\epsilon^c$ and a given $k$ $i \notin B_k$. Note that by Borel-Cantelli, this implies that the probability that $i$ is never added to any $B_k$ is $0$.

\noindent\textbf{Claim 1:} For $i \in G_\epsilon^c$, $k \geq \left \lceil \log_2\left( \frac{4}{\Delta_i - \epsilon} \right) \right \rceil \implies \E_\nu\left[\1[i \notin B_{k}]\big| \cE_1\right] \leq \left(\frac{1}{8}\right)^{k - \left \lceil \log_2\left( \frac{4}{\Delta_i - \epsilon} \right) \right \rceil}$

%\noindent Condition on $i \notin B_{k-1}$. $\mu_{i_k}$ and $\mu_i$ are both sampled by $\tau_k$ times. By Hoeffding's Inequality and a union bound over possible values of $i_k$, $$\P\left(\left|\hat{\mu}_{i_k} - \mu_{i_k} \right| \geq 2^{-k-1} \right) \leq \frac{\delta}{8nk^2}$$
%\noindent and
%$$\P\left(\left|\hat{\mu}_{i} - \mu_{i} \right| \geq 2^{-k-1} \right) \leq \frac{\delta}{8n^2k^2}.$$
\textbf{Proof.}
$i \in B_k$ if either the good filter or the bad filter added it. Note that the behavior of the bad filter is independent of the event $\cE_1$.
Hence, 
\begin{align*}
    \E_\nu\left[\1[i \notin B_{k}]\big| \cE_1\right] & = \E_\nu\left[\1[\hat{\mu}_i + C_{\delta/2n}(t_k) \geq L_{t_k}]\1[\hat{\mu}_{i_k} - \hat{\mu}_{i} < \epsilon + 2^{-k+1} ]\big| \cE_1\right] \\
    & \leq \E_\nu\left[\1[\hat{\mu}_{i_k} - \hat{\mu}_{i} < \epsilon + 2^{-k+1} ]\big| \cE_1\right] \\
    & = \E_\nu\left[\1[\hat{\mu}_{i_k} - \hat{\mu}_{i} < \epsilon + 2^{-k+1} ]\right].
\end{align*}
Intuitively, the time at which an arm in $G_\epsilon^c$ enters $B_k$, which occurs if either the good filter adds it or the bad filter does, in expectation is at most the time at which the bad filter does on its own in expectation. 

If $i \in B_{k-1}$ then $i \in B_k$ by definition. Otherwise, if $i \notin B_{k-1}$, by Hoeffding's Inequality conditional on the value of $i_k$ and a sum over conditional probabilities as in step $0$, with probability at least $1 - \frac{\delta}{4nk^2}$
\begin{align*}
    \left|\left(\hat{\mu}_{i_k}- \hat{\mu}_i\right) - (\mu_{i_k} - \mu_i)\right|  
    \leq  2^{-k}
\end{align*}
If \texttt{MedianElimination} also succeeds, the joint event of which occurs with probability $\frac{15}{16}\left(1 - \frac{\delta}{4nk^2}\right)$ by independence\footnote{Note that the success of \texttt{MedianElimination} and the concentration of $\left(\hat{\mu}_{i_k}- \hat{\mu}_i\right)$ around $ (\mu_{i_k} - \mu_i)$ are independent of the events $\cE_1$ and $\cE_2$ conditioned on in Step $8$.}, 
\begin{align*}
    \hat{\mu}_{i_k} - \hat{\mu}_i \geq  \mu_{i_k} - \mu_i - 2^{-k} \geq \mu_1 - \mu_i  - 2^{-k+1} = \Delta_i - 2^{-k + 1}.  
\end{align*}
Then for $k \geq \left \lceil \log_2\left( \frac{4}{\Delta_i - \epsilon} \right) \right \rceil$, 
\begin{align*}
    \hat{\mu}_{i_k} - \hat{\mu}_i 
     \geq \Delta_i - 2^{-k + 1} \geq \frac{1}{2}(\Delta_i + \epsilon) \geq \epsilon + 2^{-k+1},
\end{align*}
which implies that $i \in B_{k}$ by line $15$ of \texttt{FAREAST}. In particular, $\E\left[\1[\hat{\mu}_{i_k} - \hat{\mu}_{i} \geq \epsilon + 2^{-k+1} ] \big|i \notin B_{k-1} \right] \geq \frac{15}{16}\left(1 - \frac{\delta}{4nk^2}\right)$. 
Furthermore, $i \notin B_0$ by definition. Additionally, recall that $\1[\hat{\mu}_{i_k} - \hat{\mu}_{i} < \epsilon + 2^{-k+1} ]$ is independent of $\cE_1$. Then for $k \geq \left \lceil \log_2\left( \frac{4}{\Delta_i - \epsilon} \right) \right \rceil$,
\begin{align*}
    \E\left[\1[\hat{\mu}_{i_k} - \hat{\mu}_{i} < \epsilon + 2^{-k+1} ]\right] & = \E\left[\1[\hat{\mu}_{i_k} - \hat{\mu}_{i} < \epsilon + 2^{-k+1} ](\1[i \notin B_{k-1}] + \1[i \in B_{k-1}]) \big| \cE_1\right] \\
    & = \E\left[\1[\hat{\mu}_{i_k} - \hat{\mu}_{i} < \epsilon + 2^{-k+1} ]\1[i \notin B_{k-1}]\big| \cE_1\right] \\
    & \hspace{1cm} + \E\left[\1[\hat{\mu}_{i_k} - \hat{\mu}_{i} < \epsilon + 2^{-k+1} ]\1[i \in B_{k-1}]\big| \cE_1\right]
\end{align*}  
Deterministically, $\1[i \notin B_{k}]\1[i \in B_{k-1}] = 0$. Therefore, 
\begin{align*}
\E & \left[\1[\hat{\mu}_{i_k} - \hat{\mu}_{i} < \epsilon + 2^{-k+1} ]\1[i \notin B_{k-1}]\big| \cE_1\right]\\ & \hspace{1cm}+ \E\left[\1[\hat{\mu}_{i_k} - \hat{\mu}_{i} < \epsilon + 2^{-k+1} ]\1[i \in B_{k-1}]\big| \cE_1\right] \\
    & =  \E\left[\1[\hat{\mu}_{i_k} - \hat{\mu}_{i} < \epsilon + 2^{-k+1} ]\1[i \notin B_{k-1}]\big| \cE_1\right] \\
    & = \E\left[\1[\hat{\mu}_{i_k} - \hat{\mu}_{i} < \epsilon + 2^{-k+1} ]\1[i \notin B_{k-1}] |i \notin B_{k-1} \cE_1\right]\P(i \notin B_{k-1}\big| \cE_1) \\
    & \hspace{1cm}+ \E\left[\1[\hat{\mu}_{i_k} - \hat{\mu}_{i} < \epsilon + 2^{-k+1} ]\1[i \notin B_{k-1}] \big|i \in B_{k-1}, \cE_1 \right]\P(i \in B_{k-1}\big| \cE_1)\\
    & = \E\left[\1[\hat{\mu}_{i_k} - \hat{\mu}_{i} < \epsilon + 2^{-k+1} ]\1[i \notin B_{k-1}] |i \notin B_{k-1}, \cE_1 \right]\P(i \notin B_{k-1}\big| \cE_1) \\
    & = \E\left[\1[\hat{\mu}_{i_k} - \hat{\mu}_{i} < \epsilon + 2^{-k+1} ] \big|i \notin B_{k-1}, \cE_1 \right]\E\left[\1[i \notin B_{k-1}]\big| \cE_1\right] \\
    & = \E\left[\1[\hat{\mu}_{i_k} - \hat{\mu}_{i} < \epsilon + 2^{-k+1} ] \big|i \notin B_{k-1}\right]\E\left[\1[i \notin B_{k-1}]\big| \cE_1\right] \\
    & \leq \left(\frac{1}{16} + \frac{\delta}{4nk^2}\right)\E\left[\1[i \notin B_{k-1}] \big| \cE_1\right] \\
    & \leq \left(\frac{1}{16} + \frac{\delta}{4nk^2}\right)\E\left[\1[\hat{\mu}_{i_k} - \hat{\mu}_{i} < \epsilon + 2^{-k+2} ]\right]
\end{align*}
where the final inequality follows by the same argument upper bounding $\E\left[\1[i \notin B_{k}]\big| \cE_1\right]$. 
For $k < \left \lceil \log_2\left( \frac{4}{\Delta_i - \epsilon} \right) \right \rceil$, trivially, $\E\left[\1[i \notin B_{k}]\right] \leq 1$.
Recall $\delta \leq 1/8$. For $k \geq \left \lceil \log_2\left( \frac{4}{\Delta_i - \epsilon} \right) \right \rceil$,
\begin{align*}
\E\left[\1[i \notin B_{k}]\big| \cE_1\right]
\leq & \prod_{s=\left \lceil \log_2\left( \frac{4}{\Delta_i - \epsilon} \right) \right \rceil}^{k} \left(\frac{1}{16} + \frac{\delta}{2ns^2}\right) 
\leq  \left(\frac{1}{8}\right)^{k - \left \lceil \log_2\left( \frac{4}{\Delta_i - \epsilon} \right) \right \rceil}.
\end{align*}
\qed

\noindent \textbf{Claim 2:} For $j \in G_\epsilon^c$,  $\sum_{k=1}^\infty 2\E_\nu\left[\1[i \notin B_{k-1}]\big|\cE_1\right] \left(2\tau_k + H_{\text{ME}}(n, 2^{-k}, 1/16)\right) \leq c'' \frac{n}{(\Delta_i - \epsilon)^2} + c''h\left(0.25(\Delta_i - \epsilon), \frac{\delta}{2n}\right) $

\textbf{Proof.}
This sum decomposes into two terms.
\begin{align*}
\sum_{k=1}^\infty & \E_\nu\left[\1[i \notin B_{k-1}]\big|\cE_1\right] \left(2\tau_k + H_{\text{ME}}(n, 2^{-k}, 1/16)\right) \\
    %&= \sum_{k=1}^\infty \E_\nu\left[\1[i \notin B_{k-1}]\right] \left(H_{\text{ME}}(n, 2^{-k}, 1/16) + 2\left\lceil 2^{2k + 3}\log \left(\frac{16nk^2}{\delta}\right) \right\rceil \right)\\
    & =\sum_{k=1}^{\left \lfloor \log_2\left( \frac{4}{\Delta_i - \epsilon} \right) \right \rfloor} \E_\nu\left[\1[i \notin B_{k-1}]\big|\cE_1\right] \left(H_{\text{ME}}(n, 2^{-k}, 1/16) + 2\left\lceil 2^{2k + 3}\log \left(\frac{16nk^2}{\delta}\right) \right\rceil \right) \\ & + \sum_{k=\left \lceil \log_2\left( \frac{4}{\Delta_i - \epsilon} \right) \right \rceil}^\infty \E_\nu\left[\1[i \notin B_{k-1}]\big|\cE_1\right] \left(H_{\text{ME}}(n, 2^{-k}, 1/16) + 2\left\lceil 2^{2k + 3}\log \left(\frac{16nk^2}{\delta}\right) \right\rceil \right)
\end{align*}
We begin by bounding the first term. 
\begin{align*}
\sum_{k=1}^{\left \lfloor \log_2\left( \frac{4}{\Delta_i - \epsilon} \right) \right \rfloor} & \E_\nu\left[\1[i \notin B_{k-1}]\right] \left(H_{\text{ME}}(n, 2^{-k}, 1/16) + 2\left\lceil 2^{2k + 3}\log \left(\frac{16nk^2}{\delta}\right) \right\rceil \right) \\ 
& \leq \sum_{k=1}^{\left \lfloor \log_2\left( \frac{4}{\Delta_i - \epsilon} \right) \right \rfloor}  \left(H_{\text{ME}}(n, 2^{-k}, 1/16) + 2\left\lceil 2^{2k + 3}\log \left(\frac{16nk^2}{\delta}\right) \right\rceil \right) \\ 
& \leq \sum_{k=1}^{\left \lfloor \log_2\left( \frac{4}{\Delta_i - \epsilon} \right) \right \rfloor}  \left(c'n2^{2k}\log(16) + 2 +  2^{2k + 4}\log \left(\frac{16nk^2}{\delta}\right)  \right) \\ 
& \leq 2\log_2\left( \frac{4}{\Delta_i - \epsilon} \right) + \left(c'n\log(16) + 16\log \left( \frac{16n}{\delta}\right)\right)\sum_{k=1}^{\left \lfloor \log_2\left( \frac{4}{\Delta_i - \epsilon} \right) \right \rfloor}  2^{2k} \\
&  \hspace{2cm} + 32\sum_{k=1}^{\left \lfloor \log_2\left( \frac{4}{\Delta_i - \epsilon} \right) \right \rfloor} 2^{2k}\log \left(k\right)   \\ 
& \leq 2\log_2\left( \frac{4}{\Delta_i - \epsilon} \right) + \left(c'n\log(16) + 16\log \left( \frac{16n}{\delta}\right) + 32\log\log_2\left( \frac{4}{\Delta_i - \epsilon} \right) \right)\sum_{k=1}^{\left \lfloor \log_2\left( \frac{4}{\Delta_i - \epsilon} \right) \right \rfloor}  2^{2k} \\
% & \leq 2\log_2\left( \frac{4}{\Delta_i - \epsilon} \right) + \left(c'n\log(16) + 16\log\left(\frac{16n^2}{\delta}\log_2\left( \frac{4}{\Delta_i - \epsilon} \right) \right)\right) 2\cdot2^{2\log_2\left( \frac{4}{\Delta_i - \epsilon} \right)} \\
& \leq 2\log_2\left( \frac{4}{\Delta_i - \epsilon} \right) + \frac{16}{(\Delta_i - \epsilon)^2}\left(c'n\log(16) + 32\log\left(\frac{16n}{\delta}\log_2\left( \frac{4}{\Delta_i - \epsilon} \right) \right)\right) 
\end{align*}

Next, we plug in the bound from claim 1 controlling the probability that $i \notin B_k$.

Using Claim $1$, we bound the second sum as follows:
\begin{align*}
    \sum_{r=\left \lceil \log_2\left( \frac{4}{\Delta_i - \epsilon} \right) \right \rceil}^\infty & \E_\nu\left[\1[i \notin B_{k-1}]\big|\cE_1\right] \left(H_{\text{ME}}(n, 2^{-k}, 1/16) + 2\left\lceil 2^{2k + 3}\log \left(\frac{16nk^2}{\delta}\right) \right\rceil \right) \\
    % & \leq \sum_{k=\left \lceil \log_2\left( \frac{4}{\Delta_i - \epsilon} \right) \right \rceil}^\infty \left(\frac{1}{8}\right)^{k - \left \lceil \log_2\left( \frac{4}{\Delta_i - \epsilon} \right) \right \rceil - 1}  \left(c'n2^{2k}\log(16) + 2\left\lceil 2^{2k + 3}\log \left(\frac{16nk^2}{\delta}\right) \right\rceil \right) \\
    & \leq \sum_{k=\left \lceil \log_2\left( \frac{4}{\Delta_i - \epsilon} \right) \right \rceil}^\infty \left(\frac{1}{8}\right)^{k - \left \lceil \log_2\left( \frac{4}{\Delta_i - \epsilon} \right) \right \rceil - 1}  \left(c'n2^{2k}\log(16) + 2 + 2^{2k+4}\log \left(\frac{16nk^2}{\delta}\right)  \right)\\
    & = c'n\log(16)\sum_{k=1}^\infty \left(\frac{1}{8}\right)^{k - 1}  2^{2\left(k + \left \lceil \log_2\left( \frac{4}{\Delta_i - \epsilon} \right) \right \rceil \right)}  + 2\sum_{k=1}^\infty \left(\frac{1}{8}\right)^{k-1} \\
    & \hspace{2cm} + 16\sum_{k=1}^\infty \left(\frac{1}{8}\right)^{k - 1}  2^{2\left(k + \left \lceil \log_2\left( \frac{4}{\Delta_i - \epsilon} \right) \right \rceil \right)}\log \left(\frac{16n\left(k + \left \lceil \log_2\left( \frac{4}{\Delta_i - \epsilon} \right) \right \rceil \right)^2}{\delta}\right)  \\
    & \leq 3 + c'n\log(16)\sum_{k=1}^\infty 2^{-3k + 3}  2^{2\left(k + \log_2\left( \frac{4}{\Delta_i - \epsilon} \right) +1 \right)}   \\
    & \hspace{2cm} + 16\sum_{k=1}^\infty 2^{-3k + 3}  2^{2\left(k + \log_2\left( \frac{4}{\Delta_i - \epsilon} \right)  +1 \right)}\log \left(\frac{16n\left(k + \left \lceil \log_2\left( \frac{4}{\Delta_i - \epsilon} \right) \right \rceil \right)^2}{\delta}\right)  \\
    & = 3 + \left(\frac{2^9c'n\log(16)}{(\Delta_i - \epsilon)^2} + \frac{2^{13}}{(\Delta_i - \epsilon)^2}\log \left( \frac{16n}{\delta}\right) \right)\sum_{k=1}^\infty 2^{-k}   \\
    & \hspace{2cm} + \frac{2^{13}}{(\Delta_i - \epsilon)^2}\sum_{k=1}^\infty 2^{-k} \log \left(\left(k + \left \lceil \log_2\left( \frac{4}{\Delta_i - \epsilon} \right) \right \rceil \right)^2\right) \\
    & \leq 3 + \frac{2^9c'n\log(16)}{(\Delta_i - \epsilon)^2} + \frac{2^{13}}{(\Delta_i - \epsilon)^2}\log \left( \frac{16n}{\delta}\right)   \\
    & \hspace{2cm} + \frac{2^{14}}{(\Delta_i - \epsilon)^2}\sum_{k=1}^\infty 2^{-k} \log \left(k + \left \lceil \log_2\left( \frac{4}{\Delta_i - \epsilon} \right) \right \rceil \right) \\
    & = (\ast \ast)
\end{align*}
We may bound the final summand, $\sum_{k=1}^\infty 2^{-k} \log \left(k + \left \lceil \log_2\left( \frac{4}{\Delta_i - \epsilon} \right) \right \rceil \right)$ as follows:
\begin{align*}
    \sum_{k=1}^\infty 2^{-k} & \log \left(k + \left \lceil \log_2\left( \frac{4}{\Delta_i - \epsilon} \right) \right \rceil \right) 
%    & = \sum_{k=1}^\infty 2^{-k} \left(\log(i) +  \log \left(1 + \frac{\left \lceil \log_2\left( \frac{4}{\Delta_i - \epsilon} \right) \right \rceil}{i} \right)\right) \\
%    & \leq \sum_{k=1}^\infty 2^{-k} \left(\log(i) +  \log \left(1 + \frac{ \log_2\left( \frac{4}{\Delta_i - \epsilon} \right) +1}{i} \right)\right) \\
%    & \leq 1 + \sum_{k=1}^\infty 2^{-k} \log \left(1 + \frac{ \log_2\left( \frac{4}{\Delta_i - \epsilon} \right) +1}{i} \right)\\
%    & \leq 1 + \log \left(2 + \log_2\left( \frac{4}{\Delta_i - \epsilon} \right) \right) \\
    \leq \log \left(\frac{e}{2}\log_2\left( \frac{256}{(\Delta_i - \epsilon)^2} \right) \right)
\end{align*}
Plugging this back into $(\ast \ast)$, we have that 
\begin{align*}
    (\ast \ast) & \leq 3 + \frac{2^9cn\log(16)}{(\Delta_i - \epsilon)^2} + \frac{2^{13}}{(\Delta_i - \epsilon)^2}\log \left( \frac{16n}{\delta}\right) + \frac{2^{14}}{(\Delta_i - \epsilon)^2}\log \left(\frac{e}{2}\log_2\left( \frac{256}{(\Delta_i - \epsilon)^2} \right) \right) 
%    & \leq c''\left(\frac{n}{(\Delta_i - \epsilon)^2} + \frac{c}{(\Delta_i - \epsilon)^2}\log\left(\frac{\delta}{n}\log_2\left( \frac{1}{(\Delta_i - \epsilon)^2}\right) \right) \right) \\
%    & =  \frac{c''n}{(\Delta_i - \epsilon)^2} + c''h\left(\Delta_i - \epsilon, \frac{\delta}{n} \right) 
\end{align*}
Combining the above with the bound on the first sum, we have that 
\begin{align*}
\sum_{k=1}^\infty & \E_\nu\left[\1[i \notin B_{k-1}]\big|\cE_1\right] \left(2\tau_k + H_{\text{ME}}(n, 2^{-k}, 1/16)\right) \\
& \leq c''\left(\frac{n}{(\Delta_i - \epsilon)^2} + \frac{c}{(\Delta_i - \epsilon)^2}\log\left(\frac{2n}{\delta}\log_2\left( \frac{4}{(\Delta_i - \epsilon)^2}\right) \right) \right) \\
 & =  \frac{c''n}{(\Delta_i - \epsilon)^2} + c''h\left(0.25(\Delta_i - \epsilon), \frac{\delta}{2n} \right) 
\end{align*}
for a sufficiently large, universal constant $c''$ and $c$ from the definition of $h(\cdot, \cdot)$. \qed

\subsubsection{Step 10: Applying the result of Step 9 to the result of Step 8}

We may repeat the result of step 9 for every $i \in G_\epsilon^c$ and plug this into the result of Step 8. From this point, we simplify to return the final result.

By Step $8$, the total number of samples $T$ drawn by \texttt{FAREAST} is bounded in expectation by
\begin{align*}
    \E[T| \cE_1 \cap \cE_2] & \leq 
 c\sum_{i \in G_\epsilon} \max\left\{h\left(0.25(\epsilon - \Delta_i), \frac{\delta}{2n} \right), \min \left[h\left(0.25\Delta_i, \frac{\delta}{2n} \right), h\left(0.25\alpha_\epsilon, \frac{\delta}{2n} \right)\right] \right\} \\ 
    & \hspace{1cm} +  c\sum_{i\in G_{\epsilon+\alpha_\epsilon}^c} h\left(0.25(\epsilon - \Delta_i), \frac{\delta}{2n} \right) +  c|G_\epsilon^c\cap G_{\epsilon+\alpha_\epsilon}|h\left(0.25\alpha_\epsilon, \frac{\delta}{2n} \right)\\
    &\hspace{1cm} + 2\sum_{i\in G_\epsilon^c} \sum_{k=1}^\infty \E_\nu\left[\1[i \notin B_{k-1}]\big|\cE_1\right] \left(2\tau_k + H_{\text{ME}}(n, 2^{-k}, 1/16)\right).
\end{align*}
Applying the bound from Step $9$ to each $i \in G_\epsilon^c$, we have that 
\begin{align*}
    \E[T | \cE_1 \cap \cE_2] & \leq 
 c\sum_{i \in G_\epsilon} \max\left\{h\left(0.25(\epsilon - \Delta_i), \frac{\delta}{2n} \right), \min \left[h\left(0.25\Delta_i, \frac{\delta}{2n} \right), h\left(0.25\alpha_\epsilon, \frac{\delta}{2n} \right)\right] \right\} \\ 
    & \hspace{1cm} +  c\sum_{i\in G_{\epsilon+\alpha_\epsilon}^c} h\left(0.25(\epsilon - \Delta_i), \frac{\delta}{2n} \right) +  c|G_\epsilon^c\cap G_{\epsilon+\alpha_\epsilon}|h\left(0.25\alpha_\epsilon, \frac{\delta}{2n} \right)\\
    &\hspace{1cm} + 2c''\sum_{i\in G_\epsilon^c}  \frac{n}{(\Delta_i - \epsilon)^2} + h\left(0.25(\Delta_i - \epsilon), \frac{\delta}{2n} \right).
\end{align*}
For $i \in G_\epsilon^c \cap G_{\epsilon+\alpha_\epsilon}$,
$\alpha_\epsilon = \min_{j\in G_\epsilon}\epsilon - \Delta_j \geq \Delta_i - \epsilon$. By monotonicity of $h(\cdot, \cdot)$,
$h\left(0.25\alpha_\epsilon, \frac{\delta}{2n} \right)\leq \frac{c''n}{(\Delta_i - \epsilon)^2} + c''h\left(\Delta_i - \epsilon, \frac{\delta}{2n} \right)$. Therefore,
\begin{align*}
    \E[T| \cE_1 \cap \cE_2] & \leq 
 c\sum_{i \in G_\epsilon} \max\left\{h\left(0.25(\epsilon - \Delta_i), \frac{\delta}{2n} \right), \min \left[h\left(0.25\Delta_i, \frac{\delta}{2n} \right), h\left(0.25\alpha_\epsilon, \frac{\delta}{2n} \right)\right] \right\} \\ 
    &\hspace{1cm} + (2c''+c)\sum_{i\in G_\epsilon^c}  \frac{n}{(\Delta_i - \epsilon)^2} + h\left(0.25(\Delta_i - \epsilon), \frac{\delta}{2n} \right).
\end{align*}
Next, we use Lemma~\ref{lem:bound_min_of_h} to bound the minimum of $h(\cdot, \cdots)$ functions. 
\begin{align*}
    & c\sum_{i \in G_\epsilon} \max\left\{h\left(0.25(\epsilon - \Delta_i), \frac{\delta}{2n} \right), \min \left[h\left(0.25\Delta_i, \frac{\delta}{2n} \right), h\left(0.25\alpha_\epsilon, \frac{\delta}{2n} \right)\right] \right\} \\ 
    &\hspace{1cm} + (2c''+c)\sum_{i\in G_\epsilon^c}  \frac{n}{(\Delta_i - \epsilon)^2} + h\left(0.25(\Delta_i - \epsilon), \frac{\delta}{2n} \right) \\
    & \leq c\sum_{i \in G_\epsilon} \max\left\{h\left(0.25(\epsilon - \Delta_i), \frac{\delta}{2n} \right), h\left(\frac{\Delta_i + \alpha_\epsilon}{8}, \frac{\delta}{2n} \right) \right\} \\ 
    &\hspace{1cm} + (2c''+c)\sum_{i\in G_\epsilon^c}  \frac{n}{(\Delta_i - \epsilon)^2} + h\left(0.25(\Delta_i - \epsilon), \frac{\delta}{2n} \right) 
\end{align*}

Finally, we use Lemma~\ref{lem:inv_conf_bounds} to bound the function $h(\cdot, \cdot)$.
Since $\delta\leq 1/2$, $\delta/n \leq 2e^{-e/2}$. Further, $\max(\Delta_i, |\epsilon - \Delta_i|) \leq 8$ for all $i$, we have that $0.25\Delta_i \leq 2$, $0.25|\epsilon - \Delta_i| \leq 2$, and $0.25\min(\alpha_\epsilon, \beta_\epsilon) \leq 2$. Therefore,
\begin{align*}
    \E[T| \cE_1 \cap \cE_2] & \leq 
 c\sum_{i \in G_\epsilon} \max\left\{h\left(0.25(\epsilon - \Delta_i), \frac{\delta}{2n} \right), h\left(\frac{\Delta_i + \alpha_\epsilon}{8}, \frac{\delta}{2n} \right) \right\} \\ 
    &\hspace{1cm} + (2c''+c)\sum_{i\in G_\epsilon^c}  \frac{n}{(\Delta_i - \epsilon)^2} + h\left(0.25(\Delta_i - \epsilon), \frac{\delta}{2n} \right)  \\
   & \leq 
 c\sum_{i \in G_\epsilon} \max\left\{\frac{64}{(\epsilon - \Delta_i)^2}\log\left(\frac{4n}{\delta}\log_2\left(\frac{384n}{\delta(\epsilon - \Delta_i)^2}\right) \right), \right. \\
 & \hspace{2cm}\left.\frac{256}{(\Delta_i + \alpha_\epsilon)^2}\log\left(\frac{4n}{\delta}\log_2\left(\frac{768n}{\delta(\Delta_i + \alpha_\epsilon)^2}\right) \right) \right\} \\ 
    &\hspace{1cm} + (2c''+c)\sum_{i\in G_\epsilon^c}  \frac{n}{(\Delta_i - \epsilon)^2} + \frac{64}{(\epsilon - \Delta_i)^2}\log\left(\frac{4n}{\delta}\log_2\left(\frac{384n}{\delta(\epsilon - \Delta_i)^2}\right) \right)\\
    & \leq 
 c_3\sum_{i \in G_\epsilon} \max\left\{\frac{1}{(\epsilon - \Delta_i)^2}\log\left(\frac{n}{\delta}\log_2\left(\frac{n}{\delta(\epsilon - \Delta_i)^2}\right) \right), \right. \\
 & \hspace{2cm}\left.\frac{1}{(\Delta_i + \alpha_\epsilon)^2}\log\left(\frac{n}{\delta}\log_2\left(\frac{n}{\delta(\Delta_i + \alpha_\epsilon)^2}\right) \right) \right\} \\ 
    &\hspace{1cm} + c_3\sum_{i\in G_\epsilon^c}  \frac{n}{(\Delta_i - \epsilon)^2} + \frac{1}{(\epsilon - \Delta_i)^2}\log\left(\frac{n}{\delta}\log_2\left(\frac{n}{\delta(\epsilon - \Delta_i)^2}\right) \right)\\
    & = 
 c_3\sum_{i \in G_\epsilon} \max\left\{\frac{1}{(\mu_1 - \epsilon - \mu_i)^2}\log\left(\frac{n}{\delta}\log_2\left(\frac{n}{\delta(\mu_1 - \epsilon - \mu_i)^2}\right) \right), \right. \\
 & \hspace{2cm}\left.\frac{1}{(\mu_1 + \alpha_\epsilon - \mu_i)^2}\log\left(\frac{n}{\delta}\log_2\left(\frac{n}{\delta(\mu_1 + \alpha_\epsilon - \mu_i)^2}\right) \right) \right\} \\ 
    &\hspace{1cm} + c_3\sum_{i\in G_\epsilon^c}  \frac{n}{(\mu_1 - \epsilon - \mu_i)^2} + \frac{1}{(\mu_1 - \epsilon - \mu_i)^2}\log\left(\frac{n}{\delta}\log_2\left(\frac{n}{\delta(\mu_1 - \epsilon - \mu_i)^2}\right) \right)
\end{align*}
for a sufficiently large constant $c_4$. 

\subsubsection{Step 11: High probability sample complexity bound}

Finally, the Good Filter is equivalent to \texttt{EAST}, Algorithm~\ref{alg:east}, except split across rounds.
Note that the Good Filter is union bounded over $2n$ events whereas the bounds in \texttt{EAST} are union bounded over $n$ events.
The Good Filter and Bad Filter are given the same number of samples in each round, and the Good Filter can terminate within a round, conditioned on $\cE_1 \cap \cE_2$. Therefore, we can bound the complexity of \texttt{FAREAST} in terms of that of \texttt{EAST} run at failure probability $\delta/2$.
If \texttt{FAREAST} terminates in the second round or later, the arguments in Steps $4$ and $5$ can be used to show that \texttt{FAREAST} draws no more than a factor of $18$ more samples than \texttt{EAST}, though this estimate is highly pessimistic. If \texttt{FAREAST} terminates in round $1$ (when gaps are large), we may still show that this is within a constant factor of the complexity of \texttt{EAST}, but the story is more complicated. In the first round, the bad filter draws at most $c'n\log(16) + 32n\log(8n/\delta)$ samples where $c'$ is the constant from \texttt{Median Elimination}. Since we have assumed that $\max(\Delta_i, |\epsilon - \Delta_i|) \leq 8$, this sum is likewise within a constant factor of the complexity of \texttt{EAST}. Hence, by Theorem~\ref{thm:east_complexity},
\begin{align*}
T &\leq c_4\sum_{i=1}^n \min\left\{\max\left\{\frac{1}{(\mu_1 - \epsilon - \mu_i)^2}\log\left(\frac{n}{\delta}\log_2\left(\frac{n}{\delta(\mu_1 - \epsilon - \mu_i)^2}\right) \right),\right.\right.\\
&\hspace{3cm} \frac{1}{(\mu_1 + \alpha_\epsilon - \mu_i)^2}\log\left(\frac{n}{\delta}\log_2\left(\frac{n}{\delta(\mu_1 + \alpha_\epsilon - \mu_i)^2}\right) \right), \\
& \hspace{3cm}\left. \frac{1}{(\mu_1 + \beta_\epsilon -\mu_i)^2}\log\left(\frac{n}{\delta}\log_2\left(\frac{n}{\delta(\mu_1 + \beta_\epsilon - \mu_i)^2}\right) \right) \right\} \\
& \hspace{2cm}\left.\frac{1}{\gamma^2}\log\left(\frac{n}{\delta}\log_2\left(\frac{n}{\delta\gamma^2}\right) \right)\right\}
% c_3\sum_{i=1}^n \min\left\{\frac{1}{\gamma^2}\log\left(\frac{n}{\delta}\log_2\left(\frac{n}{\delta\gamma^2}\right) \right), \max\left\{\frac{1}{(\epsilon - \Delta_i)^2}\log\left(\frac{n}{\delta}\log_2\left(\frac{n}{\delta(\epsilon - \Delta_i)^2}\right) \right),\right.\right. \\
%     & \left.\left. \min \left[\frac{1}{\Delta_i^2}\log\left(\frac{n}{\delta}\log_2\left(\frac{n}{\delta\Delta_i^2}\right) \right), 
%     \frac{1}{\min(\alpha_\epsilon, \beta_\epsilon)^2}\log\left(\frac{n}{\delta}\log_2\left(\frac{n}{\delta\min(\alpha_\epsilon, \beta_\epsilon)^2}\right) \right)\right] \right\}\right\}
\end{align*}
samples.
\end{proof}

\subsection{Proof of Theorem~\ref{thm:mult_fareast_complex}, \fareast in the \multiplicative regime}\label{sec:mult_fareast_proof}

% \blake{\textbf{Things needed:}}
% \begin{enumerate}
%     \item $\epsilon < 1$, otherwise, $M_\epsilon = \emptyset$
%     \item $\mu_i \geq 0 \forall i$
% \end{enumerate}

\begin{proof}
\textbf{Notation for the proof: }Throughout, recall $\Delta_i = \mu_1 - \mu_i$. Recall that $t$ counts the number of times the conditional in line $19$ is true. By Line $19$ of \texttt{FAREAST}, all arms in $\A$ have received $t$ samples when the loop in line $23$ is executed for the $t^\text{th}$ time. Within any round $k$, let $\A(t)$ and $G_k(t)$ denote the sets $\A$ and $G_k$ at this time since both sets can change in lines $27$ and $29$ and $25$ respectively. Let $t_k$ denote the maximum value of $t$ in round $k$.
By Lines $18$ and $19$ of \texttt{FAREAST}, the total number of samples given to the good filter when the conditional in line $19$ is true for the $t^\text{th}$ time is $\sum_{s=1}^t|\A(s)|$.

For $i \in M_\epsilon$, let $T_i$ denote the random variable of the number of times arm $i$ is sampled before it is added to $G_k$ in Line $25$. 
%Let $T_{\max} = \max_{i \in M_\epsilon}T_i$. 
For $i \in M_\epsilon^c$, let $T_i$ denote the random variable of the number of times arm $i$ is sampled before it is removed from $\A$ in Line $27$. For any arm $i$, let $T_i'$ denote the random variable of the of the number of times $i$ is sampled before $\hat{\mu}_i(t) + C_{\delta/2n}(t) \leq \max_{j\in \A} \hat{\mu}_j(t) - C_{\delta/2n}(t)$. 
\newline

%Let $\mu_i{s}$ denote the empirical mean of the first $s$ samples from 
%$\hat{\mu}(s)$ denote the empirical mean taken after $s$ samples of a mean $\mu$ random variable. 

Define the event
\begin{align*}
    \cE_1 = \left\{\bigcap_{i \in [n]}\bigcap_{t \in \N} |\hat{\mu}_i(t) - \mu_i| \leq C_{\delta/2n}(t)\right\}.
\end{align*}
Using standard anytime confidence bound results, and recalling that that $C_\delta(t) := \sqrt{\frac{4\log(\log_2(2t)/\delta)}{t}}$, we have
\begin{align*}
    \P(\cE_1^c) & = \P\left( \bigcup_{i \in [n]}\bigcup_{t \in \N} |\hat{\mu}_i - \mu_i| > C_{\delta/2n}(t)\right) \\
    & \leq \sum_{i=1}^n\P\left( \bigcup_{t \in \N} |\hat{\mu}_i - \mu_i| > C_{\delta/2n}(t)\right) 
    \leq \sum_{i=1}^n\frac{\delta}{2n} 
    = \frac{\delta}{2}
\end{align*}
Next, recall that $\hat{\mu}_i(t)$ denotes the empirical average of $t$ samples of $\rho_i$. Consider the event, 
\begin{align*}
    \cE_2 = \bigcap_{i\in M_{\epsilon}}\bigcap_{k\in \N} \left|\left((1-\epsilon)\hat{\mu}_{i_k}\left(\tau_k\right) - \hat{\mu}_i\left(\tau_k\right)\right) - ((1-\epsilon)\mu_{i_k} - \mu_i)\right| \leq 2^{-(k+1)}(2-\epsilon)
\end{align*}

By Hoeffding's inequality,
$$\P\left(\left|\left((1-\epsilon)\hat{\mu}_{i_k}\left(\tau_k\right) - \hat{\mu}_i\left(\tau_k\right)\right) - ((1-\epsilon)\mu_{i_k} - \mu_i)\right| \leq 2^{-(k+1)}(2-\epsilon) \big| i_k = j\right) \leq \frac{\delta}{4nk^2}.$$
Then
\begin{align*}
    \P \big( & \left|\left((1-\epsilon) \hat{\mu}_{i_k}\left(\tau_k\right) - \hat{\mu}_i\left(\tau_k\right)\right) - ((1-\epsilon)\mu_{i_k} - \mu_i)\right| \leq 2^{-(k+1)}(2-\epsilon)\big) \\
    & =  \sum_{j=1}^n\P\left(\left|\left((1-\epsilon)\hat{\mu}_{i_k}\left(\tau_k\right) - \hat{\mu}_i\left(\tau_k\right)\right) - ((1-\epsilon)\mu_{i_k} - \mu_i)\right| \leq 2^{-(k+1)}(2-\epsilon) \big| i_k = j\right) \P(i_k = j) \\
    & \leq \frac{\delta}{4nk^2} \sum_{j=1}^n \P(i_k = j) \\
    & = \frac{\delta}{4nk^2}
\end{align*}

Therefore, union bounding over the rounds $k \in \N$, $\P(\cE_2^c) \leq \sum_{i\in M_{\epsilon}}\sum_{k=1}^{\infty} \frac{\delta}{4nk^2}\leq \frac{\delta}{2}$. Hence, $\P\left(\cE_1 \cap \cE_2 \right) \geq 1 - \delta$.

\subsubsection{Step 0: Correctness.} On $\cE_1\cap \cE_2$, first we prove that if there exists a random round $k$ at which $G_k \cup B_k = [n]$ then $G_k = M_\epsilon$. Additionally, we prove that on $\cE_1\cap \cE_2$, if $\A \subset G_k$, then $G_k = M_\epsilon$.
Therefore, for either stopping condition for \texttt{FAREAST} in line $31$, on the event $\cE_1 \cap \cE_2$, \texttt{FAREAST} correctly returns the set $M_\epsilon$. 

% \blue{We show that $G_k \subset M_\epsilon$ and $B_k \subset M_\epsilon^c$. Thus, if the algorithm terminates (when $G_k \cup B_k = [n]$, it returns $M_\epsilon$}

\textbf{Claim 0:} On $\cE_1 \cap \cE_2$, for all $k \in \N$, $G_k \subset M_\epsilon$.

\noindent\textbf{Proof.} 
% \blue{Part 1: $G_k \subset M_\epsilon$. Part a) the best arm is never removed. Part b) using this, $G_k \subset M_\epsilon$}
Firstly we show $1 \in \A$ for all $t\in \N$, namely the best arm is never removed from $\A$. Note for any $i$ such that $\hat\mu_i(t) - C_{\delta/2n}(t) \geq 0$,
$$\hat\mu_1 + C_{\delta/2n}(t) \geq \mu_1\geq \mu_i\geq \hat\mu_i(t) - C_{\delta/2n}(t) > (1-\epsilon)(\hat\mu_i(t) - C_{\delta/2n}(t)).$$
For $i$ such that $\hat\mu_i(t) - C_{\delta/2n}(t) < 0$, if $\hat\mu_1 + C_{\delta/2n}(t) \geq 0$, then 
$$\hat\mu_1 + C_{\delta/2n}(t) \geq 0 > (1-\epsilon)(\hat\mu_i(t) - C_{\delta/2n}(t)).$$ Note that $\hat\mu_1 + C_{\delta/2n}(t) < 0$ implies on the event $\cE_1$ that $\mu_1 < 0$, which contradicts the assumption that $\mu_1 \geq 0$ made in the theorem. 
In particular this shows, $\hat\mu_1 + C_{\delta/2n}(t) >  (1-\epsilon)(\max_{i\in \A} \hat\mu_i(t) - C_{\delta/2n}(t)) = L_t$ and $\hat\mu_1 + C_{\delta/2n}(t) \geq \max_{i\in \A} \hat\mu_i(t) - C_{\delta/2n}(t)$ showing that $1$ will never exit $\A$ in line 28. 

% For all $j$, on event $\cE_1$, 
% $\hat{\mu}_1(t) + C_{\delta/2n}(t)\geq \mu_1 \geq \mu_j\geq \hat{\mu}_j(t) - C_{\delta/2n}(t)$. Therefore, $\hat{\mu}_1(t) + C_{\delta/2n}(t) \geq \max_j\hat{\mu}_j(t) - C_{\delta/2n}(t).$ 
% Which implies that 
% $$\hat{\mu}_1(t) - C_{\delta/2n}(t) \geq \max_i \hat{\mu}_i(t) + C_{\delta/2n}(t) - \epsilon = U_t^\ast $$
% and 
% $$\hat{\mu}_1(t) + C_{\delta/2n}(t) \leq \max_j\hat{\mu}_j(t) - C_{\delta/2n}(t)$$

% which is equal to 
% $$\left(\hat{\mu}_1(t) - C_{\delta/2n}(t) \geq U_t^\ast \text{ and } \hat{\mu}_1(t) + C_{\delta/2n}(t) \leq L_t + \epsilon\right)$$
% is false  for all $t$. 

% Additionally, on the event $\cE_1$, for all $t \in \N$, $(1-\epsilon)\mu_1 \in [L^\ast(t), U_t]$ since 
% $$U_t = \max_i \hat{\mu}_i(t) + C_{\delta/2n}(t) - \epsilon \geq \hat{\mu}_1(t) + C_{\delta/2n}(t) -\epsilon \geq (1-\epsilon)\mu_1$$
% \noindent and for any $i$,
% $$(1-\epsilon)\mu_1 \geq \mu_i - \epsilon \geq  \hat{\mu}_i(t) - C_{\delta/2n}(t) - \epsilon $$
% Hence  $ \mu_1-\epsilon \geq \max_i \hat{\mu}_i(t) - C_{\delta/2n}(t) - \epsilon = L_t$. This implies that the boolean $\left(\hat{\mu}_1(t) + C_{\delta/2n}(t) < L_t^\ast\right)$ is also false for all $t$. 
% Jointly this implies that $1$ is never removed from $\A$. 
% \newline 
%%%%%%%%%%%%%%%%%%%%%%%%%%
Secondly, we show that at all times $t$, $(1-\epsilon)\mu_1\in [L_t, U_t]$. By the above, since $\mu_1$ never leaves $\A$,  
$$U_t = (1-\epsilon)(\max_{i\in \A} \hat{\mu}_i(t) + C_{\delta/2n}(t)) \geq (1-\epsilon)(\hat{\mu}_1(t) + C_{\delta/2n}(t)) \geq (1-\epsilon)\mu_1$$
and for any $i$,
$$(1-\epsilon)\mu_1 \geq (1-\epsilon)\mu_i \geq  (1-\epsilon)(\hat{\mu}_i(t) - C_{\delta/2n}(t))  $$
Hence  $ (1-\epsilon)\mu_1 \geq (1-\epsilon)(\max_i \hat{\mu}_i(t) - C_{\delta/2n}(t)) = L_t$. %This implies that the boolean $\left(\hat{\mu}_1(t) + C_{\delta/2n}(t) < L_t\right)$ is also false for all $t$. 

Next, we show that $G_k\subset M_\epsilon$ for all $k\geq 1, t\geq 1 $. Suppose not. Then $\exists, k, t \in N$ and $\exists i \in M_\epsilon^c \cap G_k(t)$ such that, 
\begin{equation*}
\mu_i \geq \hat{\mu}_i(t) - C_{\delta/2n}(t)\geq U_t \geq (1-\epsilon)\mu_1 > \mu_i,
\end{equation*}
with the last inequality following from the previous assertion, giving a contradiction. 
\qed 

% \begin{align*}
%     \hat{\mu}_1(t) + C_{\delta/2n}(t) 
%      \stackrel{\cE_1}{\geq} \mu_1
%      \stackrel{\cE_1}{\geq} \max_j \ell_j(t).
% \end{align*}
% which shows that 
% First, note that on $\cE_1$, $G_k \subset M_\epsilon \ \forall \ k$. 

% Indeed we have that $1 \in \A$ for all $t\in \N$ 

% \noindent\textbf{Claim 1.} 

% \noindent 
% Otherwise $\exists j \neq 1: \ell_j > \mu_1 \geq \mu_j \implies \cE_1^c$. 

% \noindent\textbf{Claim 2.} $\cE_1 \implies \frac{S_i}{t} + C_{\delta/2n}(t) < L^\ast(t) \implies i \in M_\epsilon^c$

% Suppose $\exists i \in M_\epsilon$ and $t \in \N$ such that $u_i(t) = \frac{S_i}{t} + C_{\delta/2n}(t) < L^\ast(t)$, then
% \begin{equation*}
% u_i(t) < L^\ast \stackrel{\cE_1}{\implies} u_i(t) < (1-\epsilon)\mu_1 \implies  u_i(t) < \mu_i \stackrel{i \in M_\epsilon}{\implies} \cE_1^c.
% \end{equation*}

% \noindent \textbf{Claim 3:} $\A \subset G_k \overset{\cE_1}{\implies} G_k = M_\epsilon$

% \noindent By Claim $1$, $G_k\subset M_{\epsilon}$ for all $k\geq 1$. Suppose  $\exists i \in M_\epsilon \backslash G_k$ and $\A \subset G_k$. Then 
% $$i\in M_\epsilon \backslash \A \implies (i \notin \A \text{ and } i \notin G_k) \implies \exists t: \frac{S_i}{t} + C_{\delta/2n}(t) < L^\ast(t) \stackrel{\cE_1, \text{Claim }2}{\implies} i \in M_\epsilon^c$$
% \noindent which is a contradiction. 
% \newline 

\textbf{Claim 1:} On $\cE_1 \cap \cE_2$, for all $k \in \N$, $B_k \subset M_\epsilon^c$.

\textbf{Proof. }
Next, we show $B_k\subset M_{\epsilon}^c$. Suppose not. Then either the good filter or the bad filter added an arm in $M_\epsilon$ to $B_k$. Take $i \in M_\epsilon$. In the former, this implies that
$$\mu_i \stackrel{\cE_1}{\leq} \hat{\mu}_i(t) + C_{\delta/2n}(t) < L_t \stackrel{\cE_1}{\leq} (1-\epsilon)\mu_1 $$
which contradicts $i \in M_\epsilon$. Consider the alternate case that the bad filter adds $i$ to $B_k$ for some $k$. 
By definition, $B_0 = \emptyset$ and $B_{k-1} \subset B_k$ for all $k$. Then there must exist $k \in \N$ and an $i \in M_\epsilon$ such that $i \in B_k$ and $i \notin B_{k-1}$. Following line $14$ of the algorithm, this occurs if and only if 
$$(1-\epsilon)\hat{\mu}_{i_k} - \hat{\mu}_{i}> 2^{-(k+1)}(2-\epsilon).$$
On the event $\cE_2$, the above implies $$(1-\epsilon)\mu_{i_k} - \mu_{i} + 2^{-(k+1)}(2-\epsilon)> 2^{-(k+1)}(2-\epsilon),$$ and simplifying, we see that $0 < (1-\epsilon)\mu_{i_k} - \mu_{i} \leq (1-\epsilon)\mu_1 - \mu_{i}$ which contradicts the assertion that $i \in M_\epsilon$.
% \begin{align*}
% i \in B_k \backslash B_{k-1} & \iff \hat{\mu}_{i_k} - \hat{\mu}_{i}\geq \epsilon + 2^{-k+1} \\
% & \stackrel{\cE_{3}}{\implies} \mu_{i_k} - \mu_{i} + 2^{-k}\geq \epsilon + 2^{-k+1} \\
% & \iff \mu_{i_k} - \mu_{i} \geq \epsilon + 2^{-k}\\
% & \implies \mu_1 - \mu_{i} \geq \epsilon + 2^{-k}\\
% & \implies i \notin M_\epsilon
% \end{align*}
% which is a contradiction. 
Combining the above claims, we see that $ \cE_1 \cap \cE_2$ implies $(G_k \cup B_k = [n])\text{ and } G_k\cap B_k=\emptyset \implies G_k = M_\epsilon$. Since $\P(\cE_1 \cap \cE_2) \geq 1-\delta$, if \texttt{FAREAST} terminates,  with probability at least $1-\delta$, it correctly returns the set $M_\epsilon$. \qed

% \textbf{Claim 2:} On $\cE_1 \cap \cE_2$, if there is a $t$ such that $\A(t) \subset G_k(t)$, then $G_k(t) = M_\epsilon$.

% \textbf{Proof.} By Claim $1$, $G_k\subset M_{\epsilon}$ for all $k\geq 1$. Suppose  $\exists i \in M_\epsilon \backslash G_k(t)$ and $\A(t) \subset G_k(t)$. By line $27$, of \texttt{FAREAST}, 
% there exists a $t' \leq t$ such that $\hat{\mu}_i(t') + C_{\delta/2n}(t') < L_{t'}$. Therefore, on $\cE_1$,
% $\mu_i \leq \hat{\mu}_i(t') + C_{\delta/2n}(t') < L_{t'} \stackrel{\cE_1}{<} (1-\epsilon)\mu_1$ which contradicts $i \in M_\epsilon$. 
% \qed 

\textbf{Claim 2:} Next, we show that on $\cE_1$, $M_\epsilon \subset \A(t) \cup G(t)$ for all $t \in \N$. 

In particular this implies that if $\A \subset G$, then $M_\epsilon \subset G$. Combining this with the previous claim gives $G \subset M_\epsilon \subset G$, hence $G = M_\epsilon$. On this condition, \texttt{FAREAST} terminates by line $33$ and returns the set $\A \cup G = G$. Note that by definition, $M_\epsilon \subset M_{(\epsilon + \gamma)}$ for all $\gamma \geq 0$. 
Therefore \texttt{FAREAST} terminates correctly on this condition. 

\textbf{Proof. }
Suppose for contradiction that there exists $i \in M_\epsilon$ such that $i \notin \A(t) \cup G(t)$. This occurs only if $i$ is eliminated in line $28$. Hence, there exists a $t' \leq t$ such that $\hat{\mu_i}(t') + C_{\delta/n}(t') < L_{t'}$. Therefore, on the event $\cE_1$, 
$$ (1-\epsilon)\mu_1 \stackrel{\cE_1}{\geq} L_{t'} = (1-\epsilon)\left(\max_{j \in \A}\hat{\mu}_j(t') - C_{\delta/n}(t')\right) > \hat{\mu_i}(t') + C_{\delta/n}(t') \stackrel{\cE_1}{\geq} \mu_i$$
which contradicts $i \in M_\epsilon$. 
\qed

\textbf{Claim 3:} Finally, we show that  on $\cE_1$, if $U_t - L_t \leq \frac{\gamma}{2-\epsilon}L_t$, then $\A \cup G \subset M_{(\epsilon+\gamma)}$. 

Combining with Claim $3$ that $M_\epsilon \subset \A \cup G$, if \texttt{FAREAST} terminates on this condition by line $33$, it does so correctly and returns all arms in $M_\epsilon$ and none in $M_{(\epsilon+\gamma)}^c$.  

\textbf{Proof.} By Claim $0$, $G \subset M_\epsilon \subset M_{\epsilon+\gamma}$. Hence, $G \cap M_{(\epsilon+\gamma)}^c = \emptyset$. Therefore, we wish to show that $\A \cap M_{(\epsilon+\gamma)}^c = \emptyset$ which implies that $G \cap \A \subset M_{\epsilon+\gamma}$. 
Assume $U_t - L_t < \frac{\gamma}{2-\epsilon}L_t$. Recall that 
$$U_t = (1-\epsilon)\left(\max_{i\in \A}\hat{\mu}_i(t) + C_{\delta/2n}(t) \right)$$
and
$$L_t = (1-\epsilon)\left(\max_{i\in \A}\hat{\mu}_i(t) - C_{\delta/2n}(t) \right)$$
All arms in $\A(t)$ have received exactly $t$ samples. Hence, $U_t - L_t = 2(1-\epsilon)C_{\delta/2n}(t)$.
On $\cE_1$, $L_t \leq (1-\epsilon)\mu_1$
This implies that 
$$2(1-\epsilon)C_{\delta/2n}(t) < \frac{\gamma}{2-\epsilon}L_t \leq \frac{1-\epsilon}{2-\epsilon}\gamma\mu_1,$$
and in particular, 
$$2C_{\delta/2n}(t) <\frac{\gamma\mu_1}{2-\epsilon}.$$
Therefore, we wish to show that when the above is true, then for any $i \in M_{\epsilon+\gamma}^c$,
$L_t - (\hat{\mu}_i(t) + C_{\delta/n}(t)) > 0$, implying that $i \notin \A$. 
\begin{align*}
L_t - (\hat{\mu}_i(t) + C_{\delta/n}(t)) 
& = (1-\epsilon) \left(\max_{j\in \A}\hat{\mu}_j - C_{\delta/n}(t)\right) - (\hat{\mu}_i(t) + C_{\delta/n}(t)) \\
&\geq (1-\epsilon) \left(\max_{j\in \A}\mu_j - 2C_{\delta/n}(t)\right) - (\mu_i + 2C_{\delta/n}(t)) \\
&\stackrel{(a)}{\geq} (1-\epsilon) \left(\mu_1 - 2C_{\delta/n}(t)\right) - ((1-\epsilon-\gamma)\mu_1 + 2C_{\delta/n}(t)) \\
% & = (1-\epsilon) \mu_1 - (1-\epsilon-\gamma)\mu_1 - 2(1-\epsilon)C_{\delta/n}(t) - 2C_{\delta/n}(t) \\
& = \gamma\mu_1- 2(2-\epsilon)C_{\delta/n}(t)\\
& > \gamma\mu_1- (2-\epsilon)\frac{\gamma\mu_1}{2-\epsilon}\\
& = 0
\end{align*}
which implies that $i \notin \A$. Inequality $(a)$ follows jointly from the fact that $1\in \A$ and the fact that all arms in $\A$ have received $t$ samples implies $\max_{j \in \A}\mu_j - 2C_{\delta/n}(t) = \mu_1 - 2C_{\delta/n}(t)$. Additionally, inequality $(a)$ follows from $\mu_i \leq (1 - \epsilon - \gamma)\mu_1$ since $i \in M_{\epsilon+\gamma}^c$. 
\qed

\subsubsection{Step 1: An expression for the total number of samples drawn and introducing several helper random variables}

Next, we write an expression for the total number of samples drawn by \texttt{FAREAST}. In particular, we introduce two sums that we will spend the remainder of the proof controlling. Additionally, we show that the conditional in line $19$ in the good filter is true at least once in each round. Based on this, we more precisely define the random variables $T_i$ and $T_i'$ introduced in the notation section in section \ref{sec:mult_fareast_proof}.
Additionally, we introduce the time $T_\gamma$ at which $U_t - L_t < \frac{\gamma}{2-\epsilon}L_t$.

Recall that the largest value of $t$ in round $k$ is denoted $t_k$.
Let $E_{k}^\gamma$ be the event that $U_t - L_t \geq \frac{\gamma}{2-\epsilon}L_t$ for all $t$ in round $k$:
$$E_{k}^\gamma := \left\{U_t - L_t \geq \frac{\gamma}{2-\epsilon}L_t: t \in (t_{k-1}, t_k]\right\}.$$
Note that if $E_{k-1}^\gamma$ is false, then \texttt{FAREAST} terminates in round $k-1$ by line $33$. 
We may write the total number of samples drawn by the algorithm as 
\begin{align*}
    T = \sum_{k=1}^{\infty}&2\1\left[\A \not\subset G_{k-1} \text{ and } G_{k-1}\cup B_{k-1} \neq [n] \text{ and } E_{k-1}^\gamma\right]
    \\&\hspace{1cm}
    \left(H_{\text{ME}}(n, 2^{-k}, 1/16) + \tau_k + \tau_k|(G_{k-1} \cup B_{k-1})^c|\right)
\end{align*}

Deterministically, $\1\left[\A \not\subset G_{k-1} \text{ and } G_{k-1}\cup B_{k-1} \neq [n] \text{ and } E_{k-1}^\gamma\right] \leq \1\left[G_{k-1}\cup B_{k-1} \neq [n]\right]$.

% Deterministically $\1[G_{k-1} \cup B_{k-1} \neq [n]] |(G_{k-1} \cup B_{k-1})^c| =\1[(G_{k-1} \cup B_{k-1})^c \neq \emptyset] |(G_{k-1} \cup B_{k-1})^c| = |(G_{k-1} \cup B_{k-1})^c|$ hence
% \[\1[\A \not\subset G_{k-1} \text{ and } G_k\cup B_k \neq [n] \text{ and } E_{k-1}^\gamma]|(G_{k-1} \cup B_{k-1})^c| =  \1[\A \not\subset G_{k-1} \text{ and } E_{k-1}^\gamma]|(G_{k-1} \cup B_{k-1})^c|\]

Applying this, 
\begin{align}
    T & \leq \sum_{k=1}^\infty 2\1\left[G_{k-1}\cup B_{k-1} \neq [n]\right]\left(H_{\text{ME}}(n, 2^{-k}, 1/16) + \tau_k + \tau_k|(G_{k-1} \cup B_{k-1})^c|\right) \nonumber \\
    & = \sum_{k=1}^\infty 2\1\left[G_{k-1} \neq M_\epsilon\right]\1\left[G_{k-1}\cup B_{k-1} \neq [n]\right]\left(H_{\text{ME}}(n, 2^{-k}, 1/16) + \tau_k + \tau_k|(G_{k-1} \cup B_{k-1})^c|\right) \label{eq:mult_first_sum}\\
    & + \sum_{k=1}^\infty 2\1\left[G_{k-1} = M_\epsilon\right]\1\left[G_{k-1}\cup B_{k-1} \neq [n]\right]\left(H_{\text{ME}}(n, 2^{-k}, 1/16) + \tau_k + \tau_k|(G_{k-1} \cup B_{k-1})^c|\right) \label{eq:mult_second_sum}
\end{align}

% Let $N_i(k)$ denote the random variable of the cumulative number of samples of arm $i$ through the start of round $k$. 
% Let $\A(k)$ denote the (random) active set at the start of round $k$. 

In round $k$, line $18$ of the Good Filter, whereby an arm is sampled, is evaluated
$$\left(H_{\text{ME}}(n, 2^{-k}, 1/16) + \tau_k + \tau_k|(G_{k-1} \cup B_{k-1})^c|\right) \geq (2\tau_k + H_\text{ME}(n, 2^{-k}, 1/16)) \geq n$$ 
times since $H_\text{ME}(n, 2^{-k}, 1/16)) \geq n$ for all $k$ and $|(G_{k-1} \cup B_{k-1})^c| \geq 1$ unless $G_{k-1} \cup B_{k-1} = [n]$ which implies termination in round $k-1$. Each time line $18$ is called, $N_{I_s}\leftarrow N_{I_s} + 1$. Since $|\arg\min_{j\in \A}\{N_j\}|\leq |\A| \leq n$, line $18$ is called at most $n$ times before $\min_{j\in\A}\{N_j\} = \max_{j\in\A}\{N_j\}$. When this occurs, the conditional in line $19$ is true and $t \leftarrow t +1$. 
%Furthermore, at this time, for any arm $i\in \A(t)$, $i$ has received $N_i = t$ samples. 
\newline 

% \blue{To keep track of the number of samples that arms are given by the Good filter in particular, we introduce random variables $T_i$ and $T_i'$ for all $i \in [n]$. When arm $i$ has been given $\max(T_i, T_i')$ samples it is removed from $\A$.}

If $\min_{i\in\A(t)}\{N_i\} = \max_{i\in\A(t)}\{N_i\}$, then $N_i = t$ for any $i \in \A(t)$.  
By Step $0$, only arms in $M_\epsilon$ are added to $G_k$. Therefore, $T_i$ is defined as
\begin{equation}
T_i = \min\left\{t : \!\begin{aligned}
&i\in G_k(t+1) &\text{ if } i \in M_\epsilon\\
&i\notin \A(t+1) &\text{ if } i \in M_\epsilon^c
\end{aligned}\right\}
\overset{\cE_1}{=}\min\left\{t : \!\begin{aligned}
&\hat{\mu}_i - C_{\delta/2n}(t) \geq U_t &\text{ if } i \in M_\epsilon\\[1ex]
&\hat{\mu}_i + C_{\delta/2n}(t) \leq L_t &\text{ if } i \in M_\epsilon^c
\end{aligned}\right\}
\end{equation}
Define $T_i = \infty$ if this never occurs. Note that this may happen if \texttt{FAREAST} terminates due to the conditition in line $32$ that $U_t - L_t < \frac{\gamma}{2-\epsilon}L_t$. 
Similarly, recall $T_i'$ denotes the random variable of the of the number of times $i$ is sampled before $\hat{\mu}_i(t) + C_{\delta/2n}(t) \leq \max_{j\in \A} \hat{\mu}_j(t) - C_{\delta/2n}(t)$. Hence, 
\begin{equation}
    T_i' =  \min \left\{t: \hat{\mu}_i(t) + C_{\delta/2n}(t) \leq \max_{j \in \A(t)}\hat{\mu}_j(t) - C_{\delta/2n}(t)\right\}
\end{equation}
Define $T_i' = \infty$ if this never occurs. Note that this may happen if \texttt{FAREAST} terminates due to the conditition in line $32$ that $U_t - L_t < \frac{\gamma}{2-\epsilon}L_t$. 
Finally, we define the time $T_\gamma$ such that $U_t - L_t <\frac{\gamma}{2-\epsilon}L_t$.
\begin{equation}
    T_\gamma =  \min \left\{t: U_t - L_t < \frac{\gamma}{2-\epsilon}L_t\right\}
\end{equation}
By design, no arm is sampled more that $T_\gamma$ times by the good filter, controlling the cases that $T_i$ or $T_i'$ are infinite.

\subsubsection{Step 2: Bounding $T_i$ and $T_i'$ for $i \in M_\epsilon$}

\noindent \textbf{Step 2a:} For $i\in M_{\epsilon}$, we have that $T_i \leq h\left(\frac{\epsilon\mu_1 - \Delta_i}{4-2\epsilon}, \frac{\delta}{2n} \right)$. 

\noindent\textbf{Proof.}
Note that $\mu_i - 2C_{\delta/2n}(t) \geq (1-\epsilon)(\mu_1 + 2C_{\delta/2n}(t))$ may be rearranged as $(4-2\epsilon)C_{\delta/2n}(t) \leq \epsilon\mu_1 - \Delta_i$, and this is true when $t > h\left(\frac{\epsilon\mu_1 - \Delta_i}{4-2\epsilon}, \frac{\delta}{2n} \right)$. This condition implies that for all $j$,
\begin{align*}
\hat{\mu}_i(t) - C_{\delta/2n}(t) &\overset{\cE_1}{\geq} \mu_i - 2 C_{\delta/2n}(t) \\
&\geq (1-\epsilon)(\mu_1 + 2C_{\delta/2n}(t)) \\
&\geq (1-\epsilon)(\mu_j + 2C_{\delta/2n}(t)) \\
&\overset{\cE_1}{\geq} (1-\epsilon)(\hat{\mu}_j(t) + C_{\delta/2n}(t)) 
\end{align*}
so in particular, $\hat{\mu}_i(t) - C_{\delta/2n}(t)\geq (1-\epsilon)(\max_{j \in \A}\hat{\mu}_j(t) + C_{\delta/2n}(t)) = U_t$.
\qed

Additionally, we define a time $T_{\max}$ when all good arms have entered $G_k$.

\noindent\textbf{Step 2b:} Defining   $T_{\max} := \min \{t: G_k(t) = M_{\epsilon}\} = \max_{i\in M_\epsilon} T_i$, we also have that  $T_{\max} \leq h(\tilde{\alpha}_\epsilon/(4-2\epsilon), \delta/2n)$ (in other words, if $t > h(\tilde{\alpha}_\epsilon/(4-2\epsilon), \delta/2n)$ (i.e. line 23 has been run $t$ times, then we have that $G_k(t) = M_\epsilon$). 

\noindent\textbf{Proof.} Recall that $\tilde{\alpha}_\epsilon  = \min_{i\in M_\epsilon} \mu_i - \mu_1 + \epsilon = \min_{i\in M_\epsilon} \epsilon\mu_1 - \Delta_i$. 
By Step $1a$, $T_i \leq h\left(\frac{\epsilon\mu_1 - \Delta_i}{4-2\epsilon}, \frac{\delta}{2n} \right)$. Furthermore, $h(\cdot, \cdot)$ is monotonic in its first argument, such that if $0 < x' < x$, then $h(x', \delta) > h(x, \delta)$ for any $\delta > 0$.
Therefore $T_{\max} = \max_{i\in M_\epsilon} T_i \leq \max_{i\in M_\epsilon} h\left(\frac{\epsilon\mu_1 - \Delta_i}{4-2\epsilon}, \frac{\delta}{2n} \right) = h\left(\tilde{\alpha}_\epsilon/(4-2\epsilon), \frac{\delta}{2n} \right)$. \qed 
\newline 

\noindent \textbf{Step 2c:} For $i\in M_{\epsilon}$, we have that $T'_i \leq h(0.25\Delta_i, \delta/2n)$.

\noindent\textbf{Proof.}
Note that $4C_{\delta/2n}(t) \leq \mu_1 - \mu_i$, true when $t > h\left(0.25\Delta_i, \frac{\delta}{2n} \right)$, implies that
\begin{align*}
\hat{\mu}_i(t) + C_{\delta/2n}(t) 
&\overset{\cE_1}{\leq} \mu_i + 2C_{\delta/2n}(t) \\
& \leq \mu_1 - 2 C_{\delta/2n}(t) \\
&\overset{\cE_1}{\leq} \hat{\mu}_1(t) - C_{\delta/2n}(t).
\end{align*}
As shown in Step $0$, $1 \in \A(t)$ for all $t\in \N$, and in particular $\hat{\mu}_1(t) \leq \max_{i \in \A(t)} \hat{\mu}_i(t)$.
Hence, $\hat{\mu}_i(t) + C_{\delta/2n}(t) \leq \max_{j \in \A(t)} \hat{\mu}_j(t) - C_{\delta/2n}(t)$.
\qed
\newline

\subsubsection{Step 3: Bounding $T_i$ for $i \in M_\epsilon^c$}

Next, we bound $T_i$ for $i \in M_\epsilon^c$.  $i \in M_\epsilon^c$ is eliminated from $\A$ if it has received at least $T_i$ samples. 

\textbf{Claim:} $T_i \leq h\left(\frac{\Delta_i - \epsilon\mu_1}{4-2\epsilon}, \frac{\delta}{2n} \right)$ for $i\in M_{\epsilon}^c$

\noindent \textbf{Proof.} 
Note that $\mu_i + 2C_{\delta/2n}(t) \leq (1-\epsilon)(\mu_1 - 2C_{\delta/2n}(t))$ may be rearranged as $(4-2\epsilon)C_{\delta/2n}(t) \leq  \Delta_i - \epsilon\mu_1$, and this is true when $t > h\left(\frac{\epsilon\mu_1 - \Delta_i}{4-2\epsilon}, \frac{\delta}{2n} \right)$. This condition implies that 
\begin{align*}
\hat{\mu}_i(t) + C_{\delta/2n}(t) &\overset{\cE_1}{\leq} \mu_i + 2 C_{\delta/2n}(t) \\
&\leq (1-\epsilon)(\mu_1 - 2C_{\delta/2n}(t)) \\
&\overset{\cE_1}{\leq} (1-\epsilon)(\hat{\mu}_1(t) - C_{\delta/2n}(t)) 
\end{align*}
As shown in Step $0$, $1 \in \A(t)$ for all $t\in \N$, and in particular $\hat{\mu}_1(t) \leq \max_{i \in \A(t)} \hat{\mu}_i(t)$.
Therefore $\hat{\mu}_i(t) + C_{\delta/2n}(t)\leq (1-\epsilon)(\max_{j \in \A}\hat{\mu}_j(t) - C_{\delta/2n}(t)) = L_t$.
\qed
\newline 

\subsubsection{Step 4: bounding the total number of samples given to the good filter at time $t = T_{\max}$}

Note that for a time $t = T$, the total number of samples given to the good filter is $\sum_{s=1}^T|\A(s)|$. Therefore, the total number of samples up to time $T_{\max}$ is $\sum_{t=1}^{T_{\max}}|\A(t)|$. 

%Next, we prove the following inequality: 
%\begin{align}
%    \sum_{s=1}^{\max_{i\in M_\epsilon} T_i}|\A(s)| & \leq
%    \sum_{i \in M_\epsilon} \max\left\{h\left(0.25(\epsilon - \Delta_i), \frac{\delta}{2n} \right), \min \left[h\left(0.25\Delta_i, \frac{\delta}{2n} \right), h\left(\frac{\tilde{\alpha}_\epsilon}{4-2\epsilon}, \frac{\delta}{2n} \right)\right] \right\} \\ 
%    & \hspace{1cm} +  \sum_{i\in M_{\epsilon+\tilde{\alpha}_\epsilon}^c} h\left(0.25(\epsilon - \Delta_i), \frac{\delta}{2n} \right) +  |M_\epsilon^c\cap M_{\epsilon+\tilde{\alpha}_\epsilon}|h\left(\frac{\tilde{\alpha}_\epsilon}{4-2\epsilon}, \frac{\delta}{2n} \right) \nonumber
%\end{align}

Let $S_i = \min\{t: i\not\in A(t+1)\}$. 
% By Line $24$ of \texttt{FAREAST}, if $t \geq \max(T_i, T_i')$, then $i \notin \A(t+1)$. For $i$ in $M_\epsilon^c$, conditioned on $\cE_1$, by Step $1$, Claim $1$, $T_i \geq T_i'$. Step $2$ bounds this maximum. 
Hence,
\begin{align*}
    \sum_{t=1}^{T_{\max}}|\A(t)| =
    \sum_{t=1}^{T_{\max}} \sum_{i=1}^n \1[i \in \A(t)] 
    = \sum_{i=1}^n\sum_{t=1}^{T_{\max}} \1[i \in \A(t)]
    = \sum_{i=1}^n \min\left\{T_{\max},  S_i\right\}
    %= \sum_{i=1}^n \min\left\{T_{\max},  \max(T_i, T_i')\right\}
\end{align*}
%For $i\in M_{\epsilon}$, they only leave $A(t)$ by the conditions in line 24. hence $S_i \leq $
For arms $i\in  M_{\epsilon}^c$, $S_i = T_i$ by definition. For $i\in M_{\epsilon}$, $S_i = \max(T_i, T_i')$ by line $28$ of the algorithm. Then
%However, by definition $S_i = T_i$ for arms $i\in M_{\epsilon}^c$ and $S_i = T_i'$ for arms $i\in M_{\epsilon}$
%
%$T_i$
%
%$S_i = \max(T_i, T_i')$ 
%
%By definitions \eqref{eq:mult_defTi} and \eqref{eq:mult_defTi'}, if $t = \max\{T_i, T_i'\}$, the conditional in line $24$ of \texttt{FAREAST} is true and $i \notin \A(t + 1)$. 
%
%Hence, 
% By Line $24$ of \texttt{FAREAST}, if $t \geq \max(T_i, T_i')$, then $i \notin \A(t+1)$. For $i$ in $M_\epsilon^c$, conditioned on $\cE_1$, by Step $1$, Claim $1$, $T_i \geq T_i'$. Step $2$ bounds this maximum. 
%\begin{align*}
%    \sum_{t=1}^{T_{\max}}|\A(t)| =
%    \sum_{t=1}^{T_{\max}} \sum_{i=1}^n \1[i \in \A(t)] 
%    = \sum_{i=1}^n\sum_{t=1}^{T_{\max}} \1[i \in \A(t)]
%    = \sum_{i=1}^n \min\left\{T_{\max},  \max(T_i, T_i')\right\}
%\end{align*}
%Next, we show that for $i \in M_\epsilon^c$, on the event $\cE_1$, $T_i \geq T_i'$. 
%By definition of $T_i$ \eqref{eq:mult_defTi},
%for any $t > T_i$, we have
%$$\hat{\mu}_i(t) + C_{\delta/2n}(t) \stackrel{\cE_1}{\leq} L_t = \max_{j\in \A(t)}\hat{\mu}_j(t) - C_{\delta/2n}(t) - \epsilon < \max_{j\in \A(t)}\hat{\mu}_j(t) - C_{\delta/2n}(t).$$
%In particular this shows, $t > T_i \geq \min \left\{t: \hat{\mu}_i(t) + C_{\delta/2n}(t) \leq \max_{j \in \A(t)}\hat{\mu}_j(t) - C_{\delta/2n}\right\} = T_i'$ by definition \eqref{eq:mult_defTi'}. Therefore,
\begin{align*}
\sum_{i=1}^n\min\left\{T_{\max},  S_i\right\}
    & = \sum_{i\in M_\epsilon}\min\left\{\T_{\max},  \max(T_i, T_i')\right\} + \sum_{i\in M_\epsilon^c}\min\left\{T_{\max},  T_i\right\}\\
    & \leq \sum_{i\in M_\epsilon}\min\left\{T_{\max},  \max(T_i, T_i')\right\} 
    + |M_\epsilon^c\cap M_{\epsilon+\tilde{\alpha}_\epsilon}|T_{\max}
    + \sum_{i\in M_{\epsilon+\tilde{\alpha}_\epsilon}^c}T_i\\
    %& \stackrel{(a)}{=} \sum_{i\in M_\epsilon}\max\left\{\min(T_{\max}, T_i),  \min(T_{\max}, T_i')\right\} + |M_\epsilon^c\cap M_{\epsilon+\tilde{\alpha}_\epsilon}|T_{\max}
    %+ \sum_{i\in M_{\epsilon+\tilde{\alpha}_\epsilon}^c}T_i\\
    & = \sum_{i\in M_\epsilon}\max\left\{T_i,  \min(T_i', T_{\max})\right\} 
    + |M_\epsilon^c\cap M_{\epsilon+\tilde{\alpha}_\epsilon/\mu_1}|T_{\max}
    + \sum_{i\in M_{\epsilon+\tilde{\alpha}_\epsilon/\mu_1}^c}T_i\\
    & \stackrel{(a)}{\leq} \sum_{i \in M_\epsilon} \max\left\{h\left(\frac{\epsilon\mu_1 - \Delta_i}{4-2\epsilon}, \frac{\delta}{2n} \right), \min \left[h\left(0.25\Delta_i, \frac{\delta}{2n} \right), h\left(\frac{\tilde{\alpha}_\epsilon}{4-2\epsilon}, \frac{\delta}{2n} \right)\right] \right\} \\ 
    & \hspace{1cm} +  \sum_{i\in M_{\epsilon+\tilde{\alpha}_\epsilon/\mu_1}^c} h\left(\frac{\epsilon\mu_1 - \Delta_i}{4-2\epsilon}, \frac{\delta}{2n} \right) +  |M_\epsilon^c\cap M_{\epsilon+\tilde{\alpha}_\epsilon/\mu_1}|h\left(\frac{\tilde{\alpha}_\epsilon}{4-2\epsilon}, \frac{\delta}{2n} \right).
\end{align*}
%Equality $(a)$ follows from $\min[\max(a,b), c] = \max[\min(a,b), \min(b,c)]$. 
Equality $(a)$ follows from $T_{\max} \leq h\left(\frac{\tilde{\alpha}_\epsilon}{4-2\epsilon}, \frac{\delta}{2n} \right)$ by Step $1b$, $T_i \leq h\left(\frac{\epsilon\mu_1 - \Delta_i}{4-2\epsilon}, \frac{\delta}{2n} \right)$ in Steps 2a and 3, and $T_i' \leq h\left(0.25\Delta_i, \frac{\delta}{2n} \right)$ in Step 2c.

\subsubsection{Step 5: Bounding the number of samples in round $k$ versus $k-1$}

Now we show that the total number of samples taken in round $k$ is no more than $9$ times the number taken in the previous round.  

\textbf{Claim:} For $k > 1$
\begin{align*}
    \left(H_{\text{ME}}(n, 2^{-k}, 1/16) + \tau_k + \tau_k|(G_{k-1} \cup B_{k-1})^c|\right)
    \leq 9\left(H_{\text{ME}}(n, 2^{-k+1}, 1/16) + \tau_{k-1} + \tau_{k-1}|(G_{k-2} \cup B_{k-2})^c|\right)
\end{align*}

\noindent\textbf{Proof.} In round $k$, 
$\left(H_{\text{ME}}(n, 2^{-k}, 1/16) + \tau_k + \tau_k|(G_{k-1} \cup B_{k-1})^c|\right)$
samples are drawn. Since $G_{k-1} \subset G_k$ and $B_{k-1} \subset B_k \ \forall k$ deterministically, we see that 
$|(G_{k-1} \cup B_{k-1})^c| \geq |(G_{k} \cup B_{k})^c| \ \forall k.$
By definition,\\ $H_{\text{ME}}(n, 2^{-k-1}, 1/16) = 4H_{\text{ME}}(n, 2^{-k}, 1/16).$

Next, recall $\tau_k = \left\lceil2^{2k+3}\log \left(\frac{8}{\delta_k}\right) \right\rceil$. We bound $\tau_k/\tau_{k-1}$ as
\begin{align*}
    \frac{\tau_{k}}{\tau_{k-1}} & = \frac{\left\lceil2^{2k+3}\log \left(\frac{8}{\delta_{k}}\right) \right\rceil}{\left\lceil2^{2k+1}\log \left(\frac{8}{\delta_{k-1}}\right) \right\rceil} 
     = \frac{\left\lceil2^{2k+3}\log \left(\frac{16nk^2}{\delta}\right) \right\rceil}{\left\lceil2^{2k+1}\log \left(\frac{16n(k-1)^2}{\delta}\right) \right\rceil} \\
     &\leq \frac{2^{2k+3}\log \left(\frac{16nk^2}{\delta}\right) + 1}{2^{2k+1}\log \left(\frac{16n(k-1)^2}{\delta}\right)}
     \leq \frac{4\log \left(\frac{16nk^2}{\delta}\right)}{\log \left(\frac{16n(k-1)^2}{\delta}\right)} + 1\\
    & \leq 4\frac{\log \left(\frac{16n}{\delta}\right) + 2\log(k)}{\log \left(\frac{16n}{\delta}\right) + 2\log(k-1)} + 1 = (\ast)
\end{align*}
If $k$ = 2, $(\ast) \leq 1 + 4*\log(32)/\log(8) \leq 9$. Otherwise, 
\begin{align*}
    (\ast) & = \frac{4(\log \left(\frac{16n}{\delta}\right) + 2\log(k))}{\log \left(\frac{16n}{\delta}\right) + 2\log(k-1)} + 1\\
    & \leq \frac{4\log(k)}{\log(k-1)} + 1 \\
    &\leq 4\cdot 2 + 1 = 9
\end{align*}
Putting these pieces together, 
\begin{align*}
    & \left(H_{\text{ME}}(n, 2^{-k}, 1/16) + \tau_k + \tau_k|(G_{k-1} \cup B_{k-1})^c|\right) \\
    & \leq \left(4H_{\text{ME}}(n, 2^{-k+1}, 1/16) + 9\tau_{k-1} + 9\tau_{k-1}|(G_{k-2} \cup B_{k-2})^c|\right) \\
    & \leq 9\left(H_{\text{ME}}(n, 2^{-k+1}, 1/16) + \tau_{k-1} + \tau_{k-1}|(G_{k-2} \cup B_{k-2})^c|\right) \\
\end{align*}
\qed
\newline

\subsubsection{Step 6: Bounding Equation \eqref{eq:mult_first_sum}}

Here, we introduce the round $K_\text{Good}$, when $G_{K_\text{Good}} = M_\epsilon$ at some point within the round. Using the result of the previous step, we may bound the total number of samples taken though this round, controlling Equation \eqref{eq:mult_first_sum}.

With the result of Step 5, we prove the following inequality. 

\textbf{Claim:}
\begin{align}
& \sum_{k=1}^\infty 2\1\left[G_{k-1} \neq M_\epsilon\right]\1[G_{k-1} \cup B_{k-1} \neq [n]] \left(H_{\text{ME}}(n, 2^{-k}, 1/16) + \tau_k + \tau_k|(G_{k-1} \cup B_{k-1})^c|\right)\\
& \leq c\sum_{i \in M_\epsilon} \max\left\{h\left(\frac{\epsilon\mu_1 - \Delta_i}{4-2\epsilon}, \frac{\delta}{2n} \right), \min \left[h\left(0.25\Delta_i, \frac{\delta}{2n} \right), h\left(0.25\frac{\tilde{\alpha}_\epsilon}{4-2\epsilon}, \frac{\delta}{2n} \right)\right] \right\} \nonumber \\ 
& \hspace{1cm} + c|M_\epsilon^c\cap M_{\epsilon+\tilde{\alpha}_\epsilon/\mu_1}|h\left(\frac{\tilde{\alpha}_\epsilon}{4-2\epsilon}, \frac{\delta}{2n} \right) +  c\sum_{i\in M_{\epsilon+\tilde{\alpha}_\epsilon/\mu_1}^c} h\left(\frac{\epsilon\mu_1 - \Delta_i}{4-2\epsilon}, \frac{\delta}{2n} \right) \nonumber
\end{align}

% Let $K_\text{Good} = \min\{k: G_{k} = M_\epsilon\} $ if such a $K_\text{Good}$ exists. Otherwise let $K_\text{Good} = \infty$. In round $k$, $t$ is incremented at least
% $$\left\lfloor\frac{1}{|\A|}|(G_{k-1} \cup B_{k-1})^c|(2\tau_k + H_\text{ME}(n, 2^{-k}, 1/16))\right\rfloor \geq \left\lfloor\frac{1}{n}(2\tau_k + H_\text{ME}(n, 2^{-k}, 1/16))\right\rfloor \geq 1$$ 
% times since $|\A|\leq n$ and $|(G_{k-1} \cup B_{k-1})^c| \geq 1$ unless $G_{k-1} \cup B_{k-1} = [n]$ which implies termination. 
% %$\tau_{k-1} \leq \tau_k$ and $H_\text{ME}(n, 2^{-(k-1)}, 1/16) \leq H_\text{ME}(n, 2^{-k}, 1/16)\ \forall k$. 
% Applying Step $1$,
% $$\cE_1 \implies K_\text{Good} \leq h\left(\frac{\tilde{\alpha}_\epsilon}{4-2\epsilon},\frac{\delta}{2n} \right) < \infty$$

\noindent\textbf{Proof.} 
Recall $t_k = \max\{t: t\in k\}$ denotes the maximum value of $t$ in round $k$ and $T_{\max} = \max_{\in M_\epsilon}T_i$ denotes the minimum $t$ such that $G_k(t) = M_\epsilon$. Define the random round \[K_{\text{Good}} :=  \min\{k: G_k = M_\epsilon\}= \min\{k: t_k \geq T_{\max}\}\]
%
%First, we define a round $K_{\text{Good}}$ during which the Good Filter identifies the final good arm. During each round, $t$ is incremented in line $20$, and as shown in step $1$, this happens at least once per round. By the definition \eqref{eq:mult_defTi} of $T_i$, if $t > \max_{i\in M_\epsilon}T_i$, then $G_k(t) = M_\epsilon$ independent of the round $k$. If $t_k$ denotes the maximum value of $t$ in round $k$, then $K_{\text{Good}} := \min\{k: t_k \geq \max_{i\in M_\epsilon}T_i\}$, and by definition, $G_{K_{\text{Good}}}(t_{K_{\text{Good}}}) = M_\epsilon$. Hence
%represents the largest value of $t$ in the round $K_{\text{Good}} - 1$. By definition of $K_{\text{Good}}$, we see that $G_{(K_{\text{Good}} - 1)}(t_{(K_{\text{Good}} - 1)}) \neq M_\epsilon$. Hence, $t_{(K_{\text{Good}} - 1)} < \max_{i \in M_\epsilon}T_i$.
% 
\noindent By definition of $K_\text{Good}$,
\begin{align*}
    & \sum_{k=1}^\infty 2\1[G_{k-1} \neq M_\epsilon]\1[G_{k-1} \cup B_{k-1} \neq [n]] \left(H_{\text{ME}}(n, 2^{-k}, 1/16) + \tau_k + \tau_k|(G_{k-1} \cup B_{k-1})^c|\right) \\
    & \hspace{1cm} = \sum_{k=1}^{K_\text{Good}} 2\1[G_{k-1} \cup B_{k-1} \neq [n]] \left(H_{\text{ME}}(n, 2^{-k}, 1/16) + \tau_k + \tau_k|(G_{k-1} \cup B_{k-1})^c|\right).
\end{align*}
Next, applying Step $5$, if $K_\text{Good} > 1$
\begin{align*}
    & \sum_{k=1}^{K_\text{Good}} 2\1[G_{k-1} \cup B_{k-1} \neq [n]] \left(H_{\text{ME}}(n, 2^{-k}, 1/16) + \tau_k + \tau_k|(G_{k-1} \cup B_{k-1})^c|\right) \\
    & \hspace{1cm}\leq 18\sum_{k=1}^{K_\text{Good} - 1} \1[G_{k-2} \cup B_{k-2} \neq [n]] \left(H_{\text{ME}}(n, 2^{-k+1}, 1/16) + \tau_{k-1} + \tau_{k-1}|(G_{k-2} \cup B_{k-2})^c|\right).
\end{align*}
Observe that by lines $17$ and $20$ of \texttt{FAREAST}, for any round $r$ and for any $t > t_{r-1}$,
\begin{align*}
    \sum_{k=1}^{r-1}\1[G_{k-1} \cup B_{k-1} \neq [n]] \left(H_{\text{ME}}(n, 2^{-k}, 1/16) + \tau_k + \tau_k|(G_{k-1} \cup B_{k-1})^c|\right)
    \leq \sum_{s=1}^t |\A(s)|.
\end{align*}
By definition, for the round $K_\text{Good} - 1$, we see that $t_{(K_{\text{Good}} - 1)} < T_{\max}$. Applying the above inequality with the inequality proven in Step $4$,
\begin{align*}
    & 18\sum_{k=1}^{K_\text{Good} - 1} \left|(G_{k-1} \cup B_{k-1})^c\right| \left(2\tau_k + H_{\text{ME}}(n, 2^{-k}, 1/16)\right) 
    \leq 18\sum_{s=1}^{T_{\max}}|\A(s)|\\
 & \hspace{1cm} \leq 18\sum_{i \in M_\epsilon} \max\left\{h\left(\frac{\epsilon\mu_1 - \Delta_i}{4-2\epsilon}, \frac{\delta}{2n} \right), \min \left[h\left(0.25\Delta_i, \frac{\delta}{2n} \right), h\left(\frac{\tilde{\alpha}_\epsilon}{4-2\epsilon}, \frac{\delta}{2n} \right)\right] \right\} \\ 
    & \hspace{2cm} +  18\sum_{i\in M_{\epsilon+\tilde{\alpha}_\epsilon/\mu_1}^c} h\left(\frac{\epsilon\mu_1 - \Delta_i}{4-2\epsilon}, \frac{\delta}{2n} \right) +  18|M_\epsilon^c\cap M_{\epsilon+\tilde{\alpha}_\epsilon/\mu_1}|h\left(\frac{\tilde{\alpha}_\epsilon}{4-2\epsilon}, \frac{\delta}{2n} \right).
\end{align*}
Otherwise, if $K_{\text{Good}} = 1$, exactly $4c'n\log(16) + 32n\log(16n/\delta)$ samples are given to the good filter in round $1$. One may use Lemma~\ref{lem:inv_conf_bounds} to invert $h(\cdot, \cdot)$ and show that the summation on the right had side of the above inequality is within a constant of this and the claim holds in this case as well for a different constant, potentially larger than $18$. 
\qed

% \begin{align*}
%     & \sum_{k=1}^\infty 2\1[G_{k-1} \neq M_\epsilon]|(G_{k-1} \cup B_{k-1})^c| \left(2\tau_k + H_{\text{ME}}(n, 2^{-k}, 1/16)\right) \\
%      & = \sum_{k=1}^{K_\text{Good}} 2\left|(G_{k-1} \cup B_{k-1})^c\right| \left(2\tau_k + H_{\text{ME}}(n, 2^{-k}, 1/16)\right) \\
%      & \stackrel{\text{Step }4}{\leq} 18\sum_{k=1}^{K_\text{Good} - 1} \left|(G_{k-1} \cup B_{k-1})^c\right| \left(2\tau_k + H_{\text{ME}}(n, 2^{-k}, 1/16)\right) \\
%      & \leq 18\sum_{t=1}^{\max_{i \in M_\epsilon}T_i} |\A(t)|\\
%     & \leq 18\sum_{i \in M_\epsilon} \max\left\{h\left(0.25(\epsilon - \Delta_i), \frac{\delta}{2n} \right), \min \left[h\left(0.25\Delta_i, \frac{\delta}{2n} \right), h\left(\frac{\tilde{\alpha}_\epsilon}{4-2\epsilon}, \frac{\delta}{2n} \right)\right] \right\} \\ 
%     & \hspace{1cm} + 18|M_\epsilon^c\cap M_{\epsilon+\tilde{\alpha}_\epsilon}|h\left(\frac{\tilde{\alpha}_\epsilon}{4-2\epsilon}, \frac{\delta}{2n} \right) +  18\sum_{i\in M_{\epsilon+\tilde{\alpha}_\epsilon}^c} h\left(0.25(\epsilon - \Delta_i), \frac{\delta}{2n} \right) 
% \end{align*}

\subsubsection{Step 7: Bounding Equation \eqref{eq:mult_second_sum}}

Next, we bound $\sum_{k=1}^\infty 2\1\left[G_{k-1} = M_\epsilon\right]\1\left[G_{k-1}\cup B_{k-1} \neq [n]\right] \left(H_{\text{ME}}(n, 2^{-k}, 1/16) + \tau_k + \tau_k |(G_{k-1} \cup B_{k-1})^c|\right)$.
\begin{align}
    & \sum_{k=1}^\infty 2\1\left[G_{k-1} = M_\epsilon\right]\1\left[G_{k-1}\cup B_{k-1} \neq [n]\right]\left(H_{\text{ME}}(n, 2^{-k}, 1/16) + \tau_k + \tau_k |(G_{k-1} \cup B_{k-1})^c|\right) \nonumber\\
    & = \sum_{k=1}^\infty 2\1\left[G_{k-1} = M_\epsilon\right]\1\left[G_{k-1}\cup B_{k-1} \neq [n]\right]\left(H_{\text{ME}}(n, 2^{-k}, 1/16) + \tau_k + \tau_k |(M_\epsilon \cup B_{k-1})^c|\right) \nonumber\\
    & = \sum_{k=1}^\infty 2\1\left[G_{k-1} = M_\epsilon\right]\1\left[G_{k-1}\cup B_{k-1} \neq [n]\right]\left(H_{\text{ME}}(n, 2^{-k}, 1/16) + \tau_k + \tau_k |M_\epsilon^c \backslash B_{k-1}|\right) \nonumber\\
    & = \sum_{k=K_\text{Good}+1}^\infty 2\1\left[G_{k-1}\cup B_{k-1} \neq [n]\right]\left(H_{\text{ME}}(n, 2^{-k}, 1/16) + \tau_k + \tau_k |M_\epsilon^c \backslash B_{k-1}|\right) \nonumber\\
    & \stackrel{\cE_1, \cE_2}{=} \sum_{k=K_\text{Good}+1}^\infty 2\1\left[B_{k-1} \neq M_\epsilon^c\right]\left(H_{\text{ME}}(n, 2^{-k}, 1/16) + \tau_k + \tau_k |M_\epsilon^c \backslash B_{k-1}|\right) \nonumber\\
    & = \sum_{k=K_\text{Good}+1}^\infty 2\1\left[B_{k-1} \neq M_\epsilon^c\right]\left(H_{\text{ME}}(n, 2^{-k}, 1/16) + \tau_k\right) + \sum_{k=K_\text{Good}+1}^\infty 2\1\left[B_{k-1} \neq M_\epsilon^c\right]\left(\tau_k |M_\epsilon^c \backslash B_{k-1}|\right) \nonumber\\
    & = \sum_{k=K_\text{Good}+1}^\infty 2\1\left[B_{k-1} \neq M_\epsilon^c\right]\left(H_{\text{ME}}(n, 2^{-k}, 1/16) + \tau_k\right) + \sum_{k=K_\text{Good}+1}^\infty 2\tau_k |M_\epsilon^c \backslash B_{k-1}| \nonumber\\
    & = \sum_{k=K_\text{Good}+1}^\infty 2\1\left[B_{k-1} \neq M_\epsilon^c\right]\left(H_{\text{ME}}(n, 2^{-k}, 1/16) + \tau_k\right) + \sum_{k=K_\text{Good}+1}^\infty\sum_{i\in M_\epsilon^c} 2\tau_k \1[i \notin B_{k-1}] \nonumber\\
    & \leq \sum_{k=K_\text{Good}+1}^\infty 2|M_\epsilon^c \backslash B_{k-1} |\left(H_{\text{ME}}(n, 2^{-k}, 1/16) + \tau_k\right) + \sum_{k=K_\text{Good}+1}^\infty\sum_{i\in M_\epsilon^c} 2\tau_k \1[i \notin B_{k-1}] \nonumber\\
    & = \sum_{k=K_\text{Good}+1}^\infty\sum_{i \in M_\epsilon^c} 2 \1[i \notin B_{k-1}]\left(H_{\text{ME}}(n, 2^{-k}, 1/16) + \tau_k\right) + \sum_{k=K_\text{Good}+1}^\infty\sum_{i\in M_\epsilon^c} 2\tau_k \1[i \notin B_{k-1}] \nonumber\\
    & = \sum_{k=K_\text{Good}+1}^\infty \sum_{i\in M_\epsilon^c} 2\1[i \notin B_{k-1}] \left(2\tau_k + H_{\text{ME}}(n, 2^{-k}, 1/16)\right) \nonumber\\
    & =  \sum_{i\in M_\epsilon^c}\sum_{k=K_\text{Good}+1}^\infty 2\1[i \notin B_{k-1}] \left(2\tau_k + H_{\text{ME}}(n, 2^{-k}, 1/16)\right) \nonumber \\
    & \leq \sum_{i\in M_\epsilon^c} \sum_{k=1}^\infty 2\1[i \notin B_{k-1}] \left(2\tau_k + H_{\text{ME}}(n, 2^{-k}, 1/16)\right) \label{eq:third_sum}
\end{align}

\subsubsection{Step 8: Bounding the expected total number of samples drawn by \texttt{FAREAST}}

Now we take expectations over the number of samples drawn. These expectations are conditional on the high probability event $\cE_1 \cap \cE_2$. The bound in step 5 holds deterministically conditioned on this event.

Note $\tau_k$ and $H_{\text{ME}}(n, 2^{-k}, 1/16)$ are deterministic constants for any $k$. Let all expectations are be jointly over the random instance $\nu$ and the randomness in \texttt{FAREAST}. 
%% Consider the filtration $\mathcal{F}_t = \sigma(a_1, z_1, \cdots, a_t, z_t)$ where $a_t$ denotes the outcome of the arm sampled at time $t$ and $z_t$ denotes the state of the algorithm at time $t'$. 
\begin{align*}
    \E[T  | \1[\cE_1 & \cap \cE_2] = 1]  =\\ &\hspace{-1cm} \sum_{k=1}^{\infty}2\E\left[\1[G_k\cup B_k \neq [n]]\big| \1[\cE_1 \cap \cE_2] = 1\right] \left(H_{\text{ME}}(n, 2^{-k}, 1/16) + \tau_k + \tau_k\big|(G_{k-1} \cup B_{k-1})^c\big|\right)\\
    & = \sum_{k=1}^\infty 2\E\left[\1\left[G_{k-1} \neq M_\epsilon\right]\1[G_k\cup B_k \neq [n]] \big|\1[\cE_1 \cap \cE_2] = 1\right]  \\ & \hspace{2cm}\left(H_{\text{ME}}(n, 2^{-k}, 1/16) + \tau_k + \tau_k\big|(G_{k-1} \cup B_{k-1})^c\big|\right) \\
    &\hspace{1cm} + \sum_{k=1}^\infty 2\E\left[\1\left[G_{k-1} = M_\epsilon\right]\1[G_k\cup B_k \neq [n]]\big|\1[\cE_1 \cap \cE_2] = 1\right]  \\ & \hspace{2cm}\left(H_{\text{ME}}(n, 2^{-k}, 1/16) + \tau_k + \tau_k\big|(G_{k-1} \cup B_{k-1})^c\big|\right)\\
    & \stackrel{\text{Step }6}{\leq} c\sum_{i \in M_\epsilon} \max\left\{h\left(\frac{\epsilon\mu_1 - \Delta_i}{4-2\epsilon}, \frac{\delta}{2n} \right), \min \left[h\left(0.25\Delta_i, \frac{\delta}{2n} \right), h\left(\frac{\tilde{\alpha}_\epsilon}{4-2\epsilon}, \frac{\delta}{2n} \right)\right] \right\} \\ 
    & \hspace{1cm} +  c\sum_{i\in M_{\epsilon+\tilde{\alpha}_\epsilon/\mu_1}^c} h\left(\frac{\epsilon\mu_1 - \Delta_i}{4-2\epsilon}, \frac{\delta}{2n} \right) +  c|M_\epsilon^c\cap M_{\epsilon+\tilde{\alpha}_\epsilon/\mu_1}|h\left(\frac{\tilde{\alpha}_\epsilon}{4-2\epsilon}, \frac{\delta}{2n} \right)\\
    &\hspace{1cm} + \sum_{k=1}^\infty 2\E\left[\1\left[G_{k-1} = M_\epsilon\right]\1[G_k\cup B_k \neq [n]]\big|\1[\cE_1 \cap \cE_2] = 1\right]  \\ & \hspace{2cm}\left(H_{\text{ME}}(n, 2^{-k}, 1/16) + \tau_k + \tau_k\big|(G_{k-1} \cup B_{k-1})^c\big|\right)\\
    & \stackrel{\text{Step }7}{\leq} c\sum_{i \in M_\epsilon} \max\left\{h\left(\frac{\epsilon\mu_1 - \Delta_i}{4-2\epsilon}, \frac{\delta}{2n} \right), \min \left[h\left(0.25\Delta_i, \frac{\delta}{2n} \right), h\left(\frac{\tilde{\alpha}_\epsilon}{4-2\epsilon}, \frac{\delta}{2n} \right)\right] \right\} \\ 
    & \hspace{1cm} +  c\sum_{i\in M_{\epsilon+\tilde{\alpha}_\epsilon/\mu_1}^c} h\left(\frac{\epsilon\mu_1 - \Delta_i}{4-2\epsilon}, \frac{\delta}{2n} \right) +  c|M_\epsilon^c\cap M_{\epsilon+\tilde{\alpha}_\epsilon/\mu_1}|h\left(\frac{\tilde{\alpha}_\epsilon}{4-2\epsilon}, \frac{\delta}{2n} \right)\\
    &\hspace{1cm} + \sum_{i\in M_\epsilon^c} \sum_{k=1}^\infty 2\E_\nu\left[\1[i \notin B_{k-1}]\big|\1[\cE_1 \cap \cE_2] = 1\right] \left(2\tau_k + H_{\text{ME}}(n, 2^{-k}, 1/16)\right) \\
    & \stackrel{(a)}{=} c\sum_{i \in M_\epsilon} \max\left\{h\left(\frac{\epsilon\mu_1 - \Delta_i}{4-2\epsilon}, \frac{\delta}{2n} \right), \min \left[h\left(0.25\Delta_i, \frac{\delta}{2n} \right), h\left(\frac{\tilde{\alpha}_\epsilon}{4-2\epsilon}, \frac{\delta}{2n} \right)\right] \right\} \\ 
    & \hspace{1cm} +  c\sum_{i\in M_{\epsilon+\tilde{\alpha}_\epsilon/\mu_1}^c} h\left(\frac{\epsilon\mu_1 - \Delta_i}{4-2\epsilon}, \frac{\delta}{2n} \right) +  c|M_\epsilon^c\cap M_{\epsilon+\tilde{\alpha}_\epsilon}|h\left(\frac{\tilde{\alpha}_\epsilon/\mu_1}{4-2\epsilon}, \frac{\delta}{2n} \right)\\
    &\hspace{1cm} + \sum_{i\in M_\epsilon^c} \sum_{k=1}^\infty 2\E_\nu\left[\1[i \notin B_{k-1}] | \1[\cE_1]\right] \left(2\tau_k + H_{\text{ME}}(n, 2^{-k}, 1/16)\right) 
\end{align*}
where $(a)$ follows from $\E_\nu\left[\1[i \notin B_{k-1}]\big|\1[\cE_1 \cap \cE_2]\right] = \E_\nu\left[\1[i \notin B_{k-1}]\big| \1[\cE_1]\right]$ for $i \in M_\epsilon^c$, since the event $\{i \in B_{k-1}\}$ is independent of $\cE_2$ for all $i \in M_\epsilon^c$. This can be observed since $\cE_2$ deals only with independent samples taken of arms in $M_\epsilon$. 
% \\

% \blue{The independence piece used in equality $(a)$ of the above is important. In particular, this implies that we may bound the expected number of samples drawn by the bad filter for bad arms independent of the behavior of the good filter and good arms given to the bad filter. } \\

% \noindent\textbf{Thinking about expectations} Note that $\cE_1$ and $\cE_2$ are independent. 
% $\E[\cdot] = \E[\cdot|\cE_1]$\\

\subsubsection{Step 9: Bounding $\sum_{k=1}^\infty \E_\nu\left[\1[i \notin B_{k-1}]| \1[\cE_1]\right] \left(2\tau_k + H_{\text{ME}}(n, 2^{-k}, 1/16)\right)$ for $i \in M_\epsilon^c$} 

Next, we bound the expectation remaining from step 8. In particular, this is the number of samples drawn by the bad filter to add arm $i \in M_\epsilon^c$ to $B_k$.

First, we bound the probability that for a given $i \in M_\epsilon^c$ and a given $k$ $i \notin B_k$. Note that by Borel-Cantelli, this implies that the probability that $i$ is never added to any $B_k$ is $0$.

\noindent\textbf{Claim 1:} 
For $i \in M_\epsilon^c$, $k \geq \left \lceil \log_2\left( \frac{2-\epsilon}{\Delta_i - \epsilon\mu_1} \right) \right \rceil \implies \E_\nu\left[\1[i \notin B_{k}]|\1[\cE_1]\right] \leq \left(\frac{1}{8}\right)^{k - \left \lceil \log_2\left( \frac{2-\epsilon}{\Delta_i - \epsilon\mu_1} \right) \right \rceil}$

%\noindent Condition on $i \notin B_{k-1}$. $\mu_{i_k}$ and $\mu_i$ are both sampled by $\tau_k$ times. By Hoeffding's Inequality and a union bound over possible values of $i_k$, $$\P\left(\left|\hat{\mu}_{i_k} - \mu_{i_k} \right| \geq 2^{-k-1} \right) \leq \frac{\delta}{8nk^2}$$
%\noindent and
%$$\P\left(\left|\hat{\mu}_{i} - \mu_{i} \right| \geq 2^{-k-1} \right) \leq \frac{\delta}{8n^2k^2}.$$

%%%%%%%%%% Note on the union bound %%%%%%%%%%%
% In one case, you use the upper bound on mu_i_k only, and in the other case you use only the lower bound on mu_i so this each is a delta/8 event so the joint is a delta/4 as needed. 
\textbf{Proof.} 
If $i \in B_k$, either the good or the bad filter may have added it. The behavior of the bad filter on arms inn $M_\epsilon^c$ is independent of $\cE_1$. Hence.
\begin{align*}
    \E_\nu\left[\1[i \notin B_{k}] | \1[\cE_1]\right] & = \E_\nu\left[\1[\hat{\mu}_{i} + C_{\delta/2n}(t) \geq L_{t_k}]\1[\hat{\mu}_{i_k} - \hat{\mu}_i \leq 2^{-(k+1)}(2-\epsilon)]|\1[\cE_1]\right] \\
    & \leq \E_\nu\left[\1[\hat{\mu}_{i_k} - \hat{\mu}_i \leq 2^{-(k+1)}(2-\epsilon)]|\1[\cE_1]\right]\\
    & = \E_\nu\left[\1[\hat{\mu}_{i_k} - \hat{\mu}_i \leq 2^{-(k+1)}(2-\epsilon)]\right]
\end{align*}

If $i \in B_{k-1}$ then $i \in B_k$ by definition. Otherwise, if $i \notin B_{k-1}$, by Hoeffding's Inequality conditional on the value of $i_k$ and a sum over conditional probabilities as in step $0$, with probability at least $1 - \frac{\delta}{4nk^2}$
\begin{align*}
    \left|\left((1-\epsilon)\hat{\mu}_{i_k}- \hat{\mu}_i\right) - ((1-\epsilon)\mu_{i_k} - \mu_i)\right| 
    \leq  2^{-(k+1)}
\end{align*}
If \texttt{MedianElimination} also succeeds, the joint event of which occurs with probability $\frac{15}{16}\left(1 - \frac{\delta}{4nk^2}\right)$ by independence\footnote{Note that the success of \texttt{MedianElimination} and the concentration of $\left(\hat{\mu}_{i_k}- \hat{\mu}_i\right)$ around $ (\mu_{i_k} - \mu_i)$ are independent of the events $\cE_1$ and $\cE_2$ conditioned on in Step $8$.}, 
\begin{align*}
    (1-\epsilon)\hat{\mu}_{i_k}- \hat{\mu}_i &\geq  (1-\epsilon)\mu_{i_k} - \mu_i - 2^{-(k+1)}\\
    &\geq (1-\epsilon)\mu_1 - \mu_i  - 2^{-(k+1)}(2 -\epsilon) \\
    &= \Delta_i - \epsilon\mu_1 - 2^{-(k+1)}(2 -\epsilon).  
\end{align*}
Then for $k \geq \left \lceil \log_2\left( \frac{2-\epsilon}{\Delta_i - \epsilon\mu_1} \right) \right \rceil$, 
\begin{align*}
    (1-\epsilon)\hat{\mu}_{i_k}- \hat{\mu}_i
     \geq \Delta_i - \epsilon\mu_1 -2^{-(k+1)}(2-\epsilon) \geq 2^{-(k+1)}(2-\epsilon),
\end{align*}
which implies that $i \in B_{k}$ by line $15$ of \texttt{FAREAST}. In particular, 
$$\E\left[\1[i \in B_{k}] \big|i \notin B_{k-1} \1[\cE_1]\right] \geq \E\left[\hat{\mu}_{i_k} - \hat{\mu}_i > 2^{-(k+1)}(2-\epsilon) \big|i \notin B_{k-1}, \1[\cE_1] \right] \geq  \frac{15}{16}\left(1 - \frac{\delta}{4nk^2}\right).$$ 
Furthermore, $i \notin B_0$ by definition. Then for $k \geq \left \lceil \log_2\left( \frac{2-\epsilon}{\Delta_i - \epsilon\mu_1} \right) \right \rceil$,
\begin{align*}
    \E\left[\1[i \notin B_{k}] |\1[\cE_1]\right] & = \E\left[\1[i \notin B_{k}](\1[i \notin B_{k-1}] + \1[i \in B_{k-1}])|\1[\cE_1]\right] \\
    & = \E\left[\1[i \notin B_{k}]\1[i \notin B_{k-1}]|\1[\cE_1]\right] + \E\left[\1[i \notin B_{k}]\1[i \in B_{k-1}]|\1[\cE_1]\right]
\end{align*}  
Deterministically, $\1[i \notin B_{k}]\1[i \in B_{k-1}] = 0$. Therefore, 
\begin{align*}
\E & \left[\1[i \notin B_{k}]\1[i \notin B_{k-1}]|\1[\cE_1]\right] + \E\left[\1[i \notin B_{k}]\1[i \in B_{k-1}]|\1[\cE_1]\right] \\
    & =  \E\left[\1[i \notin B_{k}]\1[i \notin B_{k-1}]|\1[\cE_1]\right] \\
    & = \E\left[\1[i \notin B_{k}]\1[i \notin B_{k-1}] |i \notin B_{k-1}, \1[\cE_1] \right]\P(i \notin B_{k-1} | \1[\cE_1]) \\
    & \hspace{1cm}+ \E\left[\1[i \notin B_{k}]\1[i \notin B_{k-1}] \big|i \in B_{k-1}, \1[\cE_1] \right]\P(i \in B_{k-1}| \1[\cE_1])\\
    & = \E\left[\1[i \notin B_{k}]\1[i \notin B_{k-1}] |i \notin B_{k-1}, \1[\cE_1] \right]\P(i \notin B_{k-1}|\1[\cE_1]) \\
    & = \E\left[\1[i \notin B_{k}] \big|i \notin B_{k-1}, \1[\cE_1] \right]\E\left[\1[i \notin B_{k-1}]| \1[\cE_1]\right] \\
    & \leq \left(\frac{1}{16} + \frac{\delta}{4nk^2}\right)\E\left[\1[i \notin B_{k-1}]|\1[\cE_1]\right].
\end{align*}
For $k < \left \lceil \log_2\left( \frac{2-\epsilon}{\Delta_i - \epsilon\mu_1} \right) \right \rceil$, trivially, $\E\left[\1[i \notin B_{k}]|\1[\cE_1]\right] \leq 1$.
Recall $\delta \leq 1/8$. For $k \geq \left \lceil \log_2\left( \frac{2-\epsilon}{\Delta_i - \epsilon\mu_1} \right) \right \rceil$,
\begin{align*}
\E\left[\1[i \notin B_{k}]|\1[\cE_1]\right]
\leq & \prod_{s=\left \lceil \log_2\left( \frac{2-\epsilon}{\Delta_i - \epsilon\mu_1} \right) \right \rceil}^{k} \left(\frac{1}{16} + \frac{\delta}{2ns^2}\right) 
\leq  \left(\frac{1}{8}\right)^{k - \left \lceil \log_2\left( \frac{2-\epsilon}{\Delta_i - \epsilon\mu_1} \right) \right \rceil}.
\end{align*}
\qed

\noindent \textbf{Claim 2:} For $j \in M_\epsilon^c$,  $\sum_{k=1}^\infty 2\E_\nu\left[\1[i \notin B_{k-1}]|\1[\cE_1]\right] \left(2\tau_k + H_{\text{ME}}(n, 2^{-k}, 1/16)\right) \leq c'' \frac{4n(2-\epsilon)^2}{(\Delta_i - \epsilon\mu_1)^2} + c''h\left(\frac{\Delta_i - \epsilon\mu_1}{4 - 2\epsilon}, \frac{\delta}{2n}\right) $

\textbf{Proof.}
This sum decomposes into two terms.
\begin{align*}
\sum_{k=1}^\infty & \E_\nu\left[\1[i \notin B_{k-1}]|\1[\cE_1]\right] \left(2\tau_k + H_{\text{ME}}(n, 2^{-k}, 1/16)\right) \\
    %&= \sum_{k=1}^\infty \E_\nu\left[\1[i \notin B_{k-1}]\right] \left(H_{\text{ME}}(n, 2^{-k}, 1/16) + 2\left\lceil 2^{2k + 3}\log \left(\frac{16nk^2}{\delta}\right) \right\rceil \right)\\
    & =\sum_{k=1}^{\left \lfloor \log_2\left( \frac{2-\epsilon}{\Delta_i - \epsilon\mu_1} \right) \right \rfloor} \E_\nu\left[\1[i \notin B_{k-1}]|\1[\cE_1]\right] \left(H_{\text{ME}}(n, 2^{-k}, 1/16) + 2\left\lceil 2^{2k + 3}\log \left(\frac{16nk^2}{\delta}\right) \right\rceil \right) \\ & + \sum_{k=\left \lceil \log_2\left( \frac{2-\epsilon}{\Delta_i - \epsilon\mu_1} \right) \right \rceil}^\infty \E_\nu\left[\1[i \notin B_{k-1}]|\1[\cE_1]\right] \left(H_{\text{ME}}(n, 2^{-k}, 1/16) + 2\left\lceil 2^{2k + 3}\log \left(\frac{16nk^2}{\delta}\right) \right\rceil \right)
\end{align*}
We begin by bounding the first term. 
\begin{align*}
\sum_{k=1}^{\left \lfloor \log_2\left( \frac{2-\epsilon}{\Delta_i - \epsilon\mu_1} \right) \right \rfloor} & \E_\nu\left[\1[i \notin B_{k-1}]|\1[\cE_1]\right] \left(H_{\text{ME}}(n, 2^{-k}, 1/16) + 2\left\lceil 2^{2k + 3}\log \left(\frac{16nk^2}{\delta}\right) \right\rceil \right) \\ 
& \leq \sum_{k=1}^{\left \lfloor \log_2\left( \frac{2-\epsilon}{\Delta_i - \epsilon\mu_1} \right) \right \rfloor}  \left(H_{\text{ME}}(n, 2^{-k}, 1/16) + 2\left\lceil 2^{2k + 3}\log \left(\frac{16nk^2}{\delta}\right) \right\rceil \right) \\ 
& \leq \sum_{k=1}^{\left \lfloor \log_2\left( \frac{2-\epsilon}{\Delta_i - \epsilon\mu_1} \right) \right \rfloor}  \left(c'n2^{2k}\log(16) + 2 +  2^{2k + 4}\log \left(\frac{16nk^2}{\delta}\right)  \right) \\ 
& \leq 2 \log_2\left( \frac{2-\epsilon}{\Delta_i - \epsilon\mu_1} \right)  + \left(c'n\log(16) + 16\log \left( \frac{16n}{\delta}\right)\right)\sum_{k=1}^{\left \lfloor \log_2\left( \frac{2-\epsilon}{\Delta_i - \epsilon\mu_1} \right) \right \rfloor}  2^{2k} \\
&  \hspace{2cm} + 32\sum_{k=1}^{\left \lfloor \log_2\left( \frac{2-\epsilon}{\Delta_i - \epsilon\mu_1} \right) \right \rfloor} 2^{2k}\log \left(k\right)   \\ 
& \leq 2 \log_2\left( \frac{2-\epsilon}{\Delta_i - \epsilon\mu_1} \right)  + \left(c'n\log(16) + 16\log \left( \frac{16n}{\delta}\right) + 32\log \log_2\left( \frac{2-\epsilon}{\Delta_i - \epsilon\mu_1} \right)  \right)\sum_{k=1}^{\left \lfloor \log_2\left( \frac{2-\epsilon}{\Delta_i - \epsilon\mu_1} \right) \right \rfloor}  2^{2k} \\
% & \leq 2 \log_2\left( \frac{2-\epsilon}{\Delta_i - \epsilon\mu_1} \right)  + \left(c'n\log(16) + 16\log\left(\frac{16n^2}{\delta} \log_2\left( \frac{2-\epsilon}{\Delta_i - \epsilon\mu_1} \right)  \right)\right) 2\cdot2^{2 \log_2\left( \frac{2-\epsilon}{\Delta_i - \epsilon\mu_1} \right) } \\
& \leq 2 \log_2\left( \frac{2-\epsilon}{\Delta_i - \epsilon\mu_1} \right)  + \frac{(2-\epsilon)^2}{(\Delta_i - \epsilon\mu_1)^2}\left(c'n\log(16) + 32\log\left(\frac{16n}{\delta} \log_2\left( \frac{2-\epsilon}{\Delta_i - \epsilon\mu_1} \right)  \right)\right) 
\end{align*}

Next, we plug in the bound from claim 1 controlling the probability that $i \notin B_k$.

Using Claim $1$, we bound the second sum as follows:
\begin{align*}
    \sum_{r=\left \lceil \log_2\left( \frac{2-\epsilon}{\Delta_i - \epsilon\mu_1} \right) \right \rceil}^\infty & \E_\nu\left[\1[i \notin B_{k-1}]|\1[\cE_1]\right] \left(H_{\text{ME}}(n, 2^{-k}, 1/16) + 2\left\lceil 2^{2k + 3}\log \left(\frac{16nk^2}{\delta}\right) \right\rceil \right) \\
    % & \leq \sum_{k=\left \lceil \log_2\left( \frac{2-\epsilon}{\Delta_i - \epsilon\mu_1} \right) \right \rceil}^\infty \left(\frac{1}{8}\right)^{k - \left \lceil \log_2\left( \frac{2-\epsilon}{\Delta_i - \epsilon\mu_1} \right) \right \rceil - 1}  \left(c'n2^{2k}\log(16) + 2\left\lceil 2^{2k + 3}\log \left(\frac{16nk^2}{\delta}\right) \right\rceil \right) \\
    & \leq \sum_{k=\left \lceil \log_2\left( \frac{2-\epsilon}{\Delta_i - \epsilon\mu_1} \right) \right \rceil}^\infty \left(\frac{1}{8}\right)^{k - \left \lceil \log_2\left( \frac{2-\epsilon}{\Delta_i - \epsilon\mu_1} \right) \right \rceil - 1}  \left(c'n2^{2k}\log(16) + 2 + 2^{2k+4}\log \left(\frac{16nk^2}{\delta}\right)  \right)\\
    & = c'n\log(16)\sum_{k=1}^\infty \left(\frac{1}{8}\right)^{k - 1}  2^{2\left(k + \left \lceil \log_2\left( \frac{2-\epsilon}{\Delta_i - \epsilon\mu_1} \right) \right \rceil \right)}  + 2\sum_{k=1}^\infty \left(\frac{1}{8}\right)^{k - 1} \\
    & \hspace{2cm} + 16\sum_{k=1}^\infty \left(\frac{1}{8}\right)^{k - 1}  2^{2\left(k + \left \lceil \log_2\left( \frac{2-\epsilon}{\Delta_i - \epsilon\mu_1} \right) \right \rceil \right)}\log \left(\frac{16n\left(k + \left \lceil \log_2\left( \frac{2-\epsilon}{\Delta_i - \epsilon\mu_1} \right) \right \rceil \right)^2}{\delta}\right)  \\
    & \hspace{2cm} + 16\sum_{k=1}^\infty 2^{-3k +3}  2^{2\left(k + \log_2\left( \frac{2-\epsilon}{\Delta_i - \epsilon\mu_1} \right)  +1 \right)}\log \left(\frac{16n\left(k + \left \lceil \log_2\left( \frac{2-\epsilon}{\Delta_i - \epsilon\mu_1} \right) \right \rceil \right)^2}{\delta}\right)  \\
    & = 3 + \left(c'n\log(16)\frac{2^5(2-\epsilon)^2}{(\Delta_i - \epsilon\mu_1)^2} + \frac{2^{9}(2-\epsilon)^2}{(\Delta_i - \epsilon\mu_1)^2}\log \left( \frac{16n}{\delta}\right) \right)\sum_{k=1}^\infty 2^{-k}   \\
    & \hspace{2cm} + \frac{2^{9}(2-\epsilon)^2}{(\Delta_i - \epsilon\mu_1)^2}\sum_{k=1}^\infty 2^{-k} \log \left(\left(k + \left \lceil \log_2\left( \frac{2-\epsilon}{\Delta_i- \epsilon\mu_1} \right) \right \rceil \right)^2\right) \\
    & \leq 3 + c'n\log(16)\frac{2^5(2-\epsilon)^2}{(\Delta_i - \epsilon\mu_1)^2} + \frac{2^{9}(2-\epsilon)^2}{(\Delta_i - \epsilon\mu_1)^2}\log \left( \frac{16n}{\delta}\right)   \\
    & \hspace{2cm} + \frac{2^{10}(2-\epsilon)^2}{(\Delta_i - \epsilon\mu_1)^2}\sum_{k=1}^\infty 2^{-k} \log \left(k + \left \lceil \log_2\left( \frac{2-\epsilon}{\Delta_i - \epsilon\mu_1} \right) \right \rceil \right) \\
    & = (\ast \ast)
\end{align*}
We may bound the final summand, $\sum_{k=1}^\infty 2^{-k} \log \left(k + \left \lceil \log_2\left( \frac{2-\epsilon}{\Delta_i - \epsilon\mu_1} \right) \right \rceil \right)$ as follows:
\begin{align*}
    \sum_{k=1}^\infty 2^{-k} & \log \left(k + \left \lceil \log_2\left( \frac{2-\epsilon}{\Delta_i - \epsilon\mu_1} \right) \right \rceil \right) 
    % & = \sum_{k=1}^\infty 2^{-k} \left(\log(k) +  \log \left(1 + \frac{\left \lceil \log_2\left( \frac{2-\epsilon}{\Delta_i - \epsilon\mu_1} \right) \right \rceil}{k} \right)\right) \\
    % & \leq \sum_{k=1}^\infty 2^{-k} \left(\log(k) +  \log \left(1 + \frac{ \log_2\left( \frac{2-\epsilon}{\Delta_i - \epsilon\mu_1} \right) +1}{k} \right)\right) \\
    % & \leq 1 + \sum_{k=1}^\infty 2^{-k} \log \left(1 + \frac{ \log_2\left( \frac{2-\epsilon}{\Delta_i - \epsilon\mu_1} \right) +1}{k} \right)\\
    % & \leq 1 + \log \left(2 + \log_2\left( \frac{2-\epsilon}{\Delta_i - \epsilon\mu_1} \right) \right) \\
    \leq \log \left(\frac{e}{2}\log_2\left( \frac{16(2-\epsilon)^2}{(\Delta_i - \epsilon\mu_1)^2} \right) \right)
\end{align*}
Plugging this back into $(\ast \ast)$, we have that 
\begin{align*}
    (\ast \ast) \leq 3 & + c'n\log(16)\frac{2^5(2-\epsilon)^2}{(\Delta_i - \epsilon\mu_1)^2} + \frac{2^{9}(2-\epsilon)^2}{(\Delta_i - \epsilon\mu_1)^2}\log \left( \frac{16n}{\delta}\right) \\   
    & + \frac{2^{10}(2-\epsilon)^2}{(\Delta_i - \epsilon\mu_1)^2}\log \left(\frac{e}{2}\log_2\left( \frac{16(2-\epsilon)^2}{(\Delta_i - \epsilon\mu_1)^2} \right) \right)
%    & \leq c''\left(\frac{n}{(\Delta_i - \epsilon\mu_1)^2} + \frac{c}{(\Delta_i - \epsilon\mu_1)^2}\log\left(\frac{\delta}{n}\log_2\left( \frac{1}{(\Delta_i - \epsilon\mu_1)^2}\right) \right) \right) \\
%    & =  \frac{c''n}{(\Delta_i - \epsilon\mu_1)^2} + c''h\left(\Delta_i - \epsilon\mu_1, \frac{\delta}{n} \right) 
\end{align*}
Combining the above with the bound on the first sum, we have that 
\begin{align*}
\sum_{k=1}^\infty & \E_\nu\left[\1[i \notin B_{k-1}]\1[\cE_1]\right] \left(2\tau_k + H_{\text{ME}}(n, 2^{-k}, 1/16)\right) \\
& \leq c''\left(\frac{4n(2-\epsilon)^2}{(\Delta_i - \epsilon\mu_1)^2} + \frac{4c(2-\epsilon)^2}{(\Delta_i - \epsilon\mu_1)^2}\log\left(\frac{2n}{\delta}\log_2\left( \frac{4-2\epsilon}{(\Delta_i - \epsilon\mu_1)^2}\right) \right) \right) \\
 & =  \frac{4c''n(2-\epsilon)^2}{(\Delta_i - \epsilon\mu_1)^2} + c''h\left(\frac{\Delta_i - \epsilon\mu_1}{4-2\epsilon}, \frac{\delta}{2n} \right) 
\end{align*}
for a sufficiently large, universal constant $c''$ and $c$ from the definition of $h(\cdot, \cdot)$. \qed

\subsubsection{Step 10: Applying the result of Step 9 to the result of Step 8}

We may repeat the result of step 9 for every $i \in M_\epsilon^c$ and plug this into the result of Step 8. From this point, we simplify to return the final result.

By Step $8$, the total number of samples $T$ drawn by \texttt{FAREAST} is bounded in expectation by
\begin{align*}
    \E[T| \cE_1 \cap \cE_2] & \leq 
 c\sum_{i \in M_\epsilon} \max\left\{h\left(\frac{\epsilon\mu_1 - \Delta_i}{4-2\epsilon}, \frac{\delta}{2n} \right), \min \left[h\left(0.25\Delta_i, \frac{\delta}{2n} \right), h\left(\frac{\tilde{\alpha}_\epsilon/\mu_1}{4-2\epsilon}, \frac{\delta}{2n} \right)\right] \right\} \\ 
    & \hspace{1cm} +  c\sum_{i\in M_{\epsilon+\tilde{\alpha}_\epsilon/\mu_1}^c} h\left(\frac{\epsilon\mu_1 - \Delta_i}{4-2\epsilon}, \frac{\delta}{2n} \right) +  c|M_\epsilon^c\cap M_{\epsilon+\tilde{\alpha}_\epsilon}|h\left(\frac{\tilde{\alpha}_\epsilon}{4-2\epsilon}, \frac{\delta}{2n} \right)\\
    &\hspace{1cm} + 2\sum_{i\in M_\epsilon^c} \sum_{k=1}^\infty \E_\nu\left[\1[i \notin B_{k-1}]|\1[\cE_1]\right] \left(2\tau_k + H_{\text{ME}}(n, 2^{-k}, 1/16)\right).
\end{align*}
Applying the bound from Step $9$ to each $i \in M_\epsilon^c$, we have that 
\begin{align*}
    \E[T | \cE_1 \cap \cE_2] & \leq 
 c\sum_{i \in M_\epsilon} \max\left\{h\left(0.25(\epsilon - \Delta_i), \frac{\delta}{2n} \right), \min \left[h\left(0.25\Delta_i, \frac{\delta}{2n} \right), h\left(\frac{\tilde{\alpha}_\epsilon}{4-2\epsilon}, \frac{\delta}{2n} \right)\right] \right\} \\ 
    & \hspace{1cm} +  c\sum_{i\in M_{\epsilon+\tilde{\alpha}_\epsilon/\mu_1}^c} h\left(0.25(\epsilon - \Delta_i), \frac{\delta}{2n} \right) +  c|M_\epsilon^c\cap M_{\epsilon+\tilde{\alpha}_\epsilon/\mu_1}|h\left(\frac{\tilde{\alpha}_\epsilon}{4-2\epsilon}, \frac{\delta}{2n} \right)\\
    &\hspace{1cm} + 2c''\sum_{i\in M_\epsilon^c}  \frac{4n(2-\epsilon)^2}{(\Delta_i - \epsilon\mu_1)^2} + h\left(\frac{\Delta_i - \epsilon\mu_1}{4-2\epsilon}, \frac{\delta}{2n} \right) .
\end{align*}
For $i \in M_\epsilon^c \cap M_{\epsilon+\tilde{\alpha}_\epsilon/\mu_1}$,
$\tilde{\alpha}_\epsilon = \min_{j\in M_\epsilon}\epsilon\mu_1 - \Delta_j \geq  \Delta_i - \epsilon\mu_1$. By monotonicity of $h(\cdot, \cdot)$,
$h\left(\frac{\tilde{\alpha}_\epsilon}{4-2\epsilon}, \frac{\delta}{2n} \right)\leq \frac{c''n(4-2\epsilon)}{(\Delta_i - \epsilon\mu_1)^2} + c''h\left(\frac{\Delta_i - \epsilon\mu_1}{4-2\epsilon}, \frac{\delta}{2n} \right)$. Therefore,
\begin{align*}
    \E[T| \cE_1 \cap \cE_2] & \leq 
 c\sum_{i \in M_\epsilon} \max\left\{h\left(\frac{\Delta_i - \epsilon\mu_1}{4-2\epsilon}, \frac{\delta}{2n} \right), \min \left[h\left(0.25\Delta_i, \frac{\delta}{2n} \right), h\left(\frac{\tilde{\alpha}_\epsilon}{4-2\epsilon}, \frac{\delta}{2n} \right)\right] \right\} \\ 
    &\hspace{1cm} + (2c''+c)\sum_{i\in M_\epsilon^c}  \frac{n(4-2\epsilon)}{(\Delta_i - \epsilon\mu_1)^2} + h\left(\frac{\Delta_i - \epsilon\mu_1}{4-2\epsilon}, \frac{\delta}{2n} \right).
\end{align*}
Lastly, note that $\frac{1}{3(1-x)}\leq \frac{1}{2-x}$ for $x \leq 1/2$. By monotonicity of $h$, we may lower bound the denominators $\frac{1}{4-2\epsilon}$ and $\frac{1}{2(2-\epsilon+ \gamma)}$ as $\frac{1}{6(1-\epsilon)}$ and $\frac{1}{6(1-\epsilon+\gamma)}$ respectively. Since $\epsilon \in (0, 1/2]$, $\frac{1}{4-2\epsilon} \leq 1/4$. Plugging this in, we see that
\begin{align*}
    \E[T| \cE_1 \cap \cE_2] & \leq 
 c\sum_{i \in M_\epsilon} \max\left\{h\left(\frac{\Delta_i - \epsilon\mu_1}{4}, \frac{\delta}{2n} \right), \min \left[h\left(0.25\Delta_i, \frac{\delta}{2n} \right), h\left(\frac{\tilde{\alpha}_\epsilon}{6(1-\epsilon)}, \frac{\delta}{2n} \right)\right] \right\} \\ 
    &\hspace{1cm} + (2c''+c)\sum_{i\in M_\epsilon^c}  \frac{4n}{(\Delta_i - \epsilon\mu_1)^2} + h\left(\frac{\Delta_i - \epsilon\mu_1}{4}, \frac{\delta}{2n} \right).
\end{align*}
Next, we use Lemma~\ref{lem:bound_min_of_h} to bound the minimum of $h(\cdot, \cdot)$ functions. 
\begin{align*}
    & c\sum_{i \in M_\epsilon} \max\left\{h\left(\frac{\Delta_i - \epsilon\mu_1}{4}, \frac{\delta}{2n} \right), \min \left[h\left(0.25\Delta_i, \frac{\delta}{2n} \right), h\left(\frac{\tilde{\alpha}_\epsilon}{6(1-\epsilon)}, \frac{\delta}{2n} \right)\right] \right\} \\ 
    &\hspace{1cm} + (2c''+c)\sum_{i\in M_\epsilon^c}  \frac{4n}{(\Delta_i - \epsilon\mu_1)^2} + h\left(\frac{\Delta_i - \epsilon\mu_1}{4}, \frac{\delta}{2n} \right) \\
    & = c\sum_{i \in M_\epsilon} \max\left\{h\left(\frac{\Delta_i - \epsilon\mu_1}{4}, \frac{\delta}{2n} \right), h\left(\frac{\Delta_i + \frac{\Tilde{\alpha}_\epsilon}{1-\epsilon}}{12}, \frac{\delta}{2n} \right) \right\} \\ 
    &\hspace{1cm} + (2c''+c)\sum_{i\in M_\epsilon^c}  \frac{4n}{(\Delta_i - \epsilon\mu_1)^2} + h\left(\frac{\Delta_i - \epsilon\mu_1}{4}, \frac{\delta}{2n} \right)
\end{align*}

Finally, we use Lemma~\ref{lem:inv_conf_bounds} to bound the function $h(\cdot, \cdot)$.
Since $\delta\leq 1/2$, $\delta/n \leq 2e^{-e/2}$. Further, $|\epsilon\mu_1 - \Delta_i| \leq 6$ for all $i$ and $\epsilon \leq 1/2$ implies that $\frac{1}{6(1-\epsilon)}|\epsilon\mu_1 - \Delta_i| \leq 2$ and $\frac{1}{6(1-\epsilon)}\min(\tilde{\alpha}_\epsilon, \tilde{\beta}_\epsilon) \leq 2$. $\Delta_i \leq 8$ for all $i$, gives $0.25\Delta_i \leq 2$. Lastly, $\gamma \leq 6/\mu_1$ implies that $\frac{\gamma\mu_1}{6(1-\epsilon+\gamma)} \leq 2$.
Therefore,
\begin{align*}
    \E[T| \cE_1 \cap \cE_2] & \leq 
 c\sum_{i \in M_\epsilon} \max\left\{h\left(\frac{\Delta_i - \epsilon\mu_1}{4}, \frac{\delta}{2n} \right), h\left(\frac{\Delta_i + \frac{\Tilde{\alpha}_\epsilon}{1-\epsilon}}{12}, \frac{\delta}{2n} \right) \right\} \\ 
    &\hspace{1cm} + (2c''+c)\sum_{i\in M_\epsilon^c}  \frac{4n}{(\Delta_i - \epsilon\mu_1)^2} + h\left(\frac{\Delta_i - \epsilon\mu_1}{4}, \frac{\delta}{2n} \right) \\
    & \leq c\sum_{i \in M_\epsilon} \max\left\{\frac{64}{(\epsilon\mu_1 - \Delta_i)^2}\log\left(\frac{4n}{\delta}\log_2\left(\frac{384n}{\delta(\epsilon\mu_1 - \Delta_i)^2}\right) \right),\right.\\
    &\hspace{2.5cm}\left.\frac{576}{\left(\Delta_i + \frac{\Tilde{\alpha}_\epsilon}{1-\epsilon}\right)^2}\log\left(\frac{4n}{\delta}\log_2\left(\frac{1728n}{\delta\left(\Delta_i + \frac{\Tilde{\alpha}_\epsilon}{1-\epsilon}\right)^2}\right) \right)\right\} \\ 
    & \hspace{0.5cm}+ (2c''+c)\sum_{i\in M_\epsilon^c}  \frac{4n}{(\Delta_i - \epsilon\mu_1)^2} + \frac{64}{(\epsilon\mu_1 - \Delta_i)^2}\log\left(\frac{4n}{\delta}\log_2\left(\frac{384n}{\delta(\epsilon\mu_1 - \Delta_i)^2}\right) \right) \\
    & \leq c_6\sum_{i=1}^n \max\left\{\frac{1}{(\epsilon\mu_1 - \Delta_i)^2}\log\left(\frac{n}{\delta}\log_2\left(\frac{n}{\delta(\epsilon\mu_1 - \Delta_i)^2}\right) \right),\right.\\
    &\hspace{2.5cm}\left.\frac{1}{\left(\Delta_i + \frac{\Tilde{\alpha}_\epsilon}{1-\epsilon}\right)^2}\log\left(\frac{n}{\delta}\log_2\left(\frac{n}{\delta\left(\Delta_i + \frac{\Tilde{\alpha}_\epsilon}{1-\epsilon}\right)^2}\right) \right) \right\} \\ 
    & \hspace{1cm}+ c_6\sum_{i\in M_\epsilon^c}  \frac{n}{(\Delta_i - \epsilon\mu_1)^2} \\
    & = c_6\sum_{i=1}^n \max\left\{\frac{1}{((1-\epsilon)\mu_1 - \mu_i)^2}\log\left(\frac{n}{\delta}\log_2\left(\frac{n}{\delta((1-\epsilon)\mu_1 - \mu_i)^2}\right) \right),\right.\\
    &\hspace{2.5cm}\left.\frac{1}{\left(\mu_1 + \frac{\Tilde{\alpha}_\epsilon}{1-\epsilon} - \mu_i\right)^2}\log\left(\frac{n}{\delta}\log_2\left(\frac{n}{\delta\left(\mu_1 + \frac{\Tilde{\alpha}_\epsilon}{1-\epsilon} - \mu_i\right)^2}\right) \right) \right\} \\ 
    & \hspace{1cm}+ c_6\sum_{i\in M_\epsilon^c}  \frac{n}{((1-\epsilon)\mu_1 - \mu_i)^2}
\end{align*}
for a sufficiently large constant $c_6$.

\subsubsection{Step 11: High probability sample complexity bound}

Finally, the Good Filter is equivalent to \texttt{EAST}, Algorithm~\ref{alg:east}, except split across rounds. \texttt{EAST} is an elimination algorithm. 
Note that the Good Filter is union bounded over $2n$ events whereas the bounds in \texttt{EAST} are union bounded over $n$ events.
The Good Filter and Bad Filter are given the same number of samples in each round, and the Good Filter can terminate within a round, conditioned on $\cE_1 \cap \cE_2$. Therefore, we can bound the complexity of \texttt{FAREAST} in terms of that of \texttt{EAST} run at failure probability $\delta/2$.
If \texttt{FAREAST} terminates in the second round or later, the arguments in Steps $4$ and $5$ can be used to show that \texttt{FAREAST} draws no more than a factor of $18$ more samples than \texttt{EAST}, though this estimate is highly pessimistic. If \texttt{FAREAST} terminates in round $1$ (when gaps are large), we may still show that this is within a constant factor of the complexity of \texttt{EAST}, but the story is more complicated. In the first round, the bad filter draws at most $c'n\log(16) + 16(n+1)\log(8n/\delta)$ samples where $c'$ is the constant from \texttt{Median Elimination}. Since we have assumed that $\max(\Delta_i, |\epsilon\mu_1 - \Delta_i|) \leq 6(1-\epsilon) \leq 6$, this sum is likewise within a constant factor of the complexity of \texttt{EAST}. Hence with probability at least $1-\delta$, by Theorem~\ref{thm:mult_east_complexity},
\begin{align*}
T \leq c_5& \sum_{i=1}^n \min\left\{\max\left\{\frac{1}{((1 - \epsilon)\mu_1 - \mu_i)^2}\log\left(\frac{n}{\delta}\log_2\left(\frac{n}{\delta((1 - \epsilon)\mu_1 - \mu_i)^2}\right) \right),\right.\right.\\
&\hspace{3cm} \frac{}{(\mu_1 + \frac{\Tilde{\alpha}_\epsilon}{1-\epsilon} - \mu_i)^2}\log\left(\frac{n}{\delta}\log_2\left(\frac{n}{\delta(\mu_1 + \frac{\Tilde{\alpha}_\epsilon}{1-\epsilon})^2}\right) \right), \\
& \hspace{3cm}\left. \frac{1}{(\mu_1 + \frac{\Tilde{\beta}_\epsilon}{1-\epsilon} -\mu_i)^2}\log\left(\frac{n}{\delta}\log_2\left(\frac{n}{\delta(\mu_1 + \frac{\Tilde{\beta}_\epsilon}{1-\epsilon} - \mu_i)^2}\right) \right) \right\}, \\
& \hspace{2cm}\left.\frac{(1-\epsilon+\gamma)^2}{\gamma^2\mu_1^2}\log\left(\frac{n}{\delta}\log_2\left(\frac{(1-\epsilon+\gamma)^2n}{\delta\gamma^2\mu_1^2}\right) \right)\right\}
% \sum_{i=1}^n \min\left\{\max\left\{\frac{1}{(\epsilon\mu_1 - \Delta_i)^2}\log\left(\frac{n}{\delta}\log_2\left(\frac{n}{\delta(\epsilon\mu_1 - \Delta_i)^2}\right) \right),\right.\right.\\
% &\hspace{1cm} \left.\left.\min \left[\frac{1}{\Delta_i^2}\log\left(\frac{n}{\delta}\log_2\left(\frac{n}{\delta\Delta_i^2}\right) \right), \frac{(1-\epsilon)^2}{\min(\tilde{\alpha}_\epsilon, \tilde{\beta}_\epsilon)^2}\log\left(\frac{n}{\delta}\log_2\left(\frac{(1-\epsilon)^2n}{\delta\min(\tilde{\alpha}_\epsilon, \tilde{\beta}_\epsilon)^2}\right) \right)\right] \right\},\right.\\
% & \hspace{1cm}\left.\frac{(1-\epsilon + \gamma)^2}{\gamma^2\mu_1^2}\log\left(\frac{n}{\delta}\log_2\left(\frac{(1-\epsilon + \gamma)^2n}{\delta\gamma^2\mu_1^2}\right) \right)\right\}
\end{align*}
samples for a sufficiently large constant $c_5$.

\end{proof}

\subsection{An elimination algorithm for all $\epsilon$}\label{sec:appendix_east}
First, we state an elimination algorithm \texttt{EAST} (\textbf{E}limination \textbf{A}lgorithm for a \textbf{S}ampled \textbf{T}hreshold)
and bound its sample complexity. \texttt{EAST} is equivalent to the good filter in \fareast. 
At all times, \east maintains an active set $\A$ and samples all arms $i \in \A$, progressively eliminating arms from $\A$ until termination occurs. Additionally, \texttt{EAST} maintains upper and lower bounds, denoted $U_t$ and $L_t$, on the the threshold, $\mu_1 - \epsilon$ in the additive case and $(1-\epsilon)\mu_1$ in the multiplicative case. 
If $\hat{\mu}_i(t) + C_{\delta/n}(t) < L_t$, \texttt{EAST} may infer that $i \notin G_\epsilon$ (resp. $i\notin M_\epsilon$) and accordingly removes $i$ from $\A$. 
If $\hat{\mu}_i(t) - C_{\delta/n}(t) > U_t$, \texttt{EAST} may infer that $i \in G_\epsilon$ (resp. $i\in M_\epsilon$) and adds $i$ to a set $G$ of good arms it has found so far. 
However, a good arm $i\in G$ is only removed from $\A$, if \texttt{EAST} can also certify that it is not the best arm, namely if $\hat{\mu}_i(t) + C_{\delta/n}(t) < \max_j \hat{\mu}_j(t) - C_{\delta/n}(t)$. 
This ensures that $\mu_1 - \epsilon \in [L_t, U_t]$ at all times in the additive case, and similarly, $(1-\epsilon)\mu_1 \in [L_t, U_t]$ in the multiplicative case. 
%Hence that $U_t$ and $L_t$ are valid upper and lower bounds on the threshold. 
If $\A \subset G$, \texttt{EAST} may declare that $G = G_\epsilon$ (resp. $G=M_\epsilon)$ and terminates. 
Otherwise, the algorithm terminates when $U_t - L_t < \gamma/2$ and returns $\A \cup G$ in the additive case or when $U_t - L_t < \frac{\gamma}{2-\epsilon}L_t$ in the multiplicative case. 
This limits the number of samples of any arm and ensures that no arm worse than $(\epsilon+\gamma)$-good is returned. 
We give pseudocode for \texttt{EAST} in Algorithm~\ref{alg:east}. Pieces specific to the additive case are shown in \blake{red}, and pieces specific to the multiplicative case are shown in \blue{blue}.

\begin{algorithm}[h]
\caption{\east: \textbf{E}limination \textbf{A}lgorithm for a \textbf{S}ampled \textbf{T}hreshold}\label{alg:east}
\begin{algorithmic}[1]
\Require{$\epsilon, \delta > 0$, slack $\gamma \geq 0$, (if multiplicative, $\blue{0 < \epsilon \leq 1/2}$)}
\State{Let $\A \leftarrow [n]$ be the active set, and $G \leftarrow \emptyset$ be the set of $\epsilon$-good arms found so far, Let $t \leftarrow 0$}
\While{$\A \not \subset G$ and \blake{$U_t - L_t \geq \gamma / 2$} or \blue{$U_t - L_t \geq \frac{\gamma}{2-\epsilon}L_t$} \label{alg:stopping_cond} \label{alg:stopping_cond_sup}}
\State{Pull each arm $i \in \A$ and update its empirical mean $\hat{\mu}_i(t)$ , Update $t \leftarrow t + 1$}
% \State{Update $\hat{\mu}_i(t)$ for all $i \in \A$}
\State{Update \blake{$U_t \leftarrow \max_j \hat{\mu}_j(t) + C_{\delta/n}(t) - \epsilon$} or 
\blue{$U_t \leftarrow (1-\epsilon)\left(\max_j \hat{\mu}_j(t) + C_{\delta/n}(t)\right)$ \label{alg:east_threshold_bounds}}
\label{alg:east_threshold_bounds_sup}}
\State{Update \blake{$L_t \leftarrow \max_j \hat{\mu}_j(t) - C_{\delta/n}(t) - \epsilon$} or 
\blue{$L_t \leftarrow (1-\epsilon)\left(\max_j \hat{\mu}_j(t) - C_{\delta/n}(t)\right)$} \label{alg:east_lower_threshold_bounds_sup}}
\For{$i \in \A$ \label{alg:east_elim_sup}}
    \If{$\hat{\mu_i}(t) - C_{\delta/n}(t) > U_t$}
        \State{add $i$ to $G$ \Comment{Arm $i$ is good}
        \label{line:good_add_sup}\label{alg:east-good-add}}
    \EndIf
    \If{$\hat{\mu_i}(t) + C_{\delta/n}(t) < L_t$}
        \State{Remove $i$ from $\A$ \Comment{Arms in $G_\epsilon^c$ or $M_\epsilon^c$ are removed} \label{line:bad_elim_sup}\label{alg:east-bad-remove}}
    \EndIf
    \If{$i \in G$ and $\hat{\mu_i}(t) + C_{\delta/n}(t) < \max_j \hat{\mu_j}(t) - C_{\delta/n}(t)$ \label{line:good_elim_sup}}
        \State{Remove $i$ from $\A$ \Comment{Arms in $G_\epsilon$ or $M_\epsilon$ are removed}}
    \EndIf
\EndFor
\EndWhile
\Return $G \cup \A$
\end{algorithmic}
\end{algorithm}

Recall that $\alpha_\epsilon = \min_{i\in G_\epsilon}\epsilon - \Delta_i$ and $\beta_\epsilon = \min_{i\in G_\epsilon^c}\Delta_i - \epsilon$.

\begin{theorem}\label{thm:east_complexity}
Fix $\epsilon > 0$, $0< \delta \leq 1/2$, $\gamma \in [0, 8]$ and an instance $\nu$ such that $\max(\Delta_i, |\epsilon - \Delta_i|) \leq 8$ for all $i$. In the case that $G_\epsilon = [n]$, let $\alpha_\epsilon = \min(\alpha_\epsilon, \beta_\epsilon)$. 
With probability at least $1-\delta$, \texttt{EAST} returns a set $G$ such that $G_\epsilon \subset G \subset G_{(\epsilon+\gamma)}$ in at most
\begin{align*}
& \sum_{i=1}^n \min\left\{\max\left\{\frac{64}{(\mu_1 - \epsilon - \mu_i)^2}\log\left(\frac{2n}{\delta}\log_2\left(\frac{768n}{\delta(\mu_1 - \epsilon - \mu_i)^2}\right) \right),\right.\right.\\
&\hspace{3cm} \frac{256}{(\mu_1 + \alpha_\epsilon - \mu_i)^2}\log\left(\frac{2n}{\delta}\log_2\left(\frac{768n}{\delta(\mu_1 + \alpha_\epsilon - \mu_i)^2}\right) \right), \\
& \hspace{3cm}\left. \frac{256}{(\mu_1 + \beta_\epsilon -\mu_i)^2}\log\left(\frac{2n}{\delta}\log_2\left(\frac{768n}{\delta(\mu_1 + \beta_\epsilon - \mu_i)^2}\right) \right) \right\}, \\
& \hspace{2cm}\left.\frac{64}{\gamma^2}\log\left(\frac{2n}{\delta}\log_2\left(\frac{192n}{\delta\gamma^2}\right) \right)\right\}
% \leq 64&\sum_{i=1}^n \min\left\{\max\left\{\frac{1}{(\epsilon - \Delta_i)^2}\log\left(\frac{2n}{\delta}\log_2\left(\frac{192n}{\delta(\epsilon - \Delta_i)^2}\right) \right),\right.\right. \\
%     & \hspace{1cm}\left.\left. \min \left[\frac{1}{\Delta_i^2}\log\left(\frac{2n}{\delta}\log_2\left(\frac{192n}{\delta\Delta_i^2}\right) \right), 
%     \frac{1}{\min(\alpha_\epsilon, \beta_\epsilon)^2}\log\left(\frac{2n}{\delta}\log_2\left(\frac{192n}{\delta\min(\alpha_\epsilon, \beta_\epsilon)^2}\right) \right)\right] \right\},\right. \\ &\hspace{2cm} \left.\frac{1}{\gamma^2}\log\left(\frac{2n}{\delta}\log_2\left(\frac{192n}{\delta\gamma^2}\right) \right)\right\}
\end{align*}
samples. 
\end{theorem}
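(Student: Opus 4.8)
The plan is to follow the template of the \st2 sample-complexity proof and to exploit that \texttt{EAST} is precisely the Good Filter of \texttt{FAREAST} run in a single phase. I would work on the event $\cE=\{\bigcap_{i\in[n]}\bigcap_{t\in\N}|\widehat{\mu}_i(t)-\mu_i|\le C_{\delta/n}(t)\}$, which has probability at least $1-\delta$ by a union bound over the $n$ arms together with the standard anytime confidence bound. First I would establish correctness on $\cE$: since every active arm carries the same sample count $t$, we have $U_t-L_t=2C_{\delta/n}(t)$, and arm $1$ never leaves $\A$ (it could leave only via the bad-arm rule, impossible since $\widehat{\mu}_1+C_{\delta/n}(t)\ge\mu_1-\epsilon>L_t$ requires $2C_{\delta/n}(t)<-\epsilon$, or via the good-arm-removed rule, impossible since $\widehat{\mu}_1+C_{\delta/n}(t)\ge\mu_1\ge\max_j\widehat{\mu}_j(t)-C_{\delta/n}(t)$). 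From $1\in\A$ one gets $\mu_1-\epsilon\in[L_t,U_t]$ at all times, so every arm added to $G$ lies in $G_\epsilon$, every arm removed as bad lies in $G_\epsilon^c$, and $G_\epsilon\subset G\cup\A$ always. Hence stopping with $\A\subset G$ forces $G=G_\epsilon$, and stopping with $U_t-L_t<\gamma/2$ forces $G_\epsilon\subset G\cup\A\subset G_{\epsilon+\gamma}$: an active, not-yet-good arm satisfies $\mu_i\ge L_t-2C_{\delta/n}(t)=2L_t-U_t\ge U_t-\gamma\ge\mu_1-\epsilon-\gamma$.

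For the sample bound, the total number of samples equals $\sum_{t}|\A(t)|=\sum_{i=1}^n(\text{number of rounds }i\text{ is active})$, so it suffices to bound, for each $i$, the last round $S_i$ in which $i\in\A$. Using $1\in\A$ one has $L_t\ge\mu_1-\epsilon-2C_{\delta/n}(t)$ and $U_t\le\mu_1-\epsilon+2C_{\delta/n}(t)$, from which: (i) a bad arm $i$ is removed once $4C_{\delta/n}(t)<\mu_1-\epsilon-\mu_i$, i.e.\ after at most $h(\tfrac14(\mu_1-\epsilon-\mu_i),\delta/n)$ rounds; (ii) a good arm $i$ enters $G$ after at most $h(\tfrac14|\mu_1-\epsilon-\mu_i|,\delta/n)$ rounds and becomes eligible for removal from $\A$ after at most $h(\tfrac14\Delta_i,\delta/n)$ rounds; and (iii) the algorithm certifies all good arms found and all bad arms removed after at most $T_{\mathrm{exact}}:=h(\tfrac14\min(\alpha_\epsilon,\beta_\epsilon),\delta/n)$ rounds (by monotonicity of $h$, the worst good arm realizes $\alpha_\epsilon$ and the closest bad arm realizes $\beta_\epsilon$), and it always stops by $T_\gamma:=h(\gamma/4,\delta/n)$ rounds because then $U_t-L_t=2C_{\delta/n}(t)<\gamma/2$. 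Therefore $S_i\le\min\{h(\tfrac14(\mu_1-\epsilon-\mu_i),\delta/n),\,T_\gamma\}$ for $i\in G_\epsilon^c$, and $S_i\le\min\{\max(h(\tfrac14|\mu_1-\epsilon-\mu_i|,\delta/n),\,\min(h(\tfrac14\Delta_i,\delta/n),T_{\mathrm{exact}})),\,T_\gamma\}$ for $i\in G_\epsilon$.

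It then remains to repackage these per-arm bounds into the stated form. I would apply Lemma~\ref{lem:bound_min_of_h} to convert $\min(h(\tfrac14\Delta_i,\delta/n),h(\tfrac14\alpha_\epsilon,\delta/n))$ and $\min(h(\tfrac14\Delta_i,\delta/n),h(\tfrac14\beta_\epsilon,\delta/n))$ into $h(c(\Delta_i+\alpha_\epsilon),\delta/n)$ and $h(c(\Delta_i+\beta_\epsilon),\delta/n)$, use that $h(\tfrac14\min(\alpha_\epsilon,\beta_\epsilon),\delta/n)$ equals the maximum of the two corresponding $h$-values, and then apply Lemma~\ref{lem:inv_conf_bounds} to turn each $h(x,\delta/n)$ into an explicit $\tfrac{c}{x^2}\log(\tfrac{2n}{\delta}\log_2(\tfrac{cn}{\delta x^2}))$; the hypotheses $\max(\Delta_i,|\epsilon-\Delta_i|)\le 8$, $\gamma\le 8$, $\delta\le 1/2$ place all arguments in the regime where Lemma~\ref{lem:inv_conf_bounds} applies, and tracking the $1/4$ inside the first term and the extra halving from Lemma~\ref{lem:bound_min_of_h} in the $\alpha_\epsilon,\beta_\epsilon$ terms yields the constants $64$, $256$, $256$, $64$ in the statement. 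The degenerate case $G_\epsilon=[n]$ is handled by the stated convention $\alpha_\epsilon=\min(\alpha_\epsilon,\beta_\epsilon)$: there are no bad arms, $T_{\mathrm{exact}}=h(\tfrac14\alpha_\epsilon,\delta/n)$, and the same bookkeeping goes through.

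The main obstacle I anticipate is the coupling between arms through $T_{\mathrm{exact}}$: the stopping time depends on resolving the \emph{worst} good arm and the \emph{closest} bad arm, so one must check that capping every $S_i$ simultaneously by $\min(T_{\mathrm{exact}},T_\gamma)$ is legitimate and that the resulting nested $\min$/$\max$ over $h$-values collapses — without losing more than a universal constant — to $\min\{\max\{(\mu_1-\epsilon-\mu_i)^{-2},(\mu_1+\alpha_\epsilon-\mu_i)^{-2},(\mu_1+\beta_\epsilon-\mu_i)^{-2}\},\gamma^{-2}\}$; in particular, the $T_{\mathrm{exact}}$ cap (not $h(\tfrac14\Delta_i,\delta/n)$ alone) is what is needed for near-best good arms, including arm~$1$ itself. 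This algebra already appears in Steps~4 and~10 of the \texttt{FAREAST} proof, and the remaining work is mainly to verify that \texttt{EAST}'s unrounded execution lets those arguments go through verbatim, without the factors incurred by the round structure.
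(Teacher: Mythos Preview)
Your proposal is correct and follows essentially the same approach as the paper's own proof: the paper likewise works on the event $\cE$, shows arm~$1$ never leaves $\A$ so that $\mu_1-\epsilon\in[L_t,U_t]$ for all $t$, derives the same per-arm bounds $T_i\le h(\tfrac14|\epsilon-\Delta_i|,\delta/n)$ and $T_i'\le h(\tfrac14\Delta_i,\delta/n)$, caps everything by $T_\gamma$ and by $T_{\alpha_\epsilon\beta_\epsilon}=\max_iT_i\le h(\tfrac14\min(\alpha_\epsilon,\beta_\epsilon),\delta/n)$ (your $T_{\mathrm{exact}}$), and finishes with exactly the Lemma~\ref{lem:bound_min_of_h}/Lemma~\ref{lem:inv_conf_bounds} repackaging you describe. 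Your anticipated ``obstacle'' about the global cap $\min(T_{\mathrm{exact}},T_\gamma)$ is handled in the paper precisely as you outline, via the identity $\min\{T_{\alpha_\epsilon\beta_\epsilon},\max(T_i,T_i')\}=\max\{T_i,\min(T_i',T_{\alpha_\epsilon\beta_\epsilon})\}$ followed by Lemma~\ref{lem:bound_min_of_h}.
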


Additionally, in the multiplicative case, recall that $\tilde{\alpha}_\epsilon = \min_{i\in G_\epsilon}\epsilon - \Delta_i$ and $\tilde{\beta}_\epsilon = \min_{i\in G_\epsilon^c}\Delta_i - \epsilon$. Next, we a theorem bounding the complexity of \texttt{EAST} in the \multiplicative regime.

\begin{theorem}\label{thm:mult_east_complexity}
Fix $\epsilon, \delta \in (0,1/2]$, $\gamma \in [0,\min(1, 6/\mu_1))$  and an instance $\nu$ such that $\max(\Delta_i, |\epsilon\mu_1 - \Delta_i|) \leq 6$ for all $i$. 
Assume that $\mu_1 \geq 0$.
In the case that $M_\epsilon = [n]$, let $\tilde{\alpha}_\epsilon = \min(\tilde{\alpha}_\epsilon, \tilde{\beta}_\epsilon)$. 
With probability at least $1-\delta$, \texttt{EAST} returns a set $G$ such that $M_\epsilon \subset G \subset M_{(\epsilon+\gamma)}$ in at most
\begin{align*}
& \sum_{i=1}^n \min\left\{\max\left\{\frac{64}{((1 - \epsilon)\mu_1 - \mu_i)^2}\log\left(\frac{2n}{\delta}\log_2\left(\frac{192n}{\delta((1 - \epsilon)\mu_1 - \mu_i)^2}\right) \right),\right.\right.\\
&\hspace{3cm} \frac{576}{(\mu_1 + \frac{\Tilde{\alpha}_\epsilon}{1-\epsilon} - \mu_i)^2}\log\left(\frac{2n}{\delta}\log_2\left(\frac{1728n}{\delta(\mu_1 + \frac{\Tilde{\alpha}_\epsilon}{1-\epsilon})^2}\right) \right), \\
& \hspace{3cm}\left. \frac{576}{(\mu_1 + \frac{\Tilde{\beta}_\epsilon}{1-\epsilon} -\mu_i)^2}\log\left(\frac{2n}{\delta}\log_2\left(\frac{1728n}{\delta(\mu_1 + \frac{\Tilde{\beta}_\epsilon}{1-\epsilon} - \mu_i)^2}\right) \right) \right\}, \\
& \hspace{2cm}\left.\frac{144(1-\epsilon+\gamma)}{\gamma^2\mu_1^2}\log\left(\frac{2n}{\delta}\log_2\left(\frac{432(1-\epsilon+\gamma)n}{\delta\gamma^2\mu_1^2}\right) \right)\right\}
% & \sum_{i=1}^n \min\left\{\max\left\{\frac{64}{(\epsilon\mu_1 - \Delta_i)^2}\log\left(\frac{2n}{\delta}\log_2\left(\frac{192n}{\delta(\epsilon\mu_1 - \Delta_i)^2}\right) \right),\right.\right.\\
% &\hspace{1cm} \left.\left.\min \left[\frac{64}{\Delta_i^2}\log\left(\frac{2n}{\delta}\log_2\left(\frac{192n}{\delta\Delta_i^2}\right) \right), \frac{144(1-\epsilon)^2}{\min(\tilde{\alpha}_\epsilon, \tilde{\beta}_\epsilon)^2}\log\left(\frac{2n}{\delta}\log_2\left(\frac{432(1-\epsilon)^2n}{\delta\min(\tilde{\alpha}_\epsilon, \tilde{\beta}_\epsilon)^2}\right) \right)\right] \right\},\right.\\
% & \hspace{1cm}\left.\frac{144(1-\epsilon + \gamma)^2}{\gamma^2\mu_1^2}\log\left(\frac{2n}{\delta}\log_2\left(\frac{432(1-\epsilon + \gamma)^2n}{\delta\gamma^2\mu_1^2}\right) \right)\right\}
\end{align*}
samples. 
\end{theorem}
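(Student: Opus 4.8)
\textbf{Proof proposal for Theorem~\ref{thm:mult_east_complexity}.}
The plan is to observe that \texttt{EAST} is precisely the Good Filter of \texttt{FAREAST} run as a standalone algorithm (without the round structure), so the argument reuses the machinery developed in the proof of Theorem~\ref{thm:mult_fareast_complex}, with the single bookkeeping change that \texttt{EAST} performs a union bound over only $n$ events rather than $2n$: I would define the confidence event
\[
\cE = \left\{\bigcap_{i \in [n]}\bigcap_{t \in \N} |\hat{\mu}_i(t) - \mu_i| \leq C_{\delta/n}(t)\right\},
\]
and via the anytime confidence bounds and a union bound conclude $\P(\cE) \geq 1-\delta$. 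Everything below is carried out on $\cE$. I would also keep $h(\cdot,\cdot)$ as the inverse-confidence-width function from Lemma~\ref{lem:inv_conf_bounds}, monotone and symmetric in its first argument.

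First I would establish correctness, mirroring Step~0 of the multiplicative \texttt{FAREAST} proof. The key facts are: (i) the best arm $1$ never leaves $\A$, which uses the assumption $\mu_1 \geq 0$ so that the rescaling by $(1-\epsilon)$ preserves the needed inequalities; (ii) consequently $(1-\epsilon)\mu_1 \in [L_t, U_t]$ at all times, since $U_t = (1-\epsilon)(\max_j \hat\mu_j(t) + C_{\delta/n}(t)) \geq (1-\epsilon)(\hat\mu_1(t) + C_{\delta/n}(t)) \geq (1-\epsilon)\mu_1$ and symmetrically for $L_t$; (iii) an arm added to $G$ (line~8) satisfies $\mu_i \geq U_t \geq (1-\epsilon)\mu_1$ so $i \in M_\epsilon$, and an arm removed in line~11 satisfies $\mu_i \leq L_t \leq (1-\epsilon)\mu_1$ so $i \in M_\epsilon^c$; and (iv) no arm of $M_\epsilon$ is ever removed from $\A$ before entering $G$, so if \texttt{EAST} stops with $\A \subset G$ then $G = M_\epsilon$, while if it stops because $U_t - L_t < \frac{\gamma}{2-\epsilon}L_t$ then (exactly as in Claim~3 of the multiplicative \texttt{FAREAST} proof) $M_\epsilon \subset \A \cup G \subset M_{\epsilon+\gamma}$. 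This shows the returned set is always valid on $\cE$.

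For the sample complexity, the total number of samples is $\sum_{t=1}^{T_{\mathrm{stop}}}|\A(t)| = \sum_{i=1}^n \min\{T_{\mathrm{stop}}, S_i\}$, where $S_i$ is the time arm $i$ leaves $\A$: for $i\in M_\epsilon^c$ this is $T_i$, and for $i\in M_\epsilon$ it is $\max(T_i, T_i')$. I would import the per-arm bounds $T_i \leq h\!\left(\frac{\epsilon\mu_1-\Delta_i}{4-2\epsilon},\frac{\delta}{n}\right)$ for $i\in M_\epsilon$ (Step~2a), $T_{\max} := \max_{i\in M_\epsilon}T_i \leq h\!\left(\frac{\tilde\alpha_\epsilon}{4-2\epsilon},\frac{\delta}{n}\right)$ (Step~2b), $T_i' \leq h(0.25\Delta_i,\frac{\delta}{n})$ (Step~2c), and $T_i \leq h\!\left(\frac{\Delta_i-\epsilon\mu_1}{4-2\epsilon},\frac{\delta}{n}\right)$ for $i\in M_\epsilon^c$ (Step~3), all of which transfer verbatim with $\delta/n$ in place of $\delta/2n$. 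To handle the slack I would bound the stopping time $T_\gamma$ at which $U_t - L_t = 2(1-\epsilon)C_{\delta/n}(t) < \frac{\gamma}{2-\epsilon}L_t$: on $\cE$, $L_t \geq (1-\epsilon)(\mu_1 - 2C_{\delta/n}(t))$, so this holds once $C_{\delta/n}(t)$ is of order $\frac{\gamma\mu_1}{1-\epsilon+\gamma}$, giving $T_\gamma \leq h\!\left(c\,\frac{\gamma\mu_1}{1-\epsilon+\gamma},\frac{\delta}{n}\right)$ for an absolute constant. Then I would split the sum over arms into $M_\epsilon$ and $M_\epsilon^c$, further partition $M_\epsilon^c$ according to whether $\tilde\alpha_\epsilon/\mu_1$ or $\tilde\beta_\epsilon/\mu_1$ governs the gap (exactly as in Step~4 of the multiplicative \texttt{FAREAST} proof), take minima with $T_\gamma$, collapse the nested $\min$'s of $h$ via Lemma~\ref{lem:bound_min_of_h}, and finally convert back to explicit $\log$ expressions via Lemma~\ref{lem:inv_conf_bounds}, using the hypotheses $\delta\le 1/2$, $\epsilon\le 1/2$, $\max(\Delta_i,|\epsilon\mu_1-\Delta_i|)\le 6$, and $\gamma\le 6/\mu_1$ to discharge the domain conditions of that lemma. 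The degenerate case $M_\epsilon=[n]$ is handled by the convention $\tilde\alpha_\epsilon = \min(\tilde\alpha_\epsilon,\tilde\beta_\epsilon)$ stated in the theorem, which makes $T_{\max}$ finite through the slack term.

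The main obstacle is the constant-tracking in the $\gamma$-slack term: the claimed bound carries a factor $(1-\epsilon+\gamma)$ (not squared) in $\frac{144(1-\epsilon+\gamma)}{\gamma^2\mu_1^2}$, and getting this exactly right requires a careful, non-asymptotic estimate of when $2(1-\epsilon)C_{\delta/n}(t) < \frac{\gamma}{2-\epsilon}L_t$ while $L_t$ is still being learned, rather than just its asymptotic value $(1-\epsilon)\mu_1$. Apart from that, the proof is essentially a transcription of the Good-Filter portions of the multiplicative \texttt{FAREAST} analysis, so I would present it compactly by citing Steps~0 and~2--4 of the proof of Theorem~\ref{thm:mult_fareast_complex} and only spelling out the $\delta/n$ union bound, the $T_\gamma$ estimate, and the final invocation of Lemmas~\ref{lem:bound_min_of_h} and~\ref{lem:inv_conf_bounds}.
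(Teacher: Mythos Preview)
Your proposal is correct and follows essentially the same route as the paper's own proof: the good event $\cE$ with $\delta/n$, the correctness claims ($1\in\A$, $(1-\epsilon)\mu_1\in[L_t,U_t]$, the two stopping conditions), the per-arm bounds $T_i\le h(\tfrac{|\epsilon\mu_1-\Delta_i|}{4-2\epsilon},\tfrac{\delta}{n})$ and $T_i'\le h(0.25\Delta_i,\tfrac{\delta}{n})$, the stopping-time estimate $T_\gamma\le h(\tfrac{\gamma\mu_1}{2(2-\epsilon+\gamma)},\tfrac{\delta}{n})$, and the final clean-up via Lemmas~\ref{lem:bound_min_of_h} and~\ref{lem:inv_conf_bounds}. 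Two small remarks. First, the paper does not partition $M_\epsilon^c$ as in \texttt{FAREAST} Step~4; instead it introduces a single global cap $T_{\tilde\alpha_\epsilon\tilde\beta_\epsilon}=\max_i T_i\le h(\tfrac{\min(\tilde\alpha_\epsilon,\tilde\beta_\epsilon)}{4-2\epsilon},\tfrac{\delta}{n})$ and writes each summand as $\min\{T_\gamma,\max\{T_i,\min(T_i',T_{\tilde\alpha_\epsilon\tilde\beta_\epsilon})\}\}$ for $i\in M_\epsilon$ and $\min\{T_\gamma,T_i\}$ for $i\in M_\epsilon^c$, which is a slightly cleaner bookkeeping than the partition you propose but leads to the same expression after Lemma~\ref{lem:bound_min_of_h}. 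Second, your concern about the $(1-\epsilon+\gamma)$ factor is well-founded: the paper's own derivation passes from $\tfrac{\gamma\mu_1}{2(2-\epsilon+\gamma)}$ to $\tfrac{\gamma\mu_1}{6(1-\epsilon+\gamma)}$ and then Lemma~\ref{lem:inv_conf_bounds} yields $\tfrac{144(1-\epsilon+\gamma)^2}{\gamma^2\mu_1^2}$, so the unsquared factor in the theorem statement appears to be a typo rather than something your argument is failing to reach.
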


\subsection{Proof of Theorem~\ref{thm:east_complexity} \texttt{EAST} in the \additive regime}
\begin{proof}

\textbf{Notation for the proof: }Throughout, recall $\Delta_i = \mu_1 - \mu_i$. Recall that $t$ counts the number of times each arm in $\A$ has been sampled and thus the number of times that the conditionals in Lines~\ref{line:bad_elim_sup} and \ref{line:good_elim_sup} have been evaluated. 
Let $\A(t)$ denote the state $\A$ at this time before the arms have been eliminated from $\A$ in lines \ref{line:bad_elim_sup} and \ref{line:good_elim_sup}. Let $G(t)$ be defined similarly. 
Therefore, the total number of samples drawn by \texttt{EAST} up to time $t$ is $\sum_{s=1}^t|\A(s)|$.

% Recall that $t$ counts the number of times each arm in $\A$ has been sampled . 
% Therefore, the total number of samples given to the good filter when the conditional in line $19$ is true for the $t^\text{th}$ time is $\sum_{s=1}^t|\A(s)|$.

For $i \in G_\epsilon$, let $T_i$ denote the random variable of the number of times arm $i$ is sampled before it is added to $G$ in Line~\ref{line:good_add_sup}. 
For $i \in G_\epsilon^c$, let $T_i$ denote the random variable of the number of times arm $i$ is sampled before it is removed from $\A$ in Line~\ref{line:bad_elim_sup}. For any arm $i$, let $T_i'$ denote the random variable of the of the number of times $i$ is sampled before $\hat{\mu}_i(t) + C_{\delta/n}(t) \leq \max_{j\in \A} \hat{\mu}_j(t) - C_{\delta/n}(t)$. 
\newline

%Let $\mu_i{s}$ denote the empirical mean of the first $s$ samples from 
%$\hat{\mu}(s)$ denote the empirical mean taken after $s$ samples of a mean $\mu$ random variable. 

Define the event
\begin{align*}
    \cE = \left\{\bigcap_{i \in [n]}\bigcap_{t \in \N} |\hat{\mu}_i(t) - \mu_i| \leq C_{\delta/n}(t)\right\}.
\end{align*}
Using standard anytime confidence bound results, and recalling that that $C_\delta(t) := \sqrt{\frac{4\log(\log_2(2t)/\delta)}{t}}$, we have
\begin{align*}
    \P(\cE^c) & = \P\left( \bigcup_{i \in [n]}\bigcup_{t \in \N} |\hat{\mu}_i - \mu_i| > C_{\delta/n}(t)\right) \\
    & \leq \sum_{i=1}^n\P\left( \bigcup_{t \in \N} |\hat{\mu}_i - \mu_i| > C_{\delta/n}(t)\right) 
    \leq \sum_{i=1}^n\frac{\delta}{n} 
    = \delta
\end{align*}
Hence, $\P\left(\cE\right) \geq 1 - \delta$. 

\subsubsection{Step 0: Correctness}
% On $\cE $, first we prove that if $\A \subset G$ then $ G_\epsilon = G$. In particular, this shows that \texttt{EAST} returns a set containing all $\epsilon$-good arms. \newline
\textbf{Claim 0: }On $\cE $, first we prove that $G(t) \subset G_\epsilon$ for all $t \in \N$. 

In particular, this shows that \texttt{EAST} never incorrectly add arms in $G_\epsilon^c$ to the set $G$. 

\noindent\textbf{Proof.} 
% \blue{Part 1: $G_k \subset G_\epsilon$. Part a) the best arm is never removed. Part b) using this, $G_k \subset G_\epsilon$}
We begin by showing that on $\cE$ the best arm is never removed from $\A$ for all $t$. Note for any $i$
$$\hat\mu_1 + C_{\delta/n}(t) \geq \mu_1\geq \mu_i\geq \hat\mu_i(t) - C_{\delta/n}(t) > \hat\mu_i(t) - C_{\delta/n}(t)-\epsilon.$$
In particular this shows, $\hat\mu_1 + C_{\delta/n}(t) >  \max_{i\in \A} \hat\mu_i(t) - C_{\delta/n}(t)-\epsilon = L^{\ast}_t$ and $\hat\mu_1 + C_{\delta/n}(t) \geq \max_{i\in \A} \hat\mu_i(t) - C_{\delta/n}(t)$ showing that $1$ will never exit $\A$ in line \ref{line:good_elim_sup}. 

Secondly, we show that at all times $t$, $\mu_1 - \epsilon\in [L_t, U_t]$. By the above, since $\mu_1$ never leaves $\A$,  
$$U_t = \max_{i\in \A} \hat{\mu}_i(t) + C_{\delta/n}(t) - \epsilon \geq \hat{\mu}_1(t) + C_{\delta/n}(t) -\epsilon \geq \mu_1 - \epsilon $$
and for any $i$,
$$\mu_1 - \epsilon \geq \mu_i - \epsilon \geq  \hat{\mu}_i(t) - C_{\delta/n}(t) - \epsilon $$
Hence  $ \mu_1-\epsilon \geq \max_i \hat{\mu}_i(t) - C_{\delta/n}(t) - \epsilon = L_t$. %This implies that the boolean $\left(\hat{\mu}_1(t) + C_{\delta/n}(t) < L_t\right)$ is also false for all $t$. 

Next, we show that $G(t) \subset G_\epsilon$ for all $t\geq 1 $. Suppose not. Then $\exists, t \in N$ and $\exists i \in G_\epsilon^c \cap G(t)$ such that, 
\begin{equation*}
\mu_i \geq \hat{\mu}_i(t) - C_{\delta/n}(t)\geq U_t \geq \mu_1 - \epsilon > \mu_i,
\end{equation*}
with the last inequality following from the previous assertion, giving a contradiction. 
\qed

\textbf{Claim 1:} Next, we show that on $\cE$, $G_\epsilon \subset \A(t) \cup G(t)$ for all $t \in \N$. 

In particular this implies that if $\A \subset G$, then $G_\epsilon \subset G$. Combining this with the previous claim gives $G \subset G_\epsilon \subset G$, hence $G = G_\epsilon$. On this condition, \texttt{EAST} terminates by line \ref{alg:stopping_cond_sup} and returns the set $\A \cup G = G$. Note that by definition, $G_\epsilon \subset G_{(\epsilon + \gamma)}$ for all $\gamma \geq 0$. 
Therefore \texttt{EAST} terminates correctly on this condition. 

\textbf{Proof. }
Suppose for contradiction that there exists $i \in G_\epsilon$ such that $i \notin \A(t) \cup G(t)$. This occurs only if $i$ is eliminated in line \ref{line:bad_elim_sup}. Hence, there exists a $t' \leq t$ such that $\hat{\mu_i}(t') + C_{\delta/n}(t') < L_{t'}$. Therefore, on the event $\cE$, 
$$ \mu_1 - \epsilon \stackrel{\cE}{\geq} L_{t'} = \max_{j \in \A}\hat{\mu}_j(t') - C_{\delta/n}(t') - \epsilon > \hat{\mu_i}(t') + C_{\delta/n}(t') \stackrel{\cE}{\geq} \mu_i$$
which contradicts $i \in G_\epsilon$. 
\qed

\textbf{Claim 2:} Finally, we show that if $U_t - L_t \leq \gamma/2$, then $\A \cup G \subset G_{(\epsilon+\gamma)}$. 

Combining with the previous that $G_\epsilon \subset \A \cup G$, if \texttt{EAST} terminates on this condition by line \ref{alg:stopping_cond_sup}, it does so correctly. 

\textbf{Proof.} Assume $U_t - L_t \leq \gamma/2$. This implies that
$$(\max_{i \in A(t)} \hat{\mu}_i(t) + C_{\delta/n}(t) - \epsilon) -  (\max_{i \in A(t)} \hat{\mu}_i(t) - C_{\delta/n}(t) - \epsilon) = 2C_{\delta/n}(t) \leq \gamma/2.$$
Suppose for contradiction that there exists $i \in G_{(\epsilon+\gamma)}^c$ such that $i \in \A \cup G$. Since $G_\epsilon \cap G_{(\epsilon+\gamma)}^c = \emptyset$ and we have previously shown than $G(t) \subset G_\epsilon$ for all $t$, we have that $i \in A \backslash G$. Therefore, by the condition in line \ref{line:bad_elim_sup}, $\hat{\mu}_i(t) + C_{\delta/n}(t)  \geq L_t$. Hence, 
$\mu_i + 2C_{\delta/n}(t) \stackrel{\cE}{\geq} \hat{\mu}_i(t) + C_{\delta/n}(t)  \geq L_t.$ 
% otherwise arm $i$ would have been eliminated and $i \notin \A(t)$. 
By assumption, we have that $U_t - \gamma/2 \leq L_t$, and the event $\cE$ implies that $U_t \geq \mu_1 - \epsilon$. Therefore, 
$\mu_i + 2C_{\delta/n}(t) \geq U_t - \gamma/2 \geq \mu_1 - \epsilon - \gamma/2.$
% Additionally, $i\in G_{(\epsilon+\gamma)}^c$ implies that $\mu_1 - (\epsilon+\gamma) - \mu_i > 0$.
Combining this with the inequality $2C_{\delta/n} \leq \gamma/2$, we have that 
$$\gamma \geq  2C_{\delta/n}(t) + \gamma/2 \geq \mu_1 - \epsilon - \mu_i \stackrel{i \in G_{(\epsilon+\gamma)}^c}{>} \gamma$$
which is a contradiction. 
\qed

Therefore, on the event $\cE$, if \texttt{EAST} terminates due to either condition in line \ref{alg:stopping_cond_sup}, it returns $\A \cup G$ such that $G_\epsilon \subset \A \cup G \subset G_{(\epsilon + \gamma)}$. Since $\P(\cE) \geq 1 - \delta$, \texttt{EAST} terminates correctly with probability at least $1-\delta$.

\subsubsection{Step 1: Controlling the total number of samples given by \texttt{EAST} to arms in $G_\epsilon$}

To keep track of the number of samples that arms are given by \texttt{EAST}, we introduce random variables $T_i$ and $T_i'$ for all $i \in [n]$. When arm $i$ has been given $\max(T_i, T_i')$ samples it is removed from $\A$ in line \ref{line:good_elim_sup}. 

By Step $0$, only arms in $G_\epsilon$ are added to $G$. Therefore, $T_i$ is defined as
\begin{equation}
T_i = \min\left\{t : \!\begin{aligned}
&i\in G_k(t+1) &\text{ if } i \in G_\epsilon\\
&i\notin \A(t+1) &\text{ if } i \in G_\epsilon^c
\end{aligned}\right\}
\overset{\cE}{=}\min\left\{t : \!\begin{aligned}
&\hat{\mu}_i - C_{\delta/n}(t) \geq U_t &\text{ if } i \in G_\epsilon\\[1ex]
&\hat{\mu}_i + C_{\delta/n}(t) \leq L_t &\text{ if } i \in G_\epsilon^c
\end{aligned}\right\}
\end{equation}
Similarly, recall $T_i'$ denotes the random variable of the of the number of times $i$ is sampled before $\hat{\mu}_i(t) + C_{\delta/n}(t) \leq \max_{j\in \A} \hat{\mu}_j(t) - C_{\delta/n}(t)$. Hence, 
\begin{equation}
    T_i' =  \min \left\{t: \hat{\mu}_i(t) + C_{\delta/n}(t) \leq \max_{j \in \A(t)}\hat{\mu}_j(t) - C_{\delta/n}(t)\right\}
\end{equation}

% First, we bound $T_i$ for $i \in G_\epsilon$

\noindent \textbf{Claim 0:} For $i\in G_{\epsilon}$, we have that $T_i \leq h(0.25(\epsilon-\Delta_i),  \delta/n)$. 

\noindent\textbf{Proof.}
Note that, $4C_{\delta/n}(t) \leq \mu_i - (\mu_1 - \epsilon)$, true when $t > h\left(0.25(\epsilon - \Delta_i), \frac{\delta}{n} \right)$, implies that for all $j$,
\begin{align*}
\hat{\mu}_i(t) - C_{\delta/n}(t) &\overset{\cE}{\geq} \mu_i - 2 C_{\delta/n}(t) \\
&\geq \mu_1 + 2C_{\delta/n}(t) - \epsilon \\
&\geq \mu_j + 2C_{\delta/n}(t) - \epsilon \\
&\overset{\cE}{\geq} \hat{\mu}_j(t) + C_{\delta/n}(t) - \epsilon 
\end{align*}
so in particular, $\hat{\mu}_i(t) - C_{\delta/n}(t)\geq \max_{j \in \A}\hat{\mu}_j(t) + C_{\delta/n}(t) - \epsilon = U_t$.
\qed
 
\noindent \textbf{Claim 1:} For $i\in G_{\epsilon}$, we have that $T'_i \leq h(0.25\Delta_i,  \delta/n)$.

\noindent\textbf{Proof.}
Note that $4C_{\delta/n}(t) \leq \mu_1 - \mu_i$, true when $t > h\left(0.25\Delta_i, \frac{\delta}{n} \right)$, implies that
\begin{align*}
\hat{\mu}_i(t) + C_{\delta/n}(t) 
&\overset{\cE}{\leq} \mu_i + 2C_{\delta/n}(t) \\
& \leq \mu_1 - 2 C_{\delta/n}(t) \\
&\overset{\cE}{\leq} \hat{\mu}_1(t) - C_{\delta/n}(t).
\end{align*}
As shown in Step $0$, $1 \in \A(t)$ for all $t\in \N$, and in particular $\hat{\mu}_1(t) \leq \max_{i \in \A(t)} \hat{\mu}_i(t)$.
Hence, $\hat{\mu}_i(t) + C_{\delta/n}(t) \leq \max_{j \in \A(t)} \hat{\mu}_j(t) - C_{\delta/n}(t)$.
\qed

\subsubsection{Step 2: Controlling the total number of samples given by \texttt{EAST} to arms in $G_\epsilon^c$}

\noindent\textbf{Claim:} Next, we show that $T_i \leq h\left(0.25(\epsilon - \Delta_i), \frac{\delta}{n} \right)$ for $i\in G_{\epsilon}^c$

\noindent \textbf{Proof.} 
Note that, $4C_{\delta/n}(t) \leq \mu_1 - \epsilon - \mu_i$, true when $t > h\left(0.25(\epsilon - \Delta_i), \frac{\delta}{n} \right)$, implies that
\begin{align*}
\hat{\mu}_i(t) + C_{\delta/n}(t) &\overset{\cE}{\leq} \mu_i + 2 C_{\delta/n}(t) \\
&\leq \mu_1 - 2C_{\delta/n}(t) - \epsilon \\
&\overset{\cE}{\leq} \hat{\mu}_1(t) - C_{\delta/n}(t) - \epsilon 
\end{align*}
As shown in Step $0$, $1 \in \A(t)$ for all $t\in \N$, and in particular $\hat{\mu}_1(t) \leq \max_{i \in \A(t)} \hat{\mu}_i(t)$.
Therefore $\hat{\mu}_i(t) + C_{\delta/n}(t)\leq \max_{j \in \A}\hat{\mu}_j(t) - C_{\delta/n}(t) - \epsilon = L_t$.
\qed

\subsubsection{Step 3: Bounding the total number of samples drawn by \texttt{EAST}}

With the results of Steps $1$ and $2$, we may bound the total sample complexity of \texttt{EAST}. Note that independently of the event $\cE$, \texttt{EAST} terminates if $U_t - L_t \leq \gamma/2$. Let the random variable of the maximum number of samples given to any arm before this occurs be $T_\gamma$. Additionally, \texttt{EAST} may terminate if $\A \subset G$. Let the random variable of maximum number of samples given to any arm before this occurs be $T_{\alpha_\epsilon \beta_\epsilon}$.
Note that due to the sampling procedure, the total number of samples drawn by \texttt{EAST} at termination may be written as $\sum_{t=1}^{\min(T_\gamma, T_{\alpha_\epsilon \beta_\epsilon})}|\A(t)|$.

Now we bound $\sum_{t=1}^{\min(T_\gamma, T_{\alpha_\epsilon \beta_\epsilon})}|\A(t)|$.
Let $S_i = \min\{t: i\not\in A(t+1)\}$. 
Hence,
\begin{align*}
    \sum_{t=1}^{\min(T_\gamma, T_{\alpha_\epsilon \beta_\epsilon})}|\A(t)| =
    \sum_{t=1}^{\min(T_\gamma, T_{\alpha_\epsilon \beta_\epsilon})} \sum_{i=1}^n \1[i \in \A(t)] 
    = \sum_{i=1}^n\sum_{t=1}^{\min(T_\gamma, T_{\alpha_\epsilon \beta_\epsilon})} \1[i \in \A(t)]
    = \sum_{i=1}^n \min\left\{T_\gamma, T_{\alpha_\epsilon \beta_\epsilon},  S_i\right\}
\end{align*}
For arms $i\in  G_{\epsilon}^c$, $S_i = T_i$ by definition. For $i\in G_{\epsilon}$, $S_i = \max(T_i, T_i')$ by line \ref{line:good_elim_sup} of the algorithm. Then 
\begin{align*}
\sum_{i=1}^n\min\left\{T_\gamma, T_{\alpha_\epsilon \beta_\epsilon},  S_i\right\}
    & = \sum_{i\in G_\epsilon}\min\left\{T_\gamma, T_{\alpha_\epsilon \beta_\epsilon},  \max(T_i, T_i')\right\} + \sum_{i\in G_\epsilon^c}\min\left\{T_\gamma, T_{\alpha_\epsilon \beta_\epsilon},  T_i\right\}\\
    & = \sum_{i\in G_\epsilon}\min\left\{T_\gamma, \min\left\{ T_{\alpha_\epsilon \beta_\epsilon},  \max(T_i, T_i')\right\}\right\} + \sum_{i\in G_\epsilon^c}\min\left\{T_\gamma, T_{\alpha_\epsilon \beta_\epsilon},  T_i\right\}\\
    & = \sum_{i\in G_\epsilon}\min\left\{T_\gamma, \max\left\{T_i,  \min(T_i', T_{\alpha_\epsilon \beta_\epsilon})\right\}\right\} + \sum_{i\in G_\epsilon^c}\min\left\{T_\gamma, T_{\alpha_\epsilon \beta_\epsilon},  T_i\right\}
\end{align*}
%Equality $(a)$ follows from $\min[\max(a,b), c] = \max[\min(a,b), \min(b,c)]$. 
We may define  $T_\gamma := \min \{t: U_t - L_t \leq \gamma/2\}$.
Note that $4C_{\delta/n}(t) \leq \gamma$, true when $t > h(0.25\gamma, \delta/n)$ implies that 
$$U_t - L_t = (\max_{i \in A(t)} \hat{\mu}_i(t) + C_{\delta/n}(t) - \epsilon) -  (\max_{i \in A(t)} \hat{\mu}_i(t) - C_{\delta/n}(t) - \epsilon) = 2C_{\delta/n}(t) \leq \gamma/2.$$
Therefore, we have that  $T_\gamma \leq h(0.25\gamma, \delta/n)$. 

Next, we may define $T_{\alpha_\epsilon \beta_\epsilon} = \min\{t: \A(t) \subset G_\epsilon\}$. By step $0$, on the event $\cE$, $\A \subset G$ implies that $G = G_\epsilon$. Therefore, $T_{\alpha_\epsilon \beta_\epsilon}$ may be equivalently defined as $T_{\alpha_\epsilon \beta_\epsilon} = \min\{t: G(t) = G_\epsilon \text{ and } G_\epsilon^c\cap \A = \emptyset \}$. Recalling the definition of $T_i$, we see that $T_{\alpha_\epsilon \beta_\epsilon} = \max_i(T_i)$. 

Recall that by steps $1$ and $2$, $T_i \leq h\left(0.25(\epsilon - \Delta_i), \frac{\delta}{n} \right)$ and $T_i' \leq h\left(0.25\Delta_i, \frac{\delta}{n} \right)$. Furthermore, by monotonicity of $h(\cdot, \cdot)$, this implies that $T_{\alpha_\epsilon \beta_\epsilon} = h(0.25\min(\alpha_\epsilon, \beta_\epsilon), \delta/n)$. 
Plugging this in, we see that
\begin{align*}
\sum_{i\in G_\epsilon} & \min\left\{T_\gamma, \max\left\{T_i,  \min(T_i', T_{\alpha_\epsilon \beta_\epsilon})\right\}\right\} + \sum_{i\in G_\epsilon^c}\min\left\{T_\gamma, T_{\alpha_\epsilon \beta_\epsilon},  T_i\right\} \\
& = \sum_{i\in G_\epsilon}  \min\left\{T_\gamma, \max\left\{T_i,  \min(T_i', T_{\alpha_\epsilon \beta_\epsilon})\right\}\right\} + \sum_{i\in G_\epsilon^c}\min\left\{T_\gamma, T_i\right\} \\
    & \leq \sum_{i \in G_\epsilon} \min\left\{\max\left\{h\left(0.25(\epsilon - \Delta_i), \frac{\delta}{n} \right), \min \left[h\left(0.25\Delta_i, \frac{\delta}{n} \right), h\left(0.25\min(\alpha_\epsilon, \beta_\epsilon), \frac{\delta}{n} \right)\right] \right\},\right. \\
    & \hspace{2cm} \left.h\left(0.25\gamma, \frac{\delta}{n} \right)\right\} \\
    & \hspace{1cm} +  \sum_{i\in G_{\epsilon}^c} \min\left\{h\left(0.25(\epsilon - \Delta_i), \frac{\delta}{n} \right), h\left(0.25\min(\alpha_\epsilon, \beta_\epsilon), \frac{\delta}{n} \right)\right\} \\
& = \sum_{i=1}^n \min\left\{\max\left\{h\left(0.25(\epsilon - \Delta_i), \frac{\delta}{n} \right), \min \left[h\left(0.25\Delta_i, \frac{\delta}{n} \right), h\left(0.25\min(\alpha_\epsilon, \beta_\epsilon), \frac{\delta}{n} \right)\right] \right\},\right. \\ &\hspace{2cm} \left.h\left(0.25\gamma, \frac{\delta}{n} \right)\right\}     
\end{align*}
where the final equality holds by definition for arms in $G_\epsilon$. 
Next, by Lemma~\ref{lem:bound_min_of_h}, we may bound the minimum of $h(\cdot, \cdot)$ functions.
\begin{align*}
    &\sum_{i=1}^n \min\left\{ \max\left\{h\left( \frac{\Delta_i - \epsilon}{4}, \frac{\delta}{n} \right), \min\left[ h\left(\frac{\Delta_i}{4}, \frac{\delta}{n} \right), h\left(\frac{\min(\alpha_\epsilon, \beta_\epsilon)}{4}, \frac{\delta}{n} \right)\right]\right\},\right.\\ 
    &\hspace{3cm}\left. h\left(\frac{\gamma}{4},  \frac{\delta}{n} \right)\right\} \\
    & = \sum_{i=1}^n \min\left\{ \max\left\{h\left( \frac{\Delta_i - \epsilon}{4}, \frac{\delta}{n} \right),\right.\right.\\ 
    &\hspace{3cm}\left.\left. \min\left[ h\left(\frac{\Delta_i}{4}, \frac{\delta}{n} \right), \max\left[h\left(\frac{\alpha_\epsilon}{4}, \frac{\delta}{n} \right), h\left(\frac{ \beta_\epsilon}{4}, \frac{\delta}{n} \right) \right]\right]\right\},\right.\\ 
    &\hspace{3cm}\left. h\left(\frac{\gamma}{4},  \frac{\delta}{n} \right)\right\} \\
    & \leq \sum_{i=1}^n \min\left\{ \max\left\{h\left( \frac{\Delta_i - \epsilon}{4}, \frac{\delta}{n} \right),\right.\right.\\ 
    &\hspace{3cm}\left.\left. \max\left[ h\left(\frac{\Delta_i + \alpha_\epsilon}{8}, \frac{\delta}{n} \right),h\left(\frac{\Delta_i + \beta_\epsilon}{8}, \frac{\delta}{n} \right)\right]\right\},\right.\\ 
    &\hspace{3cm}\left. h\left(\frac{\gamma}{4},  \frac{\delta}{n} \right)\right\} \\
    & = \sum_{i=1}^n \min\left\{ \max\left\{h\left( \frac{\Delta_i - \epsilon}{4}, \frac{\delta}{n} \right), h\left(\frac{\Delta_i + \alpha_\epsilon}{8}, \frac{\delta}{n} \right),h\left(\frac{\Delta_i + \beta_\epsilon}{8}, \frac{\delta}{n} \right)\right\},\right.\\ 
    &\hspace{3cm}\left. h\left(\frac{\gamma}{4},  \frac{\delta}{n} \right)\right\} 
\end{align*}

Finally, we use Lemma~\ref{lem:inv_conf_bounds} to bound the function $h(\cdot, \cdot)$.
Since $\delta\leq 1/2$, $\delta/n \leq 2e^{-e/2}$. Further, $\max(\Delta_i, |\epsilon - \Delta_i|) \leq 8$ for all $i$, we have that $0.25\Delta_i \leq 2$, $0.25|\epsilon - \Delta_i| \leq 2$, and $0.25\min(\alpha_\epsilon, \beta_\epsilon) \leq 2$. Therefore,
\begin{align*}
    & \sum_{i=1}^n \min\left\{ \max\left\{h\left( \frac{\Delta_i - \epsilon}{4}, \frac{\delta}{n} \right), h\left(\frac{\Delta_i + \alpha_\epsilon}{8}, \frac{\delta}{n} \right),h\left(\frac{\Delta_i + \beta_\epsilon}{8}, \frac{\delta}{n} \right)\right\},\right.\\ 
    &\hspace{3cm}\left. h\left(\frac{\gamma}{4},  \frac{\delta}{n} \right)\right\} \\
& \leq \sum_{i=1}^n \min\left\{\max\left\{\frac{64}{(\epsilon - \Delta_i)^2}\log\left(\frac{2n}{\delta}\log_2\left(\frac{192n}{\delta(\epsilon - \Delta_i)^2}\right) \right),\right.\right.\\
&\hspace{3cm} \frac{256}{(\Delta_i + \alpha_\epsilon)^2}\log\left(\frac{2n}{\delta}\log_2\left(\frac{768n}{\delta(\Delta_i + \alpha_\epsilon)^2}\right) \right), \\
& \hspace{3cm}\left. \frac{256}{(\Delta_i + \beta_\epsilon)^2}\log\left(\frac{2n}{\delta}\log_2\left(\frac{768n}{\delta(\Delta_i + \beta_\epsilon)^2}\right) \right) \right\}, \\
& \hspace{2cm}\left.\frac{64}{\gamma^2}\log\left(\frac{2n}{\delta}\log_2\left(\frac{192n}{\delta\gamma^2}\right) \right)\right\} \\
& = \sum_{i=1}^n \min\left\{\max\left\{\frac{64}{(\mu_1 - \epsilon - \mu_i)^2}\log\left(\frac{2n}{\delta}\log_2\left(\frac{768n}{\delta(\mu_1 - \epsilon - \mu_i)^2}\right) \right),\right.\right.\\
&\hspace{3cm} \frac{256}{(\mu_1 + \alpha_\epsilon - \mu_i)^2}\log\left(\frac{2n}{\delta}\log_2\left(\frac{768n}{\delta(\mu_1 + \alpha_\epsilon - \mu_i)^2}\right) \right), \\
& \hspace{3cm}\left. \frac{256}{(\mu_1 + \beta_\epsilon -\mu_i)^2}\log\left(\frac{2n}{\delta}\log_2\left(\frac{768n}{\delta(\mu_1 + \beta_\epsilon - \mu_i)^2}\right) \right) \right\}, \\
& \hspace{2cm}\left.\frac{64}{\gamma^2}\log\left(\frac{2n}{\delta}\log_2\left(\frac{192n}{\delta\gamma^2}\right) \right)\right\}.
\end{align*}
% \begin{align*}
%     & \sum_{i=1}^n \min\left\{\max\left\{h\left(0.25(\epsilon - \Delta_i), \frac{\delta}{n} \right), \min \left[h\left(0.25\Delta_i, \frac{\delta}{n} \right), h\left(0.25\min(\alpha_\epsilon, \beta_\epsilon), \frac{\delta}{n} \right)\right] \right\},\right. \\ &\hspace{2cm} \left.h\left(0.25\gamma, \frac{\delta}{n} \right)\right\}     \\
%     & \leq 64\sum_{i=1}^n \min\left\{\max\left\{\frac{1}{(\epsilon - \Delta_i)^2}\log\left(\frac{2n}{\delta}\log_2\left(\frac{192n}{\delta(\epsilon - \Delta_i)^2}\right) \right),\right.\right. \\
%     & \hspace{1cm}\left.\left. \min \left[\frac{1}{\Delta_i^2}\log\left(\frac{2n}{\delta}\log_2\left(\frac{192n}{\delta\Delta_i^2}\right) \right), 
%     \frac{1}{\min(\alpha_\epsilon, \beta_\epsilon)^2}\log\left(\frac{2n}{\delta}\log_2\left(\frac{192n}{\delta\min(\alpha_\epsilon, \beta_\epsilon)^2}\right) \right)\right] \right\},\right. \\ &\hspace{2cm} \left.\frac{1}{\gamma^2}\log\left(\frac{2n}{\delta}\log_2\left(\frac{192n}{\delta\gamma^2}\right) \right)\right\}     
% \end{align*}
\end{proof}

\subsection{Proof of Theorem~\ref{thm:mult_east_complexity}, \texttt{EAST} in the \multiplicative regime}
\begin{proof}

\textbf{Notation for the proof: }Throughout, recall $\Delta_i = \mu_1 - \mu_i$. Recall that $t$ counts the number of times each arm in $\A$ has been sampled and thus the number of times that the conditionals in Lines~\ref{line:bad_elim_sup} and \ref{line:good_elim_sup} have been evaluated. 
Let $\A(t)$ denote the state $\A$ at this time before the arms have been eliminated from $\A$ in lines \ref{line:bad_elim_sup} and \ref{line:good_elim_sup}. Let $G(t)$ be defined similarly. 
Therefore, the total number of samples drawn by \texttt{EAST} up to time $t$ is $\sum_{s=1}^t|\A(s)|$.

For $i \in M_\epsilon$, let $T_i$ denote the random variable of the number of times arm $i$ is sampled before it is added to $G$ in Line~\ref{line:good_add_sup}. 
For $i \in M_\epsilon^c$, let $T_i$ denote the random variable of the number of times arm $i$ is sampled before it is removed from $\A$ in Line~\ref{line:bad_elim_sup}. For any arm $i$, let $T_i'$ denote the random variable of the of the number of times $i$ is sampled before $\hat{\mu}_i(t) + C_{\delta/n}(t) \leq \max_{j\in \A} \hat{\mu}_j(t) - C_{\delta/n}(t)$. 
\newline

%Let $\mu_i{s}$ denote the empirical mean of the first $s$ samples from 
%$\hat{\mu}(s)$ denote the empirical mean taken after $s$ samples of a mean $\mu$ random variable. 

Define the event
\begin{align*}
    \cE = \left\{\bigcap_{i \in [n]}\bigcap_{t \in \N} |\hat{\mu}_i(t) - \mu_i| \leq C_{\delta/n}(t)\right\}.
\end{align*}
Using standard anytime confidence bound results, and recalling that that $C_\delta(t) := \sqrt{\frac{4\log(\log_2(2t)/\delta)}{t}}$, we have
\begin{align*}
    \P(\cE^c) & = \P\left( \bigcup_{i \in [n]}\bigcup_{t \in \N} |\hat{\mu}_i - \mu_i| > C_{\delta/n}(t)\right) \\
    & \leq \sum_{i=1}^n\P\left( \bigcup_{t \in \N} |\hat{\mu}_i - \mu_i| > C_{\delta/n}(t)\right) 
    \leq \sum_{i=1}^n\frac{\delta}{n} 
    = \delta
\end{align*}
Hence, $\P\left(\cE\right) \geq 1 - \delta$. 

\subsubsection{Step 0: Correctness}
% On $\cE $, first we prove that if $\A \subset G$ then $ M_\epsilon = G$. In particular, this shows that \texttt{EAST} returns a set containing all $\epsilon$-good arms. \newline
\textbf{Claim 0:} On $\cE $, first we prove that $G(t) \subset M_\epsilon$ for all $t \in \N$. 

In particular, this shows that \texttt{EAST} never incorrectly add arms in $M_\epsilon^c$ to the set $G$. 

\noindent\textbf{Proof.} 
% \blue{Part 1: $G \subset M_\epsilon$. Part a) the best arm is never removed. Part b) using this, $G \subset M_\epsilon$}
Firstly we show $1 \in \A$ for all $t\in \N$, namely the best arm is never removed from $\A$. Note for any $i$ such that $\hat\mu_i(t) - C_{\delta/n}(t) \geq 0$,
$$\hat\mu_1 + C_{\delta/n}(t) \geq \mu_1\geq \mu_i\geq \hat\mu_i(t) - C_{\delta/n}(t) > (1-\epsilon)(\hat\mu_i(t) - C_{\delta/n}(t)).$$
For $i$ such that $\hat\mu_i(t) - C_{\delta/n}(t) < 0$, if $\hat\mu_1 + C_{\delta/n}(t) \geq 0$, then 
$$\hat\mu_1 + C_{\delta/n}(t) \geq 0 > (1-\epsilon)(\hat\mu_i(t) - C_{\delta/n}(t)).$$ Note that $\hat\mu_1 + C_{\delta/n}(t) < 0$ implies on the event $\cE$ that $\mu_1 < 0$, which contradicts the assumption that $\mu_1 \geq 0$ made in the theorem. 
In particular this shows, $\hat\mu_1 + C_{\delta/n}(t) >  (1-\epsilon)(\max_{i\in \A} \hat\mu_i(t) - C_{\delta/n}(t)) = L_t$ and $\hat\mu_1 + C_{\delta/n}(t) \geq \max_{i\in \A} \hat\mu_i(t) - C_{\delta/n}(t)$ showing that $1$ will never exit $\A$ in line 28. 

% For all $j$, on event $\cE$, 
% $\hat{\mu}_1(t) + C_{\delta/n}(t)\geq \mu_1 \geq \mu_j\geq \hat{\mu}_j(t) - C_{\delta/n}(t)$. Therefore, $\hat{\mu}_1(t) + C_{\delta/n}(t) \geq \max_j\hat{\mu}_j(t) - C_{\delta/n}(t).$ 
% Which implies that 
% $$\hat{\mu}_1(t) - C_{\delta/n}(t) \geq \max_i \hat{\mu}_i(t) + C_{\delta/n}(t) - \epsilon = U_t^\ast $$
% and 
% $$\hat{\mu}_1(t) + C_{\delta/n}(t) \leq \max_j\hat{\mu}_j(t) - C_{\delta/n}(t)$$

% which is equal to 
% $$\left(\hat{\mu}_1(t) - C_{\delta/n}(t) \geq U_t^\ast \text{ and } \hat{\mu}_1(t) + C_{\delta/n}(t) \leq L_t + \epsilon\right)$$
% is false  for all $t$. 

% Additionally, on the event $\cE$, for all $t \in \N$, $(1-\epsilon)\mu_1 \in [L^\ast(t), U_t]$ since 
% $$U_t = \max_i \hat{\mu}_i(t) + C_{\delta/n}(t) - \epsilon \geq \hat{\mu}_1(t) + C_{\delta/n}(t) -\epsilon \geq (1-\epsilon)\mu_1$$
% \noindent and for any $i$,
% $$\mu_1 - \epsilon \geq \mu_i - \epsilon \geq  \hat{\mu}_i(t) - C_{\delta/n}(t) - \epsilon $$
% Hence  $ \mu_1-\epsilon \geq \max_i \hat{\mu}_i(t) - C_{\delta/n}(t) - \epsilon = L_t$. This implies that the boolean $\left(\hat{\mu}_1(t) + C_{\delta/n}(t) < L_t^\ast\right)$ is also false for all $t$. 
% Jointly this implies that $1$ is never removed from $\A$. 
% \newline 
%%%%%%%%%%%%%%%%%%%%%%%%%%
Secondly, we show that at all times $t$, $(1-\epsilon)\mu_1 \in [L_t, U_t]$. By the above, since $\mu_1$ never leaves $\A$,  
$$U_t = (1-\epsilon)(\max_{i\in \A} \hat{\mu}_i(t) + C_{\delta/n}(t)) \geq (1-\epsilon)(\hat{\mu}_1(t) + C_{\delta/n}(t)) \geq (1-\epsilon)\mu_1$$
and for any $i$,
$$(1-\epsilon)\mu_1 \geq (1-\epsilon)\mu_i \geq  (1-\epsilon)(\hat{\mu}_i(t) - C_{\delta/n}(t))  $$
Hence  $ (1-\epsilon)\mu_1 \geq (1-\epsilon)(\max_i \hat{\mu}_i(t) - C_{\delta/n}(t)) = L_t$. %This implies that the boolean $\left(\hat{\mu}_1(t) + C_{\delta/n}(t) < L_t\right)$ is also false for all $t$. 

Next, we show that $G\subset M_\epsilon$ for all $k\geq 1, t\geq 1 $. Suppose not. Then $\exists, k, t \in N$ and $\exists i \in M_\epsilon^c \cap G(t)$ such that, 
\begin{equation*}
\mu_i \geq \hat{\mu}_i(t) - C_{\delta/n}(t)\geq U_t \geq (1-\epsilon)\mu_1 > \mu_i,
\end{equation*}
with the last inequality following from the previous assertion, giving a contradiction. 
\qed

\textbf{Claim 1:} Next, we show that on $\cE$, $M_\epsilon \subset \A(t) \cup G(t)$ for all $t \in \N$. 

In particular this implies that if $\A \subset G$, then $M_\epsilon \subset G$. Combining this with the previous claim gives $G \subset M_\epsilon \subset G$, hence $G = M_\epsilon$. On this condition, \texttt{EAST} terminates and returns the set $\A \cup G = G$. Note that by definition, $M_\epsilon \subset M_{(\epsilon + \gamma)}$ for all $\gamma \geq 0$. 
Therefore \texttt{EAST} terminates correctly on this condition. 

\textbf{Proof. }
Suppose for contradiction that there exists $i \in M_\epsilon$ such that $i \notin \A(t) \cup G(t)$. This occurs only if $i$ is eliminated in line \ref{line:bad_elim_sup}. Hence, there exists a $t' \leq t$ such that $\hat{\mu_i}(t') + C_{\delta/n}(t') < L_{t'}$. Therefore, on the event $\cE$, 
$$ (1-\epsilon)\mu_1 \stackrel{\cE}{\geq} L_{t'} = (1-\epsilon)\left(\max_{j \in \A}\hat{\mu}_j(t') - C_{\delta/n}(t')\right) > \hat{\mu_i}(t') + C_{\delta/n}(t') \stackrel{\cE}{\geq} \mu_i$$
which contradicts $i \in M_\epsilon$. 
\qed

\textbf{Claim 2:} Finally, we show that  on $\cE$, if $U_t - L_t \leq \frac{\gamma}{2-\epsilon}L_t$, then $\A \cup G \subset M_{(\epsilon+\gamma)}$. 

Combining with Claim $1$ that $M_\epsilon \subset \A \cup G$, if \texttt{EAST} terminates on this condition, it does so correctly and returns all arms in $M_\epsilon$ and none in $M_{(\epsilon+\gamma)}^c$.  

\textbf{Proof.} By Claim $0$, $G \subset M_\epsilon \subset M_{\epsilon+\gamma}$. Hence, $G \cap M_{(\epsilon+\gamma)}^c = \emptyset$. Therefore, we wish to show that $\A \cap M_{(\epsilon+\gamma)}^c = \emptyset$ which implies that $G \cap \A \subset M_{\epsilon+\gamma}$. 
Assume $U_t - L_t < \frac{\gamma}{2-\epsilon}L_t$. Recall that 
$$U_t = (1-\epsilon)\left(\max_{i\in \A}\hat{\mu}_i(t) + C_{\delta/n}(t) \right)$$
and
$$L_t = (1-\epsilon)\left(\max_{i\in \A}\hat{\mu}_i(t) - C_{\delta/n}(t) \right)$$
All arms in $\A(t)$ have received exactly $t$ samples. Hence, $U_t - L_t = 2(1-\epsilon)C_{\delta/n}(t)$.
On $\cE$, $L_t \leq (1-\epsilon)\mu_1$
This implies that 
$$2(1-\epsilon)C_{\delta/n}(t) < \frac{\gamma}{2-\epsilon}L_t \leq \frac{1-\epsilon}{2-\epsilon}\gamma\mu_1,$$
and in particular, 
$$2C_{\delta/n}(t) <\frac{\gamma\mu_1}{2-\epsilon}.$$
Therefore, we wish to show that when the above is true, then for any $i \in M_{\epsilon+\gamma}^c$,
$L_t - (\hat{\mu}_i(t) + C_{\delta/n}(t)) > 0$, implying that $i \notin \A$. 
\begin{align*}
L_t - (\hat{\mu}_i(t) + C_{\delta/n}(t)) 
& = (1-\epsilon) \left(\max_{j\in \A}\hat{\mu}_j - C_{\delta/n}(t)\right) - (\hat{\mu}_i(t) + C_{\delta/n}(t)) \\
&\geq (1-\epsilon) \left(\max_{j\in \A}\mu_j - 2C_{\delta/n}(t)\right) - (\mu_i + 2C_{\delta/n}(t)) \\
&\stackrel{(a)}{\geq} (1-\epsilon) \left(\mu_1 - 2C_{\delta/n}(t)\right) - ((1-\epsilon-\gamma)\mu_1 + 2C_{\delta/n}(t)) \\
% & = (1-\epsilon) \mu_1 - (1-\epsilon-\gamma)\mu_1 - 2(1-\epsilon)C_{\delta/n}(t) - 2C_{\delta/n}(t) \\
& = \gamma\mu_1- 2(2-\epsilon)C_{\delta/n}(t)\\
& > \gamma\mu_1- (2-\epsilon)\frac{\gamma\mu_1}{2-\epsilon}\\
& = 0
\end{align*}
which implies that $i \notin \A$. Inequality $(a)$ follows jointly from the fact that $1\in \A$ and the fact that all arms in $\A$ have received $t$ samples implies $\max_{j \in \A}\mu_j - 2C_{\delta/n}(t) = \mu_1 - 2C_{\delta/n}(t)$. Additionally, inequality $(a)$ follows from $\mu_i \leq (1 - \epsilon - \gamma)\mu_1$ since $i \in M_{\epsilon+\gamma}^c$. 
\qed

Therefore, on the event $\cE$, if \texttt{EAST} terminates due to either condition in line \ref{alg:stopping_cond_sup}, it returns $\A \cup G$ such that $M_\epsilon \subset \A \cup G \subset M_{(\epsilon + \gamma)}$. Since $\P(\cE) \geq 1 - \delta$, \texttt{EAST} terminates correctly with probability at least $1-\delta$.

\subsubsection{Step 1: Controlling the total number of samples given by \texttt{EAST} to arms in $M_\epsilon$}

To keep track of the number of samples that arms are given by \texttt{EAST}, we introduce random variables $T_i$ and $T_i'$ for all $i \in [n]$. When arm $i$ has been given $\max(T_i, T_i')$ samples it is removed from $\A$ in line \ref{line:good_elim_sup}. 

By Step $0$, only arms in $M_\epsilon$ are added to $G$. Therefore, $T_i$ is defined as
\begin{equation}
T_i = \min\left\{t : \!\begin{aligned}
&i\in G(t+1) &\text{ if } i \in M_\epsilon\\
&i\notin \A(t+1) &\text{ if } i \in M_\epsilon^c
\end{aligned}\right\}
\overset{\cE}{=}\min\left\{t : \!\begin{aligned}
&\hat{\mu}_i - C_{\delta/n}(t) \geq U_t &\text{ if } i \in M_\epsilon\\[1ex]
&\hat{\mu}_i + C_{\delta/n}(t) \leq L_t &\text{ if } i \in M_\epsilon^c
\end{aligned}\right\}
\end{equation}
Similarly, recall $T_i'$ denotes the random variable of the of the number of times $i$ is sampled before $\hat{\mu}_i(t) + C_{\delta/n}(t) \leq \max_{j\in \A} \hat{\mu}_j(t) - C_{\delta/n}(t)$. Hence, 
\begin{equation}
    T_i' =  \min \left\{t: \hat{\mu}_i(t) + C_{\delta/n}(t) \leq \max_{j \in \A(t)}\hat{\mu}_j(t) - C_{\delta/n}(t)\right\}
\end{equation}

% First, we bound $T_i$ for $i \in M_\epsilon$

\noindent \textbf{Claim 0:} For $i\in M_{\epsilon}$, we have that $T_i \leq h\left(\frac{\epsilon\mu_1 - \Delta_i}{4-2\epsilon}, \frac{\delta}{n} \right)$. 

\noindent\textbf{Proof.}
Note that $\mu_i - 2C_{\delta/n}(t) \geq (1-\epsilon)(\mu_1 + 2C_{\delta/n}(t))$ may be rearranged as $(4-2\epsilon)C_{\delta/n}(t) \leq \epsilon\mu_1 - \Delta_i$, and this is true when $t > h\left(\frac{\epsilon\mu_1 - \Delta_i}{4-2\epsilon}, \frac{\delta}{n} \right)$. This condition implies that for all $j$,
\begin{align*}
\hat{\mu}_i(t) - C_{\delta/n}(t) &\overset{\cE}{\geq} \mu_i - 2 C_{\delta/n}(t) \\
&\geq (1-\epsilon)(\mu_1 + 2C_{\delta/n}(t)) \\
&\geq (1-\epsilon)(\mu_j + 2C_{\delta/n}(t)) \\
&\overset{\cE}{\geq} (1-\epsilon)(\hat{\mu}_j(t) + C_{\delta/n}(t)) 
\end{align*}
so in particular, $\hat{\mu}_i(t) - C_{\delta/n}(t)\geq (1-\epsilon)(\max_{j \in \A}\hat{\mu}_j(t) + C_{\delta/n}(t)) = U_t$.
\qed

\noindent \textbf{Claim 1:} For $i\in M_{\epsilon}$, we have that $T'_i \leq h(0.25\Delta_i,  \delta/n)$.

\noindent\textbf{Proof.}
Note that $4C_{\delta/n}(t) \leq \mu_1 - \mu_i$, true when $t > h\left(0.25\Delta_i, \frac{\delta}{n} \right)$, implies that
\begin{align*}
\hat{\mu}_i(t) + C_{\delta/n}(t) 
&\overset{\cE}{\leq} \mu_i + 2C_{\delta/n}(t) \\
& \leq \mu_1 - 2 C_{\delta/n}(t) \\
&\overset{\cE}{\leq} \hat{\mu}_1(t) - C_{\delta/n}(t).
\end{align*}
As shown in Step $0$, $1 \in \A(t)$ for all $t\in \N$, and in particular $\hat{\mu}_1(t) \leq \max_{i \in \A(t)} \hat{\mu}_i(t)$.
Hence, $\hat{\mu}_i(t) + C_{\delta/n}(t) \leq \max_{j \in \A(t)} \hat{\mu}_j(t) - C_{\delta/n}(t)$.
\qed

\subsubsection{Step 2: Controlling the total number of samples given by \texttt{EAST} to arms in $M_\epsilon^c$}

Next, we bound $T_i$ for $i \in M_\epsilon^c$.  $i \in M_\epsilon^c$ is eliminated from $\A$ if it has received at least $T_i$ samples. 

\textbf{Claim:} $T_i \leq h\left(\frac{\Delta_i - \epsilon\mu_1}{4-2\epsilon}, \frac{\delta}{n} \right)$ for $i\in M_{\epsilon}^c$

\noindent \textbf{Proof.} 
Note that $\mu_i + 2C_{\delta/n}(t) \leq (1-\epsilon)(\mu_1 - 2C_{\delta/n}(t))$ may be rearranged as $(4-2\epsilon)C_{\delta/n}(t) \leq \Delta_i-\epsilon\mu_1$, and this is true when $t > h\left(\frac{\Delta_i - \epsilon\mu_1}{4-2\epsilon}, \frac{\delta}{n} \right)$. This condition implies that
\begin{align*}
\hat{\mu}_i(t) + C_{\delta/n}(t) &\overset{\cE}{\leq} \mu_i + 2 C_{\delta/n}(t) \\
&\leq (1-\epsilon)(\mu_1 - 2C_{\delta/n}(t)) \\
&\overset{\cE}{\leq} (1-\epsilon)(\hat{\mu}_1(t) - C_{\delta/n}(t)) 
\end{align*}
As shown in Step $0$, $1 \in \A(t)$ for all $t\in \N$, and in particular $\hat{\mu}_1(t) \leq \max_{i \in \A(t)} \hat{\mu}_i(t)$.
Therefore $\hat{\mu}_i(t) + C_{\delta/n}(t)\leq (1-\epsilon)(\max_{j \in \A}\hat{\mu}_j(t) - C_{\delta/n}(t)) = L_t$.
\qed

\subsubsection{Step 3: Bounding the total number of samples drawn by \texttt{EAST}}

With the results of Steps $1$ and $2$, we may bound the total sample complexity of \texttt{EAST}. Note that independently of the event $\cE$, \texttt{EAST} terminates if $U_t - L_t \leq \frac{\gamma}{2-\epsilon}L_t$. Let the random variable of the maximum number of samples given to any arm before this occurs be $T_\gamma := \min \{t: U_t - L_t \leq \frac{\gamma}{2-\epsilon}L_t\}$. Additionally, \texttt{EAST} may terminate if $\A \subset G$. Let the random variable of maximum number of samples given to any arm before this occurs be $T_{\tilde{\alpha}_\epsilon \tilde{\beta}_\epsilon}$.
Note that due to the sampling procedure, the total number of samples drawn by \texttt{EAST} at termination may be written as $\sum_{t=1}^{\min(T_\gamma, T_{\tilde{\alpha}_\epsilon \tilde{\beta}_\epsilon})}|\A(t)|$.

Now we bound $\sum_{t=1}^{\min(T_\gamma, T_{\tilde{\alpha}_\epsilon \tilde{\beta}_\epsilon})}|\A(t)|$.
Let $S_i = \min\{t: i\not\in A(t+1)\}$. 
Hence,
\begin{align*}
    \sum_{t=1}^{\min(T_\gamma, T_{\tilde{\alpha}_\epsilon \tilde{\beta}_\epsilon})}|\A(t)| =
    \sum_{t=1}^{\min(T_\gamma, T_{\tilde{\alpha}_\epsilon \tilde{\beta}_\epsilon})} \sum_{i=1}^n \1[i \in \A(t)] 
    = \sum_{i=1}^n\sum_{t=1}^{\min(T_\gamma, T_{\tilde{\alpha}_\epsilon \tilde{\beta}_\epsilon})} \1[i \in \A(t)]
    = \sum_{i=1}^n \min\left\{T_\gamma, T_{\tilde{\alpha}_\epsilon \tilde{\beta}_\epsilon},  S_i\right\}
\end{align*}
For arms $i\in  M_{\epsilon}^c$, $S_i = T_i$ by definition. For $i\in M_{\epsilon}$, $S_i = \max(T_i, T_i')$ by line \ref{line:good_elim_sup} of the algorithm. Then 
\begin{align*}
\sum_{i=1}^n\min\left\{T_\gamma, T_{\tilde{\alpha}_\epsilon \tilde{\beta}_\epsilon},  S_i\right\}
    & = \sum_{i\in M_\epsilon}\min\left\{T_\gamma, T_{\tilde{\alpha}_\epsilon \tilde{\beta}_\epsilon},  \max(T_i, T_i')\right\} + \sum_{i\in M_\epsilon^c}\min\left\{T_\gamma, T_{\tilde{\alpha}_\epsilon \tilde{\beta}_\epsilon},  T_i\right\}\\
    & = \sum_{i\in M_\epsilon}\min\left\{T_\gamma, \min\left\{ T_{\tilde{\alpha}_\epsilon \tilde{\beta}_\epsilon},  \max(T_i, T_i')\right\}\right\} + \sum_{i\in M_\epsilon^c}\min\left\{T_\gamma, T_{\tilde{\alpha}_\epsilon \tilde{\beta}_\epsilon},  T_i\right\}\\
    & = \sum_{i\in M_\epsilon}\min\left\{T_\gamma, \max\left\{T_i,  \min(T_i', T_{\tilde{\alpha}_\epsilon \tilde{\beta}_\epsilon})\right\}\right\} + \sum_{i\in M_\epsilon^c}\min\left\{T_\gamma, T_{\tilde{\alpha}_\epsilon \tilde{\beta}_\epsilon},  T_i\right\}
\end{align*}
%Equality $(a)$ follows from $\min[\max(a,b), c] = \max[\min(a,b), \min(b,c)]$. 
Next we bound $T_\gamma$. 

\textbf{Claim:} On $\cE$, $T_\gamma \leq h\left(\frac{\gamma\mu_1}{2(2 - \epsilon + \gamma)}, \frac{\delta}{n} \right)$. 

\textbf{Proof:} $C_{\delta/n}(t) < \frac{\gamma\mu_1}{2(2 - \epsilon + \gamma)}$ is true when $t \geq h\left(\frac{\gamma\mu_1}{2(2 - \epsilon + \gamma)}, \frac{\delta}{n} \right)$. Note that
$$C_{\delta/n}(t) < \frac{\gamma\mu_1}{2(2 - \epsilon + \gamma)} \iff 2C_{\delta/n}(t)  < \frac{\gamma}{2-\epsilon} \left(\mu_1 - 2C_{\delta/n}(t)\right).$$
This implies that
\begin{align*}
    U_t - L_t & = 2(1-\epsilon)C_{\delta/n}(t) \\
    & < 2\frac{1-\epsilon}{2-\epsilon}\gamma \left(\mu_1 - 2C_{\delta/n}(t)\right)\\
    & \leq \frac{1-\epsilon}{2-\epsilon}\gamma \left(\hat{\mu_1}(t) - C_{\delta/n}(t)\right)\\
    & \leq \frac{1-\epsilon}{2-\epsilon}\gamma \left(\max_{i\in \A}\hat{\mu}_i - C_{\delta/n}(t)\right) \\
    & = \frac{\gamma}{2-\epsilon}L_t
\end{align*}
\qed

Next, we may define $T_{\tilde{\alpha}_\epsilon \tilde{\beta}_\epsilon} = \min\{t: \A(t) \subset M_\epsilon\}$. By step $0$, on the event $\cE$, $\A \subset G$ implies that $G = M_\epsilon$. Therefore, $T_{\tilde{\alpha}_\epsilon \tilde{\beta}_\epsilon}$ may be equivalently defined as $T_{\tilde{\alpha}_\epsilon \tilde{\beta}_\epsilon} = \min\{t: G(t) = M_\epsilon \text{ and } M_\epsilon^c\cap \A = \emptyset \}$. Recalling the definition of $T_i$, we see that $T_{\tilde{\alpha}_\epsilon \tilde{\beta}_\epsilon} = \max_i(T_i)$. 

Recall that by steps $1$ and $2$, $T_i \leq h\left(\frac{\epsilon\mu_1 - \Delta_i}{4-2\epsilon}, \frac{\delta}{n} \right)$ and $T_i' \leq h\left(0.25\Delta_i, \frac{\delta}{n} \right)$. Furthermore, by monotonicity of $h(\cdot, \cdot)$, this implies that $T_{\tilde{\alpha}_\epsilon \tilde{\beta}_\epsilon} = h\left(\frac{\min(\tilde{\alpha}_\epsilon, \tilde{\beta}_\epsilon)}{4-2\epsilon}, \frac{\delta}{n}\right)$. 
Plugging this in, we see that
\begin{align*}
\sum_{i\in M_\epsilon} & \min\left\{T_\gamma, \max\left\{T_i,  \min(T_i', T_{\tilde{\alpha}_\epsilon \tilde{\beta}_\epsilon})\right\}\right\} + \sum_{i\in M_\epsilon^c}\min\left\{T_\gamma, T_{\tilde{\alpha}_\epsilon \tilde{\beta}_\epsilon},  T_i\right\} \\
& = \sum_{i\in M_\epsilon}  \min\left\{T_\gamma, \max\left\{T_i,  \min(T_i', T_{\tilde{\alpha}_\epsilon \tilde{\beta}_\epsilon})\right\}\right\} + \sum_{i\in M_\epsilon^c}\min\left\{T_\gamma, T_i\right\} \\
    & \leq \sum_{i \in M_\epsilon} \min\left\{\max\left\{h\left(\frac{\epsilon\mu_1 - \Delta_i}{4-2\epsilon}, \frac{\delta}{n} \right), \min \left[h\left(0.25\Delta_i, \frac{\delta}{n} \right), h\left(\frac{\min(\tilde{\alpha}_\epsilon, \tilde{\beta}_\epsilon)}{4-2\epsilon}, \frac{\delta}{n} \right)\right] \right\},\right.\\
& \hspace{3cm}\left. h\left(\frac{\gamma\mu_1}{2(2 - \epsilon + \gamma)}, \frac{\delta}{n} \right)\right\} \\
    & \hspace{1cm} +  \sum_{i\in M_{\epsilon}^c} \min\left\{h\left(\frac{\epsilon\mu_1 - \Delta_i}{4-2\epsilon}, \frac{\delta}{n} \right), h\left(\frac{\gamma\mu_1}{2(2 - \epsilon + \gamma)}, \frac{\delta}{n} \right)\right\} \\
& = \sum_{i=1}^n \min\left\{\max\left\{h\left(\frac{\epsilon\mu_1 - \Delta_i}{4-2\epsilon}, \frac{\delta}{n} \right), \min \left[h\left(0.25\Delta_i, \frac{\delta}{n} \right), h\left(\frac{\min(\tilde{\alpha}_\epsilon, \tilde{\beta}_\epsilon)}{4-2\epsilon}, \frac{\delta}{n} \right)\right] \right\},\right.\\
& \hspace{3cm}\left.h\left(\frac{\gamma\mu_1}{2(2 - \epsilon + \gamma)}, \frac{\delta}{n} \right)\right\}     
\end{align*}
where the final equality holds by definition for arms in $M_\epsilon$. Lastly, note that $\frac{1}{3(1-x)}\leq \frac{1}{2-x}$ for $x \leq 1/2$. By monotonicity of $h$, we may lower bound the denominators $\frac{1}{4-2\epsilon}$ and $\frac{1}{2(2-\epsilon+ \gamma)}$ as $\frac{1}{6(1-\epsilon)}$ and $\frac{1}{6(1-\epsilon+\gamma)}$ respectively. Since $\epsilon \in (0, 1/2]$, we may likewise lower bound $\frac{1}{4-2\epsilon}$ as $1/4$. 
Plugging this in, we see that
\begin{align*}
    \sum_{i=1}^n & \min\left\{\max\left\{h\left(\frac{\epsilon\mu_1 - \Delta_i}{4-2\epsilon}, \frac{\delta}{n} \right), \min \left[h\left(0.25\Delta_i, \frac{\delta}{n} \right), h\left(\frac{\min(\tilde{\alpha}_\epsilon, \tilde{\beta}_\epsilon)}{4-2\epsilon}, \frac{\delta}{n} \right)\right] \right\},\right.\\
& \hspace{3cm}\left.h\left(\frac{\gamma\mu_1}{2(2 - \epsilon + \gamma)}, \frac{\delta}{n} \right)\right\}     \\
& \leq \sum_{i=1}^n \min\left\{\max\left\{h\left(\frac{\epsilon\mu_1 - \Delta_i}{4}, \frac{\delta}{n} \right), \min \left[h\left(0.25\Delta_i, \frac{\delta}{n} \right), h\left(\frac{\min(\tilde{\alpha}_\epsilon, \tilde{\beta}_\epsilon)}{6(1-\epsilon)}, \frac{\delta}{n} \right)\right] \right\},\right.\\
& \hspace{3cm}\left.h\left(\frac{\gamma\mu_1}{6(1 - \epsilon + \gamma)}, \frac{\delta}{n} \right)\right\}     
\end{align*}
Next, by Lemma~\ref{lem:bound_min_of_h}, we may bound the minimum of $h(\cdot, \cdot)$ functions.
\begin{align*}
    & \sum_{i=1}^n \min\left\{ \max\left\{h\left( \frac{\Delta_i - \epsilon\mu_1}{4}, \frac{\delta}{n} \right), \min\left[ h\left(\frac{\Delta_i}{4}, \frac{\delta}{n} \right), h\left(\frac{\min(\Tilde{\alpha}_\epsilon, \Tilde{\beta}_\epsilon)}{6(1-\epsilon)}, \frac{\delta}{n} \right)\right]\right\},\right.\\ 
    &\hspace{3cm}\left. h\left(\frac{\gamma\mu_1}{6(1-\epsilon+\gamma)},  \frac{\delta}{n} \right)\right\} \\
    & = \sum_{i=1}^n \min\left\{ \max\left\{h\left( \frac{\Delta_i - \epsilon\mu_i}{4}, \frac{\delta}{n} \right),\right.\right.\\ 
    &\hspace{3cm}\left.\left. \min\left[ h\left(\frac{\Delta_i}{4}, \frac{\delta}{n} \right), \max\left[h\left(\frac{\Tilde{\alpha}_\epsilon}{6(1-\epsilon)}, \frac{\delta}{n} \right), h\left(\frac{ \Tilde{\beta}_\epsilon}{6(1-\epsilon)}, \frac{\delta}{n} \right) \right]\right]\right\},\right.\\ 
    &\hspace{3cm}\left. h\left(\frac{\gamma\mu_i}{6(1-\epsilon+\gamma)},  \frac{\delta}{n} \right)\right\} \\
    & \leq \sum_{i=1}^n \min\left\{ \max\left\{h\left( \frac{\Delta_i - \epsilon\mu_i}{4}, \frac{\delta}{n} \right),\right.\right.\\ 
    &\hspace{3cm}\left.\left. \max\left[ h\left(\frac{\Delta_i + \frac{\Tilde{\alpha}_\epsilon}{1-\epsilon}}{12}, \frac{\delta}{n} \right),h\left(\frac{\Delta_i + \frac{\Tilde{\beta}_\epsilon}{1-\epsilon}}{12}, \frac{\delta}{n} \right)\right]\right\},\right.\\ 
    &\hspace{3cm}\left. h\left(\frac{\gamma\mu_i}{6(1-\epsilon+\gamma)},  \frac{\delta}{n} \right)\right\} \\
    & = \sum_{i=1}^n \min\left\{ \max\left\{h\left( \frac{\Delta_i - \epsilon\mu_i}{4}, \frac{\delta}{n} \right), h\left(\frac{\Delta_i + \frac{\Tilde{\alpha}_\epsilon}{1-\epsilon}}{12}, \frac{\delta}{n} \right),h\left(\frac{\Delta_i + \frac{\Tilde{\beta}_\epsilon}{1-\epsilon}}{12}, \frac{\delta}{n} \right)\right\},\right.\\ 
    &\hspace{3cm}\left. h\left(\frac{\gamma\mu_i}{6(1-\epsilon+\gamma)},  \frac{\delta}{n} \right)\right\} 
\end{align*}

Finally, we use Lemma~\ref{lem:inv_conf_bounds} to bound the function $h(\cdot, \cdot)$.
Since $\delta\leq 1/2$, $\delta/n \leq 2e^{-e/2}$. Further, $|\epsilon\mu_1 - \Delta_i| \leq 6$ for all $i$ and $\epsilon \leq 1/2$ implies that $\frac{1}{6(1-\epsilon)}|\epsilon\mu_1 - \Delta_i| \leq 2$ and $\frac{1}{6(1-\epsilon)}\min(\tilde{\alpha}_\epsilon, \tilde{\beta}_\epsilon) \leq 2$. $\Delta_i \leq 8$ for all $i$, gives $0.25\Delta_i \leq 2$. Lastly, $\gamma \leq 6/\mu_1$ implies that $\frac{\gamma\mu_1}{6(1-\epsilon+\gamma)} \leq 2$.
Therefore,
\begin{align*}
    & \sum_{i=1}^n \min\left\{ \max\left\{h\left( \frac{\Delta_i - \epsilon\mu_i}{4}, \frac{\delta}{n} \right), h\left(\frac{\Delta_i + \frac{\Tilde{\alpha}_\epsilon}{1-\epsilon}}{12}, \frac{\delta}{n} \right),h\left(\frac{\Delta_i + \frac{\Tilde{\beta}_\epsilon}{1-\epsilon}}{12}, \frac{\delta}{n} \right)\right\},\right.\\ 
    &\hspace{3cm}\left. h\left(\frac{\gamma\mu_i}{6(1-\epsilon+\gamma)},  \frac{\delta}{n} \right)\right\}  \\
& \leq \sum_{i=1}^n \min\left\{\max\left\{\frac{64}{(\epsilon\mu_1 - \Delta_i)^2}\log\left(\frac{2n}{\delta}\log_2\left(\frac{192n}{\delta(\epsilon\mu_1 - \Delta_i)^2}\right) \right),\right.\right.\\
&\hspace{3cm} \frac{576}{(\Delta_i + \frac{\Tilde{\alpha}_\epsilon}{1-\epsilon})^2}\log\left(\frac{2n}{\delta}\log_2\left(\frac{1728n}{\delta(\Delta_i + \frac{\Tilde{\alpha}_\epsilon}{1-\epsilon})^2}\right) \right), \\
& \hspace{3cm}\left. \frac{576}{(\Delta_i + \frac{\Tilde{\beta}_\epsilon}{1-\epsilon})^2}\log\left(\frac{2n}{\delta}\log_2\left(\frac{1728n}{\delta(\Delta_i + \frac{\Tilde{\beta}_\epsilon}{1-\epsilon})^2}\right) \right) \right\}, \\
& \hspace{2cm}\left.\frac{144(1-\epsilon+\gamma)^2}{\gamma^2\mu_1^2}\log\left(\frac{2n}{\delta}\log_2\left(\frac{432(1-\epsilon+\gamma)^2n}{\delta\gamma^2\mu_1^2}\right) \right)\right\} \\
& = \sum_{i=1}^n \min\left\{\max\left\{\frac{64}{((1 - \epsilon)\mu_1 - \mu_i)^2}\log\left(\frac{2n}{\delta}\log_2\left(\frac{192n}{\delta((1 - \epsilon)\mu_1 - \mu_i)^2}\right) \right),\right.\right.\\
&\hspace{3cm} \frac{576}{(\mu_1 + \frac{\Tilde{\alpha}_\epsilon}{1-\epsilon} - \mu_i)^2}\log\left(\frac{2n}{\delta}\log_2\left(\frac{1728n}{\delta(\mu_1 + \frac{\Tilde{\alpha}_\epsilon}{1-\epsilon})^2}\right) \right), \\
& \hspace{3cm}\left. \frac{576}{(\mu_1 + \frac{\Tilde{\beta}_\epsilon}{1-\epsilon} -\mu_i)^2}\log\left(\frac{2n}{\delta}\log_2\left(\frac{1728n}{\delta(\mu_1 + \frac{\Tilde{\beta}_\epsilon}{1-\epsilon} - \mu_i)^2}\right) \right) \right\}, \\
& \hspace{2cm}\left.\frac{144(1-\epsilon+\gamma)^2}{\gamma^2\mu_1^2}\log\left(\frac{2n}{\delta}\log_2\left(\frac{432(1-\epsilon+\gamma)^2n}{\delta\gamma^2\mu_1^2}\right) \right)\right\}.
\end{align*}
% \begin{align*}
%     & \sum_{i=1}^n \min\left\{\max\left\{h\left(\frac{\epsilon\mu_1 - \Delta_i}{4}, \frac{\delta}{n} \right), \min \left[h\left(0.25\Delta_i, \frac{\delta}{n} \right), h\left(\frac{\min(\tilde{\alpha}_\epsilon, \tilde{\beta}_\epsilon)}{6(1-\epsilon)}, \frac{\delta}{n} \right)\right] \right\},\right.\\
% & \hspace{3cm}\left.h\left(\frac{\gamma\mu_1}{6(1 - \epsilon + \gamma)}, \frac{\delta}{n} \right)\right\}     \\
% & \leq \sum_{i=1}^n \min\left\{\max\left\{\frac{64}{(\epsilon\mu_1 - \Delta_i)^2}\log\left(\frac{2n}{\delta}\log_2\left(\frac{192n}{\delta(\epsilon\mu_1 - \Delta_i)^2}\right) \right),\right.\right.\\
% &\hspace{1cm} \left.\left.\min \left[\frac{64}{\Delta_i^2}\log\left(\frac{2n}{\delta}\log_2\left(\frac{192n}{\delta\Delta_i^2}\right) \right), \frac{144(1-\epsilon)^2}{\min(\tilde{\alpha}_\epsilon, \tilde{\beta}_\epsilon)^2}\log\left(\frac{2n}{\delta}\log_2\left(\frac{432(1-\epsilon)^2n}{\delta\min(\tilde{\alpha}_\epsilon, \tilde{\beta}_\epsilon)^2}\right) \right)\right] \right\},\right.\\
% & \hspace{1cm}\left.\frac{144(1-\epsilon + \gamma)^2}{\gamma^2\mu_1^2}\log\left(\frac{2n}{\delta}\log_2\left(\frac{432(1-\epsilon + \gamma)^2n}{\delta\gamma^2\mu_1^2}\right) \right)\right\}
% \end{align*}

\end{proof}

\section{Technical Lemmas}
\begin{lemma}\label{lem:general_inv_loglog}
If $a > 1$, $b > e$, and $t > \max(a\log(2b\log(ab)), e)$, then $\frac{a\log(b\log(t))}{t} \leq 1$
\end{lemma}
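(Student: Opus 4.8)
The plan is to show that the single-variable function $g(t) := \dfrac{a\log(b\log t)}{t}$ is strictly decreasing on $[e,\infty)$, so that it suffices to verify the inequality $g(t)\le 1$ at the left endpoint $t_0 := \max\{a\log(2b\log(ab)),\, e\}$ of the range of $t$ allowed by the hypothesis.

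First I would record that everything is well defined on $[e,\infty)$: for $t\ge e$ we have $\log t\ge 1$, hence $b\log t > e$ (using $b>e$), hence $\log(b\log t) > 1 > 0$, so $g(t)$ is a positive, well-defined quantity. Next, differentiating gives $g'(t) = t^{-2}\bigl(\tfrac{a}{\log t} - a\log(b\log t)\bigr)$ (using $\frac{d}{dt}\log(b\log t) = \frac{1}{t\log t}$), and since $\tfrac{1}{\log t}\le 1 < \log(b\log t)$ on $[e,\infty)$, we get $g'(t) < 0$ throughout that interval. Thus $g$ is strictly decreasing on $[e,\infty)$; since the hypothesis forces $t > t_0 \ge e$, it is enough to prove $g(t_0)\le 1$, and then $g(t) < g(t_0) \le 1$.

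The remaining work is a short two-case check. If $t_0 = e$ (i.e.\ $a\log(2b\log(ab)) < e$), I would use $\log(ab) > 1$ to get $2b\log(ab) > b$, hence $a\log b < a\log(2b\log(ab)) < e$, so $g(e) = \tfrac{a\log b}{e} < 1$. If instead $t_0 = a\log(2b\log(ab))$, the crux is the sub-claim $t_0 < (ab)^2$: from $\log(ab)\le ab$ one gets $2b\log(ab)\le 2ab^2$, and then $\log(2ab^2) = \log 2 + \log(ab^2) < 1 + (ab^2-1) = ab^2$ (using $\log x \le x-1$ and $\log 2 < 1$), so $t_0 = a\log(2b\log(ab)) \le a\log(2ab^2) < a\cdot ab^2 = (ab)^2$. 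Given $t_0 < (ab)^2$, monotonicity of $\log$ yields $\log t_0 < 2\log(ab)$, hence $b\log t_0 < 2b\log(ab)$, hence $a\log(b\log t_0) < a\log(2b\log(ab)) = t_0$, which is precisely $g(t_0) < 1$.

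The one mildly delicate point — and the place I would double-check carefully — is the sub-claim $t_0 < (ab)^2$ in the second case: it matters that the factor $a$ in front of the logarithm is absorbed correctly, which works because one obtains $\log(2b\log(ab)) < ab^2$ (not merely $< a^2b^2$), so multiplying by $a$ still lands below $(ab)^2$; one also has to apply the elementary bounds $\log x\le x$ and $\log x\le x-1$ to the right arguments. Everything else is a routine monotonicity-plus-endpoint argument.
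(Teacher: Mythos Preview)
Your proposal is correct and follows essentially the same approach as the paper: both differentiate $g(t)=a\log(b\log t)/t$ to show it is decreasing for $t\ge e$, and then verify the inequality at the left endpoint by establishing the key sub-claim $a\log(2b\log(ab))\le (ab)^2$. Your version is slightly more careful in that you explicitly handle the case $t_0=e$ and give a fully explicit chain of elementary inequalities for the sub-claim (the paper reduces it to $2b\log(ab)\le e^{ab^2}$ and asserts this holds for $a,b>1$), but the overall strategy is identical.
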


\begin{proof}
\textbf{Step 1:} Plug in $t = a\log(2b\log(ab))$ to the expression $\frac{a\log(b\log(t))}{t}$. 
\begin{align*}
    \frac{a\log(b\log(a\log(2b\log(ab)))}{a\log(2b\log(ab))} = \frac{\log(b\log(a\log(2b\log(ab)))}{\log(2b\log(ab))}
\end{align*}
Since $\log(\cdot)$ increases monotonically, the above is less than $1$ if $b\log(a\log(2b\log(ab)) \leq 2b\log(ab)$. 
\begin{align*}
   & b\log(a\log(2b\log(ab))) \leq 2b\log(ab) \\
   & \stackrel{(b > 0)}{\iff} \log(a\log(2b\log(ab))) \leq 2\log(ab) \\
   & \iff a\log(2b\log(ab)) \leq (ab)^2\\
   & \iff \log(2b\log(ab)) \leq ab^2\\
   & \iff 2b\log(ab) \leq e^{ab^2}
\end{align*}
which is true if $a, b > 1$.

\textbf{Step 2:} Next, for $t > a\log(2b\log(ab))$, we wish to show that the inequality $\frac{a\log(b\log(t))}{t} \leq 1$ still holds. To do so, it suffices to show that $f(t) = \frac{a\log(b\log(t))}{t}$ is decreasing for $t > a\log(2b\log(ab))$. To see this, take the derivative.
\begin{align*}
    f'(t) = \frac{a}{t^2\log(t)} - \frac{a\log(b\log(t))}{t^2} = \frac{a}{t^2}\left( \frac{1}{\log(t)} - \log(b(\log(t))\right)
\end{align*}
This is negative when $\frac{1}{\log(t)} < \log(b(\log(t))$. Let $u = b\log(t)$. The previous is equivalent to the condition $b < u\log(u)$. For $t > e$, $u > b$ and $b > e$. Hence $b < u\log(u)$ completing the proof. 
\end{proof}

\begin{lemma}\label{lem:inv_conf_bounds}
For $\delta < 2e^{-e/2}$, $\Delta \leq 2$, 
$$t \geq \frac{4}{\Delta^2}\log\left(\frac{2}{\delta}\log_2\left(\frac{12}{\delta\Delta^2}\right)\right) \implies C_{\delta}(t) = \sqrt{\frac{4\log(\log_2(2t)/\delta)}{t}} \leq \Delta.$$
\end{lemma}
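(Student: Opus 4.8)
The plan is to reduce the claim to a one–variable monotonicity argument. Write $t_0 := \frac{4}{\Delta^2}\log\!\big(\tfrac{2}{\delta}\log_2(\tfrac{12}{\delta\Delta^2})\big)$ for the stated threshold. Since both sides of $C_\delta(t)\le\Delta$ are nonnegative, squaring shows the conclusion is equivalent to $\phi(t)\le t$, where $\phi(t):=\frac{4}{\Delta^2}\log\!\big(\log_2(2t)/\delta\big)$. So it suffices to prove $g(t):=t-\phi(t)\ge 0$ for all $t\ge t_0$, and I would do this by (i) showing $g$ is nondecreasing on $[t_0,\infty)$ and (ii) checking $g(t_0)\ge 0$.

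For (i), $\phi$ is differentiable for $t>1/2$, and a direct computation gives $g'(t)=1-\frac{4}{\Delta^2\,t\ln(2t)}$, which is nonnegative exactly when $t\ln(2t)\ge 4/\Delta^2$. Since $\Delta\le 2$ we have $4/\Delta^2\ge 1$, hence $t_0\ge\log\!\big(\tfrac{2}{\delta}\log_2(\tfrac{12}{\delta\Delta^2})\big)$. Here the hypothesis $\delta<2e^{-e/2}$ is used precisely as $\tfrac{2}{\delta}>e^{e/2}$; together with $\log_2(\tfrac{12}{\delta\Delta^2})\ge 1$ (which holds because $\delta\Delta^2<8e^{-e/2}<6$), this gives $\tfrac{2}{\delta}\log_2(\tfrac{12}{\delta\Delta^2})>e^{e/2}$, so $\log(\cdot)>e/2>1$ and therefore $t_0>\tfrac{4}{\Delta^2}\ge 1$ with $t_0>e/2$. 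Then $\ln(2t_0)>\ln e=1$, so $t_0\ln(2t_0)>4/\Delta^2$; and since $t\mapsto t\ln(2t)$ is increasing for $t>1/2$, we get $t\ln(2t)\ge t_0\ln(2t_0)>4/\Delta^2$ for all $t\ge t_0$, so $g'\ge 0$ on $[t_0,\infty)$.

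For (ii), using the explicit forms of $t_0$ and $\phi(t_0)$ and dividing out the common factor $\tfrac{4}{\Delta^2}$, one gets $\phi(t_0)\le t_0 \iff \log_2(2t_0)\le 2\log_2(\tfrac{12}{\delta\Delta^2}) \iff t_0\le \tfrac{72}{\delta^2\Delta^4}$. To establish this last inequality I would apply the elementary bound $\log_2 x\le x$ ($x>0$) inside $t_0$, obtaining $t_0\le\tfrac{4}{\Delta^2}\log\!\big(\tfrac{24}{\delta^2\Delta^2}\big)$, and then the elementary bound $\log v\le\tfrac34 v$ ($v>0$) with $v=\tfrac{24}{\delta^2\Delta^2}$ to conclude $t_0\le\tfrac{4}{\Delta^2}\cdot\tfrac34\cdot\tfrac{24}{\delta^2\Delta^2}=\tfrac{72}{\delta^2\Delta^4}$. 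Combining (i) and (ii), $g(t)\ge g(t_0)\ge 0$ for all $t\ge t_0$, which is the claim.

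The routine parts are the derivative computation and the two one-line log bounds; the single place the precise hypothesis matters is step (i), where $\delta<2e^{-e/2}$ enters exactly as $\tfrac{2}{\delta}>e^{e/2}$ to force $t_0>e/2$ so that $t\ln(2t)$ already exceeds $4/\Delta^2$ at $t=t_0$. (One might instead hope to invoke Lemma~\ref{lem:general_inv_loglog} after bounding $\log_2(2t)/\delta$ by a constant multiple of $\ln t$, but the base change together with the additive constant in $\log_2(2t)=1+\log_2 t$ makes that route slightly too lossy to recover the constant $12$ in $t_0$, so the direct argument above is preferable.) I expect the main obstacle to be nothing deep, just keeping the constants aligned between the squared target $\tfrac{4}{\Delta^2}\log(\log_2(2t)/\delta)\le t$ and the stated $t_0$ in step (ii).
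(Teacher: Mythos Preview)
Your argument is correct, but it takes a different route than the paper. You work directly with $g(t)=t-\phi(t)$, show monotonicity on $[t_0,\infty)$, and verify $g(t_0)\ge 0$; this amounts to re-proving Lemma~\ref{lem:general_inv_loglog} in this one instance. The paper instead applies Lemma~\ref{lem:general_inv_loglog} as a black box: substituting $s=2t$, one has exactly $\log_2(2t)/\delta=\tfrac{1}{\delta\log 2}\log s$, so the target inequality becomes $\tfrac{a\log(b\log s)}{s}\le 1$ with $a=8/\Delta^2$ and $b=1/(\delta\log 2)$; no bounding is needed at this step. The lemma then yields the threshold $2t\ge \tfrac{8}{\Delta^2}\log\!\big(\tfrac{2}{\delta\log 2}\log(\tfrac{8}{\delta\Delta^2\log 2})\big)$, which the paper observes is at most $2t_0$ simply because $8/\log 2\le 12$. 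So your parenthetical is off the mark: there is no need to bound $\log_2(2t)$ by a multiple of $\ln t$ and absorb an additive constant, because the change of variables $s=2t$ eliminates that issue entirely, and this is precisely why the constant $12$ was chosen. Your self-contained derivation is fine and arguably more transparent about where the hypotheses enter, while the paper's approach is more modular and reuses the general inversion lemma.
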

\begin{proof}
\begin{align*}
 \sqrt{\frac{4\log(\log_2(2t)/\delta)}{t}} \leq \Delta    
 \iff \frac{4\frac{8}{\Delta^2}\log\left(\frac{1}{\delta\log(2)}\log(2t)\right)}{t} \leq 1. 
\end{align*}
If $\Delta \leq 2$, then $8/\Delta^2 \geq 2 > 1$. Similarly, if $\delta < 2e^{-e/2} < \frac{1}{e\log(2)}$, then $\frac{1}{\delta\log(2)} > e$. Hence, 
by Lemma \ref{lem:general_inv_loglog}, setting $a = \frac{8}{\Delta^2}$ and $b = \frac{1}{\delta\log(2)}$, the above is true if
\begin{align*}
    2t \geq \max\left(\frac{8}{\Delta^2}\log\left(\frac{2}{\delta\log(2)}\log\left(\frac{8}{\delta\Delta^2\log(2)}\right)\right), e \right).
\end{align*}
Trivially, $\delta\log(2) < 2$. Hence, $\delta < 2e^{-e/2}$ and $\Delta \leq 2$ implies
\begin{align*}
    \frac{8}{\Delta^2}\log\left(\frac{2}{\delta\log(2)}\log\left(\frac{8}{\delta\Delta^2\log(2)}\right)\right) 
    \geq 2\log\left(\frac{2}{\delta}\log_2\left(\frac{2}{\delta\log(2)}\right)\right) 
    \geq 2\log(2/\delta) > e.
\end{align*}
Therefore, we may simplify the maximum as 
$$t \geq \frac{4}{\Delta^2}\log\left(\frac{2}{\delta}\log_2\left(\frac{12}{\delta\Delta^2}\right)\right) \geq \frac{4}{\Delta^2}\log\left(\frac{2}{\delta}\log_2\left(\frac{8}{\delta\Delta^2\log(2)}\right)\right)$$
which implies the desired result. 
\end{proof}

\begin{lemma}\label{lem:bound_min_of_h}
For any function $h(\cdot, \cdot): \R^+\times \R^+ \rightarrow \R^+$ that decreases monotonically in its first argument, we have that for any $a, b, c, \delta \in \R^+$
$$\min\left(h(a, \delta), h(b, \delta)\right)  \leq h\left(\frac{a+b}{2}, \delta \right)
$$
and 
$$\min\{h(a, \delta), \max[h(b, \delta), h(c, \delta)]\} 
 \leq \max\left\{h\left(\frac{a+b}{2}, \delta \right), h\left(\frac{a+c}{2}, \delta \right) \right\}.$$

\end{lemma}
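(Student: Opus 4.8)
The plan is to derive both inequalities from the single elementary fact that for real numbers one has $\min(a,b)\le \frac{a+b}{2}\le\max(a,b)$, combined with the standing hypothesis that $h$ is nonincreasing in its first coordinate (with the second coordinate $\delta$ held fixed throughout). The key observation to keep in mind is that, because $h$ decreases in its first argument, the \emph{smaller} of two values $h(a,\delta),h(b,\delta)$ is the one attached to the \emph{larger} of $a,b$.

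For the first inequality I would argue by cases according to whether $a\le b$ or $b\le a$; since the claim is symmetric in $a$ and $b$ it suffices to treat $a\le b$. Monotonicity then gives $h(a,\delta)\ge h(b,\delta)$, so $\min\{h(a,\delta),h(b,\delta)\}=h(b,\delta)$. On the other hand $\frac{a+b}{2}\le b$, and applying monotonicity once more yields $h\left(\frac{a+b}{2},\delta\right)\ge h(b,\delta)$. Chaining the two facts gives $\min\{h(a,\delta),h(b,\delta)\}=h(b,\delta)\le h\left(\frac{a+b}{2},\delta\right)$, as claimed.

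For the second inequality I would reduce to the first. The right-hand side is symmetric in $b$ and $c$, so assume without loss of generality $b\le c$; then $h(b,\delta)\ge h(c,\delta)$, hence $\max\{h(b,\delta),h(c,\delta)\}=h(b,\delta)$ and the left-hand side equals $\min\{h(a,\delta),h(b,\delta)\}$. Invoking the first inequality (with $b$ playing the role of the second argument) bounds this by $h\left(\frac{a+b}{2},\delta\right)$, which is trivially at most $\max\left\{h\left(\frac{a+b}{2},\delta\right),h\left(\frac{a+c}{2},\delta\right)\right\}$. This finishes the proof.

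There is no genuine obstacle in this lemma; the only thing that requires care is keeping the direction of monotonicity straight (a smaller first argument produces a \emph{larger} value of $h$) and carrying out the two min/max bookkeeping steps without a sign slip. The explicit case split on the ordering of the relevant arguments is what makes both manipulations transparent, so I would lead with that in each part.
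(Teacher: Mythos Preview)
Your proof is correct and essentially matches the paper's argument. The only cosmetic difference is in the second inequality: the paper invokes the distributive identity $\min\{x,\max(y,z)\}=\max\{\min(x,y),\min(x,z)\}$ and then applies the first inequality to each of the two inner minima, whereas you do a WLOG case split on the ordering of $b$ and $c$ to collapse the inner $\max$ first; both are one-line reductions to part one.
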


\begin{proof}
First, we bound the expression $\min\left(h(a, c), h(b, c)\right)$.
\begin{align*}
\min\left(h(a, \delta), h(b, \delta)\right) & = h\left(\max(a,d), \delta \right) \leq h\left((a+b)/2, \delta \right)
\end{align*}
Next, we bound, expressions of the form
$\min\{h(a, \delta), \max[h(b, \delta), h(c, \delta)]\}$ using the above inequality.
\begin{align*}
 \min\{h(a, \delta), \max[h(b, \delta), h(c, \delta)]\} & = \max\left\{\min\left[h(a, \delta), h(b, \delta)\right], \min\left[h(a, \delta), h(c, \delta)\right] \right\} \\
 & \leq \max\left\{h\left((a+b)/2, \delta \right), h\left((a+c)/2, \delta \right) \right\}.
\end{align*}
\end{proof}

\end{document}